\pdfoutput=1
\documentclass[a4paper, 11pt, dvipsnames, headings=medium,numbers=noenddot]{scrartcl}
\usepackage[top=2.5cm,bottom=3cm,left=2.5cm,right=2.5cm]{geometry}
\usepackage{caption}
\captionsetup{format=plain,labelsep=period}
\makeatletter
\renewcommand{\@seccntformat}[1]{\csname the#1\endcsname.\ }
\makeatother
\usepackage[english]{babel}
\usepackage[T1]{fontenc}
\usepackage[utf8]{inputenc}
\usepackage{graphicx}
\usepackage{natbib}
\usepackage{microtype}
\usepackage{subfigure}
\usepackage{booktabs}
\usepackage[table]{xcolor}

\usepackage{amsmath}
\usepackage{amssymb}
\usepackage{mathtools}
\usepackage{amsthm}
\usepackage{mathrsfs}

\bibliographystyle{abbrvnat}
\setcitestyle{round}

\definecolor{mydarkblue}{rgb}{0,0.08,0.45}
\usepackage[colorlinks, anchorcolor=white, linkcolor=mydarkblue, urlcolor=mydarkblue, citecolor=mydarkblue]{hyperref}

\setlength{\parindent}{0pt}
\setlength{\parskip}{.7pc plus .05pc minus .05pc}
\setlength{\parsep} {2pt plus 1pt minus 0.5pt}
\setlength{\itemsep} {0pt}

\widowpenalty=10000
\clubpenalty=10000
\flushbottom \sloppy

\makeatletter
\def\thm@space@setup{\thm@preskip=0pt
\thm@postskip=0pt}
\makeatother
\newtheoremstyle{newstyle}      
{2pt plus 1pt minus 0.5pt} 
{4pt plus 1pt minus 0.5pt} 
{\mdseries} 
{} 
{\bfseries} 
{.} 
{ } 
{} 

\theoremstyle{newstyle}

\allowdisplaybreaks

\usepackage[capitalize,noabbrev]{cleveref}

\newtheorem{theorem}{Theorem}[section]
\newtheorem{proposition}[theorem]{Proposition}
\newtheorem{lemma}[theorem]{Lemma}
\newtheorem{fact}[theorem]{Fact}
\newtheorem{corollary}[theorem]{Corollary}
\theoremstyle{definition}
\newtheorem{definition}[theorem]{Definition}

\newtheorem{example}[theorem]{Example}
\newtheorem{remark}[theorem]{Remark}

\usepackage{enumitem} 
\usepackage{multirow}
\usepackage{makecell}

\usepackage{wrapfig}


\usepackage{amsmath,amsfonts,bm}









\def\eqref#1{equation~\ref{#1}}









\def\1{\bm{1}}










\DeclareMathAlphabet{\mathsfit}{\encodingdefault}{\sfdefault}{m}{sl}
\SetMathAlphabet{\mathsfit}{bold}{\encodingdefault}{\sfdefault}{bx}{n}

\def\gA{{\mathcal{A}}}
\def\gB{{\mathcal{B}}}
\def\gC{{\mathcal{C}}}
\def\gD{{\mathcal{D}}}
\def\gE{{\mathcal{E}}}

\def\gN{{\mathcal{N}}}

\def\gP{{\mathcal{P}}}
\def\gQ{{\mathcal{Q}}}
\def\gR{{\mathcal{R}}}
\def\gS{{\mathcal{S}}}
\def\gT{{\mathcal{T}}}
\def\gU{{\mathcal{U}}}
\def\gV{{\mathcal{V}}}
\def\gW{{\mathcal{W}}}
\def\gX{{\mathcal{X}}}
\def\gY{{\mathcal{Y}}}
\def\gZ{{\mathcal{Z}}}












\newcommand*{\ldblbrace}{\{\mskip-5mu\{}
\newcommand*{\rdblbrace}{\}\mskip-5mu\}}
\newcommand*{\Ldblbrace}{\left\{\mskip-7mu\left\{}
\newcommand*{\Rdblbrace}{\right\}\mskip-7mu\right\}}
\newcommand*{\dis}{{\operatorname{dis}}}
\newcommand*{\disR}{\operatorname{dis}^\mathsf{R}}
\newcommand*{\disH}{\operatorname{dis}^\mathsf{H}}
\newcommand*{\hash}{\mathsf{hash}}
\newcommand*{\op}{\mathsf{op}}
\newcommand*{\agg}{\mathsf{agg}}
\newcommand*{\walk}{\mathsf{walk}}
\newcommand*{\pool}{\mathsf{Pool}}
\newcommand*{\twist}{\mathsf{twist}}
\newcommand*{\meta}{\mathsf{Meta}}

\usepackage[textsize=tiny]{todonotes}

\title{\LARGE\normalfont\bfseries A Complete Expressiveness Hierarchy for Subgraph GNNs via Subgraph Weisfeiler-Lehman Tests}



\author{Bohang Zhang,\quad Guhao Feng\footnote{The two authors contributed equally. The order is determined by rolling a dice.} ,\quad Yiheng Du$^{*}$,\quad Di He,\quad Liwei Wang\\
\fontsize{11pt}{0pt}{\texttt {zhangbohang@pku.edu.cn}\qquad  \texttt{\{fenguhao,duyiheng\}@stu.pku.edu.cn}} \\
\fontsize{11pt}{0pt}{\texttt {dihe@pku.edu.cn}\qquad\qquad\qquad \texttt {wanglw@cis.pku.edu.cn}}\\
Peking University
}

\date{\vspace{10pt}\textbf{Abstract}\vspace{-35pt}}

\begin{document}
\maketitle
\begin{abstract}
Recently, subgraph GNNs have emerged as an important direction for developing expressive graph neural networks (GNNs). While numerous architectures have been proposed, so far there is still a limited understanding of how various design paradigms differ in terms of expressive power, nor is it clear what design principle achieves maximal expressiveness with minimal architectural complexity. To address these fundamental questions, this paper conducts a systematic study of general node-based subgraph GNNs through the lens of Subgraph Weisfeiler-Lehman Tests (SWL). Our central result is to build a \emph{complete hierarchy} of SWL with \emph{strictly} growing expressivity. Concretely, we prove that any node-based subgraph GNN falls into one of the six SWL equivalence classes, among which $\mathsf{SSWL}$ achieves the maximal expressive power. We also study how these equivalence classes differ in terms of their practical expressiveness such as encoding graph distance and biconnectivity. Furthermore, we give a tight expressivity upper bound of all SWL algorithms by establishing a close relation with \emph{localized} versions of WL and Folklore WL (FWL) tests. Our results provide insights into the power of existing subgraph GNNs, guide the design of new architectures, and point out their limitations by revealing an inherent gap with the 2-FWL test. Finally, experiments demonstrate that $\mathsf{SSWL}$-inspired subgraph GNNs can significantly outperform prior architectures on multiple benchmarks despite great simplicity.
\end{abstract}
\vspace{-10pt}

\section{Introduction}
Graph neural networks (GNNs), especially equivariant message-passing neural networks (MPNNs), have become the dominant approach for learning on graph-structured data \citep{gilmer2017neural,hamilton2017inductive,kipf2017semisupervised,velivckovic2018graph}. Despite their great simplicity and scalability, one major drawback of MPNNs lies in the limited expressiveness \citep{xu2019powerful,morris2019weisfeiler}. 
This motivated a variety of subsequent works to develop provably more expressive architectures, among which subgraph GNNs have emerged as a new trend \citep{cotta2021reconstruction,you2021identity,zhang2021nested,bevilacqua2022equivariant,zhao2022stars,papp2022theoretical,frasca2022understanding,qian2022ordered,huang2022boosting}. 

Broadly speaking, a general (node-based) subgraph GNN first transforms an input graph $G$ into a collection of subgraphs, each of which is associated with a unique node in $G$. It then computes a feature representation for each node of each subgraph through a series of equivariant message-passing layers. Finally, it outputs a representation of graph $G$ by pooling all these subgraph node features. Subgraph GNNs have received great attention partly due to their elegant structure, enhanced expressiveness, message-passing-based inductive bias, and superior empirical performance \citep{frasca2022understanding,zhao2022stars}.

\looseness=-1 One central question in subgraph GNNs lies in how to design simple yet expressive equivariant layers. Starting from the most basic design where each node only interacts with its local neighbors in the own subgraph \citep{cotta2021reconstruction,qian2022ordered}, recent works have developed a rich family of (cross-graph) aggregation operations \citep{bevilacqua2022equivariant,zhao2022stars,frasca2022understanding}. In particular, \citet{frasca2022understanding} gave a unified characterization of the design space of subgraph GNNs based on 2-IGN \citep{maron2019invariant,maron2019provably}, which contains \emph{dozens} of atomic aggregations. 
However, it is generally unclear whether the added aggregations can \emph{theoretically} improve a model's expressiveness as it becomes increasingly complex. So far, a systematic investigation and comparison of various possible aggregation schemes in terms of expressiveness is still lacking. More fundamentally, for both theory and practice, \emph{is there a canonical design principle of subgraph GNNs that achieves the maximal expressiveness with the least model complexity?}

\looseness=-1 \textbf{A complete hierarchy of subgraph GNNs}. In this paper, we comprehensively study the aforementioned questions through the lens of Subgraph Weisfeiler-Lehman Tests (SWL), a class of color refinement algorithms abstracted from subgraph GNNs in distinguishing non-isomorphic graphs. Each SWL consists of three ingredients: (a) graph generation policy, (b) message-passing aggregation scheme, and (c) final pooling scheme. Among commonly used graph generation policies, we mainly focus on the canonical \emph{node marking} SWL as it theoretically achieves the best expressive power despite simplicity (\cref{thm:node_marking}). Our central result is to build a complete hierarchy for all node marking SWL with various aggregation schemes and pooling schemes. Concretely, we prove that any node-based subgraph GNN falls into one of the six SWL equivalence classes and establish \emph{strict expressivity inclusion relationships} between different classes (see \cref{thm:swl_hierarchy,thm:separation,fig:hierarchy}). In particular, our result highlights that, by including symmetrically two basic local aggregations, the corresponding SWL (called $\mathsf{SSWL}$) has theoretically achieved the maximal expressive power. Our result thus provides a clear picture of the power and limitation of existing architectures, settling a series of open problems raised in \citet{bevilacqua2022equivariant,frasca2022understanding,qian2022ordered,zhao2022stars} (see \cref{sec:discussion} for discussions).

\looseness=-1 \textbf{Related to practical expressiveness}. 
We provide concrete evidence that subgraph GNNs with better theoretical expressivity are also stronger in terms of their ability to compute fundamental graph properties. 
Inspired by the recent work of \citet{zhang2023rethinking}, we prove that the $\mathsf{PSWL}$ (defined in \cref{thm:swl_hierarchy}) is \emph{strictly more powerful} than a variant of the Generalized Distance WL proposed in their paper, which incorporates both the \emph{shortest path distance} and the \emph{hitting time distance} (\cref{def:hitting_time_distance}). Our result \emph{unifies} and \emph{extends} the findings in \citet{zhang2023rethinking} and implies that all SWL algorithms stronger than $\mathsf{PSWL}$ are able to encode both \emph{distance} and \emph{biconnectivity} properties. In contrast, we give counterexamples to show that \emph{neither} of these basic graph properties can be fully encoded in vanilla SWL.

\textbf{Localized (Folklore) WL tests}. Similar to the classic WL and Folklore WL tests \citep{weisfeiler1968reduction,cai1992optimal}, node marking SWL corresponds to a natural class of computation models for graph canonization \citep{immerman1990describing}. All SWL algorithms have $O(n^2)$ memory complexity and $O(nm)$ computational complexity (per iteration) for a graph of $n$ vertices and $m$ edges. Owing to the improved computational efficiency over classic 2-FWL/3-WL (i.e. $O(n^3)$), a better understanding of what can/cannot be achieved under this complexity class is arguably an important research question. We answer this question by first establishing a close relation between SWL and \emph{localized} versions of 2-WL \citep{morris2020weisfeiler} and 2-FWL tests, both of which have the same complexity as SWL. We then derive a number of key results: $(\mathrm{i})$~The strongest $\mathsf{SSWL}$ is \emph{as powerful as} localized 2-WL. This builds a surprising link between the works of \citet{frasca2022understanding} and \citet{morris2020weisfeiler}. $(\mathrm{ii})$~Despite the same complexity, there is an \emph{inherent gap} between localized 2-WL and localized 2-FWL. $(\mathrm{iii})$~There is an \emph{inherent gap} between localized 2-FWL and classic 2-FWL. Consequently, our results settle a fundamental open problem raised in \citet{frasca2022understanding} about whether subgraph GNNs can match the power of 2-FWL, and further implies that \emph{subgraph GNNs even do not reach the maximal expressiveness in the model class of $O(nm)$ complexity}. This reveals an intrinsic limitation of the subgraph GNN model class and points out a new direction for improvement.

\looseness=-1 \textbf{Technical Contributions}. Actually, it is quite challenging to find a principled class of hard graphs that can reveal the expressivity gap of different SWL/FWL-type algorithms. As a main technical contribution, we develop a novel analyzing framework inspired by \citet{cai1992optimal} based on \emph{pebbling games}, where we considerably extend the game originally designed for FWL to all types of SWL and localized 2-WL/2-FWL algorithms. The game viewpoint offers deep insights into the power of different algorithms, through which we can skillfully construct a collection of nontrivial counterexample graphs to prove all strict separation results in this paper. We believe the proposed games and counterexamples may be of independent value in future work.

\textbf{Practical Contributions}. Our theoretical insights can also guide in designing simple, efficient, yet powerful subgraph GNN architectures. In particular, the proposed $\mathsf{SSWL}$ corresponds to an elegant design principle with only 3 atomic equivariant aggregation operations, yet the resulting model is strictly more powerful than all prior node-based subgraph GNNs. Empirically, we verify $\mathsf{SSWL}$-based subgraph GNNs on several benchmark datasets, showing that they can significantly outperform prior architectures despite fewer model parameters and great simplicity.

\section{Formalizing Subgraph GNNs}
\label{sec:subgraph_gnn}

\textbf{Notations}. We use $\{\ \}$ to denote sets and use $\ldblbrace\ \rdblbrace$ to denote multisets. The cardinality of (multi)set $\gS$ is denoted as $|\gS|$. In this paper, we consider finite, undirected, simple, \emph{connected} graphs, and we use $G=(\gV_G,\gE_G)$ to denote such a graph with vertex set $\gV_G$ and edge set $\gE_G$. Each edge in $\gE_G$ is expressed as a set $\{u,v\}$ containing two distinct vertices in $\gV_G$. Given a vertex $u$, denote its \emph{neighbors} as $\gN_G(u):=\{v\in\gV_G:\{u,v\}\in\gE_G\}$. Similarly, the \emph{$k$-hop neighbors} of $u$ is denoted as $\gN_G^k(u):=\{v\in\gV_G:\dis_G(u,v)\le k\}$, where $\dis_G(u,v)$ is the shortest path distance between $u$ and $v$. In particular, $\gN_G^1(u)=\gN_G(u)\cup\{u\}$. 

A general subgraph GNN processes an input graph $G$ following three steps: $(\mathrm{i})$ generating subgraphs, $(\mathrm{ii})$ equivariant message-passing, and $(\mathrm{iii})$ final pooling. Below, we separately describe each of these components.

\textbf{Node-based graph generation policies}.  The first step is to generate a collection of subgraphs of $G$ based on a predefined graph generation policy $\pi$ and initialize node features in each subgraph. For \emph{node-based} subgraph GNNs, there are a total of $|\gV_G|$ subgraphs, and each subgraph is uniquely associated with a specific node $u\in\gV_G$, so that $\pi$ can be expressed as a mapping of the form $\pi(G)=\ldblbrace(G^u,\tilde h_G^u):u\in\gV_G\rdblbrace$. Here, all subgraphs $G^u=(\gV_G, \gE_G^u)$ share the vertex set $\gV_G$ but may differ in the edge set $\gE_G^u$. The mapping $\tilde h_G^u:\gV_G\to\mathbb R^d$ defines the initial node features, i.e., $\tilde h_G^u(v)$ is the initial feature of vertex $v$ in subgraph $G^u$. 

Various graph generation policies have been proposed in prior works, which differ in the choice of $\gE_G^u$ and $\tilde h_G^u$. For example, common choices of $\gE_G^u$ are: $(\mathrm{i})$~using the original graph ($\gE_G^u=\gE_G$), $(\mathrm{ii})$~node deletion ($\gE_G^u=\gE_G\backslash\{\{u,v\}:v\in\gN_G(u)\}$, which deletes all edges associated to node $u$), and $(\mathrm{iii})$~$k$-hop ego network 
($\gE_G^u=\{\{v,w\}\in\gE_G:v,w\in\gN_G^k(u)\}$).
To initialize node features, there are also three popular choices: $(\mathrm{i})$~constant node features, where $\tilde h_G^u(v)$ is the same for all $u,v\in\gV_G$; $(\mathrm{ii})$~node marking, where $\tilde h_G^u(v)$ depends only on whether $u=v$ or not; $(\mathrm{iii})$~distance encoding, where $\tilde h_G^u(v)$ depends on the shortest path distance between $u$ and $v$, i.e. $\dis_G(u,v)$.

In this paper, we mainly consider the canonical \emph{node marking} policy on the original graph due to its simplicity. Importantly, we will show in \cref{sec:node_marking_swl} that it already achieves the maximal expressiveness among all the above policies.

\textbf{Equivariant message-passing}. The main backbone of subgraph GNNs consists of $L$ stacked equivariant message-passing layers. For each network layer $l\in[L]$, the feature of each node $v$ in each subgraph $G^u$ is computed, which can be denoted as $h_G^{(l)}(u,v)$. At the beginning, $h_G^{(0)}(u,v)=\tilde h_G^u(v)$. Following \citet{frasca2022understanding}, we study arguably the most general design space that incorporates a broad class of possible message-passing aggregation operations\footnote{We note that there are still other possible equivariant operations that are not included in \cref{def:layer}, such as diagonal aggregations (e.g., $\sum_{w\in\gV_G}h(w,w)$) and composite aggregations (e.g., $\sum_{w\in \gV_G}\sum_{x\in\gN_G(v)}h(w,x)$ used in $\mathsf{ESAN}$). In particular, \citet{frasca2022understanding} recently proposed a powerful subgraph GNN framework called $\mathsf{ReIGN(2)}$, which contains a total of 39 atomic operations for node-marking policy (the number can be even larger for other policies). However, we prove that incorporating these operations does not bring extra expressiveness beyond the current framework (see \cref{sec:other_aggregation}). Here, we select the 8 basic operations in \cref{def:layer} mainly due to their fundamental nature, simplicity, and completeness.}.

\begin{definition}
\label{def:layer}
    A general subgraph GNN layer has the form
    \begin{equation*}
        h_G^{(l+1)}(u, v) = \sigma^{(l+1)}(\op_1(u, v, G, h_G^{(l)}),\cdots ,\op_r(u, v, G, h_G^{(l)})),
    \end{equation*}
    where $\sigma^{(l+1)}$ is an arbitrary (parameterized) continuous function, and each atomic operation $\op_i(u,v,G,h)$ can take any of the following expressions:
    \begin{itemize}[topsep=0pt]
    \raggedright
    \setlength{\itemsep}{0pt}
        \item Single-point: $h(u,v)$, $h(v,u)$, $h(u,u)$, or $h(v,v)$;
        \item Global: $\sum_{w\in\gV_G}h(u,w)$ or $\sum_{w\in\gV_G}h(w,v)$;
        \item Local: $\sum_{w\in\gN_{G^u}(v)}h(u,w)$ or $\sum_{w\in\gN_{G^v}(u)}h(w,v)$.
    \end{itemize}
    We assume that $h(u,v)$ is always present in some $\op_i$.
\end{definition}

It is easy to see that any GNN layer defined above is permutation \emph{equivariant}. Among them, two most basic atomic operations are $h(u,v)$ and $\sum_{w\in\gN_{G^u}(v)}h(u,w)$, which are applied in all prior subgraph GNNs. Without using further operations, the \emph{vanilla subgraph GNN} layer has the form
\begin{equation*}
    h_G^{(l+1)}(u,v)=\sigma^{(l+1)}\left(h_G^{(l)}(u,v),\sum_{w\in\gN_{G^u}(v)}h_G^{(l)}(u,w)\right).
\end{equation*}
Besides, several works have explored other aggregation operations, and we list a few representative examples below.
\begin{example}
    $(\mathrm{i})$~$\mathsf{ESAN}$ \citep{bevilacqua2022equivariant} additionally uses global aggregation $\sum_{w\in\gV_G}h(w,v)$. $(\mathrm{ii})$~$\mathsf{GNN}\text{-}\mathsf{AK}$ \citep{zhao2022stars} additionally uses single-point operation $h(v,v)$. It also uses global aggregation $\sum_{w\in\gV_G}h(u,w)$ when $u=v$. $(\mathrm{iii})$~$\mathsf{SUN}$ \citep{frasca2022understanding} additionally uses $h(u,u)$, $h(v,v)$, and both types of global aggregations.
\end{example}
    

\textbf{Final pooling layer}. The last step is to output a graph representation $f(G)$ based on all the collected features $\ldblbrace h_G^{(L)}(u,v):u,v\in\gV_G\rdblbrace$. There are two different ways to implement this, which differ in the order of pooling along the two dimensions $u,v$. The first approach, called \emph{vertex-subgraph pooling}, first pools all node features in each subgraph $G^u$ to obtain the subgraph representation, i.e., $f^\mathsf{S}(G,u):=\sigma^\mathsf{S}\left(\sum_{v\in\gV_G}h_G^{(L)}(u,v)\right)$, and then pools all subgraph representations to obtain the final output $f(G):=\sigma^\mathsf{G}(\sum_{u\in\gV_G}f^\mathsf{S}(G,u))$. Here, $\sigma^\mathsf{S}$ and $\sigma^\mathsf{G}$ can be any parameterized function. Most prior works follow this paradigm. In contrast, the second approach, called \emph{subgraph-vertex pooling}, first generates node representations $f^\mathsf{V}(G,v):=\sigma^\mathsf{V}(\sum_{u\in\gV_G}h_G^{(L)}(u,v))$ for each $v\in\gV_G$, and then pools all these node representations to obtain the graph representation, i.e., $f(G):=\sigma^\mathsf{G}(\sum_{v\in\gV_G}f^\mathsf{V}(G,v))$. This approach is adopted in \citet{qian2022ordered}.

\section{Subgraph Weisfeiler-Lehman Test}
\label{sec:swl}

To formally study the expressive power of subgraph GNNs, in this section we introduce the Subgraph WL Test (SWL), a class of color refinement algorithms for graph isomorphism test. Let $G=(\gV_G,\gE_G)$ and $H=(\gV_H,\gE_H)$ be two graphs.
As with subgraph GNNs, SWL first generates for each graph a collection of subgraphs and initializes color mappings based on a graph generation policy $\pi$. We denote the results as $\ldblbrace (G^u,\tilde \chi_G^u):u\in\gV_G\rdblbrace$ and $\ldblbrace (H^u,\tilde \chi_H^u):u\in\gV_H\rdblbrace$, where $\tilde\chi$ is the color mapping that can be constant, node marking or distance encoding (according to \cref{sec:subgraph_gnn}).

Given graph $G$, let $\chi_G^{(0)}(u,v):=\tilde\chi_G^u(v)$ for $u,v\in\gV_G$. SWL then refines the color of each $(u,v)$ pair using various types of aggregation operations defined as follows:

\begin{definition}
\label{def:swl}
    A general SWL iteration has the form
    \begin{equation*}
        \chi_G^{(t+1)}(u, v) = \hash(\agg_1(u, v, G, \chi_G^{(t)}),\cdots,\agg_r(u, v, G,\chi_G^{(t)})),
    \end{equation*}
     where $\hash$ is a perfect hash function and each $\agg_i(u,v,G,\chi)$ can take any of the following expressions:
    \begin{itemize}[topsep=0pt]
    \raggedright
    \setlength{\itemsep}{0pt}
        \item Single-point: $\chi(u,v)$, $\chi(v,u)$, $\chi(u,u)$, or $\chi(v,v)$;
        \item Global: $\ldblbrace \chi(u,w):w\in\gV_G\rdblbrace$ or $\ldblbrace \chi(w,v):w\in\gV_G\rdblbrace$.
        \item Local: $\ldblbrace \chi(u,w):w\in\gN_{G^u}(v)\rdblbrace$ or $\ldblbrace \chi(w,v):w\in\gN_{G^v}(u)\rdblbrace$.
    \end{itemize}
    We use symbols $\agg^\mathsf{P}_\mathsf{uv}$, $\agg^\mathsf{P}_\mathsf{vu}$, $\agg^\mathsf{P}_\mathsf{uu}$, $\agg^\mathsf{P}_\mathsf{vv}$, $\agg^\mathsf{G}_\mathsf{u}$, $\agg^\mathsf{G}_\mathsf{v}$, $\agg^\mathsf{L}_\mathsf{u}$, and $\agg^\mathsf{L}_\mathsf{v}$ to denote each of the 8 basic operations, respectively.
    We assume $\agg^\mathsf{P}_\mathsf{uv}$ is always present in some $\agg_i$. The set $\gA:=\{\agg_i:i\in[r]\}$ fully determines the SWL iteration and is called the \emph{aggregation scheme}. 
\end{definition}
\vspace{-3pt}

For each iteration $t$, the color mapping $\chi_G^{(t)}$ induces an equivalence relation and thus a \emph{partition} $\gP_G^{(t)}$ over the set $\gV_G\times \gV_G$. Since $\agg^\mathsf{P}_\mathsf{uv}$ is present in $\gA$, $\gP_G^{(t)}$ must get \emph{refined} as $t$ grows. Therefore, with a sufficiently large number of iterations $t\le |\gV_G|^2$, the color mapping becomes stable (i.e., inducing a \emph{stable partition}). Without abuse of notation, we denote the stable color mapping by $\chi_G$.

Finally, the representation of graph $G$, denoted as $c(G)$, is computed by hashing all colors $\chi_G(u,v)$ for $u,v\in\gV_G$. Parallel to the previous section, there are two different \emph{pooling paradigms} to implement this:
\begin{itemize}[topsep=0pt]
    \raggedright
    \setlength{\itemsep}{0pt}
    \item Vertex-subgraph pooling (abbreviated as $\mathsf{VS}$): $c(G)=\hash\left(\ldblbrace\hash(\ldblbrace \chi_G(u,v):v\in\gV_G\rdblbrace):u\in\gV_G\rdblbrace\right)$;
    \item Subgraph-vertex pooling (abbreviated as $\mathsf{SV}$): $c(G)=\hash\left(\ldblbrace\hash(\ldblbrace \chi_G(u,v):u\in\gV_G\rdblbrace):v\in\gV_G\rdblbrace\right)$.
\end{itemize}

We say SWL can distinguish a pair of graphs $G$ and $H$ if $c(G)\neq c(H)$. Similarly, given a subgraph GNN $f$, we say $f$ distinguishes graphs $G$ and $H$ if $f(G)\neq f(H)$. The following proposition establishes the connection between SWL and subgraph GNNs in terms of expressivity in distinguishing non-isomorphic graphs. 

\begin{proposition}
\label{thm:swl_and_subgraph_gnn}
    The expressive power of any subgraph GNN defined in \cref{sec:subgraph_gnn} is bounded by a corresponding SWL by matching the graph generation policy $\pi$, the aggregation scheme (between \cref{def:layer,def:swl}), and the pooling paradigm. Moreover, when considering bounded-size graphs, for any SWL algorithm, there exists a matching subgraph GNN with the same expressive power.
\end{proposition}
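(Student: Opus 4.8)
The plan is to establish the two halves separately, following the standard template that relates message-passing architectures to color-refinement algorithms, here adapted to the node-pair coloring $\chi_G(u,v)$.

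For the upper-bound half (``any subgraph GNN is bounded by the matching SWL''), I would fix a subgraph GNN $f$ with $L$ layers and take the SWL with the same graph generation policy $\pi$, the same aggregation scheme, and the same pooling paradigm, then prove by induction on $t$ the \emph{refinement invariant}: for all graphs $G,H$ (possibly equal) and pairs $(u,v)\in\gV_G\times\gV_G$, $(u',v')\in\gV_H\times\gV_H$, if $\chi_G^{(t)}(u,v)=\chi_H^{(t)}(u',v')$ then $h_G^{(t)}(u,v)=h_H^{(t)}(u',v')$; i.e.\ the SWL partition stays at least as fine as the GNN feature partition. The base case is immediate since both $\tilde\chi_G^u(v)$ and $\tilde h_G^u(v)$ are determined by the same policy $\pi$ (for node marking, both depend only on whether $u=v$). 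For the inductive step, the point is that each atomic GNN operation in \cref{def:layer} is a function of the correspondingly-indexed atomic SWL aggregation in \cref{def:swl}: single-point operations are the identity composed with the inductively-given color-to-feature map, and every summation $\sum_w h(\cdot,\cdot)$ is a function of the multiset $\ldblbrace\chi(\cdot,\cdot)\rdblbrace$ because a multiset determines the sum of any function of its elements; hence $h^{(t+1)}$ is a function of $\hash(\agg_1,\dots,\agg_r)=\chi^{(t+1)}$. Propagating the invariant through the final pooling layer by the same ``multiset determines the sum'' argument, together with the standard monotonicity of color refinement (which lets one compare the depth-$L$ GNN against the stabilized SWL coloring $\chi_G$), yields $c(G)=c(H)\Rightarrow f(G)=f(H)$.

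For the realizability half, I would fix a bound $n$ on $|\gV_G|$, so that only finitely many graphs are in scope, the SWL produces only finitely many colors, and every aggregated multiset has size at most $n$. Then I would construct the GNN layer by layer so that $h_G^{(t)}(u,v)$ is an \emph{injective} real encoding $Z$ of the color $\chi_G^{(t)}(u,v)$. The key tool is the injective multiset-sum lemma: there is a map $Z$ from the finite color set into $\R$ such that $\ldblbrace c_1,\dots,c_k\rdblbrace\mapsto\sum_i Z(c_i)$ is injective on multisets of cardinality at most $n$ --- concretely $Z(c)=(n+1)^{-\operatorname{rank}(c)}$ makes the sum a readable base-$(n+1)$ expansion. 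With $h^{(t)}=Z\circ\chi^{(t)}$, the global and local GNN summations recover exactly the information in the corresponding SWL multiset aggregations, while single-point operations pass colors through unchanged; so the tuple of GNN atomic operations ranges over a finite set in bijection with the tuple of SWL atomic aggregations. The composition $\hash$ with $Z$ is then a well-defined function on that finite point set, and I would take $\sigma^{(t+1)}$ to be any continuous extension of it (which exists since the domain is a finite set of points in Euclidean space), giving $h^{(t+1)}=Z\circ\chi^{(t+1)}$. The same reasoning with continuous $\sigma^{\mathsf{S}},\sigma^{\mathsf{V}},\sigma^{\mathsf{G}}$ reproduces either the $\mathsf{VS}$ or $\mathsf{SV}$ paradigm, and combining this construction with the upper-bound half shows that the constructed GNN has exactly the expressive power of the SWL.

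The hard part is the realizability half: because GNN layers only allow \emph{continuous} functions and \emph{sum} aggregation --- as opposed to the idealized perfect hash and multiset aggregation of SWL --- one must (a) use the bounded-size hypothesis to keep the color alphabet and all multiset cardinalities finite, (b) design $Z$ so that summation is lossless on bounded multisets, and (c) check that the hashing maps, being defined on finite point configurations, extend to continuous functions. The boundedness hypothesis is genuinely necessary: over all finite graphs no fixed continuous $\sigma$ of finite input dimension can implement an arbitrary perfect hash. By contrast, the upper-bound half is a routine induction once the invariant is phrased across pairs of graphs, the only mild subtlety being the monotonicity step that relates a fixed-depth GNN to the stable SWL coloring.
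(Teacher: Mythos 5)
Your proposal is correct and follows essentially the same two-part argument as the paper: the upper-bound half via the same induction on $t$ (showing the SWL partition stays at least as fine as the GNN feature partition, case-splitting over atomic operations, then using monotonicity of the stable coloring), and the realizability half via an injective real encoding of colors combined with an injective multiset-sum lemma under the bounded-size hypothesis. The only cosmetic difference is the concrete choice of encoding: you use the elementary scalar base-$(n+1)$ trick (GIN-style), whereas the paper appeals to the power-sum multi-symmetric polynomials of \citet{maron2019provably} padded to bounded size --- both achieve the same injectivity on bounded multisets and lead to the same conclusion.
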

\vspace{-3pt}

\citet{qian2022ordered} first proved the above result for \emph{vanilla} subgraph GNNs without cross-graph aggregations. Here, we consider general aggregation schemes and give a unified proof of \cref{thm:swl_and_subgraph_gnn} in \cref{sec:swl_eq_subgraph_gnn}. Based on this result, we can focus on studying the expressive power of SWL in subsequent analysis.

\section{Expressiveness and Hierarchy of SWL}
\label{sec:swl_analysis}

In this section, we systematically study how different design paradigms impact the expressiveness of SWL algorithms. To begin with, we need the following set of terminologies:

\begin{definition}
    Let $\mathsf{A}_1$ and $\mathsf{A}_2$ be two color refinement algorithms, and denote $c_i(G)$, $i\in \{1,2\}$ as the graph representation computed by $\mathsf{A}_i$ for graph $G$. We say:
    \begin{itemize}[topsep=0pt]
    \setlength{\itemsep}{0pt}
        \item $\mathsf{A}_1$ is \emph{more powerful} than $\mathsf{A}_2$, denoted as $\mathsf{A}_2\preceq \mathsf{A}_1$, if for any pair of graphs $G$ and $H$, $c_1(G)=c_1(H)$ implies $c_2(G)=c_2(H)$.
        \item $\mathsf{A}_1$ is \emph{as powerful as} $\mathsf{A}_2$, denoted as $\mathsf{A}_1\simeq\mathsf{A}_2$, if both $\mathsf{A}_1\preceq \mathsf{A}_2$ and $\mathsf{A}_2\preceq \mathsf{A}_1$ hold.
        \item $\mathsf{A}_1$ is \emph{strictly more powerful} than $\mathsf{A}_2$, denoted as $\mathsf{A}_2\prec \mathsf{A}_1$, if $\mathsf{A}_2\preceq \mathsf{A}_1$ and $\mathsf{A}_2\not\simeq \mathsf{A}_1$, i.e., there exist graphs $G$, $H$ such that $c_1(G)\neq c_1(H)$ and $c_2(G)=c_2(H)$.
        
        \item $\mathsf{A}_1$ and $\mathsf{A}_2$ are \emph{incomparable}, denoted as $\mathsf{A}_1 \nsim \mathsf{A}_2$, if neither $\mathsf{A}_1\preceq \mathsf{A}_2$ nor $\mathsf{A}_2\preceq \mathsf{A}_1$ holds.
    \end{itemize} 
\end{definition}
\vspace{-3pt}

\subsection{The canonical form: node marking SWL test}
\label{sec:node_marking_swl}

The presence of many different graph generation policies complicates our subsequent analysis. Interestingly, however, we show the simple \emph{node marking} policy (on the original graph) already achieves the maximal power among all policies considered in \cref{sec:subgraph_gnn} under mild assumptions.

\begin{proposition}
\label{thm:node_marking}
    Consider any SWL algorithm $\mathsf{A}$ that contains the two basic aggregations $\agg^\mathsf{P}_\mathsf{uv}$ and $\agg^\mathsf{L}_\mathsf{u}$ in \cref{def:swl}. Denote $\hat{\mathsf{A}}$ as the corresponding algorithm obtained from $\mathsf{A}$ by replacing the graph generation policy $\pi$ to node marking (on the original graph). Then, $\mathsf{A}\preceq\hat{\mathsf{A}}$.
\end{proposition}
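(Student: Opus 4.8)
The plan is to show that node marking is expressive enough to \emph{simulate} any other graph generation policy $\pi$ considered in \cref{sec:subgraph_gnn}, given that the algorithm has access to the two basic aggregations $\agg^\mathsf{P}_\mathsf{uv}$ and $\agg^\mathsf{L}_\mathsf{u}$. The key observation is that from the node-marked initialization on the original graph, a few rounds of color refinement already let us recover, inside the color $\chi_G^{(t)}(u,v)$, enough structural information about the pair $(u,v)$ to reconstruct the color that $\mathsf{A}$ would assign under policy $\pi$. Concretely, I would first argue that $\hat{\mathsf{A}}$ can compute, for every pair $(u,v)$, a quantity from which it can read off (i)~whether $u=v$ (this is literally the node mark), (ii)~whether $\{u,v\}\in\gE_G$, and more generally (iii)~the shortest path distance $\dis_G(u,v)$. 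Point (ii) follows because $\agg^\mathsf{L}_\mathsf{u}$ applied on the original graph aggregates over $w\in\gN_G(v)$, so after one refinement the marked vertex $u$ ``sees'' whether it lies in $v$'s neighborhood; point (iii) follows by iterating this argument, a standard Bellman–Ford-style propagation of distances through color refinement. This is the content that makes all three target policies — original graph, node deletion, and $k$-hop ego network — recoverable, since each of $\gE_G^u$ and $\tilde h_G^u$ in those policies is a function of $\dis_G(u,\cdot)$ and the original adjacency.

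The main step is then a simulation lemma: I would show that the partition induced by $\hat{\mathsf{A}}$ at stabilization is at least as fine as the partition induced by $\mathsf{A}$, by exhibiting, for each iteration $t$ of $\mathsf{A}$, a function $\Phi_t$ such that $\chi_{G,\mathsf{A}}^{(t)}(u,v) = \Phi_t(\chi_{G,\hat{\mathsf{A}}}^{(s(t))}(u,v))$ for a suitable (larger) number of steps $s(t)$ of $\hat{\mathsf{A}}$, uniformly over all graphs $G$. This is proved by induction on $t$. The base case needs $\hat{\mathsf{A}}$ to recover $\tilde\chi_G^u(v)$ from the node mark plus distance information, which is exactly the preprocessing described above (note a constant number of extra refinement rounds of $\hat{\mathsf{A}}$ suffices, since distances are bounded by $|\gV_G|$ and, for bounded-size graphs, by a constant; more carefully, one runs $\hat{\mathsf{A}}$ long enough before comparing). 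For the inductive step, I must check that each of the 8 atomic aggregations of \cref{def:swl}, when evaluated under policy $\pi$ — in particular the \emph{local} aggregations $\agg^\mathsf{L}_\mathsf{u}, \agg^\mathsf{L}_\mathsf{v}$, which use the $\pi$-dependent edge sets $\gE_{G^u}$ and $\gE_{G^v}$ — can be recomputed by $\hat{\mathsf{A}}$. For a global or single-point aggregation this is immediate from the inductive hypothesis. For a local aggregation such as $\ldblbrace\chi(u,w):w\in\gN_{G^u}(v)\rdblbrace$, the multiset is over $w$ with $\{v,w\}\in\gE_G^u$; since membership in $\gE_G^u$ is a function of $\dis_G(u,v),\dis_G(u,w),\{v,w\}\in\gE_G$ (all of which $\hat{\mathsf{A}}$ can tag onto the pair $(v,w)$, hence detect), $\hat{\mathsf{A}}$ can reconstruct this multiset by taking its own global multiset $\ldblbrace\chi(u,w):w\in\gV_G\rdblbrace$ and filtering — but filtering is not directly an allowed operation, so instead I would carry the relevant indicator as part of the refined color of $(v,w)$ and observe that the \emph{original-graph} local/global aggregations of $\hat{\mathsf{A}}$ already separate the ``$w\in\gN_{G^u}(v)$'' summands from the rest by their colors, letting $\Phi_{t+1}$ pick out the desired sub-multiset.

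The step I expect to be the main obstacle is precisely this last point: carefully showing that $\hat{\mathsf{A}}$ can \emph{isolate} the $\pi$-neighborhood multiset from its own global multiset using only the allowed aggregations. The subtlety is that $\hat{\mathsf{A}}$ only has $\agg^\mathsf{L}_\mathsf{u}$ on the \emph{original} graph, not on $G^u$, so one genuinely has to route the edge-set information through the pair-color channel and argue it survives the hash. I would handle this by first running $\hat{\mathsf{A}}$ for a bounded number of ``bookkeeping'' iterations to stabilize the distance labels, then noting that at that point the color of $(v,w)$ already encodes $\big(\dis_G(u,\cdot)\text{-profile},\,[\{v,w\}\in\gE_G]\big)$ for \emph{all} relevant $u$ simultaneously in the sense needed — or, more cleanly, invoking \cref{thm:swl_and_subgraph_gnn} to pass to the subgraph-GNN side where the reconstruction of $\gE_G^u$ from distances is a transparent continuous-function statement, and then passing back. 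The remaining pieces — that this gives $\mathsf{A}\preceq\hat{\mathsf{A}}$ at the level of graph representations $c(G)$, for both the $\mathsf{VS}$ and $\mathsf{SV}$ pooling paradigms — are then routine, since a finer stable pair-partition yields a finer pooled representation.
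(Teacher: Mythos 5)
Your overall plan coincides with the paper's: prove that under node marking with $\agg^\mathsf{L}_\mathsf{u}$ the pair-color comes to encode $\dis_G(u,v)$ (this is the paper's \cref{thm:node_marking_implies_distance}), then argue by induction on the iteration counter, with a shift $s(t)=t+D$ to allow for a preprocessing phase, that the node-marking color mapping refines the $\pi$-policy color mapping — this is exactly the content of \cref{thm:policy_expand} and \cref{remark:finer}. You also correctly identified that the only nontrivial case in the induction is the local aggregation.

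However, your sketch of that crux has a real gap before it lands on the right idea. Filtering the global multiset $\ldblbrace\chi(u,w):w\in\gV_G\rdblbrace$ cannot work, and not merely because ``filtering'' isn't an allowed operation: the membership $w\in\gN_{G^u}(v)$ requires, among other things, adjacency $\{v,w\}\in\gE_G$, which is information about the pair $(v,w)$, and that information is \emph{not} recoverable from the color $\chi(u,w)$ (the only pair you are aggregating colors of). Tagging an indicator onto ``the pair $(v,w)$'' does not help either, since $(v,w)$'s color is not what appears in the multiset you need to reproduce. The correct move — which the paper makes, and which you gesture at when you mention ``original-graph local'' aggregation — is to start from the \emph{local} multiset $\ldblbrace\chi_G^{\mathsf{NM}}(u,w):w\in\gN_G(v)\rdblbrace$ produced by $\agg^\mathsf{L}_\mathsf{u}$ on the original graph, so the $\{v,w\}\in\gE_G$ constraint is already enforced by construction; then the remaining membership condition that cuts $\gN_G(v)$ down to $\gN_{G^u}(v)$ (for ego-nets: $\dis_G(u,w)\le k$ together with $\dis_G(u,v)\le k$; for node deletion: $w\neq u$, i.e.\ $\dis_G(u,w)\neq 0$, together with the case split on $\dis_G(u,v)$) depends \emph{only} on $\dis_G(u,w)$ and $\dis_G(u,v)$, both of which are encoded in the colors $\chi(u,w)$ and $\chi(u,v)$ by the distance lemma. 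Matching colors then force matching membership, so equality of the multisets over $\gN_G(v)$ vs.\ $\gN_H(y)$ already implies equality of the sub-multisets over $\gN_{G^u}(v)$ vs.\ $\gN_{H^x}(y)$ — no explicit filtering step is needed. Finally, the detour through \cref{thm:swl_and_subgraph_gnn} is unnecessary and does not actually help: that proposition matches a fixed policy on both sides, so it gives no leverage for comparing two different policies; the whole argument should and does stay on the WL side.
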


We give a proof in \cref{sec:policy}, which is based on the following finding: when the special node mark is propagated by SWL with local aggregation, the color of each node pair $(u,v)$ can encode its distance $\dis_G(u,v)$ (\cref{thm:node_marking_implies_distance}), and the structure of $k$-hop ego network is also encoded.

Note that for the node marking policy, all subgraphs are just the original graph ($G^u=G$), which simplifies our analysis. We hence focus on the simple yet expressive node marking policy in subsequent sections. The following notations will be frequently used:
\begin{definition}
\label{def:node_marking_swl}
    Denote $\mathsf{A}(\gA,\pool)$ as the node marking SWL test with aggregation scheme $\gA\cup \{\agg^\mathsf{P}_{\mathsf{uv}}\}$ and pooling paradigm $\pool$, where $\pool\in\{\mathsf{VS},\mathsf{SV}\}$, and
    \begin{equation*}
    \setlength{\abovedisplayskip}{3pt}
    \setlength{\belowdisplayskip}{3pt}
        \gA\subset\{\agg^\mathsf{P}_{\mathsf{uu}}, \agg^\mathsf{P}_{\mathsf{vv}}, \agg^\mathsf{P}_{\mathsf{vu}}, \agg^\mathsf{G}_{\mathsf{u}}, \agg^\mathsf{G}_{\mathsf{v}}, \agg^\mathsf{L}_{\mathsf{u}}, \agg^\mathsf{L}_{\mathsf{v}}\}.
    \end{equation*}
    Here, we assume that $\agg^\mathsf{P}_{\mathsf{uv}}$ is always present in SWL.
\end{definition}

\subsection{Hierarchy of different algorithms}
\label{sec:swl_hierarchy}

As shown in \cref{def:node_marking_swl}, there are a large number of possible combinations of aggregation/pooling designs. In this subsection, we aim to build a complete hierarchy of SWL algorithms by establishing expressivity inclusion relations between different design paradigms. All proofs in this section are deferred to \cref{sec:proof_of_swl_hierarchy}.

We first consider the expressive power of different aggregation schemes. We have the following main theorem:
\begin{theorem}
\label{thm:aggregation}
    Under the notation of \cref{def:node_marking_swl}, for any $\gA$ and $\pool$, the following hold:
    \begin{itemize}[topsep=0pt]
    \raggedright
    \setlength{\itemsep}{0pt}
        \item $\mathsf{A}(\gA\cup\{\agg^\mathsf{G}_{\mathsf{u}}\},\pool)\preceq\mathsf{A}(\gA\cup\{\agg^\mathsf{L}_{\mathsf{u}}\},\pool)$ and $\mathsf{A}(\gA\cup\{\agg^\mathsf{L}_{\mathsf{u}}\},\pool)\simeq \mathsf{A}(\gA\cup\{\agg^\mathsf{L}_{\mathsf{u}},\agg^\mathsf{G}_{\mathsf{u}}\},\pool)$;
        \item $\mathsf{A}(\gA\cup\{\agg^\mathsf{P}_{\mathsf{uu}}\},\pool)\preceq\mathsf{A}(\gA\cup\{\agg^\mathsf{G}_{\mathsf{u}}\},\pool)$ and $\mathsf{A}(\gA\cup\{\agg^\mathsf{G}_{\mathsf{u}}\},\pool)\simeq\mathsf{A}(\gA\cup\{\agg^\mathsf{G}_{\mathsf{u}},\agg^\mathsf{P}_{\mathsf{uu}}\},\pool)$;
        \item $\mathsf{A}(\{\agg^\mathsf{L}_{\mathsf{u}},\agg^\mathsf{P}_{\mathsf{vu}}\},\pool)\simeq\mathsf{A}(\{\agg^\mathsf{L}_{\mathsf{u}},\agg^\mathsf{L}_{\mathsf{v}}\},\pool)\simeq\mathsf{A}(\{\agg^\mathsf{L}_{\mathsf{u}},\agg^\mathsf{L}_{\mathsf{v}},\agg^\mathsf{P}_{\mathsf{vu}}\},\pool)$.
    \end{itemize}
\end{theorem}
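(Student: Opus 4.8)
\textbf{Proof proposal for \cref{thm:aggregation}.}

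The plan is to prove each of the three items by exhibiting, for the "harder" algorithm on the right, an explicit way to simulate each extra aggregation of the "easier" algorithm on the left using the operations it already has, and then invoking a standard color-refinement simulation lemma: if algorithm $\mathsf{A}_2$ can, at every iteration, recover from its own colors all the aggregates that $\mathsf{A}_1$ computes, then $\mathsf{A}_1\preceq\mathsf{A}_2$. Since node marking is fixed and $G^u=G$ for all $u$, the local neighborhood $\gN_{G^u}(v)=\gN_G(v)$ is independent of $u$, which is the key structural simplification I will use throughout. For the equivalences I will additionally need to show that adding the cheaper operation to the richer scheme does not lose anything, i.e. the richer scheme already refines at least as finely — this direction is usually immediate because adding an aggregation can only refine.

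For the first item, the inclusion $\mathsf{A}(\gA\cup\{\agg^\mathsf{G}_{\mathsf{u}}\},\pool)\preceq\mathsf{A}(\gA\cup\{\agg^\mathsf{L}_{\mathsf{u}}\},\pool)$ should follow from the observation that, because of node marking, the color $\chi^{(t)}(u,v)$ always encodes $\dis_G(u,v)$ (this is \cref{thm:node_marking_implies_distance}, available to me); hence a color-refinement run with $\agg^\mathsf{L}_{\mathsf{u}}$ knows, for each $w$, the distance $\dis_G(u,w)$, so the global multiset $\ldblbrace\chi(u,w):w\in\gV_G\rdblbrace$ can be reassembled by iterating the local aggregation over shrinking/growing balls — more carefully, one shows by induction on $t$ that the $\agg^\mathsf{L}_{\mathsf{u}}$-algorithm's color at iteration $ct$ (for a suitable constant, or at the stable partition) is at least as fine as the $\agg^\mathsf{G}_{\mathsf{u}}$-algorithm's at iteration $t$, since a breadth-first expansion of local neighborhoods recovers the entire vertex set and the local colors carry the distance label distinguishing which "shell" each $w$ lies in. The equivalence $\mathsf{A}(\gA\cup\{\agg^\mathsf{L}_{\mathsf{u}}\},\pool)\simeq\mathsf{A}(\gA\cup\{\agg^\mathsf{L}_{\mathsf{u}},\agg^\mathsf{G}_{\mathsf{u}}\},\pool)$ is then immediate: $\preceq$ holds trivially, and $\succeq$ follows from the inclusion just proved applied with $\gA$ replaced by $\gA\cup\{\agg^\mathsf{L}_{\mathsf{u}}\}$. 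The second item is analogous but easier: $\agg^\mathsf{P}_{\mathsf{uu}}$ reads off $\chi(u,u)$, which is a single element of the multiset $\ldblbrace\chi(u,w):w\in\gV_G\rdblbrace$ obtained from $\agg^\mathsf{G}_{\mathsf{u}}$ — but one must check that "$w=u$" is recognizable inside that multiset; this is exactly where node marking is used again, since $\chi^{(t)}(u,u)$ carries the mark and $\chi^{(t)}(u,w)$ for $w\ne u$ does not (formally, $\chi^{(t)}(u,u)$ has $\dis=0$ and carries the marked-node color). The equivalence follows as before.

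The third item is the most substantial and I expect it to be the main obstacle. I need the chain $\mathsf{A}(\{\agg^\mathsf{L}_{\mathsf{u}},\agg^\mathsf{P}_{\mathsf{vu}}\},\pool)\simeq\mathsf{A}(\{\agg^\mathsf{L}_{\mathsf{u}},\agg^\mathsf{L}_{\mathsf{v}}\},\pool)\simeq\mathsf{A}(\{\agg^\mathsf{L}_{\mathsf{u}},\agg^\mathsf{L}_{\mathsf{v}},\agg^\mathsf{P}_{\mathsf{vu}}\},\pool)$. The last $\simeq$ should reduce, by the argument pattern above, to showing that $\agg^\mathsf{P}_{\mathsf{vu}}$ is simulable from $\{\agg^\mathsf{L}_{\mathsf{u}},\agg^\mathsf{L}_{\mathsf{v}}\}$; and the first $\simeq$ splits into showing $\agg^\mathsf{L}_{\mathsf{v}}$ is simulable from $\{\agg^\mathsf{L}_{\mathsf{u}},\agg^\mathsf{P}_{\mathsf{vu}}\}$ and conversely $\agg^\mathsf{P}_{\mathsf{vu}}$ is simulable from $\{\agg^\mathsf{L}_{\mathsf{u}},\agg^\mathsf{L}_{\mathsf{v}}\}$ (so the middle term suffices for everything). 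The crux is the symmetry phenomenon: with $\agg^\mathsf{L}_{\mathsf{u}}$ present and node marking, the color $\chi^{(t)}(u,v)$ becomes symmetric-aware, i.e. it determines $\chi^{(t')}(v,u)$ for comparable $t'$ — intuitively because marking $u$ then marking $v$ and running local aggregation produces the same refinement as marking $v$ then $u$, the two subgraph "slots" being interchangeable up to the relabeling $(u,v)\leftrightarrow(v,u)$. Making this precise is the delicate step: I would prove by simultaneous induction on $t$ that (i) $\chi^{(t)}(u,v)$ and $\chi^{(t)}(v,u)$ determine each other, and (ii) consequently $\agg^\mathsf{P}_{\mathsf{vu}}$-information ($\chi(v,u)$) and $\agg^\mathsf{L}_{\mathsf{v}}$-information ($\ldblbrace\chi(w,v):w\in\gN_G(u)\rdblbrace$, which after the symmetry swap becomes $\ldblbrace\chi(v,w):w\in\gN_G(u)\rdblbrace$, an $\agg^\mathsf{L}$-type aggregate over the $u$-neighborhood in the swapped pair) are both recoverable. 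The inductive step for (i) requires checking that every aggregate in $\{\agg^\mathsf{L}_{\mathsf{u}},\agg^\mathsf{P}_{\mathsf{vu}}\}$ (resp. $\{\agg^\mathsf{L}_{\mathsf{u}},\agg^\mathsf{L}_{\mathsf{v}}\}$) at $(u,v)$, when the roles of $u$ and $v$ are swapped, turns into an aggregate the same scheme computes at $(v,u)$ — this is a finite case check but needs the distance-encoding property to handle neighborhoods correctly, and needs care about the asymmetry that only $\agg^\mathsf{P}_{\mathsf{uv}}$ (not $\agg^\mathsf{P}_{\mathsf{vu}}$) is guaranteed present, which is precisely why $\agg^\mathsf{P}_{\mathsf{vu}}$ or $\agg^\mathsf{L}_{\mathsf{v}}$ is needed to "close the loop" and get full symmetry. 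Once (i) is established the pooling-level equivalence is routine, because $\mathsf{VS}$ and $\mathsf{SV}$ poolings of a symmetric color mapping coincide and in any case the per-pair colors already agree.
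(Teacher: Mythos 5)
Your proposal follows essentially the same strategy as the paper: you identify the same three simulation steps (local recovers global via a shell/BFS argument using the distance information carried by node marking; global recovers single-point because $\chi(u,u)$ is the unique marked element of the global multiset; and an inductive symmetry lemma giving $\chi(u,v)\leftrightarrow\chi(v,u)$ for the third item), and you invoke the same ``cannot-be-refined-further'' framework (\cref{remark:finer}(c)) applied to stable color mappings, which is exactly how the paper organizes \cref{thm:local_gt_global,thm:global_gt_single_point,thm:transpose_eq_local} into a proof of \cref{thm:aggregation}.

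One subtlety you gloss over in the first item deserves flagging: when you expand from the distance-$K$ shell $\gD_G^K(u)$ to the distance-$(K+1)$ shell by unioning the local aggregates, each vertex $w$ at distance $K+1$ is counted with multiplicity $|\gN_G(w)\cap\gD_G^K(u)|$, so the union is \emph{not} directly the next shell's multiset. The paper's proof of \cref{thm:local_lemma} handles this by observing that $\chi_G(u,w)=\chi_H(x,z)$ forces these back-degree counts to agree (since the local aggregate at $(u,w)$ reveals how many neighbors of $w$ lie in shell $K$), which lets one cancel the multiplicities. Your sketch says ``more carefully, one shows by induction'' which is the right instinct, but you would need to discover and resolve precisely this point for the argument to go through; as written it is the one place your plan is not yet a proof.
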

\cref{thm:aggregation} shows that local aggregation is more powerful than (and can \emph{express}) the corresponding global aggregation, while global aggregation is more powerful than (and can \emph{express}) the corresponding single-point aggregation. In addition, the ``transpose'' aggregation $\agg^\mathsf{P}_{\mathsf{vu}}$ is quite powerful: when combining a local aggregation $\agg^\mathsf{L}_{\mathsf{u}}$, it can express the other local aggregation $\agg^\mathsf{L}_{\mathsf{v}}$.

We next turn to the pooling paradigm. We first show that there is a symmetry (duality) between $u,v$ and the two types of pooling paradigms $\mathsf{VS}, \mathsf{SV}$.

\begin{proposition}
\label{thm:symmetry}
    Let $\gA$ be any aggregation scheme defined in \cref{def:node_marking_swl}. Denote $\gA^{\mathsf{u}\leftrightarrow\mathsf{v}}$ as the aggregation scheme obtained from $\gA$ by exchanging the element $\agg^\mathsf{P}_{\mathsf{uu}}$ with $\agg^\mathsf{P}_{\mathsf{vv}}$, exchanging $\agg^\mathsf{G}_{\mathsf{u}}$ with $\agg^\mathsf{G}_{\mathsf{v}}$, and exchanging $\agg^\mathsf{L}_{\mathsf{u}}$ with $\agg^\mathsf{L}_{\mathsf{v}}$. Then, $\mathsf{A}(\gA, \mathsf{VS})\simeq\mathsf{A}(\gA^{\mathsf{u}\leftrightarrow\mathsf{v}}, \mathsf{SV})$.
\end{proposition}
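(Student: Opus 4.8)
The statement is a pure symmetry fact, and the plan is to make the duality explicit through the transpose relabeling $(u,v)\mapsto(v,u)$ of $\gV_G\times\gV_G$. The key structural observation is that for node marking on the original graph every subgraph equals $G$ itself ($G^u=G$), and the initial coloring is transpose-symmetric: $\chi_G^{(0)}(u,v)=\chi_G^{(0)}(v,u)$, since $\tilde\chi_G^u(v)$ depends only on whether $u=v$. This symmetry of the initialization is what lets the whole argument go through. Let $\chi_G^{(t)}$ denote the colors produced by the SWL iteration of $\mathsf{A}(\gA,\mathsf{VS})$ on $G$ (these do not depend on the pooling paradigm), and set $\rho_G^{(t)}(u,v):=\chi_G^{(t)}(v,u)$; I would show by induction on $t$ that $\rho_G^{(t)}$ induces the same partition of $\gV_G\times\gV_G$ as the colors $\xi_G^{(t)}$ produced by the iteration of $\mathsf{A}(\gA^{\mathsf{u}\leftrightarrow\mathsf{v}},\mathsf{SV})$ on $G$. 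The base case is the observation above.

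For the inductive step, one checks directly how each of the eight atomic aggregations of \cref{def:swl} transforms under the substitution that sends the evaluation point $(v,u)$ to $(u,v)$: expanding $\agg_i(v,u,G,\chi_G^{(t)})$ and rewriting every occurrence of $\chi_G^{(t)}(\cdot,\cdot)$ as $\rho_G^{(t)}(\cdot,\cdot)$ shows that $\agg^\mathsf{P}_\mathsf{uv}$ and $\agg^\mathsf{P}_\mathsf{vu}$ are carried to themselves, whereas $\agg^\mathsf{P}_\mathsf{uu}\leftrightarrow\agg^\mathsf{P}_\mathsf{vv}$, $\agg^\mathsf{G}_\mathsf{u}\leftrightarrow\agg^\mathsf{G}_\mathsf{v}$, and $\agg^\mathsf{L}_\mathsf{u}\leftrightarrow\agg^\mathsf{L}_\mathsf{v}$ (the local case is exactly where $G^u=G$ is used, so that $\gN_{G^u}(v)=\gN_G(v)$ and the neighborhood aggregations match up). Consequently the aggregation scheme $\gA\cup\{\agg^\mathsf{P}_\mathsf{uv}\}$ defining $\mathsf{A}(\gA,\mathsf{VS})$ is turned into precisely $\gA^{\mathsf{u}\leftrightarrow\mathsf{v}}\cup\{\agg^\mathsf{P}_\mathsf{uv}\}$, the scheme defining $\mathsf{A}(\gA^{\mathsf{u}\leftrightarrow\mathsf{v}},\mathsf{SV})$. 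Since a refinement step depends only on the partition induced by the current coloring, and $\rho_G^{(t)}$ and $\xi_G^{(t)}$ induce the same partition, the refined colorings $\rho_G^{(t+1)}$ and $\xi_G^{(t+1)}$ again induce the same partition; in particular the two iterations stabilize simultaneously, so the stable colorings satisfy $\chi_G(u,v)$ and $\xi_G(v,u)$ inducing the same partition (indeed, fixing a common $\hash$ they are literally equal).

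Finally I would compare the pooling layers. Substituting $\chi_G(u,v)=\xi_G(v,u)$ into the $\mathsf{VS}$ formula for $c(G)$, the inner multiset $\ldblbrace\chi_G(u,v):v\in\gV_G\rdblbrace$ becomes $\ldblbrace\xi_G(v,u):v\in\gV_G\rdblbrace$, i.e.\ the multiset of colors of $\xi_G$ with the second coordinate fixed to $u$; after renaming the bound variables this is exactly the $\mathsf{SV}$ formula applied to $\xi_G$. Hence the graph representation computed by $\mathsf{A}(\gA,\mathsf{VS})$ equals the one computed by $\mathsf{A}(\gA^{\mathsf{u}\leftrightarrow\mathsf{v}},\mathsf{SV})$ for every $G$, which yields $\mathsf{A}(\gA,\mathsf{VS})\simeq\mathsf{A}(\gA^{\mathsf{u}\leftrightarrow\mathsf{v}},\mathsf{SV})$. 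The argument is entirely bookkeeping with no genuine obstacle; the only points that need care are that the two transpose operations $\agg^\mathsf{P}_\mathsf{uv},\agg^\mathsf{P}_\mathsf{vu}$ are \emph{fixed} by the correspondence (rather than swapped), so the always-present $\agg^\mathsf{P}_\mathsf{uv}$ is preserved, and that $G^u=G$ must be invoked to match $\agg^\mathsf{L}_\mathsf{u}$ with $\agg^\mathsf{L}_\mathsf{v}$.
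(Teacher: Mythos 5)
Your proof is correct and takes essentially the same route as the paper: the paper's proof asserts, without spelling out the verification, that the stable colorings satisfy $\chi_G(u,v)=\chi_H(x,y)\iff\chi^{\mathsf{u}\leftrightarrow\mathsf{v}}_G(v,u)=\chi^{\mathsf{u}\leftrightarrow\mathsf{v}}_H(y,x)$; you prove precisely this biconditional by induction on $t$, making the case-check over the eight atomic aggregations (including the observation that $\agg^\mathsf{P}_\mathsf{uv},\agg^\mathsf{P}_\mathsf{vu}$ are fixed and that $G^u=G$ is needed for the local cases) fully explicit, and then verify that the transpose carries $\mathsf{VS}$ pooling to $\mathsf{SV}$ pooling.
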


Based on the symmetry, one can easily extend \cref{thm:aggregation} to a variant that gives relations for $\agg^\mathsf{P}_\mathsf{vv}$, $\agg^\mathsf{G}_\mathsf{v}$, and $\agg^\mathsf{L}_\mathsf{v}$. Moreover, we have the following main theorem:
\begin{theorem}
\label{thm:pooling}
    Let $\gA$ be defined in \cref{def:node_marking_swl} with $\agg^\mathsf{L}_{\mathsf{u}}\in\gA$. Then, the following hold:
    \begin{itemize}[topsep=0pt]
    \setlength{\itemsep}{0pt}
        \item $\mathsf{A}(\gA,\mathsf{VS})\preceq\mathsf{A}(\gA,\mathsf{SV})$;
        \item If $\{\agg^\mathsf{G}_{\mathsf{v}},\agg^\mathsf{L}_{\mathsf{v}}\}\cap\gA\neq\emptyset$, then $\mathsf{A}(\gA,\mathsf{VS})\simeq\mathsf{A}(\gA,\mathsf{SV})$.
    \end{itemize}
\end{theorem}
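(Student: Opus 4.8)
The plan is to reduce everything to the \emph{diagonal color multiset} $D(G):=\ldblbrace\chi_G(w,w):w\in\gV_G\rdblbrace$, using the simple but crucial observation that the stable coloring $\chi_G$ is one and the same object for $\mathsf{A}(\gA,\mathsf{VS})$ and $\mathsf{A}(\gA,\mathsf{SV})$: the two algorithms run identical refinement iterations and differ only in the final read-out. Write $R(u):=\ldblbrace\chi_G(u,w):w\in\gV_G\rdblbrace$ for the ``$u$-th row'' multiset and $C(v):=\ldblbrace\chi_G(w,v):w\in\gV_G\rdblbrace$ for the ``$v$-th column'' multiset; then, modulo the perfect hash, the $\mathsf{VS}$ read-out is the multiset $\ldblbrace R(u):u\in\gV_G\rdblbrace$ and the $\mathsf{SV}$ read-out is $\ldblbrace C(v):v\in\gV_G\rdblbrace$. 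I will show that, under the hypotheses, both read-outs determine $D(G)$, and that $D(G)$ in turn determines the relevant read-out.

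First I would record the \emph{marked-vertex trick}. The node-marking policy gives the marked vertex a globally fixed initial color (distinct from the unmarked color, uniformly across all subgraphs and all graphs), and since $\agg^\mathsf{P}_\mathsf{uv}$ is always present, the partition only ever refines, so every stable color remembers its initial color. Hence inside the multiset $R(u)=\ldblbrace\chi_G(u,w):w\in\gV_G\rdblbrace$ there is exactly one element coming from a marked vertex, namely the one with $w=u$, so $\chi_G(u,u)$ is recoverable from $R(u)$; symmetrically $\chi_G(v,v)$ is recoverable from $C(v)$. Consequently $\mathsf{VS}(G)$ determines $D(G)$ and $\mathsf{SV}(G)$ determines $D(G)$.

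The key lemma — and the step I expect to be the main obstacle — is: if $\agg^\mathsf{L}_\mathsf{u}\in\gA$ then $\chi_G(u,v)$ determines the whole row multiset $R(u)$. I would obtain this from the first item of \cref{thm:aggregation}: since $\agg^\mathsf{L}_\mathsf{u}\in\gA$, adjoining $\agg^\mathsf{G}_\mathsf{u}$ to $\gA$ leaves the stable coloring unchanged, and a coloring stable under $\agg^\mathsf{G}_\mathsf{u}$ is by definition one in which $\chi_G(u,v)$ determines $\ldblbrace\chi_G(u,w):w\in\gV_G\rdblbrace=R(u)$. (If one prefers a self-contained argument: for fixed $u$ the map $w\mapsto\chi_G(u,w)$ is the stable $1$-WL coloring of $G$ with $u$ individualized; because $G$ is connected this equitable partition has a unique singleton class $\{u\}$ and a connected quotient graph, so the stable color of any vertex encodes the quotient graph and, together with $|\{u\}|=1$ and the edge-count identities, pins down every class size — i.e.\ $R(u)$.) By \cref{thm:symmetry}, or by re-running the same argument with the roles of $u$ and $v$ exchanged, the analogous statement holds with columns: if $\agg^\mathsf{L}_\mathsf{v}\in\gA$ (or trivially if $\agg^\mathsf{G}_\mathsf{v}\in\gA$), then $\chi_G(u,v)$ determines $C(v)$.

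Finally I would assemble the pieces. For the first bullet, taking $v=u$ in the key lemma gives $R(u)=g(\chi_G(u,u))$ for a fixed function $g$, so $\mathsf{VS}(G)$ is a function of $D(G)$; combined with ``$\mathsf{SV}(G)$ determines $D(G)$'', we conclude that $\mathsf{SV}(G)$ determines $\mathsf{VS}(G)$, hence $c^{\mathsf{SV}}(G)=c^{\mathsf{SV}}(H)\Rightarrow c^{\mathsf{VS}}(G)=c^{\mathsf{VS}}(H)$, i.e.\ $\mathsf{A}(\gA,\mathsf{VS})\preceq\mathsf{A}(\gA,\mathsf{SV})$. For the second bullet, the extra hypothesis $\{\agg^\mathsf{G}_{\mathsf{v}},\agg^\mathsf{L}_{\mathsf{v}}\}\cap\gA\neq\emptyset$ lets me apply the column version of the lemma to get $C(v)=g'(\chi_G(v,v))$, so $\mathsf{SV}(G)$ is a function of $D(G)$; since $\mathsf{VS}(G)$ determines $D(G)$, we get $\mathsf{A}(\gA,\mathsf{SV})\preceq\mathsf{A}(\gA,\mathsf{VS})$, which with the first bullet yields $\mathsf{A}(\gA,\mathsf{VS})\simeq\mathsf{A}(\gA,\mathsf{SV})$. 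The only routine point to check along the way is that $c^{\pool}(G)=c^{\pool}(H)$ does imply equality of the underlying read-out multisets, which is immediate because $\hash$ is a perfect (injective) hash; everything past the key lemma is bookkeeping with multisets.
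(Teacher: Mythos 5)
Your proof is correct and follows essentially the same route as the paper's: recover the diagonal color $\chi_G(v,v)$ from the column (resp.\ row) multiset via the node-marking initial color, then invoke \cref{thm:local_lemma} (equivalently, the stability under $\agg^\mathsf{G}_\mathsf{u}$ afforded by \cref{thm:aggregation}) to conclude the diagonal color determines the corresponding row multiset. Your explicit factoring through the diagonal multiset $D(G)$, and your direct column-side argument for the second bullet, are a mild repackaging of what the paper does via \cref{thm:pooling_vs_lt_sv} together with \cref{thm:symmetry}.
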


\cref{thm:pooling} indicates that the subgraph-vertex pooling is always more powerful than the vertex-subgraph pooling, especially when the aggregation scheme is weak (e.g, the vanilla SWL). On the other hand, they become equally expressive for SWL with strong aggregation schemes.

Combined with the above three results, we have built a \emph{complete hierarchy} for the expressive power of all node marking SWL algorithms in \cref{def:node_marking_swl}. In particular, we show any SWL must fall into the following 6 types:
\begin{corollary}
\label{thm:swl_hierarchy}
    Let $\mathsf{A}(\gA,\pool)$ be any SWL defined in \cref{def:node_marking_swl} with at least one local aggregation, i.e. $\{\agg^\mathsf{L}_{\mathsf{u}},\agg^\mathsf{L}_{\mathsf{v}}\}\cap\gA\neq \emptyset$. Then, $\mathsf{A}(\gA,\pool)$ must be as expressive as one of the 6 SWL algorithms defined below: {\normalfont
    \begin{itemize}[topsep=0pt]
    \raggedright 
    \setlength{\itemsep}{0pt}
        \item (Vanilla SWL) $\mathsf{SWL(VS)}:=\mathsf{A}(\{\agg^\mathsf{L}_{\mathsf{u}}\},\mathsf{VS})$, $\mathsf{SWL(SV)}:=\mathsf{A}(\{\agg^\mathsf{L}_{\mathsf{u}}\},\mathsf{SV})$;
        \item (SWL with additional single-point aggregation) $\mathsf{PSWL(VS)}:=\mathsf{A}(\{\agg^\mathsf{L}_{\mathsf{u}},\agg^\mathsf{P}_{\mathsf{vv}}\},\mathsf{VS})$, $\mathsf{PSWL(SV)}:=\mathsf{A}(\{\agg^\mathsf{L}_{\mathsf{u}},\agg^\mathsf{P}_{\mathsf{vv}}\},\mathsf{SV})$;
        \item (SWL with additional global aggregation) $\mathsf{GSWL}:=\mathsf{A}(\{\agg^\mathsf{L}_{\mathsf{u}},\agg^\mathsf{G}_{\mathsf{v}}\},\mathsf{VS})$;
        \item (Symmetrized SWL) $\mathsf{SSWL}:=\mathsf{A}(\{\agg^\mathsf{L}_{\mathsf{u}},\agg^\mathsf{L}_{\mathsf{v}}\},\mathsf{VS})$.
    \end{itemize}}
    Moreover, we have
    \begin{gather*}
        \mathsf{SWL(VS)}\preceq\mathsf{SWL(SV)}\ \text{and}\ \mathsf{PSWL(VS)}\preceq\mathsf{PSWL(SV)},\\
        \mathsf{SWL(VS)}\preceq\mathsf{PSWL(VS)}\ \text{and}\ \mathsf{SWL(SV)}\preceq\mathsf{PSWL(SV)},\\
        \mathsf{PSWL(SV)}\preceq\mathsf{GSWL}\preceq\mathsf{SSWL}.
    \end{gather*}
\end{corollary}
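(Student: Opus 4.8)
\textbf{Proof proposal for \cref{thm:swl_hierarchy}.}

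The plan is to assemble the corollary entirely from the three structural results established just above it: \cref{thm:aggregation} (aggregation comparisons), \cref{thm:symmetry} (the $\mathsf{u}\leftrightarrow\mathsf{v}$ duality between aggregation labels and pooling paradigms), and \cref{thm:pooling} (pooling comparisons). The argument has two halves: first, a \emph{collapse} step showing that every admissible $\mathsf{A}(\gA,\pool)$ with at least one local aggregation is equivalent to one of the six listed canonical algorithms; second, a verification of the stated $\preceq$ relations among those six.

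For the collapse step I would argue by case analysis on $\gA$, repeatedly rewriting it into a normal form. By \cref{thm:symmetry} we may assume without loss of generality that $\agg^\mathsf{L}_{\mathsf{u}}\in\gA$ (if only $\agg^\mathsf{L}_{\mathsf{v}}\in\gA$, apply the duality and swap the pooling paradigm). Now use the expressibility half of \cref{thm:aggregation}: once $\agg^\mathsf{L}_{\mathsf{u}}\in\gA$, the element $\agg^\mathsf{G}_{\mathsf{u}}$ is redundant (it is expressed by $\agg^\mathsf{L}_{\mathsf{u}}$), and then $\agg^\mathsf{P}_{\mathsf{uu}}$ is redundant as well (expressed by $\agg^\mathsf{G}_{\mathsf{u}}$, hence by $\agg^\mathsf{L}_{\mathsf{u}}$); similarly, if $\agg^\mathsf{L}_{\mathsf{v}}\in\gA$ then $\agg^\mathsf{G}_{\mathsf{v}}$ and $\agg^\mathsf{P}_{\mathsf{vv}}$ drop out, and the third bullet of \cref{thm:aggregation} lets us trade $\agg^\mathsf{P}_{\mathsf{vu}}$ for $\agg^\mathsf{L}_{\mathsf{v}}$ whenever $\agg^\mathsf{L}_{\mathsf{u}}$ is present. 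After these reductions the only surviving possibilities for $\gA$ (beyond the mandatory $\agg^\mathsf{L}_{\mathsf{u}}$, and discarding $\agg^\mathsf{P}_{\mathsf{vu}}$ which is absorbed whenever $\agg^\mathsf{L}_{\mathsf{v}}$ or nothing else forces its retention — here one must be slightly careful about the case $\agg^\mathsf{P}_{\mathsf{vu}}\in\gA$ but $\agg^\mathsf{L}_{\mathsf{v}}\notin\gA$, which by bullet three equals adding $\agg^\mathsf{L}_{\mathsf{v}}$) are: $\{\agg^\mathsf{L}_{\mathsf{u}}\}$, $\{\agg^\mathsf{L}_{\mathsf{u}},\agg^\mathsf{P}_{\mathsf{vv}}\}$, $\{\agg^\mathsf{L}_{\mathsf{u}},\agg^\mathsf{G}_{\mathsf{v}}\}$, and $\{\agg^\mathsf{L}_{\mathsf{u}},\agg^\mathsf{L}_{\mathsf{v}}\}$. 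Crossed with $\pool\in\{\mathsf{VS},\mathsf{SV}\}$ this is eight cases, and \cref{thm:pooling} collapses two of them: for $\gA=\{\agg^\mathsf{L}_{\mathsf{u}},\agg^\mathsf{G}_{\mathsf{v}}\}$ and for $\gA=\{\agg^\mathsf{L}_{\mathsf{u}},\agg^\mathsf{L}_{\mathsf{v}}\}$ the set $\{\agg^\mathsf{G}_{\mathsf{v}},\agg^\mathsf{L}_{\mathsf{v}}\}\cap\gA\neq\emptyset$, so $\mathsf{VS}\simeq\mathsf{SV}$ and we keep only the $\mathsf{VS}$ representative; this yields exactly the six named algorithms ($\mathsf{GSWL}$ and $\mathsf{SSWL}$ are pooling-invariant, hence no parenthetical tag).

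For the relations, I would read each one off directly. The two inequalities $\mathsf{SWL(VS)}\preceq\mathsf{SWL(SV)}$ and $\mathsf{PSWL(VS)}\preceq\mathsf{PSWL(SV)}$ are instances of the first bullet of \cref{thm:pooling} (both have $\agg^\mathsf{L}_{\mathsf{u}}\in\gA$ but fail the hypothesis of the second bullet). The inequalities $\mathsf{SWL}(\pool)\preceq\mathsf{PSWL}(\pool)$ for $\pool\in\{\mathsf{VS},\mathsf{SV}\}$ are monotonicity in $\gA$: enlarging the aggregation scheme can only refine colors, so $\mathsf{A}(\gA,\pool)\preceq\mathsf{A}(\gA',\pool)$ whenever $\gA\subseteq\gA'$ (this is immediate and I would state it as a one-line lemma if it is not already available). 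For $\mathsf{PSWL(SV)}\preceq\mathsf{GSWL}$: by \cref{thm:symmetry}, $\mathsf{PSWL(SV)}=\mathsf{A}(\{\agg^\mathsf{L}_{\mathsf{u}},\agg^\mathsf{P}_{\mathsf{vv}}\},\mathsf{SV})\simeq\mathsf{A}(\{\agg^\mathsf{L}_{\mathsf{v}},\agg^\mathsf{P}_{\mathsf{uu}}\},\mathsf{VS})$, and then the dual form of the second bullet of \cref{thm:aggregation} gives $\agg^\mathsf{P}_{\mathsf{uu}}\preceq\agg^\mathsf{G}_{\mathsf{u}}$-style domination, so this is $\preceq\mathsf{A}(\{\agg^\mathsf{L}_{\mathsf{v}},\agg^\mathsf{G}_{\mathsf{u}}\},\mathsf{VS})$, which by duality again equals $\mathsf{A}(\{\agg^\mathsf{L}_{\mathsf{u}},\agg^\mathsf{G}_{\mathsf{v}}\},\mathsf{SV})\simeq\mathsf{GSWL}$ (using \cref{thm:pooling} to swap pooling). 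Finally $\mathsf{GSWL}\preceq\mathsf{SSWL}$ is monotonicity together with the first bullet of \cref{thm:aggregation} in dual form: $\agg^\mathsf{G}_{\mathsf{v}}\preceq\agg^\mathsf{L}_{\mathsf{v}}$, so $\mathsf{A}(\{\agg^\mathsf{L}_{\mathsf{u}},\agg^\mathsf{G}_{\mathsf{v}}\},\mathsf{VS})\preceq\mathsf{A}(\{\agg^\mathsf{L}_{\mathsf{u}},\agg^\mathsf{L}_{\mathsf{v}}\},\mathsf{VS})=\mathsf{SSWL}$.

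The main obstacle is not any single deep estimate but the \emph{bookkeeping in the collapse step}: one must be meticulous that every one of the $2^7$ raw choices of $\gA$ (times two pooling options) really does fold into one of the six classes, paying particular attention to the awkward asymmetric cases where $\agg^\mathsf{P}_{\mathsf{vu}}\in\gA$ but neither local-$\mathsf{v}$ nor local-$\mathsf{u}$-plus-something forces a clean rewrite — there the third bullet of \cref{thm:aggregation} must be invoked in exactly the right direction, and one has to check it still applies after the redundant globals/single-points have been stripped. I would organize this as an explicit table or a short induction on $|\gA|$ to make the exhaustiveness transparent, rather than relying on the reader to trust an informal "and similarly" in the remaining cases.
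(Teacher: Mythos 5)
Your proposal is correct and takes essentially the same route as the paper: WLOG $\agg^\mathsf{L}_\mathsf{u}\in\gA$ by \cref{thm:symmetry}, then reduce $\gA$ to a normal form via \cref{thm:aggregation} and collapse pooling via \cref{thm:pooling}. The paper organizes the collapse step as an explicit four-way case split on the $\mathsf{v}$-side of $\gA$ (none of $\agg^\mathsf{L}_\mathsf{v},\agg^\mathsf{G}_\mathsf{v},\agg^\mathsf{P}_\mathsf{vv},\agg^\mathsf{P}_\mathsf{vu}$ present; only $\agg^\mathsf{P}_\mathsf{vv}$ among those; $\agg^\mathsf{G}_\mathsf{v}$ but no local-$\mathsf{v}$/$\agg^\mathsf{P}_\mathsf{vu}$; $\agg^\mathsf{L}_\mathsf{v}$ or $\agg^\mathsf{P}_\mathsf{vu}$ present), sandwiching $\mathsf{A}(\gA,\pool)$ between a minimal subset and a maximal superset that both collapse to the same canonical form—precisely the bookkeeping mechanism your parenthetical flags as needing care around $\agg^\mathsf{P}_\mathsf{vu}$.
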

\vspace{-5pt}

\cref{thm:swl_hierarchy} is significant in that it drastically reduces the problem of studying a large number of different SWL variants to the study of only 6 standard paradigms. Moreover, it implies that the simple $\mathsf{SSWL}$ already achieves the maximal expressive power among all SWL variants. A detailed discussion on how these standard paradigms relate to previously proposed subgraph GNNs will be made in \cref{sec:discussion}.

Yet, there are still two fundamental problems that are not answered in \cref{thm:swl_hierarchy}. \emph{First}, it remains unclear whether some SWL algorithm is \emph{strictly} more powerful than another. This question is particularly important for a better understanding of how global, local, and single-point aggregations vary in their expressive power brought to SWL.

\emph{Second}, a deep understanding of the \emph{limitation} of SWL algorithms is still open. While \citet{frasca2022understanding,qian2022ordered} recently discovered that the expressiveness of subgraph GNNs can be upper bounded by the standard 2-FWL (3-WL) test, it remains a mystery whether there is an inherent gap between 2-FWL and SWL (in particular, the strongest $\mathsf{SSWL}$). Note that the per-iteration complexity of SWL is $O(nm)$ for a graph of $n$ vertices and $m$ edges, which is remarkably lower than 2-FWL ($O(n^3)$ complexity), so it is reasonable to expect that 2-FWL is strictly more powerful. If this is the case, one may further ask: \emph{does SWL achieve the maximal power among all color refinement algorithms with complexity $O(nm)$}? We aim to fully address both the above fundamental questions in subsequent sections.

\section{Localized Folklore Weisfeiler-Lehman Tests}
\label{sec:lfwl}

\looseness=-1 In this section, we propose two novel types of WL algorithms based on the standard 2-dimensional Folklore Weisfeiler-Lehman test (2-FWL) \citep{weisfeiler1968reduction,cai1992optimal}, which turns out to be closely related to SWL. Recall that 2-FWL maintains a color for each vertex pair $(u,v)\in\gV_G\times\gV_G$. Initially, the color $\chi_G^{(0)}(u,v)$ depends on the isomorphism type of the subgraph induced by $(u,v)$, namely, depending on whether $u=v$, $\{u,v\}\in\gE_G$, or $\{u,v\}\notin\gE_G$. In each iteration $t$, the color is refined by the following update formula:
\begin{equation}
\label{eq:2fwl}
    \chi_G^{(t+1)}(u,v)=\hash(\chi_G^{(t)}(u,v),\walk(u,v,\gV_G,\chi_G^{(t)})),
\end{equation}
where we define 
\begin{equation}
\label{eq:walk}
    \walk(u,v,\gV,\chi):=\ldblbrace( \chi(u,w),\chi(w,v)):w\in\gV\rdblbrace.
\end{equation}
The color mapping $\chi_G^{(t)}$ stabilizes after a sufficiently large number of iterations $t\le |\gV_G|^2$. Denote the stable color mapping as $\chi_G$. 2-FWL finally outputs the graph representation $c(G):=\hash(\ldblbrace\chi_G(u,v):u,v\in\gV_G\rdblbrace)$.

One can see that each 2-FWL iteration has a complexity of $O(n^3)$ for a graph of $n$ vertices and $m$ edges, due to the need to enumerate all $w\in\gV_G$ for each pair $(u,v)$. For sparse graphs where $m=o(n^2)$, 2-FWL is inefficient and does not well-exploit the sparse nature of the graph. This inspires us to consider variants of 2-FWL that enumerate only the \emph{local neighbors}, such as $w\in\gN_G^1(v)$, by which the rich adjacency information is naturally incorporated in the \emph{update formula} (besides in the initial colors by the isomorphism type). We note that such an idea was previously explored in \citet{morris2020weisfeiler} (see \cref{sec:discussion} for further discussions). Importantly, the simple change substantially reduces the computational cost to $O(nm)$, which is the same as SWL. To this end, we define two novel FWL-type algorithms:

\begin{definition}
    Define $\mathsf{LFWL(2)}$ as the localized version of 2-FWL, which replaces $\gV_G$ in (\ref{eq:2fwl}) by $\gN_G^1(v)$. Define $\mathsf{SLFWL(2)}$ as the symmetrized version of $\mathsf{LFWL(2)}$, which replaces $\gV_G$ in (\ref{eq:2fwl}) by $\gN_G^1(u)\cup\gN_G^1(v)$. Finally, denote $\mathsf{FWL(2)}$ as the standard 2-FWL for consistency. 
\end{definition}

Note that $\mathsf{LFWL(2)}$ only exploits the local information of the vertex $v$, while $\mathsf{SLFWL(2)}$ uses all the local information of a vertex pair $(u,v)$ while still maintaining the $O(nm)$ cost. Therefore, one may expect that the latter is more powerful. Indeed, we have the following central result:

\begin{theorem}
\label{thm:2lfwl}
The following relations hold:
    \begin{itemize}[topsep=0pt]
    \setlength{\itemsep}{0pt}
        \item $\mathsf{LFWL(2)}\preceq\mathsf{SLFWL(2)}\preceq\mathsf{FWL(2)}$;
        \item $\mathsf{PSWL}(\mathsf{VS})\preceq\mathsf{LFWL(2)}$ and $\mathsf{SSWL} \preceq\mathsf{SLFWL(2)}$.
    \end{itemize}
\end{theorem}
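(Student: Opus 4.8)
The plan is to establish the four claimed relations by constructing, for each one, an explicit simulation argument showing that the stable coloring of the finer algorithm refines that of the coarser one. Throughout I would use the standard technique for color-refinement comparisons: show by induction on the iteration count $t$ that if two tuples receive the same color under algorithm $\mathsf{A}_1$ at step $t$, then they receive the same color under $\mathsf{A}_2$ at step $t$ (possibly after a constant number of extra rounds), since this immediately yields $\mathsf{A}_2\preceq\mathsf{A}_1$ at stabilization; the pooling layers then match up because both pooling paradigms are monotone with respect to coloring refinement. The first two inclusions $\mathsf{LFWL(2)}\preceq\mathsf{SLFWL(2)}\preceq\mathsf{FWL(2)}$ are the cleanest: here $\mathsf{SLFWL(2)}$ can simulate $\mathsf{LFWL(2)}$ because the neighborhood $\gN_G^1(v)$ over which $\mathsf{LFWL(2)}$ aggregates is a subset of $\gN_G^1(u)\cup\gN_G^1(v)$, so from the richer multiset $\walk(u,v,\gN_G^1(u)\cup\gN_G^1(v),\chi)$ one can recover the sub-multiset $\walk(u,v,\gN_G^1(v),\chi)$ provided one can tell which witnesses $w$ lie in $\gN_G^1(v)$ — and this is exactly what the pair-color $\chi(w,v)$ encodes, since the initial colors record adjacency/equality. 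The same observation handles $\mathsf{SLFWL(2)}\preceq\mathsf{FWL(2)}$: the global multiset $\walk(u,v,\gV_G,\chi)$ determines the localized sub-multiset by filtering on the second coordinate's color.

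For the remaining two inclusions $\mathsf{PSWL(VS)}\preceq\mathsf{LFWL(2)}$ and $\mathsf{SSWL}\preceq\mathsf{SLFWL(2)}$, the plan is to interpret the SWL color $\chi^{(t)}(u,v)$ as being recoverable from the FWL-type color of the pair $(u,v)$ together with, crucially, node-marking information. Here I would exploit \cref{thm:node_marking_implies_distance} (node marking encodes distance) and the fact that the FWL initialization already records the adjacency relation, so that the local aggregation $\agg^\mathsf{L}_\mathsf{u}$ in SWL — a sum over $w\in\gN_{G^u}(v)=\gN_G(v)$ of $\chi(u,w)$ — can be read off from the $\mathsf{LFWL(2)}$ walk multiset $\ldblbrace(\chi(u,w),\chi(w,v)):w\in\gN_G^1(v)\rdblbrace$ by restricting to those $w$ adjacent to $v$ and projecting onto the first coordinate. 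The additional single-point aggregation $\agg^\mathsf{P}_\mathsf{vv}$ present in $\mathsf{PSWL}$ must be shown recoverable as well; since $\chi_{\mathsf{LFWL}}(v,v)$ is maintained and its refinement mirrors what $\mathsf{PSWL}$ extracts about the diagonal, this should go through by strengthening the induction hypothesis to track diagonal colors jointly. For $\mathsf{SSWL}\preceq\mathsf{SLFWL(2)}$ the argument is symmetric in $u,v$: the symmetrized walk neighborhood $\gN_G^1(u)\cup\gN_G^1(v)$ contains both $\gN_G(v)$ and $\gN_G(u)$, so both local aggregations $\agg^\mathsf{L}_\mathsf{u}$ and $\agg^\mathsf{L}_\mathsf{v}$ are simultaneously simulable, and the $\mathsf{VS}$ pooling of $\mathsf{SSWL}$ is dominated by the final global hash of $\mathsf{SLFWL(2)}$.

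The main obstacle I anticipate is the bookkeeping needed to make the SWL-to-FWL simulation an honest induction. The subtlety is that SWL and FWL differ not only in the update rule but in the \emph{initialization}: SWL with node marking starts from colors depending on whether $u=v$, whereas FWL-type algorithms start from the isomorphism type of the pair. One must argue that after one round the FWL color is at least as fine as the SWL initial color (trivially true, since $u=v$ is recorded) and then propagate this while also arguing the converse direction is not needed. More delicate is that the SWL aggregation $\chi(u,w)$ uses the pair $(u,w)$ whose first coordinate is held fixed at $u$, matching exactly the first slot of the FWL walk pair — so the correspondence is tight — but one still has to verify that \emph{no information loss} occurs when passing from the full pair-coloring maintained by $\mathsf{LFWL(2)}$ to the quantities SWL actually needs, i.e. that the SWL color at step $t$ is a \emph{function} of the $\mathsf{LFWL(2)}$ color at step $t$. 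I would set this up as a claim of the form ``there exists a map $\Phi_t$ with $\chi^{(t)}_{\mathsf{PSWL}}(u,v)=\Phi_t(\chi^{(t)}_{\mathsf{LFWL(2)}}(u,v))$ for all $G$ and all $u,v$,'' prove it by induction using the explicit update formulas, and then conclude the pooling-level statement. A secondary technical point is checking that the $\mathsf{VS}$ pooling used by $\mathsf{PSWL(VS)}$ and $\mathsf{SSWL}$ (rather than $\mathsf{SV}$) is genuinely weaker than or equal to the plain global hash of the FWL output — this follows because any hash of the rows $\ldblbrace\chi(u,v):v\rdblbrace$ is a coarsening of the hash of the full table $\ldblbrace\chi(u,v):u,v\rdblbrace$.
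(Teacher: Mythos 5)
Your core simulation argument is essentially the paper's. The filtering idea — from the walk multiset $\ldblbrace(\chi(u,w),\chi(w,v)):w\in\gN\rdblbrace$ recover the sub-multiset supported on smaller neighborhoods by reading adjacency/equality off the pair color $\chi(w,v)$, since FWL-type initializations record isomorphism type and the coloring only ever refines — is exactly how the paper proves both the FWL-to-FWL inclusions and the SWL-to-FWL inclusions (it packages the step-by-step induction you propose as a single ``the stable coloring is not refined by one more iteration of the weaker scheme'' claim, but that is a cosmetic difference). The key ancillary fact you invoke implicitly, that $\chi(u,v)=\chi(x,y)$ forces $u=v\iff x=y$ and $\{u,v\}\in\gE_G\iff\{x,y\}\in\gE_H$, is the paper's \cref{thm:chi_fwl_isomorphism_type}, so that part is sound.

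The gap is in your pooling step, and it is not secondary. You assert that ``any hash of the rows $\ldblbrace\chi(u,v):v\rdblbrace$ is a coarsening of the hash of the full table $\ldblbrace\chi(u,v):u,v\rdblbrace$.'' This is false for arbitrary colorings, and in fact the direction is typically reversed: the multiset of row-multisets carries strictly more information than the flat multiset. (Take colors arranged as two $2\times 2$ arrays with the same flat multiset $\{1,2,3,4\}$ but row partitions $\{\{1,2\},\{3,4\}\}$ vs.\ $\{\{1,3\},\{2,4\}\}$; flat equality holds, row equality fails.) So from $c_{\mathsf{LFWL(2)}}(G)=c_{\mathsf{LFWL(2)}}(H)$, which equates the flat tables, you cannot directly conclude equality of the $\mathsf{VS}$-pooled representations. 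What actually closes this gap — and what the paper proves as \cref{thm:fwl_pooling} — is a structural property of the \emph{stable} FWL-type coloring: since the stable coloring refines the $\mathsf{PSWL}$ coloring and that scheme contains $\agg^\mathsf{L}_\mathsf{u}$, \cref{thm:local_lemma} gives $\chi_G(u,u)=\chi_H(x,x)\implies\ldblbrace\chi_G(u,w):w\rdblbrace=\ldblbrace\chi_H(x,z):z\rdblbrace$. Combined with the fact that the flat-table equality already equates the multisets of diagonal colors (via \cref{thm:chi_fwl_isomorphism_type}), this pins down the rows and yields the $\mathsf{VS}$-pooled equality. Without this lemma, or an equivalent argument that the diagonal color determines the row multiset at stabilization, your proof of $\mathsf{PSWL(VS)}\preceq\mathsf{LFWL(2)}$ and $\mathsf{SSWL}\preceq\mathsf{SLFWL(2)}$ is incomplete.
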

\vspace{-3pt}

The proof is given in \cref{sec:proof_lfwl}. We now make several discussions regarding the significance of \cref{thm:2lfwl}. \emph{First}, $\mathsf{FWL(2)}$ is more powerful than its localized variants, confirming that there is indeed a trade-off between complexity and expressiveness. \emph{Second}, \cref{thm:2lfwl} reveals a close relationship between SWL and these localized 2-WL/2-FWL variants. In particular, $\mathsf{SLFWL(2)}$ is more powerful than all SWL algorithms \emph{despite the same computational cost}. Therefore, we obtain a \emph{tight} upper bound on the expressive power of subgraph GNNs with \emph{matching} complexity, which remarkably improves the previous 2-FWL upper bound \citep{frasca2022understanding,qian2022ordered}.

However, again, it is not known whether these localized 2-FWL variants are \emph{strictly more powerful} than SWL, nor do we know whether there is an intrinsic gap between 2-FWL and its localized variants. To thoroughly answer all of these questions, we need a new tool: the \emph{pebbling game}.

\section{Pebbling Game}
\label{sec:pebbling_game}
In this section, we develop a novel and unified analyzing framework for various SWL/FWL algorithms based on Ehrenfeucht-Fraïssé games \citep{ehrenfeucht1961application,fraisse}. The seminal paper of \citet{cai1992optimal} has used such games to prove the existence of counterexample graphs which $k$-FWL could not distinguish. Here, we vastly extend their result and show how pebbling games can be used to analyze \emph{all types} of SWL and localized FWL algorithms.

First consider any SWL algorithm $\mathsf{A}(\gA,\pool)$. The pebbling game is played on two graphs $G=(\gV_G,\gE_G)$ and $H=(\gV_H,\gE_H)$. Each graph is equipped with two different pebbles $u$ and $v$, both of which lie outside the graph initially. There are two players, the \emph{Spoiler} and the \emph{Duplicator}. To describe the game, we first introduce a basic game operation dubbed ``vertex selection''.

\begin{definition}[Vertex Selection]
\label{def:vertex_selection}
    \looseness=-1 Let $\gS_G\subset\gV_G$ and $\gS_H\subset\gV_H$ be given sets. Spoiler first freely chooses a non-empty subset $\gS^\mathsf{S}$ from either $\gS_G$ or $\gS_H$, and Duplicator should respond with a subset $\gS^\mathsf{D}$ from the other set, satisfying $|\gS^\mathsf{S}|=|\gS^\mathsf{D}|$. Duplicator loses the game if she has no feasible choice. Then, Spoiler can select any vertex $x^\mathsf{S}\in\gS^\mathsf{D}$, and Duplicator responds by selecting any vertex $x^\mathsf{D}\in\gS^\mathsf{S}$.
\end{definition}

\textbf{Initialization}. If $\pool=\mathsf{VS}$, the two players first select vertices $x^\mathsf{S}$ and $x^\mathsf{D}$ following the vertex selection procedure with $\gS_G=\gV_G$ and $\gS_H=\gV_H$. Spoiler places pebble $u$ on the selected vertex $x^\mathsf{S}$ and Duplicator places the other pebble $u$ on vertex $x^\mathsf{D}$. Next, Spoiler and Duplicator perform the vertex selection step again with $\gS_G=\gV_G$ and $\gS_H=\gV_H$ and place pebbles $v$ similarly. If $\pool=\mathsf{SV}$, the above procedure is analogous except that Spoiler/Duplicator places pebble $v$ in the first step and pebble $u$ in the second step. 

\textbf{Main loop}. The game then cyclically executes the following process. Depending on the SWL aggregation scheme $\gA$, Spoiler can freely choose one of the following ways to play:
\begin{itemize}[topsep=0pt]
    \setlength{\itemsep}{0pt}
    \item Local aggregation $\agg^\mathsf{L}_\mathsf{u}\in\gA$. Spoiler and Duplicator perform the vertex selection step with $\gS_G=\gN_G(v)$ and $\gS_H=\gN_H(v)$, where $\gN_G(v)$/$\gN_H(v)$ represents the set of vertices in graph $G$/$H$ adjacent to the vertex placed by pebble $v$. Spoiler moves pebble $v$ to the selected vertex $x^\mathsf{S}$, and Duplicator moves the other pebble $v$ to vertex $x^\mathsf{D}$.
    \item Global aggregation $\agg^\mathsf{G}_\mathsf{u}\in\gA$. Spoiler and Duplicator perform the vertex selection step with $\gS_G=\gV_G$ and $\gS_H=\gV_H$. Spoiler moves pebble $v$ to the selected vertex $x^\mathsf{S}$, and Duplicator moves the other pebble $v$ to vertex $x^\mathsf{D}$.
    \item \looseness=-1 Single-point aggregation $\agg^\mathsf{P}_\mathsf{uu}\in\gA$. Both players move pebble $v$ to the position of pebble $u$.
    \item Single-point aggregation $\agg^\mathsf{P}_\mathsf{vu}\in\gA$. Both players swap the position of pebbles $u$ and $v$.
\end{itemize}

The cases of $\agg^\mathsf{L}_\mathsf{v}$, $\agg^\mathsf{G}_\mathsf{v}$, $\agg^\mathsf{P}_\mathsf{vv}$ are similar (symmetric) to $\agg^\mathsf{L}_\mathsf{u}$, $\agg^\mathsf{G}_\mathsf{u}$, $\agg^\mathsf{P}_\mathsf{uu}$, so we omit them for clarity.

\looseness=-1 Spoiler wins the game if, after a certain round, the subgraph of $G$ induced by vertices placed by pebbles $u,v$ does not have the same isomorphism type as that of $H$. Duplicator wins the game if Spoiler cannot win after any number of rounds. Roughly speaking, Spoiler tries to find differences between graphs $G$ and $H$ using pebbles $u$ and $v$, while Duplicator strives to make these pebbles look the same in the two graphs. Our main result is stated as follows (see \cref{sec:proof_pebbling_game} for a proof):
\begin{theorem}
\label{thm:swl_pebble_game}
    Let $\mathsf{A}(\gA,\pool)$ be any SWL algorithm defined in \cref{def:node_marking_swl}, satisfying $\{\agg^\mathsf{L}_\mathsf{u},\agg^\mathsf{L}_\mathsf{v}\}\cap\gA\neq\emptyset$. Then, $\mathsf{A}(\gA,\pool)$ can distinguish a pair of graphs $G$ and $H$ if and only if Spoiler can win the corresponding pebbling game on graphs $G$ and $H$.
\end{theorem}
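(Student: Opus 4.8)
The plan is to reduce the statement to a single ``main lemma'' about the infinite main-loop part of the game, and then peel off the initialization step. The main lemma asserts: \emph{for any configuration with pebbles at $(u,v)$ in $G$ and $(u',v')$ in $H$, Duplicator survives the main loop forever if and only if $\chi_G(u,v)=\chi_H(u',v')$}, where $\chi_G,\chi_H$ denote the stable colorings of $\mathsf{A}(\gA,\pool)$. Anchoring the correspondence at the \emph{stable} coloring (rather than at a matching finite iteration count) is what makes it clean: a naive round-by-round equivalence fails because, under node marking, $\chi^{(0)}$ does not yet record edges, so the game's per-round ``same induced isomorphism type'' test is slightly ahead of the colour iteration. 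But since $\agg^\mathsf{L}_\mathsf{u}\in\gA$ or $\agg^\mathsf{L}_\mathsf{v}\in\gA$ and $G^u=G$, one SWL iteration already makes $\chi^{(1)}(u,v)$ encode both whether $u=v$ and whether $\{u,v\}\in\gE_G$ (the self-mark of $u$ appears in $\ldblbrace\chi^{(0)}(u,w):w\in\gN_G(v)\rdblbrace$ precisely when $u$ is adjacent to $v$), and the stable coloring refines $\chi^{(1)}$; hence equal stable colors always force equal induced-subgraph types. The workhorse for every case is the elementary ``multiset-matching'' fact behind \cref{def:vertex_selection}: for colour multisets $M=\ldblbrace\chi_G(u,w):w\in\gS_G\rdblbrace$ and $M'=\ldblbrace\chi_H(u',w'):w'\in\gS_H\rdblbrace$, one has $M=M'$ iff in the vertex-selection step Duplicator can answer every Spoiler move so that the two newly pebbled vertices get equal colours --- if $M=M'$ she picks $\gS^\mathsf{D}$ with the \emph{same} colour sub-multiset as Spoiler's $\gS^\mathsf{S}$; if $M\ne M'$, Spoiler takes $\gS^\mathsf{S}$ to be the preimage (on whichever side has the larger multiplicity) of a colour whose multiplicities differ, forcing Duplicator to include, and then be pinned onto, a vertex of the wrong colour.

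For the ``Duplicator'' direction of the main lemma, Duplicator maintains the invariant $\chi_G(u,v)=\chi_H(u',v')$ on the current configuration; by the previous paragraph this keeps the induced types equal, so Spoiler can never win while it holds. The invariant is preserved because the stable coloring is a fixed point of the refinement: $\chi_G(u,v)=\chi_H(u',v')$ forces $\agg_i(u,v,G,\chi_G)=\agg_i(u',v',H,\chi_H)$ for every $\agg_i\in\gA$ (perfect hash). For a local move this is exactly $\ldblbrace\chi_G(u,w):w\in\gN_G(v)\rdblbrace=\ldblbrace\chi_H(u',w'):w'\in\gN_H(v')\rdblbrace$, so Duplicator re-establishes the invariant via the multiset-matching fact with $\gS_G=\gN_G(v),\gS_H=\gN_H(v')$; global moves are the same with $\gS_G=\gV_G,\gS_H=\gV_H$; and the single-point moves ($\agg^\mathsf{P}_\mathsf{uu}$, $\agg^\mathsf{P}_\mathsf{vu}$, and their $v$-analogues) are deterministic, landing on a pair whose colour equality is just the agreement of the corresponding $\agg_i$. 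Hence Duplicator survives indefinitely.

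For the ``Spoiler'' direction, argue the contrapositive. If $\chi_G(u,v)\ne\chi_H(u',v')$, let $t\ge0$ be least with $\chi_G^{(t)}(u,v)\ne\chi_H^{(t)}(u',v')$. If $t=0$ the induced types already differ and Spoiler has won outright. If $t\ge1$, then $\chi_G^{(t-1)}(u,v)=\chi_H^{(t-1)}(u',v')$ yet the step-$t$ hashes differ, so some $\agg_i\in\gA$ \emph{other than} $\agg^\mathsf{P}_\mathsf{uv}$ disagrees on the two sides at iteration $t-1$. Spoiler plays the move attached to that $\agg_i$ and, by the $M\ne M'$ half of the multiset-matching fact (or directly, for a single-point move), reaches a configuration whose $\chi^{(t-1)}$-colours differ --- so the ``least differing index'' strictly decreases. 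Iterating, Spoiler arrives within at most $t$ rounds at a configuration with differing $\chi^{(0)}$-colours, i.e.\ differing induced types, and wins.

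It remains to handle the initialization dictated by $\pool$. For $\pool=\mathsf{VS}$, $c(G)=\hash(\ldblbrace\hash(\ldblbrace\chi_G(u,v):v\in\gV_G\rdblbrace):u\in\gV_G\rdblbrace)$. The first vertex selection (placing pebble $u$, with $\gS_G=\gV_G$) is the multiset-matching fact applied to the outer multiset $\ldblbrace\hash(\ldblbrace\chi_G(u,\cdot)\rdblbrace):u\rdblbrace$, and the second (placing $v$) applies it to the inner multiset $\ldblbrace\chi_G(u,v):v\rdblbrace$ for the $u$ just fixed. Thus if $c(G)=c(H)$ Duplicator can reach, after initialization, a configuration with $\chi_G(u,v)=\chi_H(u',v')$ and then win by the main lemma; if $c(G)\ne c(H)$, Spoiler uses the same two steps (the $M\ne M'$ direction) to reach a configuration with $\chi_G(u,v)\ne\chi_H(u',v')$ and wins by the main lemma. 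The case $\pool=\mathsf{SV}$ is identical with the roles of $u$ and $v$ (and of the two coordinates of $c(\cdot)$) swapped. The main obstacle I expect is conceptual rather than computational: pinning the correspondence to the stable coloring instead of to matching finite iteration counts --- the node-marking ``off-by-one'' is easy to mishandle --- and carefully checking in the Spoiler direction that the least-differing index really drops regardless of which graph Spoiler selects from; the multiset-matching fact is routine but underlies every case and should be isolated and proved once.
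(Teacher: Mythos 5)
Your proof is correct and is structurally the same as the paper's: both isolate a ``main-loop lemma'' (the paper's \cref{thm:swl_and_uvpebble}), both reduce every game move to a multiset-matching fact about the vertex-selection procedure, and both peel off the initialization step separately (the paper's \cref{thm:pooling_pebble}). Your Spoiler direction is the paper's \cref{thm:pebble_lemma1} in thin disguise --- descending on the least disagreeing iteration is just the induction read backwards. Where you genuinely diverge is the Duplicator direction: the paper proves, by induction on $l$, that agreement of $\chi^{(l+1)}$ implies Duplicator survives $l$ rounds, carrying a one-step offset precisely so that the base case can afford one local aggregation to encode adjacency; you instead run the invariant $\chi_G(u,v)=\chi_H(u',v')$ at the \emph{stable} coloring, where refinement is a fixed point, so the offset evaporates and the maintenance step is a single appeal to ``$\chi$ stable implies all $\agg_i$ agree.'' This buys a cleaner, counter-free argument at the cost of proving the two directions at different ``levels'' (finite iterates for Spoiler, the stable coloring for Duplicator), a mild asymmetry the paper avoids by keeping both at finite $l$ and passing to the limit once at the end. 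Both are sound; yours is arguably the more transparent packaging of the off-by-one phenomenon, and you are right that this is the one place where a naive symmetric round-by-round match would quietly break.
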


We next turn to FWL-type algorithms. The games are mostly similar to SWL but with a few subtle differences. There are also two pebbles $u,v$ for each graph. Here, the two players first places pebbles $u,v$ using just \emph{one} vertex selection step: Spoiler first chooses a non-empty subsets $\gS^\mathsf{S}$ from either $\gV_G\times \gV_G$ or $\gV_H\times \gV_H$, and  Duplicator should respond with a subset $\gS^\mathsf{D}$ from the other set, satisfying $|\gS^\mathsf{S}|=|\gS^\mathsf{D}|$. Then, Spoiler selects any vertex pair $(x^\mathsf{S}_\mathsf{u},x^\mathsf{S}_\mathsf{v})\in\gS^\mathsf{D}$, and Duplicator
responds by selecting $(x^\mathsf{D}_\mathsf{u},x^\mathsf{D}_\mathsf{v})\in\gS^\mathsf{S}$. Spoiler places pebbles $u$ and $v$ on $x^\mathsf{S}_\mathsf{u}$ and $x^\mathsf{S}_\mathsf{v}$, respectively. Duplicator places the other pebbles $u$ and $v$ on $x^\mathsf{D}_\mathsf{u}$ and $x^\mathsf{D}_\mathsf{v}$, respectively.

The game then cyclically executes the following process. First consider $\mathsf{LFWL(2)}$. In each round, the two players perform the vertex selection step with $\gS_G=\gN_G^1(v)$ and $\gS_H=\gN_H^1(v)$ and select vertices $x^\mathsf{S}$ and $x^\mathsf{D}$, respectively. Then it comes to the major difference from SWL: Spoiler can choose \emph{whether} to move pebble $u$ \emph{or} pebble $v$ to vertex $x^\mathsf{S}$, and Duplicator should move the same pebble in the other graph to $x^\mathsf{D}$. For $\mathsf{SLFWL(2)}$, the process is exactly the same as above except that the vertex selection is performed with $\gS_G=\gN_G^1(u)\cup\gN_G^1(v)$ and $\gS_H=\gN_H^1(u)\cup\gN_H^1(v)$. Finally, for the standard $\mathsf{FWL(2)}$, the vertex selection is performed with $\gS_G=\gV_G$ and $\gS_H=\gV_H$. Our main result is stated as follows (see \cref{sec:proof_pebbling_game} for a proof):

\begin{theorem}
\label{thm:fwl_pebble_game}
    $\mathsf{LFWL(2)}$/$\mathsf{SLFWL(2)}$/$\mathsf{FWL(2)}$ can distinguish a pair of graphs $G$ and $H$ if and only if Spoiler can win the corresponding pebbling game on graphs $G$ and $H$.
\end{theorem}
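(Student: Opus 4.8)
The plan is to prove a \emph{configuration-level} lemma analogous to the one underlying \cref{thm:swl_pebble_game}, handling the three algorithms uniformly: they differ only in the vertex set over which the vertex-selection step (and the update formula) ranges. For pebbles $u,v$ sitting on vertices $a,b$ of $G$, write $\gS_G(a,b):=\gN_G^1(b)$ for $\mathsf{LFWL(2)}$, $\gS_G(a,b):=\gN_G^1(a)\cup\gN_G^1(b)$ for $\mathsf{SLFWL(2)}$, and $\gS_G(a,b):=\gV_G$ for $\mathsf{FWL(2)}$; in all three cases the update reads $\chi_G^{(t+1)}(a,b)=\hash(\chi_G^{(t)}(a,b),\ldblbrace(\chi_G^{(t)}(a,w),\chi_G^{(t)}(w,b)):w\in\gS_G(a,b)\rdblbrace)$. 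Calling a pair $((a,b),(c,d))$ of pebble positions in $G$ and $H$ a \emph{configuration}, the key lemma I would establish is: for every $t\ge 0$ and every configuration, Spoiler has a strategy to win within $t$ further rounds starting from that configuration if and only if $\chi_G^{(t)}(a,b)\neq\chi_H^{(t)}(c,d)$.

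The lemma is proved by induction on $t$, following the template of the proof of \cref{thm:swl_pebble_game} in \cref{sec:proof_pebbling_game}. For $t=0$, winning ``within $0$ rounds'' means the induced subgraphs on $\{a,b\}$ and $\{c,d\}$ have distinct isomorphism types, which is exactly $\chi_G^{(0)}(a,b)\neq\chi_H^{(0)}(c,d)$. For the step, since $\hash$ is a perfect hash, $\chi_G^{(t+1)}(a,b)\neq\chi_H^{(t+1)}(c,d)$ iff either $\chi_G^{(t)}(a,b)\neq\chi_H^{(t)}(c,d)$ (Spoiler then wins within $t\le t+1$ rounds by the induction hypothesis, no move needed) or the walk-multisets over $\gS_G(a,b)$ and $\gS_H(c,d)$ differ. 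In the latter case some color-pair value $\gamma$ has unequal multiplicities; Spoiler picks, in the vertex-selection step (which ranges over exactly $\gS_G(a,b)$ resp.\ $\gS_H(c,d)$), the set of all $w$ on the richer side whose pair $(\chi^{(t)}(a,w),\chi^{(t)}(w,b))$ equals $\gamma$; any equal-size response of Duplicator from the other side contains a $w^\star$ whose pair differs from $\gamma$, and selecting $w^\star$ forces two vertices $x\in\gV_G,\,x'\in\gV_H$ with $(\chi_G^{(t)}(a,x),\chi_G^{(t)}(x,b))\neq(\chi_H^{(t)}(c,x'),\chi_H^{(t)}(x',d))$. Spoiler uses his freedom to move \emph{whichever} of $u,v$ realizes the coordinate that differs — moving pebble $v$ if the first coordinates differ, pebble $u$ if the second do — reaching a configuration with unequal iteration-$t$ colors, and the induction hypothesis supplies the round count. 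Conversely, if $\chi_G^{(t+1)}(a,b)=\chi_H^{(t+1)}(c,d)$ then the walk-multisets agree, so to any subset Spoiler picks Duplicator answers with one having the identical multiset of color-pairs, and can then match Spoiler's selected vertex by one with the same color-pair; whether Spoiler moves $u$ or $v$, the new configuration has equal iteration-$t$ colors, so the induction hypothesis denies Spoiler a win within $t$ more rounds — and not at the current round either, since equal iteration-$t$ colors force equal isomorphism types (the partitions induced by $\chi^{(t)}$ refine as $t$ grows, so they all refine $\chi^{(0)}$); this refinement monotonicity also makes ``win within $t$ rounds'' upward-monotone in $t$.

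Finally I would lift the lemma to the whole game through the joint initialization move. Since $\mathsf{FWL}$-type algorithms output $c(G)=\hash(\ldblbrace\chi_G(u,v):u,v\in\gV_G\rdblbrace)$, a single flat multiset over ordered pairs, the single joint selection over $\gV_G\times\gV_G$ resp.\ $\gV_H\times\gV_H$ is the right initialization, and the same counting argument one level up (on pairs rather than single vertices) shows: for each fixed $t$, Spoiler can force a post-initialization configuration with unequal iteration-$t$ colors iff $\ldblbrace\chi_G^{(t)}(u,v)\rdblbrace\neq\ldblbrace\chi_H^{(t)}(u,v)\rdblbrace$. Combined with the key lemma, Spoiler can win iff such a $t$ exists; and since the colorings stabilize within $t\le\max(|\gV_G|,|\gV_H|)^2$ iterations and earlier partitions are coarser, this is equivalent to the stable multisets differing, i.e.\ to $c(G)\neq c(H)$. (In the converse direction, when $c(G)=c(H)$ Duplicator responds to the initialization so that the configuration has equal \emph{stable} colors, hence equal iteration-$t$ colors for all $t$, and the lemma shows Spoiler never wins.)

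The main obstacle is the careful bookkeeping in the inductive step: tracking which graph supplies Spoiler's set and which Duplicator's inside the vertex-selection mechanic, and — the genuinely new point compared with \cref{thm:swl_pebble_game} — correctly exploiting that each $\mathsf{FWL}$ walk-entry is a \emph{pair} $(\chi(u,w),\chi(w,v))$. It is precisely this pairing that forces the game to grant Spoiler the choice of moving $u$ \emph{or} $v$ each round, whereas in the SWL game a local aggregation contributes a single color $\chi(u,w)$ and only pebble $v$ moves. Once this is set up, the rest is a routine adaptation of the Cai--Fürer--Immerman-style argument already carried out for the SWL games.
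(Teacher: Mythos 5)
Your proof is correct and follows essentially the same route as the paper's: an induction on the number of remaining rounds, where the inductive step unfolds one $\mathsf{FWL}$-type iteration into a vertex-selection move and uses the fact that walk-entries are \emph{pairs} $(\chi(u,w),\chi(w,v))$ to justify Spoiler's freedom to advance either pebble, then a separate initialization/pooling lemma lifting the configuration-level claim to the flat multiset $c(G)$ (cf.\ the paper's \cref{thm:fwl_and_uvpebble,thm:pooling_pebble}). The only cosmetic differences are that you phrase the configuration lemma as a clean $t$-indexed iff (exploiting that FWL initial colors are isomorphism types, so no $l$ vs.\ $l{+}1$ offset is needed, unlike the SWL lemmas), and you aggregate over $t$ at the end rather than working directly with stable colors; both are equivalent.
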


\cref{thm:swl_pebble_game,thm:fwl_pebble_game} build an interesting connection between WL algorithms and games. Importantly, the game viewpoint offers us a much clearer picture to sort out various complex aggregation/pooling paradigms and leads to the main result of this paper in the next section.

\section{Strict Separation Results}
\label{sec:separation_results}

Up to now, all results derived in this paper are of the form ``$\mathsf{A_1}\preceq\mathsf{A_2}$''. In this section, we will complete the analysis by proving that all relations $\preceq$ in \cref{thm:swl_hierarchy,thm:2lfwl} are actually the strict relations $\prec$. Formally, we will prove:

\begin{theorem}
\label{thm:separation}
    The following hold:
    \begin{itemize}[topsep=0pt]
    \raggedright
    \setlength{\itemsep}{0pt}
        \item $\mathsf{SWL}(\mathsf{VS})\prec\mathsf{SWL}(\mathsf{SV})$, $\mathsf{PSWL}(\mathsf{VS})\prec\mathsf{PSWL}(\mathsf{SV})$;
        \item $\mathsf{SWL}(\mathsf{VS})\prec\mathsf{PSWL}(\mathsf{VS})$, $\mathsf{SWL}(\mathsf{SV})\prec\mathsf{PSWL}(\mathsf{SV})$;
        \item $\mathsf{PSWL}(\mathsf{SV})\prec\mathsf{GSWL}\prec\mathsf{SSWL}$;
        \item $\mathsf{PSWL(VS)}\prec\mathsf{LFWL(2)}$, $\mathsf{SSWL}\prec\mathsf{SLFWL(2)}$;
        \item $\mathsf{LFWL(2)}\prec\mathsf{SLFWL(2)}\prec\mathsf{FWL(2)}$;
        \item $\mathsf{SWL}(\mathsf{SV})\nsim\mathsf{PSWL}(\mathsf{VS})$;
        \item $\mathsf{LFWL(2)}\nsim \mathsf{SWL(SV)}$, $\mathsf{LFWL(2)}\nsim \mathsf{PSWL(SV)}$, $\mathsf{LFWL(2)}\nsim \mathsf{GSWL}$, $\mathsf{LFWL(2)}\nsim \mathsf{SSWL}$.
    \end{itemize}
\end{theorem}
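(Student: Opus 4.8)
The plan is to derive every strict relation from the pebbling‑game characterizations in \cref{thm:swl_pebble_game,thm:fwl_pebble_game}. Since all the $\preceq$ containments are already in hand (\cref{thm:swl_hierarchy,thm:2lfwl}), it suffices to produce, for each listed pair $\mathsf{A}_1\preceq\mathsf{A}_2$, a pair of connected graphs $(G,H)$ witnessing $c_{\mathsf{A}_2}(G)\neq c_{\mathsf{A}_2}(H)$ while $c_{\mathsf{A}_1}(G)=c_{\mathsf{A}_1}(H)$; for each of the four incomparability claims we additionally supply a second pair witnessing the reverse non‑containment. Concretely, ``$\mathsf{A}_2$ distinguishes $G,H$'' will be shown by exhibiting an explicit short winning strategy for Spoiler in the $\mathsf{A}_2$‑game — typically a handful of pebble moves that force the two pebbles onto a configuration whose induced subgraph differs between $G$ and $H$ — and ``$\mathsf{A}_1$ fails'' will be shown by exhibiting a Duplicator strategy for the $\mathsf{A}_1$‑game, presented as a family of partial isomorphisms of the pebbled configurations that is closed under exactly those moves $\mathsf{A}_1$ permits. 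Because passing from a stronger to a weaker algorithm in our hierarchy removes precisely one move type (the transpose $\agg^\mathsf{P}_\mathsf{vu}$, a global step, the $\mathsf{u}$‑side local step, the diagonal single‑point step, a swap of the initial pooling order, or — for the FWL family — the freedom to move either pebble, or the size of the neighbourhood being scanned), the Duplicator analysis reduces to verifying that the removed move is the unique one Spoiler would have needed.

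I would organize the counterexamples by layer. For the pooling separations $\mathsf{SWL(VS)}\prec\mathsf{SWL(SV)}$, $\mathsf{PSWL(VS)}\prec\mathsf{PSWL(SV)}$ and the incomparability $\mathsf{SWL(SV)}\nsim\mathsf{PSWL(VS)}$ I would look for small graphs in which committing pebble $v$ first (as $\mathsf{SV}$ does) lets Spoiler pin a distinguishing vertex he could not reach while committing $u$ first, and conversely a pair on which the diagonal colour $\chi(v,v)$ carries the only distinguishing information; decorated cycles/paths with their near‑twins, or small CFI gadgets, are the natural sources. For $\mathsf{SWL}\prec\mathsf{PSWL}$ (both pooling modes), $\mathsf{PSWL(SV)}\prec\mathsf{GSWL}$ and $\mathsf{GSWL}\prec\mathsf{SSWL}$ I would use graphs on which, respectively, the single‑point move $\agg^\mathsf{P}_\mathsf{vv}$, the global move $\agg^\mathsf{G}_\mathsf{v}$, and the second local move $\agg^\mathsf{L}_\mathsf{v}$ is provably indispensable — e.g. a pair where a global multiset of colours differs but no purely local refinement ever detects it, and a pair where only the second pebble's local aggregation transports distance/biconnectivity information ``the other way''; standard WL‑indistinguishable families (circular skip‑link graphs, CFI constructions over small base graphs) are good candidates once the node‑marking refinement is worked out by hand. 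For the rung involving $\mathsf{LFWL(2)}$ — namely $\mathsf{PSWL(VS)}\prec\mathsf{LFWL(2)}$ and the four incomparabilities $\mathsf{LFWL(2)}\nsim\mathsf{SWL(SV)},\mathsf{PSWL(SV)},\mathsf{GSWL},\mathsf{SSWL}$ — I would exploit the asymmetry that $\mathsf{LFWL(2)}$'s walk step scans only $\gN_G^1(v)$ while moving either pebble: on one side a pair distinguished by this genuine two‑pebble walk that no one‑pebble SWL aggregation sees, and on the other side a pair where the distinguishing information lives in $\gN_G^1(u)\setminus\gN_G^1(v)$ or is only exposed by $\mathsf{SV}$ pooling, so that $\mathsf{LFWL(2)}$ is blind to it.

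Finally, for the FWL layer $\mathsf{LFWL(2)}\prec\mathsf{SLFWL(2)}\prec\mathsf{FWL(2)}$ and the tight separation $\mathsf{SSWL}\prec\mathsf{SLFWL(2)}$ I would tailor CFI‑type graphs to the restricted neighbourhoods: for $\mathsf{LFWL(2)}\prec\mathsf{SLFWL(2)}$ a pair whose only separating walk vertex lies in $\gN_G^1(u)\setminus\gN_G^1(v)$; for $\mathsf{SLFWL(2)}\prec\mathsf{FWL(2)}$ a ``spread out'' CFI pair whose separating walk vertex is adjacent to neither pebble; and for $\mathsf{SSWL}\prec\mathsf{SLFWL(2)}$ a pair on which Duplicator survives every $\mathsf{SSWL}$ move (both local steps, the transpose, and either pooling order) yet loses as soon as Spoiler is granted the honest two‑sided walk. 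The \emph{main obstacle}, where I expect most of the technical effort to concentrate, is exactly this last separation together with the $\mathsf{LFWL(2)}\nsim\mathsf{SSWL}$ (and $\mathsf{GSWL}$, $\mathsf{PSWL(SV)}$) direction: since $\mathsf{SSWL}$ already equals localized $2$‑WL (\cref{thm:2lfwl}), the witness must be a genuinely subtle CFI‑style construction, and the Duplicator argument requires maintaining a single delicate invariant on pebbled configurations that is simultaneously stable under $\agg^\mathsf{L}_\mathsf{u}$, $\agg^\mathsf{L}_\mathsf{v}$, $\agg^\mathsf{P}_\mathsf{vu}$ and the pooling step — essentially the folklore‑versus‑oblivious dichotomy of \citet{cai1992optimal} transported to the subgraph setting. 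To keep the game analyses uniform I would, wherever possible, reuse a single parameterized family of CFI gadgets over carefully chosen base graphs to cover several separations at once, and in every case verify that both graphs of the constructed pair are connected, as assumed throughout the paper.
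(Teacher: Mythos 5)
Your plan is to combine the pebbling-game characterizations with CFI/F\"urer-style constructions, and to exhibit, for each pair, a short Spoiler win under the stronger algorithm and an explicit Duplicator strategy (``a family of partial isomorphisms closed under the permitted moves'') under the weaker one. That is indeed the same approach the paper takes — the paper's counterexamples are precisely generalized F\"urer graphs $G(F)$ vs.\ $\twist(G(F),\{x,y\})$ built over carefully chosen base graphs $F$.

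However, the crucial device that makes this approach actually carryable is missing from your outline, and without it the plan stalls exactly where you predict most of the difficulty to be. The paper does not reason about the pebbling game directly on the CFI pair (which has size roughly $\sum_x 2^{\deg(x)-1}$ plus auxiliary chains). Instead, it first proves a detailed structure theorem on proper isomorphisms of twisted F\"urer graphs (\cref{thm:furer_main,thm:furer_main_corollary}): given prescribed values on any set of pebbled vertices, a proper isomorphism from $\tilde G(F)$ to $\twist(\tilde G(F),\tilde\gE)$ exists iff $\tilde\gE$ has the correct parity within each connected component of $F\setminus\gS$ (where $\gS$ is the set of currently pebbled base vertices). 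This converts the entire game into a \emph{simplified pebbling game} played directly on the small base graph $F$: Duplicator's state collapses to a parity-stable \emph{set of connected components} $\gQ\subset\mathsf{CC}_\gS(F)$, updated mechanically as pebbles are placed, swapped, or removed; and Spoiler wins precisely when a singleton edge lands in $\gQ$. Each of your removed move types ($\agg^\mathsf{P}_\mathsf{vu}$, global, the second local step, the diagonal single-point, the pooling swap, and the FWL-style ``Duplicator must commit before seeing which pebble moves'') then becomes a concrete, visualizable constraint on which regions Spoiler can fence off — the ``surrounding capability'' that the paper varies. Without this reduction, tracking a ``family of partial isomorphisms'' on the full F\"urer graphs for each of the $\sim\!15$ required separations — especially for $\mathsf{SSWL}\prec\mathsf{SLFWL(2)}$ and the four $\mathsf{LFWL(2)}$-incomparabilities — is not realistically verifiable, and the remaining work, namely the choice of the $8$ base graphs in \cref{fig:counterexample_pswlsv,fig:counterexample_swlvs,fig:counterexample_gswl,fig:counterexample_lfwl_sswl,fig:counterexample_slfwl,fig:counterexample_slfwl_2,fig:counterexample_fwl,fig:counterexample_pooling} (which your proposal leaves as placeholders), is where almost all of the remaining cleverness sits: each base graph must be tuned so that exactly the extra move granted to $\mathsf{A}_2$ is both necessary and sufficient to surround a twisted edge.

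One further gap: to even set up the simplified game, the paper first attaches \emph{chains of distinct lengths} to each meta-vertex class (the augmented F\"urer graphs, \cref{def:aug_furer}) and proves that Duplicator is forced to match base vertices (\cref{thm:furer_pebble_match1}) and to match difference-sets within a meta-class (\cref{thm:furer_pebble_match2}); without these normalizations the reduction to a game on $F$ does not go through, and the plain CFI construction would not suffice. Your plan should incorporate this step explicitly or replace it with an equivalent mechanism for forcing base-vertex matching.
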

Due to space limitations, we can only present a brief proof sketch below, but we strongly encourage readers to browse the proof in \cref{sec:proof_separation}, where novel counterexamples for all these cases are constructed and analyzed using the pebbling game developed in \cref{sec:pebbling_game}. This is highly non-trivial and is a major technical contribution of this paper.

Our counterexamples are motivated by \citet{furer2001weisfeiler}. Given a base graph $F$, \citet{furer2001weisfeiler} gave a principled way to construct a pair of non-isomorphic but highly similar graphs $G(F)$ and $H(F)$ that cannot be distinguished by $k$-FWL. The key insight is that the difference between $G(F)$ and $H(F)$ is caused by a ``twist'' operation. (One can imagine the two graphs as a circle strip and its corresponding Möbius strip.) To distinguish the two graphs, Spoiler's only strategy is to fence out a twisted edge using his pebbles, similar to the strategy in Go. Yet, their analysis only applies to $k$-FWL algorithms. We considerably generalize F{\"u}rer's approach by noting that different SWL/FWL-type algorithms differ significantly in their ``surrounding'' capability in the pebbling game. Given two WL algorithms $\mathsf{A}_1,\mathsf{A}_2$ where we want to prove $\mathsf{A}_1\prec\mathsf{A}_2$, we can identify the extra surrounding capability of $\mathsf{A}_2$ and skillfully construct a base graph $F$ such that the extra power is \emph{necessary} to fence out a twisted edge. Here, the main challenge lies in constructing base graphs, which are given in \cref{fig:counterexample_pswlsv,fig:counterexample_swlvs,fig:counterexample_gswl,fig:counterexample_lfwl_sswl,fig:counterexample_slfwl,fig:counterexample_slfwl_2,fig:counterexample_fwl,fig:counterexample_pooling}. 

\begin{figure}[t]
    \centering
    \includegraphics[width=0.6\textwidth]{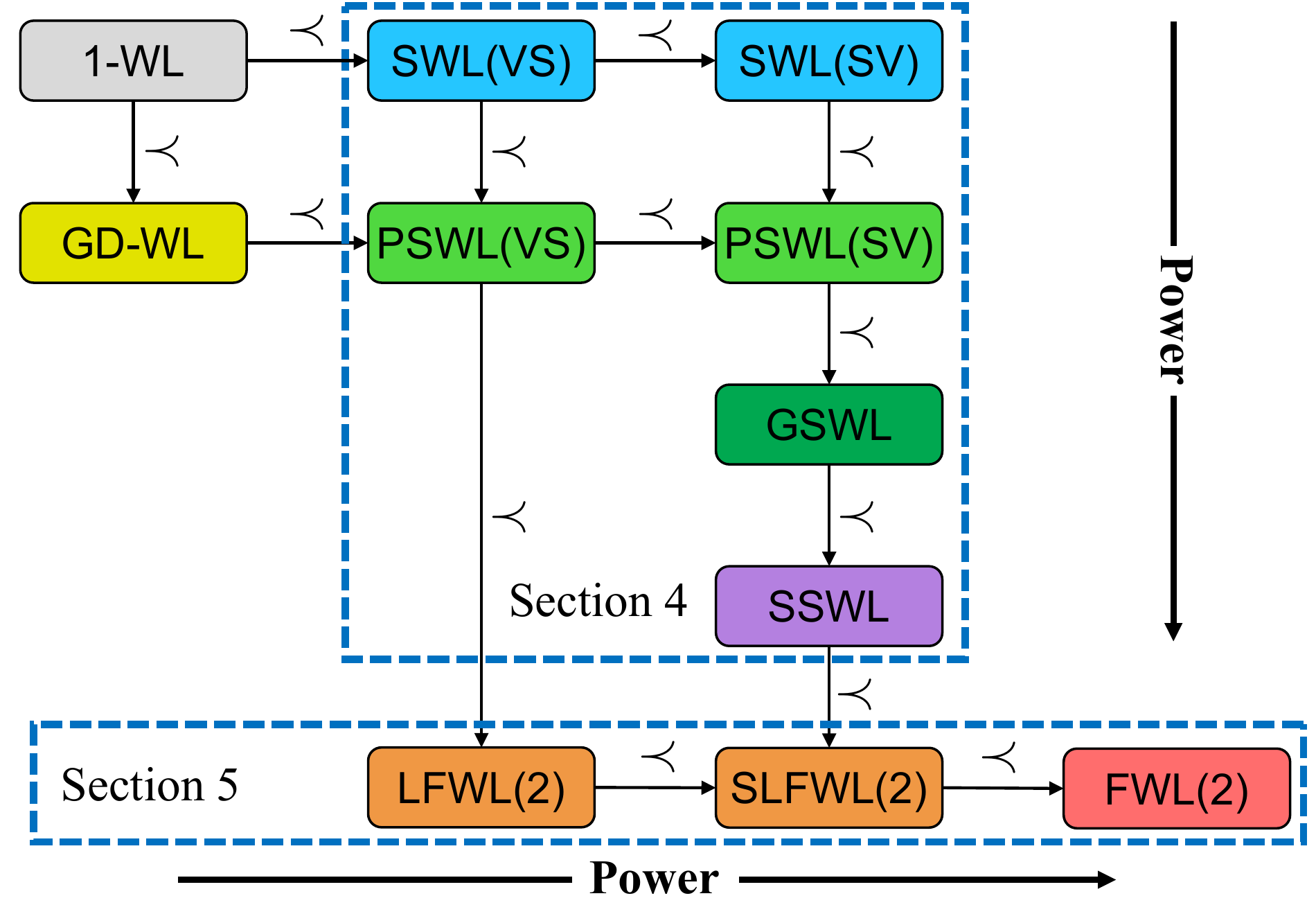}
    \caption{Expressiveness hierarchy of different WL algorithms.}
    \vspace{-5pt}
    \label{fig:hierarchy}
\end{figure}

In \cref{fig:hierarchy}, we give a clear illustration of the relationships between different SWL/FWL-type algorithms stated in \cref{thm:separation}, which forms a complete and elegant hierarchy. In the next section, we will give a detailed discussion of the significance of \cref{thm:separation} in the context of prior works.

\section{Discussions with prior works}
\label{sec:discussion}

The theoretical results in this paper can be directly used to analyze and compare the expressiveness of various subgraph GNNs in prior work. This is summarized in the following proposition:
\begin{proposition}
\label{thm:related_work_to_swl}
    Under the node marking policy, the following hold:
    \begin{itemize}[topsep=0pt]
    \setlength{\itemsep}{0pt}
        \item $\mathsf{ReconstructionGNN}$ \citep{cotta2021reconstruction}, $\mathsf{NGNN}$ \citep{zhang2021nested}, $\mathsf{IDGNN}$ \citep{you2021identity}, and $\mathsf{DS}\text{-}\mathsf{GNN}$ \citep{bevilacqua2022equivariant} are as expressive as  $\mathsf{SWL(VS)}$;
        \item $\mathsf{OSAN}$ \citep{qian2022ordered} is as expressive as $\mathsf{SWL(SV)}$;
        \item $\mathsf{GNN}\text{-}\mathsf{AK}$ \citep{zhao2022stars} is as expressive as $\mathsf{PSWL(VS)}$;
        \item $\mathsf{DSS}\text{-}\mathsf{GNN}$ (or $\mathsf{ESAN}$) \citep{bevilacqua2022equivariant}, $\mathsf{GNN}\text{-}\mathsf{AK}\text{-}\mathsf{ctx}$ \citep{zhao2022stars}, and $\mathsf{SUN}$ \citep{frasca2022understanding} are as expressive as $\mathsf{GSWL}$;
        \item $\mathsf{ReIGN(2)}$ \citep{frasca2022understanding} is as expressive as $\mathsf{SSWL}$.
    \end{itemize}
\end{proposition}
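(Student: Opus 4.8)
The plan is to reduce the statement to a dictionary lookup followed by the collapse already established in \cref{thm:aggregation,thm:symmetry,thm:pooling,thm:swl_hierarchy}. For each architecture in the list I would first consult its original definition and, after fixing the generation policy to node marking on the original graph as the statement prescribes, record two pieces of data: the set of atomic message-passing operations its layer uses, and which of the two readout paradigms ($\mathsf{VS}$ or $\mathsf{SV}$) it employs. In every case the layer is built either purely from the eight operations of \cref{def:layer}, or from those together with operations of the ``diagonal'' or ``composite'' type flagged in the footnote to \cref{def:layer} — the latter happening for the cross-subgraph (``siamese'') layers of $\mathsf{DSS}\text{-}\mathsf{GNN}$/$\mathsf{ESAN}$, for the contextual readout of $\mathsf{GNN}\text{-}\mathsf{AK}$, and for the full $39$-operation menu of $\mathsf{ReIGN(2)}$.

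Second, I would invoke \cref{thm:swl_and_subgraph_gnn}: for bounded-size graphs the family of subgraph GNNs built from a fixed aggregation scheme $\gA$ and pooling $\pool$ has expressive power exactly that of $\mathsf{A}(\gA,\pool)$, so ``as expressive as'' amounts to identifying $\mathsf{A}(\gA,\pool)$ in the hierarchy. For the architectures whose layers use operations outside \cref{def:layer}, I would first apply the result of \cref{sec:other_aggregation}, which states that adding such extra (diagonal, composite, conditional) operations does not raise expressive power beyond the SWL using the eight basic operations; this brings every case to a scheme $\gA\subseteq\{\agg^\mathsf{P}_{\mathsf{uu}},\agg^\mathsf{P}_{\mathsf{vv}},\agg^\mathsf{P}_{\mathsf{vu}},\agg^\mathsf{G}_{\mathsf{u}},\agg^\mathsf{G}_{\mathsf{v}},\agg^\mathsf{L}_{\mathsf{u}},\agg^\mathsf{L}_{\mathsf{v}}\}$ together with $\agg^\mathsf{P}_{\mathsf{uv}}$ and a pooling paradigm.

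The collapse is then mechanical. The vanilla models $\mathsf{ReconstructionGNN}$, $\mathsf{NGNN}$, $\mathsf{IDGNN}$, $\mathsf{DS}\text{-}\mathsf{GNN}$ use only $\agg^\mathsf{L}_{\mathsf{u}}$ with $\mathsf{VS}$ pooling, so they are $\mathsf{SWL(VS)}$ directly by \cref{def:node_marking_swl}; $\mathsf{OSAN}$ differs only in using $\mathsf{SV}$ pooling, hence $\mathsf{SWL(SV)}$. For $\mathsf{GNN}\text{-}\mathsf{AK}$ the scheme reduces to $\{\agg^\mathsf{L}_{\mathsf{u}},\agg^\mathsf{P}_{\mathsf{vv}}\}$ with $\mathsf{VS}$ pooling, i.e.\ $\mathsf{PSWL(VS)}$, once one checks that its $u{=}v$-conditional subgraph readout contributes only information already carried on the diagonal. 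For $\mathsf{DSS}\text{-}\mathsf{GNN}$, $\mathsf{GNN}\text{-}\mathsf{AK}\text{-}\mathsf{ctx}$ and $\mathsf{SUN}$ the scheme, after the \cref{sec:other_aggregation} reduction, contains $\agg^\mathsf{L}_{\mathsf{u}}$ and a cross-subgraph aggregation; using \cref{thm:aggregation} to absorb $\agg^\mathsf{G}_{\mathsf{u}}$ into $\agg^\mathsf{L}_{\mathsf{u}}$ and $\agg^\mathsf{P}_{\mathsf{uu}},\agg^\mathsf{P}_{\mathsf{vv}}$ into the global aggregations (and \cref{thm:symmetry} for the $\mathsf{v}$-side versions), every such scheme is equivalent to $\{\agg^\mathsf{L}_{\mathsf{u}},\agg^\mathsf{G}_{\mathsf{v}}\}$, and since $\agg^\mathsf{G}_{\mathsf{v}}\in\gA$ the two poolings coincide by \cref{thm:pooling}; hence all three equal $\mathsf{GSWL}$. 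Finally $\mathsf{ReIGN(2)}$, which after the reduction has all eight basic operations available, is $\mathsf{A}(\gA,\pool)$ with $\{\agg^\mathsf{L}_{\mathsf{u}},\agg^\mathsf{L}_{\mathsf{v}}\}\subseteq\gA$, so it equals $\mathsf{SSWL}$ by \cref{thm:swl_hierarchy}.

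The main obstacle I anticipate is the second step: faithfully matching each paper's layer — especially the context/siamese aggregations of $\mathsf{ESAN}$, the conditional readout of $\mathsf{GNN}\text{-}\mathsf{AK}$, and the $39$-operation list of $\mathsf{ReIGN(2)}$ — to the operations of \cref{def:layer,def:swl}, and carefully invoking \cref{sec:other_aggregation} to certify that the operations lying outside \cref{def:layer} (diagonal sums such as $\sum_{w}h(w,w)$, composite two-hop sums, and $u{=}v$-conditional readouts) are genuinely redundant. Once that correspondence is pinned down, the rest is the purely formal collapse above and requires no new ideas.
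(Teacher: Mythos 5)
Your proposal is correct and follows essentially the same route as the paper: the paper's own proof classifies the vanilla architectures directly by matching them to \cref{def:layer} and invoking \cref{thm:swl_hierarchy}, and defers the five architectures with exotic aggregations ($\mathsf{GNN}\text{-}\mathsf{AK}$, $\mathsf{GNN}\text{-}\mathsf{AK}\text{-}\mathsf{ctx}$, $\mathsf{ESAN}$, $\mathsf{SUN}$, $\mathsf{ReIGN(2)}$) to the case-by-case analysis in \cref{sec:other_aggregation}, which is exactly the split you identify. Your only slight imprecision is treating the \cref{sec:other_aggregation} reductions as a black box rather than checking them (e.g.\ the $\mathsf{DSS}$ composite sum $\sum_{w\in\gV_G}\sum_{w'\in\gN_G(v)}h(w,w')$ needs the argument of \cref{thm:local_lemma} / \cref{remark:finer}(c), not just $\agg^\mathsf{G}_\mathsf{v}$-absorption), but you correctly flag this as the step where the real work lies.
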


\begin{proof}
    The proof of $\mathsf{ReconstructionGNN}$, $\mathsf{NGNN}$, $\mathsf{IDGNN}$, $\mathsf{DS}\text{-}\mathsf{GNN}$, and $\mathsf{OSAN}$ follows by directly using \cref{thm:swl_hierarchy} since these subgraph GNNs fit our framework of \cref{def:layer}.  For other architectures such as $\mathsf{ESAN}$, $\mathsf{GNN}\text{-}\mathsf{AK}$, $\mathsf{GNN}\text{-}\mathsf{AK}\text{-}\mathsf{ctx}$, $\mathsf{SUN}$, and $\mathsf{ReIGN(2)}$, the proof can be found in \cref{sec:other_aggregation} (which is more involved as they use other atomic aggregations beyond \cref{def:layer}).
\end{proof}

\textbf{Regarding open problems in prior works.} Below, we show how our results can be used to settle a series of open problems raised before.
\begin{itemize}[topsep=0pt]
\setlength{\itemsep}{0pt}
    \item In \citet{bevilacqua2022equivariant}, the authors proposed two variants of WL algorithms, the $\mathsf{DS}\text{-}\mathsf{WL}$ and the $\mathsf{DSS}\text{-}\mathsf{WL}$. They conjectured that the latter is strictly more powerful than the former due to the introduced cross-graph aggregation. Very recently, \citet{zhang2023rethinking} gave the first evidence to this conjecture by proving that $\mathsf{DSS}\text{-}\mathsf{WL}$ can distinguish cut vertices using node colors while $\mathsf{DS}\text{-}\mathsf{WL}$ cannot. However, since identifying cut vertices is a \emph{node-level} task, it remains an open question when considering the standard \emph{graph-level} expressiveness, in particular, the task of distinguishing non-isomorphic graphs. Our result fully addressed the open question by showing that $\mathsf{DSS}\text{-}\mathsf{WL}$ is indeed strictly more powerful than $\mathsf{DS}\text{-}\mathsf{WL}$ in distinguishing non-isomorphic graphs.
    \item In \citet{zhao2022stars}, the authors proposed two GNN architectures: $\mathsf{GNN}\text{-}\mathsf{AK}$ and its extension $\mathsf{GNN}\text{-}\mathsf{AK}\text{-}\mathsf{ctx}$. $\mathsf{GNN}\text{-}\mathsf{AK}$ incorporates the so-called \emph{centroid encoding} and $\mathsf{GNN}\text{-}\mathsf{AK}\text{-}\mathsf{ctx}$ further incorporates the \emph{contextual encoding}. While the authors empirically showed the effectiveness of these encodings and found that $\mathsf{GNN}\text{-}\mathsf{AK}\text{-}\mathsf{ctx}$ can achieve much better performance on real-world tasks, they did not give a theoretical justification. Here, our result provides deep insights into the two models by indicating that $(\mathrm{i})$ with centroid encoding, $\mathsf{GNN}\text{-}\mathsf{AK}$ is strictly more powerful than vanilla subgraph GNNs; $(\mathrm{ii})$ with contextual encoding, $\mathsf{GNN}\text{-}\mathsf{AK}\text{-}\mathsf{ctx}$ is strictly more powerful than $\mathsf{GNN}\text{-}\mathsf{AK}$.
    \item Recently, \citet{qian2022ordered} proposed two classes of subgraph GNNs, the original $\mathsf{OSAN}$ and the vertex-subgraph $\mathsf{OSAN}$, which differ only in the final pooling paradigm. However, the authors did not discuss the relationship between the two types of architectures. Indeed, one may naturally guess that they have the same expressive power given the same GNN backbone. However, our result highlights that it is not the case: the original 1-$\mathsf{OSAN}$ is strictly more powerful than vertex-subgraph 1-$\mathsf{OSAN}$.
    \item Recently, \citet{frasca2022understanding} proposed a theoretically-inspired model called $\mathsf{ReIGN(2)}$, as well as a practical version called $\mathsf{SUN}$ that unifies prior node-based subgraph GNNs. The authors conjectured that these models are more powerful than prior architectures and may even match the power of 2-FWL. It is formally left as an important open problem to study the expressiveness lower bound of $\mathsf{ReIGN(2)}$ and $\mathsf{SUN}$ \citep[Appendix E]{frasca2022understanding}. In this paper, we fully settle the open problem by showing that: $(\mathrm{i})$ $\mathsf{ReIGN(2)}$ is indeed the strongest subgraph GNN model and is strictly more powerful than prior models; $(\mathrm{ii})$ However, $\mathsf{SUN}$ is just as powerful as the simpler $\mathsf{ESAN}$ although it incorporates many extra equivariant aggregation operations; $(\mathrm{iii})$ $\mathsf{ReIGN(2)}$ \emph{does not} achieve the 2-FWL expressiveness. Moreover, we point out an inherent gap between $\mathsf{ReIGN(2)}$ and 2-FWL, showing that $\mathsf{ReIGN(2)}$ even does not match $\mathsf{SLFWL(2)}$, a much weaker WL algorithm with the same complexity as $\mathsf{ReIGN(2)}$.
\end{itemize}

Finally, we note that \citet{frasca2022understanding} mentioned two basic atomic aggregations that are not included in prior subgraph GNNs: $\agg^\mathsf{L}_\mathsf{v}$ and $\agg^\mathsf{P}_\mathsf{vu}$ (see \cref{def:swl}). In this paper, we highlight that they are actually fundamental: incorporating either of them into the subgraph GNN layer can essentially improve the model's expressiveness.

\textbf{Discussions with \citet{morris2020weisfeiler}.} Our results also reveal a surprising relationship between the work of \citet{morris2020weisfeiler} and subgraph GNNs. In \citet{morris2020weisfeiler}, the authors proposed the so-called $\delta$-2-LWL, which can be seen as the symmetrized version of local 2-WL test. The update formula of $\delta$-2-LWL is written as follows:
\begin{equation}
\label{eq:delta-lwl}
    \chi_G^{(t+1)}(u,v)=\hash\left(\chi_G^{(t)}(u,v),\ldblbrace\chi_G^{(t)}(u,w):w\in\gN_G(v)\rdblbrace,\ldblbrace\chi_G^{(t)}(w,v):w\in\gN_G(u)\rdblbrace\right).
\end{equation}
An interesting finding is that $\delta$-2-LWL shares great similarities with $\mathsf{SSWL}$ in the update formula. Actually, while the two algorithms differ in the initial color and the final pooling paradigm, we can prove that $\delta$-2-LWL is \emph{as powerful as} $\mathsf{SSWL}$. We thus obtain the following key results:
\begin{itemize}
    \item Subgraph GNNs are also bounded by $\delta$-2-LWL. Moreover, the strongest subgraph GNN, such as $\mathsf{ReIGN(2)}$, matches the power of $\delta$-2-LWL. This builds an interesting link between the works of \citet{frasca2022understanding} and \citet{morris2020weisfeiler}.
    \item There is a fundamental gap between localized 2-WL and localized 2-FWL, despite the fact that both algorithms have the same computation/memory complexity. Such a result is perhaps surprising: it strongly contrasts to the relation between standard WL and FWL algorithms, where algorithms with equal computational complexity (e.g., $k$-FWL and $(k+1)$-WL) always have the \emph{same} expressive power. 
\end{itemize}

\section{Discussions on Practical Expressiveness}
\label{sec:practical_expressiveness}
Up to now, we have obtained precise expressivity relations for all pairs of SWL/FWL-type algorithms in distinguishing non-isomorphic graphs. From a practical perspective, however, one may still wonder whether/how GNNs designed based on a theoretically stronger WL algorithm can be more powerful in solving \emph{practical} graph problems. Here, we give concrete evidence that the power of different SWL algorithms does vary in terms of computing fundamental graph properties. In particular, WL algorithms with expressiveness over $\mathsf{PSWL}$ are capable of encoding \emph{distance} and \emph{biconnectivity} of a graph, while weaker algorithms like the vanilla SWL are unable to fully encode any of them.

Our result is motivated by the recent study of \citet{zhang2023rethinking}, who proposed a new class of WL algorithms called the Generalized Distance WL (GD-WL). Given graph $G=(\gV_G,\gE_G)$, GD-WL maintains a color $\chi_G(v)$ for each node $v\in\gV_G$, and the node color is updated according to the following formula:
\begin{equation*}
    \chi_G^{(t+1)}(v):= \hash(\ldblbrace (d_G(u,v), \chi_G^{(t)}(u)):u\in \gV\rdblbrace),
\end{equation*}
where $d_G(u,v)$ is a generalized distance between $u$ and $v$. \citet{zhang2023rethinking} proved that, by incorporating both the \emph{shortest path distance} (SPD) and the \emph{resistance distance} (RD), i.e., setting $d_G(u,v)=(\dis_G(u,v),\disR_G(u,v))$, the resulting GD-WL is provably expressive for all types of biconnectivity metrics, such as identifying cut vertices, cut edges, or distinguishing non-isomorphic graphs with different block cut trees. Surprisingly, we find that the $\mathsf{PSWL}$ has intrinsically (implicitly) encoded another type of GD-WL defined as follows:

\begin{definition}[Hitting time distance]
\label{def:hitting_time_distance}
    Define $\disH_G(u,v)$ to be the hitting time distance (HTD) from node $u$ to $v$ in graph $G$, i.e., the average number of edges passed in a random walk starting from $u$ and reaching $v$ for the first time.
\end{definition}
\begin{theorem}
\label{thm:pswl_gdwl}
    Let $d_G(u,v)=(\dis_G(u,v),\disH_G(u,v))$. Then, $\mathsf{GD}\text{-}\mathsf{WL}\prec\mathsf{PSWL}(\mathsf{VS})$.
\end{theorem}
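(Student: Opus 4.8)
The plan is to prove the two halves of the statement $\mathsf{GD}\text{-}\mathsf{WL}\prec\mathsf{PSWL}(\mathsf{VS})$ separately: first the non-strict direction $\mathsf{GD}\text{-}\mathsf{WL}\preceq\mathsf{PSWL}(\mathsf{VS})$, and then strictness. For the non-strict direction, the key observation is that $\mathsf{PSWL}(\mathsf{VS})$ refines the colors of vertex pairs $(u,v)$, and by \cref{thm:node_marking_implies_distance} (cited in the excerpt) the stable color $\chi_G(u,v)$ already encodes $\dis_G(u,v)$. The new content is that it also encodes the hitting time distance $\disH_G(u,v)$. I would argue this by showing that the stable partition of $\mathsf{PSWL}(\mathsf{VS})$ is a fixed point for an operator whose stationary solution is exactly the hitting-time vector: concretely, I would recall that $\disH_G(u,v)$ satisfies the linear system $\disH_G(v,v)=0$ and $\disH_G(u,v)=1+\frac{1}{|\gN_G(u)|}\sum_{w\in\gN_G(u)}\disH_G(w,v)$ for $u\neq v$. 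The operations available in $\mathsf{PSWL}(\mathsf{VS})$ are $\agg^\mathsf{P}_{\mathsf{uv}}$, $\agg^\mathsf{L}_{\mathsf{u}}$, and $\agg^\mathsf{P}_{\mathsf{vv}}$; the aggregation $\agg^\mathsf{P}_{\mathsf{vv}}$ lets each pair $(u,v)$ detect whether $u=v$ (since node marking distinguishes the diagonal), and $\agg^\mathsf{L}_{\mathsf{u}}=\ldblbrace\chi(w,v):w\in\gN_{G}(u)\rdblbrace$ aggregates exactly over the neighbors of $u$ with the second coordinate fixed, which is precisely the averaging that appears in the hitting-time recursion. The standard way to make this rigorous is: if two pairs $(u_1,v_1)$ in $G$ and $(u_2,v_2)$ in $H$ receive the same stable $\mathsf{PSWL}(\mathsf{VS})$ color, then there is a color-preserving correspondence of their neighborhoods; by induction on the structure of the linear system (or by a contraction/uniqueness argument for the hitting-time system on a stable partition) their hitting times agree. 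Then the multiset $\ldblbrace(\dis_G(u,v),\disH_G(u,v)):u\in\gV_G\rdblbrace$ is a function of the multiset of stable colors $\ldblbrace\chi_G(u,v):u\in\gV_G\rdblbrace$ for each fixed $v$, hence a refinement of the $\mathsf{GD}\text{-}\mathsf{WL}$ color of $v$; applying the $\mathsf{VS}$ pooling gives $\mathsf{GD}\text{-}\mathsf{WL}(G)$ as a function of $\mathsf{PSWL}(\mathsf{VS})(G)$.

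Actually, a cleaner route to the non-strict direction is to run a direct simulation: show by induction on $t$ that there is a function $\phi_t$ with $\chi_{\mathsf{GD}}^{(t)}(v)=\phi_t(\ldblbrace\chi_{\mathsf{PSWL}}^{(\infty)}(u,v):u\in\gV_G\rdblbrace)$, using that the stable $\mathsf{PSWL}$ colors determine both $\dis_G(u,v)$ and $\disH_G(u,v)$. The only genuinely new lemma needed is that $\disH_G$ is determined by the stable $\mathsf{PSWL}(\mathsf{VS})$ partition; I would isolate this as an auxiliary lemma and prove it via the pebbling game of \cref{thm:swl_pebble_game}: if Spoiler cannot win the $\mathsf{PSWL}(\mathsf{VS})$ game starting from pebble positions $(u_1,v_1)$ and $(u_2,v_2)$, then the Duplicator's winning strategy induces a bijection-respecting coupling of random walks started at $u_1$ (resp. $u_2$) conditioned to reach $v_1$ (resp. $v_2$), from which equality of hitting times follows by comparing expected absorption times of two coupled Markov chains on a common quotient. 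This game-theoretic framing also sets up the strictness argument.

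For strictness, I would exhibit a pair of graphs $G,H$ that $\mathsf{GD}\text{-}\mathsf{WL}$ (with $d_G=(\dis_G,\disH_G)$) cannot distinguish but $\mathsf{PSWL}(\mathsf{VS})$ can. The natural source, following the paper's own methodology (Section~\ref{sec:separation_results}) and the counterexamples in \cref{fig:counterexample_pswlsv} etc., is a Fürer-type twisted pair $G(F),H(F)$ over a suitably chosen base graph $F$: pick $F$ so that $\mathsf{PSWL}(\mathsf{VS})$ can fence out the twisted edge in the pebbling game (this is already guaranteed by the separations proved for $\mathsf{PSWL}$), while verifying that the multiset of pairs $(\dis(u,v),\disH(u,v))$ is identical in $G$ and $H$ — both distances being determined by local walk structure that the twist does not disturb at the relevant scale. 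The main obstacle, and where I expect the real work to lie, is this strictness step: one must either directly compute the hitting-time and shortest-path distance multisets on the chosen Fürer pair and check they coincide (a delicate symmetry argument, since hitting time is asymmetric and sensitive to the twist), or reduce to a known GD-WL non-distinguishing instance from \citet{zhang2023rethinking} and check $\mathsf{PSWL}(\mathsf{VS})$ separates it. I would hedge by trying small explicit examples first (e.g.\ a pair of cospectral graphs with matching resistance and hitting-time distributions) before committing to the Fürer construction, and I would lean on \cref{thm:swl_pebble_game} to certify the positive side ($\mathsf{PSWL}(\mathsf{VS})$ distinguishes) rather than unrolling color refinement by hand.
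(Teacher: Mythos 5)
Your high-level plan (prove $\mathsf{GD}\text{-}\mathsf{WL}\preceq\mathsf{PSWL}(\mathsf{VS})$ by showing that the stable $\mathsf{PSWL}$ colors determine both $\dis$ and $\disH$, then obtain strictness from a F\"urer-type counterexample certified via the pebbling game) matches the paper's structure, and the strictness half is carried out essentially as you describe: the paper builds a base graph, shows in the simplified pebbling game that $\mathsf{SWL(VS)}\preceq\mathsf{PSWL(VS)}$ distinguishes the F\"urer pair, and verifies by direct computation that the $\mathsf{SPD/HTD/RD\text{-}WL}$ colorings fail to distinguish it. However, your justification of the key lemma — that stable $\mathsf{PSWL(VS)}$ colors encode hitting time — has a concrete indexing error that makes the argument as written collapse. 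You pair the recursion $\disH_G(u,v)=1+\tfrac{1}{|\gN_G(u)|}\sum_{w\in\gN_G(u)}\disH_G(w,v)$ with an aggregation you write as $\agg^\mathsf{L}_{\mathsf{u}}=\ldblbrace\chi(w,v):w\in\gN_{G}(u)\rdblbrace$. But per \cref{def:swl}, $\agg^\mathsf{L}_{\mathsf{u}}$ is $\ldblbrace\chi(u,w):w\in\gN_{G}(v)\rdblbrace$ — it ranges over neighbors of $v$ with the first coordinate pinned; the operation you wrote down is $\agg^\mathsf{L}_{\mathsf{v}}$, which $\mathsf{PSWL(VS)}$ does \emph{not} have. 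Consequently, what $\chi_G(u,v)$ can actually encode through the available aggregation is $\disH_G(v,u)$ (random walk started at $v$, hitting the marked root $u$), not $\disH_G(u,v)$ as your recursion suggests — and this is exactly the direction the pooling needs: with $\mathsf{VS}$ pooling and $\agg^\mathsf{P}_\mathsf{vv}$ supplying $\chi(v,v)$ as a vertex color, the inner multiset $\ldblbrace\chi(u,v):v\in\gV_G\rdblbrace$ determines $\ldblbrace\big((\dis_G(v,u),\disH_G(v,u)),\chi(v,v)\big):v\in\gV_G\rdblbrace$, which is precisely the $\mathsf{GD}\text{-}\mathsf{WL}$ update at $u$.

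Your alternative route via a pebbling-game coupling of random walks is a genuinely different idea, but it is left speculative and I doubt it closes easily: Duplicator's winning strategy gives a round-by-round isomorphism-respecting correspondence, not directly a Markov coupling with matched transition kernels, and hitting time is an expectation over an unconditioned walk rather than a conditional-law statement, so "coupled chains on a common quotient" would need substantial justification. The paper avoids all of this with a finitary induction (\cref{thm:hitting_time}): it shows that $\chi^{(t)}_G(u,v)=\chi^{(t)}_H(x,y)$ forces equality of the multisets $\ldblbrace\omega(Q):Q\in\gQ^t_G(v,u)\rdblbrace$ and $\ldblbrace\omega(Q):Q\in\gQ^t_H(y,x)\rdblbrace$, where $\omega(Q)$ records the internal degree sequence along a length-$t$ hitting path, and then recovers $\disH_G(v,u)$ from these multisets together with $\deg_G(v)$ (\cref{thm:hitting_time_main}). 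If you want to keep the operator-fixed-point or coupling framing, you would still need some such per-iteration combinatorial invariant to make the uniqueness of the hitting-time system on the color quotient rigorous; that invariant is exactly the missing content of your proposal.
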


Hitting time distance is closely related to resistance distance, in that $\disR_G(u,v)=(\disH_G(u,v)+\disH_G(v,u))/2|\gE_G|$ holds for any graph $G$ and nodes $u,v\in\gV_G$ \citep{chandra1996electrical}. In other words, RD can be seen as the symmetrized version of HTD (by ignoring the constant $1/|\gE_G|$). Moreover, we have the following theorem, showing that HTD-WL also resembles RD-WL in distinguishing vertex-biconnectivity:
\begin{theorem}
\label{thm:htd-wl}
    By setting $d_G=\disH_G$, the resulting HTD-WL is fully expressive for all vertex-biconnectivity metrics proposed in \citet{zhang2023rethinking}.
\end{theorem}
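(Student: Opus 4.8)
The plan is to deduce \cref{thm:htd-wl} from the corresponding result of \citet{zhang2023rethinking} for resistance distance, by showing that $\mathsf{HTD}\text{-}\mathsf{WL}$ is at least as expressive as $\mathsf{RD}\text{-}\mathsf{WL}$ on exactly the quantities that govern vertex-biconnectivity. The bridge is the identity $\disR_G(u,v)=\bigl(\disH_G(u,v)+\disH_G(v,u)\bigr)/2|\gE_G|$ quoted above, together with the one structural property of hitting time that actually drives biconnectivity: \emph{additivity along cut vertices}, namely $\disH_G(u,v)=\disH_G(u,c)+\disH_G(c,v)$ whenever every walk from $u$ to $v$ must pass through $c$. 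This is the hitting-time analogue of the series law for resistances that \citet{zhang2023rethinking} exploit, and it is immediate from the strong Markov property: the first visit to $v$ decomposes as the first visit to $c$ followed by an independent fresh run to $v$ started from $c$. Proving this additivity lemma is the first step.

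The second step is the converse — that inside a $2$-connected block the additivity relation holds for no interior vertex — since this is what lets one read the block-cut tree off the distance data. Here I would \emph{not} attempt any triangle-inequality argument, because $\disH_G$ is neither symmetric nor subadditive; instead I would pass to the commute time $C_G(u,v):=\disH_G(u,v)+\disH_G(v,u)=2|\gE_G|\cdot\disR_G(u,v)$, which is a constant multiple of a genuine metric and for which the desired converse is precisely \citet{zhang2023rethinking}'s resistance-distance statement. The remaining work is then to check that the stable $\mathsf{HTD}\text{-}\mathsf{WL}$ coloring is fine enough to recover commute times — equivalently, resistance distances up to the graph-level scalar $2|\gE_G|$ — between color classes. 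This I would obtain by the usual unrolling of color refinement: $\chi_G(v)$ encodes the multiset $\ldblbrace(\disH_G(u,v),\chi_G(u)):u\in\gV_G\rdblbrace$, and each color $\chi_G(u)$ appearing there in turn encodes $\disH_G(v',u)$ for every $v'$ in the color class of $v$, so the symmetrized quantity $\disH_G(u,v)+\disH_G(v,u)$ is determined at the level of color classes. With this, the stable $\mathsf{HTD}\text{-}\mathsf{WL}$ partition refines the stable $\mathsf{RD}\text{-}\mathsf{WL}$ partition (after adjoining the edge count $|\gE_G|$ as a graph-level feature, or simply reading the block-cut tree directly off the additivity relation without renormalizing), and \cref{thm:htd-wl} follows from \citet{zhang2023rethinking}.

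The step I expect to be the main obstacle is precisely this last recovery argument, because of the \emph{asymmetry} of $\disH_G$: the resistance-distance refinement is symmetric in its two arguments, whereas $\mathsf{HTD}\text{-}\mathsf{WL}$ only ever feeds ``hitting time \emph{into} $v$'' into the update of $v$, so one must verify that the one-sided information it uses still suffices to simulate the two-sided resistance-distance refinement on all biconnectivity-relevant quantities — i.e.\ that the unrolling sketched above genuinely closes up into a clean refinement statement rather than merely a heuristic, and that the scalar $2|\gE_G|$ can indeed be recovered or dispensed with. A secondary, more routine obstacle is confirming that each specific vertex-biconnectivity metric of \citet{zhang2023rethinking} — whether a given vertex is a cut vertex, whether two vertices lie in a common block, and recovery of the block-cut tree up to isomorphism — is expressible from the reconstructed block-cut-tree data; this should follow essentially verbatim from their analysis once the distance-level input has been matched.
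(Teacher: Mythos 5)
Your proposal takes a genuinely different route from the paper, and the route you chose has a real gap in it.

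The paper's own argument (given in the Remark following \cref{thm:pswl_gswl_variant}) is much shorter and more direct: it observes that $\disH_G$ satisfies exactly the structural property that drives \citet{zhang2023rethinking}'s vertex-biconnectivity proof for resistance distance — namely that $\disH_G(u,v)=\disH_G(u,w)+\disH_G(w,v)$ holds if and only if $w$ is a cut vertex separating $u$ and $v$ — and then simply reruns that proof with $\disH_G$ in place of $\disR_G$. It never passes through $\mathsf{RD}\text{-}\mathsf{WL}$ at all, and hence never needs a ``$\mathsf{HTD}\text{-}\mathsf{WL}$ refines $\mathsf{RD}\text{-}\mathsf{WL}$'' statement. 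You do flag the option of ``reading the block-cut tree directly off the additivity relation without renormalizing'' as a side remark — that is the paper's actual approach, and if you committed to it your first two steps (forward additivity via strong Markov; converse via reduction to the commute-time metric) would already give you essentially everything you need, modulo checking that \citet{zhang2023rethinking}'s proof only uses the iff property and not the symmetry of $\disR_G$.

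The route you actually propose — showing that the stable $\mathsf{HTD}\text{-}\mathsf{WL}$ partition refines the stable $\mathsf{RD}\text{-}\mathsf{WL}$ partition by ``unrolling'' the color-refinement fixed point — has a genuine gap, and it is precisely where you suspected. From $\chi_G(v)=\chi_H(v')$ you get a color-and-hitting-time preserving bijection $\sigma$ with $\disH_G(u,v)=\disH_H(\sigma(u),v')$ and $\chi_G(u)=\chi_H(\sigma(u))$ for all $u$. To simulate the $\mathsf{RD}\text{-}\mathsf{WL}$ update you would further need $\disH_G(v,u)=\disH_H(v',\sigma(u))$. But $\chi_G(u)=\chi_H(\sigma(u))$ only tells you that the \emph{multisets} $\ldblbrace(\disH_G(w,u),\chi_G(w)):w\rdblbrace$ and $\ldblbrace(\disH_H(z,\sigma(u)),\chi_H(z)):z\rdblbrace$ agree; it does not let you single out the term contributed by $v$ itself, since other vertices $w$ with $\chi_G(w)=\chi_G(v)$ may contribute different values of $\disH_G(w,u)$. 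So ``the symmetrized quantity is determined at the level of color classes'' does not follow from the unrolling as stated; you would need an additional, nontrivial argument (and would also have to handle the scalar $2|\gE_G|$). This is the step to either repair or — better — avoid entirely by following the paper's direct route.
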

The proofs of \cref{thm:pswl_gdwl,thm:htd-wl} are given in \cref{sec:proof_practical_expressiveness}. Combining the two theorems readily leads to the following corollary:
\begin{corollary}
    $\mathsf{PSWL}(\mathsf{VS})$ is fully expressive for both edge-biconnectivity and vertex-biconnectivity.
\end{corollary}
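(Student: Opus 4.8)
The plan is to derive the corollary purely by combining \cref{thm:pswl_gdwl,thm:htd-wl} with the edge-biconnectivity results of \citet{zhang2023rethinking}, using the elementary principle that ``full expressiveness for a biconnectivity metric'' is monotone under the power order: if $\mathsf{A}_1\preceq\mathsf{A}_2$ and $\mathsf{A}_1$ already determines the metric, then so does $\mathsf{A}_2$. First I would fix $d_G(u,v)=(\dis_G(u,v),\disH_G(u,v))$ and write $\mathsf{GD}\text{-}\mathsf{WL}$ for the corresponding algorithm. Since its first coordinate is the shortest path distance, this $\mathsf{GD}\text{-}\mathsf{WL}$ refines at least as finely as plain SPD-WL, which \citet{zhang2023rethinking} proved to be fully expressive for every edge-biconnectivity metric (cut edges, $2$-edge-connected components, bridge trees). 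Hence, by monotonicity, $\mathsf{GD}\text{-}\mathsf{WL}$ is fully expressive for edge-biconnectivity.

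Next I would handle vertex-biconnectivity. By \cref{thm:htd-wl}, plain HTD-WL, i.e.\ $\mathsf{GD}\text{-}\mathsf{WL}$ with $d_G=\disH_G$ alone, is fully expressive for all vertex-biconnectivity metrics of \citet{zhang2023rethinking}. The algorithm of the previous paragraph carries strictly more information (it tracks $\disH_G$ together with $\dis_G$), so it refines at least as finely and is therefore also fully expressive for vertex-biconnectivity. Combining, $\mathsf{GD}\text{-}\mathsf{WL}$ with $d_G=(\dis_G,\disH_G)$ is fully expressive for both edge- and vertex-biconnectivity. Finally, \cref{thm:pswl_gdwl} gives $\mathsf{GD}\text{-}\mathsf{WL}\prec\mathsf{PSWL}(\mathsf{VS})$ for exactly this $d_G$, in particular $\mathsf{GD}\text{-}\mathsf{WL}\preceq\mathsf{PSWL}(\mathsf{VS})$; one more application of monotonicity shows that $\mathsf{PSWL}(\mathsf{VS})$ is fully expressive for both edge- and vertex-biconnectivity, which is the claim.

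The point that needs care --- and the only real obstacle --- is that biconnectivity expressiveness is partly a \emph{node-level} notion (labelling cut vertices or cut edges so that equivalent nodes/edges receive equal colors and non-equivalent ones receive different colors), whereas the relations $\preceq$ and $\prec$ in this paper are stated at the \emph{graph level}. So the monotonicity steps really require the node-level form of the simulation behind \cref{thm:pswl_gdwl}: the stable $\mathsf{GD}\text{-}\mathsf{WL}$ node color $\chi_G(v)$ must be recoverable from the $\mathsf{PSWL}(\mathsf{VS})$ data localized at $v$ --- for instance from the multiset $\ldblbrace\chi_G(u,v):u\in\gV_G\rdblbrace$, equivalently from the vertex/subgraph representations formed during pooling. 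This node-level refinement is precisely what the proof of \cref{thm:pswl_gdwl} establishes; granted it, the argument above is routine and the corollary follows immediately.
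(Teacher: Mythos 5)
Your proof is correct and follows the same route the paper intends by ``combining the two theorems'': descending through $\mathsf{GD}\text{-}\mathsf{WL}$ with $d_G=(\dis_G,\disH_G)$ to $\mathsf{SPD}\text{-}\mathsf{WL}$ (for edge-biconnectivity, via the result of \citet{zhang2023rethinking}) and to $\mathsf{HTD}\text{-}\mathsf{WL}$ (for vertex-biconnectivity, via \cref{thm:htd-wl}), then lifting to $\mathsf{PSWL}(\mathsf{VS})$ by \cref{thm:pswl_gdwl}. Your caution about the graph-level versus node-level reading of $\preceq$ is exactly the right thing to check: the proof of \cref{thm:pswl_gdwl} in \cref{sec:proof_practical_expressiveness} does in fact establish the node-level refinement $\chi_G^\mathsf{P}(u,u)=\chi_H^\mathsf{P}(x,x)\implies\chi_G(u)=\chi_H(x)$, which is what the monotonicity steps need for the cut-vertex and cut-edge metrics.
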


On the other hand, we find that the vanilla SWL is unable to fully encode either SPD, HTD, or RD, as shown in the proposition below:

\begin{proposition}
    The following hold:
    \begin{itemize}[topsep=0pt]
    \setlength{\itemsep}{0pt}
        \item $\mathsf{SWL}(\mathsf{VS})\nsim\mathsf{SPD}\text{-}\mathsf{WL}$, $\mathsf{SWL}(\mathsf{SV})\nsim\mathsf{SPD}\text{-}\mathsf{WL}$;
        \item $\mathsf{SWL}(\mathsf{VS})\nsim\mathsf{HTD}\text{-}\mathsf{WL}$, $\mathsf{SWL}(\mathsf{SV})\nsim\mathsf{HTD}\text{-}\mathsf{WL}$;
        \item $\mathsf{SWL}(\mathsf{VS})\nsim\mathsf{RD}\text{-}\mathsf{WL}$,
    $\mathsf{SWL}(\mathsf{SV})\nsim\mathsf{RD}\text{-}\mathsf{WL}$.
    \end{itemize}
\end{proposition}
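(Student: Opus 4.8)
The plan is, for each of the six incomparability statements $\mathsf{A}_1\nsim\mathsf{A}_2$ (with $\mathsf{A}_1$ a vanilla SWL and $\mathsf{A}_2$ one of $\mathsf{SPD}\text{-}\mathsf{WL}$, $\mathsf{HTD}\text{-}\mathsf{WL}$, $\mathsf{RD}\text{-}\mathsf{WL}$), to exhibit two non-isomorphic graph pairs: one that $\mathsf{A}_1$ distinguishes but $\mathsf{A}_2$ does not, and one that $\mathsf{A}_2$ distinguishes but $\mathsf{A}_1$ does not. Since $\mathsf{SWL}(\mathsf{VS})\preceq\mathsf{SWL}(\mathsf{SV})$, it suffices to produce, per metric, one pair witnessing ``$\mathsf{metric}\text{-}\mathsf{WL}$ distinguishes, $\mathsf{SWL}(\mathsf{SV})$ does not'' (which then also covers $\mathsf{SWL}(\mathsf{VS})$) and one pair witnessing ``$\mathsf{SWL}(\mathsf{VS})$ distinguishes, $\mathsf{metric}\text{-}\mathsf{WL}$ does not'' (which then also covers $\mathsf{SWL}(\mathsf{SV})$). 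On the SWL side, (non-)distinguishability will be certified through the pebbling game of \cref{sec:pebbling_game}, i.e.\ by a winning Duplicator strategy for the negative cases and a short winning Spoiler strategy for the positive ones, using \cref{thm:swl_pebble_game}; on the metric side it will be checked by running the corresponding $1$-WL-type refinement on the weighted complete graph of distances, hitting times, or resistances.

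For the direction ``vanilla SWL distinguishes, metric-WL does not'', the guiding observation is that $\mathsf{SPD}\text{-}\mathsf{WL}$, $\mathsf{HTD}\text{-}\mathsf{WL}$, and $\mathsf{RD}\text{-}\mathsf{WL}$ retain only a single scalar relation per vertex pair and hence are blind to rooted local structure that node marking reveals. For $\mathsf{SPD}\text{-}\mathsf{WL}$ a clean witness is $G=K_{3,3}$ versus $H=C_3\square K_2$ (the triangular prism): both are $3$-regular and, from every vertex, have exactly three vertices at distance $1$ and two at distance $2$, so $\mathsf{SPD}\text{-}\mathsf{WL}$ assigns all vertices the same stable colour in both graphs and fails; but vanilla node-marking SWL, after individualizing a vertex $u$ and refining twice, detects that in the prism some neighbour of $u$ is adjacent to a second neighbour of $u$ while in $K_{3,3}$ the neighbourhood of $u$ is independent, which splits the ``adjacent'' pair-colour class in $H$ and makes the $\mathsf{VS}$-pooled representations differ. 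For $\mathsf{HTD}\text{-}\mathsf{WL}$ and $\mathsf{RD}\text{-}\mathsf{WL}$ the witness must additionally be homogeneous under the random-walk metric (these tests would otherwise separate $K_{3,3}$ from the prism, whose resistance/hitting-time profiles already differ), so I expect this direction to require a dedicated small construction: a graph on which hitting time / resistance is essentially a function of shortest-path distance, perturbed by a local edge-level modification that node marking detects but that leaves the averaged random-walk structure unchanged; the construction and its SWL-separation are verified via the pebbling game.

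For the converse direction ``metric-WL distinguishes, vanilla SWL does not'', I would use two kinds of pairs. For $\mathsf{HTD}\text{-}\mathsf{WL}$ and $\mathsf{RD}\text{-}\mathsf{WL}$, these tests are fully expressive for (vertex-)biconnectivity by \cref{thm:htd-wl} and \citet{zhang2023rethinking} while vanilla SWL is not, so one takes a pair of graphs with differing block--cut-tree structure on which both pooling variants of vanilla SWL fail (Duplicator-winning pebbling game). Such a pair automatically has the same multiset of pairwise shortest-path distances, because the stable SWL colour $\chi_G(u,v)$ already encodes $\dis_G(u,v)$ (cf.\ \cref{thm:node_marking_implies_distance}); consequently this particular pair does \emph{not} serve the $\mathsf{SPD}\text{-}\mathsf{WL}$ case. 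For $\mathsf{SPD}\text{-}\mathsf{WL}$ one therefore needs a genuinely different pair: two graphs with identical distance-profile multisets but distinct distance \emph{matrices}, arranged so that they are WL-indistinguishable after individualizing any single vertex (hence not separated by vanilla SWL with either pooling) while $\mathsf{SPD}\text{-}\mathsf{WL}$'s iterated refinement on the full distance matrix does separate them — this is possible precisely because $\chi_G(u,v)$ records $\dis_G(u,v)$ but not the distance between two vertices both different from $u$, so the pooled SWL representation can be strictly coarser than the $\mathsf{SPD}\text{-}\mathsf{WL}$ colouring.

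The main obstacle is the construction and verification of these witnesses, especially the ``metric-sighted, SWL-blind'' pairs: node marking already forces vanilla SWL to encode all shortest-path distances and all rooted ego-network structure, so defeating it while keeping the metric-WL variant alive requires a careful balance — the graphs must be locally WL-homogeneous after individualization yet still separable through a global distance correlation, a random-walk quantity, or a block-structure invariant that local message passing cannot reconstruct. I expect to discharge all SWL-side claims uniformly via the pebbling-game characterization and all metric-side claims by direct electrical-network / spectral computation, with the delicate part being to keep the designed base graphs small enough to analyze by hand while simultaneously forcing separations in both directions of all six statements.
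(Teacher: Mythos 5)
Your high-level reduction is the same as the paper's: exploit $\mathsf{SWL(VS)}\preceq\mathsf{SWL(SV)}$ so that a pair caught by $\mathsf{SWL(VS)}$ but missed by the metric tests handles both pooling variants in one direction, and a pair missed by $\mathsf{SWL(SV)}$ but caught by the metric tests handles the other. That part is right, and your $K_{3,3}$ vs.\ triangular-prism witness for the ``$\mathsf{SWL(VS)}$ distinguishes, $\mathsf{SPD}\text{-}\mathsf{WL}$ does not'' sub-case is correct (both are distance-regular $3$-regular graphs with distance profile $(3,2)$ at every vertex, yet one is triangle-free and the other is not, which node marking exposes).

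However, your proposal stops short of a proof for four of the six required separations. Concretely: (i) you have no construction for ``$\mathsf{SWL}$ distinguishes, $\mathsf{HTD}\text{-}\mathsf{WL}$/$\mathsf{RD}\text{-}\mathsf{WL}$ does not'' --- you correctly note that $K_{3,3}$ vs.\ the prism will not do (their resistance profiles differ), and then only describe what properties a replacement ``should'' have; (ii) you have no construction for ``$\mathsf{SPD}\text{-}\mathsf{WL}$ distinguishes, $\mathsf{SWL(SV)}$ does not'', again only a description of desiderata; (iii) for the $\mathsf{HTD}$/$\mathsf{RD}$ direction you appeal to a biconnectivity-differing pair, but you would still need to show that the \emph{stronger} $\mathsf{SWL(SV)}$ fails there, not just $\mathsf{SWL(VS)}$, and the cited fact from \citet{zhang2023rethinking} is only for $\mathsf{SWL(VS)}$-style node coloring; and (iv) you propose \emph{different} pairs for different metrics in the ``metric wins'' direction, which is allowed but multiplies the verification burden. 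The paper sidesteps all of this by constructing one (twisted) F{\"u}rer-graph pair per direction (\cref{fig:counterexample_gdwl,fig:counterexample_gdwl2}): the $\mathsf{SWL}$ side is certified via the \emph{simplified} pebbling game of \cref{sec:pebbling_furer} (which is much more tractable on F{\"u}rer graphs than the raw game of \cref{sec:pebbling_game} you propose to use), and the $\mathsf{SPD}$/$\mathsf{HTD}$/$\mathsf{RD}$ side is handled simultaneously by direct computation on a $20$--$26$ vertex pair. Without at least one explicit construction in each of the categories (i)--(iii), the proposal is a plan rather than a proof; those constructions are the genuinely hard content here, and the paper's F{\"u}rer-graph machinery is what makes them both findable and checkable.
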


Besides, \citet{zhang2023rethinking} has shown that the $\mathsf{SWL}(\mathsf{VS})$ cannot identify cut vertices of a graph. Therefore, incorporating extra aggregation operations in the vanilla SWL \emph{does} essentially improve its practical expressiveness in computing basic graph properties like distance and biconnectivity.

\textbf{A further discussions with \citet{zhang2023rethinking}}. In \citet{zhang2023rethinking}, the authors showed that most prior GNN models are not expressive for biconnectivity metrics except $\mathsf{ESAN}$ \citep{bevilacqua2022equivariant}, which corresponds to $\mathsf{GSWL}$ in our framework. Here, we unify, justify, and extend their results/findings in the following aspects:
\begin{itemize}[topsep=0pt]
\setlength{\itemsep}{0pt}
    \item We show $\mathsf{ESAN}$ can identify cut vertices mainly because it encodes the generalized distance. This provides deep insights into $\mathsf{ESAN}$ and complements the finding that $\mathsf{ESAN}$ can encode SPD. From this perspective, we obtain an alternative and unified proof for $\mathsf{ESAN}$ in distinguishing vertex-biconnectivity. Moreover, we also prove that $\mathsf{ESAN}$ can distinguish the block cut-vertex tree, a new result that was not originally proved in \citet{zhang2023rethinking}.
    \item We strongly justify the introduced generalized distance and $\mathsf{GD}\text{-}\mathsf{WL}$ as a fundamental class of color refinement algorithms, since the reason why $\mathsf{ESAN}$ and other SWL variants can encode biconnectivity metrics simply lies in the fact that it is more powerful than $\mathsf{GD}\text{-}\mathsf{WL}$.
    \item In contrast, we prove that the weaker $\mathsf{SWL(VS)}$ (or $\mathsf{SWL(SV)}$) is not more powerful than either $\mathsf{SPD}\text{-}\mathsf{WL}$ or $\mathsf{RD}\text{-}\mathsf{WL}$. This explains and complements the finding in \citet{zhang2023rethinking} on why $\mathsf{DS}\text{-}\mathsf{WL}$ cannot identify cut vertices. We also partially answered the question in \citet{zhang2023rethinking} for $\mathsf{OS}\text{-}\mathsf{WL}$ \citep{qian2022ordered}.
    \item We show adding the global aggregation in vanilla SWL (like $\mathsf{ESAN}$) is not the only way to make it expressive for biconnectivity metrics. In particular, simply adding a single-point aggregation ($\mathsf{PSWL}$) already suffices.
\end{itemize}

\begin{remark}
    We suspect that the $\mathsf{PSWL}$ can also encode the resistance distance, but currently we can only prove that the strongest $\mathsf{SSWL}$ can encode RD (\cref{sec:proof_practical_expressiveness}). We leave this as an open problem for future work.
\end{remark}

\section{Experiments}
\label{sec:experiments}
Our theory also provides clear guidance in designing simple, efficient, yet powerful subgraph GNN architectures. In particular, we find that all the previously proposed practical node-based subgraph GNNs are bounded by $\mathsf{GSWL}$ (\cref{thm:related_work_to_swl}), which does not attain the maximal power in the SWL hierarchy. Instead, we decide to adopt the elegant, $\mathsf{SSWL}$-based subgraph GNN design principle, resulting in only 3 atomic equivariant aggregation operations; yet the corresponding model, called $\mathsf{GNN}\text{-}\mathsf{SSWL}$, is strictly more powerful than all prior node-based subgraph GNNs. We also design an extension of $\mathsf{GNN}\text{-}\mathsf{SSWL}$, denoted as $\mathsf{GNN}\text{-}\mathsf{SSWL}\text{+}$, by further incorporating the single-point aggregation $\agg^\mathsf{P}_\mathsf{vv}$ motivated by \cref{sec:practical_expressiveness}. While this does not improve the model's expressivity in theory, we find that it can often achieve better performance in real-world tasks. In addition, motivated by \cref{thm:node_marking}, the graph generation policy for both $\mathsf{GNN}\text{-}\mathsf{SSWL}$ and $\mathsf{GNN}\text{-}\mathsf{SSWL}\text{+}$ is chosen as the distance encoding on the original graph (which is as expressive as node marking). A detailed description of model configuration and training hyper-parameters is given in \cref{sec:exp_details}. Our code will be released at \texttt{\href{https://github.com/subgraph23/SWL}{https://github.com/subgraph23/SWL}}.

\begin{table*}[h]
\centering
\caption{Performance comparison of different GNN architectures on the Counting Substructure benchmark. We report the Mean Absolute Error (MAE), and use different background colors to distinguish different levels of MAE.}
\setlength{\tabcolsep}{4pt}
\small
\label{tab:exp_substructure}
\vspace{2pt}
\begin{tabular}{ll|cccccc}
\Xhline{1pt}
Model      & Reference                       & Triangle & Tailed Tri. & Star & 4-Cycle & 5-Cycle & 6-Cycle \\ \Xhline{0.75pt}
PPGN	   & \citet{maron2019provably}       & \cellcolor{cyan!20} 0.0089 & \cellcolor{cyan!20} 0.0096 & \cellcolor{green!20} 0.0148 & \cellcolor{cyan!20} 0.0090 & \cellcolor{green!20} 0.0137 & \cellcolor{green!20} 0.0167 \\
GNN-AK     & \citet{zhao2022stars}           & \cellcolor{red!20} 0.0934 & \cellcolor{red!20} 0.0751 & \cellcolor{green!20} 0.0168 & \cellcolor{red!20} 0.0726 & \cellcolor{red!20} 0.1102 & \cellcolor{red!20} 0.1063 \\
GNN-AK+    & \citet{zhao2022stars}           & \cellcolor{green!20} 0.0123 & \cellcolor{green!20} 0.0112 & \cellcolor{green!20} 0.0150 & \cellcolor{green!20} 0.0126 & \cellcolor{orange!20} 0.0268 & \cellcolor{red!20} 0.0584 \\
SUN (EGO+) & \citet{frasca2022understanding} & \cellcolor{cyan!20} 0.0079 & \cellcolor{cyan!20} 0.0080 & \cellcolor{cyan!20} \textbf{0.0064} & \cellcolor{green!20} 0.0105 & \cellcolor{green!20} 0.0170 & \cellcolor{red!20} 0.0550 \\ \hline
GNN-SSWL   & This paper                      & \cellcolor{cyan!20} 0.0098 & \cellcolor{cyan!20} 0.0090 & \cellcolor{cyan!20} 0.0089 & \cellcolor{green!20} \cellcolor{green!20} 0.0107 & \cellcolor{green!20} 0.0142 & \cellcolor{green!20} 0.0189 \\
GNN-SSWL+  & This paper                      & \cellcolor{cyan!20} \textbf{0.0064} & \cellcolor{cyan!20} \textbf{0.0067} & \cellcolor{cyan!20} 0.0078 & \cellcolor{cyan!20} \textbf{0.0079} & \cellcolor{green!20} \textbf{0.0108} & \cellcolor{green!20} \textbf{0.0154} \\ \Xhline{1pt}
\end{tabular}
\end{table*}

\textbf{Performance on Counting Substructure Benchmark}. Following \citet{zhao2022stars,frasca2022understanding}, we first consider the synthetic task of counting substructures. The result is presented in \cref{tab:exp_substructure}. It can be seen that our proposed models can solve all tasks almost completely and performs better than all prior node-based subgraph GNNs on most substructures, such as triangle, tailed triangle, 4-cycle, 5-cycle, and 6-cycle. In particular, our proposed models significantly outperform $\mathsf{GNN}\text{-}\mathsf{AK}\text{+}$ and $\mathsf{SUN}$ for counting 6-cycles. We suspect that $\mathsf{GSWL}$ is not expressive for counting 6-cycle while $\mathsf{SSWL}$ is expressive for this task, which may highlight a fundamental advantage of $\mathsf{SSWL}$ in practical scenarios when the ability to count 6-cycle is needed \citep{huang2022boosting}.

\begin{table*}[h]
\centering
\caption{Performance comparison of different subgraph GNNs on ZINC benchmark. The Mean Absolute Error (MAE) and the standard deviation are reported. We also list the WL equivalence class and the number of parameters/atomic aggregations for each model.}
\setlength{\tabcolsep}{4pt}
\small
\label{tab:exp_zinc}
\vspace{2pt}
\begin{tabular}{ll|lcc|cc}
\Xhline{1pt}
\multirow{2}{*}{Model} & \multirow{2}{*}{Reference} & \multirow{2}{*}{WL} & \multirow{2}{*}{\begin{tabular}[c]{@{}c@{}}\#\\Param.\end{tabular}} & \multirow{2}{*}{\begin{tabular}[c]{@{}c@{}}\#\\Agg.\end{tabular}} & \multicolumn{2}{c}{ZINC Test MAE} \\ 
 &  &  &  &  & Subset & Full \\ \Xhline{0.75pt}
GSN         & \citet{bouritsas2022improving} & -              & $\sim$500k & - & 0.101±0.010 & - \\
CIN (small) & \citet{bodnar2021cellular} & -              & $\sim$100k & - & 0.094±0.004 & 0.044±0.003 \\ \hline
SAN         & \citet{kreuzer2021rethinking} & -               & 509k     & - & 0.139±0.006 & - \\
K-Subgraph SAT & \citet{chen2022structure} & -               & 523k        & - & 0.094±0.008 & - \\
Graphormer & \citet{ying2021transformers} & $\mathsf{SPD}\text{-}\mathsf{WL}$ & 489k & -    & 0.122±0.006 & 0.052±0.005\\
URPE       & \citet{luo2022your}          & $\mathsf{SPD}\text{-}\mathsf{WL}$  & 492k & -    & 0.086±0.007 & 0.028±0.002\\
Graphormer-GD & \citet{zhang2023rethinking} & $\mathsf{GD}\text{-}\mathsf{WL}$ & 503k & -    & 0.081±0.009 & 0.025±0.004\\
GPS        & \citet{rampasek2022recipe} & -                & 424k & -    & \textbf{0.070±0.004} & - \\ \hline
NGNN       & \citet{zhang2021nested} & $\mathsf{SWL(VS)}$  & $\sim$500k & 2    & 0.111±0.003 & 0.029±0.001  \\
GNN-AK     & \citet{zhao2022stars}   & $\mathsf{PSWL(VS)}$ & $\sim$500k & 4    & 0.105±0.010 & - \\
GNN-AK+    & \citet{zhao2022stars}   & $\mathsf{GSWL}$     & $\sim$500k & 5    & 0.091±0.002 & - \\
ESAN       & \citet{bevilacqua2022equivariant} &  $\mathsf{GSWL}$     & $\sim$100k & 4    & 0.102±0.003 & 0.029±0.003 \\
ESAN       & \citet{frasca2022understanding} & $\mathsf{GSWL}$    & 446k   & 4    & 0.097±0.006 & 0.025±0.003    \\
SUN        & \citet{frasca2022understanding} & $\mathsf{GSWL}$    & 526k   & 12   & 0.083±0.003 & 0.024±0.003   \\\hline
GNN-SSWL   & This paper              & $\mathsf{SSWL}$     & 274k & 3    & 0.082±0.003 & 0.026±0.001  \\
GNN-SSWL+  & This paper              & $\mathsf{SSWL}$     & 387k & 4    & \textbf{0.070±0.005} & \textbf{0.022±0.002} \\ \Xhline{1pt}

\end{tabular}
\end{table*}

\textbf{Performance on ZINC benchmark}. We then validate our proposed models on the ZINC molecular property prediction benchmark \citep{dwivedi2020benchmarking}, a standard and widely-used task in the GNN community. We consider both ZINC-subset (12K selected graphs) and ZINC-full (250k graphs) and comprehensively compare our models with three types of baselines: $(\mathrm{i})$ subgraph GNNs, $(\mathrm{ii})$ substructure-based GNNs including GSN \citep{bouritsas2022improving} and CIN \citep{bodnar2021cellular}, and $(\mathrm{iii})$ latest strong baselines based on Graph Transformers. In particular, Graphormer-GD \citep{zhang2023rethinking} and GPS \citep{rampasek2022recipe} are two representative Graph Transformers that achieve state-of-the-art performance on ZINC benchmark. 

The result is presented in \cref{tab:exp_zinc}. \emph{First}, it can be observed that our proposed $\mathsf{GNN}\text{-}\mathsf{SSWL}$ already matches/outperforms the performance of all subgraph GNN baselines while being much simpler. In particular, compared with state-of-the-art $\mathsf{SUN}$ architecture, $\mathsf{GNN}\text{-}\mathsf{SSWL}$ requires only a quarter of atomic aggregations in each GNN layer and roughly half of the parameters, yet matches the performance of $\mathsf{SUN}$ on ZINC-subset. \emph{Second}, by further incorporating $\agg^\mathsf{P}_\mathsf{vv}$, $\mathsf{GNN}\text{-}\mathsf{SSWL}\text{+}$ significantly surpasses all subgraph GNN baselines and achieves state-of-the-art performance on both tasks. Note that our training time is also significantly faster than Graphormer-GD and GPS. \emph{Finally}, an interesting finding is that the performance of different subgraph GNN architectures shown in \cref{tab:exp_zinc} roughly aligns with their theoretical expressivity in the SWL hierarchy. This may further justify that designing theoretically more powerful subgraph GNNs can benefit real-world tasks as well.

\textbf{Other tasks}. We also conduct experiments on the OGBG-molhiv dataset \citep{hu2020open}. Due to space limit, the result is presented in \cref{sec:ogbg}.

\section{Related Work}
\label{sec:related_work}

Since \citet{xu2019powerful,morris2019weisfeiler} discovered the limited expressiveness of vanilla MPNNs, a large amount of works have been devoted to developing GNNs with better expressive power. Here, we briefly review the literature on expressive GNNs that are most relevant to this paper.

\textbf{Higher-order GNNs}. \citet{maron2019provably,maron2019universality,azizian2021expressive,geerts2022expressiveness} theoretically studied the question of designing provably expressive equivariant GNNs that match the power of $k$-FWL test for $k>1$. In this way, they build a hierarchy of GNNs with strictly growing expressivity (similar to this paper). A representative higher-order GNN architecture is called the $k$-IGN \citep{maron2019universality}: it stores a feature representation for each node $k$-tuple and updates these features using higher-order equivariant layers developed in \citet{maron2019invariant}. Recently, \citet{frasca2022understanding} proved that all node-based subgraph GNNs can be implemented by 3-IGN, which then implies that subgraph GNNs' expressive power is intrinsically bounded by 2-FWL \citep{geerts2022expressiveness}.

\textbf{Sparsity-aware GNNs}. One major drawback of higher-order GNNs is that the architectural design does not well-exploit the graph structural information, since the graph adjacency is only encoded in the initial node features. In light of this, subsequent works like \citet{morris2020weisfeiler,morris2022speqnets,zhao2022practical} incorporated this inductive bias directly into the network layers and designed \emph{local} versions of higher-order GNNs. For example, \citet{morris2020weisfeiler} developed the so-called $\delta$-$k$-LWL, which can be seen as a localized version of $k$-WL. \citet{morris2022speqnets} proposed the $(k,s)$-SpeqNets by considering only $k$-tuples whose vertices can be grouped into no more than $s$ connected components. \citet{zhao2022practical} concurrently proposed the $(k,s)$-SETGNN which is similar to $(k,s)$-SpeqNets. In this paper, we propose a class of localized $k$-FWL, which shares interesting similarities to $\delta$-$k$-LWL. Our major contribution is to establish complete relations between localized 2-FWL, $\delta$-2-LWL, and subgraph GNNs. 

\textbf{Subgraph GNNs}. Subgraph GNNs are an emerging class of higher-order GNNs that compute a feature representation for each subgraph-node pair. The earliest idea of subgraph GNNs may track back to \citet{cotta2021reconstruction,papp2021dropgnn}, which proposed to use node-deleted subgraphs and performed message-passing on each subgraph separately without cross-graph interaction. \citet{papp2022theoretical} argued to use node marking instead of node deletion for better expressive power. \citet{zhang2021nested} proposed the Nested GNN ($\mathsf{NGNN}$), a variant of subgraph GNNs that use $k$-hop ego nets with distance encoding. It further added the global aggregation $\agg^\mathsf{G}_\mathsf{u}$ to merge all node information in a subgraph when computing the feature of the root node of a subgraph. \citet{you2021identity} designed the $\mathsf{ID}\text{-}\mathsf{GNN}$, which is similar to $\mathsf{NGNN}$ and also uses $k$-hop ego nets as subgraphs. \citet{bevilacqua2022equivariant} developed a principled class of subgraph GNNs, called $\mathsf{ESAN}$, which first introduced the cross-graph global aggregation into the network design. \citet{zhao2022stars} concurrently proposed the $\mathsf{GNN}\text{-}\mathsf{AK}$ and its extension $\mathsf{GNN}\text{-}\mathsf{AK}\text{-}\mathsf{ctx}$, which also includes the cross-graph global aggregation. Recently, \citet{frasca2022understanding,qian2022ordered} first provided theoretical analysis of various node-based subgraph GNNs by proving that they are intrinsically bounded by 2-FWL. We note that besides node-based subgraph GNNs, one can also develop edge-based subgraph GNNs, which have been explored in \citet{bevilacqua2022equivariant,huang2022boosting}. Both works showed that the expressive power of edge-based subgraph GNNs can go beyond 2-FWL. Finally, we note that \citet{vignac2020building} proposed a GNN architecture that is somewhat \emph{similar} to the vanilla subgraph GNN, and the $\delta$-2-LWL proposed in \citet{morris2020weisfeiler} can also be seen as a subgraph GNN according to \cref{sec:discussion}.

\textbf{Practical expressivity of GNNs}. Another line of works sought to develop expressive GNNs from practical consideration. For example, \citet{furer2017combinatorial,chen2020can,arvind2020weisfeiler} studied the power of WL algorithms in counting graph substructures and pointed out that vanilla MPNNs cannot count/detect cycles, which may severely limit their practical performance in real-world tasks (e.g., in bio-chemistry). In light of this, \citet{bouritsas2022improving,barcelo2021graph} proposed to incorporate substructure counting (or homomorphism counting) into the initial node features to boost the expressiveness. \citet{bodnar2021topological,bodnar2021cellular} further proposed a message-passing framework that enables interaction between nodes, edges, and higher-order substructures. \citet{huang2022boosting} studied the cycle counting power of subgraph GNNs and proposed the I$^2$-GNN to count cycles of length no more than 6. Recently, \citet{puny2023equivariant} studied the expressive power of GNNs in expressing/approximating equivariant graph polynomials. They showed that computing equivariant polynomials generalizes the problem of counting substructures.

\looseness=-1 Besides cycle counting, several works explored other aspects of encoding basic graph properties. \citet{you2019position,li2020distance,ying2021transformers} proposed to use distance encoding to boosting the expressiveness of MPNNs or Graph Transformers. In particular, \citet{li2020distance} proposed to use a generalized distance called page-rank distance. \citet{balcilar2021breaking,kreuzer2021rethinking,lim2022sign} studies the expressive power of GNNs from the perspective of graph spectral \citep{cvetkovic1997eigenspaces}. Recently, \citet{zhang2023rethinking} discovered that most prior GNN architectures are not expressive for graph biconnectivity and built an interesting relation between biconnectivity and generalized distance. Here, we extend \citet{zhang2023rethinking} by giving a comprehensive characterization of which SWL equivalence class can encode distance and biconnectivity.

\section{Conclusion}

This paper gives a comprehensive and unified analysis of the expressiveness of subgraph GNNs. By building a complete expressiveness hierarchy, one can gain deep insights into the power and limitation of various prior works and guide designing more powerful GNN architectures. On the theoretical side, we reveal close relations between SWL, localized WL, and localized Folklore WL, and propose a unified analyzing framework via pebbling games. Our results address a series of previous open problems and highlight several new research directions to this field. On the practical side, we design a simple yet powerful subgraph GNN architecture that achieves superior performance on real-world tasks.

\subsection{Open directions}
\label{sec:open_problem}

We highlight several open directions for future work as follows.

\textbf{Regarding higher-order subgraph GNNs}. From a theoretical perspective, it is an interesting direction to generalize the results of this paper to \emph{higher-order} subgraph GNNs (which compute a feature representation for each node $k$-tuple). We note that such an idea has appeared in \citet{cotta2021reconstruction,qian2022ordered,papp2022theoretical}. However, none of these works explored the possible design space of \emph{cross-graph} aggregations. Since our results imply that these cross-graph aggregations do essentially improve the expressive power, it may be worthwhile to establish a complete hierarchy of higher-order subgraph GNNs. This may include the following questions: $(\mathrm{i})$~How many expressivity equivalence classes are there? $(\mathrm{ii})$~What are the expressivity inclusion relations between different equivalence classes? $(\mathrm{iii})$~\emph{What design principle achieves the maximal expressive power with a minimal number of atomic aggregations?} We conjecture that, by symmetrically incorporating $k$ local aggregations, the resulting $k$-order subgraph GNN achieves the maximal expressiveness and is as expressive as $\mathsf{ReIGN}(k)$ (by extending \citet{frasca2022understanding}).
    
\textbf{Regarding edge-based subgraph GNNs}. Another different perspective is to study edge-based subgraph GNNs, which compute a feature representation for each edge-node pair. Importantly, edge-based subgraph GNNs and the corresponding SWL are also a fundamental class of computation models with $O(nm)$ memory complexity and $O(m^2)$ computation complexity. For sparse graphs (i.e. $m=O(n)$), such a complexity is quite desirable and is close to that of node-based subgraph GNNs. Yet, it results in enhanced expressiveness: as shown in \citet{bevilacqua2022equivariant,huang2022boosting}, their proposed edge-based subgraph GNNs are not less powerful than 2-FWL. Therefore, we believe that characterizing the expressiveness hierarchy of edge-based subgraph GNNs is of both theoretical and practical interest. Another interesting topic is to build expressivity relations between node-based and edge-based subgraph GNNs.

\textbf{Regarding localized Folklore WL tests}. This paper proposed a novel class of color refinement algorithms called localized Folklore WL. Importantly, we show $\mathsf{SLFWL(2)}$ is strictly more powerful than all node-based subgraph GNNs despite the same complexity. Therefore, an interest question is whether we can design practical GNN architectures based on $\mathsf{SLFWL(2)}$ for both efficiency and better expressiveness. On the other hand, from a theoretical side, it may also be an interesting direction to study higher-order localized Folklore WL tests, in particular, $\mathsf{SLFWL}(k)$, due to its fundamental nature. We conjecture that $\mathsf{SLFWL}(k)$ is strictly more powerful than $\delta$-$k$-LWL \citep{morris2020weisfeiler} and strictly less powerful than standard $k$-FWL. Furthermore, \emph{does $\mathsf{SLFWL}(k)$ achieve the maximal expressive power among the algorithm class within $O(n^{k-1} m)$ computation cost?}

\textbf{Regarding practical expressiveness of $\mathsf{GSWL}$ and $\mathsf{SSWL}$}. This paper discusses the practical expressiveness of subgraph GNNs by showing an inherent gap between $\mathsf{SWL}$ and $\mathsf{PSWL}$ in terms of their ability to encode distance and biconnectivity of a graph. Yet, it remains an open problem how $\mathsf{PSWL}$, $\mathsf{GSWL}$, and $\mathsf{SSWL}$ differ in terms of their practical expressiveness for computing graph properties. This question is particularly important since recently proposed subgraph GNNs are typically bounded by $\mathsf{GSWL}$. Answering this question will thus highlight the power and limitation of prior architectures.

\newpage
\bibliography{ref}

\begin{thebibliography}{59}
\providecommand{\natexlab}[1]{#1}
\providecommand{\url}[1]{\texttt{#1}}
\expandafter\ifx\csname urlstyle\endcsname\relax
  \providecommand{\doi}[1]{doi: #1}\else
  \providecommand{\doi}{doi: \begingroup \urlstyle{rm}\Url}\fi

\bibitem[Arvind et~al.(2020)Arvind, Fuhlbr{\"u}ck, K{\"o}bler, and
  Verbitsky]{arvind2020weisfeiler}
V.~Arvind, F.~Fuhlbr{\"u}ck, J.~K{\"o}bler, and O.~Verbitsky.
\newblock On weisfeiler-leman invariance: Subgraph counts and related graph
  properties.
\newblock \emph{Journal of Computer and System Sciences}, 113:\penalty0 42--59,
  2020.

\bibitem[Azizian and Lelarge(2021)]{azizian2021expressive}
W.~Azizian and M.~Lelarge.
\newblock Expressive power of invariant and equivariant graph neural networks.
\newblock In \emph{International Conference on Learning Representations}, 2021.

\bibitem[Balcilar et~al.(2021)Balcilar, H{\'e}roux, Gauzere, Vasseur, Adam, and
  Honeine]{balcilar2021breaking}
M.~Balcilar, P.~H{\'e}roux, B.~Gauzere, P.~Vasseur, S.~Adam, and P.~Honeine.
\newblock Breaking the limits of message passing graph neural networks.
\newblock In \emph{International Conference on Machine Learning}, pages
  599--608. PMLR, 2021.

\bibitem[Barcel{\'o} et~al.(2021)Barcel{\'o}, Geerts, Reutter, and
  Ryschkov]{barcelo2021graph}
P.~Barcel{\'o}, F.~Geerts, J.~Reutter, and M.~Ryschkov.
\newblock Graph neural networks with local graph parameters.
\newblock In \emph{Advances in Neural Information Processing Systems},
  volume~34, pages 25280--25293, 2021.

\bibitem[Bevilacqua et~al.(2022)Bevilacqua, Frasca, Lim, Srinivasan, Cai,
  Balamurugan, Bronstein, and Maron]{bevilacqua2022equivariant}
B.~Bevilacqua, F.~Frasca, D.~Lim, B.~Srinivasan, C.~Cai, G.~Balamurugan, M.~M.
  Bronstein, and H.~Maron.
\newblock Equivariant subgraph aggregation networks.
\newblock In \emph{International Conference on Learning Representations}, 2022.

\bibitem[Bodnar et~al.(2021{\natexlab{a}})Bodnar, Frasca, Otter, Wang, Li{\`o},
  Montufar, and Bronstein]{bodnar2021cellular}
C.~Bodnar, F.~Frasca, N.~Otter, Y.~G. Wang, P.~Li{\`o}, G.~Montufar, and M.~M.
  Bronstein.
\newblock Weisfeiler and lehman go cellular: {CW} networks.
\newblock In \emph{Advances in Neural Information Processing Systems},
  volume~34, 2021{\natexlab{a}}.

\bibitem[Bodnar et~al.(2021{\natexlab{b}})Bodnar, Frasca, Wang, Otter,
  Montufar, Lio, and Bronstein]{bodnar2021topological}
C.~Bodnar, F.~Frasca, Y.~Wang, N.~Otter, G.~F. Montufar, P.~Lio, and
  M.~Bronstein.
\newblock Weisfeiler and lehman go topological: Message passing simplicial
  networks.
\newblock In \emph{International Conference on Machine Learning}, pages
  1026--1037. PMLR, 2021{\natexlab{b}}.

\bibitem[Bouritsas et~al.(2022)Bouritsas, Frasca, Zafeiriou, and
  Bronstein]{bouritsas2022improving}
G.~Bouritsas, F.~Frasca, S.~P. Zafeiriou, and M.~Bronstein.
\newblock Improving graph neural network expressivity via subgraph isomorphism
  counting.
\newblock \emph{IEEE Transactions on Pattern Analysis and Machine
  Intelligence}, 2022.

\bibitem[Cai et~al.(1992)Cai, F{\"u}rer, and Immerman]{cai1992optimal}
J.-Y. Cai, M.~F{\"u}rer, and N.~Immerman.
\newblock An optimal lower bound on the number of variables for graph
  identification.
\newblock \emph{Combinatorica}, 12\penalty0 (4):\penalty0 389--410, 1992.

\bibitem[Chandra et~al.(1996)Chandra, Raghavan, Ruzzo, Smolensky, and
  Tiwari]{chandra1996electrical}
A.~K. Chandra, P.~Raghavan, W.~L. Ruzzo, R.~Smolensky, and P.~Tiwari.
\newblock The electrical resistance of a graph captures its commute and cover
  times.
\newblock \emph{computational complexity}, 6\penalty0 (4):\penalty0 312--340,
  1996.

\bibitem[Chen et~al.(2022)Chen, O’Bray, and Borgwardt]{chen2022structure}
D.~Chen, L.~O’Bray, and K.~Borgwardt.
\newblock Structure-aware transformer for graph representation learning.
\newblock In \emph{International Conference on Machine Learning}, pages
  3469--3489. PMLR, 2022.

\bibitem[Chen et~al.(2020)Chen, Chen, Villar, and Bruna]{chen2020can}
Z.~Chen, L.~Chen, S.~Villar, and J.~Bruna.
\newblock Can graph neural networks count substructures?
\newblock In \emph{Proceedings of the 34th International Conference on Neural
  Information Processing Systems}, pages 10383--10395, 2020.

\bibitem[Corso et~al.(2020)Corso, Cavalleri, Beaini, Li{\`o}, and
  Veli{\v{c}}kovi{\'c}]{corso2020principal}
G.~Corso, L.~Cavalleri, D.~Beaini, P.~Li{\`o}, and P.~Veli{\v{c}}kovi{\'c}.
\newblock Principal neighbourhood aggregation for graph nets.
\newblock In \emph{Advances in Neural Information Processing Systems},
  volume~33, pages 13260--13271, 2020.

\bibitem[Cotta et~al.(2021)Cotta, Morris, and Ribeiro]{cotta2021reconstruction}
L.~Cotta, C.~Morris, and B.~Ribeiro.
\newblock Reconstruction for powerful graph representations.
\newblock In \emph{Advances in Neural Information Processing Systems},
  volume~34, pages 1713--1726, 2021.

\bibitem[Cvetkovic et~al.(1997)Cvetkovic, Cvetkovi{\'c}, Rowlinson, and
  Simic]{cvetkovic1997eigenspaces}
D.~Cvetkovic, D.~M. Cvetkovi{\'c}, P.~Rowlinson, and S.~Simic.
\newblock \emph{Eigenspaces of graphs}.
\newblock Cambridge University Press, 1997.

\bibitem[Dwivedi et~al.(2020)Dwivedi, Joshi, Laurent, Bengio, and
  Bresson]{dwivedi2020benchmarking}
V.~P. Dwivedi, C.~K. Joshi, T.~Laurent, Y.~Bengio, and X.~Bresson.
\newblock Benchmarking graph neural networks.
\newblock \emph{arXiv preprint arXiv:2003.00982}, 2020.

\bibitem[Ehrenfeucht(1961)]{ehrenfeucht1961application}
A.~Ehrenfeucht.
\newblock An application of games to the completeness problem for formalized
  theories.
\newblock \emph{Fund. Math}, 49\penalty0 (129-141):\penalty0 13, 1961.

\bibitem[Fey and Lenssen(2019)]{fey2019fast}
M.~Fey and J.~E. Lenssen.
\newblock Fast graph representation learning with pytorch geometric.
\newblock \emph{arXiv preprint arXiv:1903.02428}, 2019.

\bibitem[Fra{\"i}sa{\'e}(1954)]{fraisse}
R.~Fra{\"i}sa{\'e}.
\newblock Sur quelques classifications des syst{\`e}mes de relations.
\newblock \emph{Publications Scientifiques de l{\'U}niversité d{\'A}lger},
  1954.

\bibitem[Frasca et~al.(2022)Frasca, Bevilacqua, Bronstein, and
  Maron]{frasca2022understanding}
F.~Frasca, B.~Bevilacqua, M.~M. Bronstein, and H.~Maron.
\newblock Understanding and extending subgraph gnns by rethinking their
  symmetries.
\newblock In \emph{Advances in Neural Information Processing Systems}, 2022.

\bibitem[F{\"u}rer(2001)]{furer2001weisfeiler}
M.~F{\"u}rer.
\newblock Weisfeiler-lehman refinement requires at least a linear number of
  iterations.
\newblock In \emph{International Colloquium on Automata, Languages, and
  Programming}, pages 322--333. Springer, 2001.

\bibitem[F{\"u}rer(2017)]{furer2017combinatorial}
M.~F{\"u}rer.
\newblock On the combinatorial power of the weisfeiler-lehman algorithm.
\newblock In \emph{International Conference on Algorithms and Complexity},
  pages 260--271. Springer, 2017.

\bibitem[Geerts and Reutter(2022)]{geerts2022expressiveness}
F.~Geerts and J.~L. Reutter.
\newblock Expressiveness and approximation properties of graph neural networks.
\newblock In \emph{International Conference on Learning Representations}, 2022.

\bibitem[Gilmer et~al.(2017)Gilmer, Schoenholz, Riley, Vinyals, and
  Dahl]{gilmer2017neural}
J.~Gilmer, S.~S. Schoenholz, P.~F. Riley, O.~Vinyals, and G.~E. Dahl.
\newblock Neural message passing for quantum chemistry.
\newblock In \emph{International conference on machine learning}, pages
  1263--1272. PMLR, 2017.

\bibitem[Hamilton et~al.(2017)Hamilton, Ying, and
  Leskovec]{hamilton2017inductive}
W.~L. Hamilton, R.~Ying, and J.~Leskovec.
\newblock Inductive representation learning on large graphs.
\newblock In \emph{Proceedings of the 31st International Conference on Neural
  Information Processing Systems}, volume~30, pages 1025--1035, 2017.

\bibitem[Hu et~al.(2020)Hu, Fey, Zitnik, Dong, Ren, Liu, Catasta, and
  Leskovec]{hu2020open}
W.~Hu, M.~Fey, M.~Zitnik, Y.~Dong, H.~Ren, B.~Liu, M.~Catasta, and J.~Leskovec.
\newblock Open graph benchmark: Datasets for machine learning on graphs.
\newblock In \emph{Advances in neural information processing systems},
  volume~33, pages 22118--22133, 2020.

\bibitem[Huang et~al.(2022)Huang, Peng, Ma, and Zhang]{huang2022boosting}
Y.~Huang, X.~Peng, J.~Ma, and M.~Zhang.
\newblock Boosting the cycle counting power of graph neural networks with
  i2-gnns.
\newblock \emph{arXiv preprint arXiv:2210.13978}, 2022.

\bibitem[Immerman and Lander(1990)]{immerman1990describing}
N.~Immerman and E.~Lander.
\newblock Describing graphs: A first-order approach to graph canonization.
\newblock In \emph{Complexity theory retrospective}, pages 59--81. Springer,
  1990.

\bibitem[Ioffe and Szegedy(2015)]{ioffe2015batch}
S.~Ioffe and C.~Szegedy.
\newblock Batch normalization: Accelerating deep network training by reducing
  internal covariate shift.
\newblock In \emph{International conference on machine learning}, pages
  448--456. PMLR, 2015.

\bibitem[Kingma and Ba(2014)]{kingma2014adam}
D.~P. Kingma and J.~Ba.
\newblock Adam: A method for stochastic optimization.
\newblock \emph{arXiv preprint arXiv:1412.6980}, 2014.

\bibitem[Kipf and Welling(2017)]{kipf2017semisupervised}
T.~N. Kipf and M.~Welling.
\newblock Semi-supervised classification with graph convolutional networks.
\newblock In \emph{International Conference on Learning Representations}, 2017.

\bibitem[Kreuzer et~al.(2021)Kreuzer, Beaini, Hamilton, L{\'e}tourneau, and
  Tossou]{kreuzer2021rethinking}
D.~Kreuzer, D.~Beaini, W.~Hamilton, V.~L{\'e}tourneau, and P.~Tossou.
\newblock Rethinking graph transformers with spectral attention.
\newblock In \emph{Advances in Neural Information Processing Systems},
  volume~34, 2021.

\bibitem[Kwon et~al.(2021)Kwon, Kim, Park, and Choi]{kwon2021asam}
J.~Kwon, J.~Kim, H.~Park, and I.~K. Choi.
\newblock Asam: Adaptive sharpness-aware minimization for scale-invariant
  learning of deep neural networks.
\newblock In \emph{International Conference on Machine Learning}, pages
  5905--5914. PMLR, 2021.

\bibitem[Li et~al.(2020)Li, Wang, Wang, and Leskovec]{li2020distance}
P.~Li, Y.~Wang, H.~Wang, and J.~Leskovec.
\newblock Distance encoding: design provably more powerful neural networks for
  graph representation learning.
\newblock In \emph{Proceedings of the 34th International Conference on Neural
  Information Processing Systems}, pages 4465--4478, 2020.

\bibitem[Lim et~al.(2022)Lim, Robinson, Zhao, Smidt, Sra, Maron, and
  Jegelka]{lim2022sign}
D.~Lim, J.~Robinson, L.~Zhao, T.~Smidt, S.~Sra, H.~Maron, and S.~Jegelka.
\newblock Sign and basis invariant networks for spectral graph representation
  learning.
\newblock \emph{arXiv preprint arXiv:2202.13013}, 2022.

\bibitem[Luo et~al.(2022)Luo, Li, Zheng, Liu, Wang, and He]{luo2022your}
S.~Luo, S.~Li, S.~Zheng, T.-Y. Liu, L.~Wang, and D.~He.
\newblock Your transformer may not be as powerful as you expect.
\newblock \emph{arXiv preprint arXiv:2205.13401}, 2022.

\bibitem[Maron et~al.(2019{\natexlab{a}})Maron, Ben-Hamu, Serviansky, and
  Lipman]{maron2019provably}
H.~Maron, H.~Ben-Hamu, H.~Serviansky, and Y.~Lipman.
\newblock Provably powerful graph networks.
\newblock In \emph{Advances in neural information processing systems},
  volume~32, pages 2156--2167, 2019{\natexlab{a}}.

\bibitem[Maron et~al.(2019{\natexlab{b}})Maron, Ben-Hamu, Shamir, and
  Lipman]{maron2019invariant}
H.~Maron, H.~Ben-Hamu, N.~Shamir, and Y.~Lipman.
\newblock Invariant and equivariant graph networks.
\newblock In \emph{International Conference on Learning Representations},
  2019{\natexlab{b}}.

\bibitem[Maron et~al.(2019{\natexlab{c}})Maron, Fetaya, Segol, and
  Lipman]{maron2019universality}
H.~Maron, E.~Fetaya, N.~Segol, and Y.~Lipman.
\newblock On the universality of invariant networks.
\newblock In \emph{International conference on machine learning}, pages
  4363--4371. PMLR, 2019{\natexlab{c}}.

\bibitem[Morris et~al.(2019)Morris, Ritzert, Fey, Hamilton, Lenssen, Rattan,
  and Grohe]{morris2019weisfeiler}
C.~Morris, M.~Ritzert, M.~Fey, W.~L. Hamilton, J.~E. Lenssen, G.~Rattan, and
  M.~Grohe.
\newblock Weisfeiler and leman go neural: Higher-order graph neural networks.
\newblock In \emph{Proceedings of the AAAI conference on artificial
  intelligence}, volume~33, pages 4602--4609, 2019.

\bibitem[Morris et~al.(2020)Morris, Rattan, and Mutzel]{morris2020weisfeiler}
C.~Morris, G.~Rattan, and P.~Mutzel.
\newblock Weisfeiler and leman go sparse: towards scalable higher-order graph
  embeddings.
\newblock In \emph{Proceedings of the 34th International Conference on Neural
  Information Processing Systems}, pages 21824--21840, 2020.

\bibitem[Morris et~al.(2022)Morris, Rattan, Kiefer, and
  Ravanbakhsh]{morris2022speqnets}
C.~Morris, G.~Rattan, S.~Kiefer, and S.~Ravanbakhsh.
\newblock Speqnets: Sparsity-aware permutation-equivariant graph networks.
\newblock In \emph{International Conference on Machine Learning}, pages
  16017--16042. PMLR, 2022.

\bibitem[Papp and Wattenhofer(2022)]{papp2022theoretical}
P.~A. Papp and R.~Wattenhofer.
\newblock A theoretical comparison of graph neural network extensions.
\newblock In \emph{Proceedings of the 39th International Conference on Machine
  Learning}, volume 162, pages 17323--17345, 2022.

\bibitem[Papp et~al.(2021)Papp, Martinkus, Faber, and
  Wattenhofer]{papp2021dropgnn}
P.~A. Papp, K.~Martinkus, L.~Faber, and R.~Wattenhofer.
\newblock Dropgnn: random dropouts increase the expressiveness of graph neural
  networks.
\newblock In \emph{Advances in Neural Information Processing Systems},
  volume~34, pages 21997--22009, 2021.

\bibitem[Paszke et~al.(2019)Paszke, Gross, Massa, Lerer, Bradbury, Chanan,
  Killeen, Lin, Gimelshein, Antiga, et~al.]{paszke2019pytorch}
A.~Paszke, S.~Gross, F.~Massa, A.~Lerer, J.~Bradbury, G.~Chanan, T.~Killeen,
  Z.~Lin, N.~Gimelshein, L.~Antiga, et~al.
\newblock Pytorch: An imperative style, high-performance deep learning library.
\newblock \emph{Advances in neural information processing systems}, 32, 2019.

\bibitem[Puny et~al.(2023)Puny, Lim, Kiani, Maron, and
  Lipman]{puny2023equivariant}
O.~Puny, D.~Lim, B.~T. Kiani, H.~Maron, and Y.~Lipman.
\newblock Equivariant polynomials for graph neural networks.
\newblock \emph{arXiv preprint arXiv:2302.11556}, 2023.

\bibitem[Qian et~al.(2022)Qian, Rattan, Geerts, Niepert, and
  Morris]{qian2022ordered}
C.~Qian, G.~Rattan, F.~Geerts, M.~Niepert, and C.~Morris.
\newblock Ordered subgraph aggregation networks.
\newblock In \emph{Advances in Neural Information Processing Systems}, 2022.

\bibitem[Rampasek et~al.(2022)Rampasek, Galkin, Dwivedi, Luu, Wolf, and
  Beaini]{rampasek2022recipe}
L.~Rampasek, M.~Galkin, V.~P. Dwivedi, A.~T. Luu, G.~Wolf, and D.~Beaini.
\newblock Recipe for a general, powerful, scalable graph transformer.
\newblock In \emph{Advances in Neural Information Processing Systems}, 2022.

\bibitem[Veli{\v{c}}kovi{\'c} et~al.(2018)Veli{\v{c}}kovi{\'c}, Cucurull,
  Casanova, Romero, Li{\`o}, and Bengio]{velivckovic2018graph}
P.~Veli{\v{c}}kovi{\'c}, G.~Cucurull, A.~Casanova, A.~Romero, P.~Li{\`o}, and
  Y.~Bengio.
\newblock Graph attention networks.
\newblock In \emph{International Conference on Learning Representations}, 2018.

\bibitem[Vignac et~al.(2020)Vignac, Loukas, and Frossard]{vignac2020building}
C.~Vignac, A.~Loukas, and P.~Frossard.
\newblock Building powerful and equivariant graph neural networks with
  structural message-passing.
\newblock In \emph{Proceedings of the 34th International Conference on Neural
  Information Processing Systems}, pages 14143--14155, 2020.

\bibitem[Weisfeiler and Leman(1968)]{weisfeiler1968reduction}
B.~Weisfeiler and A.~Leman.
\newblock The reduction of a graph to canonical form and the algebra which
  appears therein.
\newblock \emph{NTI, Series}, 2\penalty0 (9):\penalty0 12--16, 1968.

\bibitem[Xu et~al.(2019)Xu, Hu, Leskovec, and Jegelka]{xu2019powerful}
K.~Xu, W.~Hu, J.~Leskovec, and S.~Jegelka.
\newblock How powerful are graph neural networks?
\newblock In \emph{International Conference on Learning Representations}, 2019.

\bibitem[Ying et~al.(2021)Ying, Cai, Luo, Zheng, Ke, He, Shen, and
  Liu]{ying2021transformers}
C.~Ying, T.~Cai, S.~Luo, S.~Zheng, G.~Ke, D.~He, Y.~Shen, and T.-Y. Liu.
\newblock Do transformers really perform badly for graph representation?
\newblock \emph{Advances in Neural Information Processing Systems}, 34, 2021.

\bibitem[You et~al.(2019)You, Ying, and Leskovec]{you2019position}
J.~You, R.~Ying, and J.~Leskovec.
\newblock Position-aware graph neural networks.
\newblock In \emph{International conference on machine learning}, pages
  7134--7143. PMLR, 2019.

\bibitem[You et~al.(2021)You, Gomes-Selman, Ying, and
  Leskovec]{you2021identity}
J.~You, J.~M. Gomes-Selman, R.~Ying, and J.~Leskovec.
\newblock Identity-aware graph neural networks.
\newblock In \emph{Proceedings of the AAAI Conference on Artificial
  Intelligence}, volume~35, pages 10737--10745, 2021.

\bibitem[Zhang et~al.(2023)Zhang, Luo, He, and Wang]{zhang2023rethinking}
B.~Zhang, S.~Luo, D.~He, and L.~Wang.
\newblock Rethinking the expressive power of gnns via graph biconnectivity.
\newblock In \emph{International Conference on Learning Representations}, 2023.

\bibitem[Zhang and Li(2021)]{zhang2021nested}
M.~Zhang and P.~Li.
\newblock Nested graph neural networks.
\newblock In \emph{Advances in Neural Information Processing Systems},
  volume~34, pages 15734--15747, 2021.

\bibitem[Zhao et~al.(2022{\natexlab{a}})Zhao, Jin, Akoglu, and
  Shah]{zhao2022stars}
L.~Zhao, W.~Jin, L.~Akoglu, and N.~Shah.
\newblock From stars to subgraphs: Uplifting any gnn with local structure
  awareness.
\newblock In \emph{International Conference on Learning Representations},
  2022{\natexlab{a}}.

\bibitem[Zhao et~al.(2022{\natexlab{b}})Zhao, Shah, and
  Akoglu]{zhao2022practical}
L.~Zhao, N.~Shah, and L.~Akoglu.
\newblock A practical, progressively-expressive {GNN}.
\newblock In \emph{Advances in Neural Information Processing Systems},
  2022{\natexlab{b}}.

\end{thebibliography}

\clearpage
\appendix

\renewcommand \thepart{} 
\renewcommand \partname{}
\part{Appendix} 
The Appendix is organized as follows:
\begin{itemize}[topsep=0pt]
    \setlength{\itemsep}{0pt}
    \item In \cref{sec:swl_eq_subgraph_gnn}, we give the missing proof of \cref{thm:swl_and_subgraph_gnn}, showing the equivalence between SWL and Subgraph GNNs.
    \item In \cref{sec:proof_of_swl_hierarchy}, we give all the missing proofs in \cref{sec:swl_analysis}, which we use to build a complete hierarchy of SWL algorithms. This part is technical and is divided into several subsections (from \cref{sec:proof_of_swl_hierarchy_preliminary,sec:policy,sec:proof_aggregation_schemes,sec:proof_pooling_paradigm,sec:proof_hierarchy_main}).
    \item In \cref{sec:other_aggregation}, we discuss several subgraph GNNs beyond our proposed framework (\cref{def:layer}), including $\mathsf{GNN}\text{-}\mathsf{AK}$, $\mathsf{GNN}\text{-}\mathsf{AK}\text{-}\mathsf{ctx}$, $\mathsf{ESAN}$, $\mathsf{SUN}$, and $\mathsf{ReIGN(2)}$. We show that each of these architectures still corresponds to an equivalent SWL algorithm in terms of expressive power.
    \item In \cref{sec:proof_lfwl}, we give the missing proof of \cref{thm:2lfwl}, showing the expressivity relationships between different SWL and localized FWL algorithms.
    \item In \cref{sec:proof_pebbling_game}, we give all the missing proofs in \cref{sec:pebbling_game}, bridging SWL/FWL-type algorithms and pebbling games.
    \item In \cref{sec:proof_separation}, we give the missing proof of \cref{thm:separation}. The proof is non-trivial and contains the main technical contribution of this paper. It is divided into three parts in \cref{sec:proof_separation_part1,sec:pebbling_furer,sec:proof_separation_part3} for readability.
    \item In \cref{sec:proof_practical_expressiveness}, we give all the missing proofs in \cref{sec:practical_expressiveness}, showing how various SWL algorithms differ in terms of their practical expressiveness such as encoding graph distance and biconnectivity.
    \item In \cref{sec:exp_details}, we provide experimental details to reproduce the results in \cref{sec:experiments}, as well as a comprehensive set of ablation studies.
\end{itemize}
\newpage

\section{The Equivalence between SWL and Subgraph GNNs}
\label{sec:swl_eq_subgraph_gnn}
This section aims to prove \cref{thm:swl_and_subgraph_gnn}. We restate the proposition below:

\textbf{\cref{thm:swl_and_subgraph_gnn}.} \emph{The expressive power of any subgraph GNN defined in \cref{sec:subgraph_gnn} is bounded by a corresponding SWL by matching the policy $\pi$, the aggregation scheme between \cref{def:layer,def:swl}, and the pooling paradigm. Moreover, when considering bounded-size graphs, for any SWL algorithm, there exists a matching subgraph GNN with the same expressive power.}

\begin{proof}
    We first prove that any subgraph GNN defined in \cref{sec:subgraph_gnn} is bounded by a corresponding SWL. To do this, we will prove the following result: given a pair of graphs $G=(\gV_G,\gE_G)$ and $H=(\gV_H,\gE_H)$, for any $t\in\mathbb N$ and any vertices $u,v\in\gV_G$ and $x,y\in\gV_H$, $\chi_G^{(t)}(u,v)=\chi_H^{(t)}(x,y) \implies h_G^{(t)}(u,v)=h_H^{(t)}(x,y)$, where $\chi$ and $h$ are defined in \cref{def:swl,def:layer}, respectively.

    We prove the result by induction over $t$. For the base case of $t=0$, the result clearly holds when the graph generation policy is the same between SWL and subgraph GNNs. Now assume the result holds for all $t\le T$, and we want to prove that it also holds for $t=T+1$. By \cref{def:swl}, $\chi_G^{(T+1)}(u,v)=\chi_H^{(T+1)}(x,y)$ is equivalent to
    $$\agg_i(u,v,G,\chi_G^{(T)})=\agg_i(x,y,H,\chi_H^{(T)}), \quad \forall i\in[r].$$
    We separately consider each type of aggregation operation:
    \begin{itemize}[topsep=0pt]
    \raggedright
    \setlength{\itemsep}{0pt}
        \item Single-point aggregation. Take $\agg^\mathsf{P}_\mathsf{vu}$ for example: $\agg^\mathsf{P}_\mathsf{vu}(u,v,G,\chi_G^{(T)})=\agg^\mathsf{P}_\mathsf{vu}(x,y,H,\chi_H^{(T)})$ implies $\chi_G^{(T)}(v,u)=\chi_H^{(T)}(y,x)$. By induction, we have $h_G^{(t)}(v,u)=h_H^{(t)}(y,x)$.
        \item Local aggregation. Take $\agg^\mathsf{L}_\mathsf{u}$ for example: $\agg^\mathsf{L}_\mathsf{u}(u,v,G,\chi_G^{(T)})=\agg^\mathsf{L}_\mathsf{u}(x,y,H,\chi_H^{(T)})$ implies
        \begin{align*}
            \ldblbrace\chi_G^{(T)}(u,w): w\in\gN_{G^u}(v)\rdblbrace=\ldblbrace\chi_H^{(T)}(x,z): z\in\gN_{H^x}(y)\rdblbrace.
        \end{align*}
        By induction, it is straightforward to see that
        \begin{align*}
            \ldblbrace h_G^{(T)}(u,w): w\in\gN_{G^u}(v)\rdblbrace=\ldblbrace h_H^{(T)}(x,z): z\in\gN_{H^x}(y)\rdblbrace.
        \end{align*}
        Therefore, $$\sum_{w\in\gN_{G^u}(v)}h_G^{(T)}(u,w)=\sum_{z\in\gN_{H^x}(y)}h_H^{(T)}(x,z).$$
        \item Global aggregation. This case is similar to the above one and we omit it for clarity.
    \end{itemize}
    Combining all these cases, we have
    $$\op_i(u,v,G,\chi_G^{(T)})=\op_i(x,y,H,\chi_H^{(T)}) \quad \forall i\in[r],$$
    and thus $h_G^{(T+1)}(u,v)=h_H^{(T+1)}(x,y)$. We have completed the induction step.

    Let $L$ be the number of layers in a subgraph GNN. Then $\chi_G^{(L)}(u,v)=\chi_H^{(L)}(x,y)$ implies $h_G^{(L)}(u,v)=h_H^{(L)}(x,y)$. Since $\agg^\mathsf{P}_\mathsf{uv}$ is always present, the stable color mapping $\chi$ satisfies that $\chi_G(u,v)=\chi_H(x,y)\implies\chi_G^{(L)}(u,v)=\chi_H^{(L)}(x,y)$. Therefore, $\chi_G(u,v)=\chi_H(x,y)\implies h_G^{(L)}(u,v)=h_H^{(L)}(x,y)$.

    Finally consider the pooling paradigm. As in the analysis of global aggregation, it can be concluded that $c(G)=c(H)$ implies $f(G)=f(H)$ where $c(G)$ and $f(G)$ represent the graph representation computed by SWL and subgraph GNN, respectively. We have finished the first part of the proof.

    It remains to prove that for any SWL algorithm, there exists a matching subgraph GNN with the same expressive power. The key idea is to ensure that whenever $\chi_G^{(t)}(u,v)\neq\chi_H^{(t)}(x,y)$, we have $h_G^{(t)}(u,v)\neq h_H^{(t)}(x,y)$. To achieve this, we rely on injective functions that take a set as input. When assuming that the size of the set is bounded, the injective property can be easily constructed using the approach proposed in \citet{maron2019provably}, called the power-sum multi-symmetric polynomials (PMP). We note that while \citet{maron2019provably} only focused on the case when the input belongs to sets of a \emph{fixed} size, it can be easily extended to our case for sets of different but bounded sizes by padding zero-elements.
    The \emph{summation} in PMP just coincides with the aggregation in \cref{def:layer}, and the power can be extracted by the function $\sigma^{(t)}$ in the previous layer. For more details, please refer to \citet{maron2019provably}.

    Finally, note that when the input graph has bounded size $N$, the SWL iteration must get stabled in no more than $N^2$ steps. Therefore, by using a sufficiently deep GNN (i.e., $L=N^2$), one can guarantee that $\chi_G(u,v)\neq\chi_H(x,y)$ implies $h_G^{(L)}(u,v)\neq h_H^{(L)}(x,y)$. This eventually yields that $c(G)\neq c(H)$ implies $f(G)\neq f(H)$, as desired.
\end{proof}

\section{Proof of Theorems in \cref{sec:swl_analysis}}
\label{sec:proof_of_swl_hierarchy}

This section contains all the missing proofs in \cref{sec:swl_analysis}.

\subsection{Preliminary}
\label{sec:proof_of_swl_hierarchy_preliminary}

We first introduce some basic terminologies and facts, which will be frequently used in subsequent proofs.

\begin{definition}
\label{def:finer}
    Let $\chi$ and $\tilde \chi$ be two color mappings, with $\chi_G(u,v)$ and $\tilde \chi_G(u,v)$ representing the color of vertex pair $(u,v)$ in graph $G$. We say:
    \begin{itemize}[topsep=0pt]
    \setlength{\itemsep}{0pt}
        \item $\tilde \chi$ is \emph{finer} than $\chi$, denoted as $\tilde \chi \preceq \chi$, if for any two graphs $G=(\gV_G,\gE_G)$, $H=(\gV_H,\gE_H)$ and any vertices $u,v\in\gV_G$, $x,y\in\gV_H$, we have $\tilde \chi_G(u,v)=\tilde\chi_H(x,y)\implies\chi_G(u,v)=\chi_H(x,y)$.
        \item $\tilde \chi$ and $\chi$ are equivalent, denoted as $\tilde\chi\simeq\chi$, if $\tilde \chi \preceq \chi$ and $\chi \preceq \tilde \chi$.
        \item $\tilde \chi$ is \emph{strictly finer} than $\chi$, denoted as $\tilde \chi \prec \chi$, if $\tilde \chi \preceq \chi$ and $\tilde\chi\not\simeq\chi$.
    \end{itemize}
\end{definition}

\begin{remark}
\label{remark:finer}
    Several simple facts regarding this definition are as follows.
    \begin{enumerate}[label=(\alph*),topsep=0pt]
        \setlength{\itemsep}{0pt}
        \item For any color refinement algorithm, let $\{\chi^{(t)}\}_{t=0}^\infty$ be the sequence of color mappings generated at each iteration $t$, then $\chi^{(t+1)}\preceq \chi^{(t)}$ for any $t$. This is exactly why we call the algorithm ``color refinement''. As a result, the stable color mapping $\chi$ is finer than any intermediate color mapping $\chi^{(t)}$.
        \item \cref{def:finer} is closely related to the power of WL algorithms. Indeed, let $\chi^\mathsf{A}$ and $\chi^\mathsf{B}$ be two stable color mappings generated by algorithms $\mathsf A$ and $\mathsf B$, respectively. If both algorithms use the same pooling paradigm, then $\chi^\mathsf{B}\preceq \chi^\mathsf{A}$ implies that $\mathsf B$ is more powerful than $\mathsf A$, i.e. $\mathsf A\preceq\mathsf B$.
        \item Consider two SWL algorithms $\mathsf{A}$ and $\mathsf{B}$ with the same graph generation policy, but with different aggregation schemes $\gA$ and $\gB$. Denote $\chi^\mathsf{A}$ and $\chi^\mathsf{B}$ as the corresponding stable color mappings. Define a new color mapping $\tilde\chi=\mathscr{T}(\gA,\chi^\mathsf{B})$ that ``refines'' $\chi^\mathsf{B}$ using aggregation scheme $\gA\cup\{\agg^\mathsf{P}_\mathsf{uv}\}$:
        \begin{equation}
        \label{eq:mapping_transform}
        \begin{aligned}
            [\mathscr{T}(\gA,\chi)]_G(u,v) = \hash(\agg_1(u, v, G,\chi_G),\cdots,\agg_r(u, v, G,\chi_G)),
        \end{aligned}
        \end{equation}
        where $\gA\cup\{\agg^\mathsf{P}_\mathsf{uv}\}=\{\agg_i:i\in[r]\}$. Then we can prove that $\chi^\mathsf{B}\preceq\tilde\chi\implies \chi^\mathsf{B}\preceq \chi^\mathsf{A}$. If the two algorithms further share the same pooling paradigm, \cref{remark:finer}(b) yields $\mathsf{A}\preceq\mathsf{B}$. This gives a simple way to compare the expressiveness of different algorithms.
    \end{enumerate}
\end{remark}
\begin{proof}[Proof of \cref{remark:finer}(c)]
    Define a sequence of color mappings $\{\tilde\chi^{(t)}\}_{t=0}^\infty$ recursively, such that $\tilde\chi^{(0)}=\chi^\mathsf{B}$ and
    $$\tilde\chi_G^{(t+1)}(u, v) = \hash(\agg_1(u, v,  G,\tilde\chi_G^{(t)}),\cdots,\agg_r(u, v, G,\tilde\chi_G^{(t)}))$$
    for any $u,v$ in graph $G$. Clearly, $\tilde\chi^{(1)}$ is just $\tilde\chi$ in \cref{remark:finer}(c). Since we have both $\tilde\chi^{(0)}\preceq \tilde\chi^{(1)}$ (by the assumption of $\chi^\mathsf{B}\preceq\tilde\chi$) and $\tilde\chi^{(1)}\preceq \tilde\chi^{(0)}$ (by \cref{remark:finer}(a)), $\tilde\chi^{(1)}\simeq \tilde\chi^{(0)}$. Therefore, $\tilde\chi^{(t)}\simeq\chi^\mathsf{B}$ holds for all $t\in\mathbb N$. On the other hand, a simple induction over $t$ yields $\tilde\chi^{(t)}\preceq \chi^{\mathsf{A},(t)}$ (where $\chi^{\mathsf{A},(t)}$ is the color mapping at iteration $t$ for algorithm $\mathsf{A}$), since they are both refined by the same aggregation scheme $\gA$ (for the base case, $\tilde\chi^{(0)}=\chi^\mathsf{B}\preceq\chi^{\mathsf{B},(0)}=\chi^{\mathsf{A},(0)}$). By taking $t\to\infty$, this finally yields $\tilde\chi\preceq \chi^{\mathsf{A}}$, namely, $\chi^\mathsf{B}\preceq \chi^\mathsf{A}$, as desired.
\end{proof}

\subsection{Discussions on graph generation policies}
\label{sec:policy}

In this subsection, we make a detailed discussion regarding graph generalization policies and prove that the canonical node marking policy already achieves the best expressiveness among all of these policies (\cref{thm:node_marking}).

Depending on the choice of $(G^u,h_G^u)$, there are a total of 7 non-trivial combinations. For ease of presentation, we first give symbols to each of them:
\begin{itemize}[topsep=0pt]
    \setlength{\itemsep}{0pt}
    \item $\mathsf{NM}$: the node marking policy on the original graph;
    \item $\mathsf{DE}$: the distance encoding policy on the original graph;
    \item ${\mathsf{EGO}(k)}$: the $k$-hop ego network policy with constant node features;
    \item ${\mathsf{EGO}(k)\text{+}\mathsf{NM}}$: the policy with both node marking and $k$-hop ego network;
    \item ${\mathsf{EGO}(k)\text{+}\mathsf{DE}}$: the policy with both distance encoding and $k$-hop ego network;
    \item $\mathsf{ND}$: the node deletion policy with constant node features;
    \item $\mathsf{NDM}$: the policy with both node deletion and marking.
\end{itemize}

\begin{proposition}
\label{thm:policy_expand}
    Consider any fixed aggregation scheme $\gA$ that contains the two basic aggregations $\agg^\mathsf{P}_\mathsf{uv}$ and $\agg^\mathsf{L}_\mathsf{u}$ in \cref{def:swl}, and consider any fixed pooling paradigm $\pool$ defined in \cref{sec:swl}. We have the following result:
    \begin{itemize}[topsep=0pt]
    \setlength{\itemsep}{0pt}
        \item $\mathsf{NM}$ is as powerful as $\mathsf{DE}$;
        \item ${\mathsf{EGO}(k)\text{+}\mathsf{NM}}$ is as powerful as ${\mathsf{EGO}(k)\text{+}\mathsf{DE}}$;
        \item $\mathsf{NM}$ is more powerful than ${\mathsf{EGO}(k)\text{+}\mathsf{NM}}$;
        \item $\mathsf{NM}$ is more powerful than $\mathsf{NDM}$;
        \item ${\mathsf{EGO}(k)\text{+}\mathsf{NM}}$ is more powerful than ${\mathsf{EGO}(k)}$;
        \item $\mathsf{NDM}$ is more powerful than $\mathsf{ND}$.
    \end{itemize}
\end{proposition}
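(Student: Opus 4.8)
The plan is to phrase all six statements as comparisons of the \emph{stable} colorings produced by the respective policies, invoking \cref{remark:finer}(b): for a fixed pooling paradigm, $\mathsf A\preceq\mathsf B$ whenever the stable coloring of $\mathsf B$ is finer than that of $\mathsf A$. Two elementary observations drive everything. \emph{(i) Monotonicity}: the single-step refinement $\chi\mapsto\mathscr T(\gA,\chi)$ of \cref{eq:mapping_transform} is monotone, $\chi\preceq\chi'\Rightarrow\mathscr T(\gA,\chi)\preceq\mathscr T(\gA,\chi')$, since an equality of multisets survives applying any map to the ground set, in particular the (well-defined) map sending a $\chi$-color to the corresponding $\chi'$-color. \emph{(ii) Monotone initialization}: if two refinement processes share the same iteration map $\mathscr T(\gA,\cdot)$ (same aggregation scheme acting on the same subgraphs) and $\chi^{(0)}\preceq\tilde\chi^{(0)}$, then by induction $\chi^{(t)}\preceq\tilde\chi^{(t)}$ for all $t$, hence the stable colorings satisfy $\chi\preceq\tilde\chi$. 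The one non-elementary ingredient is \cref{thm:node_marking_implies_distance}: whenever $\agg^\mathsf P_\mathsf{uv},\agg^\mathsf L_\mathsf u\in\gA$, the stable coloring $\chi^{\mathsf{NM}}_G(u,v)$ determines $\dis_G(u,v)$ for every pair; consequently it determines $\gE_G$ (edges are distance-$1$ pairs) and therefore also each policy-specific edge set $\gE^u_G$ — the $k$-hop ego net $\{\{v,w\}\in\gE_G:v,w\in\gN^k_G(u)\}$ and the node-deleted graph $\gE_G\setminus\{\{u,w\}:w\in\gN_G(u)\}$.

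For the ``as powerful as'' claims: $\mathsf{NM}\preceq\mathsf{DE}$ is immediate from (ii), because on the original graph the distance-encoding initialization ($u=v\iff\dis_G(u,v)=0$) refines the node-marking one. For the converse $\mathsf{DE}\preceq\mathsf{NM}$, \cref{thm:node_marking_implies_distance} gives $\chi^{\mathsf{NM}}\preceq\chi^{\mathsf{DE},(0)}$; since $\chi^{\mathsf{NM}}$ is a fixed point of $\mathscr T(\gA,\cdot)$, the induction step $\chi^{\mathsf{NM}}=\mathscr T(\gA,\chi^{\mathsf{NM}})\preceq\mathscr T(\gA,\chi^{\mathsf{DE},(t)})=\chi^{\mathsf{DE},(t+1)}$ (fixed-point property, then monotonicity (i)) yields $\chi^{\mathsf{NM}}\preceq\chi^{\mathsf{DE},(t)}$ for all $t$, hence $\chi^{\mathsf{NM}}\preceq\chi^{\mathsf{DE}}$. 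The claim ${\mathsf{EGO}(k)\text{+}\mathsf{NM}}\simeq{\mathsf{EGO}(k)\text{+}\mathsf{DE}}$ is proved the same way, the one extra point being the ego-net analogue of \cref{thm:node_marking_implies_distance} (node marking propagated along the sets $\gN_{G^u}(v)$ recovers the distance information that ${\mathsf{EGO}(k)\text{+}\mathsf{DE}}$ retains, after observing that every vertex outside $\gN^k_G(u)$ is isolated in $G^u$ and thus carries no distinguishing content for either policy).

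For ``$\mathsf{NM}$ is more powerful than ${\mathsf{EGO}(k)\text{+}\mathsf{NM}}$'' — and verbatim for ``$\mathsf{NM}$ more powerful than $\mathsf{NDM}$'' — let $\gA^{\bullet}$ denote $\gA$ with its local aggregations reinterpreted over the relevant edge set $\gE^u_G$ in place of $\gE_G$. The crux is that each operation of $\gA^{\bullet}$ evaluated at $\chi^{\mathsf{NM}}$ is a function of $\bigl(\chi^{\mathsf{NM}}_G(u,v),\ \text{the corresponding }\gA\text{-operation at }\chi^{\mathsf{NM}}\bigr)$: for the ego net, $\ldblbrace\chi^{\mathsf{NM}}(u,w):w\in\gN_{G^u}(v)\rdblbrace$ is obtained from $\ldblbrace\chi^{\mathsf{NM}}(u,w):w\in\gN_G(v)\rdblbrace$ by discarding the elements whose color reveals $\dis_G(u,\cdot)>k$ (and discarding everything when $\chi^{\mathsf{NM}}_G(u,v)$ reveals $\dis_G(u,v)>k$); for node deletion, one instead discards the single element with $\dis_G(u,\cdot)=0$ whenever $\chi^{\mathsf{NM}}_G(u,v)$ reveals $\dis_G(u,v)\in\{0,1\}$. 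Both manipulations are legitimate precisely because $\chi^{\mathsf{NM}}$ encodes distances. Since $\chi^{\mathsf{NM}}$ is stable under $\gA$, equal $\chi^{\mathsf{NM}}$-colors force equal values of every $\gA$-operation, hence — by the previous sentence — of every $\gA^{\bullet}$-operation, giving $\chi^{\mathsf{NM}}\preceq\mathscr T(\gA^{\bullet},\chi^{\mathsf{NM}})$. Combining this with $\chi^{\mathsf{NM}}\preceq\chi^{\bullet,(0)}$ (both policies start from node marking, whose initialization ignores the edge set) and running the monotone induction (i)+(ii) for the $\gA^{\bullet}$-process yields $\chi^{\mathsf{NM}}\preceq\chi^{\bullet}$, i.e. the desired comparison.

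The last two claims are the easy ones: ``${\mathsf{EGO}(k)\text{+}\mathsf{NM}}$ more powerful than ${\mathsf{EGO}(k)}$'' and ``$\mathsf{NDM}$ more powerful than $\mathsf{ND}$'' both compare two processes with the \emph{same} edge policy (hence the same iteration map), where the node-marking initialization refines the constant one, so (ii) gives the stable-coloring refinement directly. I expect the main obstacle to be the two ``$\gA^{\bullet}$-simulation'' arguments of the third paragraph, where one must verify carefully that the multiset-filtering described there is genuinely computable from colors alone — this is exactly the place where \cref{thm:node_marking_implies_distance} is indispensable — together with the ego-net strengthening of that theorem needed in the second paragraph.
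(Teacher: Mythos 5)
Your proposal is correct and proves the same statements via the same key lemma (\cref{thm:node_marking_implies_distance}), but you reorganize the argument in a cleaner way. The paper's proof runs a direct shifted induction, establishing $\chi^{\mathsf{NM},(t)}\preceq\chi^{\bullet,(t-D)}$ for $t\ge D$ (with $D$ bounding the diameter) and handling each aggregation type case-by-case inside the induction step. You instead work directly with the stable coloring $\chi^{\mathsf{NM}}$ and factor the argument into two reusable modules: a monotonicity lemma for $\mathscr T(\gA,\cdot)$ together with monotone-initialization induction, and a ``filtering'' claim that each $\gA^{\bullet}$-operation evaluated at $\chi^{\mathsf{NM}}$ factors through the corresponding $\gA$-operation together with $\chi^{\mathsf{NM}}_G(u,v)$. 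This isolates the one place \cref{thm:node_marking_implies_distance} is genuinely needed (the filtering step) from the bookkeeping (the induction), and it dispenses with the explicit shift by $D$ by exploiting that $\chi^{\mathsf{NM}}$ is a fixed point of $\mathscr T(\gA,\cdot)$. Both arguments are sound; yours is the more modular presentation.

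One local slip: for node deletion you say the filter is ``discard the single element with $\dis_G(u,\cdot)=0$ whenever $\dis_G(u,v)\in\{0,1\}$.'' That is right when $\dis_G(u,v)=1$ (one removes $u\in\gN_G(v)$), but wrong when $\dis_G(u,v)=0$: there $v=u$, the multiset over $\gN_G(u)$ contains no distance-$0$ element, yet $\gN_{G^u}(u)=\emptyset$, so the correct filter in that case is ``discard everything.'' The paper treats $\dis_G(u,v)\in\{0,1,\ge 2\}$ as three separate cases. The error is harmless for your overall structure, since the corrected filter is still a function of $\chi^{\mathsf{NM}}_G(u,v)$ and the $\gA$-local operation — which is the only property the induction requires — but the rule as written does not compute the $\gA^{\bullet}$-operation and should be fixed.
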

\begin{proof}
    Let $\chi^{\mathsf{Policy},(t)}$ be the color mapping of the SWL algorithm with graph generation policy $\mathsf{Pocily}$, aggregation scheme $\gA$, and pooling paradigm $\pool$ at iteration $t$, and let $\chi^{\mathsf{Policy}}$ be the corresponding stable color mapping. Here, $\mathsf{Pocily}\in\{\mathsf{NM}, \mathsf{DE}, \mathsf{EGO}(k), \mathsf{EGO}(k)\text{+}\mathsf{NM}, \mathsf{EGO}(k)\text{+}\mathsf{DE}, \mathsf{ND}, \mathsf{NDM}\}$. 

    We first consider the case $\mathsf{NM}$ vs. $\mathsf{DE}$. By definition, $\chi^{\mathsf{DE},(0)}$ is finer than $\chi^{\mathsf{NM},(0)}$. Since the subgraphs $\ldblbrace G^u:u\in\gV_G\rdblbrace$ are the same for the two policies and the aggregation scheme $\gA$ is fixed, a simple induction over $t$ then yields $\chi^{\mathsf{DE},(t)}\preceq\chi^{\mathsf{NM},(t)}$ for any $t\in\mathbb N$, namely, $\chi^{\mathsf{DE}}\preceq\chi^{\mathsf{NM}}$. It follows that $\mathsf{DE}$ is more powerful than $\mathsf{NM}$ (by \cref{remark:finer}(b)).

    To prove the converse direction, we leverage \cref{thm:node_marking_implies_distance} (which will be proved later). \cref{thm:node_marking_implies_distance} implies that $\chi^{\mathsf{NM},(D)}\preceq \chi^{\mathsf{DE},(0)}$ when the input graphs have bounded diameter $D$. Then using the same analysis as above, we have $\chi^{\mathsf{NM},(D+t)}\preceq \chi^{\mathsf{DE},(t)}$ for any $t\in\mathbb N$. By taking $t\to\infty$, this implies that $\chi^{\mathsf{NM}}\preceq\chi^{\mathsf{DE}}$, namely, $\mathsf{NM}$ is more powerful than $\mathsf{DE}$ (by \cref{remark:finer}(b)). Combining the two directions concludes the proof of the first bullet.

    The proof for the case $\mathsf{EGO}(k)\text{+}\mathsf{NM}$ vs. $\mathsf{EGO}(k)\text{+}\mathsf{DE}$ is almost the same, so we omit it for clarity.

    We next turn to the case $\mathsf{NM}$ vs. $\mathsf{EGO}(k)\text{+}\mathsf{NM}$. Initially, by definition we have $\chi^{\mathsf{NM},(0)}=\chi^{\mathsf{EGO}(k)\text{+}\mathsf{NM},(0)}$. Therefore, $\chi^{\mathsf{NM},(D)}\preceq\chi^{\mathsf{EGO}(k)\text{+}\mathsf{NM},(0)}$ (by \cref{remark:finer}(a)), where we assume the input graphs have bounded diameter $D$. Below, we aim to prove that $\chi^{\mathsf{NM},(t)}\preceq\chi^{\mathsf{EGO}(k)\text{+}\mathsf{NM},(t-D)}$ for any integer $t\ge D$. We prove it by induction. 

    The base case of $t=D$ already holds. Assume the above result holds for $t=T$ and consider $t=T+1$. Let $G=(\gV_G,\gE_G)$ and $H=(\gV_H,\gE_H)$ be two graphs with diameter no more than $D$. Consider any vertices $u,v\in\gV_G$ and $x,y\in\gV_H$ satisfying $\chi_G^{\mathsf{NM},(T+1)}(u,v)=\chi_H^{\mathsf{NM},(T+1)}(x,y)$, and we want to prove that $\chi_G^{\mathsf{EGO}(k)\text{+}\mathsf{NM},(T+1-D)}(u,v)=\chi_H^{\mathsf{EGO}(k)\text{+}\mathsf{NM},(T+1-D)}(x,y)$. By \cref{def:swl},
    $$\agg_i(u,v,G,\chi_G^{\mathsf{NM},(T)})=\agg_i(x,y,H,\chi_H^{\mathsf{NM},(T)})$$
    holds for all $i\in[r]$. If $\agg_i$ is any single-point aggregation or global aggregation, by induction we clearly have    $\agg_i(u,v,G,\chi_G^{\mathsf{EGO}(k)\text{+}\mathsf{NM},(T-D)})=\agg_i(x,y,H,\chi_H^{\mathsf{EGO}(k)\text{+}\mathsf{NM},(T-D)})$. If $\agg_i$ is any local aggregation, e.g., $\agg^\mathsf{L}_\mathsf{u}$, we have
    \begin{equation}
    \label{eq:proof_policy_expand_1}
    \begin{aligned}
        \ldblbrace \chi_G^{\mathsf{NM},(T)}(u,w): w\in\gN_{G}(v)\rdblbrace=\ldblbrace \chi_H^{\mathsf{NM},(T)}(x,z): z\in\gN_{H}(y)\rdblbrace
    \end{aligned}
    \end{equation}
    for policy $\mathsf{NM}$. We additionally need to prove that
    \begin{equation}
    \label{eq:proof_policy_expand_0}
    \begin{aligned}
        \ldblbrace \chi_G^{\mathsf{NM},(T)}(u,w): w\in\gN_{G^u}(v)\rdblbrace =\ldblbrace \chi_H^{\mathsf{NM},(T)}(x,z): z\in\gN_{H^x}(y)\rdblbrace,
    \end{aligned}
    \end{equation}
    where $G^u$ and $H^x$ are generated by policy $\mathsf{EGO}(k)\text{+}\mathsf{NM}$. This is due to the following observations: if $w \in\gN_G^k(u)$ and $z\notin\gN_H^k(x)$, then $\dis_G(u,w)\neq \dis_H(x,z)$. Therefore, by \cref{thm:node_marking_implies_distance} we have $\chi_G^{\mathsf{NM},(T)}(u,w)\neq \chi_H^{\mathsf{NM},(T)}(x,z)$. Similarly, if $w\notin\gN_G^k(u)$ and $z\in\gN_H^k(x)$, then $\chi_G^{\mathsf{NM},(T)}(u,w)\neq \chi_H^{\mathsf{NM},(T)}(x,z)$. This yields (\ref{eq:proof_policy_expand_0}). By induction,
    \begin{equation*}
    \begin{aligned}
        \ldblbrace \chi_G^{\mathsf{EGO}(k)\text{+}\mathsf{NM},(T-D)}(u,w): w\in\gN_{G^u}(v)\rdblbrace =\ldblbrace \chi_H^{\mathsf{EGO}(k)\text{+}\mathsf{NM},(T-D)}(x,z): z\in\gN_{H^x}(y)\rdblbrace.
    \end{aligned}
    \end{equation*}
    
    Therefore, in all cases we have
    \begin{equation*}
    \begin{aligned}
        \agg_i(u,v,G,\chi_G^{\mathsf{EGO}(k)\text{+}\mathsf{NM},(T-D)})=\agg_i(x,y,H,\chi_H^{\mathsf{EGO}(k)\text{+}\mathsf{NM},(T-D)}).
    \end{aligned}
    \end{equation*}
    This concludes the induction step. We finally obtain that the stable mappings satisfy $\chi^{\mathsf{NM}}\preceq \chi^{\mathsf{EGO}(k)\text{+}\mathsf{NM}}$ and thus $\mathsf{NM}$ is more powerful than ${\mathsf{EGO}(k)\text{+}\mathsf{NM}}$ (by \cref{remark:finer}(b)).

    We next turn to the case $\mathsf{NM}$ vs. $\mathsf{NDM}$. This case is similar to the above one. Initially, by definition we have $\chi^{\mathsf{NM},(0)}=\chi^{\mathsf{NDM},(0)}$. Therefore, $\chi^{\mathsf{NM},(D)}\preceq\chi^{\mathsf{NDM},(0)}$ where we assume the input graphs have bounded diameter $D$. We aim to prove that $\chi^{\mathsf{NM},(t)}\preceq\chi^{\mathsf{NDM},(t-D)}$ for any integer $t\ge D$. We prove it by induction. The base case of $t=D$ already holds.

    Assume the above result holds for $t=T$ and consider $t=T+1$. Let $\chi_G^{\mathsf{NM},(T+1)}(u,v)=\chi_H^{\mathsf{NM},(T+1)}(x,y)$. Then by \cref{def:swl},
    $$\agg_i(u,v,G,\chi_G^{\mathsf{NM},(T)})=\agg_i(x,y,H,\chi_H^{\mathsf{NM},(T)})$$
    holds for all $i\in[r]$. If $\agg_i$ is any single-point aggregation or global aggregation, by induction we have    $\agg_i(u,v,G,\chi_G^{\mathsf{NDM},(T-D)})=\agg_i(x,y,H,\chi_H^{\mathsf{NDM},(T-D)})$. If $\agg_i$ is any local aggregation, e.g., $\agg^\mathsf{L}_\mathsf{u}$, we have (\ref{eq:proof_policy_expand_1}) for policy $\mathsf{NM}$, and we additionally need to prove (\ref{eq:proof_policy_expand_0}), where $G^u$ and $H^x$ are generated by policy $\mathsf{NDM}$. Note that we have $\dis_G(u,v)=\dis_H(x,y)$ due to the assumption $\chi_G^{\mathsf{NM},(T+1)}(u,v)=\chi_H^{\mathsf{NM},(T+1)}(x,y)$ and \cref{thm:node_marking_implies_distance}. Consider the following three cases:
    \begin{itemize}[topsep=0pt]
    \setlength{\itemsep}{0pt}
        \item If $\dis_G(u,v)=\dis_H(x,y)\ge 2$, then $\gN_{G^u}(v)=\gN_{G}(v)$ and $\gN_{H^x}(y)=\gN_{H}(y)$.
        \item If $\dis_G(u,v)=\dis_H(x,y)=1$, then $\gN_{G^u}(v)=\gN_{G}(v)\backslash\{u\}$ and $\gN_{H^x}(y)=\gN_{H}(y)\backslash\{x\}$. We also have $\chi_G^{\mathsf{NM},(T)}(u,u)=\chi_H^{\mathsf{NM},(T)}(x,x)$, because by (\ref{eq:proof_policy_expand_1}) there exists a vertex $z\in\gV_H$ such that $\chi_G^{\mathsf{NM},(T)}(u,u)=\chi_H^{\mathsf{NM},(T)}(x,z)$, implying $0=\dis_G(u,u)=\dis_H(x,z)$ by \cref{thm:node_marking_implies_distance}.
        \item If $\dis_G(u,v)=\dis_H(x,y)=0$, then $\gN_{G^u}(v)=\emptyset$ and $\gN_{H^x}(y)=\emptyset$. 
    \end{itemize}
    In all cases (\ref{eq:proof_policy_expand_0}) holds, which concludes the induction step. We finally obtain that the stable mappings satisfy $\chi^{\mathsf{NM}}\preceq \chi^{\mathsf{NDM}}$ and thus $\mathsf{NM}$ is more powerful than ${\mathsf{NDM}}$ (by \cref{remark:finer}(b)).

    We next turn to the case ${\mathsf{EGO}(k)\text{+}\mathsf{NM}}$ vs. ${\mathsf{EGO}(k)}$. This case follows by a simple induction that $\chi^{\mathsf{EGO}(k)\text{+}\mathsf{NM},(t)}\preceq\chi^{\mathsf{EGO}(k),(t)}$ for all $t\in\mathbb N$.

    We finally turn to the case $\mathsf{NDM}$ vs. $\mathsf{ND}$. This case also follows by a simple induction that $\chi^{\mathsf{NDM},(t)}\preceq\chi^{\mathsf{ND},(t)}$ for all $t\in\mathbb N$.
\end{proof}

It remains to prove the following key lemma:

\begin{lemma}
\label{thm:node_marking_implies_distance}
    Consider an SWL algorithm $\mathsf{A}$ such that the aggregation scheme $\gA$ contains the two basic aggregations $\agg^\mathsf{P}_\mathsf{uv}$ and $\agg^\mathsf{L}_\mathsf{u}$ in \cref{def:swl}, and the node marking policy is used (possibly along with an ego network policy). Denote $\chi^{(t)}$ as the color mapping of $\mathsf{A}$ at iteration $t$. For any graphs $G=(\gV_G,\gE_G)$, $H=(\gV_H,\gE_H)$ and vertices $u,v\in\gV_G$, $x,y\in\gV_H$, when $t\ge \min(D_G,D_H)$, $\chi_G^{(t)}(u,v)=\chi_H^{(t)}(x,y)\implies \dis_{G^u}(u,v)=\dis_{G^x}(x,y)$. Here, $D_G$ and $D_H$ are the diameter of graphs $G$ and $H$, respectively.
\end{lemma}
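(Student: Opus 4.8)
The plan is to prove, by induction on $t$, a sharper statement that pins down exactly how much distance information the color carries at each step: for all graphs $G,H$ and all $u,v\in\gV_G$, $x,y\in\gV_H$,
\[
\chi_G^{(t)}(u,v)=\chi_H^{(t)}(x,y)\ \Longrightarrow\ \min\{\dis_{G^u}(u,v),\,t+1\}=\min\{\dis_{H^x}(x,y),\,t+1\},
\]
with the convention $\min\{\infty,t+1\}:=t+1$ (the value $\infty$ arises only when an ego-network policy disconnects $G^u$). Granting this, the lemma is immediate: assuming without loss of generality $D_G\le D_H$ and $t\ge D_G$, any finite $\dis_{G^u}(u,v)$ is at most $D_G\le t<t+1$ (for $\mathsf{NM}$ because $G^u=G$; for the ego-net case because $\dis_{G^u}(u,v)=\dis_G(u,v)$ on the reachable part), so the left truncation is vacuous and the displayed equality forces $\dis_{H^x}(x,y)=\dis_{G^u}(u,v)$.

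For the base case $t=0$, node marking makes $\chi_G^{(0)}(u,v)$ depend only on whether $u=v$, i.e.\ only on whether $\dis_{G^u}(u,v)=0$, which is precisely $\min\{\dis_{G^u}(u,v),1\}$. For the inductive step I would argue as follows. Since $\agg^\mathsf{P}_\mathsf{uv}\in\gA$, the map $\chi^{(t+1)}$ refines $\chi^{(t)}$ (\cref{remark:finer}(a)), so equality of the $(t+1)$-colors forces equality of the $t$-colors, and the inductive hypothesis gives $\min\{d_1,t+1\}=\min\{d_2,t+1\}$ with $d_1:=\dis_{G^u}(u,v)$, $d_2:=\dis_{H^x}(x,y)$. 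If this common value is $<t+1$ then $d_1=d_2$ and there is nothing more to do; otherwise $d_1,d_2\ge t+1$ and it remains only to show $d_1=t+1\iff d_2=t+1$. This is where the local aggregation enters: if $d_1=t+1$, take the penultimate vertex $w^\ast\in\gN_{G^u}(v)$ on a shortest $u$--$v$ path in $G^u$, so $\dis_{G^u}(u,w^\ast)=t$; because $\agg^\mathsf{L}_\mathsf{u}\in\gA$, the multiset $\ldblbrace\chi_G^{(t)}(u,w):w\in\gN_{G^u}(v)\rdblbrace$ matches $\ldblbrace\chi_H^{(t)}(x,z):z\in\gN_{H^x}(y)\rdblbrace$, so some $z^\ast\in\gN_{H^x}(y)$ has $\chi_G^{(t)}(u,w^\ast)=\chi_H^{(t)}(x,z^\ast)$, and the inductive hypothesis yields $\dis_{H^x}(x,z^\ast)=t$, whence $d_2\le t+1$ and so $d_2=t+1$; the converse is symmetric.

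For the ego-network policies one extra point is needed, since $G^u$ may be disconnected: I would additionally note that for $t\ge1$ the color $\chi_G^{(t)}(u,v)$ detects whether $\gN_{G^u}(v)=\emptyset$ — simply because $\agg^\mathsf{L}_\mathsf{u}$ returns the empty multiset in that case and a nonempty one otherwise — and that in a $k$-hop ego network with $k\ge1$ the condition $\gN_{G^u}(v)=\emptyset$ is exactly $\dis_{G^u}(u,v)=\infty$. Hence an infinite distance on one side cannot be matched by a finite distance on the other, which closes the only remaining gap in the reduction from the first paragraph.

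The main obstacle, and the reason the statement must be phrased with $t+1$ rather than $t$, is the boundary bookkeeping: the naive invariant ``the color at step $t$ determines $\min\{\dis,t\}$'' is genuinely too weak, since it cannot separate $\dis=t$ from $\dis>t$ (as one already sees on a path whose endpoints are $u,v$), and would only yield the lemma for $t\ge\min\{D_G,D_H\}+1$. Getting the truncation threshold to be exactly $t+1$, so that the stated hypothesis $t\ge\min\{D_G,D_H\}$ suffices, is the delicate part, together with the observation that $\agg^\mathsf{L}_\mathsf{u}$ is precisely the ingredient that lets one ``read off'' the next distance shell. The disconnectedness of $G^u$ under ego-network policies is a secondary technicality, handled entirely by the empty-neighborhood remark above.
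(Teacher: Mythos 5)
Your proof is correct and follows essentially the same inductive argument as the paper's: the same base case from node marking, and the same inductive step that picks the penultimate vertex of a shortest path, matches it through the $\agg^\mathsf{L}_\mathsf{u}$ multiset, and applies the inductive hypothesis to pin down the distance one step at a time. The only difference is presentational: the paper phrases the invariant contrapositively (if $\dis_{G^u}(u,v)\neq\dis_{H^x}(x,y)$, then the colors already disagree at iteration $\min(d_1,d_2)$, which one then checks is at most $\min(D_G,D_H)$), leaving the $t{+}1$ truncation threshold and the ego-network $\infty$ case implicit, whereas you surface both explicitly via $\min\{\cdot,t+1\}$ and the empty-neighborhood observation; the underlying content is the same.
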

\begin{proof}
    It suffices to prove the following result: if $\dis_{G^u}(u,v)\neq\dis_{G^x}(x,y)$, then $\chi_G^{(t)}(u,v)\neq\chi_H^{(t)}(x,y)$ for $t=\min(\dis_{G^u}(u,v), \dis_{G^x}(x,y))$. We prove it by induction over $t$.

    For the base case of $t=0$, without loss of generality we can assume $\dis_{G^u}(u,v)=0$ and $\dis_{H^x}(x,y)>0$. For node marking policy, we clearly have $\chi_G^{(0)}(u,v)\neq\chi_H^{(0)}(x,y)$ since $v=u$ but $y\neq x$.

    Assume the result holds for $t\le T$ and consider $t=T+1$. Without loss of generality, we can assume $\dis_{G^u}(u,v)=T+1$ and $\dis_{H^x}(x,y)>T+1$ (remark: $\dis_{H^x}(x,y)$ can be $\infty$ for ego network policy). If $\chi_G^{(T+1)}(u,v)=\chi_H^{(T+1)}(x,y)$, by definition of $\agg^\mathsf{L}_\mathsf{u}$ we have
    $$\ldblbrace \chi_G^{(T)}(u,w): w\in\gN_{G^u}(v)\rdblbrace=\ldblbrace \chi_H^{(T)}(x,z): z\in\gN_{H^x}(y)\rdblbrace.$$
    Pick any vertex $w\in\gN_{G^u}(v)$ satisfying $\dis_{G^u}(u,w)+1=\dis_{G^u}(u,v)$. Then, there is a vertex $z\in\gN_{H^x}(y)$ such that $\chi_G^{(T)}(u,w)=\chi_H^{(T)}(x,z)$. By induction, $\dis_{G^u}(u,w)=\dis_{H^x}(x,z)$. This yields a contradiction, since
    \begin{align*}
        \dis_{H^x}(x,y)\le \dis_{H^x}(x,z)+1=\dis_{G^u}(u,w)+1=\dis_{G^u}(u,v)=T+1.
    \end{align*}
    This concludes the induction step.
\end{proof}

The above lemma directly leads to the following corollary, which is useful in subsequent analysis.
\begin{corollary}
\label{thm:node_marking_distance_corollary}
    Let $\chi$ be the stable color mapping of SWL algorithm $\mathsf{A}(\gA,\pool)$ defined in \cref{def:node_marking_swl}, satisfying $\agg^\mathsf{L}_\mathsf{u}\in\gA$. For any graphs $G=(\gV_G,\gE_G)$, $H=(\gV_H,\gE_H)$ and vertices $u,v\in\gV_G$, $x,y\in\gV_H$, if $\chi_G(u,v)=\chi_H(x,y)$, then
    \begin{itemize}[topsep=0pt]
    \setlength{\itemsep}{0pt}
        \item $\dis_G(u,v)= \dis_H(x,y)$;
        \item $\chi_G(u,u)= \chi_H(x,x)$.
    \end{itemize}
\end{corollary}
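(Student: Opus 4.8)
\textbf{Proof proposal for Corollary~\ref{thm:node_marking_distance_corollary}.}

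The plan is to derive both bullets as easy consequences of Lemma~\ref{thm:node_marking_implies_distance} applied to the stable color mapping $\chi$, using the fact (Remark~\ref{remark:finer}(a)) that the stable mapping $\chi$ is finer than every intermediate mapping $\chi^{(t)}$. First I would note that under the node marking policy on the original graph we have $G^u = G$ and $H^x = H$, so Lemma~\ref{thm:node_marking_implies_distance} specializes to: for $t \ge \min(D_G, D_H)$, $\chi_G^{(t)}(u,v) = \chi_H^{(t)}(x,y) \implies \dis_G(u,v) = \dis_H(x,y)$. Since $\chi_G(u,v) = \chi_H(x,y)$ and $\chi$ is finer than $\chi^{(t)}$ for every $t$, in particular for $t = \min(D_G, D_H)$, we get $\chi_G^{(t)}(u,v) = \chi_H^{(t)}(x,y)$, and hence $\dis_G(u,v) = \dis_H(x,y)$. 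This proves the first bullet.

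For the second bullet, the idea is to use the local aggregation $\agg^\mathsf{L}_\mathsf{u} \in \gA$ together with the distance fact just established. Since $\chi_G(u,v) = \chi_H(x,y)$ and $\chi$ is stable, applying one more SWL iteration preserves equality of the $\agg^\mathsf{L}_\mathsf{u}$ aggregate, so
\begin{equation*}
\ldblbrace \chi_G(u,w) : w \in \gN_G(v) \rdblbrace = \ldblbrace \chi_H(x,z) : z \in \gN_H(y) \rdblbrace.
\end{equation*}
Iterating this along a shortest path from $v$ to $u$ in $G$ (whose length equals $\dis_G(u,v) = \dis_H(x,y)$ by the first bullet), one shows there is a vertex $z' \in \gV_H$ with $\chi_G(u,u) = \chi_H(x,z')$; since $\dis_G(u,u) = 0$, the first bullet forces $\dis_H(x,z') = 0$, i.e.\ $z' = x$, giving $\chi_G(u,u) = \chi_H(x,x)$. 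Alternatively, and more cleanly, I would observe that the color $\chi_G(u,v)$ refines the node mark $\tilde\chi_G^u$, which encodes whether $v = u$; combined with the fact that node marking propagates distance information (exactly the content of Lemma~\ref{thm:node_marking_implies_distance}), one can argue directly that the color of $(u,v)$ determines the color of $(u,u)$ by "walking back" to the marked vertex.

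The main obstacle is the second bullet: the first is essentially a restatement of the lemma, but the second requires care in the inductive walk-back argument — specifically, ensuring that at each step along the shortest path one can pick a neighbor that strictly decreases the distance to $u$ \emph{and} matches colors on the $H$ side, which is exactly the kind of bookkeeping already done inside the proof of Lemma~\ref{thm:node_marking_implies_distance}. I expect the cleanest write-up to invoke the lemma as a black box and run a short induction on $\dis_G(u,v)$, at each stage using the local-aggregation equality to transport a distance-decreasing neighbor from $G$ to $H$ while invoking the first bullet to certify that the transported vertex lies at the correct distance from $x$; the base case $\dis_G(u,v) = 0$ is immediate since then $v = u$ and $y = x$.
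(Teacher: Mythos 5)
Your argument matches the paper's proof: the first bullet is read off directly from Lemma~\ref{thm:node_marking_implies_distance}, and the second bullet is established by induction on $\dis_G(u,v)$, using the multiset equality from $\agg^\mathsf{L}_\mathsf{u}$ together with the already-established distance agreement to transport a distance-decreasing neighbor from $G$ to $H$ at each step, with the base case $\dis_G(u,v)=0$ giving $u=v$, $x=y$. The proposal is correct and essentially identical in strategy.
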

\begin{proof}
    The first bullet directly follows from \cref{thm:node_marking_implies_distance}. The second bullet can be proved by induction over the distance $\dis_G(u,v)$. For the base case of $\dis_G(u,v)=\dis_H(x,y)=0$, $u=v$, $x=y$, and the result already holds. For the induction step, the proof is similar to the above proof of \cref{thm:node_marking_implies_distance}, in that we can find $w\in\gN_G(v)$ and $z\in\gN_H(y)$ such that $\dis_G(u,w)+1=\dis_G(u,v)$ and $\dis_H(x,z)+1=\dis_H(x,y)$, and $\chi_G(u,w)=\chi_H(x,z)$ (we omit the detail here). This finishes the induction step and concludes the proof.
\end{proof}

\subsection{Hierarchy of different aggregation schemes}
\label{sec:proof_aggregation_schemes}
This subsection gives a complete analysis of different aggregation schemes in SWL, which is related to the proofs of \cref{thm:aggregation}. Note that we focus on the canonical node marking policy with a fixed pooling paradigm $\pool$ throughout this subsection. In all proofs, we denote $G=(\gV_G,\gE_G)$ and $H=(\gV_H,\gE_H)$ as any connected graphs.

\begin{lemma}
\label{thm:local_lemma}
    Let $\chi$ be the stable color mapping of SWL algorithm $\mathsf{A}(\gA,\pool)$ defined in \cref{def:node_marking_swl}, satisfying $\agg^\mathsf{L}_\mathsf{u}\in\gA$. For any vertices $u\in\gV_G$ and $x\in\gV_H$, if $\chi_G(u,u)=\chi_H(x,x)$, then $\ldblbrace \chi_G(u,v):v\in\gV_G\rdblbrace=\ldblbrace \chi_H(x,y):y\in\gV_H\rdblbrace$.
\end{lemma}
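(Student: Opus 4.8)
The plan is to prove the stronger ``ball equality''
\[
(\ast_k):\qquad \ldblbrace \chi_G(u,v):v\in\gN_G^k(u)\rdblbrace = \ldblbrace\chi_H(x,y):y\in\gN_H^k(x)\rdblbrace
\]
for every $k\ge 0$, by induction on $k$; since $G$ and $H$ are finite and connected, taking $k>\max(|\gV_G|,|\gV_H|)$ makes both balls equal to the full vertex sets and recovers the lemma. Two facts will be used throughout. First, because $\chi$ is stable it is a fixed point of the refinement operator, so $\chi_G(u,v)=\chi_H(x,y)$ forces every aggregation to agree on these pairs; in particular $\agg^\mathsf{L}_\mathsf{u}$ gives $\ldblbrace\chi_G(u,w):w\in\gN_G(v)\rdblbrace=\ldblbrace\chi_H(x,z):z\in\gN_H(y)\rdblbrace$ (recall $G^u=G$ under node marking). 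Second, by \cref{thm:node_marking_distance_corollary} the colour $\chi_G(u,v)$ determines $\dis_G(u,v)$, so any multiset identity between $u$-colours and $x$-colours may be split according to the distance from the marked vertex. Write $L_G^k:=\{w\in\gV_G:\dis_G(u,w)=k\}$ and $L_H^k$ analogously.

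The base case $k=0$ is the hypothesis $\chi_G(u,u)=\chi_H(x,x)$; in fact applying $\agg^\mathsf{L}_\mathsf{u}$ at $(u,u)$ and $(x,x)$ already yields $(\ast_1)$. For the inductive step, assume $(\ast_k)$ and choose a colour-preserving bijection $\pi\colon\gN_G^k(u)\to\gN_H^k(x)$; by the second fact it restricts to a bijection $L_G^k\to L_H^k$. For each $w\in L_G^k$, applying the first fact at $(u,w)$ and $(x,\pi(w))$ and then keeping only the distance-$(k+1)$ part gives
\[
\ldblbrace\chi_G(u,v):v\in\gN_G(w)\cap L_G^{k+1}\rdblbrace=\ldblbrace\chi_H(x,z):z\in\gN_H(\pi(w))\cap L_H^{k+1}\rdblbrace .
\]
Summing this over all $w\in L_G^k$ (equivalently over all $\pi(w)\in L_H^k$), a vertex $v\in L_G^{k+1}$ is counted once per neighbour it has in $L_G^k$, i.e. with multiplicity $p_G(v):=|\gN_G(v)\cap L_G^k|\ge 1$, and likewise on the right with $p_H(z)$. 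The key observation is that $p_G(v)$ depends only on $\chi_G(u,v)$: if $\chi_G(u,v)=\chi_H(x,z)$ then the first fact together with \cref{thm:node_marking_distance_corollary} forces $p_G(v)=p_H(z)$. Denoting this common value $p(c)$ for a colour $c$ at distance $k+1$, the summed identity reads $p(c)\cdot\#\{v\in L_G^{k+1}:\chi_G(u,v)=c\}=p(c)\cdot\#\{z\in L_H^{k+1}:\chi_H(x,z)=c\}$; cancelling the positive integer $p(c)$ gives $\ldblbrace\chi_G(u,v):v\in L_G^{k+1}\rdblbrace=\ldblbrace\chi_H(x,z):z\in L_H^{k+1}\rdblbrace$, and combining this with $(\ast_k)$ (distance-$(k+1)$ colours being absent from the radius-$k$ ball) yields $(\ast_{k+1})$.

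The main obstacle is exactly the double counting created by summing the per-$w$ neighbour identities: a distance-$(k+1)$ vertex can have several ``BFS parents'' in layer $k$, so the raw sum only delivers a \emph{weighted} multiset equality. The remedy, and the one step that requires careful verification, is the colour-invariance of the parent count $p_G$ (via stability of $\chi$ and \cref{thm:node_marking_distance_corollary}) together with the fact that it is a nonzero integer, which is precisely what allows the weights to be cancelled.
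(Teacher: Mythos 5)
Your proof is correct and takes essentially the same approach as the paper's: induction on the ball radius $k$, using stability of $\chi$ with $\agg^\mathsf{L}_\mathsf{u}$ and \cref{thm:node_marking_distance_corollary} to restrict neighbourhood multisets to the outer shell, then observing that the ``parent count'' $|\gN_G(w)\cap L_G^k|$ is colour-determined so the weighted shell equality can be divided through to an unweighted one. The only cosmetic differences are that you make the colour-preserving bijection $\pi$ and the cancellation by $p(c)\ge 1$ explicit, which the paper leaves slightly more implicit.
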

\begin{proof}
    Actually, we can prove a stronger result: for any $k\in\mathbb N$,
    \begin{equation}
    \label{eq:proof_local_lemma_0}
        \ldblbrace \chi_G(u,v): v\in\gN_{G}^k(u)\rdblbrace=\ldblbrace \chi_H(x,y): y\in\gN_{H}^k(x)\rdblbrace.
    \end{equation}
    This implies \cref{thm:local_lemma} because $G$ and $H$ are connected graphs.
    
    We prove it by induction over $k$. The base case of $k=0$ is trivial. Now assume (\ref{eq:proof_local_lemma_0}) holds for all $k\le K$, and we want to prove that (\ref{eq:proof_local_lemma_0}) holds for $k=K+1$. Using the condition $\agg^\mathsf{L}_\mathsf{u}\in\gA$, for any vertices $v\in\gV_G$ and $y\in\gV_H$ satisfying $\chi_G(u,v)=\chi_H(x,y)$, we have
    \begin{equation}
    \label{eq:proof_local_lemma_1}
        \ldblbrace \chi_G(u,w): w\in\gN_{G}(v)\rdblbrace=\ldblbrace \chi_H(x,z): z\in\gN_{H}(y)\rdblbrace.
    \end{equation}
    Combining (\ref{eq:proof_local_lemma_1}) with (\ref{eq:proof_local_lemma_0}), we obtain
    \begin{equation}
    \label{eq:proof_local_lemma_2}
    \begin{aligned}
        \bigcup_{v\in\gD_G^{K}(u)}\ldblbrace \chi_G(u,w):w\in\gN_G(v)\rdblbrace=\bigcup_{y\in\gD_H^{K}(x)}\ldblbrace \chi_H(x,z):z\in\gN_H(y)\rdblbrace,
    \end{aligned}
    \end{equation}
    where we define $$\gD_G^K(u):=\gN_G^{K}(u)\backslash\gN_G^{K-1}(u)=\{v\in\gV_G:\dis_G(u,v)=K\}.$$
    Here, each vertex $w$ in (\ref{eq:proof_local_lemma_2}) satisfies $K-1\le\dis_G(u,w)\le K+1$, and each vertex $z$ in (\ref{eq:proof_local_lemma_2}) satisfies $K-1\le\dis_H(x,z)\le K+1$. By \cref{thm:node_marking_distance_corollary}, for any vertices $w\in\gV_G$ and $z\in\gV_H$, $\dis_G(u,w)\neq\dis_H(x,z)$ implies $\chi_G(u,w)\neq\chi_H(x,z)$. Therefore,
    \begin{equation}
    \label{eq:proof_local_lemma_3}
    \begin{aligned}
        \bigcup_{v\in\gD_G^{K}(u)}\ldblbrace \chi_G(u,w):w\in\gN_G(v)\cap\gD_G^{K+1}(u) \rdblbrace=\bigcup_{y\in\gD_H^{K}(x)}\ldblbrace \chi_H(x,z):z\in\gN_H(y)\cap \gD_H^{K+1}(x)\rdblbrace.
    \end{aligned}
    \end{equation}
    Rearranging the terms in (\ref{eq:proof_local_lemma_3}) yields an equivalent formula:
    \begin{equation}
    \label{eq:proof_local_lemma_4}
    \begin{aligned}
        \bigcup_{w\in\gD_G^{K+1}(u)}\ldblbrace \chi_G(u,w)\rdblbrace\times |\gN_G(w)\cap\gD_G^{K}(u)|=\bigcup_{z\in\gD_H^{K+1}(x)}\ldblbrace \chi_H(x,z)\rdblbrace\times |\gN_H(z)\cap\gD_H^{K}(x)|,
    \end{aligned}
    \end{equation}
    where we denote $\ldblbrace c\rdblbrace\times M$ as a multiset containing $M$ repeated elements $c$. Next, note that if $\chi_G(u,w)=\chi_H(x,z)$ for some $w\in\gV_G$ and $z\in\gV_H$, then by (\ref{eq:proof_local_lemma_1}) and \cref{thm:node_marking_distance_corollary}, we have $|\gN_G(w)\cap\gD_G^{K}(u)|=|\gN_H(z)\cap\gD_H^{K}(x)|$. This proves
    $$\ldblbrace \chi_G(u,w): w\in\gN_{G}^{K+1}(u)\rdblbrace=\ldblbrace \chi_H(x,z): z\in\gN_{H}^{K+1}(x)\rdblbrace$$
    and finishes the induction step. 
\end{proof}

\begin{corollary}
\label{thm:local_gt_global}
    Let $\chi^\mathsf{L}$, $\chi^\mathsf{G}$, and $\chi^\mathsf{LG}$ be the stable color mappings of SWL algorithms $\mathsf{A}(\gA\cup\{\agg^\mathsf{L}_{\mathsf{u}}\},\pool)$, $\mathsf{A}(\gA\cup\{\agg^\mathsf{G}_{\mathsf{u}}\},\pool)$ and $\mathsf{A}(\gA\cup\{\agg^\mathsf{L}_{\mathsf{u}},\agg^\mathsf{G}_{\mathsf{u}}\},\pool)$, respectively. Then, $\chi^\mathsf{LG}\simeq\chi^\mathsf{L}\preceq\chi^\mathsf{G}$.
\end{corollary}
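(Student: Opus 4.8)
The plan is to combine the two facts already in hand, \cref{thm:local_lemma} and \cref{thm:node_marking_distance_corollary}, with the ``refinement transfer'' principle of \cref{remark:finer}(c). The only substantive observation is that, once $\agg^\mathsf{L}_\mathsf{u}$ is present in the scheme, the global aggregation $\agg^\mathsf{G}_\mathsf{u}$ contributes nothing to the \emph{stable} coloring: its value at a pair $(u,v)$ is already a function of the stable color $\chi(u,v)$ itself. Everything else is routine monotonicity bookkeeping.

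\emph{Step 1 (the global aggregation is redundant given $\agg^\mathsf{L}_\mathsf{u}$).} Let $\chi$ be the stable color mapping of any node marking SWL algorithm whose aggregation scheme contains $\agg^\mathsf{L}_\mathsf{u}$. I would first show that $\chi_G(u,v)=\chi_H(x,y)$ forces the global aggregation to agree, i.e. $\ldblbrace\chi_G(u,w):w\in\gV_G\rdblbrace=\ldblbrace\chi_H(x,z):z\in\gV_H\rdblbrace$. Indeed, the second bullet of \cref{thm:node_marking_distance_corollary} gives $\chi_G(u,u)=\chi_H(x,x)$, and \cref{thm:local_lemma} then yields precisely this multiset equality. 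Thus, for the stable coloring $\chi$, the map $\mathscr T(\gA'\cup\{\agg^\mathsf{G}_\mathsf{u}\},\chi)$ (in the notation of \cref{remark:finer}(c)), which refines $\chi$ by the aggregations in $\gA'\cup\{\agg^\mathsf{G}_\mathsf{u},\agg^\mathsf{P}_\mathsf{uv}\}$, satisfies $\chi\preceq\mathscr T(\gA'\cup\{\agg^\mathsf{G}_\mathsf{u}\},\chi)$ whenever $\agg^\mathsf{L}_\mathsf{u}\in\gA'$: the aggregations in $\gA'\cup\{\agg^\mathsf{P}_\mathsf{uv}\}$ agree because $\chi$ is stable under $\gA'$, and $\agg^\mathsf{G}_\mathsf{u}$ agrees by the claim just established.

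\emph{Step 2 ($\chi^\mathsf{L}\preceq\chi^\mathsf{G}$).} Apply \cref{remark:finer}(c) with $\mathsf A$ the algorithm of scheme $\gA\cup\{\agg^\mathsf{G}_\mathsf{u}\}$ (stable coloring $\chi^\mathsf{G}$) and $\mathsf B$ the algorithm of scheme $\gA\cup\{\agg^\mathsf{L}_\mathsf{u}\}$ (stable coloring $\chi^\mathsf{L}$). The hypothesis required is $\chi^\mathsf{L}\preceq\mathscr T(\gA\cup\{\agg^\mathsf{G}_\mathsf{u}\},\chi^\mathsf{L})$, which is exactly Step 1 with $\gA'=\gA\cup\{\agg^\mathsf{L}_\mathsf{u}\}$; hence $\chi^\mathsf{L}\preceq\chi^\mathsf{G}$. \emph{Step 3 ($\chi^\mathsf{LG}\simeq\chi^\mathsf{L}$).} The direction $\chi^\mathsf{LG}\preceq\chi^\mathsf{L}$ is the standard ``adding an aggregation yields an at-least-as-fine coloring'', proved by an easy induction over the iteration count $t$ (the scheme of $\chi^\mathsf{LG}$ contains that of $\chi^\mathsf{L}$, so $\chi^{\mathsf{LG},(t)}\preceq\chi^{\mathsf{L},(t)}$, and one lets $t\to\infty$). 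For the converse $\chi^\mathsf{L}\preceq\chi^\mathsf{LG}$, apply \cref{remark:finer}(c) once more with $\mathsf A$ the algorithm of scheme $\gA\cup\{\agg^\mathsf{L}_\mathsf{u},\agg^\mathsf{G}_\mathsf{u}\}$; the needed hypothesis $\chi^\mathsf{L}\preceq\mathscr T(\gA\cup\{\agg^\mathsf{L}_\mathsf{u},\agg^\mathsf{G}_\mathsf{u}\},\chi^\mathsf{L})$ splits into the aggregations of $\gA\cup\{\agg^\mathsf{L}_\mathsf{u},\agg^\mathsf{P}_\mathsf{uv}\}$ (agreeing by stability of $\chi^\mathsf{L}$) and $\agg^\mathsf{G}_\mathsf{u}$ (agreeing by Step 1). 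Chaining Steps 2 and 3 gives $\chi^\mathsf{LG}\simeq\chi^\mathsf{L}\preceq\chi^\mathsf{G}$.

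I do not expect a genuine obstacle: all the combinatorial work is already packaged in \cref{thm:local_lemma} (whose own proof does the propagation-of-distance-layers induction) and \cref{thm:node_marking_distance_corollary}. The only thing to be careful about is matching the aggregation schemes correctly when invoking \cref{remark:finer}(c), and noting that the multiset-equality claim of Step 1 must be used for the \emph{stable} colorings (so the route through \cref{remark:finer}(c), rather than a direct induction over finite iterations, is the clean one, since \cref{thm:local_lemma} is stated only for the stable coloring).
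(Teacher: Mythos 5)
Your proof is correct and follows the same route as the paper's: it establishes, via \cref{thm:node_marking_distance_corollary} and \cref{thm:local_lemma}, that the stable coloring of any scheme containing $\agg^\mathsf{L}_\mathsf{u}$ already refines the global aggregation $\agg^\mathsf{G}_\mathsf{u}$, and then invokes \cref{remark:finer}(c) to transfer this to a comparison of stable colorings. The paper is slightly more terse (it folds your Steps 2 and 3 into a single application of the same argument and takes the trivial direction $\chi^\mathsf{LG}\preceq\chi^\mathsf{L}$ for granted without comment), but the two proofs are essentially identical.
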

\begin{proof}
    The proof is based on \cref{remark:finer}(c). We first prove that $\chi^\mathsf{L}\preceq\chi^\mathsf{G}$. Define an auxiliary color mapping $\tilde\chi=\mathscr{T}(\gA\cup\{\agg^\mathsf{G}_{\mathsf{u}}\},\chi^\mathsf{L})$ where $\mathscr{T}$ is defined in (\ref{eq:mapping_transform}). It suffices to prove that $\chi^\mathsf{L}\preceq\tilde\chi$.

    Consider any vertices $u,v\in\gV_G$ and $x,y\in\gV_H$ satisfying $\chi_G^\mathsf{L}(u,v)=\chi_H^\mathsf{L}(x,y)$. Since the mapping $\chi^\mathsf{L}$ is already stable, for any $\agg\in\gA$, we have
    $$\agg(u,v,G,\chi_G^\mathsf{L})=\agg(x,y,H,\chi_H^\mathsf{L}).$$
    Moreover, due to the use of $\agg^\mathsf{L}_\mathsf{u}$, by \cref{thm:node_marking_distance_corollary} we have $\chi_G^\mathsf{L}(u,u)=\chi_H^\mathsf{L}(x,x)$. Using \cref{thm:local_lemma}, we further obtain
    \begin{equation*}
        \ldblbrace \chi_G^\mathsf{L}(u,w): w\in\gV_G\rdblbrace=\ldblbrace \chi_H^\mathsf{L}(x,z): z\in\gV_H\rdblbrace.
    \end{equation*}
    Namely, 
    $$\agg^\mathsf{G}_\mathsf{u}(u,v,G,\chi_G^\mathsf{L})=\agg^\mathsf{G}_\mathsf{u}(x,y,H,\chi_H^\mathsf{L}).$$
    Therefore, $\tilde\chi(u,v)=\tilde\chi(x,y)$. We have proved $\chi^\mathsf{L}\preceq\tilde\chi$.

    We next turn to $\chi^\mathsf{L}\simeq\chi^\mathsf{LG}$, for which it suffices to prove $\chi^\mathsf{L}\preceq\chi^\mathsf{LG}$. The process is exactly the same as above.
\end{proof}

\begin{lemma}
\label{thm:global_lemma}
    Let $\chi$ be the stable color mapping of SWL algorithm $\mathsf{A}(\gA,\pool)$ defined in \cref{def:node_marking_swl}, satisfying $\agg^\mathsf{G}_\mathsf{u}\in\gA$. For any vertices $u,v\in\gV_G$ and $x,y\in\gV_H$, if $\chi_G(u,v)=\chi_H(x,y)$, then $\chi_G(u,u)=\chi_H(x,x)$.
\end{lemma}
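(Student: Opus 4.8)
The plan is to combine the global aggregation $\agg^\mathsf{G}_\mathsf{u}$ with the special structure of the node marking initialization, under which a pair $(u,v)$ receives a distinguished ``marked'' initial color exactly when $u=v$.

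First, since $\chi$ is the \emph{stable} color mapping of $\mathsf{A}(\gA,\pool)$ and $\agg^\mathsf{G}_\mathsf{u}\in\gA$, the hypothesis $\chi_G(u,v)=\chi_H(x,y)$ forces every aggregation in the scheme to agree on the two pairs; in particular
\[
    \ldblbrace\chi_G(u,w):w\in\gV_G\rdblbrace=\ldblbrace\chi_H(x,z):z\in\gV_H\rdblbrace
\]
as multisets. (The only point worth spelling out is the cross-graph version of this stability statement; it follows from the fact that a stable coloring is not refined by one further SWL iteration, equivalently from running the algorithm on the disjoint union $G\sqcup H$.) Taking $w=u$ on the left-hand side, the color $\chi_G(u,u)$ must occur on the right-hand side, so there exists $z_0\in\gV_H$ with $\chi_H(x,z_0)=\chi_G(u,u)$.

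Second, by \cref{remark:finer}(a) the stable mapping $\chi$ refines the initial mapping $\chi^{(0)}$, hence $\chi_H^{(0)}(x,z_0)=\chi_G^{(0)}(u,u)$. Under the node marking policy, $\chi_G^{(0)}(u,u)$ is the ``marked'' color because $u=u$, while $\chi_H^{(0)}(x,z_0)$ carries that color only if $x=z_0$. Therefore $z_0=x$, which yields $\chi_G(u,u)=\chi_H(x,x)$, as desired.

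There is essentially no obstacle here: the argument is short, and the only mild technical care needed is the cross-graph version of stability of $\chi$, which is a routine fact about WL-type colorings. The statement is the natural analogue, for the global aggregation $\agg^\mathsf{G}_\mathsf{u}$, of the second bullet of \cref{thm:node_marking_distance_corollary}, which drew the same conclusion from $\agg^\mathsf{L}_\mathsf{u}$.
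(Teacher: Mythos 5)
Your proof is correct and matches the paper's argument essentially step for step: both use stability of $\chi$ together with $\agg^\mathsf{G}_\mathsf{u}\in\gA$ to extract the multiset identity $\ldblbrace\chi_G(u,w):w\in\gV_G\rdblbrace=\ldblbrace\chi_H(x,z):z\in\gV_H\rdblbrace$, find a $z_0$ with $\chi_H(x,z_0)=\chi_G(u,u)$, and invoke the node marking initialization to force $z_0=x$. The extra remarks you add (cross-graph stability via the disjoint union, refinement from \cref{remark:finer}(a)) are sound and only make explicit what the paper leaves implicit.
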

\begin{proof}
    Since $\agg^\mathsf{G}_\mathsf{u}\in\gA$, we have
    \begin{equation*}
        \ldblbrace \chi_G(u,w): w\in\gV_G\rdblbrace=\ldblbrace \chi_H(x,z): z\in\gV_H\rdblbrace.
    \end{equation*}
    Therefore, there is a vertex $z\in\gV_H$ such that $\chi_G(u,u)=\chi_H(x,z)$. By definition of node marking policy, we must have $x=z$ (otherwise, the initial color satisfies $\chi_G^{(0)}(u,u)\neq\chi_H^{(0)}(x,z)$, a contradiction). This already proves that $\chi_G(u,u)=\chi_H(x,x)$.
\end{proof}

\begin{corollary}
\label{thm:global_gt_single_point}
    Let $\chi^\mathsf{G}$, $\chi^\mathsf{P}$, and $\chi^\mathsf{GP}$ be the stable color mappings of SWL algorithms $\mathsf{A}(\gA\cup\{\agg^\mathsf{G}_{\mathsf{u}}\},\pool)$, $\mathsf{A}(\gA\cup\{\agg^\mathsf{P}_{\mathsf{uu}}\},\pool)$, and $\mathsf{A}(\gA\cup\{\agg^\mathsf{G}_{\mathsf{u}},\agg^\mathsf{P}_{\mathsf{uu}}\},\pool)$, respectively. Then, $\chi^\mathsf{GP}\simeq\chi^\mathsf{G}\preceq\chi^\mathsf{P}$.
\end{corollary}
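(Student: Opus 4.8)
The plan is to imitate the proof of \cref{thm:local_gt_global} essentially line for line, with \cref{thm:global_lemma} playing the role that \cref{thm:local_lemma} played there, and with \cref{remark:finer}(c) as the workhorse.

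\textbf{Step 1: $\chi^\mathsf{G}\preceq\chi^\mathsf{P}$.} Introduce the auxiliary color mapping $\tilde\chi:=\mathscr{T}(\gA\cup\{\agg^\mathsf{P}_{\mathsf{uu}}\},\chi^\mathsf{G})$, that is, one round of the aggregation scheme $\gA\cup\{\agg^\mathsf{P}_{\mathsf{uu}},\agg^\mathsf{P}_{\mathsf{uv}}\}$ applied on top of the already-stable mapping $\chi^\mathsf{G}$. By \cref{remark:finer}(c), applied with $\mathsf{B}$ the algorithm $\mathsf{A}(\gA\cup\{\agg^\mathsf{G}_{\mathsf{u}}\},\pool)$ producing $\chi^\mathsf{G}$ and $\mathsf{A}$ the algorithm $\mathsf{A}(\gA\cup\{\agg^\mathsf{P}_{\mathsf{uu}}\},\pool)$ producing $\chi^\mathsf{P}$, it suffices to show $\chi^\mathsf{G}\preceq\tilde\chi$. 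So fix graphs $G=(\gV_G,\gE_G)$, $H=(\gV_H,\gE_H)$ and vertices $u,v\in\gV_G$, $x,y\in\gV_H$ with $\chi_G^\mathsf{G}(u,v)=\chi_H^\mathsf{G}(x,y)$. Since $\chi^\mathsf{G}$ is stable, every $\agg\in\gA\cup\{\agg^\mathsf{P}_{\mathsf{uv}}\}$ satisfies $\agg(u,v,G,\chi_G^\mathsf{G})=\agg(x,y,H,\chi_H^\mathsf{G})$; and because $\agg^\mathsf{G}_{\mathsf{u}}$ lies in the scheme of $\chi^\mathsf{G}$, \cref{thm:global_lemma} yields $\chi_G^\mathsf{G}(u,u)=\chi_H^\mathsf{G}(x,x)$, which is precisely the statement that $\agg^\mathsf{P}_{\mathsf{uu}}(u,v,G,\chi_G^\mathsf{G})=\agg^\mathsf{P}_{\mathsf{uu}}(x,y,H,\chi_H^\mathsf{G})$. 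Hence all arguments of $\hash$ in the definition of $\tilde\chi$ agree, so $\tilde\chi_G(u,v)=\tilde\chi_H(x,y)$, proving $\chi^\mathsf{G}\preceq\tilde\chi$ and therefore $\chi^\mathsf{G}\preceq\chi^\mathsf{P}$.

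\textbf{Step 2: $\chi^\mathsf{GP}\simeq\chi^\mathsf{G}$.} The direction $\chi^\mathsf{GP}\preceq\chi^\mathsf{G}$ is immediate, since $\gA\cup\{\agg^\mathsf{G}_{\mathsf{u}}\}\subseteq\gA\cup\{\agg^\mathsf{G}_{\mathsf{u}},\agg^\mathsf{P}_{\mathsf{uu}}\}$, so the stable mapping of the larger scheme is at least as fine (a routine induction on the iteration count, exactly as in \cref{remark:finer}(a)). For the converse, repeat Step 1 verbatim with $\tilde\chi:=\mathscr{T}(\gA\cup\{\agg^\mathsf{G}_{\mathsf{u}},\agg^\mathsf{P}_{\mathsf{uu}}\},\chi^\mathsf{G})$: the component $\agg^\mathsf{G}_{\mathsf{u}}$ is handled by stability of $\chi^\mathsf{G}$, and $\agg^\mathsf{P}_{\mathsf{uu}}$ by \cref{thm:global_lemma} exactly as above, so $\chi^\mathsf{G}\preceq\tilde\chi$ and hence $\chi^\mathsf{G}\preceq\chi^\mathsf{GP}$ by \cref{remark:finer}(c). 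Combining the two directions gives $\chi^\mathsf{GP}\simeq\chi^\mathsf{G}$.

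\textbf{Main obstacle.} There is essentially no computational content here; the one thing to be careful about is matching hypotheses correctly. Concretely, one must check that $\agg^\mathsf{G}_{\mathsf{u}}$ really is present in the scheme of the finer mapping $\chi^\mathsf{G}$ so that \cref{thm:global_lemma} applies, and one must get the roles of $\mathsf{A}$ and $\mathsf{B}$ in \cref{remark:finer}(c) the right way around — the single-point algorithm is the weaker $\mathsf{A}$, the global algorithm is the stronger $\mathsf{B}$, and $\tilde\chi$ is built from $\mathsf{A}$'s scheme applied to $\mathsf{B}$'s stable mapping. Once these bookkeeping points are fixed, the argument is a transplant of the proof of \cref{thm:local_gt_global}.
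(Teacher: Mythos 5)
Your proposal is correct and follows essentially the same route as the paper: define $\tilde\chi=\mathscr{T}(\gA\cup\{\agg^\mathsf{P}_{\mathsf{uu}}\},\chi^\mathsf{G})$, invoke \cref{thm:global_lemma} to handle the $\agg^\mathsf{P}_{\mathsf{uu}}$ term and stability of $\chi^\mathsf{G}$ for everything else, then apply \cref{remark:finer}(c). The paper treats the direction $\chi^\mathsf{GP}\preceq\chi^\mathsf{G}$ as immediate and only writes out $\chi^\mathsf{G}\preceq\chi^\mathsf{P}$ (noting that $\chi^\mathsf{G}\preceq\chi^\mathsf{GP}$ is identical), which matches your Steps 1 and 2.
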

\begin{proof}
    Similar to \cref{thm:local_gt_global}, the proof is based on \cref{remark:finer}(c). We only prove $\chi^\mathsf{G}\preceq\chi^\mathsf{P}$, and the proof of $\chi^\mathsf{G}\preceq\chi^\mathsf{GP}$ is exactly the same. Define an auxiliary color mapping $\tilde\chi=\mathscr{T}(\gA\cup\{\agg^\mathsf{P}_{\mathsf{uu}}\},\chi^\mathsf{G})$ where $\mathscr{T}$ is defined in (\ref{eq:mapping_transform}). It suffices to prove that $\chi^\mathsf{G}\preceq\tilde\chi$.

    Consider any vertices $u,v\in\gV_G$ and $x,y\in\gV_H$ satisfying $\chi_G^\mathsf{G}(u,v)=\chi_H^\mathsf{G}(x,y)$. Due to the presence of $\agg^\mathsf{G}_{\mathsf{u}}$, by \cref{thm:global_lemma} we have $\chi_G^\mathsf{G}(u,u)=\chi_H^\mathsf{G}(x,x)$. This already implies 
    $$\agg^\mathsf{P}_\mathsf{uu}(u,v,G,\chi_G^\mathsf{G})=\agg^\mathsf{P}_\mathsf{uu}(x,y,H,\chi_H^\mathsf{G}).$$
    For any $\agg\in\gA$, we also have
    $$\agg(u,v,G,\chi_G^\mathsf{G})=\agg(x,y,H,\chi_H^\mathsf{G}).$$
    Therefore, $\tilde\chi(u,v)=\tilde\chi(x,y)$, namely, $\chi^\mathsf{G}\preceq\tilde\chi$.
\end{proof}

\begin{lemma}
\label{thm:local_local_lemma}
    Let $\chi$ be the stable color mapping of SWL algorithm $\mathsf{A}(\{\agg^\mathsf{L}_\mathsf{u},\agg^\mathsf{L}_\mathsf{v}\},\pool)$. Then for any vertices $u,v\in\gV_G$ and $x,y\in\gV_H$, if $\chi_G(u,v)=\chi_H(x,y)$, then $\chi_G(v,u)=\chi_H(y,x)$.
\end{lemma}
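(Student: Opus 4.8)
The plan is to prove, by induction on the iteration count $t$, the strengthened statement that for every pair of graphs $G,H$, every $t\in\mathbb{N}$, and all $u_1,v_1\in\gV_G$, $u_2,v_2\in\gV_H$,
\[
\chi_G^{(t)}(u_1,v_1)=\chi_H^{(t)}(u_2,v_2)\ \Longrightarrow\ \chi_G^{(t)}(v_1,u_1)=\chi_H^{(t)}(v_2,u_2),
\]
and then to transfer this to the stable mappings by choosing $t\ge\max(|\gV_G|^2,|\gV_H|^2)$, which is large enough that $\chi_G^{(t)},\chi_H^{(t)}$ coincide with $\chi_G,\chi_H$. For the base case $t=0$: under the node marking policy $\chi_G^{(0)}(u,v)$ depends only on whether $u=v$, so $\chi_G^{(0)}(u,v)=\chi_G^{(0)}(v,u)$ and the implication is trivial.

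For the inductive step, the key structural observation is that the aggregation scheme of $\mathsf{A}(\{\agg^\mathsf{L}_\mathsf{u},\agg^\mathsf{L}_\mathsf{v}\},\pool)$, namely $\{\agg^\mathsf{P}_\mathsf{uv},\agg^\mathsf{L}_\mathsf{u},\agg^\mathsf{L}_\mathsf{v}\}$, is invariant under exchanging the roles of $\mathsf{u}$ and $\mathsf{v}$; hence the expression computing $\chi_G^{(t+1)}(v_1,u_1)$ is obtained from the one computing $\chi_G^{(t+1)}(u_1,v_1)$ by swapping the two arguments throughout. Using that $\hash$ is a perfect hash, $\chi_G^{(t+1)}(u_1,v_1)=\chi_H^{(t+1)}(u_2,v_2)$ is equivalent to $\chi_G^{(t)}(u_1,v_1)=\chi_H^{(t)}(u_2,v_2)$ together with the two multiset equalities $\ldblbrace\chi_G^{(t)}(u_1,w):w\in\gN_G(v_1)\rdblbrace=\ldblbrace\chi_H^{(t)}(u_2,w):w\in\gN_H(v_2)\rdblbrace$ and $\ldblbrace\chi_G^{(t)}(w,v_1):w\in\gN_G(u_1)\rdblbrace=\ldblbrace\chi_H^{(t)}(w,v_2):w\in\gN_H(u_2)\rdblbrace$. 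I must derive from these the three analogous equalities with $u$ and $v$ swapped. The first is immediate from the induction hypothesis applied to $\chi_G^{(t)}(u_1,v_1)=\chi_H^{(t)}(u_2,v_2)$.

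The main technical point is handling the two multiset equalities. From, say, $\ldblbrace\chi_G^{(t)}(w,v_1):w\in\gN_G(u_1)\rdblbrace=\ldblbrace\chi_H^{(t)}(w,v_2):w\in\gN_H(u_2)\rdblbrace$ I extract a color-preserving bijection $\phi\colon\gN_G(u_1)\to\gN_H(u_2)$ with $\chi_G^{(t)}(w,v_1)=\chi_H^{(t)}(\phi(w),v_2)$ for all $w$; applying the induction hypothesis to each matched pair $(w,v_1)$ and $(\phi(w),v_2)$ gives $\chi_G^{(t)}(v_1,w)=\chi_H^{(t)}(v_2,\phi(w))$, and re-indexing along the bijection $\phi$ yields $\ldblbrace\chi_G^{(t)}(v_1,w):w\in\gN_G(u_1)\rdblbrace=\ldblbrace\chi_H^{(t)}(v_2,w):w\in\gN_H(u_2)\rdblbrace$. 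The other multiset equality is transposed symmetrically. Re-assembling via the ($\mathsf{u}\leftrightarrow\mathsf{v}$)-swapped update formula then gives $\chi_G^{(t+1)}(v_1,u_1)=\chi_H^{(t+1)}(v_2,u_2)$, completing the induction; the stabilization bound then delivers the lemma. I expect this element-wise transposition of the multiset equalities (extracting the bijection and invoking the induction hypothesis term by term) to be the only genuinely non-routine step — everything else is bookkeeping around the $\mathsf{u}\leftrightarrow\mathsf{v}$ symmetry of the aggregation scheme and the symmetry of the node marking initial colors, and the pooling paradigm $\pool$ plays no role since the statement concerns only the color mapping $\chi$.
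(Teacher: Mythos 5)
Your proof is correct and takes essentially the same route as the paper's: induction over iterations, exploiting the $\mathsf{u}\leftrightarrow\mathsf{v}$ symmetry of $\{\agg^\mathsf{P}_\mathsf{uv},\agg^\mathsf{L}_\mathsf{u},\agg^\mathsf{L}_\mathsf{v}\}$ to transpose the two multiset equalities via the induction hypothesis. The paper states the inductive invariant as a biconditional while you use a one-way implication (which suffices, since the transposition argument preserves multiset equality in the needed direction), and you make the color-preserving bijection explicit where the paper applies the induction hypothesis more implicitly — a cosmetic difference, not a different argument.
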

\begin{proof}
    We will prove a stronger result: let $\chi^{(t)}$ be the color mapping at iteration $t$, then for any $t\in\mathbb N$, $\chi_G^{(t)}(u,v)=\chi_H^{(t)}(x,y)\iff\chi_G^{(t)}(v,u)=\chi_H^{(t)}(y,x)$. We prove it by induction over $t$.

    The base case of $t=0$ trivially holds by definition of the node marking. Assume the above result holds for $t=T$, and consider $t=T+1$. Let $\chi_G^{(T+1)}(u,v)=\chi_H^{(T+1)}(x,y)$. By definition of the aggregation scheme $\{\agg^\mathsf{L}_\mathsf{u},\agg^\mathsf{L}_\mathsf{v}\}$, we have
    \begin{align*}
        \ldblbrace \chi_G^{(T)}(u,w): w\in\gN_{G}(v)\rdblbrace=\ldblbrace \chi_H^{(T)}(x,z): z\in\gN_{H}(y)\rdblbrace,\\
        \ldblbrace \chi_G^{(T)}(w,v): w\in\gN_{G}(u)\rdblbrace=\ldblbrace \chi_H^{(T)}(z,y): z\in\gN_{H}(x)\rdblbrace.
    \end{align*}
    Using induction we obtain
    \begin{align*}
        \ldblbrace \chi_G^{(T)}(w,u): w\in\gN_{G}(v)\rdblbrace=\ldblbrace \chi_H^{(T)}(z,x): z\in\gN_{H}(y)\rdblbrace,\\
        \ldblbrace \chi_G^{(T)}(v,w): w\in\gN_{G}(u)\rdblbrace=\ldblbrace \chi_H^{(T)}(y,z): z\in\gN_{H}(x)\rdblbrace.
    \end{align*}
    Therefore, $\chi_G^{(T+1)}(v,u)=\chi_H^{(T+1)}(y,x)$, which finishes the induction step.
\end{proof}

\begin{corollary}
\label{thm:transpose_eq_local}
    Let $\chi^\mathsf{LL}$, $\chi^\mathsf{LP}$, and $\chi^\mathsf{LLP}$ be the stable color mappings of SWL algorithms $\mathsf{A}(\{\agg^\mathsf{L}_{\mathsf{u}},\agg^\mathsf{L}_{\mathsf{v}}\},\pool)$, $\mathsf{A}(\{\agg^\mathsf{L}_{\mathsf{u}},\agg^\mathsf{P}_{\mathsf{vu}}\},\pool)$, and $\mathsf{A}(\{\agg^\mathsf{L}_{\mathsf{u}},\agg^\mathsf{L}_{\mathsf{v}},\agg^\mathsf{P}_{\mathsf{vu}}\},\pool)$, respectively. Then, $\chi^\mathsf{LL}\simeq\chi^\mathsf{LP}\simeq\chi^\mathsf{LLP}$.
\end{corollary}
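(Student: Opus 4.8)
The plan is to prove all three equivalences by reducing each $\preceq$ to a single one-step refinement check via \cref{remark:finer}(c), exactly in the style of \cref{thm:local_gt_global,thm:global_gt_single_point}. Write $\gA_\mathsf{LL}=\{\agg^\mathsf{L}_\mathsf{u},\agg^\mathsf{L}_\mathsf{v}\}$, $\gA_\mathsf{LP}=\{\agg^\mathsf{L}_\mathsf{u},\agg^\mathsf{P}_\mathsf{vu}\}$ and $\gA_\mathsf{LLP}=\{\agg^\mathsf{L}_\mathsf{u},\agg^\mathsf{L}_\mathsf{v},\agg^\mathsf{P}_\mathsf{vu}\}$ for the three schemes (each tacitly containing $\agg^\mathsf{P}_\mathsf{uv}$). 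First I would record the routine fact, immediate from \cref{remark:finer}(c), that an inclusion of aggregation schemes produces a finer stable mapping; in particular $\chi^\mathsf{LLP}\preceq\chi^\mathsf{LL}$ because $\gA_\mathsf{LL}\subset\gA_\mathsf{LLP}$. It then remains to establish $\chi^\mathsf{LL}\preceq\chi^\mathsf{LP}$, $\chi^\mathsf{LP}\preceq\chi^\mathsf{LL}$, and $\chi^\mathsf{LL}\preceq\chi^\mathsf{LLP}$.

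Two of these are easy and use only \cref{thm:local_local_lemma}. For $\chi^\mathsf{LL}\preceq\chi^\mathsf{LP}$ I would take $\tilde\chi=\mathscr{T}(\gA_\mathsf{LP},\chi^\mathsf{LL})$ and check $\chi^\mathsf{LL}\preceq\tilde\chi$: if $\chi_G^\mathsf{LL}(u,v)=\chi_H^\mathsf{LL}(x,y)$, then $\agg^\mathsf{P}_\mathsf{uv}$ and $\agg^\mathsf{L}_\mathsf{u}$ agree on the two sides since $\chi^\mathsf{LL}$ is stable and $\agg^\mathsf{L}_\mathsf{u}\in\gA_\mathsf{LL}$, while $\agg^\mathsf{P}_\mathsf{vu}$ agrees because \cref{thm:local_local_lemma} yields $\chi_G^\mathsf{LL}(v,u)=\chi_H^\mathsf{LL}(y,x)$; hence $\tilde\chi_G(u,v)=\tilde\chi_H(x,y)$ and \cref{remark:finer}(c) gives $\chi^\mathsf{LL}\preceq\chi^\mathsf{LP}$. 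Running the same computation with $\gA_\mathsf{LLP}$ in place of $\gA_\mathsf{LP}$ (the extra generator $\agg^\mathsf{L}_\mathsf{v}$ also agrees by stability) gives $\chi^\mathsf{LL}\preceq\chi^\mathsf{LLP}$, so $\chi^\mathsf{LL}\simeq\chi^\mathsf{LLP}$.

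The substantive direction is $\chi^\mathsf{LP}\preceq\chi^\mathsf{LL}$, which amounts to showing that the stable mapping $\chi^\mathsf{LP}$ can already simulate $\agg^\mathsf{L}_\mathsf{v}$. The crucial point is that, since $\agg^\mathsf{P}_\mathsf{vu}\in\gA_\mathsf{LP}$, stability of $\chi^\mathsf{LP}$ forces the transposed color to be a \emph{function} of the color: there is a map $\tau$ with $\chi_G^\mathsf{LP}(b,a)=\tau\bigl(\chi_G^\mathsf{LP}(a,b)\bigr)$ for every graph $G$ and vertices $a,b$, where graph-independent well-definedness of $\tau$ is precisely the statement ``$\chi_G^\mathsf{LP}(a,b)=\chi_H^\mathsf{LP}(a',b')\Rightarrow\chi_G^\mathsf{LP}(b,a)=\chi_H^\mathsf{LP}(b',a')$'' coming from stability under $\agg^\mathsf{P}_\mathsf{vu}$. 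With this, set $\tilde\chi=\mathscr{T}(\gA_\mathsf{LL},\chi^\mathsf{LP})$ and verify $\chi^\mathsf{LP}\preceq\tilde\chi$: given $\chi_G^\mathsf{LP}(u,v)=\chi_H^\mathsf{LP}(x,y)$, the generators $\agg^\mathsf{P}_\mathsf{uv}$ and $\agg^\mathsf{L}_\mathsf{u}$ agree trivially; for $\agg^\mathsf{L}_\mathsf{v}$, apply $\tau$ to get $\chi_G^\mathsf{LP}(v,u)=\chi_H^\mathsf{LP}(y,x)$, then invoke $\agg^\mathsf{L}_\mathsf{u}$-stability at the pair $(v,u)$ to obtain $\ldblbrace\chi_G^\mathsf{LP}(v,w):w\in\gN_G(u)\rdblbrace=\ldblbrace\chi_H^\mathsf{LP}(y,z):z\in\gN_H(x)\rdblbrace$, and finally apply $\tau$ coordinatewise to both multisets (an identity of multisets is preserved under any map on elements), turning this into $\ldblbrace\chi_G^\mathsf{LP}(w,v):w\in\gN_G(u)\rdblbrace=\ldblbrace\chi_H^\mathsf{LP}(z,y):z\in\gN_H(x)\rdblbrace$, which is exactly equality of $\agg^\mathsf{L}_\mathsf{v}$ on the two sides. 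Hence $\tilde\chi_G(u,v)=\tilde\chi_H(x,y)$, and \cref{remark:finer}(c) gives $\chi^\mathsf{LP}\preceq\chi^\mathsf{LL}$. Assembling all displayed inclusions yields $\chi^\mathsf{LL}\simeq\chi^\mathsf{LP}\simeq\chi^\mathsf{LLP}$.

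The only step that needs genuine care is the last paragraph: one must justify that $\tau$ is a genuine, graph-independent function of the color — this is where $\agg^\mathsf{P}_\mathsf{vu}\in\gA_\mathsf{LP}$ enters essentially, and it fails for $\mathsf{LL}$ except through the detour of \cref{thm:local_local_lemma} — and that pushing such a function through an equality of multisets preserves it. Everything else is a mechanical application of \cref{remark:finer}(c), \cref{thm:local_local_lemma}, and the definition of a stable color mapping, so no further calculation is needed.
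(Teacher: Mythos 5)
Your proposal is correct and follows essentially the same route as the paper. The paper's proof of \cref{thm:transpose_eq_local} proves $\chi^\mathsf{LP}\preceq\chi^\mathsf{LL}$ by defining $\tilde\chi=\mathscr{T}(\gA_\mathsf{LL},\chi^\mathsf{LP})$ and checking that $\agg^\mathsf{L}_\mathsf{v}$ cannot refine $\chi^\mathsf{LP}$: it first uses $\agg^\mathsf{P}_\mathsf{vu}$-stability to get $\chi_G^\mathsf{LP}(v,u)=\chi_H^\mathsf{LP}(y,x)$, then applies $\agg^\mathsf{L}_\mathsf{u}$-stability at $(v,u)$, and then asserts (without spelling it out) that the resulting multiset equality of colors $\chi^\mathsf{LP}(v,w)$ transports to an equality of transposed colors $\chi^\mathsf{LP}(w,v)$; it proves $\chi^\mathsf{LL}\preceq\chi^\mathsf{LP}$ by quoting \cref{thm:local_local_lemma} for $\agg^\mathsf{P}_\mathsf{vu}$, exactly as you do; and it dispatches $\chi^\mathsf{LLP}$ with ``exactly the same'' argument. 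Your only real addition is to make the paper's implicit ``further implies'' step explicit by exhibiting the map $\tau$ (well-defined by $\agg^\mathsf{P}_\mathsf{vu}$-stability) and observing that pushing a function through a multiset equality preserves it — this is a cleaner way to state what the paper silently does, not a different proof.
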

\begin{proof}
    Similar to \cref{thm:local_gt_global}, the proof is based on \cref{remark:finer}(c). We only prove $\chi^\mathsf{LL}\simeq\chi^\mathsf{LP}$, and the proof of $\chi^\mathsf{LL}\simeq\chi^\mathsf{LLP}$ is exactly the same. 

    We first prove $\chi^\mathsf{LP}\preceq\chi^\mathsf{LL}$. Define an auxiliary color mapping $\tilde\chi=\mathscr{T}(\{\agg^\mathsf{L}_{\mathsf{u}},\agg^\mathsf{L}_{\mathsf{v}}\},\chi^\mathsf{LP})$ where $\mathscr{T}$ is defined in (\ref{eq:mapping_transform}). It suffices to prove that $\chi^\mathsf{LP}\preceq\tilde\chi$. Consider any vertices $u,v\in\gV_G$ and $x,y\in\gV_H$ satisfying $\chi_G^\mathsf{LP}(u,v)=\chi_H^\mathsf{LP}(x,y)$. Since the mapping $\chi^\mathsf{LP}$ is already stable, we have $\chi_G^\mathsf{LP}(v,u)=\chi_H^\mathsf{LP}(y,x)$ and
    \begin{equation}
    \label{eq:proof_transpose_eq_local_0}
        \ldblbrace \chi_G^\mathsf{LP}(u,w): w\in\gN_{G}(v)\rdblbrace=\ldblbrace \chi_H^\mathsf{LP}(x,z): z\in\gN_{H}(y)\rdblbrace.
    \end{equation}
    Since $\chi_G^\mathsf{LP}(v,u)=\chi_H^\mathsf{LP}(y,x)$, we also have
    \begin{equation*}
        \ldblbrace \chi_G^\mathsf{LP}(v,w): w\in\gN_{G}(u)\rdblbrace=\ldblbrace \chi_H^\mathsf{LP}(y,z): z\in\gN_{H}(x)\rdblbrace.
    \end{equation*}
    This further implies
    \begin{equation}
    \label{eq:proof_transpose_eq_local_1}
        \ldblbrace \chi_G^\mathsf{LP}(w,v): w\in\gN_{G}(u)\rdblbrace=\ldblbrace \chi_H^\mathsf{LP}(z,y): z\in\gN_{H}(x)\rdblbrace.
    \end{equation}
    Combining (\ref{eq:proof_transpose_eq_local_0}) and (\ref{eq:proof_transpose_eq_local_1}) we obtain $\tilde\chi_G(u,v)=\tilde\chi_H(x,y)$, as desired.

    We next prove $\chi^\mathsf{LL}\preceq\chi^\mathsf{LP}$. Define an auxiliary color mapping $\tilde\chi=\mathscr{T}(\{\agg^\mathsf{L}_{\mathsf{u}},\agg^\mathsf{P}_{\mathsf{vu}}\},\chi^\mathsf{LL})$ where $\mathscr{T}$ is defined in (\ref{eq:mapping_transform}). It suffices to prove that $\chi^\mathsf{LL}\preceq\tilde\chi$. This simply follows by the fact that the stable color mapping $\chi^\mathsf{LL}$ cannot be refined using $\agg^\mathsf{L}_{\mathsf{u}}$ (by definition) or using $\agg^\mathsf{P}_{\mathsf{vu}}$ (by \cref{thm:local_local_lemma}).
\end{proof}

\subsection{Analyzing the pooling paradigm}
\label{sec:proof_pooling_paradigm}

This subsection discusses how the pooling paradigm can influence the expressive power of the SWL algorithm, which is related to the proofs of \cref{thm:pooling}. In all proofs, we denote $G=(\gV_G,\gE_G)$ and $H=(\gV_H,\gE_H)$ as any graphs.

\begin{lemma}
\label{thm:pooling_vs_lt_sv}
    Let $\gA$ be defined in \cref{def:node_marking_swl} with $\{\agg^\mathsf{G}_{\mathsf{u}},\agg^\mathsf{L}_{\mathsf{u}}\}\cap\gA\neq\emptyset$. Then, $\mathsf{A}(\gA,\mathsf{VS})\preceq\mathsf{A}(\gA,\mathsf{SV})$.
\end{lemma}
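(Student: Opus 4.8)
The plan is to use that both pooling paradigms run exactly the same color refinement and hence produce the same stable mapping $\chi$, differing only in the order of the two final aggregations. Abbreviate $\alpha_G(u):=\hash(\ldblbrace\chi_G(u,v):v\in\gV_G\rdblbrace)$ and $\beta_G(v):=\hash(\ldblbrace\chi_G(u,v):u\in\gV_G\rdblbrace)$, so that the two graph representations are $c_{\mathsf{VS}}(G)=\hash(\ldblbrace\alpha_G(u):u\in\gV_G\rdblbrace)$ and $c_{\mathsf{SV}}(G)=\hash(\ldblbrace\beta_G(v):v\in\gV_G\rdblbrace)$. It then suffices to show that $\ldblbrace\beta_G(v):v\in\gV_G\rdblbrace$ determines $\ldblbrace\alpha_G(u):u\in\gV_G\rdblbrace$ in a graph-independent fashion, since this immediately gives $c_{\mathsf{SV}}(G)=c_{\mathsf{SV}}(H)\Rightarrow c_{\mathsf{VS}}(G)=c_{\mathsf{VS}}(H)$, i.e.\ $\mathsf{A}(\gA,\mathsf{VS})\preceq\mathsf{A}(\gA,\mathsf{SV})$.

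The engine of the argument is the implication
\[
\chi_G(u,v)=\chi_H(x,y)\ \Longrightarrow\ \ldblbrace\chi_G(u,w):w\in\gV_G\rdblbrace=\ldblbrace\chi_H(x,z):z\in\gV_H\rdblbrace. \qquad(\star)
\]
If $\agg^\mathsf{G}_\mathsf{u}\in\gA$, this is immediate from stability of $\chi$: one further refinement step keeps $(u,v)$ and $(x,y)$ in the same color class, and since that step feeds the multiset $\ldblbrace\chi_G(u,\cdot)\rdblbrace$ into a perfect hash, the two multisets must coincide. If instead $\agg^\mathsf{L}_\mathsf{u}\in\gA$, then $\chi_G(u,v)=\chi_H(x,y)$ forces $\chi_G(u,u)=\chi_H(x,x)$ by \cref{thm:node_marking_distance_corollary}, and \cref{thm:local_lemma} upgrades this to exactly the multiset equality in $(\star)$. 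Under the hypothesis $\{\agg^\mathsf{G}_\mathsf{u},\agg^\mathsf{L}_\mathsf{u}\}\cap\gA\neq\emptyset$ one of the two cases always applies.

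Granting $(\star)$ and injectivity of $\hash$, the value $\chi_G(u,v)$ determines $\alpha_G(u)$: there is a well-defined map $g$ on the range of $\chi$ with $g(\chi_G(u,v))=\alpha_G(u)$ for every triple $(G,u,v)$ (well-definedness is precisely $(\star)$ post-composed with $\hash$). Hence, for each $v$, applying $g$ entrywise to the multiset encoded by $\beta_G(v)$ recovers $\ldblbrace\alpha_G(u):u\in\gV_G\rdblbrace$, so there is a fixed function $\phi$ with $\phi(\beta_G(v))=c_{\mathsf{VS}}(G)$ for all $v\in\gV_G$. Now if $c_{\mathsf{SV}}(G)=c_{\mathsf{SV}}(H)$, injectivity of $\hash$ gives $\ldblbrace\beta_G(v):v\in\gV_G\rdblbrace=\ldblbrace\beta_H(v):v\in\gV_H\rdblbrace$; these multisets are nonempty since the graphs are connected, so a common element $\beta_G(v_0)=\beta_H(y_0)$ exists and applying $\phi$ yields $c_{\mathsf{VS}}(G)=c_{\mathsf{VS}}(H)$, as desired.

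I expect the only delicate point to be the careful statement of $(\star)$ in the $\agg^\mathsf{G}_\mathsf{u}$ case: one must use that ``stable'' means the induced partition is fixed, so equality of colors is preserved under one extra refinement step, and one must check that $g$ (equivalently $\phi$) is genuinely graph-independent so the final multiset comparison applies to the pair $G,H$ at once rather than to a single graph. Everything else is routine bookkeeping with perfect hash functions; if desired, the passage through $(\star)$ can instead be packaged via the $\mathscr{T}$-construction of \cref{remark:finer}(c), avoiding an explicit $g$.
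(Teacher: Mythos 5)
Your proof is correct and takes a slightly different route from the paper's. The paper pairs each vertex representation $r^\mathsf{V}_G(v)$ with a matching $r^\mathsf{V}_H(y)$ obtained from the multiset equality, extracts the diagonal equality $\chi_G(v,v)=\chi_H(y,y)$ from the matched pair by using node marking to pin down the unique diagonal-colored entry of the column multiset, applies \cref{thm:local_lemma} (or stability for $\agg^\mathsf{G}_\mathsf{u}$) to deduce $r^\mathsf{S}_G(v)=r^\mathsf{S}_H(y)$, and then pushes this per-vertex implication through the outer multiset hash. You instead establish the stronger per-color implication $(\star)$ up front (reaching the diagonal via \cref{thm:node_marking_distance_corollary} rather than via the column multiset), define the graph-independent map $g$ and hence $\phi$ taking any single $\beta_G(v)$ to the full $\mathsf{VS}$ output $c_{\mathsf{VS}}(G)$, and then need only one common $\beta$ value in the two equal multisets, which nonemptiness supplies. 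This is a cleaner packaging of the same core observations: it makes explicit the crisp fact that a single $\mathsf{SV}$-side vertex summary already determines the entire $\mathsf{VS}$-pooled graph representation, which the paper's proof uses implicitly but never states. Both arguments are sound, rest on the same underlying lemmas, and handle the stability/graph-independence subtleties in the same way; the delicate points you flag (across-graph stability for the $\agg^\mathsf{G}_\mathsf{u}$ case and well-definedness of $g$) are genuine and match the care the paper takes via its ``by definition of stable color mapping'' appeals and the $\mathscr{T}$-construction of \cref{remark:finer}.
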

\begin{proof}
    Let $c^\mathsf{VS}(G)$ and $c^\mathsf{SV}(G)$ be the graph representations computed by algorithms $\mathsf{A}(\gA,\mathsf{VS})$ and $\mathsf{A}(\gA,\mathsf{SV})$, respectively. Since both algorithms use the same aggregation scheme, we denote the stable color mapping as $\chi$. We aim to prove that if $c^\mathsf{SV}(G)=c^\mathsf{SV}(H)$, then $c^\mathsf{VS}(G)=c^\mathsf{VS}(H)$.

    Let $c^\mathsf{SV}(G)=c^\mathsf{SV}(H)$, then by definition of $\mathsf{SV}$ pooling
    \begin{equation*}
        \ldblbrace r^\mathsf{V}_G(v):v\in\gV_G\rdblbrace=\ldblbrace r^\mathsf{V}_H(y):y\in\gV_H\rdblbrace,
    \end{equation*}
    where we denote $r^\mathsf{V}_G(v)=\ldblbrace\chi_G(u,v):u\in\gV_G\rdblbrace$. Consider any vertices $v\in\gV_G$ and $y\in\gV_H$ satisfying $r^\mathsf{V}_G(v)=r^\mathsf{V}_H(y)$. Then, there exists vertex $w\in\gV_H$ such that $\chi_G(v,v)=\chi_H(w,y)$. Due to the definition of node marking, we must have $w=y$. This implies that $\chi_G(v,v)=\chi_H(y,y)$. Now separately consider two cases:
    \begin{itemize}[topsep=0pt]
    \setlength{\itemsep}{0pt}
        \item If $\agg^\mathsf{G}_{\mathsf{u}}\in\gA$, then by definition of stable color mapping we have $\ldblbrace\chi_G(v,u):u\in\gV_G\rdblbrace=\ldblbrace\chi_H(y,x):x\in\gV_H\rdblbrace$;
        \item If $\agg^\mathsf{L}_{\mathsf{u}}\in\gA$, then by \cref{thm:local_lemma} we have $\ldblbrace\chi_G(v,u):u\in\gV_G\rdblbrace=\ldblbrace\chi_H(y,x):x\in\gV_H\rdblbrace$.
    \end{itemize}
    In both cases, we have $r^\mathsf{S}_G(v)=r^\mathsf{S}_H(y)$ where we denote $r^\mathsf{S}_G(v)=\ldblbrace\chi_G(v,u):u\in\gV_G\rdblbrace$.

    Therefore, we have proved that $r^\mathsf{V}_G(v)=r^\mathsf{V}_H(y)\implies r^\mathsf{S}_G(v)=r^\mathsf{S}_H(y)$. This finally yields
    \begin{equation*}
        \ldblbrace r^\mathsf{S}_G(v):v\in\gV_G\rdblbrace=\ldblbrace r^\mathsf{S}_H(y):y\in\gV_H\rdblbrace,
    \end{equation*}
    namely, $c^\mathsf{VS}(G)=c^\mathsf{VS}(H)$, as desired.
\end{proof}

\subsection{Proof of theorems in \cref{sec:swl_hierarchy}}
\label{sec:proof_hierarchy_main}

We are now ready to prove all the main results in \cref{sec:swl_hierarchy}, which we restate below.

\textbf{\cref{thm:aggregation}.} \emph{Under the notation of \cref{def:node_marking_swl}, the following hold:
    \begin{itemize}[topsep=0pt]
    \raggedright
    \setlength{\itemsep}{0pt}
        \item $\mathsf{A}(\gA\cup\{\agg^\mathsf{G}_{\mathsf{u}}\},\pool)\preceq\mathsf{A}(\gA\cup\{\agg^\mathsf{L}_{\mathsf{u}}\},\pool)$ and $\mathsf{A}(\gA\cup\{\agg^\mathsf{L}_{\mathsf{u}}\},\pool)\simeq \mathsf{A}(\gA\cup\{\agg^\mathsf{L}_{\mathsf{u}},\agg^\mathsf{G}_{\mathsf{u}}\},\pool)$;
        \item $\mathsf{A}(\gA\cup\{\agg^\mathsf{P}_{\mathsf{uu}}\},\pool)\preceq\mathsf{A}(\gA\cup\{\agg^\mathsf{G}_{\mathsf{u}}\},\pool)$ and $\mathsf{A}(\gA\cup\{\agg^\mathsf{G}_{\mathsf{u}}\},\pool)\simeq\mathsf{A}(\gA\cup\{\agg^\mathsf{G}_{\mathsf{u}},\agg^\mathsf{P}_{\mathsf{uu}}\}\},\pool)$;
        \item $\mathsf{A}(\{\agg^\mathsf{L}_{\mathsf{u}},\agg^\mathsf{P}_{\mathsf{vu}}\},\pool)\simeq\mathsf{A}(\{\agg^\mathsf{L}_{\mathsf{u}},\agg^\mathsf{L}_{\mathsf{v}}\},\pool)\simeq\mathsf{A}(\{\agg^\mathsf{L}_{\mathsf{u}},\agg^\mathsf{L}_{\mathsf{v}},\agg^\mathsf{P}_{\mathsf{vu}}\},\pool)$.
    \end{itemize}}
\begin{proof}
    Based on \cref{remark:finer}(b), we only need to focus on the stable color mappings of these algorithms. The proof readily follows by using \cref{thm:local_gt_global,thm:global_gt_single_point,thm:transpose_eq_local}.
\end{proof}

\textbf{\cref{thm:symmetry}.} \emph{Let $\gA$ be any aggregation scheme defined in \cref{def:node_marking_swl}. Denote $\gA^{\mathsf{u}\leftrightarrow\mathsf{v}}$ as the aggregation scheme obtained from $\gA$ by exchanging the element $\agg^\mathsf{P}_{\mathsf{uu}}$ with $\agg^\mathsf{P}_{\mathsf{vv}}$, exchanging $\agg^\mathsf{L}_{\mathsf{u}}$ with $\agg^\mathsf{L}_{\mathsf{v}}$, and exchanging $\agg^\mathsf{G}_{\mathsf{u}}$ with $\agg^\mathsf{G}_{\mathsf{v}}$. Then, $\mathsf{A}(\gA, \mathsf{VS})\simeq\mathsf{A}(\gA^{\mathsf{u}\leftrightarrow\mathsf{v}}, \mathsf{SV})$.}
\begin{proof}
    The proof is almost trivial by symmetry. It is easy to see that for any vertices $u,v\in\gV_G$ and $x,y\in\gV_H$, $\chi_G(u,v)=\chi_H(x,y)\iff\chi^{\mathsf{u}\leftrightarrow\mathsf{v}}_G(v,u)=\chi^{\mathsf{u}\leftrightarrow\mathsf{v}}_H(y,x)$, where $\chi$ and $\chi^{\mathsf{u}\leftrightarrow\mathsf{v}}$ are the stable color mapping of SWL algorithms $\mathsf{A}(\gA, \mathsf{VS})$ and $\mathsf{A}(\gA^{\mathsf{u}\leftrightarrow\mathsf{v}}, \mathsf{SV})$, respectively.
\end{proof}

\textbf{\cref{thm:pooling}.} \emph{Let $\gA$ be defined in \cref{def:node_marking_swl} with $\agg^\mathsf{L}_{\mathsf{u}}\in\gA$. Then the following hold:
    \begin{itemize}[topsep=0pt]
    \setlength{\itemsep}{0pt}
        \item $\mathsf{A}(\gA,\mathsf{VS})\preceq\mathsf{A}(\gA,\mathsf{SV})$;
        \item If $\{\agg^\mathsf{G}_{\mathsf{v}},\agg^\mathsf{L}_{\mathsf{v}}\}\cap\gA\neq\emptyset$, then $\mathsf{A}(\gA,\mathsf{VS})\simeq\mathsf{A}(\gA,\mathsf{SV})$.
    \end{itemize}}
\begin{proof}
    The first bullet is a direct consequence of \cref{thm:pooling_vs_lt_sv}. The second bullet is a direct consequence of the first bullet and \cref{thm:symmetry}, since we have both $\mathsf{A}(\gA,\mathsf{VS})\preceq\mathsf{A}(\gA,\mathsf{SV})$ and $\mathsf{A}(\gA,\mathsf{SV})\simeq \mathsf{A}(\gA^{\mathsf{u}\leftrightarrow\mathsf{v}},\mathsf{VS})\preceq\mathsf{A}(\gA^{\mathsf{u}\leftrightarrow\mathsf{v}},\mathsf{SV})\simeq\mathsf{A}(\gA,\mathsf{VS})$.
\end{proof}

\textbf{\cref{thm:swl_hierarchy}.} \emph{Let $\mathsf{A}(\gA,\pool)$ be any SWL algorithm defined in \cref{def:node_marking_swl} with at least one local aggregation, i.e. $\{\agg^\mathsf{L}_{\mathsf{u}},\agg^\mathsf{L}_{\mathsf{v}}\}\cap\gA\neq \emptyset$. Then, $\mathsf{A}(\gA,\pool)$ must be as expressive as one of the 6 SWL algorithms defined below: {\normalfont
    \begin{itemize}[topsep=0pt]
    \raggedright 
    \setlength{\itemsep}{0pt}
        \item (Vanilla SWL) $\mathsf{SWL(VS)}:=\mathsf{A}(\{\agg^\mathsf{L}_{\mathsf{u}}\},\mathsf{VS})$, $\mathsf{SWL(SV)}:=\mathsf{A}(\{\agg^\mathsf{L}_{\mathsf{u}}\},\mathsf{SV})$;
        \item (SWL with additional single-point aggregation) $\mathsf{PSWL(VS)}:=\mathsf{A}(\{\agg^\mathsf{L}_{\mathsf{u}},\agg^\mathsf{P}_{\mathsf{vv}}\},\mathsf{VS})$, $\mathsf{PSWL(SV)}:=\mathsf{A}(\{\agg^\mathsf{L}_{\mathsf{u}},\agg^\mathsf{P}_{\mathsf{vv}}\},\mathsf{SV})$;
        \item (SWL with additional global aggregation) $\mathsf{GSWL}:=\mathsf{A}(\{\agg^\mathsf{L}_{\mathsf{u}},\agg^\mathsf{G}_{\mathsf{v}}\},\mathsf{VS})$;
        \item (Symmetrized SWL) $\mathsf{SSWL}:=\mathsf{A}(\{\agg^\mathsf{L}_{\mathsf{u}},\agg^\mathsf{L}_{\mathsf{v}}\},\mathsf{VS})$.
    \end{itemize}}
    Moreover, we have
    \begin{gather*}
        \mathsf{SWL(VS)}\preceq\mathsf{SWL(SV)}\ \text{and}\ \mathsf{PSWL(VS)}\preceq\mathsf{PSWL(SV)},\\
        \mathsf{SWL(VS)}\preceq\mathsf{PSWL(VS)}\ \text{and}\ \mathsf{SWL(SV)}\preceq\mathsf{PSWL(SV)},\\
        \mathsf{PSWL(SV)}\preceq\mathsf{GSWL}\preceq\mathsf{SSWL}.
    \end{gather*}}
\begin{proof}
    Due to \cref{thm:symmetry}, we can assume $\agg^\mathsf{L}_\mathsf{u}\in\gA$ without loss of generality. We separately consider several cases:
    \begin{itemize}[topsep=0pt]
    \raggedright 
    \setlength{\itemsep}{0pt}
        \item Case 1: $\{\agg^\mathsf{L}_\mathsf{v},\agg^\mathsf{G}_\mathsf{v},\agg^\mathsf{P}_\mathsf{vv},\agg^\mathsf{P}_\mathsf{vu}\}\cap\gA=\emptyset$. In this case, we have
        \begin{align*}
            \mathsf{A}(\gA,\pool)\preceq\mathsf{A}(\{\agg^\mathsf{L}_\mathsf{u},\agg^\mathsf{G}_\mathsf{u},\agg^\mathsf{P}_\mathsf{uu}\},\pool)\simeq\mathsf{A}(\{\agg^\mathsf{L}_\mathsf{u},\agg^\mathsf{G}_\mathsf{u}\},\pool)\simeq\mathsf{A}(\{\agg^\mathsf{L}_\mathsf{u}\},\pool)
        \end{align*}
        by \cref{thm:aggregation}. On the other hand, clearly $\mathsf{A}(\{\agg^\mathsf{L}_\mathsf{u}\},\pool)\preceq\mathsf{A}(\gA,\pool)$. We thus have $\mathsf{A}(\gA,\pool)\simeq \mathsf{A}(\{\agg^\mathsf{L}_\mathsf{u}\},\pool)$, namely, $\mathsf{A}(\gA,\pool)\simeq \mathsf{SWL(VS)}$ or $\mathsf{A}(\gA,\pool)\simeq \mathsf{SWL(SV)}$.
        \item Case 2: $\agg^\mathsf{P}_\mathsf{vv}\in\gA$ and $\{\agg^\mathsf{L}_\mathsf{v},\agg^\mathsf{G}_\mathsf{v},\agg^\mathsf{P}_\mathsf{vu}\}\cap\gA=\emptyset$. In this case, we have
        \begin{align*}
            \mathsf{A}(\gA,\pool)&\preceq\mathsf{A}(\{\agg^\mathsf{L}_\mathsf{u},\agg^\mathsf{G}_\mathsf{u},\agg^\mathsf{P}_\mathsf{uu},\agg^\mathsf{P}_\mathsf{vv}\},\pool)\\
            &\simeq\mathsf{A}(\{\agg^\mathsf{L}_\mathsf{u},\agg^\mathsf{G}_\mathsf{u},\agg^\mathsf{P}_\mathsf{vv}\},\pool)\\
            &\simeq\mathsf{A}(\{\agg^\mathsf{L}_\mathsf{u},\agg^\mathsf{P}_\mathsf{vv}\},\pool)
        \end{align*}
        by \cref{thm:aggregation}. On the other hand, clearly $\mathsf{A}(\{\agg^\mathsf{L}_\mathsf{u},\agg^\mathsf{P}_\mathsf{vv}\},\pool)\preceq\mathsf{A}(\gA,\pool)$. We thus have $\mathsf{A}(\gA,\pool)\simeq \mathsf{A}(\{\agg^\mathsf{L}_\mathsf{u},\agg^\mathsf{P}_\mathsf{vv}\},\pool)$, namely, $\mathsf{A}(\gA,\pool)\simeq \mathsf{PSWL(VS)}$ or $\mathsf{A}(\gA,\pool)\simeq \mathsf{PSWL(SV)}$.
        \item Case 3: $\agg^\mathsf{G}_\mathsf{v}\in\gA$ and $\{\agg^\mathsf{L}_\mathsf{v},\agg^\mathsf{P}_\mathsf{vu}\}\cap\gA=\emptyset$. In this case, we have
        \begin{align*}
            \mathsf{A}(\gA,\pool)&\preceq\mathsf{A}(\{\agg^\mathsf{L}_\mathsf{u},\agg^\mathsf{G}_\mathsf{u},\agg^\mathsf{P}_\mathsf{uu},\agg^\mathsf{G}_\mathsf{v},\agg^\mathsf{P}_\mathsf{vv}\},\pool)\\
            &\simeq\mathsf{A}(\{\agg^\mathsf{L}_\mathsf{u},\agg^\mathsf{G}_\mathsf{u},\agg^\mathsf{G}_\mathsf{v}\},\pool)\\
            &\simeq\mathsf{A}(\{\agg^\mathsf{L}_\mathsf{u},\agg^\mathsf{G}_\mathsf{v}\},\pool)
        \end{align*}
        by \cref{thm:aggregation}. On the other hand, clearly $\mathsf{A}(\{\agg^\mathsf{L}_\mathsf{u},\agg^\mathsf{G}_\mathsf{v}\},\pool)\preceq\mathsf{A}(\gA,\pool)$. We thus have $\mathsf{A}(\gA,\pool)\simeq \mathsf{A}(\{\agg^\mathsf{L}_\mathsf{u},\agg^\mathsf{G}_\mathsf{v}\},\pool)$. Moreover, by \cref{thm:pooling} we have $\mathsf{A}(\{\agg^\mathsf{L}_\mathsf{u},\agg^\mathsf{G}_\mathsf{v}\},\mathsf{VS})=\mathsf{A}(\{\agg^\mathsf{L}_\mathsf{u},\agg^\mathsf{G}_\mathsf{v}\},\mathsf{SV})$. Therefore, $\mathsf{A}(\gA,\pool)\simeq \mathsf{GSWL}$.
        \item Case 4: $\agg^\mathsf{L}_\mathsf{v}\in\gA$ or $\agg^\mathsf{P}_\mathsf{vu}\in\gA$. In this cases, a similar analysis yields
        \begin{align*}
            \mathsf{A}(\gA,\pool)&\preceq\mathsf{A}(\{\agg^\mathsf{L}_\mathsf{u},\agg^\mathsf{L}_\mathsf{v},\agg^\mathsf{P}_\mathsf{vu}\},\pool)\\
            &\simeq \mathsf{A}(\{\agg^\mathsf{L}_\mathsf{u},\agg^\mathsf{L}_\mathsf{v}\},\pool)\\
            &\simeq \mathsf{A}(\{\agg^\mathsf{L}_\mathsf{u},\agg^\mathsf{P}_\mathsf{vu}\},\pool)
        \end{align*}
        by \cref{thm:aggregation}. On the other hand, $\mathsf{A}(\{\agg^\mathsf{L}_\mathsf{u},\agg^\mathsf{L}_\mathsf{v}\},\pool)\simeq\mathsf{A}(\{\agg^\mathsf{L}_\mathsf{u},\agg^\mathsf{P}_\mathsf{vu}\},\pool)\preceq\mathsf{A}(\gA,\pool)$. We thus have $\mathsf{A}(\gA,\pool)\simeq \mathsf{A}(\{\agg^\mathsf{L}_\mathsf{u},\agg^\mathsf{L}_\mathsf{v}\},\pool)$. Moreover, by \cref{thm:pooling} we have $\mathsf{A}(\{\agg^\mathsf{L}_\mathsf{u},\agg^\mathsf{L}_\mathsf{v}\},\mathsf{VS})=\mathsf{A}(\{\agg^\mathsf{L}_\mathsf{u},\agg^\mathsf{L}_\mathsf{v}\},\mathsf{SV})$. Therefore, $\mathsf{A}(\gA,\pool)\simeq \mathsf{SSWL}$.
    \end{itemize}
    Combining the four cases concludes the proof.
\end{proof}

\section{Discussions on Subgraph GNNs beyond the Framework of \cref{def:layer}}
\label{sec:other_aggregation}

There have been several prior works that design subgraph GNNs beyond the aggregation schemes of \cref{def:layer}. In this section, we will investigate them and compare the expressive power with our framework. We focus on the WL algorithm corresponding to each subgraph GNN because it has the same expressive power as the GNN model in distinguishing non-isomorphic graphs (which can be easily proved following \cref{sec:swl_eq_subgraph_gnn}). Throughout this section, we assume that node marking policy is used since it achieves the strongest expressive power according to \cref{thm:node_marking}. 

Below, we discuss the following works:
\begin{itemize}[topsep=0pt]
    \setlength{\itemsep}{0pt}
    \item $\mathsf{GNN}\text{-}\mathsf{AK}$ \citep{zhao2022stars};
    \item $\mathsf{GNN}\text{-}\mathsf{AK}\text{-}\mathsf{ctx}$ \citep{zhao2022stars};
    \item $\mathsf{DSS}\text{-}\mathsf{WL}$ \citep{bevilacqua2022equivariant};
    \item $\mathsf{SUN}$ \citep{frasca2022understanding};
    \item $\mathsf{ReIGN(2)}$ \citep{frasca2022understanding}.
\end{itemize}

$\mathsf{GNN}\text{-}\mathsf{AK}$ \citep{zhao2022stars}. The GNN aggregation scheme can be written as
\begin{align*}
    \chi_G^{(t+1)}(u, v) =\left\{\begin{array}{ll}
        \begin{aligned}
            \hash(&\chi_G^{(t)}(u,v),\\
                &\chi_G^{(t)}(v,v),\\
                &\ldblbrace\chi_G^{(t)}(u, w):w\in\gN_G(v)\rdblbrace)
        \end{aligned}
         & \text{if }u\neq v, \\
        \begin{aligned}
            \hash(&\chi_G^{(t)}(v,v),\\
                &\ldblbrace\chi_G^{(t)}(u, w):w\in\gN_G(v)\rdblbrace,\\
                &\ldblbrace\chi_G^{(t)}(u, w):w\in\gV_G\rdblbrace)
        \end{aligned}
         & \text{if }u= v.
    \end{array}\right. 
\end{align*}
$\mathsf{GNN}\text{-}\mathsf{AK}$ uses the vertex-subgraph pooling. As can be seen, there is an additional global aggregation $\agg^\mathsf{G}_\mathsf{u}$ when $u=v$, which differs from the case of $u\neq v$. Therefore, it goes beyond the framework of \cref{def:swl}.
\begin{proposition}
    $\mathsf{GNN}\text{-}\mathsf{AK}$ is as powerful as $\mathsf{PSWL(VS)}$.
\end{proposition}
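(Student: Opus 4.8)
The plan is to show the two inequalities $\mathsf{GNN}\text{-}\mathsf{AK}\preceq\mathsf{PSWL(VS)}$ and $\mathsf{PSWL(VS)}\preceq\mathsf{GNN}\text{-}\mathsf{AK}$, working entirely at the level of stable color mappings via \cref{remark:finer}(b),(c) (both algorithms use the $\mathsf{VS}$ pooling paradigm). For the first direction, I would start from the stable color mapping $\chi$ of $\mathsf{PSWL(VS)}=\mathsf{A}(\{\agg^\mathsf{L}_\mathsf{u},\agg^\mathsf{P}_\mathsf{vv}\},\mathsf{VS})$ and verify that $\chi$ is a fixed point of the $\mathsf{GNN}\text{-}\mathsf{AK}$ update, i.e. that $\chi$ cannot be refined by the $\mathsf{GNN}\text{-}\mathsf{AK}$ aggregation. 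The $u\neq v$ branch uses $\agg^\mathsf{P}_\mathsf{uv}$, $\agg^\mathsf{P}_\mathsf{vv}$, and $\agg^\mathsf{L}_\mathsf{u}$, all of which are (expressible by) aggregations in the $\mathsf{PSWL(VS)}$ scheme, so that branch is immediate. The only subtlety is the $u=v$ branch, which additionally invokes the global aggregation $\agg^\mathsf{G}_\mathsf{u}$. Here I would use \cref{thm:node_marking_distance_corollary}: if $\chi_G(u,u)=\chi_H(x,x)$ then (applying the corollary, since $\agg^\mathsf{L}_\mathsf{u}\in\gA$) we may invoke \cref{thm:local_lemma} to conclude $\ldblbrace\chi_G(u,w):w\in\gV_G\rdblbrace=\ldblbrace\chi_H(x,z):z\in\gV_H\rdblbrace$, so the extra global aggregation on the diagonal carries no new information. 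Concretely, one should note that $u=v$ is itself detectable from $\chi(u,v)$ by \cref{thm:node_marking_implies_distance} (distance $0$), so the case split in the $\mathsf{GNN}\text{-}\mathsf{AK}$ formula is well-defined relative to $\chi$; then both branches are refinements already implied by $\chi$, giving $\chi\preceq\mathscr{T}(\text{GNN-AK},\chi)$ and hence, by \cref{remark:finer}(c), $\mathsf{GNN}\text{-}\mathsf{AK}\preceq\mathsf{PSWL(VS)}$.

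For the converse, $\mathsf{PSWL(VS)}\preceq\mathsf{GNN}\text{-}\mathsf{AK}$, I would start from the stable color mapping $\chi'$ of $\mathsf{GNN}\text{-}\mathsf{AK}$ and show it is stable under the $\mathsf{PSWL(VS)}$ aggregations $\{\agg^\mathsf{P}_\mathsf{uv},\agg^\mathsf{P}_\mathsf{vv},\agg^\mathsf{L}_\mathsf{u}\}$. The aggregations $\agg^\mathsf{P}_\mathsf{uv}$ and $\agg^\mathsf{L}_\mathsf{u}$ appear in both branches of the $\mathsf{GNN}\text{-}\mathsf{AK}$ update, so stability under them is immediate. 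For $\agg^\mathsf{P}_\mathsf{vv}$: in the $u\neq v$ branch it is explicitly present; in the $u=v$ branch $\agg^\mathsf{P}_\mathsf{vv}$ coincides with $\agg^\mathsf{P}_\mathsf{uv}$ (since $u=v$), which is present. One must again check that the predicate ``$u=v$'' is recoverable from $\chi'$ — this follows because the $u=v$ branch includes $\agg^\mathsf{G}_\mathsf{u}$ whereas the $u\neq v$ branch does not, but more robustly one can run the same distance argument as in \cref{thm:node_marking_implies_distance} adapted to $\mathsf{GNN}\text{-}\mathsf{AK}$ (it contains $\agg^\mathsf{L}_\mathsf{u}$, so the lemma applies). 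Hence $\chi'\preceq\mathscr{T}(\{\agg^\mathsf{L}_\mathsf{u},\agg^\mathsf{P}_\mathsf{vv}\},\chi')$ and \cref{remark:finer}(c) gives $\mathsf{PSWL(VS)}\preceq\mathsf{GNN}\text{-}\mathsf{AK}$, completing the equivalence.

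The main obstacle I anticipate is the mismatch in the initial color: $\mathsf{GNN}\text{-}\mathsf{AK}$ as stated in \citet{zhao2022stars} uses $k$-hop ego networks with distance encoding, whereas our canonical analysis is phrased for node marking on the original graph. By the standing assumption of \cref{sec:other_aggregation} we take node marking; but then I must still check that the $k$-hop ego-net restriction does not weaken the model — this is exactly what \cref{thm:node_marking} (via \cref{thm:policy_expand} and \cref{thm:node_marking_implies_distance}) provides, since $\mathsf{GNN}\text{-}\mathsf{AK}$ contains $\agg^\mathsf{P}_\mathsf{uv}$ and $\agg^\mathsf{L}_\mathsf{u}$, so node marking on the original graph dominates all ego-net variants. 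The remaining care is purely bookkeeping: confirming that the case-split formula is compatible with the fixed-point argument on both sides and that no ``diagonal'' or ``composite'' aggregation sneaks in beyond what \cref{thm:local_lemma} handles. Once those checks are in place the result follows cleanly.
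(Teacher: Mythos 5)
Your proof is correct, but it takes a genuinely different route from the paper's. The paper sandwiches: it notes that the $\mathsf{GNN}\text{-}\mathsf{AK}$ stable coloring is trivially finer than that of $\mathsf{A}(\{\agg^\mathsf{L}_\mathsf{u},\agg^\mathsf{P}_\mathsf{vv}\},\mathsf{VS})$ (since the $u\neq v$ branch of $\mathsf{GNN}\text{-}\mathsf{AK}$ contains all its aggregations, and the $u=v$ branch loses nothing because $\chi(u,v)=\chi(v,v)$ there), while the stable coloring of $\mathsf{A}(\{\agg^\mathsf{L}_\mathsf{u},\agg^\mathsf{P}_\mathsf{vv},\agg^\mathsf{G}_\mathsf{u}\},\mathsf{VS})$ is trivially finer than $\mathsf{GNN}\text{-}\mathsf{AK}$'s (since it applies $\agg^\mathsf{G}_\mathsf{u}$ everywhere rather than only on the diagonal). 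Both bracketing algorithms are equivalent to $\mathsf{PSWL(VS)}$ by \cref{thm:swl_hierarchy} (together with \cref{thm:aggregation}, which absorbs $\agg^\mathsf{G}_\mathsf{u}$ in the presence of $\agg^\mathsf{L}_\mathsf{u}$), which closes the argument in two lines and requires no fixed-point computation at all. You instead run the direct double-sided fixed-point verification via \cref{remark:finer}(c) — verifying that the $\mathsf{PSWL(VS)}$ stable coloring is a fixed point of the $\mathsf{GNN}\text{-}\mathsf{AK}$ update (invoking \cref{thm:local_lemma} and \cref{thm:node_marking_distance_corollary} to dispose of the extra diagonal global aggregation) and conversely. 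This is exactly the strategy the paper adopts for the harder cases in the same section ($\mathsf{GNN}\text{-}\mathsf{AK}\text{-}\mathsf{ctx}$, $\mathsf{DSS}\text{-}\mathsf{WL}$, $\mathsf{SUN}$, $\mathsf{ReIGN(2)}$), where no clean sandwich is available; for $\mathsf{GNN}\text{-}\mathsf{AK}$ itself the sandwich is strictly shorter, but your approach is more uniform and requires no case-by-case insight into which SWL algorithms bracket the target. Your handling of the $u=v$/$u\neq v$ case split (noting it is recoverable from the color by node marking) and of the policy mismatch (ego nets vs.\ node marking, resolved by \cref{thm:node_marking}) is careful and correct.
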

\begin{proof}
    Consider the following two SWL algorithms defined in \cref{def:node_marking_swl}: $\mathrm{(i)}$ $\mathsf{A}(\{\agg^\mathsf{L}_\mathsf{u},\agg^\mathsf{P}_\mathsf{vv}\},\mathsf{VS})$, and $\mathrm{(ii)}$ $\mathsf{A}(\{\agg^\mathsf{L}_\mathsf{u},\agg^\mathsf{P}_\mathsf{vv},\agg^\mathsf{G}_\mathsf{u}\},\mathsf{VS})$. It is clear that the stable color mapping of $\mathsf{GNN}\text{-}\mathsf{AK}$ is finer than that of $\mathsf{A}(\{\agg^\mathsf{L}_\mathsf{u},\agg^\mathsf{P}_\mathsf{vv}\},\mathsf{VS})$, but the stable color mapping of $\mathsf{A}(\{\agg^\mathsf{L}_\mathsf{u},\agg^\mathsf{P}_\mathsf{vv},\agg^\mathsf{G}_\mathsf{u}\},\mathsf{VS})$ is finer than $\mathsf{GNN}\text{-}\mathsf{AK}$. However, both algorithms are equivalent to $\mathsf{PSWL(VS)}$ as shown in \cref{thm:swl_hierarchy}. Therefore, by \cref{remark:finer}(b) $\mathsf{GNN}\text{-}\mathsf{AK}$ is as powerful as $\mathsf{PSWL(VS)}$.
\end{proof}

$\mathsf{GNN}\text{-}\mathsf{AK}\text{-}\mathsf{ctx}$ \citep{zhao2022stars}. The GNN aggregation scheme can be written as
\begin{align*}
    \chi_G^{(t+1)}(u, v) =\left\{\begin{array}{ll}
        \begin{aligned}
            \hash(&\chi_G^{(t)}(u,v),\\
                &\chi_G^{(t)}(v,v),\\
                &\ldblbrace\chi_G^{(t)}(u, w):w\in\gN_G(v)\rdblbrace)
        \end{aligned}
         & \text{if }u\neq v, \\
        \begin{aligned}
            \hash(&\chi_G^{(t)}(v,v),\\
                &\ldblbrace\chi_G^{(t)}(u, w):w\in\gN_G(v)\rdblbrace,\\
                &\ldblbrace\chi_G^{(t)}(u, w):w\in\gV_G\rdblbrace,\\
                &\ldblbrace\chi_G^{(t)}(w, v):w\in\gV_G\rdblbrace)
        \end{aligned}
         & \text{if }u= v.
    \end{array}\right. 
\end{align*}
$\mathsf{GNN}\text{-}\mathsf{AK}$ also uses the vertex-subgraph pooling. Compared with $\mathsf{GNN}\text{-}\mathsf{AK}$, $\mathsf{GNN}\text{-}\mathsf{AK}\text{-}\mathsf{ctx}$ further introduces the cross-graph global aggregation $\agg^\mathsf{G}_\mathsf{v}$ when $u=v$ (which they called the \emph{contextual encoding}).
\begin{proposition}
\label{thm:gnn-ak-ctx}
    $\mathsf{GNN}\text{-}\mathsf{AK}\text{-}\mathsf{ctx}$ is as powerful as $\mathsf{GSWL}$.
\end{proposition}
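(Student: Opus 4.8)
The goal is to establish both $\mathsf{GNN}\text{-}\mathsf{AK}\text{-}\mathsf{ctx}\preceq\mathsf{GSWL}$ and $\mathsf{GSWL}\preceq\mathsf{GNN}\text{-}\mathsf{AK}\text{-}\mathsf{ctx}$; since both algorithms use vertex-subgraph pooling, by \cref{remark:finer}(b) it suffices to compare stable color mappings, and I will write $\chi$ for the stable mapping of $\mathsf{GNN}\text{-}\mathsf{AK}\text{-}\mathsf{ctx}$ under the node marking policy.

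\emph{First direction} ($\mathsf{GNN}\text{-}\mathsf{AK}\text{-}\mathsf{ctx}\preceq\mathsf{GSWL}$). This half parallels the proof for $\mathsf{GNN}\text{-}\mathsf{AK}$. Reading $\chi(v,v)$ as $\chi(u,v)$ on the diagonal, every atomic operation appearing in either branch of the update is drawn from $\{\agg^\mathsf{P}_\mathsf{uv},\agg^\mathsf{P}_\mathsf{vv},\agg^\mathsf{L}_\mathsf{u},\agg^\mathsf{G}_\mathsf{u},\agg^\mathsf{G}_\mathsf{v}\}$; moreover the node marking initialization separates diagonal from off-diagonal pairs, and since each branch keeps $\chi^{(t)}(u,v)$ among the hash arguments this separation is preserved at every iteration, so the branch selection is a function of the current color class. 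A routine induction on $t$ then shows that the $t$-th color mapping of $\mathsf{A}_2:=\mathsf{A}(\{\agg^\mathsf{L}_\mathsf{u},\agg^\mathsf{P}_\mathsf{vv},\agg^\mathsf{G}_\mathsf{u},\agg^\mathsf{G}_\mathsf{v}\},\mathsf{VS})$ is finer than that of $\mathsf{GNN}\text{-}\mathsf{AK}\text{-}\mathsf{ctx}$, hence $\chi^{\mathsf{A}_2}\preceq\chi$. By \cref{thm:aggregation} (the first bullet absorbs $\agg^\mathsf{G}_\mathsf{u}$ into $\agg^\mathsf{L}_\mathsf{u}$, and the $\mathsf{u}\leftrightarrow\mathsf{v}$ symmetric form of the second bullet absorbs $\agg^\mathsf{P}_\mathsf{vv}$ into $\agg^\mathsf{G}_\mathsf{v}$) we obtain $\mathsf{A}_2\simeq\mathsf{GSWL}$, which yields $\mathsf{GNN}\text{-}\mathsf{AK}\text{-}\mathsf{ctx}\preceq\mathsf{GSWL}$.

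\emph{Second direction} ($\mathsf{GSWL}\preceq\mathsf{GNN}\text{-}\mathsf{AK}\text{-}\mathsf{ctx}$) is the substantive one, and the naive sandwich fails here: dropping the diagonal-only operations leaves only the strictly weaker $\mathsf{PSWL}(\mathsf{VS})$. Instead I would show that $\chi$ is already stable under the $\mathsf{GSWL}$ update, i.e. $\chi\simeq\mathscr{T}(\{\agg^\mathsf{L}_\mathsf{u},\agg^\mathsf{G}_\mathsf{v}\},\chi)$, and then conclude $\chi\preceq\chi^{\mathsf{GSWL}}$ by the argument of \cref{remark:finer}(c) (which only uses equality of initial colors and stability of $\chi$, so it applies despite the branch in the update). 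Stability under $\agg^\mathsf{P}_\mathsf{uv}$ and $\agg^\mathsf{L}_\mathsf{u}$ is immediate, since both $\chi(u,v)$ and the local multiset $\ldblbrace\chi(u,w):w\in\gN_G(v)\rdblbrace$ are hash arguments in both branches. The crux is stability under $\agg^\mathsf{G}_\mathsf{v}$, for which I would prove a \emph{diagonal-recovery lemma}: $\chi_G(u,v)=\chi_H(x,y)\implies\chi_G(v,v)=\chi_H(y,y)$. When $u\ne v$ this holds because the off-diagonal branch lists $\chi(v,v)$ among its hash arguments; when $u=v$ it is trivial (here $x=y$ as well); and node marking guarantees $u=v\iff x=y$, so the two cases are never confused. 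Given $\chi_G(v,v)=\chi_H(y,y)$, stability of $\chi$ forces all hash arguments of the $u=v$ branch at $(v,v)$ and $(y,y)$ to coincide, and one of these is precisely the contextual term $\ldblbrace\chi(w,v):w\in\gV_G\rdblbrace$; hence $\agg^\mathsf{G}_\mathsf{v}(u,v,G,\chi)=\agg^\mathsf{G}_\mathsf{v}(x,y,H,\chi)$. Combining the two directions gives $\mathsf{GNN}\text{-}\mathsf{AK}\text{-}\mathsf{ctx}\simeq\mathsf{GSWL}$.

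The main obstacle is exactly this diagonal-recovery step and its consequence: the contextual aggregation $\agg^\mathsf{G}_\mathsf{v}$ is applied by $\mathsf{GNN}\text{-}\mathsf{AK}\text{-}\mathsf{ctx}$ only at the root--root pairs $(v,v)$, so one must argue that the stable color of a \emph{general} pair $(u,v)$ nevertheless determines $\chi(v,v)$, and therefore the whole ``column'' multiset $\ldblbrace\chi(w,v):w\in\gV_G\rdblbrace$. Once this is isolated, the remaining pieces --- the induction in the first direction, the diagonal/off-diagonal case split, and the bookkeeping with \cref{thm:aggregation} and \cref{remark:finer} --- are routine and follow the template already used for $\mathsf{GNN}\text{-}\mathsf{AK}$.
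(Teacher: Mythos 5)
Your proof is correct and follows essentially the same route as the paper's: one direction upper-bounds $\mathsf{GNN}\text{-}\mathsf{AK}\text{-}\mathsf{ctx}$ by an SWL whose aggregation scheme contains all operations used in either branch and then collapses it to $\mathsf{GSWL}$ via \cref{thm:swl_hierarchy}, while the other applies \cref{remark:finer}(c) and hinges on exactly the diagonal-recovery step you isolate (extract $\chi(v,v)$ from the off-diagonal branch, then use stability at $(v,v)$ to obtain the contextual multiset $\ldblbrace\chi(w,v):w\in\gV_G\rdblbrace$). Your upper-bounding scheme $\mathsf{A}(\{\agg^\mathsf{L}_\mathsf{u},\agg^\mathsf{P}_\mathsf{vv},\agg^\mathsf{G}_\mathsf{u},\agg^\mathsf{G}_\mathsf{v}\},\mathsf{VS})$ is in fact slightly more careful than the paper's (which omits $\agg^\mathsf{G}_\mathsf{u}$ from the list), but this is a cosmetic difference since adding $\agg^\mathsf{G}_\mathsf{u}$ does not change the equivalence class.
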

\begin{proof}
    Similar to the above proof, by using the result that $\mathsf{GSWL}$ is as powerful as $\mathsf{A}(\{\agg^\mathsf{L}_\mathsf{u},\agg^\mathsf{P}_\mathsf{vv},\agg^\mathsf{G}_\mathsf{v}\},\mathsf{VS})$ (\cref{thm:swl_hierarchy}), it is clear that $\mathsf{GSWL}$ is more powerful than $\mathsf{GNN}\text{-}\mathsf{AK}\text{-}\mathsf{ctx}$. It remains to prove that $\mathsf{GNN}\text{-}\mathsf{AK}\text{-}\mathsf{ctx}$ is more powerful than $\mathsf{GSWL}$.

    The proof is based on \cref{remark:finer}(c). Let $\chi$ be the stable color mapping of $\mathsf{GNN}\text{-}\mathsf{AK}\text{-}\mathsf{ctx}$. Define an auxiliary color mapping $\tilde\chi=\mathscr{T}(\{\agg^\mathsf{L}_{\mathsf{u}},\agg^\mathsf{G}_{\mathsf{v}}\},\chi)$ where $\mathscr{T}$ is defined in (\ref{eq:mapping_transform}). It suffices to prove that $\chi\preceq\tilde\chi$.
    
    Consider any vertices $u,v\in\gV_G$ and $x,y\in\gV_H$ satisfying $\chi_G(u,v)=\chi_H(x,y)$. Since the mapping $\chi$ is already stable, by definition of $\mathsf{GNN}\text{-}\mathsf{AK}\text{-}\mathsf{ctx}$ we have 
    \begin{align}
        \label{eq:proof_gnnakctx_0}
        \chi_G(v,v)&=\chi_H(y,y),\\
        \label{eq:proof_gnnakctx_1}
        \ldblbrace\chi_G(u, w):w\in\gN_G(v)\rdblbrace&=\ldblbrace\chi_H(x, z):z\in\gN_H(y)\rdblbrace.
    \end{align}
    Again by definition of the stable color mapping, (\ref{eq:proof_gnnakctx_0}) implies that
    \begin{align}
    \label{eq:proof_gnnakctx_2}
        \ldblbrace\chi_G(w, v):w\in\gV_G\rdblbrace&=\ldblbrace\chi_H(z, y):z\in\gV_H\rdblbrace.
    \end{align}
    Combining with (\ref{eq:proof_gnnakctx_1}) and (\ref{eq:proof_gnnakctx_2}), we obtain that $\tilde \chi_G(u,v)=\tilde \chi_H(x,y)$, concluding the proof.
\end{proof}

$\mathsf{DSS}\text{-}\mathsf{WL}$ \citep{bevilacqua2022equivariant}. The aggregation scheme of $\mathsf{DSS}\text{-}\mathsf{WL}$ can be written as
\begin{align*}
    \chi_G^{(t+1)}(u,v) = \hash(&\chi_G^{(t)}(u,v),\\
    &\ldblbrace\chi_G^{(t)}(u,w):w\in\gN_G(v)\rdblbrace,\\
    &\ldblbrace\chi_G^{(t)}(w,v):w\in\gV_G\rdblbrace,\\
    &\ldblbrace\chi_G^{(t)}(w,w'):w\in\gV_G,w'\in\gN_G(v)\rdblbrace).
\end{align*}
Here, the last aggregation does not belong to \cref{def:swl}. $\mathsf{DSS}\text{-}\mathsf{WL}$ also uses the vertex-subgraph pooling. 
\begin{proposition}
    $\mathsf{DSS}\text{-}\mathsf{WL}$ is as powerful as $\mathsf{GSWL}$.
\end{proposition}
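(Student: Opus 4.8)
The plan is to show expressivity inclusions in both directions, exactly in the style of the $\mathsf{GNN}\text{-}\mathsf{AK}\text{-}\mathsf{ctx}$ proof above, relying on \cref{remark:finer}(b),(c) and the characterization $\mathsf{GSWL}\simeq\mathsf{A}(\{\agg^\mathsf{L}_\mathsf{u},\agg^\mathsf{G}_\mathsf{v}\},\mathsf{VS})$ from \cref{thm:swl_hierarchy}. First I would establish the easy direction: $\mathsf{DSS}\text{-}\mathsf{WL}\preceq\mathsf{GSWL}$. Since the extra aggregation $\ldblbrace\chi_G^{(t)}(w,w'):w\in\gV_G,\ w'\in\gN_G(v)\rdblbrace$ that $\mathsf{DSS}\text{-}\mathsf{WL}$ uses is a coarsening of information already available once one tracks the full color mapping, the cleanest route is to pick the \emph{richer} SWL algorithm $\mathsf{A}(\{\agg^\mathsf{L}_\mathsf{u},\agg^\mathsf{G}_\mathsf{v},\agg^\mathsf{P}_\mathsf{vv}\},\mathsf{VS})$ (which is still $\simeq\mathsf{GSWL}$ by \cref{thm:swl_hierarchy}) and check by a straightforward induction over iterations $t$ that its color mapping at iteration $t$ is finer than that of $\mathsf{DSS}\text{-}\mathsf{WL}$ at iteration $t$: every DSS aggregation term is a function of the terms $\chi(u,v)$, $\agg^\mathsf{L}_\mathsf{u}$, $\agg^\mathsf{G}_\mathsf{v}$, together with the aggregation-over-$v$'s-neighbors of the \emph{global multiset} $\ldblbrace\chi(w,w'):w\in\gV_G\rdblbrace$ indexed by $w'$, and this last piece is itself refined once each node $w'$ already carries a color finer than $\chi(w',w')$ — which $\agg^\mathsf{P}_\mathsf{vv}$ combined with $\agg^\mathsf{G}_\mathsf{v}$ delivers. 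This gives $\mathsf{DSS}\text{-}\mathsf{WL}\preceq\mathsf{GSWL}$ by \cref{remark:finer}(b).

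For the harder direction, $\mathsf{GSWL}\preceq\mathsf{DSS}\text{-}\mathsf{WL}$, I would mimic the $\mathsf{GNN}\text{-}\mathsf{AK}\text{-}\mathsf{ctx}$ argument and invoke \cref{remark:finer}(c). Let $\chi$ be the stable color mapping of $\mathsf{DSS}\text{-}\mathsf{WL}$ and define the auxiliary mapping $\tilde\chi=\mathscr{T}(\{\agg^\mathsf{L}_\mathsf{u},\agg^\mathsf{G}_\mathsf{v}\},\chi)$ via (\ref{eq:mapping_transform}). It suffices to prove $\chi\preceq\tilde\chi$, i.e., that $\chi_G(u,v)=\chi_H(x,y)$ already forces the $\agg^\mathsf{L}_\mathsf{u}$ and $\agg^\mathsf{G}_\mathsf{v}$ aggregates on $\chi$ to agree. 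The $\agg^\mathsf{L}_\mathsf{u}$ part is immediate since the local term $\ldblbrace\chi(u,w):w\in\gN_G(v)\rdblbrace$ appears verbatim in the $\mathsf{DSS}\text{-}\mathsf{WL}$ update and $\chi$ is stable. The $\agg^\mathsf{G}_\mathsf{v}$ part is likewise immediate: $\ldblbrace\chi(w,v):w\in\gV_G\rdblbrace$ is one of the $\mathsf{DSS}\text{-}\mathsf{WL}$ aggregation terms. Hence $\tilde\chi_G(u,v)=\tilde\chi_H(x,y)$, giving $\chi\preceq\tilde\chi$ and therefore $\mathsf{GSWL}\preceq\mathsf{DSS}\text{-}\mathsf{WL}$ by \cref{remark:finer}(c). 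Combining the two inclusions, and noting both algorithms use $\mathsf{VS}$ pooling, yields $\mathsf{DSS}\text{-}\mathsf{WL}\simeq\mathsf{GSWL}$.

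The main obstacle I anticipate is the bookkeeping in the easy direction: one must be careful that the fourth $\mathsf{DSS}\text{-}\mathsf{WL}$ aggregation $\ldblbrace\chi(w,w'):w\in\gV_G,w'\in\gN_G(v)\rdblbrace$ is genuinely no stronger than what $\{\agg^\mathsf{L}_\mathsf{u},\agg^\mathsf{G}_\mathsf{v},\agg^\mathsf{P}_\mathsf{vv}\}$ provides. The subtlety is that this term couples a neighbor-of-$v$ quantifier with a global quantifier; one resolves it by observing it factors as the multiset, over $w'\in\gN_G(v)$, of the \emph{global column multiset} $\ldblbrace\chi(w,w'):w\in\gV_G\rdblbrace$, which under the stable color mapping of the richer SWL is a function of $\chi(w',w')$ alone (this is the content of \cref{thm:global_lemma} together with the fact that a stable coloring refined by $\agg^\mathsf{G}_\mathsf{v}$ cannot be further refined by it). Thus the DSS term is a function of $\ldblbrace\chi(w',w'):w'\in\gN_G(v)\rdblbrace$, which is in turn a function of $\ldblbrace\chi(u,w'):w'\in\gN_G(v)\rdblbrace=\agg^\mathsf{L}_\mathsf{u}$ applied to a coloring where each diagonal color $\chi(w',w')$ is recoverable — precisely because $\agg^\mathsf{P}_\mathsf{vv}\in\gA$ and \cref{thm:node_marking_distance_corollary}-type reasoning lets colors determine their diagonal counterparts. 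Once this factorization is in hand, the induction closes routinely.
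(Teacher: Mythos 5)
Your overall plan — prove both inclusions, and for the nontrivial one use \cref{remark:finer}(c) on a stable color mapping — is the right shape, and the obstacle you identify (that the fourth $\mathsf{DSS}\text{-}\mathsf{WL}$ term couples a global with a local quantifier and must be factored into column multisets over $w'\in\gN_G(v)$) is exactly the crux of the paper's argument. However the directions are mislabeled, and more importantly the framing you propose for the direction that actually requires work would not go through as stated.

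The \emph{trivial} direction is $\mathsf{GSWL}\preceq\mathsf{DSS}\text{-}\mathsf{WL}$: the $\mathsf{DSS}\text{-}\mathsf{WL}$ update contains $\agg^\mathsf{L}_\mathsf{u}$ and $\agg^\mathsf{G}_\mathsf{v}$ verbatim among its four terms, so its color mapping is finer at every iteration, with no appeal to stability needed. You label this ``harder'' but your own argument for it (``immediate'', ``immediate'') shows it is the easy one. The \emph{nontrivial} direction is $\mathsf{DSS}\text{-}\mathsf{WL}\preceq\mathsf{GSWL}$, and here the proposed ``straightforward induction over iterations $t$'' does not close. At iteration $t$, knowing $\chi^{(t)}(u,w')$ for $w'\in\gN_G(v)$ does \emph{not} determine the column multisets $\ldblbrace\chi^{(t)}(w,w'):w\in\gV_G\rdblbrace$; that determination requires $\chi$ to already be \emph{stable} under $\agg^\mathsf{G}_\mathsf{v}$, a property intermediate iterates lack. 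Your ``obstacle'' paragraph implicitly passes to the stable mapping, but then asserts ``the induction closes routinely'' — it does not, unless you drop the per-iteration framing and argue directly on the stable mapping via \cref{remark:finer}(c). That is precisely what the paper does: with $\chi$ the stable $\mathsf{GSWL}$ mapping, $\chi_G(u,w')=\chi_H(x,z')$ together with stability under $\agg^\mathsf{G}_\mathsf{v}$ gives $\ldblbrace\chi_G(w,w'):w\in\gV_G\rdblbrace=\ldblbrace\chi_H(z,z'):z\in\gV_H\rdblbrace$, so the nested multiset over $w'\in\gN_G(v)$ matches, and flattening yields the extra DSS term.

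One further simplification: your detour through $\agg^\mathsf{P}_\mathsf{vv}$ and the diagonal colors $\chi(w',w')$ (invoking \cref{thm:node_marking_distance_corollary} and \cref{thm:global_lemma}) is unnecessary. The paper goes directly from $\chi(u,w')$ to the column multiset by stability under $\agg^\mathsf{G}_\mathsf{v}$, never passing through the diagonal, and the bare aggregation scheme $\{\agg^\mathsf{L}_\mathsf{u},\agg^\mathsf{G}_\mathsf{v}\}$ suffices without enriching by $\agg^\mathsf{P}_\mathsf{vv}$.
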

\begin{proof}
    Clearly, $\mathsf{DSS}\text{-}\mathsf{WL}$ is more powerful than the SWL algorithm $\mathsf{A}(\{\agg^\mathsf{L}_\mathsf{u},\agg^\mathsf{G}_\mathsf{v}\},\mathsf{VS})$, which is precisely $\mathsf{GSWL}$. It remains to prove that $\mathsf{GSWL}$ is more powerful than $\mathsf{DSS}\text{-}\mathsf{WL}$.

    Similar to \cref{thm:gnn-ak-ctx}, the proof is based on \cref{remark:finer}(c). Let $\chi$ be the stable color mapping of $\mathsf{GSWL}$. For any vertices $u,v\in\gV_G$ and $x,y\in\gV_H$, if $\chi_G(u,v)=\chi_H(x,y)$, then we have 
    \begin{align}
    \label{eq:proof_dsswl_0}
        \ldblbrace\chi_G(u, w):w\in\gN_G(v)\rdblbrace&=\ldblbrace\chi_H(x, z):z\in\gN_H(y)\rdblbrace,\\
    \label{eq:proof_dsswl_1}
        \ldblbrace\chi_G(w, v):w\in\gV_G\rdblbrace&=\ldblbrace\chi_H(z, y):z\in\gV_H\rdblbrace.
    \end{align}
    Plugging (\ref{eq:proof_dsswl_1}) into (\ref{eq:proof_dsswl_0}) yields
    \begin{align*}
        \ldblbrace\ldblbrace\chi_G(w, w'):w\in\gV_G\rdblbrace :w'\in\gN_G(v)\rdblbrace=\ldblbrace\ldblbrace\chi_H(z, z'):z\in\gV_H\rdblbrace:z'\in\gN_H(y)\rdblbrace.
    \end{align*}
    Therefore,
    \begin{equation}
    \label{eq:proof_dsswl_2}
    \begin{aligned}
        \ldblbrace\chi_G(w, w'):w\in\gN_G(v),w'\in\gV_G\rdblbrace=\ldblbrace\chi_H(z, z'):w\in\gN_H(y),z'\in\gV_H\rdblbrace.
    \end{aligned}
    \end{equation}
    Combining with (\ref{eq:proof_dsswl_0}), (\ref{eq:proof_dsswl_1}), and (\ref{eq:proof_dsswl_2}), it implies that $\mathsf{DSS}\text{-}\mathsf{WL}$ cannot further refine the stable color mapping $\chi$, which concludes the proof.
\end{proof}

$\mathsf{SUN}$ \citep{frasca2022understanding}. The WL aggregation scheme can be written as
\begin{align*}
    \chi_G^{(t+1)}(u, v) =\hash(&\chi_G^{(t)}(u,v),\chi_G^{(t)}(u,u),\chi_G^{(t)}(v,v),\\
                &\ldblbrace\chi_G^{(t)}(u,w):w\in\gN_G(v)\rdblbrace,\\
                &\ldblbrace\chi_G^{(t)}(u,w):w\in\gV_G\rdblbrace,\\
                &\ldblbrace\chi_G^{(t)}(w,v):w\in\gV_G\rdblbrace,\\
                &\ldblbrace\chi_G^{(t)}(w,w'):w\in\gV_G,w'\in\gN_G(v)\rdblbrace).
\end{align*}
We note that the formulation of \citet{frasca2022understanding} slightly differs from the above WL formula, in that $\mathsf{SUN}$ introduces different model parameters separately for the cases of $u=v$ and $u\neq v$, respectively. However, when using a node marking policy, introducing two sets of parameters for the two cases does not theoretically increase the expressivity (but it may benefit practical performance in real-world tasks).
\begin{proposition}
    $\mathsf{SUN}$ is as powerful as $\mathsf{GSWL}$.
\end{proposition}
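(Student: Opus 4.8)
The plan is to follow the same two-direction template used above for $\mathsf{GNN}\text{-}\mathsf{AK}\text{-}\mathsf{ctx}$ and $\mathsf{DSS}\text{-}\mathsf{WL}$. The direction $\mathsf{GSWL}\preceq\mathsf{SUN}$ is immediate: the $\mathsf{SUN}$ update contains among its aggregation terms both $\agg^\mathsf{L}_\mathsf{u}$ (the term $\ldblbrace\chi(u,w):w\in\gN_G(v)\rdblbrace$) and $\agg^\mathsf{G}_\mathsf{v}$ (the term $\ldblbrace\chi(w,v):w\in\gV_G\rdblbrace$), and $\mathsf{SUN}$ uses vertex-subgraph pooling, so by a routine induction over iterations (as in \cref{sec:swl_eq_subgraph_gnn}) its stable color mapping is finer than that of $\mathsf{A}(\{\agg^\mathsf{L}_\mathsf{u},\agg^\mathsf{G}_\mathsf{v}\},\mathsf{VS})=\mathsf{GSWL}$; then \cref{remark:finer}(b) gives $\mathsf{GSWL}\preceq\mathsf{SUN}$. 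Here I would also note, as the surrounding text remarks, that using separate hashes/parameters for $u=v$ and $u\ne v$ is harmless, since node marking keeps the diagonal pairs $(u,u)$ permanently distinguished from off-diagonal pairs, so the two cases can be merged into a single hash without loss of expressive power.

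For the converse $\mathsf{SUN}\preceq\mathsf{GSWL}$ I would argue via \cref{remark:finer}(c), mirroring the $\mathsf{DSS}\text{-}\mathsf{WL}$ proof. The key preparatory move is to apply \cref{thm:swl_hierarchy} (Case 3 of its proof) to rewrite $\mathsf{GSWL}\simeq\mathsf{A}(\gA_0,\mathsf{VS})$ with the \emph{enlarged} scheme $\gA_0=\{\agg^\mathsf{L}_\mathsf{u},\agg^\mathsf{G}_\mathsf{u},\agg^\mathsf{P}_\mathsf{uu},\agg^\mathsf{G}_\mathsf{v},\agg^\mathsf{P}_\mathsf{vv}\}$; this is the point, because then the single-point terms $\chi(u,u),\chi(v,v)$ and both global terms occurring in $\mathsf{SUN}$ already belong to $\gA_0$. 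Let $\chi$ be the stable color mapping of $\mathsf{A}(\gA_0,\mathsf{VS})$ and put $\tilde\chi=\mathscr{T}(\gA_{\mathsf{SUN}},\chi)$, the one-step refinement of $\chi$ by the full $\mathsf{SUN}$ scheme (cf. (\ref{eq:mapping_transform})). By \cref{remark:finer}(c) it suffices to show $\chi\preceq\tilde\chi$, i.e., whenever $\chi_G(u,v)=\chi_H(x,y)$ then every $\mathsf{SUN}$ aggregation term at $(u,v)$ in $G$ equals the corresponding one at $(x,y)$ in $H$. For all terms but the last this holds simply because $\chi$ is already $\gA_0$-stable.

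The only term that requires work is the composite aggregation $\ldblbrace\chi(w,w'):w\in\gV_G,\,w'\in\gN_G(v)\rdblbrace$, and here the plan is to reuse the chain of equalities (\ref{eq:proof_dsswl_0})--(\ref{eq:proof_dsswl_2}) verbatim: stability under $\agg^\mathsf{L}_\mathsf{u}$ gives $\ldblbrace\chi_G(u,w):w\in\gN_G(v)\rdblbrace=\ldblbrace\chi_H(x,z):z\in\gN_H(y)\rdblbrace$, hence a color-preserving bijection between $\gN_G(v)$ and $\gN_H(y)$; for each matched pair $(w',z')$, stability under $\agg^\mathsf{G}_\mathsf{v}$ applied to $(u,w')$ vs. $(x,z')$ yields $\ldblbrace\chi_G(w,w'):w\in\gV_G\rdblbrace=\ldblbrace\chi_H(z,z'):z\in\gV_H\rdblbrace$; taking the multiset union over the bijection and flattening gives the desired equality. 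This establishes $\chi\preceq\tilde\chi$, hence $\mathsf{SUN}\preceq\mathsf{GSWL}$, and combining the two directions with \cref{remark:finer}(b) yields $\mathsf{SUN}\simeq\mathsf{GSWL}$. I expect the only genuinely non-routine step to be this composite-aggregation argument — the observation that $\agg^\mathsf{L}_\mathsf{u}$ and $\agg^\mathsf{G}_\mathsf{v}$ together already simulate $\sum_{w\in\gV_G}\sum_{w'\in\gN_G(v)}$; everything else is bookkeeping identical to the $\mathsf{DSS}\text{-}\mathsf{WL}$ and $\mathsf{GNN}\text{-}\mathsf{AK}\text{-}\mathsf{ctx}$ cases.
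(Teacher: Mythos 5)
Your proof is correct and follows essentially the same route as the paper. The paper's own proof is a one-line pointer to the $\mathsf{DSS}\text{-}\mathsf{WL}$ argument together with the enlarged scheme $\mathsf{A}(\{\agg^\mathsf{L}_\mathsf{u},\agg^\mathsf{G}_\mathsf{u},\agg^\mathsf{G}_\mathsf{v},\agg^\mathsf{P}_\mathsf{uu},\agg^\mathsf{P}_\mathsf{vv}\},\mathsf{VS})$ from \cref{thm:swl_hierarchy}, which is exactly your $\gA_0$; the only non-trivial step you identify (simulating the composite term $\ldblbrace\chi(w,w'):w\in\gV_G,\,w'\in\gN_G(v)\rdblbrace$ from $\agg^\mathsf{L}_\mathsf{u}$-stability plus $\agg^\mathsf{G}_\mathsf{v}$-stability at each $(u,w')$) is precisely the chain (\ref{eq:proof_dsswl_0})--(\ref{eq:proof_dsswl_2}) the paper reuses, and your remark about the $u=v$ versus $u\neq v$ split being harmless under node marking matches the remark the paper makes just before the proposition.
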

\begin{proof}
    The proof is almost the same as the above one, by using the result that $\mathsf{GSWL}$ is as powerful as $\mathsf{A}(\{\agg^\mathsf{L}_\mathsf{u},\agg^\mathsf{G}_\mathsf{u},\agg^\mathsf{G}_\mathsf{v},\agg^\mathsf{P}_\mathsf{uu},\agg^\mathsf{P}_\mathsf{vv}\},\mathsf{VS})$ (\cref{thm:swl_hierarchy}). We omit the details here.
\end{proof}

$\mathsf{ReIGN(2)}$ \citep{frasca2022understanding}. This GNN architecture is motivated by 2-IGN \citep{maron2019invariant,maron2019universality} by extending each basic equivariant linear operator into various types of local/global aggregations. Each atomic aggregation operation in $\mathsf{ReIGN(2)}$ can be symbolized as $\agg^{\mathsf{op}_1,\mathsf{op}_2}$, where $\mathsf{op}_1$ and $\mathsf{op}_2$ can take one of the following symbols: $\mathsf{Pu}$, $\mathsf{Pv}$, $\mathsf{G}$, $\mathsf{Lu}$, $\mathsf{Lv}$, and $\mathsf{D}$. The semantic of $\agg^{\mathsf{op}_1,\mathsf{op}_2}$ is defined as follows:
\begin{align*}
    \agg^{\mathsf{op}_1,\mathsf{op}_2}(u,v,G,\chi)=\ldblbrace \chi(w,w'):w\in \bigcirc,w'\in\bigcirc\rdblbrace,
\end{align*}
where the first/second $\bigcirc$ is filled by one of the following expression depending on $\mathsf{op}_1$/$\mathsf{op}_2$, respectively:
\begin{itemize}[topsep=0pt]
\setlength{\itemsep}{0pt}
    \item For symbol $\mathsf{Pu}$: $\bigcirc$ is filled by $\{u\}$;
    \item For symbol $\mathsf{Pv}$: $\bigcirc$ is filled by $\{v\}$;
    \item For symbol $\mathsf{Lu}$: $\bigcirc$ is filled by $\gN_G(u)$;
    \item For symbol $\mathsf{Lv}$: $\bigcirc$ is filled by $\gN_G(v)$;
    \item For symbol $\mathsf{G}$: $\bigcirc$ is filled by $\gV_G$;
    \item For symbol $\mathsf{D}$ : $\bigcirc$ is filled by $\{w\}$. This symbol corresponds to diagonal aggregation and can only be used by $\mathsf{op}_2$.
\end{itemize}
Based on the choice of $\mathsf{op}_1$ and $\mathsf{op}_2$, there are a total of $5\times 6-2=28$ nonequivalent aggregation operations. Note that $\agg^{\mathsf{Pu},\mathsf{D}}$ is equivalent to $\agg^{\mathsf{Pu},\mathsf{Pu}}$ and $\agg^{\mathsf{Pv},\mathsf{D}}$ is equivalent to $\agg^{\mathsf{Pv},\mathsf{Pv}}$. As a result, $\mathsf{ReIGN(2)}$ incorporates all these 28 aggregation operations into the WL iteration. Similar to $\mathsf{SUN}$, $\mathsf{ReIGN(2)}$ also introduces
different model parameters separately for the cases of $u = v$
and $u\neq v$, respectively. It can be calculated that the total number of linear equivariant transformations is $28+11=39$.

\begin{proposition}
\label{thm:reign}
    $\mathsf{ReIGN(2)}$ is as powerful as $\mathsf{SSWL}$.
\end{proposition}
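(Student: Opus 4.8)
The plan is to prove the two inclusions $\mathsf{SSWL}\preceq\mathsf{ReIGN(2)}$ and $\mathsf{ReIGN(2)}\preceq\mathsf{SSWL}$ separately, working throughout with the equivalent enriched form of $\mathsf{SSWL}$. Let $\mathsf{A}^\star:=\mathsf{A}(\gA',\mathsf{VS})$, where $\gA'=\{\agg^\mathsf{P}_\mathsf{uu},\agg^\mathsf{P}_\mathsf{vv},\agg^\mathsf{P}_\mathsf{vu},\agg^\mathsf{G}_\mathsf{u},\agg^\mathsf{G}_\mathsf{v},\agg^\mathsf{L}_\mathsf{u},\agg^\mathsf{L}_\mathsf{v}\}$ consists of all seven non-trivial basic aggregations, so that the per-iteration scheme $\gA'\cup\{\agg^\mathsf{P}_\mathsf{uv}\}$ comprises all eight basic aggregations of \cref{def:swl}; since $\mathsf{A}^\star$ contains both $\agg^\mathsf{L}_\mathsf{u}$ and $\agg^\mathsf{L}_\mathsf{v}$, \cref{thm:swl_hierarchy} gives $\mathsf{A}^\star\simeq\mathsf{SSWL}$. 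Write $\chi$ for the stable coloring of $\mathsf{A}^\star$. I first observe that the separate model parameters $\mathsf{ReIGN(2)}$ assigns to the cases $u=v$ and $u\neq v$ do not affect expressive power under the node marking policy: by \cref{thm:node_marking_distance_corollary}, $\chi_G(u,v)=\chi_H(x,y)$ forces $\dis_G(u,v)=\dis_H(x,y)$, hence $u=v\iff x=y$, so the case split is itself a function of the color. Thus $\mathsf{ReIGN(2)}$ may be treated as a single iteration scheme $\mathcal B$ of the $28$ operations $\agg^{\mathsf{op}_1,\mathsf{op}_2}$; denote its stable coloring by $\chi^{\mathcal B}$.

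For $\mathsf{SSWL}\preceq\mathsf{ReIGN(2)}$: each of the eight basic aggregations is literally one of the operations in $\mathcal B$ (e.g.\ $\agg^\mathsf{L}_\mathsf{u}=\agg^{\mathsf{Pu},\mathsf{Lv}}$, $\agg^\mathsf{L}_\mathsf{v}=\agg^{\mathsf{Lu},\mathsf{Pv}}$, $\agg^\mathsf{G}_\mathsf{u}=\agg^{\mathsf{Pu},\mathsf{G}}$, $\agg^\mathsf{P}_\mathsf{vu}=\agg^{\mathsf{Pv},\mathsf{Pu}}$), so $\chi^{\mathcal B}\preceq\chi$. For the pooling, $\mathsf{ReIGN(2)}$'s invariant output layer computes (in its standard realization via power-sum multi-symmetric polynomials) the hash of the flat multiset $\ldblbrace\chi^{\mathcal B}_G(u,v):u,v\in\gV_G\rdblbrace$, whereas $\mathsf{SSWL}$ uses $\mathsf{VS}$ pooling; I will show that for the coloring $\chi$ on connected graphs the flat multiset already determines the nested one. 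Indeed, distances are encoded in colors (\cref{thm:node_marking_implies_distance}), so the sub-multiset of distance-$0$ colors of $\ldblbrace\chi_G(u,v):u,v\rdblbrace$ is exactly $\ldblbrace\chi_G(u,u):u\in\gV_G\rdblbrace$, and by \cref{thm:local_lemma} each diagonal color $\chi_G(u,u)$ determines the entire row multiset $\ldblbrace\chi_G(u,v):v\in\gV_G\rdblbrace$; reassembling the rows recovers $\ldblbrace\ldblbrace\chi_G(u,v):v\rdblbrace:u\rdblbrace$. As $\chi^{\mathcal B}\preceq\chi$, the same holds for $\chi^{\mathcal B}$, whence $c^{\mathsf{ReIGN(2)}}(G)=c^{\mathsf{ReIGN(2)}}(H)$ implies $c^{\mathsf{SSWL}}(G)=c^{\mathsf{SSWL}}(H)$.

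For the harder inclusion $\mathsf{ReIGN(2)}\preceq\mathsf{SSWL}$, it suffices (by the argument of \cref{remark:finer}(c), applied verbatim with $\mathcal B$ in place of the aggregation scheme there) to show that $\chi$ is already stable under every $\agg^{\mathsf{op}_1,\mathsf{op}_2}\in\mathcal B$, i.e.\ $\chi_G(u,v)=\chi_H(x,y)$ implies $\agg^{\mathsf{op}_1,\mathsf{op}_2}(u,v,G,\chi)=\agg^{\mathsf{op}_1,\mathsf{op}_2}(x,y,H,\chi)$; this gives $\chi\preceq\chi^{\mathcal B}$, hence $\chi\simeq\chi^{\mathcal B}$, and the pooling comparison is then trivial since $\mathsf{VS}$ pooling is at least as fine as the flat-multiset pooling used by $\mathsf{ReIGN(2)}$. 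Two facts about $\chi$ are used repeatedly: by \cref{thm:local_local_lemma} and \cref{thm:transpose_eq_local}, $\chi$ has the transpose symmetry $\chi_G(u,v)=\chi_H(x,y)\iff\chi_G(v,u)=\chi_H(y,x)$, giving a well-defined involution $\tau$ on colors with $\tau(\chi_G(u,v))=\chi_G(v,u)$; and combining this with \cref{thm:node_marking_distance_corollary}, $\chi_G(u,v)=\chi_H(x,y)$ entails $\chi_G(u,u)=\chi_H(x,x)$ and $\chi_G(v,v)=\chi_H(y,y)$. I then classify the $28$ operations into: (i) those equal to a basic aggregation — stability is immediate; (ii) those equal to a basic aggregation evaluated at $(u,u)$ or $(v,v)$, e.g.\ $\agg^{\mathsf{Lu},\mathsf{Pu}}(u,v,G,\chi)=\agg^\mathsf{L}_\mathsf{v}(u,u,G,\chi)$, handled using $\chi_G(u,u)=\chi_H(x,x)$ (resp.\ $\chi_G(v,v)=\chi_H(y,y)$); (iii) transposes of (i)--(ii), handled by applying $\tau$ elementwise to the matched multisets; (iv) the diagonal aggregations $\agg^{\mathsf{op}_1,\mathsf{D}}=\ldblbrace\chi(w,w):w\in S_1\rdblbrace$ with $S_1\in\{\gN_G(u),\gN_G(v),\gV_G\}$, handled by first matching $S_1$ across the two graphs via the corresponding local or global basic aggregation and then passing to diagonals with \cref{thm:node_marking_distance_corollary}; and (v) the genuine ``double'' aggregations $\ldblbrace\chi(w,w'):w\in S_1,\ w'\in S_2\rdblbrace$ with $S_1,S_2\in\{\gN_G(u),\gN_G(v),\gV_G\}$.

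I expect family (v) to be the main obstacle. For these I use a ``peel-and-reattach'' scheme: first apply one basic local/global aggregation (available because the relevant pebble sits at $u$, $v$, $(u,u)$, or $(v,v)$) to obtain a color-preserving bijection $\phi$ between $S_1$ in $G$ and its counterpart in $H$; then for each matched pair $w\leftrightarrow\phi(w)$, since the corresponding reference pair has equal color, apply a second basic local/global aggregation to match $\ldblbrace\chi(w,w'):w'\in S_2\rdblbrace$ with its counterpart; finally take the multiset union over $w\in S_1$. For instance $\agg^{\mathsf{Lu},\mathsf{Lu}}(u,v,G,\chi)=\ldblbrace\chi(w,w'):w,w'\in\gN_G(u)\rdblbrace$ is matched by peeling with $\agg^\mathsf{L}_\mathsf{u}(u,u,\cdot)$ and reattaching with $\agg^\mathsf{L}_\mathsf{v}(u,w,\cdot)$; $\agg^{\mathsf{Lu},\mathsf{Lv}}$ by peeling with $\agg^\mathsf{L}_\mathsf{v}(u,v,\cdot)$ and reattaching with $\agg^\mathsf{L}_\mathsf{u}(w,v,\cdot)$; $\agg^{\mathsf{G},\mathsf{G}}$ by peeling with $\agg^\mathsf{G}_\mathsf{u}(u,v,\cdot)$ and reattaching with $\agg^\mathsf{G}_\mathsf{v}(u,w,\cdot)$; the remaining cases are analogous, using the transpose involution $\tau$ and the $(v,v)$-versions where needed. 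The delicate point is that the ``outer'' index set must always admit a color-preserving bijection produced by an aggregation that $\mathsf{SSWL}$ actually carries at the current pebble configuration — which is exactly where having \emph{both} $\agg^\mathsf{L}_\mathsf{u}$ and $\agg^\mathsf{L}_\mathsf{v}$ (the symmetrization defining $\mathsf{SSWL}$) is indispensable — and that the residual ``inner'' multiset can then be matched pair-by-pair; intuitively, no single $\mathsf{ReIGN(2)}$ operation ever produces more information than two successive $\mathsf{SSWL}$ aggregations.
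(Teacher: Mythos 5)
Your proof matches the paper's approach: the hard direction $\mathsf{ReIGN(2)}\preceq\mathsf{SSWL}$ uses \cref{remark:finer}(c) to reduce to showing the stable SSWL coloring $\chi$, which is already stable under all seven non-trivial basic aggregations, cannot be refined by any $\agg^{\mathsf{op}_1,\mathsf{op}_2}$, and the easy direction follows because each basic aggregation is one of the $\mathsf{ReIGN(2)}$ operations. The paper only verifies one representative case ($\agg^{\mathsf{G},\mathsf{D}}$) and defers the rest to ``a similar proof technique as previous propositions,'' whereas you carry out the full classification --- your peel-and-reattach composition of two basic local/global aggregations (using the transpose symmetry of $\chi$ from \cref{thm:local_local_lemma}) to handle the genuine double aggregations $\agg^{\mathsf{Lu},\mathsf{Lu}}$, $\agg^{\mathsf{Lu},\mathsf{Lv}}$, $\agg^{\mathsf{G},\mathsf{G}}$, etc., together with your explicit treatment of pooling and of the $u=v$ vs.\ $u\neq v$ parameter split, correctly fills in precisely the details the paper leaves implicit.
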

\begin{proof}
    First, it is obvious that $\mathsf{ReIGN(2)}$ is more powerful than $\mathsf{SSWL}$. Therefore, it remains to prove that $\mathsf{SSWL}$ is more powerful than $\mathsf{ReIGN(2)}$. Similar to the previous propositions, the proof is based on \cref{remark:finer}(c). Let $\chi$ be the stable color mapping of $\mathsf{SSWL}$. Consider any vertices $u,v\in\gV_G$ and $x,y\in\gV_H$ satisfying $\chi_G(u,v)=\chi_H(x,y)$. Since $\mathsf{SSWL}$ is as powerful as $\mathsf{A}(\{\agg^\mathsf{L}_\mathsf{u},\agg^\mathsf{L}_\mathsf{x},\agg^\mathsf{G}_\mathsf{u},\agg^\mathsf{G}_\mathsf{v},\agg^\mathsf{P}_\mathsf{uu},\agg^\mathsf{P}_\mathsf{vv},\agg^\mathsf{P}_\mathsf{vu}\},\mathsf{VS})$, we have
    \begin{align*}
        \chi_G(u,u)&=\chi_H(x,x),\\
        \chi_G(v,v)&=\chi_H(y,y),\\
        \chi_G(v,u)&=\chi_H(y,x),\\
        \ldblbrace\chi_G(u,w):w\in\gV_G\rdblbrace&=\ldblbrace\chi_H(x,z):z\in\gV_H\rdblbrace,\\
        \ldblbrace\chi_G(w,v):w\in\gV_G\rdblbrace&=\ldblbrace\chi_H(z,y):z\in\gV_H\rdblbrace,\\
        \ldblbrace\chi_G(u,w):w\in\gN_G(v)\rdblbrace&=\ldblbrace\chi_H(x,z):z\in\gN_H(y)\rdblbrace,\\
        \ldblbrace\chi_G(w,v):w\in\gN_G(u)\rdblbrace&=\ldblbrace\chi_H(z,y):z\in\gN_H(F)\rdblbrace.
    \end{align*}
    Using a similar proof technique as previous propositions, we can show that $\chi$ cannot be refined by all $\agg^{\mathsf{op}_1,\mathsf{op}_2}$. We list one representative example below.

    The diagonal aggregation $\agg^{\mathsf{G},\mathsf{D}}$. Combining the fourth equation and the second equation above, we obtain
    $$\ldblbrace\chi_G(w,w):w\in\gV_G\rdblbrace=\ldblbrace\chi_H(z,z):z\in\gV_H\rdblbrace,$$
    as desired.
\end{proof}

\section{Proof of Theorems in \cref{sec:lfwl}}
\label{sec:proof_lfwl}

This section aims to prove \cref{thm:2lfwl}. Throughout this section, we denote $G=(\gV_G,\gE_G)$ and $H=(\gV_H,\gE_H)$ as any graphs. Denote $\chi^\mathsf{PSWL}$, $\chi^\mathsf{SSWL}$, $\chi^\mathsf{FWL}$, $\chi^\mathsf{LFWL}$, and $\chi^\mathsf{SLFWL}$ as the stable color mappings of $\mathsf{PSWL(VS)}$, $\mathsf{SSWL}$, $\mathsf{FWL(2)}$, $\mathsf{LFWL(2)}$, and $\mathsf{SLFWL(2)}$, respectively. 

We begin with the following simple fact, which holds by definition of the isomorphism type.

\begin{fact}
\label{thm:chi_fwl_isomorphism_type}
    Let $\chi\in\{\chi^\mathsf{FWL},\chi^\mathsf{LFWL},\chi^\mathsf{SLFWL}\}$. For any vertices $u,v\in\gV_G$ and $x,y\in\gV_H$, if $\chi_G(u,v)=\chi_H(x,y)$, then:
    \begin{itemize}[topsep=0pt]
    \setlength{\itemsep}{0pt}
        \item $u=v\iff x=y$;
        \item $\{u,v\}\in\gE_G\iff \{x,y\}\in\gE_H$.
    \end{itemize}
\end{fact}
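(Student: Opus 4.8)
\textbf{Proof plan for Fact~\ref{thm:chi_fwl_isomorphism_type}.}

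The plan is to prove this by a straightforward induction on the iteration count $t$, showing that at every step the color $\chi_G^{(t)}(u,v)$ determines the two Boolean quantities ``$u=v$'' and ``$\{u,v\}\in\gE_G$''. Concretely, I would prove the following invariant for each of the three algorithms: for all $t\ge 0$ and all vertices $u,v\in\gV_G$, $x,y\in\gV_H$, if $\chi_G^{(t)}(u,v)=\chi_H^{(t)}(x,y)$ then $u=v\iff x=y$ and $\{u,v\}\in\gE_G\iff\{x,y\}\in\gE_H$. Since the stable color mapping $\chi$ equals $\chi^{(t)}$ for $t$ large enough, the Fact follows immediately by specializing to the stable iteration.

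For the base case $t=0$, recall from the description in \cref{sec:lfwl} that the initial color $\chi_G^{(0)}(u,v)$ is defined to be precisely the isomorphism type of the ordered pair $(u,v)$, i.e., it records whether $u=v$, whether $\{u,v\}\in\gE_G$, or whether $\{u,v\}\notin\gE_G$. Hence $\chi_G^{(0)}(u,v)=\chi_H^{(0)}(x,y)$ forces both equalities directly. For the inductive step, note that in all three update formulas (equation~(\ref{eq:2fwl}) for $\mathsf{FWL(2)}$, and its localized variants for $\mathsf{LFWL(2)}$ and $\mathsf{SLFWL(2)}$), the new color $\chi_G^{(t+1)}(u,v)$ is obtained as $\hash(\chi_G^{(t)}(u,v),\,\dots)$, where the first argument of $\hash$ is the previous color of the same pair $(u,v)$. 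Since $\hash$ is a perfect hash function, $\chi_G^{(t+1)}(u,v)=\chi_H^{(t+1)}(x,y)$ implies in particular $\chi_G^{(t)}(u,v)=\chi_H^{(t)}(x,y)$, and the induction hypothesis then yields the two desired equivalences at level $t+1$.

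This argument is uniform across $\mathsf{FWL(2)}$, $\mathsf{LFWL(2)}$, and $\mathsf{SLFWL(2)}$, since the only feature of the update rule we use is that it refines the color of $(u,v)$ relative to its own previous color — the precise neighbourhood set ($\gV_G$, $\gN_G^1(v)$, or $\gN_G^1(u)\cup\gN_G^1(v)$) used in the $\walk$-type term is irrelevant here. There is essentially no obstacle: the only point requiring a word of care is that the claim about stabilization ($\chi=\chi^{(t)}$ for sufficiently large $t$) is precisely what lets us pass from the per-iteration invariant to the statement about the stable coloring, and this is guaranteed by the stabilization bound $t\le|\gV_G|^2$ already stated in \cref{sec:lfwl}. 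I would therefore present this as a short two-paragraph proof: state the per-iteration invariant, dispatch the base case via the initial isomorphism-type coloring, and dispatch the inductive step via injectivity of $\hash$ on the first coordinate.
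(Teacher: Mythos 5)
Your proof is correct and captures exactly the reasoning the paper treats as immediate: the initial colors of all three FWL-type algorithms encode the isomorphism type of the pair, each update is of the form $\hash(\chi^{(t)}(u,v),\dots)$ so colors only get refined, and hence the stable coloring still determines whether $u=v$ and whether $\{u,v\}\in\gE_G$. The paper simply asserts this ``holds by definition of the isomorphism type'' without writing out the refinement induction, so your proposal is just a more explicit version of the same argument.
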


\begin{lemma}
\label{thm:chi_fwl_basic}
    The following relations hold:
    \begin{itemize}[topsep=0pt]
    \setlength{\itemsep}{0pt}
        \item $\chi^\mathsf{LFWL}\preceq\chi^\mathsf{PSWL}$;
        \item $\chi^\mathsf{SLFWL}\preceq\chi^\mathsf{SSWL}$;
        \item $\chi^\mathsf{SLFWL}\preceq\chi^\mathsf{LFWL}$;
        \item $\chi^\mathsf{FWL}\preceq\chi^\mathsf{SLFWL}$.
    \end{itemize}
\end{lemma}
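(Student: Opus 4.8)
# Proof Plan for Lemma \ref{thm:chi_fwl_basic}

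The plan is to prove each of the four relations by exhibiting, for the respective pair of color mappings, an \emph{auxiliary refinement} in the sense of \cref{remark:finer}(c), or by a direct induction over iterations when the two algorithms operate on the same color domain. Throughout, I will repeatedly use the principle that if $\chi^\mathsf{B}$ is stable under the update formula defining algorithm $\mathsf{A}$ (suitably read on node-pair colors), then $\chi^\mathsf{B}\preceq\chi^\mathsf{A}$.

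\textbf{Relations $\chi^\mathsf{LFWL}\preceq\chi^\mathsf{PSWL}$ and $\chi^\mathsf{SLFWL}\preceq\chi^\mathsf{SSWL}$.} These are the substantive directions. The idea is that one $\mathsf{LFWL(2)}$ walk step $\walk(u,v,\gN_G^1(v),\chi)=\ldblbrace(\chi(u,w),\chi(w,v)):w\in\gN_G^1(v)\rdblbrace$ simultaneously encodes the SWL local aggregation $\agg^\mathsf{L}_\mathsf{u}$ (take first coordinates, $w\in\gN_G(v)$) and the single-point aggregation $\agg^\mathsf{P}_\mathsf{vv}$ (the $w=v$ term gives $(\chi(u,v),\chi(v,v))$, so $\chi(v,v)$ is recoverable once $\chi(u,v)$ is pinned down). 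Hence I would take $\mathsf{A}=\mathsf{A}(\{\agg^\mathsf{L}_\mathsf{u},\agg^\mathsf{P}_\mathsf{vv}\},\mathsf{VS})$ (which equals $\mathsf{PSWL(VS)}$ by \cref{thm:swl_hierarchy}), set $\mathsf{B}=\mathsf{LFWL(2)}$, and verify that the stable mapping $\chi^\mathsf{LFWL}$ cannot be further refined by $\mathscr{T}(\{\agg^\mathsf{L}_\mathsf{u},\agg^\mathsf{P}_\mathsf{vv}\},\cdot)$: if $\chi^\mathsf{LFWL}_G(u,v)=\chi^\mathsf{LFWL}_H(x,y)$, stability of $\chi^\mathsf{LFWL}$ gives equality of the walk multisets over $\gN^1$, from which I extract equality of $\ldblbrace\chi^\mathsf{LFWL}_G(u,w):w\in\gN_G(v)\rdblbrace$ and of $\chi^\mathsf{LFWL}_G(v,v)$ (using \cref{thm:chi_fwl_isomorphism_type} to separate the $w=v$ contribution from the $w\in\gN_G(v)$ ones). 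One must also check the pooling paradigms match; $\mathsf{PSWL(VS)}$ uses $\mathsf{VS}$ pooling, and I need to confirm that the $\mathsf{LFWL(2)}$ graph-level output refines the $\mathsf{VS}$-pooled output — but since $\mathsf{LFWL(2)}$ hashes \emph{all} pair colors, its output is finer than either pooling scheme, so this is immediate. The $\mathsf{SLFWL}\preceq\mathsf{SSWL}$ case is the same argument with $\gN_G^1(u)\cup\gN_G^1(v)$ in place of $\gN_G^1(v)$, which additionally yields $\ldblbrace\chi(w,v):w\in\gN_G(u)\rdblbrace$ (hence $\agg^\mathsf{L}_\mathsf{v}$) via the walk step played from the $u$-side neighbors, matching $\mathsf{SSWL}=\mathsf{A}(\{\agg^\mathsf{L}_\mathsf{u},\agg^\mathsf{L}_\mathsf{v}\},\mathsf{VS})$.

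\textbf{Relations $\chi^\mathsf{SLFWL}\preceq\chi^\mathsf{LFWL}$ and $\chi^\mathsf{FWL}\preceq\chi^\mathsf{SLFWL}$.} These are routine: in all three algorithms the color domain is $\gV\times\gV$ with the same initial coloring (isomorphism type of the pair), and the only difference is the index set of the walk, with $\gN_G^1(v)\subseteq\gN_G^1(u)\cup\gN_G^1(v)\subseteq\gV_G$. So I would prove by induction on $t$ that $\chi^{\mathsf{SLFWL},(t)}\preceq\chi^{\mathsf{LFWL},(t)}$ and $\chi^{\mathsf{FWL},(t)}\preceq\chi^{\mathsf{SLFWL},(t)}$: the base case is equality of initial colors, and for the inductive step, equality of a walk multiset over a larger index set, together with the ability to recognize which $w$ lie in the smaller index set (here I use \cref{thm:chi_fwl_isomorphism_type} to detect adjacency, hence membership in $\gN^1(v)$ or $\gN^1(u)$, from the colors $\chi(w,v)$ and $\chi(u,w)$), implies equality of the walk multiset over the smaller index set. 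Taking $t\to\infty$ gives the stable-mapping inclusions.

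\textbf{Main obstacle.} The delicate point is the extraction step in the first two relations: from equality of the \emph{walk multiset} $\ldblbrace(\chi(u,w),\chi(w,v)):w\in\gN^1\rdblbrace$ one must cleanly recover equality of the \emph{projected} multiset $\ldblbrace\chi(u,w):w\in\gN(v)\rdblbrace$ and of the isolated value $\chi(v,v)$. This requires partitioning the index set $\gN^1(v)=\gN(v)\sqcup\{v\}$ in a way that is visible from colors alone — which is exactly what \cref{thm:chi_fwl_isomorphism_type} provides (the $w=v$ term is the unique one with $\chi(u,w)$ forced to the "self" isomorphism class when $u\ne v$, and adjacency distinguishes the rest). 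I would state this partition explicitly and argue that corresponding blocks of equal multisets have equal sub-multisets, so that projection to first (resp.\ second) coordinates preserves equality. Once this bookkeeping is in place the rest follows by the $\mathscr{T}$-machinery of \cref{remark:finer}(c).
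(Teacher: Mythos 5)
Your argument is essentially the paper's proof of \cref{thm:chi_fwl_basic}: both apply \cref{remark:finer}(c) and use \cref{thm:chi_fwl_isomorphism_type} to split the $\walk$ multiset at $w=v$ and at adjacency, thereby recovering $\ldblbrace\chi(u,w):w\in\gN_G(v)\rdblbrace$, $\chi(v,v)$, and (in the $\mathsf{SLFWL}$ case) $\ldblbrace\chi(w,v):w\in\gN_G(u)\rdblbrace$, and your direct induction over $t$ for the last two bullets is just an equivalent unfolding of the same $\mathscr{T}$-based argument the paper invokes by reference. One caveat: your pooling aside is both unnecessary here — \cref{thm:chi_fwl_basic} compares color mappings under \cref{def:finer}, so the pooling paradigm never enters — and not actually immediate as you claim, since the assertion that $\mathsf{LFWL(2)}$'s graph output refines the $\mathsf{VS}$-pooled output is exactly the nontrivial content of \cref{thm:fwl_pooling}.
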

\begin{proof}
    Note that all the FWL-type algorithms considered in \cref{thm:chi_fwl_basic} use isomorphism type as initial colors, which is finer than node marking in SWL algorithms. In this case, it is straightforward to see that \cref{remark:finer}(c) still applies. Namely, it suffices to prove that the stable color mapping of each stronger algorithm cannot get refined using the aggregation scheme of the weaker algorithm.

    We first prove $\chi^\mathsf{LFWL}\preceq\mathscr{T}(\{\agg^\mathsf{L}_\mathsf{u},\agg^\mathsf{P}_\mathsf{vv}\},\chi^\mathsf{LFWL}):=\tilde\chi$, where $\mathscr{T}$ is defined in (\ref{eq:mapping_transform}). Consider any vertices $u,v\in\gV_G$ and $x,y\in\gV_H$ satisfying $\chi_G^\mathsf{LFWL}(u,v)=\chi_H^\mathsf{LFWL}(x,y)$. Then by definition,
    \begin{align*}
        \ldblbrace( \chi^\mathsf{LFWL}_G(u,w),\chi^\mathsf{LFWL}_G(w,v)):w\in\gN_G^1(v)\rdblbrace=\ldblbrace( \chi^\mathsf{LFWL}_H(x,z),\chi^\mathsf{LFWL}_H(z,y)):z\in\gN_H^1(y)\rdblbrace.
    \end{align*}
    It must be the case that
    \begin{align*}
        (\chi^\mathsf{LFWL}_G(u,v),\chi^\mathsf{LFWL}_G(v,v))&=(\chi^\mathsf{LFWL}_H(x,y),\chi^\mathsf{LFWL}_H(y,y)),\\
        \ldblbrace( \chi^\mathsf{LFWL}_G(u,w),\chi^\mathsf{LFWL}_G(w,v)):w\in\gN_G(v)\rdblbrace&=\ldblbrace( \chi^\mathsf{LFWL}_H(x,z),\chi^\mathsf{LFWL}_H(z,y)):z\in\gN_H(y)\rdblbrace,
    \end{align*}
    due to \cref{thm:chi_fwl_isomorphism_type}. Therefore,
    \begin{align*}
        \chi^\mathsf{LFWL}_G(v,v)=\chi^\mathsf{LFWL}_H(y,y)
    \end{align*}
    and
    \begin{align*}
        \ldblbrace \chi^\mathsf{LFWL}_G(u,w): w\in\gN_G(v)\rdblbrace
        =\ldblbrace \chi^\mathsf{LFWL}_H(x,z): z\in\gN_H(y)\rdblbrace.
    \end{align*}
    Namely, $\tilde\chi_G(u,v)=\tilde\chi_H(x,y)$. This proves $\chi^\mathsf{LFWL}\preceq\chi^\mathsf{PSWL}$.

    We next prove $\chi^\mathsf{SLFWL}\preceq\mathscr{T}(\{\agg^\mathsf{L}_\mathsf{u},\agg^\mathsf{L}_\mathsf{v}\},\chi^\mathsf{SLFWL}):=\tilde\chi$. Consider any vertices $u,v\in\gV_G$ and $x,y\in\gV_H$ satisfying $\chi_G^\mathsf{SLFWL}(u,v)=\chi_H^\mathsf{SLFWL}(x,y)$. Then by definition,
    \begin{align*}
        &\ldblbrace( \chi_G^\mathsf{SLFWL}(u,w),\chi_G^\mathsf{SLFWL}(w,v)):w\in\gN_G^1(u)\cup\gN_G^1(v)\rdblbrace\\
        =&\ldblbrace( \chi_H^\mathsf{SLFWL}(x,z),\chi_H^\mathsf{SLFWL}(z,y)):z\in\gN_H^1(x)\cup\gN_H^1(y)\rdblbrace.
    \end{align*}
    Using \cref{thm:chi_fwl_isomorphism_type} we have
    \begin{align*}
        \ldblbrace( \chi_G^\mathsf{SLFWL}(u,w),\chi_G^\mathsf{SLFWL}(w,v)):w\in\gN_G(u)\rdblbrace&=\ldblbrace( \chi_H^\mathsf{SLFWL}(x,z),\chi_H^\mathsf{SLFWL}(z,y)):z\in\gN_H(F)\rdblbrace,\\
        \ldblbrace( \chi_G^\mathsf{SLFWL}(u,w),\chi_G^\mathsf{SLFWL}(w,v)):w\in\gN_G(v)\rdblbrace&=\ldblbrace( \chi_H^\mathsf{SLFWL}(x,z),\chi_H^\mathsf{SLFWL}(z,y)):z\in\gN_H(y)\rdblbrace.
    \end{align*}
    Therefore,
    \begin{align*}
        \ldblbrace \chi_G^\mathsf{SLFWL}(u,w): w\in\gN_G(v)\rdblbrace
        &=\ldblbrace \chi_H^\mathsf{SLFWL}(x,z): z\in\gN_H(y)\rdblbrace,\\
        \ldblbrace\chi_G^\mathsf{SLFWL}(w,v): w\in\gN_G(u)\rdblbrace
        &=\ldblbrace\chi_H^\mathsf{SLFWL}(z,y): z\in\gN_H(F)\rdblbrace.
    \end{align*}
    Namely, $\tilde\chi_G(u,v)=\tilde\chi_H(x,y)$. This proves $\chi^\mathsf{SLFWL}\preceq\chi^\mathsf{SSWL}$.

    The third and fourth bullets follow exactly the same procedure, so we omit the proof for clarity.
\end{proof}

\begin{lemma}
\label{thm:fwl_pooling}
    Let $\chi\in\{\chi^\mathsf{FWL},\chi^\mathsf{LFWL},\chi^\mathsf{SLFWL}\}$. If 
    $$\ldblbrace\chi_G(u,v):u,v\in\gV_G\rdblbrace=\ldblbrace\chi_H(x,y):x,y\in\gV_H\rdblbrace,$$
    then 
    \begin{align*}
        \ldblbrace\ldblbrace\chi_G(u,v):v\in\gV_G\rdblbrace:u\in\gV_G\rdblbrace=\ldblbrace\ldblbrace\chi_H(x,y):y\in\gV_H\rdblbrace:x\in\gV_H\rdblbrace.
    \end{align*}
\end{lemma}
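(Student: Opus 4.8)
The plan is to prove something a little stronger than the lemma: for each of the three algorithms there is a \emph{graph-independent} map $R$ from colors to multisets of colors such that $\chi_G(u,v)=\chi_H(x,y)$ implies $R(\chi_G(u,v))=R(\chi_H(x,y))=\rho_G(u)$, where $\rho_G(u):=\ldblbrace\chi_G(u,w):w\in\gV_G\rdblbrace$ is the ``row'' of $u$. Granting this, the lemma is a counting exercise. First, the hypothesis forces $|\gV_G|=|\gV_H|=:n$, since the two flat multisets have sizes $|\gV_G|^2$ and $|\gV_H|^2$. Applying $R$ elementwise to $\ldblbrace\chi_G(u,v):u,v\in\gV_G\rdblbrace=\ldblbrace\chi_H(x,y):x,y\in\gV_H\rdblbrace$ yields $\ldblbrace\rho_G(u):u,v\in\gV_G\rdblbrace=\ldblbrace\rho_H(x):x,y\in\gV_H\rdblbrace$. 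Since $\rho_G(u)$ is independent of $v$, each side is exactly $n$ copies of $\ldblbrace\rho_G(u):u\in\gV_G\rdblbrace$, resp. of $\ldblbrace\rho_H(x):x\in\gV_H\rdblbrace$; cancelling the common multiplicity $n$ gives the stated equality of the multisets of rows.

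It remains to build $R$. For $\mathsf{FWL(2)}$ this is immediate: at stability $\chi_G(u,v)$ determines $\walk(u,v,\gV_G,\chi_G)$, and the first-coordinate projection of this multiset is exactly $\rho_G(u)$. For $\mathsf{LFWL(2)}$ and $\mathsf{SLFWL(2)}$ the update only exposes a local neighbourhood of $(u,v)$, so I would route through the diagonal color in two steps: (a)~$\chi_G(u,v)=\chi_H(x,y)$ implies $\chi_G(u,u)=\chi_H(x,x)$; (b)~$\chi_G(u,u)=\chi_H(x,x)$ implies $\rho_G(u)=\rho_H(x)$. Composing the two maps gives $R$. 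Both (a) and (b) are the localized-FWL analogues of facts already established for SWL in \cref{sec:proof_of_swl_hierarchy}, and both hinge on the observation — proved exactly as \cref{thm:node_marking_implies_distance} — that for these algorithms $\chi_G(u,v)=\chi_H(x,y)$ implies $\dis_G(u,v)=\dis_H(x,y)$: the isomorphism-type initialization pins distances $0$ and $1$, and the local walk propagates distance because every vertex of $\gN_G^1(v)$, resp. $\gN_G^1(u)\cup\gN_G^1(v)$, lies at distance at least $\dis_G(u,v)-1$ from $u$, with equality attained by a shortest-path neighbour of $v$.

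For (a) I would induct on $d=\dis_G(u,v)$. If $d=0$ then $u=v$ and, by \cref{thm:chi_fwl_isomorphism_type}, $x=y$, so there is nothing to show. If $d\ge 1$, the matched local-walk multisets coming from $\chi_G(u,v)=\chi_H(x,y)$ contain a pair $(\chi_G(u,w),\chi_G(w,v))$ with $\dis_G(u,w)=d-1$, for $w$ a shortest-path neighbour of $v$; its partner $(\chi_H(x,z),\chi_H(z,y))$ then has $\dis_H(x,z)=d-1$, and the induction hypothesis applied to $\chi_G(u,w)=\chi_H(x,z)$ gives $\chi_G(u,u)=\chi_H(x,x)$. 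For (b) I would follow the proof of \cref{thm:local_lemma} almost verbatim, replacing the aggregation $\agg^\mathsf{L}_\mathsf{u}$ used there by the local walk of $\mathsf{LFWL(2)}$/$\mathsf{SLFWL(2)}$: an induction on $k$ shows $\ldblbrace\chi_G(u,w):w\in\gN_G^k(u)\rdblbrace=\ldblbrace\chi_H(x,z):z\in\gN_H^k(x)\rdblbrace$, where the passage from the $k$-ball to the $(k{+}1)$-ball enumerates, for each distance-$k$ vertex, the colors of its neighbours, discards those whose distance to $u$ is not $k+1$ via the distance property, and divides out the over-counting exactly as in (\ref{eq:proof_local_lemma_4}) — the multiplicity $|\gN_G(w)\cap\{w':\dis_G(u,w')=k\}|$ being itself a function of $\chi_G(u,w)$; connectedness then upgrades $k\to\infty$ to $\rho_G(u)=\rho_H(x)$.

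The main obstacle is step (b) for the two localized algorithms: a single color no longer ``sees'' all of $\rho_G(u)$, as it does through the global $\walk$ of $\mathsf{FWL(2)}$, so one must rerun the ball-growing induction of \cref{thm:local_lemma} and, in particular, reconstruct the multiplicities with which vertices at distance $k+1$ are reached from distance $k$. Everything else — the counting argument, step (a), and the $\mathsf{FWL(2)}$ case — is routine once the distance property and the SWL lemmas are in hand.
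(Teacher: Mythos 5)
Your proof is correct and takes essentially the same route as the paper's: both hinge on the key fact that the stable diagonal color $\chi_G(u,u)$ determines the full row $\ldblbrace\chi_G(u,v):v\in\gV_G\rdblbrace$, established by re-running the ball-growing induction of \cref{thm:local_lemma} for the FWL-type color mappings (the paper invokes $\chi\preceq\chi^\mathsf{PSWL}$ from the already-proven \cref{thm:chi_fwl_basic} and then says ``follow the same proof technique of \cref{thm:local_lemma},'' which is exactly your step~(b)). The only differences are bookkeeping: the paper extracts the equality of diagonal multisets directly from the hypothesis via \cref{thm:chi_fwl_isomorphism_type}, which makes your step~(a) unnecessary, whereas you prove~(a) by induction on distance and then apply $R$ to the full flat multiset and cancel the redundant factor~$n$ --- both valid reductions to the same core lemma.
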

\begin{proof}
    Based on the assumption of \cref{thm:fwl_pooling} and \cref{thm:chi_fwl_isomorphism_type}, we have
    $$\ldblbrace\chi_G(u,u):u\in\gV_G\rdblbrace=\ldblbrace\chi_H(x,x):x\in\gV_H\rdblbrace.$$
    Therefore, it suffices to prove that for any vertices $u\in\gV_G$ and $x\in\gV_H$, if $\chi_G(u,u)=\chi_H(x,x)$, then
    \begin{equation}
    \label{eq:proof_fwl_pooling}
        \ldblbrace\chi_G(u,v):v\in\gV_G\rdblbrace=\ldblbrace\chi_H(x,y):y\in\gV_H\rdblbrace.
    \end{equation}
    Based on \cref{thm:chi_fwl_basic}, we have $\chi\preceq \chi^\mathsf{PSWL}$. Note that $\chi^\mathsf{PSWL}$ incorporates the aggregation $\agg^\mathsf{L}_\mathsf{u}$. Therefore, \cref{thm:local_lemma} applies. We can follow the same proof technique of \cref{thm:local_lemma} to obtain (\ref{eq:proof_fwl_pooling}).
\end{proof}

We are now ready to prove \cref{thm:2lfwl}, which we restate below:

\textbf{\cref{thm:2lfwl}.} \emph{The following relations hold:
    \begin{itemize}[topsep=0pt]
    \setlength{\itemsep}{0pt}
        \item $\mathsf{LFWL(2)}\preceq\mathsf{SLFWL(2)}\preceq\mathsf{FWL(2)}$;
        \item $\mathsf{PSWL}(\mathsf{VS})\preceq\mathsf{LFWL(2)}$;
        \item $\mathsf{SSWL}\preceq\mathsf{SLFWL(2)}$.
    \end{itemize}}
\begin{proof}
    The first bullet readily follows from \cref{thm:chi_fwl_basic} and \cref{remark:finer}(b). For the other two bullets, although these algorithms have different pooling paradigms, we have proved that the pooling paradigm of FWL-type algorithms is as powerful as the pooling paradigm $\mathsf{VS}$ (\cref{thm:fwl_pooling}). Therefore, the results hold by \cref{thm:chi_fwl_basic}.
\end{proof}

\section{Proof of Theorems in \cref{sec:pebbling_game}}
\label{sec:proof_pebbling_game}

This section proves the equivalence between SWL/FWL-type algorithms and pebbling games. For ease of presentation, we first define several notations.

Let $G=(\gV_G,\gE_G)$ and $H=(\gV_H,\gE_H)$ be two graphs, and let $u,v$ be two types of pebbles. For each type of pebbles $u$, the placement information can be represented by a vertex pair $(u_G,u_H)$ where $u_G\in\gV_G$ and $u_H\in\gV_H$ are the corresponding vertices that hold pebble $u$. Without abuse of notation, we also use the symbol $u$ to represent the placement information of pebble $u$, i.e. $u=(u_G,u_H)$.

We next define a game modified from \cref{sec:pebbling_game}, called the $L$-round $(u,v)$-pebbling game.

\begin{definition}
    Given aggregation scheme $\gA$ and an integer $L\in\mathbb N$, define the $L$-round $(u,v)$-pebbling game $\mathsf{G}^{\gA,L}(u;v)$ as follows. Initially, pebbles $u$ and $v$ are already placed on graphs $G$ and $H$ according to specified locations $u=(u_G,u_H)$, $v=(v_G,v_H)$. The game has $L$ rounds. In each round, Spoiler and Duplicator can change the position of $u$ and $v$ according to the game rules of $\gA$ defined in \cref{sec:pebbling_game}. Spoiler wins if after certain round $0\le l\le L$, the isomorphism type of vertex pair $(u_G,v_G)$ in graph $G$ differs from the isomorphism type of vertex pair $(u_H,v_H)$ in graph $H$. Duplicator wins the game if Spoiler does not win after playing $L$ rounds.
\end{definition}

We are ready to establish the connection between SWL and the $(u,v)$-pebbling game. Below, denote $\chi^{\gA,(t)}$ as the color mapping of SWL algorithm $\mathsf{A}(\gA,\pool)$ at iteration $t$.

\begin{lemma}
\label{thm:pebble_lemma1}
    Let $l\in\mathbb N$ be any integer. For any vertices $u_G,v_G\in\gV_G$ and $u_H,v_H\in\gV_H$, if $\chi^{\gA,(l)}_G(u_G,v_G)\neq \chi^{\gA,(l)}_H(u_H,v_H)$, then Spoiler can win the $l$-round $(u,v)$-pebbling game $\mathsf{G}^{\gA,l}(u;v)$ with $u=(u_G,u_H)$, $v=(v_G,v_H)$.
\end{lemma}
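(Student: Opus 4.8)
\textbf{Proof plan for \cref{thm:pebble_lemma1}.}
The plan is to induct on $l$. Actually, a cleaner route is to induct on $l$ while keeping the vertex pair arbitrary, exploiting the recursive structure of both the color refinement and the pebbling game. For the base case $l=0$, the hypothesis $\chi^{\gA,(0)}_G(u_G,v_G)\neq\chi^{\gA,(0)}_H(u_H,v_H)$ means the initial colors differ; under the node marking policy this says the pairs have different isomorphism types (taking into account which of $u,v$ is marked), so Spoiler has already won the $0$-round game by definition. For the inductive step, assume the claim for $l$ and suppose $\chi^{\gA,(l+1)}_G(u_G,v_G)\neq\chi^{\gA,(l+1)}_H(u_H,v_H)$. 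By \cref{def:swl}, since $\hash$ is a perfect hash, there must exist some aggregation operation $\agg_i\in\gA$ on which the two pairs disagree: $\agg_i(u_G,v_G,G,\chi^{\gA,(l)}_G)\neq\agg_i(u_H,v_H,H,\chi^{\gA,(l)}_H)$. I would then do a case analysis on the type of $\agg_i$, mirroring the move structure of the game described in \cref{sec:pebbling_game}.

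For single-point operations the argument is immediate: if $\agg_i=\agg^\mathsf{P}_\mathsf{vu}$, then $\chi^{\gA,(l)}_G(v_G,u_G)\neq\chi^{\gA,(l)}_H(v_H,u_H)$, and Spoiler swaps the two pebbles (both players do this deterministically) and then invokes the inductive hypothesis for the $l$-round game with the swapped configuration; similarly $\agg^\mathsf{P}_\mathsf{uu}$ corresponds to both players moving pebble $v$ onto $u$, reducing to $\chi^{\gA,(l)}_G(u_G,u_G)\neq\chi^{\gA,(l)}_H(u_H,u_H)$. The interesting cases are the multiset-valued ones. Suppose $\agg_i=\agg^\mathsf{L}_\mathsf{u}$, so
\[
\ldblbrace\chi^{\gA,(l)}_G(u_G,w):w\in\gN_G(v_G)\rdblbrace\neq\ldblbrace\chi^{\gA,(l)}_H(u_H,z):z\in\gN_H(v_H)\rdblbrace.
\]
Because the two multisets differ, there is a color $c$ appearing with different multiplicities on the two sides; without loss of generality $c$ occurs strictly more often among $\gN_G(v_G)$. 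Spoiler plays the local-aggregation move by choosing $\gS^\mathsf{S}$ to be the set of $w\in\gN_G(v_G)$ with $\chi^{\gA,(l)}_G(u_G,w)=c$; whatever $\gS^\mathsf{D}\subset\gN_H(v_H)$ of equal size Duplicator picks, by the pigeonhole principle (multiplicity mismatch) it must contain some $z$ with $\chi^{\gA,(l)}_H(u_H,z)\neq c$, or — after the subsequent vertex-selection exchange where Spoiler selects from $\gS^\mathsf{D}$ and Duplicator from $\gS^\mathsf{S}$ — the two pebbles $v$ land on vertices $w^\mathsf{S}\in\gS^\mathsf{D}$ (in $H$... careful here) and $w^\mathsf{D}\in\gS^\mathsf{S}$ (in $G$) with $\chi^{\gA,(l)}_G(u_G,w^\mathsf{D})=c\neq\chi^{\gA,(l)}_H(u_H,w^\mathsf{S})$. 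The careful bookkeeping of who-selects-from-which-set in \cref{def:vertex_selection} is exactly what guarantees Spoiler can force a colour mismatch on the new pair $(u,v)$ regardless of Duplicator's play; then the inductive hypothesis finishes the $l$ remaining rounds. The global case $\agg^\mathsf{G}_\mathsf{u}$ is identical with $\gN_G(v_G)$ replaced by $\gV_G$, and the $\mathsf{v}$-variants are symmetric.

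The main obstacle — and the step deserving the most care — is getting the vertex-selection combinatorics exactly right in the multiset cases: the two-phase structure of \cref{def:vertex_selection} (Spoiler picks a subset from one graph, Duplicator matches its cardinality in the other, then Spoiler picks a vertex from \emph{Duplicator's} set and Duplicator from \emph{Spoiler's} set) is what makes the equivalence with multiset comparison tight, and one must argue that a strict multiset inequality $\gS_1\neq\gS_2$ always gives Spoiler a winning choice of $\gS^\mathsf{S}$. I would isolate this as a small standalone combinatorial claim: \emph{if two multisets of colours over finite sets $A,B$ differ, then there is a nonempty $A'\subseteq A$ (resp.\ $B'\subseteq B$) such that for every $B'\subseteq B$ (resp.\ $A'\subseteq A$) with $|A'|=|B'|$, there exist $a\in A'$, $b\in B'$ with $\mathrm{colour}(a)\neq\mathrm{colour}(b)$} — and then the rest of the induction is routine. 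I would also note that $\agg^\mathsf{P}_\mathsf{uv}\in\gA$ is always present but is never the witnessing operation (it would force $\chi^{\gA,(l)}_G(u_G,v_G)\neq\chi^{\gA,(l)}_H(u_H,v_H)$, already handled by the inductive hypothesis with a $0$-round continuation), so it causes no trouble.
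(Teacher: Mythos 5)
Your proof is correct and follows essentially the same route as the paper's: induction on $l$, reduction to the winning witness aggregation $\agg_i$, direct handling of single-point operations via the corresponding deterministic pebble moves, and the pigeonhole argument on multiset multiplicity mismatch for global/local aggregations (the paper formalizes this with $\gC_G(u_G,v_G,c)$, which is exactly your $\gS^\mathsf{S}$). The only stylistic difference is that the paper first splits on whether $\chi^{\gA,(l)}_G(u_G,v_G)\neq\chi^{\gA,(l)}_H(u_H,v_H)$ already holds and invokes the inductive hypothesis directly in that case, whereas you fold this into a remark at the end about $\agg^\mathsf{P}_\mathsf{uv}$; both are fine.
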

\begin{proof}
    The proof is based on induction over $l$. First consider the base case of $l=0$. If $\chi^{\gA,(0)}_G(u_G,v_G)\neq \chi^{\gA,(0)}_H(u_H,v_H)$, by definition of node marking policy we have either ($u_G=v_G$, $u_H\neq v_H$) or ($u_G\neq v_G$, $u_H=v_H$). Clearly, $(u_G,v_G)$ and $(u_H,v_H)$ have different isomorphism types and thus Spoiler wins.

    Now assume that \cref{thm:pebble_lemma1} holds for $l\le L$, and consider $l=L+1$. Let $\chi^{\gA,(L+1)}_G(u_G,v_G)\neq \chi^{\gA,(L+1)}_H(u_H,v_H)$. If $\chi^{\gA,(L)}_G(u_G,v_G)\neq \chi^{\gA,(L)}_H(u_H,v_H)$, then by induction Spoiler wins. Otherwise, there exists an aggregation operation $\agg\in\gA$ such that
    $$\agg(u_G,v_G,G,\chi^{\gA,(L)}_G)\neq \agg(u_H,v_H,H,\chi^{\gA,(L)}_H).$$
    We separately consider which type of atomic aggregation operation $\agg$ is:
    \begin{itemize}[topsep=0pt]
    \setlength{\itemsep}{0pt}
        \item Single-point aggregation $\agg^\mathsf{P}_\mathsf{vu}$. In this case, we have $\chi_G^{\gA,(L)}(v_G,u_G) \neq \chi_H^{\gA,(L)}(v_H,u_H)$. In the first round, Spoiler can choose to swap pebbles $u$ and $v$. The remaining game will then be equivalent to $\mathsf{G}^{\gA,L}(u,v)$ with $u=(v_G,v_H)$, $v=(u_G,u_H)$. By induction, Spoiler wins the game.
        \item Single-point aggregation $\agg^\mathsf{P}_\mathsf{uu}$. In this case, we have $\chi_G^{\gA,(L)}(u_G,u_G) \neq \chi_H^{\gA,(L)}(u_H,u_H)$. In the first round, Spoiler can choose to move pebbles $v$ to the position of $u$. The remaining game will then be equivalent to $\mathsf{G}^{\gA,L}(u,v)$ with $u=(u_G,u_H)$, $v=(u_G,u_H)$. By induction, Spoiler wins the game.
        \item Global aggregation $\agg^\mathsf{G}_\mathsf{u}$. In this case, we have
        \begin{align*}
            \ldblbrace\chi_G^{\gA,(L)}(u_G,w_G):w_G\in\gV_G\rdblbrace
            \neq \ldblbrace\chi_H^{\gA,(L)}(u_H,w_H):w_H\in\gV_H\rdblbrace.
        \end{align*}
        Therefore, there exists a color $c$ such that $|\gC_G(u_G,c)|\neq |\gC_H(u_H,c)|$, where we denote $$\gC_G(u_G,c)=|\{w_G\in\gV_G:\chi_G^{\gA,(L)}(u_G,w_G)=c\}|.$$
        If $|\gC_G(u_G,c)|> |\gC_H(u_H,c)|$, Spoiler can select the vertex subset $\gS^\mathsf{S}=\gC_G(u_G,c)\subset\gV_G$. It can be seen that no matter how Duplicator responds with $\gS^\mathsf{D}\subset\gV_H$, there exists $w_H\in\gS^\mathsf{D}$ such that $\chi_H^{\gA,(L)}(u_H,w_H)\neq c$. Spoiler thus select this vertex $x^\mathsf{S}=w_H$, and no matter how Duplicator responds with $x^\mathsf{D}=w_G\in\gS^\mathsf{S}$, we have $\chi_G^{\gA,(L)}(u_G,w_G)\neq \chi_H^{\gA,(L)}(u_H,w_H)$. The remaining game will then be equivalent to $\mathsf{G}^{\gA,L}(u,v)$ with $u=(u_G,u_H)$, $v=(w_G,w_H)$. By induction, Spoiler wins the game.

        If $|\gC_G(u_G,c)|< |\gC_H(u_H,c)|$, Spoiler can select the vertex subset $\gS^\mathsf{S}=\gC_H(u_H,c)\subset\gV_H$, and the conclusion is the same.
        \item Local aggregation $\agg^\mathsf{L}_\mathsf{u}$. In this case, we have
        \begin{align*}
            \ldblbrace\chi_G^{\gA,(L)}(u_G,w_G):w_G\in\gN_G(v_G)\rdblbrace
            \neq \ldblbrace\chi_H^{\gA,(L)}(u_H,w_H):w_H\in\gN_H(v_H)\rdblbrace.
        \end{align*}
        Therefore, there exists a color $c$ such that $|\gC_G(u_G,v_G,c)|\neq |\gC_H(u_H,v_H,c)|$, where we denote $$\gC_G(u_G,v_G,c)=|\{w_G\in\gN_G(v_G):\chi_G^{\gA,(L)}(u_G,w_G)= c\}|.$$
        If $|\gC_G(u_G,v_G,c)|> |\gC_H(u_H,v_H,c)|$, Spoiler can select the vertex subset $\gS^\mathsf{S}=\gC_G(u_G,v_G,c)\subset\gN_G(v_G)$. If $|\gC_G(u_G,v_G,c)|< |\gC_H(u_H,v_H,c)|$, Spoiler can select the vertex subset $\gS^\mathsf{S}=\gC_H(u_H,v_H,c)\subset\gN_H(v_H)$. Using a similar analysis as the above case, we can conclude that Spoiler wins the game.
    \end{itemize}
    The cases of $\agg^\mathsf{L}_\mathsf{v}$, $\agg^\mathsf{G}_\mathsf{v}$, $\agg^\mathsf{P}_\mathsf{vv}$ are similar (symmetric) to $\agg^\mathsf{L}_\mathsf{u}$, $\agg^\mathsf{G}_\mathsf{u}$, $\agg^\mathsf{P}_\mathsf{uu}$, so we omit them for clarity. We have concluded the induction step.
\end{proof}

\begin{lemma}
\label{thm:pebble_lemma2}
    Let $l\in\mathbb N$ be any integer. Assume $\{\agg^\mathsf{L}_\mathsf{u},\agg^\mathsf{L}_\mathsf{v}\}\cap\gA\neq\emptyset$. For any vertices $u_G,v_G\in\gV_G$ and $u_H,v_H\in\gV_H$, if $\chi^{\gA,(l+1)}_G(u_G,v_G)=\chi^{\gA,(l+1)}_H(u_H,v_H)$, then Duplicator can win the $l$-round $(u,v)$-pebbling game $\mathsf{G}^{\gA,l}(u;v)$ with $u=(u_G,u_H)$, $v=(v_G,v_H)$.
\end{lemma}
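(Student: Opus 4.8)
The plan is to prove the statement by induction on $l$, mirroring the proof of \cref{thm:pebble_lemma1} but now exhibiting a \emph{winning strategy for Duplicator}. Throughout the game Duplicator will maintain the invariant that, whenever $k$ rounds remain to be played, the current pebble placements $u=(u_G,u_H)$ and $v=(v_G,v_H)$ satisfy $\chi^{\gA,(k+1)}_G(u_G,v_G)=\chi^{\gA,(k+1)}_H(u_H,v_H)$. The crucial point — and the only place where the hypothesis $\{\agg^\mathsf{L}_\mathsf{u},\agg^\mathsf{L}_\mathsf{v}\}\cap\gA\neq\emptyset$ is used — is that this invariant already prevents Spoiler from winning: since the color mappings get finer with the iteration count (\cref{remark:finer}(a)), $\chi^{\gA,(k+1)}$-agreement implies $\chi^{\gA,(1)}$-agreement, and because some local aggregation (say $\agg^\mathsf{L}_\mathsf{u}$) occurs in the hash defining $\chi^{\gA,(1)}$, the color $\chi^{\gA,(1)}_G(u_G,v_G)$ encodes the multiset $\ldblbrace\chi^{\gA,(0)}_G(u_G,w):w\in\gN_G(v_G)\rdblbrace$, hence whether $u_G\in\gN_G(v_G)$, i.e.\ whether $\{u_G,v_G\}\in\gE_G$, in addition to whether $u_G=v_G$ (already determined by the node-marking initial color $\chi^{\gA,(0)}$). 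These two bits are exactly the isomorphism type of the pair, so whenever the invariant holds the isomorphism types of $(u_G,v_G)$ and $(u_H,v_H)$ coincide and Spoiler has not yet won.

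The base case $l=0$ is then immediate: no round is played, and the invariant (which is precisely the hypothesis in the case $k=0$) shows Spoiler cannot win. For the inductive step, assume the claim for $l-1$ and suppose $\chi^{\gA,(l+1)}_G(u_G,v_G)=\chi^{\gA,(l+1)}_H(u_H,v_H)$. Spoiler performs one of the moves permitted by $\gA$; in each case I exhibit a Duplicator response after which the new placements satisfy $\chi^{\gA,(l)}$-agreement, so that the induction hypothesis applied to the remaining $(l-1)$-round game finishes the proof. I treat the four representative moves, the $\mathsf{v}$-versions being symmetric.

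If Spoiler plays the move for $\agg^\mathsf{L}_\mathsf{u}\in\gA$, then expanding the hash in the $\chi^{\gA,(l+1)}$-agreement yields $\ldblbrace\chi^{\gA,(l)}_G(u_G,w):w\in\gN_G(v_G)\rdblbrace=\ldblbrace\chi^{\gA,(l)}_H(u_H,z):z\in\gN_H(v_H)\rdblbrace$, hence a color-preserving bijection $\phi\colon\gN_G(v_G)\to\gN_H(v_H)$ with $\chi^{\gA,(l)}_G(u_G,w)=\chi^{\gA,(l)}_H(u_H,\phi(w))$. In the vertex-selection step Duplicator replies to Spoiler's chosen set by its image or preimage under $\phi$, and to Spoiler's chosen vertex by its preimage or image under $\phi$; the new placements leave $u$ fixed and put $v=(w,\phi(w))$, which agree at level $l$. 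The move for $\agg^\mathsf{G}_\mathsf{u}\in\gA$ is handled identically with $\gV_G,\gV_H$ replacing $\gN_G(v_G),\gN_H(v_H)$. If Spoiler plays the move for $\agg^\mathsf{P}_\mathsf{uu}\in\gA$ (both pebbles $v$ moved onto the position of $u$), then $\chi^{\gA,(l+1)}$-agreement directly gives $\chi^{\gA,(l)}_G(u_G,u_G)=\chi^{\gA,(l)}_H(u_H,u_H)$, i.e.\ the new placements agree at level $l$; similarly the move for $\agg^\mathsf{P}_\mathsf{vu}\in\gA$ swaps $u,v$ and $\chi^{\gA,(l+1)}$-agreement gives $\chi^{\gA,(l)}_G(v_G,u_G)=\chi^{\gA,(l)}_H(v_H,u_H)$. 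In every case the remaining game has $l-1$ rounds and its starting positions satisfy the hypothesis for level $l-1$, completing the induction.

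I expect no genuine obstacle beyond carefully setting up the monotone-refinement / isomorphism-type observation in the first paragraph (which is exactly where the local-aggregation hypothesis enters); everything else is the routine extraction of a color-preserving bijection from a multiset equality together with bookkeeping of pebble positions, dual to the argument in \cref{thm:pebble_lemma1}.
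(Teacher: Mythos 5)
Your proof is correct and follows essentially the same route as the paper's: expand the hash at level $l+1$ to obtain multiset equalities, extract a color-preserving bijection for the global/local moves, track the pebble positions directly for the single-point moves, and conclude by induction at level $l$. The one thing you do slightly more carefully is to state the refinement-monotonicity observation (that $\chi^{\gA,(l+1)}$-agreement implies $\chi^{\gA,(1)}$-agreement, hence matching isomorphism type, hence Spoiler has not yet won) as an explicit invariant holding before every move; the paper establishes this only for the base case $l=0$ and leaves the same reasoning implicit in the inductive step, relying on the induction hypothesis to re-check the isomorphism type at round $0$ of the remaining game. That is a cosmetic difference, not a gap — both arguments are sound and essentially identical.
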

\begin{proof}
    The proof is based on induction over $l$. First, consider the base case of $l=0$. Let $\chi^{\gA,(1)}_G(u_G,v_G)=\chi^{\gA,(1)}_H(u_H,v_H)$. If $u_G=v_G$, then $u_H=v_H$ (due to the node marking policy). If $\{u_G,v_G\}\in\gE_G$, then $\{u_H,v_H\}\in\gE_H$ (which follows by applying the local aggregation, similar to the proof of \cref{thm:node_marking_implies_distance}). Therefore, $(u_G,v_G)$ and $(u_H,v_H)$ have the same isomorphism type.

    Now assume that \cref{thm:pebble_lemma2} holds for $l\le L$, and consider $l=L+1$. Let $\chi^{\gA,(L+2)}_G(u_G,v_G)= \chi^{\gA,(L+2)}_H(u_H,v_H)$. Then,
    \begin{equation}
    \label{eq:proof_pebble_1}
        \agg(u_G,v_G,G,\chi^{\gA,(L+1)}_G)= \agg(u_H,v_H,H,\chi^{\gA,(L+1)}_H)
    \end{equation}
    holds for all $\agg\in\gA$. Separately consider various possible strategies for Spoiler:
    \begin{itemize}[topsep=0pt]
    \setlength{\itemsep}{0pt}
        \item If $\agg^\mathsf{P}_\mathsf{vu}\in\gA$ and Spoiler chooses to swap pebbles $u$ and $v$. Setting $\agg=\agg^\mathsf{P}_\mathsf{vu}$ in (\ref{eq:proof_pebble_1}) yields $\chi^{\gA,(L+1)}_G(v_G,u_G)=\chi^{\gA,(L+1)}_H(v_H,u_H)$. The remaining game is equivalent to $\mathsf{G}^{\gA,L}(u,v)$ with $u=(v_G,v_H)$, $v=(u_G,u_H)$. By induction, Duplicator wins the game.
        \item If $\agg^\mathsf{P}_\mathsf{uu}\in\gA$ and Spoiler chooses to move pebbles $v$ to the position of pebble $u$. This case is similar to the above one, and we have $\chi^{\gA,(L+1)}_G(u_G,u_G)=\chi^{\gA,(L+1)}_H(u_H,u_H)$. The remaining game is equivalent to $\mathsf{G}^{\gA,L}(u,v)$ with $u=(u_G,u_H)$, $v=(u_G,u_H)$. By induction, Duplicator wins the game.
        \item If $\agg^\mathsf{G}_\mathsf{u}\in\gA$, and Spoiler chooses a subset $\gS^\mathsf{S}$. Setting $\agg=\agg^\mathsf{G}_\mathsf{u}$ in (\ref{eq:proof_pebble_1}) yields 
        \begin{align*}
            \ldblbrace\chi_G^{\gA,(L+1)}(u_G,w_G):w_G\in\gV_G\rdblbrace
            = \ldblbrace\chi_H^{\gA,(L+1)}(u_H,w_H):w_H\in\gV_H\rdblbrace.
        \end{align*}
        If $\gS^\mathsf{S}\subset\gV_G$, then Duplicator can respond with a subset $\gS^\mathsf{D}\subset\gV_H$ such that
        \begin{align*}
            \ldblbrace\chi_G^{\gA,(L+1)}(u_G,w_G):w_G\in\gS^\mathsf{S}\rdblbrace=\ldblbrace\chi_H^{\gA,(L+1)}(u_H,w_H):w_H\in\gS^\mathsf{D}\rdblbrace.
        \end{align*}
        If $\gS^\mathsf{S}\subset\gV_H$, then Duplicator can respond with a subset $\gS^\mathsf{D}\subset\gV_G$ such that
        \begin{align*}
            \ldblbrace\chi_G^{\gA,(L+1)}(u_G,w_G):w_G\in\gS^\mathsf{D}\rdblbrace= \ldblbrace\chi_H^{\gA,(L+1)}(u_H,w_H):w_H\in\gS^\mathsf{S}\rdblbrace.
        \end{align*}
        In both cases, we clearly have $|\gS^\mathsf{S}|=|\gS^\mathsf{D}|$. Next, no matter how Spoiler moves pebble $v$ to a vertex $x^\mathsf{S}\in\gS^\mathsf{D}$, Duplicator can always respond by moving the other pebble $v$ to a vertex $x^\mathsf{D}\in\gS^\mathsf{S}$, such that $\chi_G^{\gA,(L+1)}(u_G,\tilde v_G)=\chi_H^{\gA,(L+1)}(u_H,\tilde v_H)$, where $(\tilde v_G,\tilde v_H)$ is the new position of pebbles $v$. The remaining game is equivalent to $\mathsf{G}^{\gA,L}(u,v)$ with $u=(u_G,u_H)$, $v=(\tilde v_G,\tilde v_H)$. By induction, Duplicator wins the game.
        \item If $\agg^\mathsf{L}_\mathsf{u}\in\gA$, then all the procedure is similar to the above one except that the subsets $\gS^\mathsf{S}$ and $\gS^\mathsf{D}$ contain only the neighboring vertices adjacent to pebbles $v$.
    \end{itemize}
    The cases of $\agg^\mathsf{L}_\mathsf{v}$, $\agg^\mathsf{G}_\mathsf{v}$, $\agg^\mathsf{P}_\mathsf{vv}$ are similar (symmetric) to $\agg^\mathsf{L}_\mathsf{u}$, $\agg^\mathsf{G}_\mathsf{u}$, $\agg^\mathsf{P}_\mathsf{uu}$, so we omit them for clarity. We have concluded the induction step.
\end{proof}

Combining \cref{thm:pebble_lemma1,thm:pebble_lemma2} immediately yields the following theorem:

\begin{theorem}
\label{thm:swl_and_uvpebble}
    Let $\chi^{\gA}$ be the stable color mapping of SWL algorithm $\mathsf{A}(\gA,\pool)$, satisfying $\{\agg^\mathsf{L}_\mathsf{u},\agg^\mathsf{L}_\mathsf{v}\}\cap\gA\neq\emptyset$. For any vertices $u_G,v_G\in\gV_G$ and $u_H,v_H\in\gV_H$, $\chi^{\gA}_G(u_G,v_G)= \chi^{\gA}_H(u_H,v_H)$ if and only if Duplicator can win the $l$-round $(u,v)$-pebbling game $\mathsf{G}^{\gA,l}(u;v)$ for any $l\in\mathbb N$ with $u=(u_G,u_H)$, $v=(v_G,v_H)$.
\end{theorem}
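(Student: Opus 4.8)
The plan is to obtain \cref{thm:swl_and_uvpebble} directly from \cref{thm:pebble_lemma1,thm:pebble_lemma2}, as the heavy lifting is already done there. Each direction will be handled by contraposition, together with two standard facts about color refinement: the stable mapping $\chi^\gA$ coincides with the iterate $\chi^{\gA,(t)}$ once $t$ exceeds the (finite) stabilization time of each graph, and $\chi^\gA$ is finer than every iterate $\chi^{\gA,(t)}$ (\cref{remark:finer}(a)).

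For the ``if'' direction, suppose Duplicator can win $\mathsf{G}^{\gA,l}(u;v)$ for every $l\in\mathbb N$. If we had $\chi^\gA_G(u_G,v_G)\neq\chi^\gA_H(u_H,v_H)$, then choosing $l$ larger than the stabilization times of both $G$ and $H$ would give $\chi^{\gA,(l)}_G(u_G,v_G)\neq\chi^{\gA,(l)}_H(u_H,v_H)$, and \cref{thm:pebble_lemma1} would hand Spoiler a winning strategy in $\mathsf{G}^{\gA,l}(u;v)$, a contradiction. Hence $\chi^\gA_G(u_G,v_G)=\chi^\gA_H(u_H,v_H)$.

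For the ``only if'' direction I argue the contrapositive: suppose Spoiler can win $\mathsf{G}^{\gA,l}(u;v)$ for some $l$. Since $\{\agg^\mathsf{L}_\mathsf{u},\agg^\mathsf{L}_\mathsf{v}\}\cap\gA\neq\emptyset$, \cref{thm:pebble_lemma2} applies, and its contrapositive yields $\chi^{\gA,(l+1)}_G(u_G,v_G)\neq\chi^{\gA,(l+1)}_H(u_H,v_H)$; because $\chi^\gA$ is finer than $\chi^{\gA,(l+1)}$, we get $\chi^\gA_G(u_G,v_G)\neq\chi^\gA_H(u_H,v_H)$. Contraposing, equality of the stable colors forces Duplicator to win every $\mathsf{G}^{\gA,l}(u;v)$. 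This completes both directions.

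The proof of the theorem itself is short precisely because \cref{thm:pebble_lemma1,thm:pebble_lemma2} carry the content; the only point to watch is the one-step index mismatch between the two lemmas (round count $l$ matches iterate $l$ in one and iterate $l+1$ in the other), which is harmless since the stabilization/refinement steps absorb the shift. If one instead proved the statement from first principles, the genuine obstacle would be the induction inside \cref{thm:pebble_lemma2}: constructing Duplicator's response to Spoiler's subset choice so that the new pebble positions carry equal colors, which relies on matching a multiset equality such as $\ldblbrace\chi^{\gA,(L+1)}_G(u_G,w_G):w_G\in\gN_G(v_G)\rdblbrace=\ldblbrace\chi^{\gA,(L+1)}_H(u_H,w_H):w_H\in\gN_H(v_H)\rdblbrace$ color class by color class, together with clearing the base case (the isomorphism type of the pebbled pair, handled via \cref{thm:node_marking_implies_distance}, which is also where the presence of a local aggregation is essential).
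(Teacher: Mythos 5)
Your proposal is correct and follows exactly the route the paper takes: the paper's own ``proof'' of \cref{thm:swl_and_uvpebble} is the single remark that combining \cref{thm:pebble_lemma1,thm:pebble_lemma2} immediately yields the theorem, and your contrapositive argument, together with the observations that $\chi^\gA\preceq\chi^{\gA,(t)}$ and that $\chi^{\gA,(l)}$ induces the stable partition once $l$ exceeds the stabilization time, is the correct unpacking of that remark. The noted one-step index shift between the two lemmas is indeed the only bookkeeping point, and you handle it properly.
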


We next turn to FWL-type algorithms. We can similarly define the $L$-round $(u,v)$-pebbling game $\mathsf{G}^{L}$ for $\mathsf{FWL(2)}$, $\mathsf{LFWL(2)}$, and $\mathsf{SLFWL(2)}$. We have the following theorem parallel to \cref{thm:swl_and_uvpebble}.

\begin{theorem}
\label{thm:fwl_and_uvpebble}
    Let $\chi$ be the stable color mapping of any FWL-type algorithm, e.g., $\mathsf{FWL(2)}$, $\mathsf{LFWL(2)}$, and $\mathsf{SLFWL(2)}$. For any vertices $u_G,v_G\in\gV_G$ and $u_H,v_H\in\gV_H$, $\chi_G(u_G,v_G)= \chi_H(u_H,v_H)$ if and only if Duplicator can win the corresponding $l$-round $(u,v)$-pebbling game $\mathsf{G}^{l}(u;v)$ for any $l\in\mathbb N$ with $u=(u_G,u_H)$, $v=(v_G,v_H)$.
\end{theorem}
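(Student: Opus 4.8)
\textbf{Proof proposal for \cref{thm:fwl_and_uvpebble}.}

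The plan is to mirror the argument already carried out for SWL in \cref{thm:pebble_lemma1,thm:pebble_lemma2,thm:swl_and_uvpebble}, replacing the SWL aggregation steps by the single unified $\walk$-aggregation of \cref{eq:2fwl} restricted to the appropriate neighbourhood. Concretely, I would prove the two directions as separate lemmas by induction on the round budget $l$. For the ``Spoiler wins'' direction: assuming $\chi^{(l)}_G(u_G,v_G)\neq\chi^{(l)}_H(u_H,v_H)$, I show Spoiler wins the $l$-round game. The base case $l=0$ is handled by \cref{thm:chi_fwl_isomorphism_type}: distinct initial colors mean the pairs $(u_G,v_G)$ and $(u_H,v_H)$ have different isomorphism types, so Spoiler has already won. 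For the inductive step, if the colors already differed at iteration $l-1$ we are done by induction; otherwise the update formula \cref{eq:2fwl} forces
\begin{equation*}
\ldblbrace(\chi^{(l-1)}_G(u_G,w),\chi^{(l-1)}_G(w,v_G)):w\in N\rdblbrace\neq\ldblbrace(\chi^{(l-1)}_H(x,w'),\chi^{(l-1)}_H(w',y)):w'\in N'\rdblbrace,
\end{equation*}
where $N,N'$ are $\gV_G,\gV_H$ for $\mathsf{FWL(2)}$, $\gN_G^1(v_G),\gN_H^1(v_H)$ for $\mathsf{LFWL(2)}$, and $\gN_G^1(u_G)\cup\gN_G^1(v_G)$, $\gN_H^1(u_H)\cup\gN_H^1(v_H)$ for $\mathsf{SLFWL(2)}$. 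A counting argument identical to the one in the $\agg^\mathsf{G}_\mathsf{u}$/$\agg^\mathsf{L}_\mathsf{u}$ cases of \cref{thm:pebble_lemma1} shows there is a \emph{pair-color} $(c_1,c_2)$ appearing with different multiplicities on the two sides; Spoiler selects the over-represented subset, forcing Duplicator into a vertex $w$ realising a pair whose color differs from every option available to Spoiler in the other graph. Spoiler then moves \emph{whichever} pebble the pebbling game allows (here both $u$ and $v$ moves are permitted, matching the ``major difference from SWL'' noted in \cref{sec:pebbling_game}) so that after one round one of the two color components differs, and induction finishes the remaining $l-1$ rounds.

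For the ``Duplicator wins'' direction I would prove the contrapositive-flavoured statement: if $\chi^{(l+1)}_G(u_G,v_G)=\chi^{(l+1)}_H(u_H,v_H)$ then Duplicator survives $l$ rounds. The base case $l=0$ again uses \cref{thm:chi_fwl_isomorphism_type} (equal colors at iteration $1$ imply equal isomorphism types of the placed pairs). For the inductive step, \cref{eq:2fwl} gives the multiset equality of $(\chi^{(l)}(u,w),\chi^{(l)}(w,v))$ over the relevant neighbourhood; hence whatever non-empty subset $\gS^\mathsf{S}$ Spoiler picks (from $\gV$ or from the local neighbourhood, depending on the algorithm), Duplicator can answer with an equal-size $\gS^\mathsf{D}$ realising the same multiset of pair-colors, and then match Spoiler's chosen vertex $x^\mathsf{S}$ with a vertex $x^\mathsf{D}$ realising the identical pair $(\chi^{(l)}(u_G,\cdot),\chi^{(l)}(\cdot,v_G))=(\chi^{(l)}(u_H,\cdot),\chi^{(l)}(\cdot,v_H))$. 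Crucially, because the pair-color is matched, regardless of whether Spoiler then designates the moved vertex as the new $u$-pebble or the new $v$-pebble, the resulting placement has equal iteration-$l$ colors, so the inductive hypothesis applies to the remaining $l-1$ rounds. One also needs to check the initial placement step of the FWL game (a single joint selection of $(u,v)$ from $\gV\times\gV$): since the stable coloring is refined, $\ldblbrace\chi(u,v):u,v\in\gV_G\rdblbrace=\ldblbrace\chi(x,y):x,y\in\gV_H\rdblbrace$ lets Duplicator answer any subset of pairs with a color-matched subset of pairs, reducing to the per-round analysis.

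Finally, combining the two lemmas and taking $l\to\infty$ (the colorings stabilise within $|\gV|^2$ iterations) yields the stable-color characterisation stated in \cref{thm:fwl_and_uvpebble}, exactly as \cref{thm:swl_and_uvpebble} followed from \cref{thm:pebble_lemma1,thm:pebble_lemma2}. The main obstacle I anticipate is bookkeeping the \emph{two-component} nature of the $\walk$-aggregation: unlike SWL, where each aggregation touches only one of $\chi(u,\cdot)$ or $\chi(\cdot,v)$, here Spoiler's single vertex move must be shown to control the joint pair $(\chi(u,w),\chi(w,v))$, and Duplicator must preserve it; getting the counting/pigeonhole step right for multisets of pairs — and verifying it is insensitive to Spoiler's subsequent choice of which pebble to reassign — is the delicate point. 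The neighbourhood variants ($\gN^1(v)$ versus $\gN^1(u)\cup\gN^1(v)$ versus $\gV$) are otherwise handled uniformly, since the only thing that changes is the set from which the vertex-selection subsets are drawn, and \cref{thm:chi_fwl_isomorphism_type} guarantees that adjacency (hence membership in a local neighbourhood) is itself color-invariant, so Duplicator's responses stay within the correct neighbourhood.
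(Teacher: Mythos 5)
Your proposal is correct and takes essentially the same approach as the paper: induction on the round budget $l$, with the Spoiler direction arguing that a mismatched multiset of pair-colors over the relevant neighbourhood yields an over-represented pair-color, Spoiler selects that subset, is driven to a vertex realising a differing pair $(\chi^{(l-1)}(u,w),\chi^{(l-1)}(w,v))$, and then moves whichever pebble ($u$ or $v$) makes one component of the pair differ — exactly the ``Spoiler chooses to move pebbles $v$ or $u$ depending on which relation does not hold'' step the paper uses, with the Duplicator direction being the symmetric converse. The $+1$ shift in your Duplicator lemma is unnecessary for FWL-type algorithms (since the initial color already records the isomorphism type, whereas in the SWL case it only records node marking), but it is harmless in the limit and does not affect correctness.
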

\begin{proof}
    The proof is highly similar to the proof of \cref{thm:pebble_lemma1,thm:pebble_lemma2}. For clarity, we only take $\mathsf{LFWL(2)}$ as an example. We use induction over $l$ to prove that for any $l\in\mathbb N$, $\chi_G^{(l)}(u_G,v_G)= \chi_H^{(l)}(u_H,v_H)$ if and only if Duplicator can win the $l$-round $(u,v)$-pebbling game $\mathsf{G}^{l}(u;v)$ with $u=(u_G,u_H)$, $v=(v_G,v_H)$. The base case of $l=0$ is trivial.

    For the induction step, suppose the result holds for $l\le L$ and consider $l=L+1$. Let $\chi_G^{(L+1)}(u_G,v_G)\neq \chi_H^{(L+1)}(u_H,v_H)$. If $\chi_G^{(L)}(u_G,v_G)\neq \chi_H^{(L)}(u_H,v_H)$, Spoiler wins by induction. Otherwise, by definition of $\mathsf{LFWL(2)}$ we have
    \begin{align*}
        \ldblbrace( \chi^{(L)}_G(u_G,w_G),\chi^{(L)}_G(w_G,v_G)):w_G\in\gN_G^1(v_G)\rdblbrace
        \neq \ldblbrace( \chi^{(L)}_H(u_H,w_H),\chi^{(L)}_H(w_H,v_H)):w_H\in\gN_H^1(v_H)\rdblbrace.
    \end{align*}
    Therefore, there exists a color $c$ such that $|\gC_G(u_G,v_G,c)|\neq |\gC_H(u_H,v_H,c)|$, where we denote $$\gC_G(u,v,c)=|\{w\in\gN_G^1(v):(\chi_G^{(L)}(u,w),\chi_G^{(L)}(w,v))=c\}|.$$
    Assume $|\gC_G(u_G,v_G,c)|> |\gC_H(u_H,v_H,c)|$ without loss of generality, then Spoiler can select the vertex subset $\gS^\mathsf{S}=\gC_G(u_G,v_G,c)\subset\gN_G^1(v_G)$. No matter how Duplicator responds with $\gS^\mathsf{D}\subset\gN_H^1(v_H)$, there exists $w_H\in\gS^\mathsf{D}$ such that $(\chi_H^{(L)}(u_H,w_H),\chi_H^{(L)}(w_H,v_H))\neq c$. Spoiler thus select this vertex $x^\mathsf{S}=w_H$, and no matter how Duplicator responds with $x^\mathsf{D}=w_G\in\gS^\mathsf{S}$, we have either $\chi_G^{(L)}(u_G,w_G)\neq \chi_H^{(L)}(u_H,w_H)$ or $\chi_G^{(L)}(w_G,v_G)\neq \chi_H^{(L)}(w_H,v_H)$. Spoiler chooses to move pebbles $v$ or $u$ depending on which relation does not hold. The remaining game will then be equivalent to $\mathsf{G}^{L}(u,v)$ with $u=(\tilde u_G,\tilde u_H)$, $v=(\tilde v_G,\tilde v_H)$ such that $\chi_G^{(L)}(\tilde u_G,\tilde v_G)\neq \chi_H^{(L)}(\tilde u_H,\tilde v_H)$. By induction, Spoiler wins the game.

    For the converse direction, the proof is similar and we omit it for clarity.
\end{proof}

Finally, we complete the analysis by incorporating different pooling paradigms into pebbling games. We will prove the following general result:

\begin{lemma}
\label{thm:pooling_pebble}
    Let $\chi$ be the stable color mapping of any WL algorithm and let $\mathsf{G}$ be the corresponding pebbling game, such that $\chi_G(u_G,v_G)=\chi_H(u_H,v_H)$ if and only if Duplicator can win the $l$-round $(u,v)$-pebbling game $\mathsf{G}^{l}(u;v)$ for all $l\in\mathbb N$ with $u=(u_G,u_H)$, $v=(v_G,v_H)$. Then,
    \begin{itemize}[topsep=0pt]
    \raggedright
    \setlength{\itemsep}{0pt}
        \item $\ldblbrace \chi_G(u_G,v_G):u_G,v_G\in\gV_G\rdblbrace=\ldblbrace \chi_H(u_H,v_H):u_H,v_H\in\gV_H\rdblbrace$ if and only if Duplicator can win the pebbling game when $u=(u_G,u_H)$, $v=(v_G,v_H)$ are selected according to the game rule of FWL-type algorithms defined in \cref{sec:pebbling_game};
        \item $\ldblbrace \ldblbrace\chi_G(u_G,v_G):v_G\in\gV_G\rdblbrace:u_G\in\gV_G\rdblbrace=\ldblbrace \ldblbrace\chi_H(u_H,v_H):v_H\in\gV_H\rdblbrace:u_H\in\gV_H\rdblbrace$ if and only if Duplicator can win the pebbling game when $u=(u_G,u_H)$, $v=(v_G,v_H)$ are selected according to the game rule of $\mathsf{VS}$ pooling defined in \cref{sec:pebbling_game};
        \item $\ldblbrace \ldblbrace\chi_G(u_G,v_G):u_G\in\gV_G\rdblbrace:v_G\in\gV_G\rdblbrace=\ldblbrace \ldblbrace\chi_H(u_H,v_H):u_H\in\gV_H\rdblbrace:v_H\in\gV_H\rdblbrace$ if and only if Duplicator can win the pebbling game when $u=(u_G,u_H)$, $v=(v_G,v_H)$ are selected according to the game rule of $\mathsf{SV}$ pooling defined in \cref{sec:pebbling_game}.
    \end{itemize}
\end{lemma}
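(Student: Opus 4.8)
The plan is to isolate a single abstract fact about the vertex-selection subroutine and then apply it once (for the FWL-type pooling) or twice, in a nested fashion (for $\mathsf{VS}$ and $\mathsf{SV}$ pooling), feeding the hypothesis of the present lemma in as the base case.

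First I would prove the following \emph{selection lemma}. Let $X,Y$ be finite nonempty sets with colourings $f\colon X\to C$, $g\colon Y\to C$, and suppose we are given a two-player game $\mathsf{Base}(x,y)$, parameterised by $(x,y)\in X\times Y$, whose winner depends only on $(x,y)$ and for which Duplicator wins exactly when $f(x)=g(y)$. Form the \emph{selection game}: Spoiler picks a nonempty $\gS^\mathsf{S}$ inside $X$ or inside $Y$; Duplicator answers with $\gS^\mathsf{D}$ of equal cardinality in the other set (losing if this is impossible); Spoiler then selects $x^\ast\in\gS^\mathsf{D}$ and Duplicator selects $x^{\ast\ast}\in\gS^\mathsf{S}$; and then $\mathsf{Base}$ is played on the resulting pair (which vertex is $x$ and which is $y$ depending on which side Spoiler opened from). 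The claim is that Duplicator wins the selection game iff $\ldblbrace f(x):x\in X\rdblbrace=\ldblbrace g(y):y\in Y\rdblbrace$. For $(\Leftarrow)$: if the multisets agree then whatever colour-count profile Spoiler's $\gS^\mathsf{S}$ has, Duplicator can pick $\gS^\mathsf{D}$ with exactly the same profile (each colour class on the opposite side is equally large), and then reply to Spoiler's chosen vertex with a same-coloured one, landing in a position where she wins $\mathsf{Base}$. For $(\Rightarrow)$ I argue the contrapositive: if the multisets differ there is a colour $c$ with unequal multiplicities; Spoiler opens with the whole $c$-class on the side that has \emph{more} of them, forcing Duplicator's equal-size response on the other side to include a vertex of colour $\neq c$; Spoiler picks that vertex, and any reply of Duplicator inside the $c$-class leaves a position losing $\mathsf{Base}$. (The degenerate case $|X|\neq|Y|$ is just the sub-case where Spoiler opens with the whole larger set and Duplicator has no feasible answer.)

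With the selection lemma in hand, the three bullets follow by plugging in the right data. For the first bullet take $X=\gV_G\times\gV_G$, $Y=\gV_H\times\gV_H$, $f=\chi_G$, $g=\chi_H$, and let $\mathsf{Base}$ be the family of $l$-round $(u,v)$-pebbling games ($l\in\mathbb N$) with the given starting positions; the hypothesis of the present lemma says precisely that Duplicator wins $\mathsf{Base}$ iff $\chi_G(u_G,v_G)=\chi_H(u_H,v_H)$, so the selection lemma yields equivalence of Duplicator's winning the FWL-type initialisation game with equality of the full colour multisets. For the second bullet I would apply the selection lemma twice. First, for a \emph{fixed} placement $u=(u_G,u_H)$, apply it with $X=\gV_G$, $Y=\gV_H$, $f(v_G)=\chi_G(u_G,v_G)$, $g(v_H)=\chi_H(u_H,v_H)$, and $\mathsf{Base}$ the main loop (whose winner, by the lemma's hypothesis, is determined by $\chi_G(u_G,v_G)=\chi_H(u_H,v_H)$); this shows that after $u$ is placed, Duplicator wins the remainder of the $\mathsf{VS}$ game (the second vertex selection placing $v$, then the main loop) iff $\ldblbrace\chi_G(u_G,v_G):v_G\in\gV_G\rdblbrace=\ldblbrace\chi_H(u_H,v_H):v_H\in\gV_H\rdblbrace$. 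Second, use \emph{this} statement as the base game for another application with $X=\gV_G$, $Y=\gV_H$ and the colourings $u_G\mapsto\ldblbrace\chi_G(u_G,v_G):v_G\in\gV_G\rdblbrace$, $u_H\mapsto\ldblbrace\chi_H(u_H,v_H):v_H\in\gV_H\rdblbrace$; since $\mathsf{VS}$ initialisation is exactly a vertex selection placing $u$ followed by that residual game, the selection lemma gives that Duplicator wins the whole $\mathsf{VS}$ game iff the multiset-of-multisets for $G$ equals that for $H$. The $\mathsf{SV}$ bullet is identical with the order of the two selections (and the roles of the $u$- and $v$-index) interchanged, matching the $\mathsf{SV}$ initialisation rule.

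I expect the main obstacle to be bookkeeping rather than new ideas: in the nested applications one must be sure that the ``base game'' is genuinely a game whose winner depends only on the currently-placed pebble pair, so that it is legitimate to collapse ``play out the rest of the game'' into a single $\mathsf{Base}$ instance; and that Duplicator's strategy may be taken to depend only on the reached configuration rather than on the earlier selection history — both are routine for Ehrenfeucht--Fra\"iss\'e-type games but should be spelled out. A secondary point of care is to state the selection lemma uniformly enough to cover both the ``pair'' selection used by FWL-type initialisation and the ``single-vertex'' selections used by $\mathsf{VS}/\mathsf{SV}$; phrasing it purely in terms of an abstract finite coloured set, as above, handles both in one stroke.
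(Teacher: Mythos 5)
Your proposal is correct and follows essentially the same approach as the paper: the key step in both is that equality of the relevant (possibly nested) colour multisets is equivalent to Duplicator's ability to mirror Spoiler in each vertex-selection step, after which the hypothesised characterisation of the main-loop game takes over. Factoring this observation into a stand-alone selection lemma and applying it once (FWL pooling) or twice in a nested fashion ($\mathsf{VS}$/$\mathsf{SV}$ pooling) is a tidy reorganisation of what the paper does inline for the $\mathsf{VS}$ case (noting the others are symmetric and deferring the Spoiler-winning direction to the global-aggregation argument in \cref{thm:pebble_lemma1}), not a genuinely different proof strategy.
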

\begin{proof}
    We only prove the second bullet and other cases are similar. First assume $\ldblbrace \ldblbrace\chi_G(u_G,v_G):v_G\in\gV_G\rdblbrace:u_G\in\gV_G\rdblbrace=\ldblbrace \ldblbrace\chi_H(u_H,v_H):v_H\in\gV_H\rdblbrace:u_H\in\gV_H\rdblbrace$. According to the game rule, both players first place pebbles $u$ based on a vertex selection procedure. Without loss of generality, suppose Spoiler chooses a subset $\gS^\mathsf{S}\subset\gV_G$. Then Duplicator can respond with a subset $\gS^\mathsf{D}\subset\gV_H$ such that
    \begin{align*}
        \ldblbrace \ldblbrace\chi_G(u_G,v_G):v_G\in\gV_G\rdblbrace:u_G\in\gS^\mathsf{S}\rdblbrace=\ldblbrace \ldblbrace\chi_H(u_H,v_H):v_H\in\gV_H\rdblbrace:u_H\in\gS^\mathsf{D}\rdblbrace.
    \end{align*}
    Then no matter how Spoiler selects a vertex $x^\mathsf{S}=u_H\in\gS^\mathsf{D}$, Duplicator can always select $x^\mathsf{D}=u_G\in\gS^\mathsf{S}$, such that
    \begin{align*}
        \ldblbrace\chi_G(u_G,v_G):v_G\in\gV_G\rdblbrace= \ldblbrace\chi_H(u_H,v_H):v_H\in\gV_H\rdblbrace.
    \end{align*}
    Similarly, after selecting the position of pebbles $v$, Duplicator always has a strategy to ensure that $\chi_G(u_G,v_G)=\chi_H(u_H,v_H)$. For the remaining game, Duplicator can win due to the assumption of \cref{thm:pooling_pebble}.

    For the converse direction, assume $\ldblbrace \ldblbrace\chi_G(u_G,v_G):v_G\in\gV_G\rdblbrace:u_G\in\gV_G\rdblbrace\neq\ldblbrace \ldblbrace\chi_H(u_H,v_H):v_H\in\gV_H\rdblbrace:u_H\in\gV_H\rdblbrace$. Similar to the proof of global aggregation in \cref{thm:pebble_lemma1}, Spoiler has a strategy to ensure that 
    \begin{align*}
        \ldblbrace\chi_G(u_G,v_G):v_G\in\gV_G\rdblbrace\neq\ldblbrace\chi_H(u_H,v_H):v_H\in\gV_H\rdblbrace
    \end{align*}
    after placing pebble $u$ to position $(u_G,u_H)$. Again, after placing pebble $v$ to position $(v_G,v_H)$, Spoiler has a strategy to ensure that $\chi_G(u_G,v_G)\neq\chi_H(u_H,v_H)$. For the remaining game, Spoiler can win due to the assumption of \cref{thm:pooling_pebble}.
\end{proof}

Consequently, \cref{thm:swl_pebble_game} and \cref{thm:fwl_pebble_game} holds by \cref{thm:swl_and_uvpebble,thm:fwl_and_uvpebble,thm:pooling_pebble}.

\section{Proof of Separation Results (\cref{thm:separation})}
\label{sec:proof_separation}

This section contains the proof of the main result in this paper (\cref{thm:separation}). The proof is quite involved and is divided into three parts. First, we introduce a novel construction of counterexample graphs that are based on (and greatly extend) the work of \citet{furer2001weisfeiler}. We provide an in-depth analysis of the isomorphism properties of these counterexample graphs through a set of key theorems. Then, in light of the special properties, we simplify the pebbling game developed in \cref{sec:pebbling_game} for each type of SWL/FWL algorithm, which specifically targets these counterexample graphs. Finally, we prove all separation results in \cref{sec:separation_results} using the pebbling game viewpoint and give concrete counterexample graphs for each pair of algorithms.

\subsection{Generalized F{\"u}rer graphs and their properties}
\label{sec:proof_separation_part1}

\begin{figure*}[t]
    \small
    \setlength{\tabcolsep}{0pt}
    \begin{tabular}{ccc}
        \includegraphics[width=0.16\textwidth]{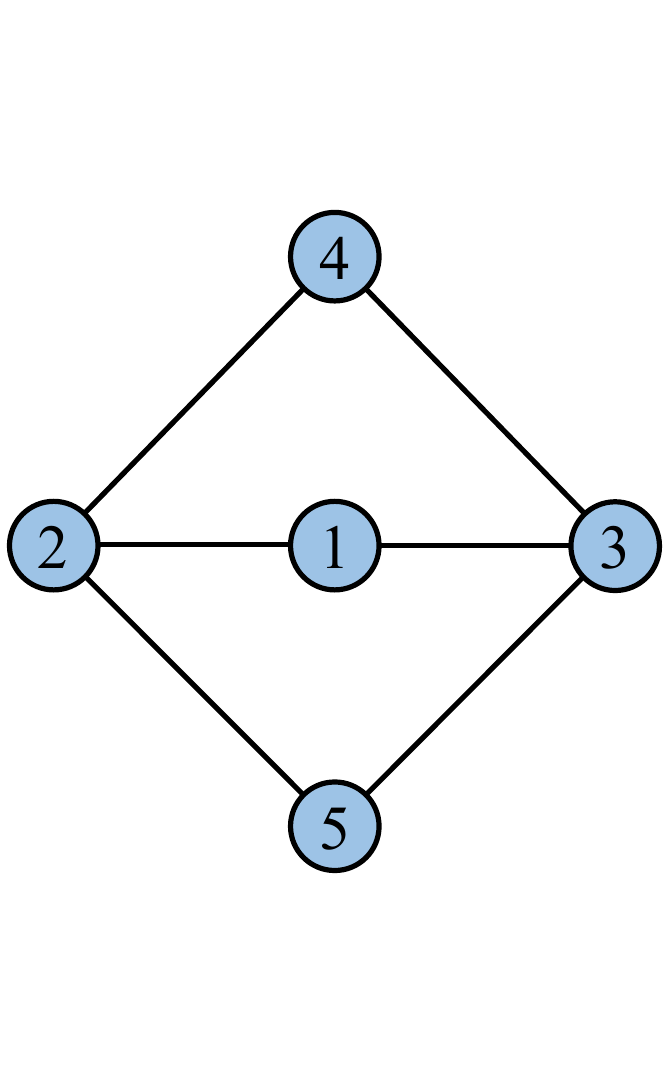} & \includegraphics[width=0.39\textwidth]{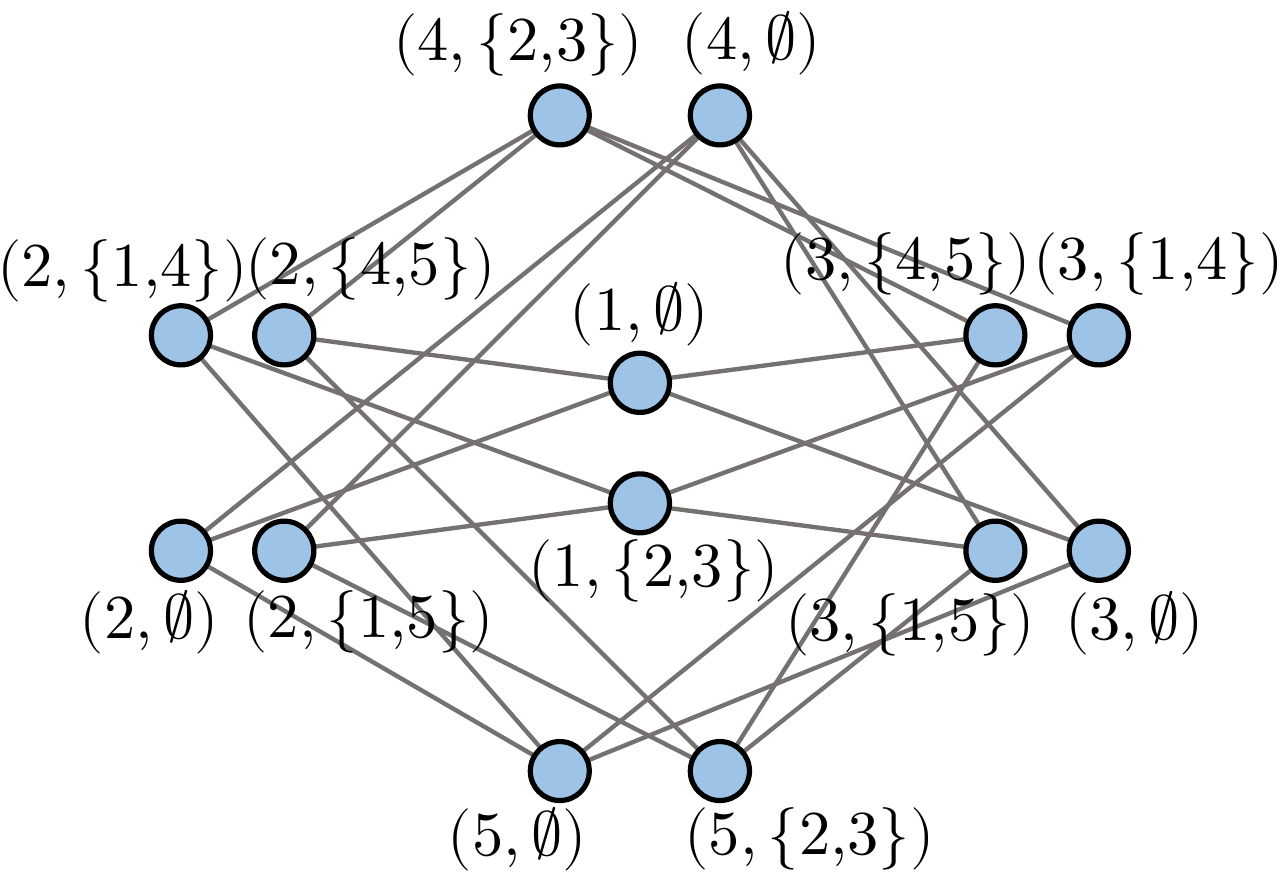} & \includegraphics[width=0.39\textwidth]{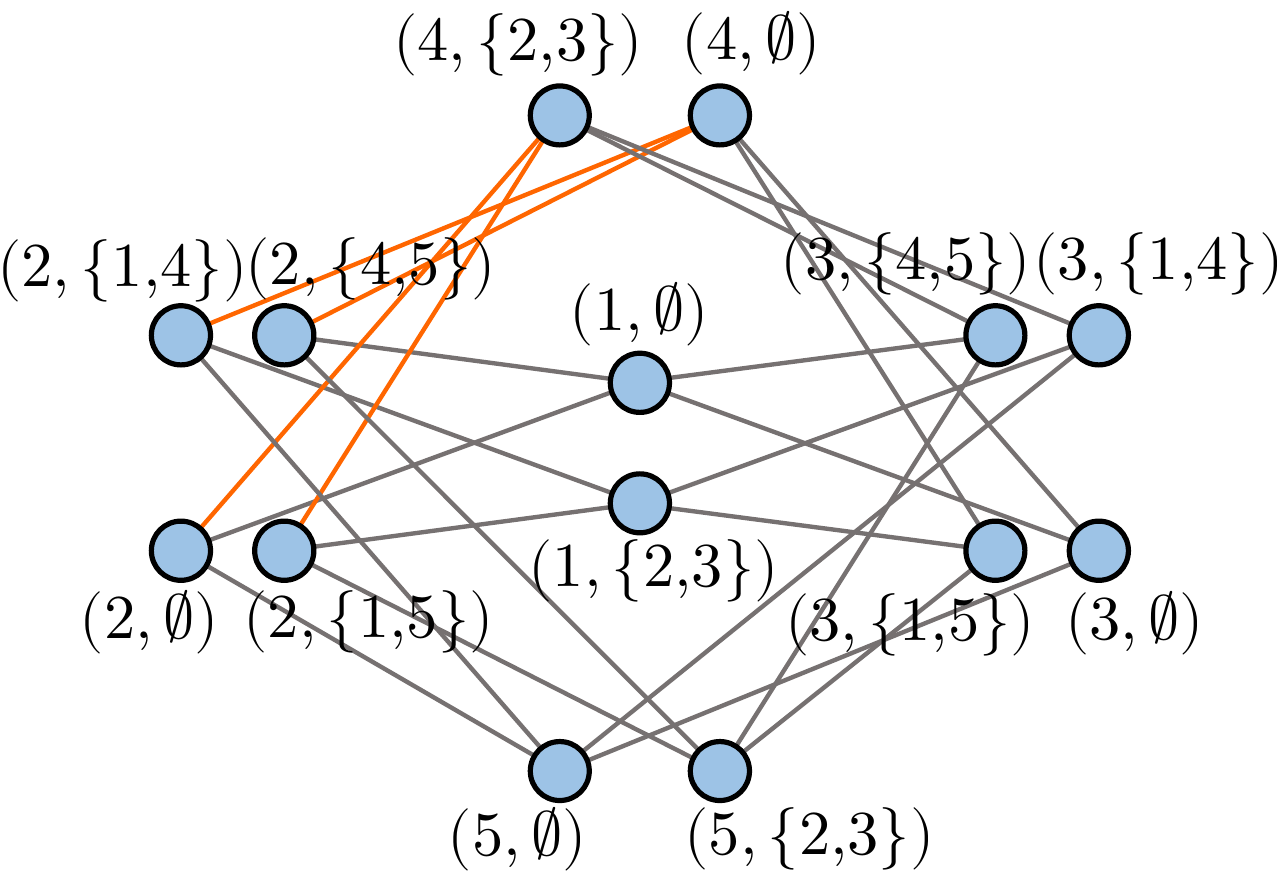}\\
        (a) Base graph $F$ & (b) F{\"u}rer graph $G(F)$ & (c) Twisted F{\"u}rer graph $H(F)$ for edge $\{2,4\}$
    \end{tabular}
    \caption{Illustration of the construction of F{\"u}rer graph and twisted F{\"u}rer graph.}
    \label{fig:furer}
\end{figure*}

We first introduce a class of graphs which we call the F{\"u}rer graphs \citep{furer2001weisfeiler}.

\begin{definition}[F{\"u}rer graphs]
    Given any connected graph $F=(\gV_F,\gE_F)$, the F{\"u}rer graph $G(F)=(\gV_G,\gE_G)$ is constructed as follows:
    \begin{align*}
        \gV_G&=\{(x,\gX):x\in\gV_F,\gX\subset\gN_F(x),|\gX|\bmod 2 = 0\},\\
        \gE_G&=\{\{(x,\gX),(y,\gY)\}\subset\gV_G:\{x,y\}\in\gE_F,(x\in\gY\leftrightarrow y\in\gX)\}.
    \end{align*}
    Here, $x\in\gY\leftrightarrow y\in\gX$ means that either ($x\in\gY$ and $y\in\gX$) or ($x\notin\gY$ and $y\notin\gX$). For each vertex $x\in\gV_F$, denote the set
    \begin{equation}
        \meta_F(x):=\{(x,\gX):\gX\subset\gN_F(x),|\gX|\bmod 2 = 0\},
    \end{equation}
    which is called the meta vertices of $G(F)$ associated to vertex $F$. Clearly, $\gV_G=\bigcup_{x\in\gV_F}\meta_F(x)$.
\end{definition}

We next define an operation called ``twist'':
\begin{definition}[Twist]
    Let $G(F)=(\gV_G,\gE_G)$ be the F{\"u}rer graph of $F=(\gV_F,\gE_F)$, and let $\{x,y\}\in\gE_F$ be an edge of $F$. The \emph{twisted} F{\"u}rer graph for edge $\{x,y\}$ is constructed as follows: $\twist(G(F),\{x,y\}):=(\gV_G,\gE_H)$, where
    \begin{align*}
        \gE_H:=\gE_G\triangle\{\{\xi,\eta\}:\xi\in\meta_F(x),\eta\in\meta_F(y)\}.
    \end{align*}
    Here, $\triangle$ is the symmetric difference operation, i.e., $\gA\triangle\gB=(\gA\backslash\gB)\cup(\gB\backslash\gA)$.
\end{definition}

In other words, the twisted F{\"u}rer graph $\twist(G(F),\{x,y\})$ is the graph modified from $G(F)$ by deleting all edges of the form  $\{(x,\gX),(y,\gY)\}\in\gE_G$ and adding the following set of edges
$$\{\{(x,\gX),(y,\gY)\}\subset\gV_G:(x\in\gY\leftrightarrow y\notin\gX)\}.$$
We give an illustration of the construction of F{\"u}rer graph and twisted F{\"u}rer graph for a simple graph $F$ in \cref{fig:furer}.

The twist operation can be further generalized into twisting a set of edges. We adopt the following notations:
\begin{equation}
\label{eq:twist}
    \twist(G(F), \gE):=\twist(\cdots\twist(G(F),e_1)\cdots,e_k)
\end{equation}
given an edge set $\gE=\{e_1,\cdots,e_k\}\subset \gE_F$. Note that the resulting graph $\twist(G(F), \gE)$ does not depend on the order of edges $e_1,\cdots,e_k$ for twisting, so (\ref{eq:twist}) is well-defined.

The first key result below shows that if we twist any two edges of a F{\"u}rer graph, the resulting graph is isomorphic to the original graph.

\begin{lemma}
\label{thm:furer_double}
    Let $G(F)=(\gV_G,\gE_G)$ be the F{\"u}rer graph of $F=(\gV_F,\gE_F)$. Then, for any two different edges $\{x,y\},\{u,v\}\in\gE_F$,
    \begin{align*}
        \twist(G(F),\{\{x,y\},\{u,v\}\})\simeq G(F).
    \end{align*}
    Moreover, there exists an isomorphism $f:\gV_G\to\gV_G$ from $G(F)$ to $\twist(G(F),\{\{x,y\},\{u,v\}\})$ that maps each meta vertex set $\meta_F(x)$ to itself for all $x\in\gV_F$.
\end{lemma}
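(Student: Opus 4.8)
\textbf{Proof plan for \cref{thm:furer_double}.}

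The plan is to exhibit an explicit isomorphism. Write $e_1=\{x,y\}$ and $e_2=\{u,v\}$ for the two edges being twisted, and let $G=G(F)$, $H=\twist(G(F),\{e_1,e_2\})$. The key observation is that twisting an edge $\{a,b\}$ can be ``absorbed'' by flipping the parity bookkeeping at one endpoint. Concretely, for a vertex $a\in\gV_F$ and a neighbor $b\in\gN_F(a)$, define the local flip $\phi_{a,b}:\meta_F(a)\to\meta_F(a)$ by $\phi_{a,b}(a,\gX)=(a,\gX\triangle\{b\})$; this is well-defined since adding or removing one element changes $|\gX|$ by one, but we must flip \emph{two} edge-memberships to stay within $\meta_F$ (sets of even size). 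The right device is therefore to pick, for each of $e_1,e_2$, a path in $F$ connecting them (or simply handle each twisted edge by a flip along a walk) and flip the membership bit of \emph{all} edges along that path at the appropriate meta vertices. First I would formalize the single-edge statement in the form: for any edge $\{a,b\}\in\gE_F$ and any vertex $a$, the map that sends $(a,\gX)\mapsto(a,\gX\triangle\{b\})$ on $\meta_F(a)$ and is the identity elsewhere is an isomorphism from $G(F)$ onto $\twist(G(F),\{a,b\})$ twisted \emph{again} at every other edge incident to $a$ — i.e. flipping one bit at one meta vertex set toggles the twist status of every edge incident to that vertex. This is the standard ``CFI gadget'' fact and is the technical core.

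Given that, the argument for two edges goes as follows. Since $F$ is connected, choose a path $P=(z_0=x', z_1, \ldots, z_k=y')$ in $F$ whose first edge is $e_1$ and whose last edge is $e_2$ — more carefully, I would choose a path whose edge set $\gE_P$ contains both $e_1$ and $e_2$ (possible by connectivity: take any path between an endpoint of $e_1$ and an endpoint of $e_2$ and prepend/append $e_1,e_2$ if needed, then discard repeated edges using that twisting an edge twice is the identity). Now define $f:\gV_G\to\gV_G$ by flipping, at each internal vertex $z_i$ of $P$, the two bits corresponding to its two path-neighbors $z_{i-1},z_{i+1}$: that is, $f(z_i,\gX)=(z_i,\gX\triangle\{z_{i-1},z_{i+1}\})$, and at the endpoints $z_0,z_k$ flip the single path-bit. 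One checks this is a well-defined bijection on each $\meta_F$ (parity preserved at internal vertices since we flip two bits; at the two endpoints we flip one bit, and these are exactly the two endpoints where the twist lives). Then verify that for every edge $\{a,b\}\in\gE_F$, the edge-membership condition ``$a\in\gY \leftrightarrow b\in\gX$'' after applying $f$ matches the edge set $\gE_H$: edges not on $P$ are unchanged on both sides; each internal edge $\{z_{i-1},z_i\}$ of $P$ has its membership condition flipped at \emph{both} endpoints by $f$, hence net unchanged, matching $\gE_G$ (these internal edges are not twisted); and the two boundary edges $e_1,e_2$ have their condition flipped at exactly one endpoint, matching the twist in $\gE_H$. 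By construction $f$ maps each $\meta_F(x)$ to itself, giving the ``moreover'' clause.

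The main obstacle I expect is the bookkeeping in the case where $P$ is not a simple path or where $e_1$ and $e_2$ share a vertex: if $e_1,e_2$ are adjacent, the connecting ``path'' is just $e_1,e_2$ with the shared vertex internal, flipping two bits there, which is clean; but if the naive path between endpoints of $e_1$ and $e_2$ happens to traverse $e_1$ or $e_2$ itself, one gets cancellation that must be tracked via the identity $\twist(\twist(\cdot,e),e)=\mathrm{id}$. I would sidestep this entirely by proving the cleanest lemma first — flipping one bit at a single meta-vertex-set $\meta_F(a)$ realizes an isomorphism $G(F)\simeq\twist(G(F),\{\{a,b\}:b\in\gN_F(a)\})$, i.e. twisting \emph{all} edges incident to $a$ — and then noting that twisting the edge set $\delta(a)\triangle\delta(a')$ (symmetric difference of the two edge-stars) equals, when $a,a'$ are the two endpoints of a common edge picture, precisely $\{e_1,e_2\}$ for a suitable choice; more robustly, since the set of ``even'' edge-subsets of $F$ twistable-to-isomorphic forms a subgroup of $(2^{\gE_F},\triangle)$ containing every vertex-star $\delta(a)$, and the stars generate exactly the cycle space's complement — actually the stars generate the \emph{cut space}, which for connected $F$ has codimension $|\gV_F|-1$ — I would instead argue directly: $\{e_1,e_2\}$ lies in the subgroup generated by the stars iff it is a ``cut'', which is not always true, so the path-flip construction above is genuinely necessary and is the honest route. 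Hence the real work is the explicit path-flip verification, and I would present exactly that, keeping the adjacent-edges and path-through-$e_i$ subtleties as short remarks handled by the twist-squared identity.
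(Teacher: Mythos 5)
Your strategy is the same as the paper's: choose a path in $F$ whose first and last edges are the two twisted edges, and cancel the twists by flipping a pair of path-neighbor bits at each internal vertex. But the explicit map you write down is wrong at the endpoints. You propose that at $z_0$ and $z_k$ you ``flip the single path-bit,'' i.e.\ $(z_0,\gX)\mapsto(z_0,\gX\triangle\{z_1\})$. This has two problems. First, it leaves $\gV_G$: you yourself note earlier that members of $\meta_F(z_0)$ have $|\gX|$ even, so a single-bit flip lands on a set of odd size, which is not a vertex of either graph (the vertex sets of $G(F)$ and of any twisted version are identical; twisting changes only the edges). Second, even ignoring parity, the endpoint flip makes the boundary edges come out unflipped: for $e_1=\{z_0,z_1\}$ the bit of $z_1$ for $z_0$ is flipped by the internal-vertex rule, and the bit of $z_0$ for $z_1$ is flipped by your endpoint rule, so the two toggles cancel and the adjacency condition for $e_1$ is preserved rather than reversed, which is the opposite of what $\twist(G(F),e_1)$ requires. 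Note that your own prose verification (``the two boundary edges have their condition flipped at exactly one endpoint'') contradicts your formula, so the sanity check you ran would have caught this if you had applied it to the formula rather than to the picture in your head.

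The fix is to leave $z_0$ and $z_k$ alone: define $f$ to be the identity outside the internal vertices $\{z_1,\ldots,z_{k-1}\}$ and set $f(z_i,\gZ)=(z_i,\gZ\triangle\{z_{i-1},z_{i+1}\})$ only for $1\le i\le k-1$. Then parity is preserved everywhere (each flip set has size two), interior path edges get both of their bits flipped and so remain unchanged, boundary edges get exactly one bit flipped (at $z_1$ resp.\ $z_{k-1}$) and so are reversed, matching the twist, and all non-path edges are untouched. You should also insist that the path is simple, so that the flip at each $\meta_F(z_i)$ is assigned unambiguously; your remark about handling non-simple walks via $\twist\circ\twist=\mathrm{id}$ is a reasonable alternative but it is easier to just take a shortest (hence simple) path from $\{x,y\}$ to $\{u,v\}$. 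With the endpoint flip removed, your argument coincides with the paper's.
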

\begin{proof}
    Denote $\widehat G(F):=\twist (G(F),\{\{x,y\},\{u,v\}\})$. Since $F$ is connected, one can always find a simple path $(w_0,w_1,\cdots,w_k)$, $k\ge 1$ with $\{w_0,w_1\}=\{x,y\}$ and $\{w_{k-1},w_k\}=\{u,v\}$. Denote $\gP=\{w_1,\cdots,w_{k-1}\}$. Construct a mapping $f:\gV_G\to\gV_G$ as follows:
    \begin{equation}
    \label{eq:proof_furer_double}
        f(z,\gZ)=\left\{\begin{array}{ll}
            (z,\gZ\triangle\{w_{i-1},w_{i+1}\}) & \text{if }z= w_i,i\in[k-1], \\
            (z,\gZ) & \text{if }z\notin\gP.
        \end{array}\right.
    \end{equation}
    We will prove that $f$ is an isomorphism from $G(F)$ to $\widehat G(F)$. First, since $|\gZ|\bmod 2 =0$ implies that $|\gZ\triangle\{w_{i-1},w_{i+1}\}|\bmod 2 =0$, $f$ is indeed a valid mapping from $\gV_G$ to $\gV_G$. Also, it is straightforward to see that $f$ is bijective. It remains to verify that for any edge $\{(z,\gZ),(z',\gZ')\}\in\gE_G$, $\{f(z,\gZ),f(z',\gZ')\}$ is an edge of $\widehat G(F)$. Separately consider the following cases:
    \begin{itemize}[topsep=0pt]
    \setlength{\itemsep}{0pt}
        \item If $z,z'\notin\gP$, then $\{f(z,\gZ),f(z',\gZ')\}=\{(z,\gZ),(z',\gZ')\}$ is clearly an edge of $\widehat G(F)$.
        \item If $z,z'\in\gP$, denote $z=w_i$ and $z'=w_j$. Then it is straightforward to see that $w_i\in\gZ'\leftrightarrow w_j\in\gZ$ if and only if $w_i\in\gZ'\triangle \{w_{j-1},w_{j+1}\}\leftrightarrow w_j\in\gZ\triangle \{w_{i-1},w_{i+1}\}$. Therefore, $\{f(z,\gZ),f(z',\gZ')\}=\{(z,\gZ\triangle \{w_{i-1},w_{i+1}\}),(z',\gZ'\triangle \{w_{j-1},w_{j+1}\})\}$ is an edge of $\widehat G(F)$.
        \item If $z=w_i\in\gP$, $z'\notin\gP$, $\{z,z'\}\neq\{x,y\}$, and $\{z,z'\}\neq\{u,v\}$, then $z'\neq w_{i-1}$ and $z'\neq w_{i+1}$. Therefore, $z\in\gZ'\leftrightarrow z'\in\gZ$ if and only if $z\in\gZ'\leftrightarrow z'\in\gZ\triangle \{w_{i-1},w_{i+1}\}$. This implies that $\{f(z,\gZ),f(z',\gZ')\}=\{(z,\gZ\triangle \{w_{i-1},w_{i+1}\}),(z',\gZ')\}$ is an edge of $\widehat G(F)$.
        \item If $\{z,z'\}=\{x,y\}$, we can denote $z=w_0$ and $z'=w_1$. We have $z\in\gZ'\leftrightarrow z'\in\gZ$ if and only if $z\notin\gZ'\triangle\{w_0,w_2\}\leftrightarrow z'\in\gZ$. Note that there is a twist in $\widehat G(F)$ for edge $\{x,y\}$, so we still obtain that $\{f(z,\gZ),f(z',\gZ')\}=\{(z,\gZ),(z',\gZ'\triangle \{w_{0},w_{2}\})\}$ is an edge of $\widehat G(F)$.
        \item Finally, if $\{z,z'\}=\{u,v\}$, the analysis is the same as the above one and $\{f(z,\gZ),f(z',\gZ')\}$ is an edge of $\widehat G(F)$.
    \end{itemize}
    In all cases, $\{f(z,\gZ),f(z',\gZ')\}$ is an edge of $\widehat G(F)$. Moreover, it is clear that $f$ maps each meta vertex set $\meta_F(x)$ to itself for all $x\in\gV_F$, which concludes the proof.
\end{proof}

Based on \cref{thm:furer_double}, it is convenient to define a notion called \emph{proper} isomorphism:

\begin{definition}[Proper isomorphism]
    Let $G(F)=(\gV_G,\gE_G)$ be the F{\"u}rer graph of $F=(\gV_F,\gE_F)$ and $\widehat G(F)=\twist(G(F),\gE)$ for some $\gE\subset\gE_F$. We say $f$ is a proper isomorphism from $G(F)$ to $\widehat G(F)$, if $f$ is an isomorphism from $G(F)$ to $\widehat G(F)$ that maps each meta vertex set $\meta_F(x)$ to itself for all $x\in\gV_F$.
\end{definition}

\cref{thm:furer_double} can be generalized into the following corollary:

\begin{corollary}
\label{thm:furer_isomorphism}
    Let $G(F)=(\gV_G,\gE_G)$ be the F{\"u}rer graph of $F=(\gV_F,\gE_F)$. Then, for any edge set $\gE\subset\gE_F$ and any two different edges $\{x,y\},\{u,v\}\in\gE_F$,
    \begin{align*}
        \twist(G(F),\gE\triangle\{\{x,y\},\{u,v\}\})\simeq\twist(G(F),\gE).
    \end{align*}
    Moreover, any proper isomorphism $f:\gV_G\to\gV_G$ from $G(F)$ to $\twist(G(F),\{\{x,y\},\{u,v\}\})$ is also a proper isomorphism from $\twist(G(F),\gE)$ to $\twist(G(F),\gE\triangle\{\{x,y\},\{u,v\}\})$.
\end{corollary}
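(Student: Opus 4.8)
The plan is to bootstrap from \cref{thm:furer_double} (which is the case $\gE=\emptyset$ of the present statement, and already supplies a proper isomorphism) by exploiting the fact that the twist operation is $\mathbb F_2$-linear in the edge set. First I would set up notation: for an edge $e=\{x,y\}\in\gE_F$ write $T_e:=\{\{\xi,\eta\}:\xi\in\meta_F(x),\,\eta\in\meta_F(y)\}$, and identify each set of unordered pairs of $\gV_G$ with its indicator vector in the $\mathbb F_2$-vector space spanned by those pairs. Then for any $\gC\subset\gE_F$ the graph $\twist(G(F),\gC)$ corresponds to $\gE_G+\sum_{e\in\gC}T_e$, so membership of a pair $p$ in $\twist(G(F),\gC)$ equals $\mathbf 1[p\in\gE_G]+\sum_{e\in\gC}\mathbf 1[p\in T_e]$ modulo $2$.

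The one genuinely new observation I would isolate as a preliminary step is that a proper isomorphism preserves every gadget $T_e$: if $f:\gV_G\to\gV_G$ maps each $\meta_F(z)$ onto itself, then for every $e=\{x,y\}\in\gE_F$ and every pair $p=\{\xi,\eta\}$ one has $\{f(\xi),f(\eta)\}\in T_e\iff p\in T_e$. This is immediate, since $T_e$ consists precisely of the pairs having one endpoint in $\meta_F(x)$ and the other in $\meta_F(y)$, and $f$ restricts to a bijection on each meta-vertex set; I will abbreviate $f(p):=\{f(\xi),f(\eta)\}$.

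With these two facts I would finish with a short $\mathbb F_2$-computation. Let $f$ be a proper isomorphism from $G(F)$ to $\twist(G(F),\{\{x,y\},\{u,v\}\})$, which exists by \cref{thm:furer_double}. For any pair $p$, the fact that $f$ is such an isomorphism together with gadget preservation rearranges to $\mathbf 1[f(p)\in\gE_G]=\mathbf 1[p\in\gE_G]+\mathbf 1[p\in T_{\{x,y\}}]+\mathbf 1[p\in T_{\{u,v\}}]$ modulo $2$. Substituting this into the membership formula for $\twist(G(F),\gE\triangle\{\{x,y\},\{u,v\}\})$ evaluated at $f(p)$, again using gadget preservation and the identity $\sum_{e\in\gE\triangle\{\{x,y\},\{u,v\}\}}\mathbf 1[p\in T_e]=\sum_{e\in\gE}\mathbf 1[p\in T_e]+\mathbf 1[p\in T_{\{x,y\}}]+\mathbf 1[p\in T_{\{u,v\}}]$ (symmetric difference flips exactly the $\{x,y\}$- and $\{u,v\}$-contributions, and over $\mathbb F_2$ flipping is adding), all terms involving $T_{\{x,y\}}$ and $T_{\{u,v\}}$ cancel in pairs, leaving $\mathbf 1[p\in\gE_G]+\sum_{e\in\gE}\mathbf 1[p\in T_e]=\mathbf 1[p\in\twist(G(F),\gE)]$. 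Hence $f(p)$ is an edge of $\twist(G(F),\gE\triangle\{\{x,y\},\{u,v\}\})$ if and only if $p$ is an edge of $\twist(G(F),\gE)$; since $f$ also maps each $\meta_F(z)$ onto itself, it is a proper isomorphism between these two graphs, and in particular they are isomorphic.

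I do not expect any real obstacle here: all of the conceptual content already sits in \cref{thm:furer_double}, and the remaining work is purely bookkeeping of the $\mathbb F_2$ cancellations. The only thing to be careful about is stating the gadget-preservation step precisely enough that the cancellation is transparent, and keeping in mind that over $\mathbb F_2$ adding and removing an edge from the twist index set have the same effect.
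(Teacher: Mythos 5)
Your proof is correct and follows essentially the same route as the paper's: both pull the proper isomorphism $f$ from \cref{thm:furer_double} and then argue that $f$ automatically intertwines $\twist(G(F),\gE)$ with $\twist(G(F),\gE\triangle\{\{x,y\},\{u,v\}\})$ because $f$ preserves each $\meta_F(\cdot)$. The paper establishes this last step by a two-case argument on whether $\{w,z\}\in\gE$, while you package it as an $\mathbb F_2$-linear cancellation; your phrasing makes the ``gadget preservation'' observation (that a proper isomorphism fixes every $T_e$ setwise) explicit, which is implicit in the paper's case split, but the underlying argument is the same.
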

\begin{proof}
    Denote $\widehat G(F):=\twist (G(F),\{\{u,v\},\{x,y\}\})$, $H(F):=\twist(G(F),\gE)$, and $\widehat H(F):=\twist(\widehat G(F),\gE)$. Note that by definition of the twist operation, we equivalently have $\widehat H(F)=\twist(G(F),\gE\triangle\{\{x,y\},\{u,v\}\})$.
    Due to \cref{thm:furer_double}, we have $\widehat G(F)\simeq G(F)$. Let $f$ be a proper isomorphism from $G(F)$ to $\widehat G(F)$ (according to \cref{thm:furer_double}). It suffices to prove that $f$ is also an isomorphism from $H(F)$ to $\widehat H(F)$.
    
    For any edge $\{(w,\gW),(z,\gZ)\}$ in $H(F)$:
    \begin{itemize}[topsep=0pt]
    \setlength{\itemsep}{0pt}
        \item If $\{w,z\}\in\gE$, then $\{(w,\gW),(z,\gZ)\}$ is not an edge in $G(F)$. Therefore, $\{f(w,\gW),f(z,\gZ)\}$ is not an edge in $\widehat G(F)$. Since $f$ maps $\meta_F(w)$ to $\meta_F(w)$ and maps $\meta_F(z)$ to $\meta_F(z)$, we obtain that $\{f(w,\gW),f(z,\gZ)\}$ is an edge in $\widehat H(F)$.
        \item If $\{w,z\}\notin\gE$, then $\{(w,\gW),(z,\gZ)\}$ is an edge in $G(F)$. Therefore, $\{f(w,\gW),f(z,\gZ)\}$ is an edge in $\widehat G(F)$. Since $f$ maps $\meta_F(w)$ to $\meta_F(w)$ and maps $\meta_F(z)$ to $\meta_F(z)$, we also obtain that $\{f(w,\gW),f(z,\gZ)\}$ is an edge in $\widehat H(F)$.
    \end{itemize}
    In both cases, $\{f(w,\gW),f(z,\gZ)\}$ is an edge in $\widehat H(F)$. Since \cref{thm:furer_double} has proved that $f$ is bijective, $f$ is an isomorphism from $H(F)$ to $\widehat H(F)$ and thus $H(F)\simeq\widehat H(F)$.
\end{proof}

As a special case, \cref{thm:furer_isomorphism} leads to the following important fact:

\begin{corollary}
\label{thm:furer_single}
    Let $G(F)$ be the F{\"u}rer graph of $F=(\gV_F,\gE_F)$. Then, for any two edges $\{x,y\},\{u,v\}\in\gE_F$,
    \begin{align*}
        \twist(G(F),\{x,y\})\simeq\twist(G(F),\{u,v\}).
    \end{align*}
\end{corollary}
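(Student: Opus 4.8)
\textbf{Proof proposal for \cref{thm:furer_single}.}

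The plan is to derive \cref{thm:furer_single} as an immediate consequence of \cref{thm:furer_isomorphism}, which handles the symmetric-difference of a twist set with a pair of edges. The key observation is that twisting a single edge $\{u,v\}$ can be rewritten as first twisting the edge $\{x,y\}$ and then correcting by twisting the pair $\{\{x,y\},\{u,v\}\}$. Concretely, I would take $\gE = \{\{x,y\}\}$ in \cref{thm:furer_isomorphism}. Then $\gE \triangle \{\{x,y\},\{u,v\}\} = \{\{u,v\}\}$ whenever $\{x,y\}\neq\{u,v\}$, since the edge $\{x,y\}$ cancels out and only $\{u,v\}$ remains. \cref{thm:furer_isomorphism} directly gives
\begin{equation*}
    \twist(G(F),\{\{u,v\}\}) = \twist\bigl(G(F),\gE\triangle\{\{x,y\},\{u,v\}\}\bigr) \simeq \twist(G(F),\gE) = \twist(G(F),\{\{x,y\}\}),
\end{equation*}
which is exactly the claimed isomorphism (after identifying the singleton-set notation $\twist(G(F),\{x,y\})$ with $\twist(G(F),\{\{x,y\}\})$).

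One degenerate case must be dispatched separately: if $\{x,y\} = \{u,v\}$, then the statement is trivially true since both sides are literally the same graph, so \cref{thm:furer_isomorphism} (which requires two \emph{different} edges) is not even needed. I would note this at the outset and then assume $\{x,y\}\neq\{u,v\}$ for the main argument.

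I do not anticipate a genuine obstacle here — the proof is a one-line corollary of \cref{thm:furer_isomorphism}, which in turn rests on \cref{thm:furer_double} (the ``double twist is trivial'' lemma whose real content is the explicit path-based isomorphism $f$ in \eqref{eq:proof_furer_double}). The only point requiring mild care is the bookkeeping of the symmetric difference: one should verify that $\{\{x,y\}\} \triangle \{\{x,y\},\{u,v\}\} = \{\{u,v\}\}$, which is immediate from $\gA\triangle\gB = (\gA\setminus\gB)\cup(\gB\setminus\gA)$ applied to sets of edges. If one wanted an even more self-contained argument avoiding \cref{thm:furer_isomorphism}, one could instead twist both $G(F)$ and $H(F) := \twist(G(F),\{x,y\})$ by the pair $\{\{x,y\},\{u,v\}\}$ and invoke \cref{thm:furer_double} twice together with the fact that twisting the same edge twice returns the original graph; but routing through \cref{thm:furer_isomorphism} is cleaner and is the approach I would present.
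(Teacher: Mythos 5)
Your proof is correct and takes essentially the same route as the paper: the paper also derives the corollary in one line from \cref{thm:furer_isomorphism}, simply taking $\gE=\{\{u,v\}\}$ rather than your symmetric choice $\gE=\{\{x,y\}\}$. Your explicit dispatch of the degenerate case $\{x,y\}=\{u,v\}$ is a minor but welcome extra detail that the paper leaves implicit.
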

\begin{proof}
    Setting $\gE=\{u,v\}$ in \cref{thm:furer_isomorphism} readily concludes the proof.
\end{proof}

\cref{thm:furer_single} shows that the structure of a twisted F{\"u}rer graph does not depend on which edge is twisted. Therefore, we can simply denote $H(F)$ as the twisted F{\"u}rer graph of $F$ without specifying the twisted edge $\{x,y\}$. Moreover, recursively applying \cref{thm:furer_isomorphism} obtains that, if we twist $k$ edges of a F{\"u}rer graph $G(F)$, the resulting graph is isomorphic to either $G(F)$ or $H(F)$, depending on whether $k$ is even or odd. To complete the result, we show $G(F)$ and $H(F)$ are actually non-isomorphic under certain conditions:

\begin{lemma}
\label{thm:furer_single_nonisomorphic}
    Let $G(F)=(\gV_G,\gE_G)$ be the F{\"u}rer graph of $F=(\gV_F,\gE_F)$, and let $H(F)=(\gV_G,\gE_H)$ be the twisted F{\"u}rer graph. Then there does not exist a proper isomorphism $f:\gV_G\to\gV_G$ from $G(F)$ to $H(F)$.
\end{lemma}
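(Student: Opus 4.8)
The plan is to derive a contradiction from the existence of a proper isomorphism $f:\gV_G\to\gV_G$ from $G(F)$ to $H(F)$ by tracking a parity invariant along each edge of $F$. For each vertex $x\in\gV_F$, a proper isomorphism restricts to a bijection $f_x:\meta_F(x)\to\meta_F(x)$. I would first analyze the structure of such bijections: by definition of the F\"urer graph, for a fixed $x$ the vertices $(x,\gX)$ are indexed by the even-size subsets $\gX\subset\gN_F(x)$, and the adjacency across an edge $\{x,y\}\in\gE_F$ records the membership bit "is $y\in\gX$" against "is $x\in\gY$". The key local observation is that, because $f$ is a graph isomorphism that fixes each meta-vertex set setwise, for every edge $\{x,y\}\in\gE_F$ there is a well-defined "flip bit" $b_{xy}\in\{0,1\}$ recording whether $f$ preserves or reverses the $y$-coordinate of meta vertices in $\meta_F(x)$ (equivalently, whether $y\in\gX \Leftrightarrow y\in\pi_y(f_x(x,\gX))$ for all $\gX$, or the negation holds for all $\gX$). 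One must check this bit is indeed consistent across all $\gX$, which follows from the rigidity of the edge structure: the neighbors of $(x,\gX)$ in the $y$-direction determine the $y$-bit, and an isomorphism must act coherently.

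The second step is the parity accounting. Since $f$ fixes $\meta_F(x)$ setwise and each $(x,\gX)$ has even $|\gX|$, the map $f_x$ on the coordinate bits, composed over all neighbors of $x$, must flip an even number of coordinate bits in total when acting on any given $(x,\gX)$ — otherwise parity of $|\gX|$ is not preserved. More precisely, $f_x$ acts on the $\mathbb F_2$-vector of membership bits $(\,[w\in\gX]\,)_{w\in\gN_F(x)}$ by an affine transformation whose linear part, combined with the "flip bits" $b_{xw}$, must preserve the even-weight subspace; the simplest consequence is $\sum_{w\in\gN_F(x)} b_{xw}\equiv 0 \pmod 2$ for every $x\in\gV_F$. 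On the other hand, for $f$ to be a graph isomorphism from $G(F)$ to $H(F)=\twist(G(F),\{x_0,y_0\})$, the edge condition across each edge $e=\{x,y\}$ forces $b_{xy}=b_{yx}$ when $e\ne\{x_0,y_0\}$ and $b_{x_0y_0}\ne b_{y_0x_0}$ on the twisted edge (this is exactly what "matching up the twisted adjacency rule" means). Summing $\sum_{w\in\gN_F(x)} b_{xw}\equiv 0$ over all $x\in\gV_F$ and using that each edge contributes $b_{xy}+b_{yx}$, every non-twisted edge contributes $0 \pmod 2$ while the twisted edge contributes $1$, giving $1\equiv 0 \pmod 2$, a contradiction.

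I expect the main obstacle to be the rigidous setup of the "flip bit" $b_{xy}$ and proving it is consistent (independent of which meta vertex $(x,\gX)$ we test it on): one needs to argue that a proper isomorphism cannot act in a "twisted" way on some subsets and an "untwisted" way on others within a single $\meta_F(x)$. This should follow by examining a single edge $\{x,y\}$: the bipartite adjacency pattern between $\meta_F(x)$ and $\meta_F(y)$ in $G(F)$ (respectively $H(F)$) is highly structured — it pairs off meta vertices according to the single bit $[y\in\gX]$ versus $[x\in\gY]$ — so any adjacency-preserving bijection fixing both meta sets setwise must act on this bit uniformly, either as the identity or as negation. Once the uniformity is established, steps two and three are short $\mathbb F_2$-parity computations. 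A secondary subtlety is ensuring the affine/linear analysis of $f_x$ genuinely yields the relation $\sum_w b_{xw}\equiv 0$; if the cleanest route is instead to test $f_x$ on the all-zero subset $\gX=\emptyset$ and on the two-element subsets, I would do that explicitly rather than invoke general linear algebra.
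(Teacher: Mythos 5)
Your proposal is correct, and it reaches the same parity obstruction as the paper but by a genuinely different route. The paper's proof is minimalist: it only tracks where $f$ sends the ``zero'' vertices $(x,\emptyset)$, setting $f(x,\emptyset)=(x,\gT_x)$, and then shows by a direct induction over modifications of the $\gT_x$'s that the number of base edges $\{x,y\}$ with $\{(x,\gT_x),(y,\gT_y)\}\notin\gE_H$ is always odd --- contradicting the fact that an isomorphism would make this count zero. Notice the paper never needs the full isomorphism property of $f$ in this calculation; it only needs that $f$ is a proper bijection and that $|\gT_x|$ is even. Your proof instead extracts more structure from $f$: you first show (correctly, though you flag this step as needing care) that a proper isomorphism must act on each $\meta_F(x)$ as a translation $\gX\mapsto\gX\triangle\gD_x$, so each edge $\{x,y\}$ carries a well-defined flip bit $b_{xy}=[y\in\gD_x]$. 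You then derive the per-vertex parity constraint $\sum_{w\in\gN_F(x)}b_{xw}\equiv 0\pmod 2$ from $|\gD_x|$ being even, the per-edge constraint $b_{xy}\oplus b_{yx}=[\{x,y\}\text{ is twisted}]$ from adjacency preservation, and conclude by double-counting $\sum_x\sum_w b_{xw}$. The two proofs are the same parity argument in the end ($\gD_x$ and $\gT_x$ coincide, since $f(x,\emptyset)=(x,\emptyset\triangle\gD_x)$), but the packaging differs: the paper's is shorter and demands less of $f$; yours is more structural and, in fact, anticipates the rigidity of proper isomorphisms that the paper develops separately in later lemmas (e.g.\ \cref{thm:furer_multiple_s_in_u} and \cref{thm:furer_main_corollary}), so the extra work is not wasted --- it just belongs to a different part of the paper's development. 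The one step you would need to nail down carefully is well-definedness of $b_{xy}$; the clean argument is that for $f(x,\gX)=(x,\gX')$, $f(y,\gY)=(y,\gY')$ and $\{x,y\}$ untwisted, adjacency preservation gives $[y\in\gX]\oplus[y\in\gX']=[x\in\gY]\oplus[x\in\gY']$ for all valid $\gX,\gY$, and the left side depends only on $\gX$ while the right depends only on $\gY$, so both are constant; the twisted case differs only by a global XOR with $1$.
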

\begin{proof}
    Assume $H(F)=\twist(G(F),\{u,v\})$ for some $\{u,v\}\in\gE_F$ and $f:\gV_G\to\gV_G$ is a proper isomorphism. Then we can write $f(x,\emptyset)=(x,\gT_x)$ for all $x\in\gV_F$. Note that for any $\{x,y\}\in\gE_F$, by definition of the F{\"u}rer graph there is an edge between vertices $(x,\emptyset)$ and $(y,\emptyset)$ in $G(F)$. However, we will prove that this is not the case for $H(F)$: there must exist an odd number of edges $\{x,y\}\in\gE_F$ such that $\{(x,\gT_x),(y,\gT_y)\}\notin\gE_H$. This will lead to a contradiction and finish the proof.

    Formally, let $\gE=\{\{(x,\gT_x),(y,\gT_y)\}:\{x,y\}\in\gE_F\}$, and our goal is prove that $|\gE\backslash\gE_H|\bmod 2=1$. The proof is based on induction. First, consider the base case when $\gT_x=\emptyset$ for all $x\in\gV_F$. Clearly, there is exactly one element $\{(u,\gT_u),(v,\gT_v)\}\notin\gE_H$ since $H(F)$ is obtained from $G(F)$ by twisting edge $\{u,v\}$. Next, for the induction step, we show if $|\gE\backslash\gE_H|\bmod 2=1$, then $|\tilde\gE\backslash\gE_H|\bmod 2=1$ holds for any $\tilde\gE$ that is modified from $\gE$ by changing a given $\gS_z$ to another feasible $\tilde \gS_z$ for some $z\in\gV_F$, namely,
    \begin{align*}
        \tilde\gE=\{\{(x,\gT_x),(y,\gT_y)\}:\{x,y\}\in\gE_F,x,y\neq z\}\cup\{\{(x,\gT_x),(z,\tilde\gT_z)\}:\{x,z\}\in\gE_F\}.
    \end{align*}
    This is because 
    \begin{align*}
        |\tilde\gE\backslash\gE_H|-|\gE\backslash\gE_H|&\equiv |(\tilde\gE\triangle\gE)\backslash\gE_H|\\
        &\equiv |\{x\in\gN_F(z):x\in\gT_z\leftrightarrow x\notin\tilde\gT_z\}|\\
        &\equiv|\gT_z\triangle\tilde\gT_z|\equiv 0 \pmod 2,
    \end{align*}
    where the last equation holds because $|\gT_z|\equiv|\tilde\gT_z|\equiv 0\pmod 2$. This concludes the induction step.
    
    Since any set $\gE$ can be obtained from the initial set $\{\{(x,\emptyset),(y,\emptyset)\}:\{x,y\}\in\gE_F\}$ by modifying $\emptyset$ to $\gT_z$ for each $z\in\gV_F$ and the parity of $|\gE\backslash\gE_H|$ does not change throughout the process, we have concluded the proof.
\end{proof}

Below, we proceed to perform an in-depth analysis of the properties regarding the isomorphisms of (twisted) F{\"u}rer graphs. We need several definitions.

\begin{definition}[Connected components]
\label{def:connected_component}
    Let $F=(\gV_F,\gE_F)$ be a connected graph and let $\gS\subset \gV_F$ be a vertex set, called separation vertices. We say two edges $\{u,v\},\{x,y\}\in\gE_F$ are in the same connected component if there is a path $(y_0,y_1,\cdots,y_k)$ satisfying that $\{y_0,y_1\}=\{u,v\}$, $\{y_{k-1},y_k\}=\{x,y\}$ and $y_i\notin\gS$ for all $i\in[k-1]$. It is easy to see that the above relationship between edges forms an \emph{equivalence relation}. Therefore, we can define a partition over the edge set:
    $\mathsf{CC}_\gS(F)=\{\gP_i:i\in[M]\}$, where each $\gP_i\subset\gE_F$ is called a connected component.
\end{definition}

We are ready to state the central theorem:

\begin{theorem}
\label{thm:furer_main}
    Let $G(F)=(\gV_G,\gE_G)$ be either the original or twisted F{\"u}rer graph of $F=(\gV_F,\gE_F)$, and let $\gS\subset \gV_F$ be any set. 
    For each $u\in\gS$, let $(u,\gT_u),(u,\gU_u)\in\meta_F(u)$ be any given vertex sets. Then, there exists a proper isomorphism $f$ from graph $G(F)$ to graph $\twist(G(F),\gE)$ for some $\gE\subset\gE_F$ with $|\gE|\bmod 2 = 0$, such that $f(u,\gT_u)=(u,\gU_u)$ for all $u\in\gS$. Moreover, for any $\tilde\gE\subset\gE_F$, there exists a proper isomorphism $\tilde f$ from $G(F)$ to $\twist(G(F),\tilde \gE)$ such that $\tilde f(u,\gT_u)=(u,\gU_u)$ for all $u\in\gS$ if and only if $|\gP\cap\tilde \gE|\equiv|\gP\cap\gE|\pmod 2$ for all $\gP\in\mathsf{CC}_\gS(F)$.
\end{theorem}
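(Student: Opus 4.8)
\textbf{Proof plan for Theorem~\ref{thm:furer_main}.}
The plan is to build the isomorphisms out of the elementary ``path-twist'' maps used in the proof of \cref{thm:furer_double}, and to track the effect of these maps on each meta vertex $(u,\gT_u)$ and on the set of twisted edges simultaneously, with the connected-component structure $\mathsf{CC}_\gS(F)$ as the bookkeeping device. Concretely, fix a base graph $G_0=G(F)$ (the untwisted F{\"u}rer graph; if $G(F)$ is the twisted one we first compose with a fixed proper isomorphism from the twisted to the untwisted graph, which exists up to parity by \cref{thm:furer_isomorphism}, and absorb the resulting parity shift into the statement). The first step is to isolate the two basic moves available to a proper isomorphism:
\begin{itemize}
\item a \emph{local toggle} at a vertex $z\in\gV_F$: replace $(z,\gZ)\mapsto(z,\gZ\triangle\gX)$ for a fixed even $\gX\subset\gN_F(z)$ — this changes no edges but flips membership of $z$ across each edge incident to $z$ whose other endpoint lies in $\gX$, hence by \cref{thm:furer_double}-style analysis it composes with a twist of the even set $\{\{z,w\}:w\in\gX\}$;
\item a \emph{path toggle} along a simple path $(w_0,\dots,w_k)$, which is exactly the map $f$ in (\ref{eq:proof_furer_double}) and realizes the twist of the two endpoint edges $\{w_0,w_1\}$, $\{w_{k-1},w_k\}$ while fixing all meta vertices $(x,\emptyset)$ with $x\notin\{w_1,\dots,w_{k-1}\}$.
\end{itemize}

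\textbf{Existence half.} First I would show that for any prescribed target values $(u,\gU_u)$ at the vertices $u\in\gS$, there is a composition of local and path toggles achieving $f(u,\gT_u)=(u,\gU_u)$ for all $u\in\gS$. For each $u\in\gS$ separately, the symmetric difference $\gD_u:=\gT_u\triangle\gU_u\subset\gN_F(u)$ has even cardinality; pair up its elements arbitrarily, and for each pair $\{a,b\}\subset\gD_u$ route a simple path in $F$ from $a$ to $b$ that passes through $u$ but avoids every other vertex of $\gS$ as an interior vertex except possibly at its endpoints — here connectedness of $F$ and the freedom to choose the path are used; interior passages through $\gS$ can be handled by iterating, since a path toggle only alters meta vertices at interior vertices of the path. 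Composing all these path toggles moves $(u,\gT_u)$ to $(u,\gU_u)$ for every $u\in\gS$ (using that the $\gD_u$ at distinct $u$ are supported on disjoint neighbourhoods, and that a path through $u$ toggles exactly a $2$-element subset of $\gN_F(u)$ at $u$, namely the two path-neighbours of $u$, so these compose additively mod $2$ inside $\gN_F(u)$). The net effect on edges is a twist of some set $\gE\subset\gE_F$, and since each path toggle twists exactly two edges, $|\gE|\bmod 2=0$. This yields the existence statement.

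\textbf{Parity characterization.} For the ``moreover'' part I would prove: two proper isomorphisms $f$ (to $\twist(G_0,\gE)$) and $\tilde f$ (to $\twist(G_0,\tilde\gE)$) that agree on $\{(u,\gT_u):u\in\gS\}$ exist simultaneously iff $|\gP\cap\gE|\equiv|\gP\cap\tilde\gE|\pmod 2$ for every $\gP\in\mathsf{CC}_\gS(F)$. The key invariant is that the parity of $|\gP\cap\gE|$ for a connected component $\gP$ is unchanged under any local toggle at a vertex $z$ (because such a toggle twists a set of edges all incident to $z$, and the even count $|\gX|$ distributes among the at most one component containing $z$ as a \emph{non}-separation vertex — if $z\in\gS$ it distributes among several components but still with even total that... wait, one must be careful): precisely, a local toggle at $z\notin\gS$ twists an even number of edges all inside a single component $\gP$ (the one containing $z$), changing $|\gP\cap\gE|$ by an even amount; a local toggle at $z\in\gS$ twists edges going into possibly several components, but this move is never needed since $z\in\gS$'s meta vertex must be pinned. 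A path toggle along $(w_0,\dots,w_k)$ with all interior vertices outside $\gS$ twists two edges $\{w_0,w_1\},\{w_{k-1},w_k\}$ lying in the \emph{same} component (by \cref{def:connected_component}, since the path certifies it), so it flips $|\gP\cap\gE|$ by $2$, i.e.\ preserves the parity — whereas a path with an interior vertex in $\gS$ would move a pinned meta vertex, hence is forbidden once the targets are fixed. Therefore, starting from the ``canonical'' solution built above and composing with the group generated by forbidden-free moves, the reachable twist-sets $\gE$ are exactly those with $|\gP\cap\gE|\equiv|\gP\cap\gE_{\mathrm{can}}|\pmod 2$ for all $\gP\in\mathsf{CC}_\gS(F)$. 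The converse — that every such $\tilde\gE$ is reachable — follows by exhibiting, for each component $\gP$, an edge pair in $\gP$ whose path toggle flips $|\gP\cap\gE|$ by $2$ without disturbing $\gS$, which again uses only connectivity of the component; combining these adjusts each component's count to the prescribed parity independently.

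\textbf{Main obstacle.} The delicate point I expect to wrestle with is the path-routing argument in the existence half when the required paths are forced to traverse vertices of $\gS$ as interior vertices — this happens when $\gS$ separates $F$ into pieces and the element-pair $\{a,b\}\subset\gD_u$ lies in different pieces relative to $u$. The fix is structural: one never actually needs such global paths, because the characterization says the achievable $\gE$ is constrained precisely component-by-component; so I would reorganize the existence proof to first reduce, using the components, to the case $\gV_F\setminus\gS$ connected (handling each component of $\mathsf{CC}_\gS(F)$ separately and gluing), and within a single component all needed interior-$\gS$-free paths exist. Getting this reduction clean — and verifying that the ``pinning'' of $(u,\gT_u)$ for $u\in\gS$ is compatible across components sharing a vertex of $\gS$ — is the technical heart; everything else is a careful but routine parity computation of symmetric differences, exactly as in the proof of \cref{thm:furer_single_nonisomorphic}.
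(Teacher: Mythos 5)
Your plan for the existence half is essentially the paper's argument: the proof proceeds by induction on $|\gS|$, and each newly pinned vertex $v$ is handled by composing with length-$2$ path toggles through $v$ (the paper's mappings $f_i$ in equation~(\ref{eq:proof_furer_main}) are exactly your ``local toggles'' at $v$ with a two-element $\gX$). However, your ``main obstacle'' paragraph is chasing a ghost. You worry that $\gS$ may separate $F$ so that some pair $\{a,b\}\subset\gD_u$ requires a long path through other vertices of $\gS$. But by definition $\gD_u\subset\gN_F(u)$, so $a$ and $b$ are both \emph{neighbours} of $u$: the length-$2$ simple path $(a,u,b)$ always exists, has $u$ as its sole interior vertex, toggles only meta vertices at $u$, and twists only the two edges $\{a,u\},\{u,b\}$. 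No global routing, and hence no reduction to the case ``$\gV_F\setminus\gS$ connected'', is ever needed. Abandon that paragraph.

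The genuine gap is in the \emph{necessity} direction of the parity characterization, and you seem not to have noticed it. You conclude ``the reachable twist-sets are exactly those with matching component parities'' from the fact that each toggle preserves $|\gP\cap\gE|\bmod 2$. This quantifies only over compositions of your toggle moves. To conclude the same for \emph{every} proper isomorphism $\tilde f$, you would need the additional structural fact that every proper isomorphism acts as $(w,\gW)\mapsto(w,\gW\triangle\gD_w)$ for some choice of $\gD_w$, so that it factors as a composition of toggles. That fact is true (it follows from the bipartite adjacency structure between $\meta_F(x)$ and $\meta_F(y)$ for each edge $\{x,y\}$), but you neither state nor prove it, and without it your argument only rules out twist-sets \emph{reached by the moves you allow}, not twist-sets reached by arbitrary proper isomorphisms. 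The paper sidesteps this entirely: it restricts to the induced subgraph on $\gV^\gP_G$ and runs a direct parity count of non-edges following \cref{thm:furer_single_nonisomorphic}, an argument that makes no assumption about the global form of $\tilde f$ beyond properness. Either route can be made rigorous, but you must supply the missing structural lemma if you keep your group-theoretic framing; as written, the ``only if'' claim is unsupported.
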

\begin{proof}
    We only consider the case when $G(F)$ is a F{\"u}rer graph, and the case of twisted F{\"u}rer graph is similar. We prove the theorem by induction over the size of $\gS$. For the base case of $|\gS|=0$, there is clearly a trivial isomorphism (identity map) from $G(F)$ to $\twist(G(F),\emptyset)=G(F)$.

    Now assume that the result holds for set $\gS$, and there exists a proper isomorphism $f$ from $G(F)$ to $\twist(G(F),\gE)$ for some $\gE$ with $|\gE|\bmod 2 = 0$, such that $f(u,\gT_u)=(u,\gU_u)$ for all $u\in\gS$. Consider adding a vertex $v\in\gV_F$ and two given sets $(v,\gT_v),(v,\gU_v)\in\meta_F(v)$. We will construct a new proper isomorphism $f^\mathsf{new}$ from $G(F)$ to $\twist(G(F),\gE^\mathsf{new})$ for some $\gE^\mathsf{new}\subset\gE_F$ with $|\gE^\mathsf{new}|\bmod 2 = 0$, such that $f^\mathsf{new}(u,\gT_u)=(u,\gU_u)$ for all $u\in\gS$ and $f^\mathsf{new}(v,\gT_v)=(v,\gU_v)$. Denote $\tilde \gU_v:=f(v,\gT_v)$ and denote $\gD_v=\gU_v\triangle\tilde\gU_v$. Note that the size of $\gD_v$ is even.

    Denote $\gD_v=\{x_1,\cdots,x_{2k}\}$. We define $k$ mappings $f_i:\gV_G\to\gV_G$, $i\in[k]$ as follows:
    \begin{align}
    \label{eq:proof_furer_main}
        f_i(w,\gW)=\left\{\begin{array}{ll}
            (w,\gW\triangle\{x_{2i-1},x_{2i}\}) & \text{if }w=v, \\
            (w,\gW) & \text{otherwise}.
        \end{array}\right.
    \end{align}
    We set $f^\mathsf{new}$ to be the composition of a series of mappings $f^\mathsf{new}:=f_k\circ\cdots\circ f_1\circ f$. By definition, we have
    $$f^\mathsf{new}(v,\gT_v)=(f_k\circ\cdots\circ f_1)(v,\tilde\gU_v)=(v,\tilde\gU_v\triangle\gD_v)=(v,\gU_v),$$
    and for all $u\in\gS$,
    $$f^\mathsf{new}(u,\gT_u)=(f_k\circ\cdots\circ f_1)(u,\gU_u)=(u,\gU_u).$$
    
    It remains to verify that $f^\mathsf{new}$ is an isomorphism from $G(F)$ to $\twist(G(F),\gE^\mathsf{new})$ for some $\gE^\mathsf{new}$ with $|\gE^\mathsf{new}|\bmod 2=0$. Denote $\gE^{(i)}=\bigcup_{j=1}^{2i} \{\{v,u_{j}\}\}$ for $i\in\{0,1,\cdots,k\}$. Based on the proof of \cref{thm:furer_double} (i.e., the construction in (\ref{eq:proof_furer_double})), $f_i$ is an isomorphism from $G(F)$ to $\twist(G(F),\{\{v,u_{2i-1}\},\{v,u_{2i}\}\})$. Further using \cref{thm:furer_isomorphism}, $f_i$ is also an isomorphism from $\twist(G(F),\gE\triangle\gE^{(i-1)})$ to $\twist(G(F),\gE\triangle\gE^{(i)})$. Thus by composition, $f^\mathsf{new}$ is an isomorphism from $G(F)$ to $\twist(G(F),\gE\triangle\gE^{(k)})$, namely, $\gE^\mathsf{new}=\gE\triangle\gE^{(k)}$. Also, since $|\gE^{(k)}|=2k$ and $|\gE|\bmod 2=0$, we have $|\gE^\mathsf{new}|\bmod 2=0$, as desired. This finishes the induction step and concludes the proof of the first part.

    For the second part, let us first consider how to find a proper isomorphism $\tilde f$ from $G(F)$ to $\twist(G(F),\tilde \gE)$ for other $\tilde \gE$ satisfying $|\gP\cap\tilde \gE|\equiv|\gP\cap\gE|\pmod 2$ for all $\gP\in\mathsf{CC}_\gS(F)$, such that $\tilde f(u,\gT_u)=(u,\gU_u)$ for all $u\in\gS$. For each $\gP\in\mathsf{CC}_\gS(F)$, denote $\gD_\gP:=(\gE\triangle\tilde\gE)\cap\gP$. Clearly, $\bigcup_{\gP\in\mathsf{CC}_\gS(F)} \gD_\gP=\gE\triangle\tilde\gE$. By assumption we have 
    \begin{align*}
        |\gD_\gP|&=|(\gE\triangle\tilde\gE)\cap\gP|=|(\gE\cap\gP)\triangle(\tilde\gE\cap\gP)|\\
        &\equiv |\gE\cap\gP|+|\tilde\gE\cap\gP|\equiv 0\pmod 2.
    \end{align*}
    Therefore, there is a proper isomorphism $f^\gP$ from $G(F)$ to $\twist(G(F),\gD_\gP)$ based on \cref{thm:furer_double,thm:furer_isomorphism}. Concretely, denote $\gD_\gP=\{e_1,\cdots,e_{2k}\}$, then $f^\gP$ can be constructed as a composition $f^\gP=f^\gP_k\circ\cdots\circ f^\gP_1$ where each $f^\gP_i$ is a proper isomorphism from $\twist(G(F),\bigcup_{j=1}^{2(i-1)} \{e_j\})$ to $\twist(G(F),\bigcup_{j=1}^{2i} \{e_j\})$. Since all edges $e_j\in \gD_\gP$ are in the same connected component, there exists a path containing edges $e_{2i-1}$ and $e_{2i}$ and it does not go through vertices in $\gS$. Therefore, by construction of (\ref{eq:proof_furer_double}) in \cref{thm:furer_double}, all the mappings $f^\gP_i$ does not change the value for inputs $\meta_F(u)$ for all $u\in\gS$. Namely, $f^\gP(u,\gU_u)=(u,\gU_u)$ for all $u\in\gS$. Finally, we set $\tilde f=(\circ_{\gP\in\mathsf{CC}_\gS(F)} f^\gP)\circ f$ to be the composition of $f$ and all $f^\gP$. We have $\tilde f(u,\gT_u)=(\circ_{\gP\in\mathsf{CC}_\gS(F)} f^\gP)(u,\gU_u)=(u,\gU_u)$, as desired. Moreover, $\tilde f$ is indeed a proper isomorphism from $G(F)$ to $\twist(G(F), \gE\triangle(\bigcup_{\gP\in\mathsf{CC}_\gS(F)}\gD_\gP))=\twist(G(F), \tilde\gE)$.

    Conversely, suppose $\tilde\gE$ satisfies that $|\gP\cap\tilde \gE|\not\equiv|\gP\cap\gE|\pmod 2$ for some $\gP\in\mathsf{CC}_\gS(F)$. We will prove that any proper isomorphism $\tilde f$ from $G(X)$ to $\twist(G(X),\tilde\gE)$ cannot satisfy $\tilde f(u,\gT_u)=(u,\gU_u)$ for all $u\in\gS$. To prove the result, it suffices to prove that any proper isomorphism $\hat f$ from $\twist(G(X),\gE)$ to $\twist(G(X),\tilde\gE)$ cannot satisfy $\hat f(u,\gU_u)=(u,\gU_u)$ for all $u\in\gS$. Let $\gV^\gP_F:=\bigcup_{\{x,y\}\in\gP}\{x,y\}\subset \gV_F$ be the set of vertices associated to the connected component $\gP$, and let $\gV^\gP_G:=\bigcup_{x\in\gV^\gP_F}\meta_F(x)$. It thus suffices to prove that any proper isomorphism $f^\gP:\gV^\gP_G\to\gV^\gP_G$ from the induced subgraph $G^\gP:=\twist(G(X),\gE)[\gV^\gP_G]$ to the induced subgraph $H^\gP:=\twist(G(X),\tilde\gE)[\gV^\gP_G]$ cannot satisfy $f^\gP(u,\gU_u)=(u,\gU_u)$ for all $u\in\gS\cap\gV^\gP_F$.

    The proof follows the same technique as \cref{thm:furer_single_nonisomorphic}. For each $x\in\gV^\gP_F\backslash\gS$, pick an arbitrary meta vertex $(x,\gU_x)\in\meta_F(x)$. Combined with all $(u,\gU_u)$ for $u\in\gS\cap\gV^\gP_F$, now each vertex $x\in\gV^\gP_F$ is associated with a set $\gU_x$. First consider the base case when $f^\gP(x,\gU_x)=(x,\gU_x)$ for all $x\in\gV^\gP_F$. It can be proved that the set $\{\{(x,\gU_x),(y,\gU_y)\}:\{x,y\}\in\gP\}$ contains an odd/even number of edges in $G^\gP$ but an even/odd number of edges in $H^\gP$, i.e., their parity differs. This is because $H^\gP$ can be obtained from $G^\gP$ by twisting the edge set $(\gE\triangle\tilde\gE)\cap\gP$, which contains an \emph{odd} number of edges. Therefore, $f^\gP$ is not a proper isomorphism from $G^\gP$ to $H^\gP$ in this case. For the induction step, consider gradually changing the output $f^\gP(w,\gU_w)=(w,\gU_w)$ to $f^\gP(w,\gU_w)=(w,\tilde\gU_w)$ for each $w\in\gV^\gP_F\backslash\gS$ where $(w,\tilde\gU_w)\in\meta_F(w)$ can be an arbitrary meta vertex. It can be proved that the parity defined above does not change throughout the process (following the proof of \cref{thm:furer_single_nonisomorphic}). We have thus proved the induction step, and in all cases there does not exist a proper isomorphism $f^\gP:\gV^\gP_G\to\gV^\gP_G$ from $G^\gP$ to $H^\gP$ satisfying $f^\gP(u,\gU_u)=(u,\gU_u)$ for all $u\in\gS\cap\gV^\gP_F$.
\end{proof}

\cref{thm:furer_main} partially answers the question of how to construct a proper isomorphism $f$ from a F{\"u}rer graph $G(F)$ to another F{\"u}rer graph $\twist(G(F),\gE)$ when the mapped outputs $f(\xi)$ are specified for several given inputs $\xi\in\gV_G$. However, it does not fully address the problem, because it does not consider the case when two or more inputs $\xi$ are from the same set $\meta_F(u)$ for some $u\in\gV_F$. In the following, we will focus on this general setting. We first consider the special case when all $\xi$ are from the same meta vertex set $\meta_F(u)$. We have the following result:

\begin{lemma}
\label{thm:furer_multiple_s_in_u}
    Let $G(F)=(\gV_G,\gE_G)$ be either the original or twisted F{\"u}rer graph of $F=(\gV_F,\gE_F)$. Let $u\in\gV_F$ and $(u,\gT^1),(u,\gU^1),\cdots,(u,\gT^k),(u,\gU^k)\in\meta_F(u)$ be vertices in $\gV_G$. Then, there exists a set $\gD\subset\gN_F(u)$ such that $\gT^i\triangle\gU^i=\gD$ for all $i\in[k]$, if and only if there exists a proper isomorphism $f$ from graph $G(F)$ to graph $\twist(G(F),\gE)$ for some $\gE\subset\gE_F$ with $|\gE|\bmod 2 = 0$, such that $f(u,\gT^i)=(u,\gU^i)$ for all $i\in[k]$.
\end{lemma}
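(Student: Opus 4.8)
The plan is to prove both directions separately, with the reverse direction (existence of a proper isomorphism $\Rightarrow$ common symmetric difference $\gD$) being the easier one. For the reverse direction: suppose $f$ is a proper isomorphism from $G(F)$ to $\twist(G(F),\gE)$ with $f(u,\gT^i)=(u,\gU^i)$ for all $i\in[k]$. I would pick one index, say $i=1$, and set $\gD:=\gT^1\triangle\gU^1$. To show $\gT^i\triangle\gU^i=\gD$ for all other $i$, I would compose $f$ with its behavior on a single meta-vertex set: since $f$ maps $\meta_F(u)$ to $\meta_F(u)$, and a proper isomorphism must respect the edges from $\meta_F(u)$ to each neighboring meta-vertex set $\meta_F(w)$, $w\in\gN_F(u)$. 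The key observation is that whether $(u,\gX)$ is adjacent to a fixed vertex in $\meta_F(w)$ depends only on $w\in\gX$ or not; so $f$ restricted to $\meta_F(u)$ must act by a fixed ``flip pattern'' $\gD_u\subset\gN_F(u)$ — i.e., $f(u,\gX)=(u,\gX\triangle\gD_u)$ for all $\gX$ — in order to be consistent with the (proper) isomorphism structure on the neighboring blocks. This forces $\gT^i\triangle\gU^i=\gD_u=\gD$ for every $i$.

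For the forward direction: assume there is a single set $\gD\subset\gN_F(u)$ with $\gT^i\triangle\gU^i=\gD$ for all $i\in[k]$. Since $|\gT^i|$ and $|\gU^i|$ are both even, $|\gD|$ is even; write $\gD=\{x_1,\dots,x_{2m}\}$ with each $x_j\in\gN_F(u)$. I would then build the isomorphism as a composition of $m$ ``elementary flips'', analogous to the construction in the proof of \cref{thm:furer_double} (equation (\ref{eq:proof_furer_double})): for each pair $\{x_{2j-1},x_{2j}\}$, choose a short path in $F$ from $x_{2j-1}$ to $x_{2j}$ through $u$ (namely $x_{2j-1}-u-x_{2j}$, using that both are neighbors of $u$), and define the map that sends $(u,\gW)\mapsto(u,\gW\triangle\{x_{2j-1},x_{2j}\})$ and fixes all other meta vertices. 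By \cref{thm:furer_double} and \cref{thm:furer_isomorphism}, each such map is a proper isomorphism from the current twisted F\"urer graph to the one with two more edges $\{u,x_{2j-1}\},\{u,x_{2j}\}$ twisted. The composition $f=f_m\circ\cdots\circ f_1$ is then a proper isomorphism from $G(F)$ to $\twist(G(F),\gE)$ where $\gE=\bigcup_{j=1}^{2m}\{\{u,x_j\}\}$, which has even cardinality, and by construction $f(u,\gX)=(u,\gX\triangle\gD)$ for all $\gX\in\meta_F(u)$; in particular $f(u,\gT^i)=(u,\gT^i\triangle\gD)=(u,\gU^i)$ for all $i$.

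The main obstacle I expect is the reverse direction — specifically, rigorously arguing that any proper isomorphism $f$ must act on the single block $\meta_F(u)$ by a uniform XOR with a fixed set $\gD_u$, rather than some more complicated permutation of $\meta_F(u)$. This is where the combinatorial structure of F\"urer graphs is essential: I would argue by fixing a neighbor $w\in\gN_F(u)$ and a vertex $(w,\gY)\in\meta_F(w)$; the set of meta vertices in $\meta_F(u)$ adjacent to $(w,\gY)$ is exactly $\{(u,\gX):w\in\gX\leftrightarrow u\in\gY\}$, which is a ``coset-like'' half of $\meta_F(u)$. Since $f$ maps $\meta_F(w)$ to $\meta_F(w)$ as well (again acting by some flip, which I can handle by induction on the neighboring blocks or by a parity/linear-algebra argument over $\mathbb{F}_2$), the adjacency constraint forces $f|_{\meta_F(u)}$ to be an affine-linear map over $\mathbb{F}_2$ of the form $\gX\mapsto\gX\triangle\gD_u$. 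Care is needed because $\meta_F(u)$ is the even-size subsets of $\gN_F(u)$, an affine subspace of $\mathbb{F}_2^{\gN_F(u)}$, so ``translation by $\gD_u$'' is the only structure-preserving option once one rules out the linear automorphisms that don't come from translations — and these are ruled out precisely because they would fail to extend consistently to an isomorphism on all of $G(F)$ (this is essentially the content already established in \cref{thm:furer_main} and \cref{thm:furer_single_nonisomorphic}, which I would invoke rather than re-derive).
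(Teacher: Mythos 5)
Your forward direction (common $\gD$ $\Rightarrow$ proper isomorphism) coincides exactly with the paper's: write $\gD=\{v_1,\dots,v_{2l}\}$, compose elementary two-vertex flips of the form in the proof of \cref{thm:furer_double}, and observe the resulting map is a proper isomorphism to $\twist(G(F),\gE)$ with $\gE=\bigcup_j\{\{u,v_j\}\}$ of even size.

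For the reverse direction you take a genuinely different route. The paper argues by contrapositive: if no common $\gD$ exists, pick indices $i,j$ and $v\in\gN_F(u)$ with $v\in\gT^i\triangle\gU^i$ but $v\notin\gT^j\triangle\gU^j$, and probe with the single vertex $(v,\emptyset)$; the adjacency of $(u,\gT^i)$ to $(v,\emptyset)$ being preserved under $f$ forces one truth value for $u\in\gV$ (where $f(v,\emptyset)=(v,\gV)$), while the same computation for index $j$ forces the opposite truth value, a contradiction. You instead prove the stronger fact that $f|_{\meta_F(u)}$ is a translation $\gX\mapsto\gX\triangle\gD_u$, from which $\gT^i\triangle\gU^i=\gD_u$ follows for all $i$. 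Both are correct; the paper's is more economical, using only the specific preimages at hand, while yours yields structural information that could be reused elsewhere.

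One correction to your assessment of the difficulty: the ``main obstacle'' you worry about is not actually there, and the appeal to \cref{thm:furer_main} and \cref{thm:furer_single_nonisomorphic} is both unnecessary and off-target (those concern a different configuration of constraints). The adjacency argument you sketch already closes the gap on its own. Concretely, for each $w\in\gN_F(u)$ and each $(w,\gY)\in\meta_F(w)$, the neighborhood $N_{G(F)}((w,\gY))\cap\meta_F(u)$ is one of the two halves $\{(u,\gX):w\in\gX\}$, $\{(u,\gX):w\notin\gX\}$, and $f$ must send this half onto a half of $\meta_F(u)$ determined by $w\in f(w,\gY)_2$ (up to the twist bit of the edge $\{u,w\}$). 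Being a bijection of $\meta_F(u)$ respecting this partition means there is a single bit $b_w$ with $w\in f(u,\gX)_2 \iff (w\in\gX)\oplus b_w$ for every $\gX$. Aggregating over all $w\in\gN_F(u)$ gives $f(u,\gX)=(u,\gX\triangle\gD_u)$ with $\gD_u=\{w:b_w=1\}$ directly — there is no residual class of ``linear but non-translational'' maps left to rule out, because the per-coordinate constraint pins down every coordinate of $f(u,\gX)$ as a function of $\gX$. You also need not worry separately about $|\gD_u|$ being even: this follows automatically since $\meta_F(u)$ is closed under XOR with $\gD_u$.
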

\begin{proof}
    ``$\Rightarrow$''. Let $\gD\subset\gN_F(u)$ satisfy that $\gT^i\triangle\gU^i=\gD$ for all $i\in[k]$. Clearly, $|\gD|\bmod 2=0$. Denote $\gD=\{v_1,\cdots,v_{2l}\}$. Similar to the construction in (\ref{eq:proof_furer_main}), construct a mapping $f:\gV_G\to\gV_G$ to be $f=f_l\circ\cdots\circ f_1$, where
    \begin{align*}
        f_j(w,\gW)=\left\{\begin{array}{ll}
            (w,\gW\triangle\{v_{2j-1},v_{2j}\}) & \text{if }w=u, \\
            (w,\gW) & \text{otherwise}.
        \end{array}\right.
    \end{align*}
    Using a similar analysis, we obtain that $f$ is a proper isomorphism from $G(F)$ to $\twist(G(F),\bigcup_{j=1}^{2l}\{\{u,v_j\}\})$ and $f(u,\gT^i)=(u,\gT^i\triangle\gD)=(u,\gU^i)$ for all $i\in[k]$.

    ``$\Leftarrow$''. Assume there does not exist $\gD\subset\gN_F(u)$ satisfying $\gT^i\triangle\gU^i=\gD$ for all $i\in[k]$. Then, there must exist two indices $i,j$ and a vertex $v\in\gN_F(u)$ such that $v\in\gT^i\triangle\gU^i$ but $v\notin\gT^j\triangle\gU^j$. We show any proper isomorphism $f$ from $G(F)$ to $\twist(G(F),\gE)$ cannot satisfy both $f(u,\gT^i)=(u,\gU^i)$ and $f(u,\gT^j)=(u,\gU^j)$. Let $f(v,\emptyset)=(v,\gV)$. This is simply due to the following fact:
    \begin{itemize}[topsep=0pt]
    \setlength{\itemsep}{0pt}
        \item If $f(u,\gT^i)=(u,\gU^i)$, then by definition of isomorphism we have $u\in\emptyset\leftrightarrow v\in\gT^i\iff u\in\gV\leftrightarrow v\in\gU^i$. Since $v\in\gT^i\triangle\gU^i$, we obtain $u\in\gV$;
        \item If $f(u,\gT^j)=(u,\gU^j)$, then by definition of isomorphism we have $u\in\emptyset\leftrightarrow v\in\gT^j\iff u\in\gV\leftrightarrow v\in\gU^j$. Since $v\notin\gT^j\triangle\gU^j$, we obtain $u\notin\gV$.
    \end{itemize}
    This yields a contradiction and concludes the proof.
\end{proof}

We finally consider the most general setting. Our result is present as follows:

\begin{corollary}
\label{thm:furer_main_corollary}
    Let $G(F)=(\gV_G,\gE_G)$ be either the original or twisted F{\"u}rer graph of $F=(\gV_F,\gE_F)$. Let $\{(u_i,\gT_i)\}_{i=1}^k\subset\gV_G$ and $\{(u_i,\gU_i)\}_{i=1}^k\subset\gV_G$ be two vertex sets of $G(F)$. Define $\gR(u):=\{(\gT_i,\gU_i):u_i=u\}$ and let $\gS=\{u_i:i\in[k]\}$. The following two items are equivalent:
    \begin{itemize}[topsep=0pt]
    \setlength{\itemsep}{0pt}
        \item There exists a proper isomorphism $f$ from graph $G(F)$ to graph $\twist(G(F),\gE)$ for some $\gE\subset \gE_F$ with $|\gE|\bmod 2 = 0$, such that $f(u_i,\gT_i)=(u_i,\gU_i)$ for all $i\in[k]$.
        \item For all $v\in\gV_F$, there exists $\gD_v\subset\gN_F(v)$ such that $\gT_i\triangle\gU_i=\gD_v$ holds for all $(\gT_i,\gU_i)\in\gR(v)$.
    \end{itemize}
    Moreover, if the first item holds, then for any $\tilde\gE\subset\gE_F$, there exists a proper isomorphism $\tilde f$ from $G(F)$ to $\twist(G(F),\tilde \gE)$ such that $f(u_i,\gT_i)=(u_i,\gU_i)$ for all $i\in[k]$, if and only if $|\gP\cap\tilde \gE|\equiv|\gP\cap\gE|\pmod 2$ for all $\gP\in\mathsf{CC}_\gS(F)$.
\end{corollary}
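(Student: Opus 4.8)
\textbf{Proof proposal for \cref{thm:furer_main_corollary}.} The plan is to reduce this corollary to the two lemmas already established, namely \cref{thm:furer_main} (which handles the case where all specified inputs come from distinct meta vertex sets) and \cref{thm:furer_multiple_s_in_u} (which handles the case where all specified inputs come from a \emph{single} meta vertex set). The key observation is that the general setting is precisely the ``product'' of these two situations: within each meta vertex set $\meta_F(v)$ we face the constraints handled by \cref{thm:furer_multiple_s_in_u}, while across distinct meta vertex sets we face the constraints handled by \cref{thm:furer_main}. First I would prove the implication ``first item $\Rightarrow$ second item'': if a proper isomorphism $f$ exists with $f(u_i,\gT_i)=(u_i,\gU_i)$, then restricting attention to any fixed $v\in\gS$ and the pairs in $\gR(v)$, \cref{thm:furer_multiple_s_in_u} (its ``$\Leftarrow$'' direction, read contrapositively) forces the existence of a common $\gD_v\subset\gN_F(v)$ with $\gT_i\triangle\gU_i=\gD_v$ for all $(\gT_i,\gU_i)\in\gR(v)$; for $v\notin\gS$ we simply take $\gD_v=\emptyset$, which is vacuously fine.

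For the converse ``second item $\Rightarrow$ first item'', I would argue in two stages. Given the sets $\gD_v\subset\gN_F(v)$ for each $v\in\gV_F$ guaranteed by the second item (with $\gD_v=\emptyset$ when $\gR(v)=\emptyset$), I first construct, for each $v$, a ``local'' proper isomorphism $g_v$ as in the proof of \cref{thm:furer_multiple_s_in_u}: writing $\gD_v=\{w_1,\dots,w_{2l}\}$, set $g_v$ to be the composition of the elementary twists that toggle $\{w_{2j-1},w_{2j}\}$ at coordinate $v$. Each $g_v$ maps $\meta_F(v)$ to itself by $(v,\gW)\mapsto(v,\gW\triangle\gD_v)$ and fixes all other meta vertex sets, and by \cref{thm:furer_double} and \cref{thm:furer_isomorphism} it is a proper isomorphism from $G(F)$ to $\twist(G(F),\gE_v)$ where $\gE_v=\bigcup_{j}\{\{v,w_{2j}\}\}\cup\{\{v,w_{2j-1}\}\}$ has even size. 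Composing all the $g_v$ over $v\in\gS$ yields a proper isomorphism $g$ that achieves $g(u_i,\gT_i)$ having the same \emph{symmetric-difference-from-start} as $\gU_i$ at every relevant coordinate; what remains is a discrepancy that lies entirely in choosing which representative of each orbit we land on. Then I would invoke \cref{thm:furer_main}: after $g$, I pick one representative pair per meta vertex set and apply \cref{thm:furer_main} with the separation set $\gS$ to correct the remaining discrepancies across distinct meta vertices (the within-meta-vertex consistency having been arranged by $g$), obtaining the desired $f$ as a further composition; the total twisted edge set is a symmetric difference of even-sized sets, hence even.

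The ``moreover'' clause would then follow directly from the corresponding ``moreover'' clause of \cref{thm:furer_main}: once a valid $f$ for some $\gE$ is fixed, any other valid $\tilde f$ differs from $f$ by a proper isomorphism from $\twist(G(F),\gE)$ to $\twist(G(F),\tilde\gE)$ fixing all the specified outputs, and by \cref{thm:furer_isomorphism} plus the connected-component parity analysis already carried out in \cref{thm:furer_main}, such a map exists precisely when $|\gP\cap\tilde\gE|\equiv|\gP\cap\gE|\pmod 2$ for every $\gP\in\mathsf{CC}_\gS(F)$. The main obstacle I anticipate is bookkeeping the interaction between the two correction stages: I must ensure that the ``local'' twists $g_v$ (which alter edges incident to $v$) do not disturb the within-meta-vertex consistency at \emph{other} separation vertices, and that \cref{thm:furer_main}, which was stated for inputs drawn from distinct meta vertex sets, can legitimately be applied to the single representative chosen per set after the $g_v$ have homogenized the orbit structure. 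Verifying that these two mechanisms genuinely decouple — i.e., that the residual discrepancy after applying all $g_v$ is exactly of the form \cref{thm:furer_main} can fix, with no leftover cross-constraints — is the delicate point, but it should reduce to the fact that an elementary twist at $v$ changes $\gT\triangle(\cdot)$ only in coordinate $v$, which is exactly the invariance exploited throughout \cref{sec:proof_separation_part1}.
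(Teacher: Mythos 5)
Your Stage~1 alone already proves the implication ``second item $\Rightarrow$ first item,'' and your Stage~2 rests on a misconception. The composite $g=\circ_{v\in\gS}g_v$ acts by $g(w,\gW)=(w,\gW\triangle\gD_w)$ with $\gD_w=\emptyset$ for $w\notin\gS$, so $g(u_i,\gT_i)=(u_i,\gT_i\triangle\gD_{u_i})=(u_i,\gU_i)$ for every $i$ \emph{exactly} because $\gD_{u_i}=\gT_i\triangle\gU_i$. There is no residual discrepancy and nothing left ``across distinct meta vertices'' to correct; if you were to apply \cref{thm:furer_main} afterward as you propose, you would be asking it to realize $(u,\gU^u)\mapsto(u,\gU^u)$, i.e.\ the identity. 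The ``delicate decoupling'' you flag at the end dissolves for the same reason: each $g_v$ in \eqref{eq:proof_furer_main} fixes $\meta_F(w)$ pointwise for $w\neq v$, so the coordinates never interact. The only small item you should still check is that the accumulated twist set $\gE=\triangle_{v\in\gS}\gE_v$ has even size — since each $|\gE_v|$ is even and each edge lies in at most two of the $\gE_v$, this is immediate from $\sum_v|\gE_v|\equiv 0\pmod 2$ together with $\sum_v|\gE_v|\equiv|\gE|\pmod 2$, but it should be stated.

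On the route: the paper goes the other way around. It applies \cref{thm:furer_main} to \emph{one representative per meta-vertex} and then observes that the isomorphism \cref{thm:furer_main} produces already has the coordinate-wise form $f(w,\gW)=(w,\gW\triangle\gD_w)$, whence it automatically maps every other $(u_i,\gT_i)$ to $(u_i,\gU_i)$ for free. Your Stage~1 rebuilds that same coordinate-wise isomorphism from elementary twists directly; the two constructions coincide, but the paper's phrasing is more economical because it reuses \cref{thm:furer_main} as a black box rather than reopening its proof. Your treatment of the ``first item $\Rightarrow$ second item'' direction and of the ``moreover'' clause both match the paper; for the latter, the subtle point (which you should make explicit) is that the $\tilde f$ produced by \cref{thm:furer_main}'s moreover-part also has the coordinate-wise XOR form, so matching the chosen representatives at each $v\in\gS$ is equivalent to matching all $k$ prescribed pairs.
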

\begin{proof}
    First assume the second item does not hold for some $v\in\gV_F$. Then by \cref{thm:furer_multiple_s_in_u}, there does not exist a proper isomorphism $f$ from graph $G(F)$ to some $\twist(G(F),\gE)$ such that $f(v,\gT_i)=(v,\gU_i)$ for all $(\gT_i,\gU_i)\in\gR(v)$. Clearly, the first item of \cref{thm:furer_main_corollary} does not hold either.

    Now assume the second item holds. For each $v\in\gS$, pick an arbitrary element in $\gR(v)$, denoted as $(\gT^v,\gU^v)$. We can then invoke \cref{thm:furer_main} with $\{\gT^v\}_{v\in\gS}$ and $\{\gU^v\}_{v\in\gS}$. Denote $f$ as the proper isomorphism from $G(F)$ to $\twist(G(F),\gE)$ for some $\gE$ with $|\gE|\bmod 2 = 0$ returned by \cref{thm:furer_main}, such that $f(v,\gT^v)=(v,\gU^v)$ for all $v\in\gS$. It remains to prove that for all other elements $(\gT_i,\gU_i)\notin \{(\gT^v,\gU^v):v\in\gS\}$, we still have $f(u_i,\gT_i)=(u_i,\gU_i)$.

    Observe that the construction of $f$ in \cref{thm:furer_main} has the form $f(w,\gW)=(w,\gW\triangle\gD_w)$ for all $(w,\gW)\in\gV_G$ where $\gD_w$ is a fixed set for each $w\in\gV_F$ (which can be seen from the proof of \cref{thm:furer_main}). Under the notation above, we have $\gT^v\triangle\gD_v=\gU^v$ for all $v\in\gS$. If $f(u_i,\gT_i)\neq(u_i,\gU_i)$ for some $i$, then $\gT_i\triangle\gW_{u_i}\neq\gU_i$. This implies that $\gT^{u_i}\triangle\gU^{u_i}\neq \gT_i\triangle \gU_i$, which contradicts the second item of \cref{thm:furer_main_corollary} and concludes the proof.
\end{proof}

Before closing this subsection, we finally introduce a notion called \emph{proper F{\"u}rer graphs}, which will be widely used in subsequent analysis.

\begin{definition}[Proper F{\"u}rer graphs]
    A F{\"u}rer graph $G(F)$ is called proper, if the base graph $F=(\gV_F,\gE_F)$ has the following properties:
    \begin{itemize}[topsep=0pt]
    \setlength{\itemsep}{0pt}
        \item $F$ is a connected graph and the degree of any vertex $u\in\gV_F$ is at least two;
        \item There is at least one vertex $u\in\gV_F$ with a degree of at least three.
    \end{itemize}
\end{definition}
\begin{proposition}
\label{thm:furer_connectivity}
    Let $G(F)$ and $H(F)$ be any proper F{\"u}rer graph and its twisted graph, respectively. Then both $G(F)$ and $H(F)$ are connected, and the degree of any vertex in both $G(F)$ and $H(F)$ is at least two.
\end{proposition}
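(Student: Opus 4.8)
The plan is to derive both claims directly from the combinatorial definitions, using only elementary counting of even-size subsets; Lemma~\ref{thm:furer_double} is not needed. Throughout I abbreviate $d_F(x):=|\gN_F(x)|=\deg_F(x)$, and I repeatedly use that for a neighbour $y\in\gN_F(x)$, once the bit ``$x\in\gY$'' is fixed the number of even-size sets $\gY\subseteq\gN_F(y)$ with that bit is $2^{d_F(y)-2}$, which is $\ge1$ precisely because properness gives $d_F(y)\ge2$.

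\textbf{Degree bound.} First I would compute the degree of an arbitrary meta vertex $(x,\gX)$ of $G(F)$. Its neighbours lying in $\meta_F(y)$, for $y\in\gN_F(x)$, are exactly the $(y,\gY)$ with $x\in\gY\leftrightarrow y\in\gX$; since $y\in\gX$ is a fixed bit of $\gX$, this prescribes whether $x\in\gY$, so there are $2^{d_F(y)-2}\ge1$ of them, and these sets are disjoint for different $y$. Hence $\deg_{G(F)}(x,\gX)=\sum_{y\in\gN_F(x)}2^{d_F(y)-2}\ge d_F(x)\ge2$. For $H(F)=\twist(G(F),\{x_0,y_0\})$ only the adjacencies between $\meta_F(x_0)$ and $\meta_F(y_0)$ change, each to its complement inside the relevant meta-set; so for a meta vertex over $x_0$ the number of its neighbours in $\meta_F(y_0)$ becomes $|\meta_F(y_0)|-2^{d_F(y_0)-2}=2^{d_F(y_0)-1}-2^{d_F(y_0)-2}=2^{d_F(y_0)-2}\ge1$, while all other adjacencies are untouched; the same bound thus holds in $H(F)$.

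\textbf{Connectedness.} Fix $v_*\in\gV_F$ with $d_F(v_*)\ge3$ (it exists by properness) and let $\gC$ be the connected component of $(v_*,\emptyset)$. I would argue in two steps. Step~(i): all of $\meta_F(v_*)$ lies in $\gC$. The basic move is a length-two walk $(v_*,\gX)\to(y,\gY)\to(v_*,\gX'')$ through a meta vertex over some $y\in\gN_F(v_*)$: the two adjacency conditions are $v_*\in\gY\leftrightarrow y\in\gX$ and $v_*\in\gY\leftrightarrow y\in\gX''$, which together force $y\in\gX\leftrightarrow y\in\gX''$, and conversely whenever $\gX,\gX''$ agree on the bit ``contains $y$'' a valid intermediate $\gY$ exists by the count above. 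So two even subsets $\gX,\gX''$ of $\gN_F(v_*)$ are connected in $\gC$ whenever they agree on membership of some neighbour, i.e.\ unless $\gX\cup\gX''=\gN_F(v_*)$ and $\gX\cap\gX''=\emptyset$; in that exceptional case (which forces $d_F(v_*)\ge4$) one checks that a single intermediate even set — e.g.\ $\{a,b\}$ with $a\in\gX$, $b\in\gX''$, or a size-two set when one of them is empty — agrees with each of $\gX,\gX''$ on one element, since $d_F(v_*)\ge3$. Hence all meta vertices over $v_*$ are mutually connected. Step~(ii): everything lies in $\gC$. Using that $F$ is connected, I induct on $\dis_F(x,v_*)$: if a neighbour $y$ of $x$ already has $\meta_F(y)\subseteq\gC$, then by the degree computation every $(x,\gX)$ has at least one neighbour in $\meta_F(y)\subseteq\gC$, so $(x,\gX)\in\gC$; the base case is Step~(i). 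For $H(F)$ the very same argument applies: the only new point is that a two-step walk across the twisted edge $\{x_0,y_0\}$ still connects exactly those meta vertices over $x_0$ (or over $y_0$) agreeing on the relevant bit, because the twist negates both adjacency conditions in the walk and the two negations cancel, leaving the identical constraint $y_0\in\gX\leftrightarrow y_0\in\gX''$; and the existence of a valid intermediate again follows from $2^{d_F(y_0)-2}\ge1$.

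\textbf{Main obstacle.} The only genuinely delicate parts are getting the even-subset counts exactly right so that ``at least one neighbour across every edge of $F$'' is never violated — this is precisely where the minimum-degree-two hypothesis is used — and the small case analysis in Step~(i) guaranteeing two-step moves suffice to link all of $\meta_F(v_*)$, which is exactly where the existence of a degree-$\ge3$ vertex is needed (without it, e.g.\ for $F=C_3$, the Fürer graph splits into two components). Everything else is routine bookkeeping, and the twisted case reduces to the untwisted one by the cancellation observation above.
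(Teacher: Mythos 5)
Your proof is correct and follows essentially the same route as the paper: the identical degree count $\sum_{y\in\gN_F(x)}2^{d_F(y)-2}\ge d_F(x)\ge 2$, and connectivity established by first showing $\meta_F(v_*)$ is internally connected for some $v_*$ with $d_F(v_*)\ge 3$ and then using it as a transit set for all remaining meta vertices via the connectedness of $F$. The only internal difference is in how $\meta_F(v_*)$ is linked: the paper chains symmetric-difference-$2$ swaps through a neighbour $w\notin\gT\triangle\gU$ (using a two-edge path through $\meta_F(w)$), whereas you classify directly which pairs $\gX,\gX''$ admit a two-step walk and patch the single exceptional case $\gX\triangle\gX''=\gN_F(v_*)$ with a size-two detour; both variants use $d_F(v_*)\ge 3$ in the same essential way, and your cancellation observation for the twisted edge is likewise correct.
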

\begin{proof}
    By definition of (twisted) F{\"u}rer graphs, the degree of a vertex $(u,\gU)$ in $G(F)$ is $\sum_{v\in\gN_F(u)}|\{(v,\gV)\in\meta_F(v):u\in\gV\leftrightarrow v\in\gU\}|$. By the assumption that $|\gN_F(v)|\ge 2$ for all $v\in\gV_F$, the degree of a vertex $(u,\gU)$ is always $\sum_{v\in\gN_F(u)} 2^{|\gN_F(v)|-2}\ge \sum_{v\in\gN_F(u)}1\ge 2$. The case of $H(F)$ is similar. Thus all vertices in both $G(F)$ and $H(F)$ have a degree of at least two.

    We next investigate the connectivity of $G(F)$ and $H(F)$. Denote $u$ as any vertex in $G(F)$ or $H(F)$ with a degree of at least 3. We first show that any two vertices $(u,\gT),(u,\gU)\in\gV_G$ satisfying $|\gT\triangle\gU|=2$ are in the same connected component. To see this, pick any $w\in\gN_F(u)\backslash(\gT\triangle\gU)$ (which exists since $|\gN_F(u)|\ge 3$), and consider the two vertices $(w,\emptyset)$ and $(w,\gW)$ satisfying $u\in\gW$ (the existence of $\gW$ is due to $w\in\gN_F(u)$ and $|\gN_F(w)|\ge 2$). Then,
    \begin{itemize}[topsep=0pt]
    \setlength{\itemsep}{0pt}
        \item If $\{(w,\emptyset),(u,\gT)\}$ is an edge of $G(F)$/$H(F)$, then $\{(w,\emptyset),(u,\gU)\}$ is an edge of $G(F)$/$H(F)$ (because $w\in\gT$ if and only if $w\in\gU$);
        \item If $\{(w,\emptyset),(u,\gT)\}$ is not an edge of $G(F)$/$H(F)$, then $\{(w,\gW),(u,\gT)\}$ is an edge of $G(F)$/$H(F)$ (because $u\notin\emptyset$ but $u\in\gW$). Therefore, $\{(w,\gW),(u,\gU)\}$ is an edge of $G(F)$/$H(F)$ (because $w\in\gT$ if and only if $w\in\gU$).
    \end{itemize}
    In both cases, there is a path from $(u,\gT)$ to $(u,\gU)$. Next, we can simply remove the assumption $|\gT\triangle\gU|=2$: any $(u,\gT),(u,\gU)\in\gV_G$ are also in the same connected component. Finally, for any vertex $(x,\gX)$ in graph $G(F)$/$H(F)$, there is a path from $(x,\gX)$ to some vertex in $\meta_F(u)$ since $F$ is connected. Using $\meta_F(u)$ as a ``transit set'', we have proved that the graph $G(F)$/$H(F)$ is connected.
\end{proof}

\subsection{Simplified pebbling games for F{\"u}rer graphs}
\label{sec:pebbling_furer}

In \cref{sec:pebbling_game}, we developed a unified analyzing framework for all types of WL algorithms based on pebbling games. While the game viewpoint provides interesting and novel insights into the power of different algorithms, it is still quite challenging to directly applying such games for F{\"u}rer graphs due to their sophisticated structure. In this subsection, we propose a class of simplified pebbling game motivated from \citet{furer2001weisfeiler}, which makes our analysis much easier.

We begin by introducing the augmented F{\"u}rer graphs defined as follows:

\begin{definition}[Augmented F{\"u}rer graphs]
\label{def:aug_furer}
    Let $G(F)=(\gV_G,\gE_G)$ be either the original or twisted \emph{proper} F{\"u}rer graph of $F=(\gV_F,\gE_F)$ where $\gV_F=[n]$. Let $\tilde G(F)$ be the graph augmented from $G(F)$ in the following way:  for each $u\in\gV_F$, add a chain $C_u$ of length $u+1$ and link one endpoint of $C_u$ to all vertices in $\meta_F(u)$ (see \cref{fig:fuer_aug} for an illustration). The vertices on the chains are called \emph{auxiliary vertices}. For each vertex $\xi$ in $\tilde G(F)$, by construction it is associated with a vertex $u$ in the base graph. We call $u$ is the \emph{base vertex} of $\xi$ and denote $B(\xi)=u$.
\end{definition}

\begin{figure}[ht]
  \begin{center}
   \includegraphics[width=0.49\textwidth]{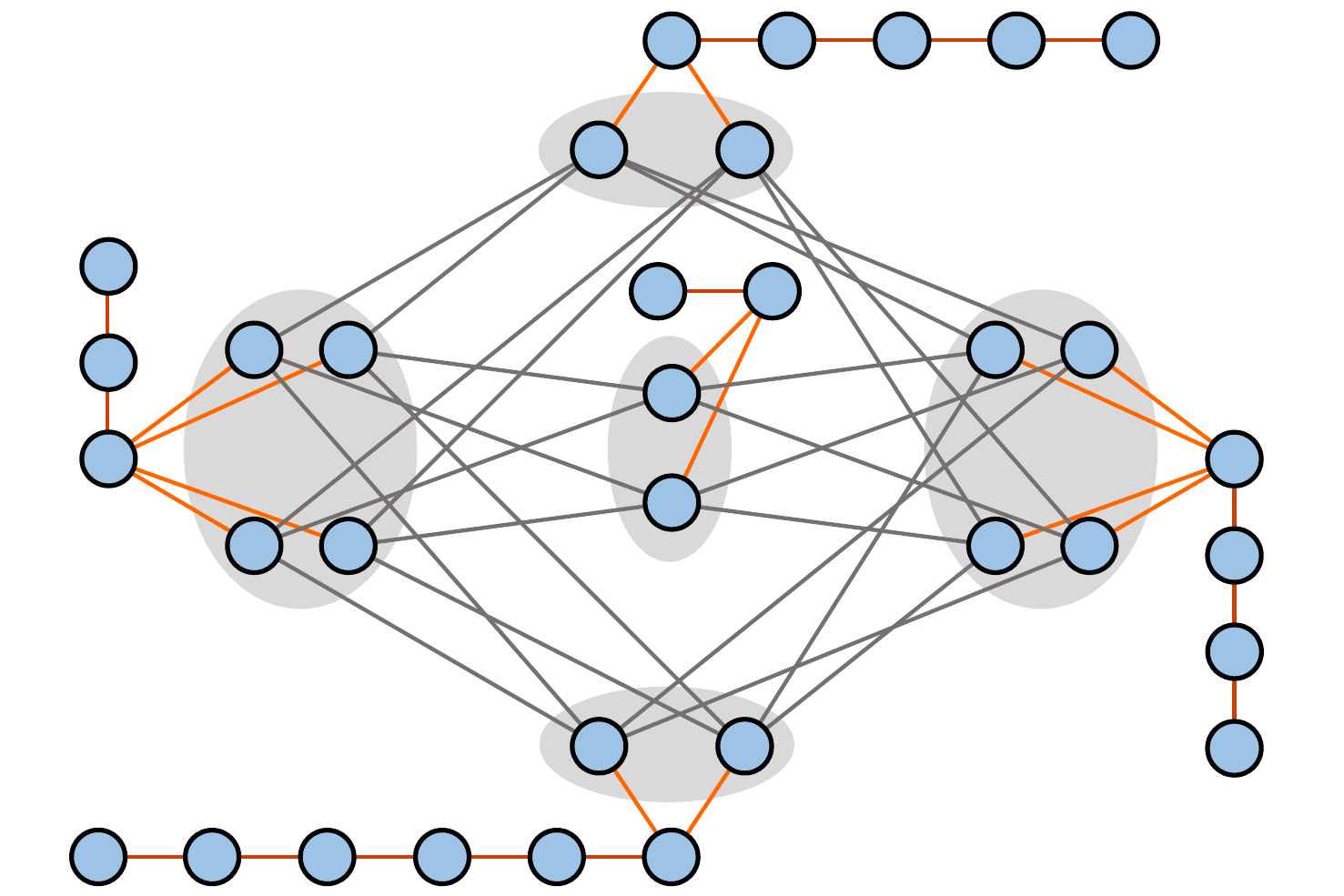}
  \end{center}
  \vspace{-10pt}
  \caption{Illustration of the augmented F{\"u}rer graph for the F{\"u}rer graph in \cref{fig:furer}. Here, the nodes in gray regions are the vertices of the original F{\"u}rer graph and other vertices are from the chains. We also use different colors to distinguish different types of edges.}
  \label{fig:fuer_aug}

\end{figure}

The main motivation of \cref{def:aug_furer} is that the added auxiliary vertices help distinguish the sets $\meta_F(u)$ for different $u$ since the lengths of chains $C_u$ are different. Indeed, let $\mathsf{A}$ be any SWL algorithm containing a local aggregation or any FWL-type algorithm, and consider playing the pebbling game for $\mathsf{A}$ on augmented F{\"u}rer graphs $\tilde G(F)$ and its twisted version $\tilde H(F)$. We have the following result, which shows that Duplicator's best strategy is to match the base vertices for each pair of pebbles. 

\begin{lemma}
\label{thm:furer_pebble_match1}
    Consider the pebbling game for any WL algorithm $\mathsf{A}$ played on graphs $\tilde G(F)$ and $\tilde H(F)$. Let $(\xi_G,\xi_H)$ be the position of any pebbles $u$/$v$ after any round. If $B(\xi_G)\neq B(\xi_H)$, then Spoiler can win the remaining game.
\end{lemma}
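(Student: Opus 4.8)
By the game–colour correspondences already established (\cref{thm:swl_and_uvpebble} for SWL algorithms and \cref{thm:fwl_and_uvpebble} for the FWL-type algorithms), it suffices to prove the following purely combinatorial statement about the stable colour mapping $\chi$ of $\mathsf{A}$ on the augmented graphs: for all vertices $\xi_G,\eta_G\in\gV_{\tilde G(F)}$ and $\xi_H,\eta_H\in\gV_{\tilde H(F)}$,
$\chi_G(\xi_G,\eta_G)=\chi_H(\xi_H,\eta_H)$ implies $B(\xi_G)=B(\xi_H)$ and $B(\eta_G)=B(\eta_H)$, i.e.\ the stable colour of a pair determines the base vertex of each component. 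Granting this, suppose that after some round a pebble (say pebble $u$, the case of $v$ being identical) sits at $(\xi_G,\xi_H)$ with $B(\xi_G)\neq B(\xi_H)$, and let $(\eta_G,\eta_H)$ be the position of the other pebble (both pebbles are on the graphs after initialization). Then $\chi_G(\xi_G,\eta_G)\neq\chi_H(\xi_H,\eta_H)$, so by \cref{thm:swl_and_uvpebble}/\cref{thm:fwl_and_uvpebble} Duplicator cannot win the residual $(u,v)$-pebbling game from this position, which is exactly the continuation of the current play; hence Spoiler wins.

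The heart of the matter is to show that ordinary $1$-WL colour refinement on $\tilde G(F)$ (and on $\tilde H(F)$) already separates vertices by base vertex, with the two refinements corresponding class-for-class. The chains do this. By construction the chain $C_u$ contributes one vertex of degree $1$ (its dead end), then exactly $u$ vertices of degree $2$, then a branch vertex of degree $|\meta_F(u)|+1\ge 3$ (using that $G(F)$ is proper, \cref{thm:furer_connectivity}, so $|\meta_F(u)|=2^{|\gN_F(u)|-1}\ge2$); moreover the degree-$1$ vertices of $\tilde G(F)$ are \emph{exactly} these dead ends and every other vertex has degree $\ge2$. Since the lengths $u$ range over $\gV_F=[n]$ and are pairwise distinct, iterated refinement first pins each dead end by the length of its chain, hence by its base vertex; this colour then propagates inward along each chain (each internal chain vertex is pinned by its distances to the dead end and to the branch vertex, whose sum recovers $u$) and onto the incident meta vertices (each $(u,\gX)\in\meta_F(u)$ is adjacent to the now uniquely coloured branch vertex of $C_u$), and finally throughout the Fürer part by BFS. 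A short count shows that for $v\sim u$ in $F$ the number of neighbours of $(u,\gX)$ lying in $\meta_F(v)$ equals $2^{|\gN_F(v)|-2}$, independently of $\gX$ \emph{and} of whether one twists; hence the stable $1$-WL colouring of $\tilde G(F)$ has colour classes $\{\meta_F(u)\}_u$ together with one class per chain position, and $\tilde H(F)$ has the identical class structure with the same colour-to-base-vertex assignment. In particular equal $1$-WL colours across the two graphs force equal base vertices.

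It remains to upgrade from $1$-WL to the colour mapping $\chi$ of $\mathsf{A}$. Each algorithm in the scope of the lemma contains a local aggregation, so with $\xi$ fixed the mapping $\eta\mapsto\chi_G(\xi,\eta)$ refines the $1$-WL colouring of $\tilde G(F)$ rooted at $\xi$ (aggregating over neighbourhoods dominates colour refinement), and after one iteration a rooted colouring dominates the unrooted one; hence it separates $\eta$ by $B(\eta)$. Likewise $\chi_G(\xi_G,\eta_G)=\chi_H(\xi_H,\eta_H)$ forces $\chi_G(\xi_G,\xi_G)=\chi_H(\xi_H,\xi_H)$ (by \cref{thm:node_marking_distance_corollary} in the SWL case, and by its FWL analogue), and the diagonal colour $\chi_G(\xi_G,\xi_G)$ is at least as fine as the unrooted $1$-WL colour of $\xi_G$, so it determines $B(\xi_G)$. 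For the FWL-type algorithms one may instead simply invoke $\chi^{\mathsf{LFWL}}\preceq\chi^{\mathsf{PSWL}}$, $\chi^{\mathsf{SLFWL}}\preceq\chi^{\mathsf{SSWL}}$ and $\chi^{\mathsf{FWL}}\preceq\chi^{\mathsf{SLFWL}}$ from \cref{thm:chi_fwl_basic}, reducing everything to the two SWL cases above. This yields the combinatorial statement and completes the proof.

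The main obstacle is the second paragraph: making the chain-signature bookkeeping fully rigorous — establishing that iterated $1$-WL on the augmented Fürer graph pins every vertex to its base vertex, and that the colourings of $\tilde G(F)$ and $\tilde H(F)$ match class for class — in particular verifying the neighbour-count identity $2^{|\gN_F(v)|-2}$ that renders the twist invisible to $1$-WL once the chains have grounded the colouring. The remaining steps are routine given the tools already developed in the paper.
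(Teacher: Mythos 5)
Your proposal is correct in outline but takes a genuinely different—and considerably heavier—route than the paper, and you yourself flag that one crucial step is left unproved.

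The paper proves \cref{thm:furer_pebble_match1} directly inside the game, with no appeal to color mappings. It first observes that it suffices to treat the weakest algorithm $\mathsf{SWL(VS)}$ (Spoiler only gains options in stronger games) and to assume the offending pebble is $v$ (if it is $u$, Spoiler first walks $v$ onto $u$ in $\tilde G(F)$, which is possible by connectivity of $\tilde G(F)$, and either wins immediately or transfers the mismatch to $v$). Then Spoiler simply drives pebble $v$ in $\tilde G(F)$ along chain $C_{B(\xi_G)}$ toward its dead end. Since every non-chain vertex has degree at least three (\cref{thm:furer_connectivity}) while chain vertices have degree at most two, Duplicator is forced to keep her $v$-pebble on the corresponding chain $C_{B(\xi_H)}$; but the two chains have different lengths, so when Spoiler reaches the degree-one dead end of the shorter chain, Duplicator cannot match and loses. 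This is self-contained, elementary, and uses only \cref{thm:furer_connectivity}.

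Your route instead reduces the lemma, via the game–color correspondences (\cref{thm:swl_and_uvpebble}, \cref{thm:fwl_and_uvpebble}), to the claim that the stable pair color $\chi_G(\xi_G,\eta_G)=\chi_H(\xi_H,\eta_H)$ forces $B(\xi_G)=B(\xi_H)$. That reduction is sound. You then try to establish the color claim by showing that ordinary $1$-WL refinement on $\tilde G(F)$ and $\tilde H(F)$ already pins every vertex by base vertex, with the two colourings aligning class-for-class, and that the SWL/FWL colours dominate $1$-WL. The dominance part is fine (your induction sketch works). The $1$-WL part, however, is exactly where the real content is, and you acknowledge it is not proved: one must show that the $1$-WL colourings of $\tilde G(F)$ and $\tilde H(F)$ stabilize to $\{\meta_F(u)\}_u$ plus chain positions and align across the twist — which requires both the automorphism-transitivity of the meta-vertex sets (essentially \cref{thm:furer_main}) and the neighbour-count invariance under twisting, each worked out for the chain-augmented graph. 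This is a nontrivial amount of verification that the paper's direct game argument avoids entirely.

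In short: same conclusion, but the paper's proof is a short constructive Spoiler strategy while yours is an indirect color-refinement argument whose key combinatorial step is left as a sketch. The direct argument is the better one here; if you wish to salvage your approach, the $1$-WL bookkeeping must be completed, and it would be simplest to invoke the paper's own machinery (\cref{thm:furer_main}, \cref{thm:furer_main_corollary}) rather than reprove twist-invariance by hand.
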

\begin{proof}
    It suffices to consider the (weakest) vanilla SWL algorithm $\mathsf{SWL(VS)}$, because Spoiler has more choices to win when considering more powerful WL algorithms. Also, if $B(\xi_G)\neq B(\xi_H)$ holds for pebbles $u$, Spoiler can just move pebble $v$ in $\tilde G(F)$ in subsequent rounds so that the position of $v$ eventually coincides with pebble $u$. This is feasible because $\tilde G(F)$ is connected (\cref{thm:furer_connectivity}). Now if the position of pebble $v$ in $\tilde H(F)$ does not coincide with pebble $u$, Spoiler already wins. Therefore, in the remaining proof we can assume that $B(\xi_G)\neq B(\xi_H)$ holds for pebbles $v$.

    Without loss of generality, assume $B(\xi_G):=v_G<v_H:=B(\xi_H)$ (note that $\gV_F$ is a number set). Spoiler's strategy is then to move pebble $v$ in $\tilde G(F)$ towards the endpoint of the chain $C_{v_G}$. Throughout the process, Duplicator has to keep the pebble $v$ in $\tilde H(F)$ located on the chain $C_{v_H}$ (otherwise, the vertices not from the chains must have a degree of at least three (\cref{thm:furer_connectivity}), and when the degrees of $v_G$ and $v_H$ do not match, Spoiler can win in the next round). When Spoiler finally places pebble $v$ in $\tilde G(F)$ to the endpoint of chain $C_{v_G}$, Duplicator cannot place the other pebble $v$ in $\tilde H(F)$ to a vertex of degree 1, so Spoiler can win in the next round.
\end{proof}

Similarly, Duplicator has to ensure that for any pebbles $u$/$v$ on the two graphs, either they are both placed on the auxiliary vertices, or neither of them is placed on the auxiliary vertices. When they are both placed on the auxiliary vertices, the distance to the corresponding chain endpoint must be the same. It is easy to see that Duplicator can always achieve this goal. Therefore, when Duplicator follows her best strategy, there is no reason for Spoiler to place pebbles on these auxiliary vertices.

We next consider the case when the positions of both pebbles $u,v$ in $\tilde G(F)$ correspond to the same base vertex. We have the following result:

\begin{lemma}
\label{thm:furer_pebble_match2}
    Consider the pebbling game for any WL algorithm $\mathsf{A}$ played on graphs $\tilde G(F)$ and $\tilde H(F)$. Let $u=(\xi_G,\xi_H)$ and $v=(\eta_G,\eta_H)$ be the positions of pebbles $u$ and $v$ after any round. Assume all pebbles are not placed on auxiliary vertices and correspond to the same base vertex $w\in\gV_F$. Denote $\xi_G=(w,\gT_G)$, $\xi_H=(w,\gT_H)$, $\eta_G=(w,\gU_G)$, $\eta_H=(w,\gU_H)$. If $\gT_G\triangle\gU_G\neq\gT_H\triangle\gU_H$, Spoiler can win the remaining game.
\end{lemma}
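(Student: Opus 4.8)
The plan is to give Spoiler an explicit strategy that wins in a single round. Since a more expressive WL algorithm only enlarges Spoiler's repertoire of legal moves, it suffices to use the one move common to all the algorithms in question — sliding pebble $v$ (or, by the $\mathsf{u}\leftrightarrow\mathsf{v}$ symmetry, pebble $u$) to a neighbor of its current position — which is exactly the $\agg^\mathsf{L}_\mathsf{u}$ move of vanilla $\mathsf{SWL}(\mathsf{VS})$ and is likewise available in $\mathsf{LFWL(2)}$, $\mathsf{SLFWL(2)}$, and $\mathsf{FWL(2)}$. As in the proof of Lemma~\ref{thm:furer_pebble_match1}, if at any subsequent point Duplicator fails to match base vertices or mishandles an auxiliary chain, Spoiler already wins, so I may assume throughout that Duplicator answers with a base-vertex-matching, non-auxiliary vertex.

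First I would locate the discrepancy. Since $\gT_G\triangle\gU_G\neq\gT_H\triangle\gU_H$ and both are subsets of $\gN_F(w)$, some $z\in\gN_F(w)$ lies in exactly one of them; up to exchanging the two graphs (the game and its winning condition are symmetric in $G$ and $H$), assume $z\in\gT_G\triangle\gU_G$ but $z\notin\gT_H\triangle\gU_H$. Properness of the F{\"u}rer graph gives $|\gN_F(z)|\ge 2$. Now Spoiler slides pebble $v$ from $\eta_G=(w,\gU_G)$ onto a vertex $(z,\gZ_G)\in\meta_F(z)$ chosen so that $(z,\gZ_G)$ is adjacent to $(w,\gU_G)$ in $\tilde G(F)$ but \emph{not} adjacent to the pebble-$u$ vertex $(w,\gT_G)$. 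Using the F{\"u}rer adjacency rule — an edge $(w,\gW)\sim(z,\gZ)$ holds iff $w\in\gZ\leftrightarrow z\in\gW$ in the untwisted graph — these two demands pin down whether $w\in\gZ_G$, and they are jointly satisfiable \emph{precisely because} $z\in\gT_G\triangle\gU_G$; the remaining freedom on $\gZ_G\cap(\gN_F(z)\setminus\{w\})$ then lets Spoiler meet the parity constraint $|\gZ_G|\equiv 0\pmod 2$, since $\gN_F(z)\setminus\{w\}\neq\emptyset$. Concretely, in the vertex-selection step Spoiler proposes the singleton $\{(z,\gZ_G)\}\subseteq\gN_{\tilde G(F)}(v)$, so that after Duplicator answers with any single neighbor of $\eta_H$ the remaining choices are forced and pebble $v$ in $\tilde G(F)$ ends up at $(z,\gZ_G)$.

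By Lemma~\ref{thm:furer_pebble_match1}, Duplicator must then place pebble $v$ in $\tilde H(F)$ on some $(z,\gZ_H)\in\meta_F(z)$ adjacent to $\eta_H=(w,\gU_H)$. Writing the $\tilde H(F)$-adjacency between $\meta_F(w)$ and $\meta_F(z)$ as ``$(w,\gW)\sim(z,\gZ)$ iff $\big[w\in\gZ\leftrightarrow z\in\gW\big]\oplus\beta$'' with $\beta\in\{0,1\}$ the twist bit of the edge $\{w,z\}$, a short computation shows that adjacency of $(z,\gZ_H)$ to $(w,\gU_H)$ together with $z\notin\gT_H\triangle\gU_H$ forces $(z,\gZ_H)$ to be adjacent to $(w,\gT_H)$ as well — and this conclusion is independent of $\beta$, so no normalization of the twisted edge (e.g.\ via Corollary~\ref{thm:furer_single}) is needed. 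Hence after this single round the subgraph induced by the two pebbles consists of two non-adjacent vertices in $\tilde G(F)$ but of a single edge in $\tilde H(F)$; their isomorphism types differ, and Spoiler has won.

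The argument is essentially local, and the only point that genuinely needs care is the last one: carrying out the case analysis on $\beta$ and on the four possibilities for the pair $(z\in\gT_H,\,z\in\gU_H)$ to confirm that Duplicator's forced vertex $(z,\gZ_H)$ is always adjacent to the pebble-$u$ vertex in $\tilde H(F)$. Everything else — the existence of a legal $\gZ_G$ realizing the two prescribed adjacencies, the parity bookkeeping on meta-vertices, and the reduction to a base-vertex-matching opponent — is routine and relies only on the F{\"u}rer adjacency rule, the evenness constraint defining meta-vertices, and Lemma~\ref{thm:furer_pebble_match1}; none of Theorem~\ref{thm:furer_main} or its corollaries is required here.
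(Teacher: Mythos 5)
Your proof is correct and takes essentially the same approach as the paper: both pick a discriminating neighbor $z\in(\gT_G\triangle\gU_G)\triangle(\gT_H\triangle\gU_H)\subseteq\gN_F(w)$ (the paper writes this set as $\gT_G\triangle\gT_H\triangle\gU_G\triangle\gU_H$), slide pebble $v$ into $\meta_F(z)$ via the local-aggregation move available to every algorithm considered, and use the parity structure of the F{\"u}rer adjacency rule to force the resulting pebble pairs in $\tilde G(F)$ and $\tilde H(F)$ to have different isomorphism types. The only difference is presentational: you have Spoiler commit to a specific $(z,\gZ_G)$ chosen non-adjacent to $\xi_G$ and then argue Duplicator's forced response must be adjacent to $\xi_H$, while the paper lets Spoiler move to an arbitrary $(z,\tilde\gU_G)\in\meta_F(z)\cap\gN_{\tilde G(F)}(\eta_G)$ and verifies via a chain of boolean equivalences that adjacency to the $\xi$-pebble always differs across the two graphs regardless of the choices made — your explicit case split on the twist bit $\beta$ is a minor clarity gain over the paper's implicit handling.
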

\begin{proof}
    The reason why Spoiler can win is essentially given in the proof of \cref{thm:furer_multiple_s_in_u}. Similar to the proof of \cref{thm:furer_pebble_match1}, we only consider the (weakest) vanilla SWL algorithm $\mathsf{SWL(VS)}$. Since $\gT_G\triangle\gU_G\neq\gT_H\triangle\gU_H$, there is a vertex $x\in \gT_G\triangle\gT_H\triangle\gU_G\triangle\gU_H$. Clearly, $x\in\gN_F(w)$. Spoiler's strategy is then to move pebble $v$ to any of its neighbors $(x,\tilde\gU_G)$ for some $\tilde\gU_G$, and Duplicator should move the other pebble $v$ to any of its neighbors $(x,\tilde\gU_H)$ for some $\tilde\gU_H$. By definition of F{\"u}rer graphs, since $\{(w,\gU_G),(x,\tilde\gU_G)\}$ is an edge of $\tilde G(F)$ and $\{(w,\gU_H),(x,\tilde\gU_H)\}$ is an edge of $\tilde H(F)$, we have
    \begin{align*}
        &(x\in\gU_G\leftrightarrow w\in\tilde\gU_G)=(x\in\gU_H\leftrightarrow w\in\tilde\gU_H)\\
        \iff & x\in\gU_G\triangle\gU_H\leftrightarrow w\in\tilde\gU_G\triangle\tilde\gU_H\\
        \iff & x\notin\gT_G\triangle\gT_H\leftrightarrow w\in\tilde\gU_G\triangle\tilde\gU_H\\
        \iff & (x\in\gT_G\leftrightarrow w\in\tilde\gU_G)\neq(x\in\gT_H\leftrightarrow w\in\tilde\gU_H)
    \end{align*}
    Therefore, the isomorphism type of the two vertex pairs $(\xi_G,\tilde\eta_G)$ and $(\xi_H,\tilde\eta_H)$ is not the same, where $\tilde\eta_G:=(x,\tilde\gU_G)$ and $\tilde\eta_H:=(x,\tilde\gU_H)$. Spoiler thus wins the game.
\end{proof}

\cref{thm:furer_pebble_match2} further limits Duplicator's strategy when two pebbles share the same base vertex. Moreover, when Duplicator follows her best strategy, it also implies that Spoiler cannot gain extra advantage when he places \emph{multiple} pebbles to positions that belong to the same base vertex. Actually, we will show below that the strategy for both Spoiler and Duplicator can be reduced to focusing only on the information of the base vertices placed by pebbles $u,v$. In particular, this results in a simplified pebbling game defined as follows.

\textbf{Simplified pebbling game for augmented F{\"u}rer graphs}. Let $F=(\gV_F,\gE_F)$ be the base graph of a proper F{\"u}rer graph. The simplified pebbling game is played on $F$. There are three pebbles $u,v,w$ of different types. Initially, all three pebbles are left outside the graph $F$. We first describe the game rule for Spoiler, which is similar to \cref{sec:pebbling_game} but is much simpler.

First consider SWL algorithms $\mathsf{A}(\gA,\pool)$. If $\pool=\mathsf{VS}$, Spoiler first places pebble $u$ on any vertex of $F$ and then places pebble $v$ on any vertex of $F$. If $\pool=\mathsf{SV}$, Spoiler first places pebble $v$ and then places pebble $u$.

The game then cyclically executes the following process. Depending on the aggregation scheme $\gA$, Spoiler can freely choose one of the following ways to play:
\begin{itemize}[topsep=0pt]
    \setlength{\itemsep}{0pt}
    \item Local aggregation $\agg^\mathsf{L}_\mathsf{u}\in\gA$. Spoiler first places pebble $w$ adjacent to the vertex placed by pebble $v$, then swaps pebbles $v$ and $w$, and finally places pebble $w$ outside the graph $F$.
    \item Global aggregation $\agg^\mathsf{G}_\mathsf{u}\in\gA$. Spoiler first places pebble $w$ on any vertex of $F$, then swaps pebbles $v$ and $w$, and finally places pebble $w$ outside $F$.
    \item Single-point aggregation $\agg^\mathsf{P}_\mathsf{uu}\in\gA$. Spoiler first places pebble $w$ to the position of pebble $u$, then swaps pebbles $v$ and $w$, and finally places pebble $w$ outside $F$.
    \item Single-point aggregation $\agg^\mathsf{P}_\mathsf{vu}\in\gA$. Spoiler swaps the position of pebbles $u$ and $v$.
\end{itemize}
The cases of $\agg^\mathsf{L}_\mathsf{v}$, $\agg^\mathsf{G}_\mathsf{v}$, $\agg^\mathsf{P}_\mathsf{vv}$ are similar (symmetric) to $\agg^\mathsf{L}_\mathsf{u}$, $\agg^\mathsf{G}_\mathsf{u}$, $\agg^\mathsf{P}_\mathsf{uu}$, so we omit them for clarity. 

Next consider FWL-type algorithms. Initially, Spoiler simultaneously places pebbles $u$ and $v$ on two vertices of $F$. The game then cyclically executes the following process. For $\mathsf{LFWL(2)}$, Spoiler first places pebble $w$ to some vertex in $\gN_F^1(v)$, then either swaps pebbles $v$, $w$ or swaps pebbles $u$, $w$, and finally places $w$ outside the graph $F$. The cases of $\mathsf{SLFWL(2)}$ and $\mathsf{FWL(2)}$ are similar, expect that $\gN_F^1(v)$ is replaced by $\gN_F^1(u)\cup\gN_F^1(v)$ and $\gV_F$, respectively.

We next describe the game rule for Duplicator, which is of a very
different kind. In brief, she maintains a subset $\gQ$ of connected components $\gQ\subset\mathsf{CC}_{\gS}(F)$ (\cref{def:connected_component}) where the set $\gS$ contains vertices of $F$ on which the pebbles $u,v,w$ are currently located. Initially, $\gQ:=\mathsf{CC}_{\emptyset}(F)=\{\gE_F\}$. Note that throughout the game, Spoiler only performs three types of basic operations: $(\mathrm{i})$ add a pebble/two pebbles and place it/them on vertices of $F$; $(\mathrm{ii})$ remove a pebble and leave it outside the graph $F$; $(\mathrm{iii})$ swap the positions of two pebbles. Once Spoiler performs an operation above, Duplicator will update $\gQ$ according to the following rules so that the parity of $|\gQ|$ is always odd throughout the game.
\begin{itemize}[topsep=0pt]
    \setlength{\itemsep}{0pt}
    \item When Spoiler places some pebble(s) on vertices of $F$, there are two cases. If $\mathsf{CC}_{\gS}(F)$ does not change, then Duplicator does nothing. Otherwise, the presence of new pebbles will split some connected components into a set of smaller regions. For each original connected component $\gP\subset\gE_F$ that is split into $\gP_1,\cdots,\gP_k$ with $\bigcup_{i=1}^k \gP_i=\gP$, Duplicator can replace $\gQ$ by $\tilde\gQ=(\gQ\backslash \gP)\cup \{\gP_{j_1},\cdots,\gP_{j_l}\}$ for some $j_1,\cdots,j_l\in[k]$, such that $|\tilde\gQ|\bmod 2=1$. In other words, Duplicator updates the set $\gQ$ by removing the old connected component $\gP$ (if in the set) and adding some new partitioned components, while ensuring that the parity of the size of $\gQ$ does not change.
    \item When Spoiler removes a pebble and leave it outside the graph $F$, there are also two cases. If $\mathsf{CC}_{\gS}(F)$ does not change, then Duplicator does nothing. Otherwise, the removal of a pebble will merge several connected components $\gP_1,\cdots,\gP_k$ into a larger one $\gP=\bigcup_{i=1}^k \gP_i$. Duplicator then replaces $\gQ$ by either $\tilde\gQ=\gQ\backslash \{\gP_{1},\cdots,\gP_{k}\}$ or $\tilde\gQ=(\gQ\backslash \{\gP_{1},\cdots,\gP_{k}\})\cup\gP$, depending on which one satisfies $|\tilde\gQ|\bmod 2=1$. In other words, Duplicator updates the set $\gQ$ by removing these small connected components and optionally adding the merged component to preserve parity.
    \item When Spoiler swaps the positions of two pebbles, $\mathsf{CC}_{\gS}(F)$ clearly does not change and thus Duplicator does nothing.
\end{itemize}
For the case of local aggregation $\agg^\mathsf{L}_\mathsf{u}$, there is an extra constraint for Duplicator: after Spoiler places pebble $w$ adjacent to $v$ and Duplicator updates $\gQ$, Duplicator should additionally ensure that $\{\{v,w\}\}\notin\gQ$. Similar game rule applies for local aggregation $\agg^\mathsf{L}_\mathsf{v}$.

After any round, Spoiler wins if pebble $u$ is adjacent to $v$ and $\{\{u,v\}\}\in\gQ$. In other words, Spoiler wins if there is a connected component in $\gQ$ with only one edge. Finally, Duplicator wins if Spoiler cannot win after any number of rounds.

Below, we will prove that the simplified pebbling game designed above is actually equivalent to the original pebbling game. Importantly, the simplified pebbling game is played on the base graph $F$ rather than the sophisticated (augmented) F{\"u}rer graphs and avoids the complicated vertex selection procedure (\cref{def:vertex_selection}), which greatly eases the analysis of players' strategies.

\begin{theorem}
    Let $\tilde G(F)$ and $\tilde H(F)$ be any augmented proper F{\"u}rer graph and its twisted version for some base graph $F$. For any WL algorithm $\mathsf{A}$ considered in this paper, Spoiler can win the corresponding pebbling game on graphs $\tilde G(F)$ and $\tilde H(F)$ if and only if he can win the simplified pebbling game on graph $F$.
\end{theorem}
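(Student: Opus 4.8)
The plan is to establish the equivalence by showing that a winning strategy for Spoiler (resp. Duplicator) in one game can be transformed into a winning strategy in the other. The crucial observation, already foreshadowed by \cref{thm:furer_pebble_match1,thm:furer_pebble_match2}, is that in the original pebbling game on $\tilde G(F)$ and $\tilde H(F)$ all the ``essential'' information Duplicator needs to track is: (i) the base vertex $B(\xi)$ of each pebble, and (ii) for any two pebbles sharing a base vertex $w$, the symmetric difference $\gT_G\triangle\gU_G$ compared against $\gT_H\triangle\gU_H$. By \cref{thm:furer_main_corollary}, the existence of a proper isomorphism fixing the current pebble placements is governed precisely by the parities $|\gP\cap\gE|$ over connected components $\gP\in\mathsf{CC}_\gS(F)$, where $\gS$ is the set of base vertices occupied. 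This is exactly the data $\gQ$ that Duplicator maintains in the simplified game. So the reduction is: the ``position'' in the simplified game is a faithful abstraction of the equivalence class of ``positions'' in the original game under proper isomorphism.

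\textbf{Key steps.} First I would set up a correspondence between configurations: given a configuration in the original game (pebbles placed on $\tilde G(F)$, $\tilde H(F)$, with Duplicator having so far maintained consistency via some proper isomorphisms), I associate the simplified configuration where pebbles $u,v$ (and transiently $w$) sit on the base vertices $B(\xi_G)=B(\xi_H)$, and $\gQ$ is the family of connected components $\gP$ of $\mathsf{CC}_\gS(F)$ on which the ``current twist parity'' is odd. I must check this $\gQ$ has odd cardinality iff the two graphs are genuinely twisted relative to the fixed pebble data — this uses \cref{thm:furer_single_nonisomorphic} (the total twist is odd for non-isomorphic $G(F),H(F)$) together with \cref{thm:furer_main_corollary}. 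Second, I would verify that each of Spoiler's atomic moves in the original game (the vertex-selection-based pebble moves from \cref{sec:pebbling_game}, specialized via \cref{thm:furer_pebble_match1,thm:furer_pebble_match2}) corresponds exactly to the simplified move with the same name, and that Duplicator's best response — choosing which new sub-components to keep in $\gQ$ to preserve parity — is precisely the $\gQ$-update rule: when a pebble is placed, a connected component may split, and \cref{thm:furer_main_corollary} says Duplicator can realize any assignment of twist-parities to the pieces subject to the total parity being preserved; when a pebble is removed, components merge and the parity of the merged component is the sum, matching the merge rule. Third, the winning condition: Spoiler wins the original game iff at some point the isomorphism type of the pebbled pair differs between the graphs; by \cref{thm:furer_pebble_match1,thm:furer_pebble_match2} and \cref{thm:furer_single_nonisomorphic}, once base vertices and symmetric differences are matched, a type mismatch on an \emph{adjacent} pair $u,v$ occurs iff the edge $\{u,v\}$ carries an odd twist and no further pebble separates it — i.e. iff $\{\{u,v\}\}$ is a singleton component lying in $\gQ$, which is exactly the simplified winning condition. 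The local-aggregation side constraint $\{\{v,w\}\}\notin\gQ$ mirrors the fact that in the original game Duplicator, responding to a local move along an edge, can always avoid being forced onto a twisted edge unless Spoiler has already surrounded it.

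\textbf{Main obstacle.} The hard part will be the bookkeeping in the direction ``Duplicator wins simplified $\Rightarrow$ Duplicator wins original.'' Here I must actually produce, round by round, genuine proper isomorphisms $\tilde G(F)\to\twist(G(F),\gE)$ (and for $\tilde H(F)$) realizing Duplicator's commitments, and show these can be updated consistently as pebbles move; this requires repeatedly invoking the construction in \cref{thm:furer_main} and \cref{thm:furer_main_corollary} and checking that the auxiliary chains $C_u$ pose no obstruction (they don't, since a proper isomorphism fixes each $\meta_F(u)$ setwise and the chains are rigid). I also need to handle the subtlety that Spoiler may place several pebbles on the same base vertex — here \cref{thm:furer_pebble_match2} and \cref{thm:furer_multiple_s_in_u} guarantee that Spoiler gains nothing, so in the simplified game we may safely collapse to tracking base vertices only. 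The converse direction (``Spoiler wins simplified $\Rightarrow$ Spoiler wins original'') is comparatively routine: Spoiler simply plays the simplified strategy lifted to the $(\,\cdot\,,\emptyset)$-meta-vertices, and \cref{thm:furer_pebble_match1,thm:furer_pebble_match2} force Duplicator into matching base vertices and symmetric differences, after which the singleton-component condition delivers the type mismatch. I would present the equivalence by proving two lemmas, one for each player's transfer of strategies, and then combine them.
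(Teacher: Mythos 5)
Your proposal follows essentially the same route as the paper's proof: using \cref{thm:furer_pebble_match1,thm:furer_pebble_match2} to reduce to base-vertex and symmetric-difference matching, invoking \cref{thm:furer_main_corollary} to show game states are characterized up to proper isomorphism by the twist-parity data $\gQ$ on $\mathsf{CC}_\gS(F)$, verifying move-by-move that placing/removing pebbles corresponds to the split/merge $\gQ$-update rules with odd parity preserved, and reading off the win condition from singleton components. The paper packages these steps slightly differently (a three-part exposition also covering the collapse of the vertex-selection step to singletons), but the underlying argument and the supporting lemmas you invoke are the same.
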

\begin{proof}
    In the original pebbling game, let $\tilde H(F)=\twist(\tilde G(F),\gE)$ for some $\gE$ with $|\gE|\bmod 2= 1$. Based on \cref{thm:furer_pebble_match1}, after any round we can assume that the pebbles $u,v$ are placed on $u=(\xi_G,\xi_H)$, $v=(\eta_G,\eta_H)$ with matching base vertices, i.e., we can denote $\xi_G=(x,\gT_x)$, $\xi_H=(x,\gU_x)$, $\eta_G=(y,\gT_y)$, $\eta_H=(y,\gU_y)$. We also assume that the condition of \cref{thm:furer_pebble_match2} holds when $x=y$. The proof is divided into the following parts. 

    \textbf{Part 1} (understanding the relationship between the two types of pebbling games). Consider two game states with different pebble positions:
    \begin{itemize}[topsep=0pt]
    \raggedright
    \setlength{\itemsep}{0pt}
        \item State 1: the positions of pebbles are $u=((x,\gT_x),(x,\gU_x^{(1)}))$, $v=((y,\gT_y),(y,\gU_y^{(1)}))$;
        \item State 2: the positions of pebbles are $u=((x,\gT_x),(x,\gU_x^{(2)}))$, $v=((y,\gT_y),(y,\gU_y^{(2)}))$.
    \end{itemize}
    In other words, the positions of pebbles on graph $\tilde G(F)$ are the same for the two states, but the positions of pebbles on graph $\tilde H(F)$ differ. By \cref{thm:furer_main_corollary}, there is a proper isomorphism $f$ from $\tilde H(F)$ to $\twist(\tilde H(F), \tilde \gE)$ for some $\tilde\gE$ with $|\tilde\gE|\bmod 2=0$, such that $f(x,\gU_x^{(1)})=(x,\gU_x^{(2)})$ and $f(y,\gU_y^{(1)})=(y,\gU_y^{(2)})$. Note that the second bullet of \cref{thm:furer_main_corollary} is satisfied since we assume that Duplicator follows the strategy of \cref{thm:furer_pebble_match2} and thus $\gU_x^{(1)}\triangle\gU_y^{(1)}=\gT_x\triangle\gT_y=\gU_x^{(2)}\triangle\gU_y^{(2)}$ when $x=y$. Now using \cref{thm:furer_main_corollary} again, there is a proper \emph{automorphism} $\tilde f$ of $\tilde H(F)$ satisfying $\tilde f(x,\gU_x^{(1)})=(x,\gU_x^{(2)})$ and $\tilde f(y,\gU_y^{(1)})=(y,\gU_y^{(2)})$, if and only if $|\tilde\gE\cap\gP|\bmod 2=0$ for all $\gP\in\mathsf{CC}_{\{x,y\}}(F)$. In other words, if $|\tilde\gE\cap\gP|\bmod 2=0$ for all $\gP\in\mathsf{CC}_{\{x,y\}}(F)$, then the two states are equivalent.
    
    Similarly, since $\tilde H(F)=\twist(\tilde G(F),\gE)$, one can also find for each $i=1,2$ a proper isomorphism $f_i$ from $\tilde G(F)$ to $\twist(\tilde H(F), \tilde \gE_i)$, such that $f(x,\gT_x)=(x,\gU_x^{(i)})$ and $f(y,\gT_y)=(y,\gU_y^{(i)})$. Based on the above analysis, whether Spoiler can win the game at state $i$ will thus depend purely on the set
    \begin{align*}
        \gQ_i&=\{\gP\in\mathsf{CC}_{\{x,y\}}(F):|\gE_i\cap\gP|\bmod 2=0\}\\
        &=\{\gP\in\mathsf{CC}_{\{x,y\}}(F):|(\gE\triangle\gE_i)\cap\gP|\bmod 2=1\}.
    \end{align*}
    Namely, if $\gQ_1=\gQ_2$, then the two states are equivalent. This is why in the simplified pebbling game Duplicator only maintains the set $\gQ$, which has a similar meaning to $\gQ_i$.
    
    \textbf{Part 2} (regarding vertex selection). We show vertex selection in \cref{def:vertex_selection} can be simplified to satisfy $|\gS^\mathsf{S}|=|\gS^\mathsf{D}|=1$. First, when $\gS^\mathsf{S}$ contains multiple vertices that correspond to different base vertices in $F$, Duplicator must respond by matching each base vertex separately and merging them to obtain $\gS^\mathsf{D}$, otherwise Spoiler can win according to \cref{thm:furer_pebble_match1}. When Duplicator follows this strategy, there is no reason for Spoiler to choose multiple base vertices. Next, when $\gS^\mathsf{S}$ contains multiple vertices that correspond to the same base vertex in $F$, Duplicator must match each $(x,\gX)\in\gS^\mathsf{S}$ with $(x,\gX\triangle\gD)\in\gS^\mathsf{D}$ by selecting a set $\gD$ (according to \cref{thm:furer_pebble_match2}). In this case, Spoiler still does not gain an additional benefit by selecting $\gS^\mathsf{S}$ with multiple elements. Moreover, it does not make any difference whether Spoiler chooses to move pebbles on $\tilde G(F)$ or on $\tilde H(F)$. Therefore, the pebbling game can be simplified so that Spoiler directly moves a pebble in $\tilde G(F)$ and Duplicator responds by moving the corresponding pebble in $\tilde H(F)$. (Nevertheless, note that the vertex selection procedure is still necessary when dealing with auxiliary vertices as in the proof of \cref{thm:furer_pebble_match1}).

    \textbf{Part 3} (equivalence between updating pebble positions and updating $\gQ$ for Duplicator). Suppose that in a certain round Spoiler places a pebble $w$ to vertex $(z,\gT_z)$ in $\tilde G(F)$ . If the placement of $w$ does not increase the number of connected components, then no matter how Duplicator responds by replacing the other pebble $w$ to $(z,\gU_z)$ in $\tilde H(F)$, the game is equivalent due to Part 1 and the set $\gQ$ should not change, which coincides with the game rule. If the placement of $w$ increases the number of connected components, then how Duplicator chooses the position of the other pebble $w$ will matter. Suppose Duplicator places the other pebble $w$ on the vertex $(z,\gU_z)$, then the value of $\gT_z\triangle\gU_z$ determines the update of $\gQ$ by \cref{thm:furer_main_corollary}. Conversely, each possible game rule for updating $\gQ$ also corresponds to at least one feasible position $(z,\gU_z)$. 

    For the local aggregation $\agg^\mathsf{L}_\mathsf{u}$, there is an additional restriction that the pebble $w$ should be adjacent to pebble $v$. Clearly, the presence of $w$ will make a new connected component $\{\{v,w\}\}$. It is easy to see that $\{\{v,w\}\}\notin\gQ$, otherwise pebble $w$ is not adjacent to pebble $v$ in $\tilde H(F)$. For localized FWL aggregations, although pebble $w$ should also be placed in the neighborhood of some pebble (e.g., $w\in\gN_{\tilde G(F)}^1(v)$), we may not add this restriction for Duplicator, because if Duplicator does not obey the game rule, Spoiler can always win after this round by swapping a pair of pebbles (e.g., swapping $u$ and $w$) such that the isomorphism types of pebbles $u$ and $v$ differ between $\tilde G(F)$ and $\tilde H(F)$.

    Similarly, when Spoiler places a pebble $w$ outside the graph $\tilde G(F)$, the connected components may merge, and $\gQ$ should be updated accordingly while preserving the parity of its size. This matches the design of the simplified pebbling game. Finally, if Spoiler swaps a pair of pebbles, all connected components remains unchanged, so Duplicator does nothing in the simplified pebbling game.
\end{proof}

\subsection{Concrete constructions}
\label{sec:proof_separation_part3}

In this section, we give concrete constructions to prove all results of \cref{thm:separation}. We split the proof into a collection of lemmas. All the proofs are based on constructing base graphs $F$ and studying the simplified pebbling game developed in \cref{sec:pebbling_furer} on $F$.

\textbf{Illustration}. For clarity, we illustrate the proof of each lemma with a set of figures (\cref{fig:counterexample_pswlsv,fig:counterexample_swlvs,fig:counterexample_gswl,fig:counterexample_lfwl_sswl,fig:counterexample_slfwl,fig:counterexample_slfwl_2,fig:counterexample_fwl,fig:counterexample_pooling}). In each of these figures, the node in orange/green/purple responds to the vertex that holds pebble $u$/$v$/$w$, respectively. We use bold red edges to denote connected components in $\gQ$ chosen by Duplicator.

\begin{figure}[!t]
    \small
    \centering
    \begin{tabular}{c|ccc}
         & \includegraphics[width=0.225\textwidth]{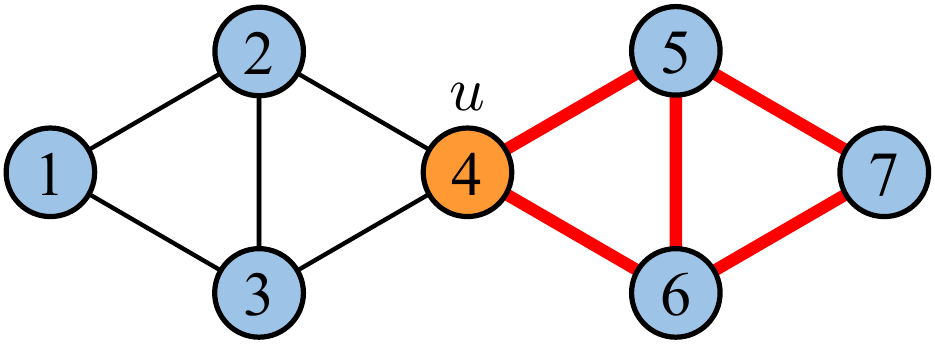} & \includegraphics[width=0.225\textwidth]{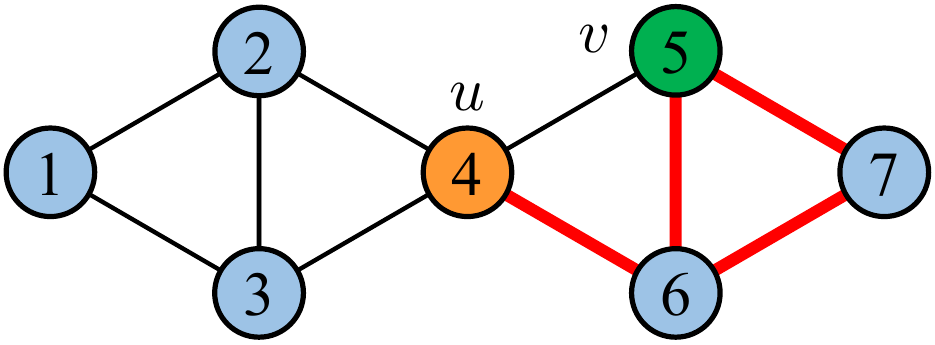} & \includegraphics[width=0.225\textwidth]{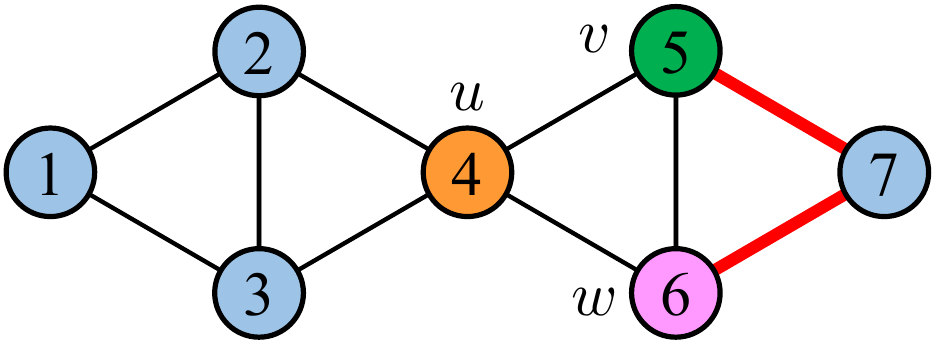}\\
        & (a) & (b) & (c)\\
         & \includegraphics[width=0.225\textwidth]{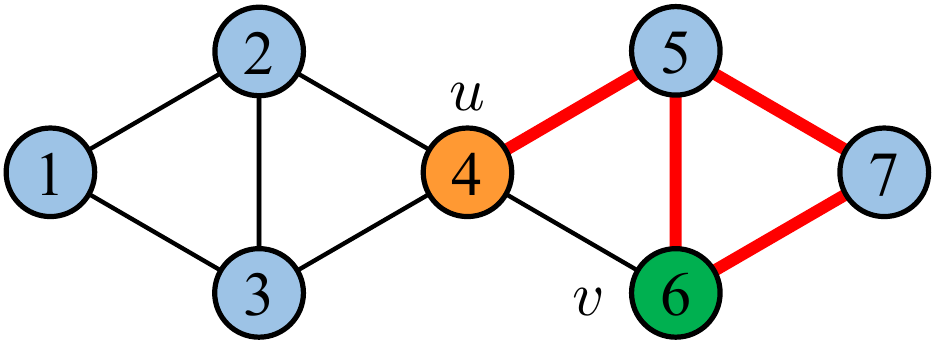} & \includegraphics[width=0.225\textwidth]{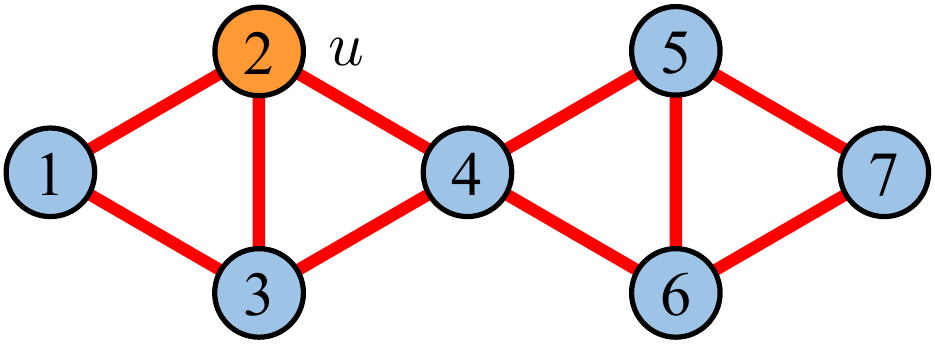} & \includegraphics[width=0.225\textwidth]{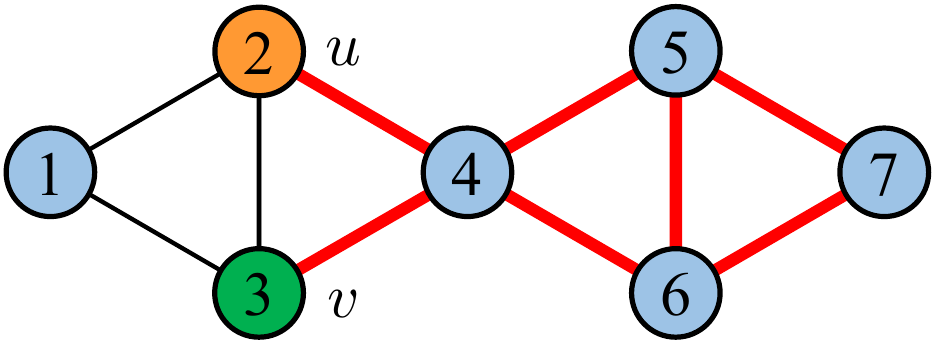}\\
        & (d) & (e) & (f)\\
        \includegraphics[width=0.225\textwidth]{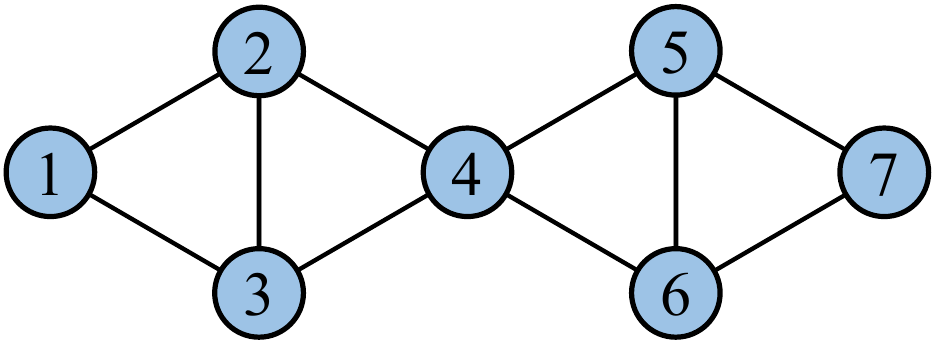} & \includegraphics[width=0.225\textwidth]{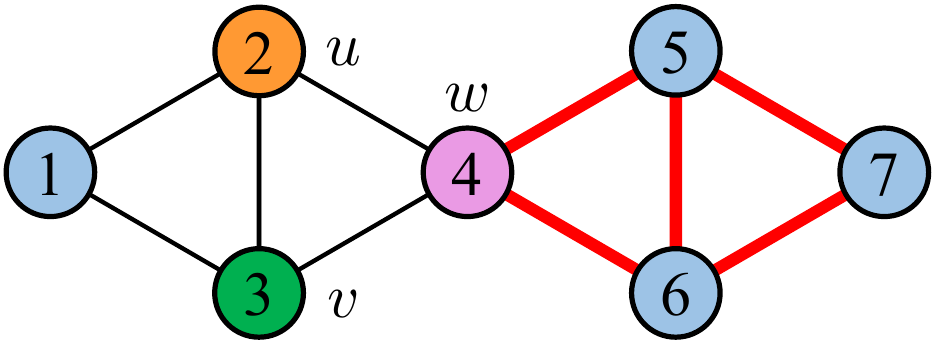} & \includegraphics[width=0.225\textwidth]{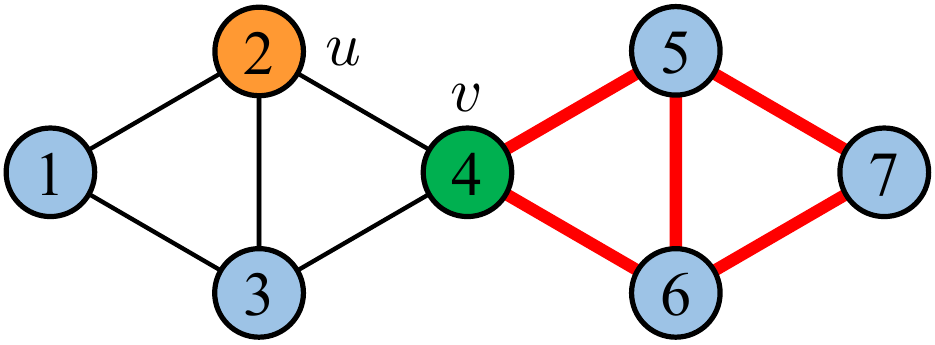} & \includegraphics[width=0.225\textwidth]{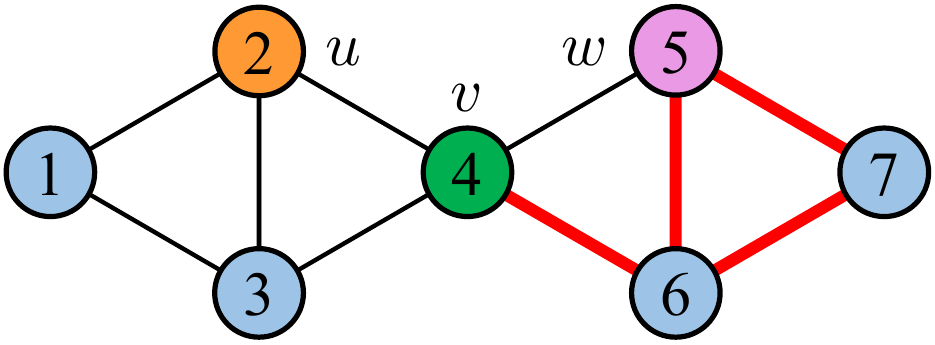} \\ 
        Base graph & (g) & (h) & (i)\\
         & \includegraphics[width=0.225\textwidth]{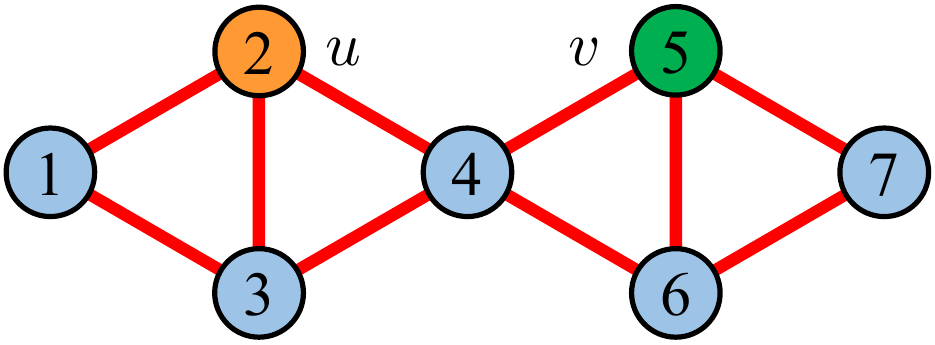} & \includegraphics[width=0.225\textwidth]{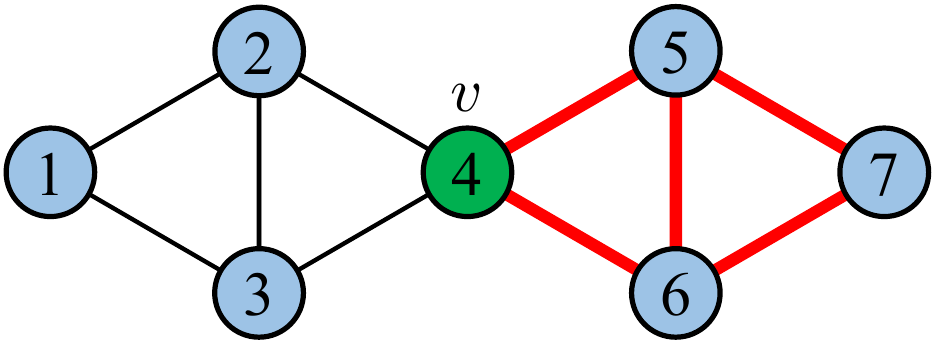} & \includegraphics[width=0.225\textwidth]{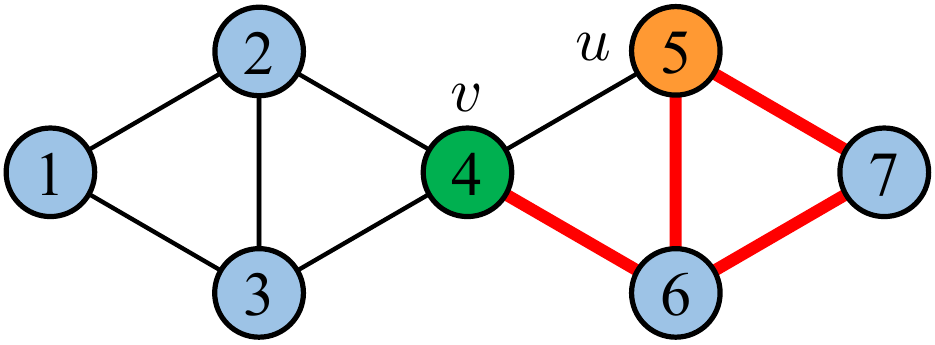} \\ 
        & (j) & (k) & (l)\\
         & \includegraphics[width=0.225\textwidth]{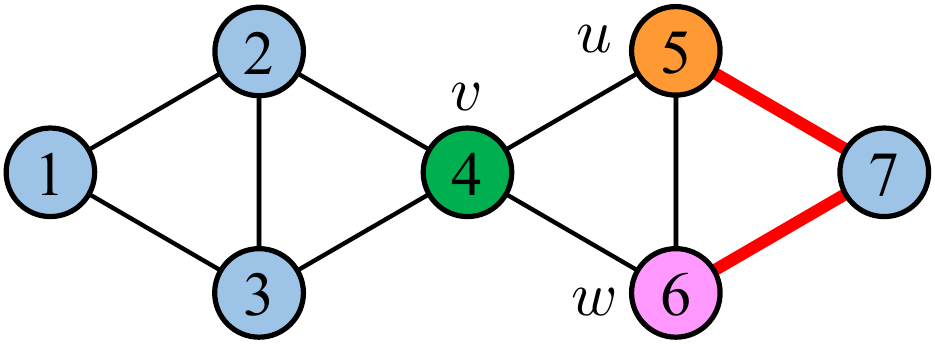} & \includegraphics[width=0.225\textwidth]{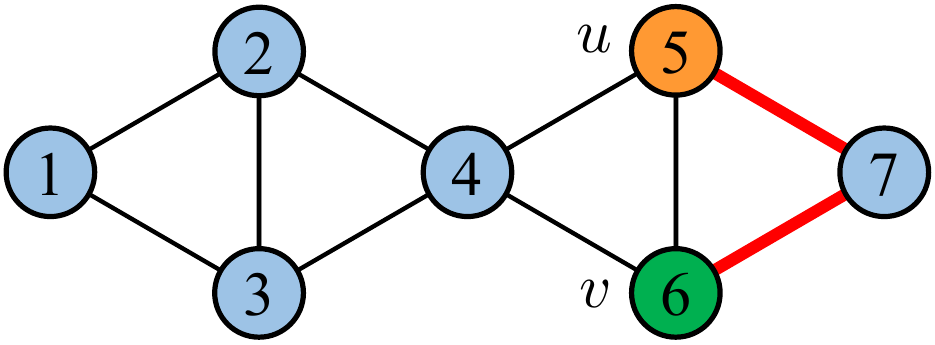} & \includegraphics[width=0.225\textwidth]{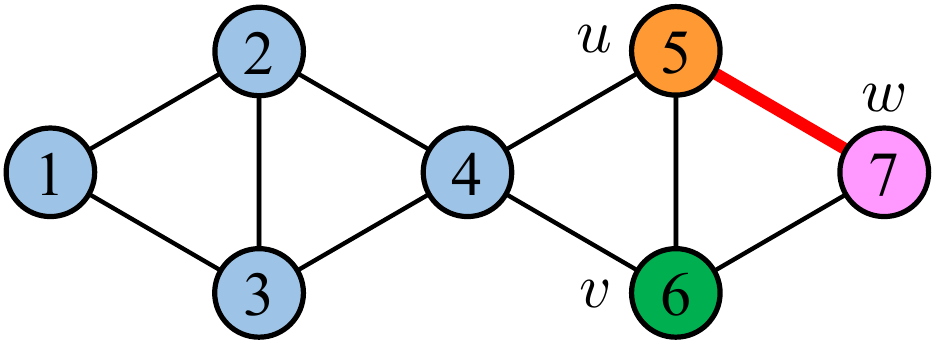} \\ 
        & (m) & (n) & (o)\\
    \end{tabular}
    \vspace{-5pt}
    \caption{Illustration of the proof of \cref{thm:counterexample_pswlsv}. When Duplicator follows her optimal strategy, the game process of $\mathsf{SWL(VS)}$ corresponds to a sequence of figures, such as (a, b, c, d, ...), (e, f, g, h, i, j, ...), or (e, h, i, j, ...), depending how Spoiler plays. In all cases, Spoiler cannot win. The game process of $\mathsf{PSWL(VS)}$ is similar. In contrast, the game process of $\mathsf{SWL(SV)}$ corresponds to figures (k, l, m, n, o) and Spoiler eventually wins as shown in figure (o).}
    \vspace{-5pt}
    \label{fig:counterexample_pswlsv}
\end{figure}
\begin{lemma}
\label{thm:counterexample_pswlsv}
    There exist two non-isomorphic graphs such that 
    \begin{itemize}[topsep=0pt]
    \setlength{\itemsep}{0pt}
        \item $\mathsf{SWL(SV)}$ can distinguish them;
        \item $\mathsf{SWL(VS)}$ cannot distinguish them;
        \item $\mathsf{PSWL(VS)}$ cannot distinguish them.
    \end{itemize}
\end{lemma}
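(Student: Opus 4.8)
\textbf{Proof plan for \cref{thm:counterexample_pswlsv}.}

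The strategy is to exhibit a concrete base graph $F$ satisfying the properties of a proper F\"urer graph, and to take $G=\tilde G(F)$ and $H=\tilde H(F)$ (augmented proper F\"urer graph and its twist). By \cref{thm:furer_single_nonisomorphic} (together with \cref{thm:furer_connectivity}) these two graphs are non-isomorphic. By the equivalence theorems in \cref{sec:pebbling_furer}, to prove the three bullets it suffices to analyze the \emph{simplified} pebbling game on the small base graph $F$: I must show that Spoiler wins the $\mathsf{SWL(SV)}$ game on $F$ but loses both the $\mathsf{SWL(VS)}$ game and the $\mathsf{PSWL(VS)}$ game on $F$. The base graph $F$ I would use is the one depicted in the leftmost panel of \cref{fig:counterexample_pswlsv}: a small graph with a vertex of degree $\geq 3$ and all degrees $\geq 2$, designed so that the \emph{order} in which pebbles $u$ and $v$ are first placed (which differs between $\mathsf{VS}$ and $\mathsf{SV}$ pooling) is exactly what determines whether Spoiler can isolate a single-edge connected component in $\gQ$.

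For the positive direction ($\mathsf{SWL(SV)}$ distinguishes $G$ and $H$), I would describe Spoiler's winning line explicitly, following panels (k)--(o) of \cref{fig:counterexample_pswlsv}: since $\mathsf{SV}$ pooling places pebble $v$ first, Spoiler places $v$ at a carefully chosen vertex, then places $u$, then uses a sequence of $\agg^\mathsf{L}_\mathsf{u}$-moves (each moving pebble $v$ along an edge while the Duplicator-maintained parity-odd set $\gQ$ is forced to shrink) until a connected component of $\gQ$ has become the single edge $\{\{u,v\}\}$ with $u$ adjacent to $v$ — at which point Spoiler wins by definition of the simplified game. The key point is that the initial placement of $v$ "pins" a vertex that cuts $F$ favorably, and the $\mathsf{SV}$ order lets Spoiler choose this pin before committing pebble $u$.

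For the two negative directions I would instead exhibit a Duplicator strategy. For $\mathsf{SWL(VS)}$: because $u$ is placed first, whatever vertex Spoiler pins with $u$, there remains enough symmetry in $F$ (a connected component of $\mathsf{CC}_{\{u\}}(F)$ containing at least two edges and "wrapping around") that Duplicator can always keep $|\gQ|$ odd while never being forced to produce a singleton edge incident to the current $u,v$ positions; I would verify this by a short case analysis over the finitely many Spoiler moves, organized as in panels (a)--(j), showing an invariant like "every connected component in $\gQ$ incident to the pebbled vertices has $\geq 2$ edges." For $\mathsf{PSWL(VS)}$ the only extra Spoiler power is the $\agg^\mathsf{P}_\mathsf{vv}$ move (moving $v$ to $u$); I would argue this move does not help — after it, both pebbles sit on the same vertex, $\gQ$ is unchanged, and Spoiler is in a strictly worse position than before — so the same Duplicator invariant survives. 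The main obstacle I anticipate is getting the base graph $F$ small enough that the case analysis is genuinely finite and checkable, yet rich enough that the $\mathsf{VS}$/$\mathsf{SV}$ asymmetry bites; once $F$ is fixed, verifying the Duplicator invariant for $\mathsf{SWL(VS)}$ (and confirming $\agg^\mathsf{P}_\mathsf{vv}$ is useless) is the most delicate bookkeeping, since I must rule out \emph{all} Spoiler lines, not just the obvious ones.
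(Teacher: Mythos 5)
Your overall plan faithfully mirrors the paper's proof: pick the base graph $F$ from \cref{fig:counterexample_pswlsv}, take the augmented F\"urer graph and its twist, invoke \cref{thm:furer_single_nonisomorphic} for non-isomorphism, reduce to the simplified pebbling game on $F$, show Spoiler wins under $\mathsf{SWL(SV)}$ following panels (k)--(o), and show Duplicator wins under $\mathsf{SWL(VS)}$ by casework over where $u$ is first pinned. That part is sound.

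The gap is in your treatment of $\mathsf{PSWL(VS)}$. You describe the extra move as ``the $\agg^\mathsf{P}_\mathsf{vv}$ move (moving $v$ to $u$),'' but this has the direction reversed. From \cref{def:swl}, $\agg^\mathsf{P}_\mathsf{vv}$ aggregates $\chi(v,v)$, and in the pebbling game this corresponds to moving pebble $u$ to the position of pebble $v$ (the symmetric counterpart of the stated rule for $\agg^\mathsf{P}_\mathsf{uu}$, which moves $v$ to $u$). The distinction is not cosmetic. Moving $v$ to $u$ would indeed be a nearly useless move, since $v$ is freely relocatable by $\agg^\mathsf{L}_\mathsf{u}$ steps anyway, so your ``Spoiler is strictly worse off, $\gQ$ unchanged, pebbles colocated'' argument would be plausible. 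But moving $u$ to $v$ is a genuinely \emph{new} power: in $\mathsf{SWL}$ the $u$ pebble is immobile once placed, so $\agg^\mathsf{P}_\mathsf{vv}$ lets Spoiler re-pin $u$ mid-game after $v$ has wandered. It is simply false that Spoiler is ``strictly worse off'' after such a move; re-pinning $u$ closer to the current action could, in principle, open up new splitting attacks. What actually closes this direction (and what the paper's proof does) is to observe that after Spoiler moves $u$ to $v$'s position, the game state is equivalent to one of the initial-placement cases you already analyzed for $\mathsf{SWL(VS)}$ — e.g.\ the state in \cref{fig:counterexample_pswlsv}(a) or (e) with $v$ coinciding with $u$ — and Duplicator's winning strategy from that case applies verbatim. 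So your Duplicator invariant does survive, but it needs to be argued by reduction to the already-handled cases, not by claiming the move is self-defeating. You should fix the direction of the move and replace the ``strictly worse'' reasoning with this reduction.
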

\begin{proof}
    The base graph is constructed in \cref{fig:counterexample_pswlsv}. We separately consider each algorithm.

    We first analyze the simplified pebbling game for algorithm $\mathsf{SWL(VS)}$. Initially, Spoiler should first place pebble $u$ on some vertex. Due to the symmetry of the graph, there are three cases: vertex 4, vertex 2, and vertex 1. We separately consider each case below:
    \begin{itemize}[topsep=0pt]
    \setlength{\itemsep}{0pt}
        \item If Spoiler places pebble $u$ on vertex 4, then the graph is split into two connected components. By symmetry, without loss of generality suppose Duplicator selects the component at the right of $u$ (\cref{fig:counterexample_pswlsv}(a)). Next, Spoiler will place pebble $v$ on some vertex. Clearly, his best strategy is to choose vertex 5 (or equivalently, vertex 6), which can further split the connected component into two parts. Duplicator has to respond by choosing the larger part (\cref{fig:counterexample_pswlsv}(b)). In the next round, according to the game rule, Spoiler should place pebble $w$ adjacent to pebble $v$. He'd better place it on vertex 6. Duplicator can respond appropriately without losing the game (\cref{fig:counterexample_pswlsv}(c)). Then Spoiler swaps pebbles $v$ and $w$ and leaves $w$ outside the graph. It can be seen that multiple connected components are then merged into a larger one, yielding \cref{fig:counterexample_pswlsv}(d). Now the game state is equivalent to \cref{fig:counterexample_pswlsv}(b) by symmetry. It is easy to see that Spoiler can never win the game.
        \item If Spoiler places pebble $u$ on vertex 2, then the connected component remains unchanged, so Duplicator just does nothing (\cref{fig:counterexample_pswlsv}(e)). Next, Spoiler will place vertex $v$ on some vertex, e.g., vertex 3 or vertex 4. Regardless of where he places pebble $v$, Duplicator's strategy is always to choose the rightmost connected component (see \cref{fig:counterexample_pswlsv}(f) and \cref{fig:counterexample_pswlsv}(h) for the two cases). First consider the case when pebble $v$ is placed on vertex 3 (\cref{fig:counterexample_pswlsv}(f)). In the next round, Spoiler should place pebble $w$ adjacent to pebble $v$. He'd better place it on vertex 4 to further split the connected component. Duplicator just responds by selecting again the rightmost component as shown in \cref{fig:counterexample_pswlsv}(g). Spoiler then swaps pebbles $v$ and $w$ and leaves $w$ outside the graph. It can be seen that the game returns to \cref{fig:counterexample_pswlsv}(h)). When Spoiler continues to place pebble $w$ adjacent to pebble $v$, Duplicator again responds by updating the connected component (\cref{fig:counterexample_pswlsv}(i)). However, when Spoiler swaps pebbles $v$ and $w$ and leaves $w$ outside the graph, multiple connected components then merges into a whole, as shown in \cref{fig:counterexample_pswlsv}(j). Clearly, Spoiler cannot win the game as well.
        \item If Spoiler places pebble $u$ on vertex 1, we can similarly prove that Spoiler cannot win the game. Actually, placing pebble $u$ on vertex 1 is clearly not optimal.
    \end{itemize}
    We next analyze the simplified pebbling game for algorithm $\mathsf{PSWL(VS)}$, which is similar to $\mathsf{SWL(VS)}$ except that Spoiler has the additional ability to move pebble $u$ to the position of pebble $v$. However, when Spoiler performs this operation, the resulting game will simply be equivalent to the three cases studied above, e.g., \cref{fig:counterexample_pswlsv}(a) or \cref{fig:counterexample_pswlsv}(e), except that pebble $v$ is also present and coincides with $u$. As already proved above, Spoiler cannot win the game.

    We finally analyze the simplified pebbling game for algorithm $\mathsf{SWL(SV)}$. In the beginning, Spoiler can first place pebble $v$ on vertex 4, and suppose Duplicator chooses the connected component at the right of $v$ (\cref{fig:counterexample_pswlsv}(k)). Spoiler can then place pebble $u$ on vertex 5 to further split this connected component, and Duplicator has to respond by choosing the rightmost component (\cref{fig:counterexample_pswlsv}(l)). In the next round, Spoiler can place $w$ on vertex 6. Duplicator has not lost the game yet (see \cref{fig:counterexample_pswlsv}(m)). Then it comes to the major difference: when Spoiler swaps pebbles $v$ and $w$ and leaves $w$ outside the graph, the rightmost connected component is \emph{not} merged into a larger one due to the position of pebbles $u,v$ (see \cref{fig:counterexample_pswlsv}(n)). Therefore, in the next round, Spoiler can further use pebble $w$ to split the component as shown in \cref{fig:counterexample_pswlsv}(o), and Duplicator has no choice other than selecting the connected component $\{\{5,7\}\}$. Duplicator loses the game after this round.
\end{proof}

\textbf{Insight into \cref{thm:counterexample_pswlsv}}. The reason why $\mathsf{SWL(SV)}$ is stronger lies in the fact that Spoiler can specify the position of pebble $u$ \emph{after seeing Duplicator's response} because pebble $v$ is first placed before pebble $u$ is placed. In this way, Spoiler can exploit such information to better choose the position of pebble $u$. Importantly, note that pebble $u$ cannot be moved easily according to the game rule, therefore determining its position later may have additional benefits.

\begin{figure*}[!t]
    \small
    \centering
    \begin{tabular}{c|ccc}
         & \includegraphics[width=0.225\textwidth]{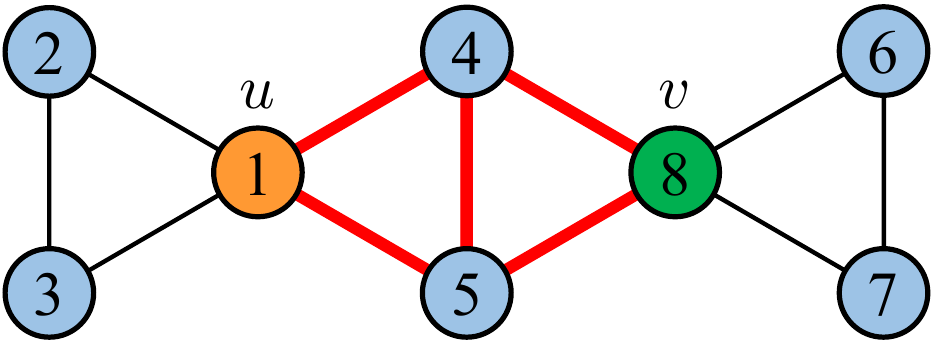} & \includegraphics[width=0.225\textwidth]{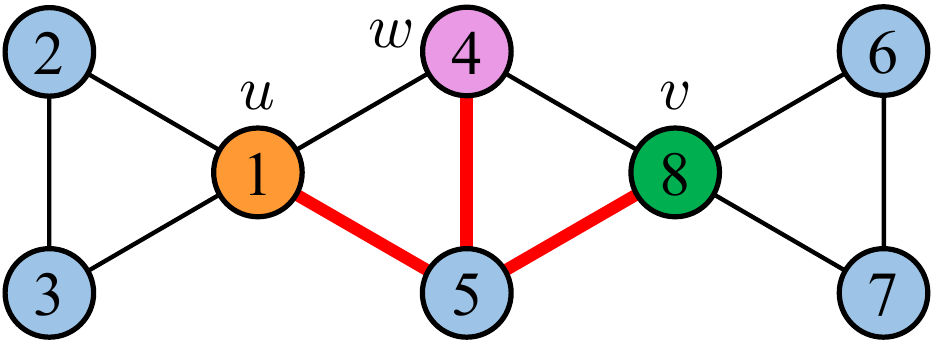} & \includegraphics[width=0.225\textwidth]{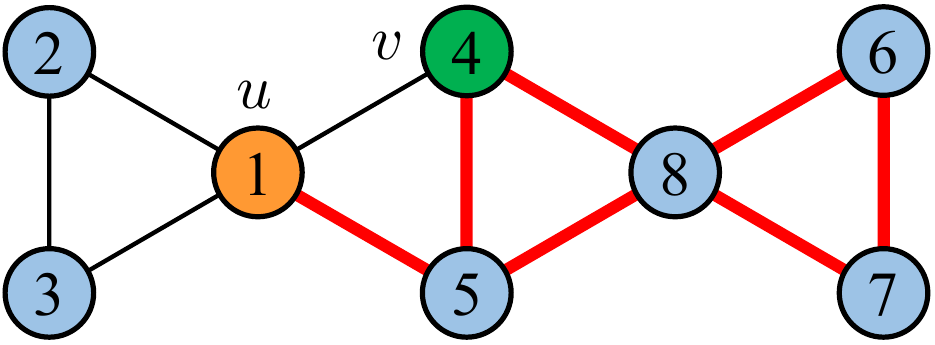}\\
        & (a) & (b) & (c)\\
        \includegraphics[width=0.225\textwidth]{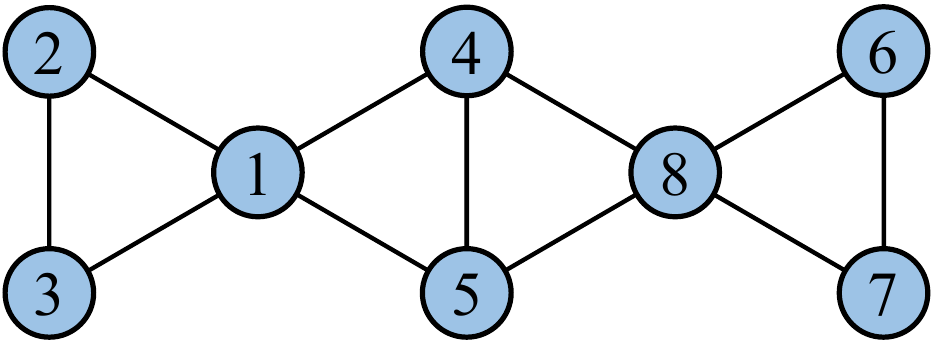} & \includegraphics[width=0.225\textwidth]{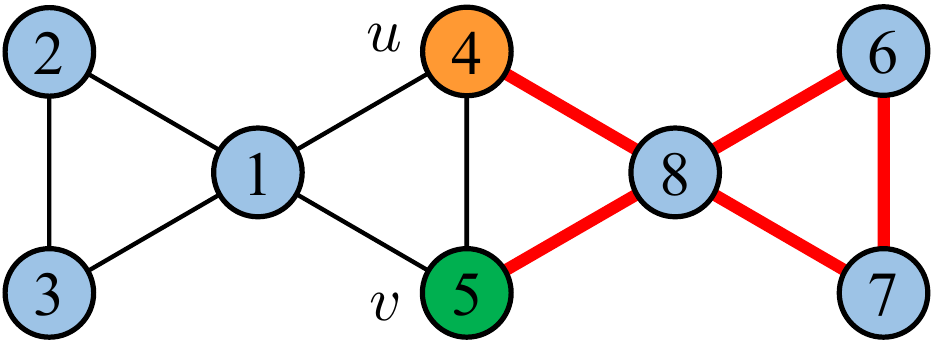} & \includegraphics[width=0.225\textwidth]{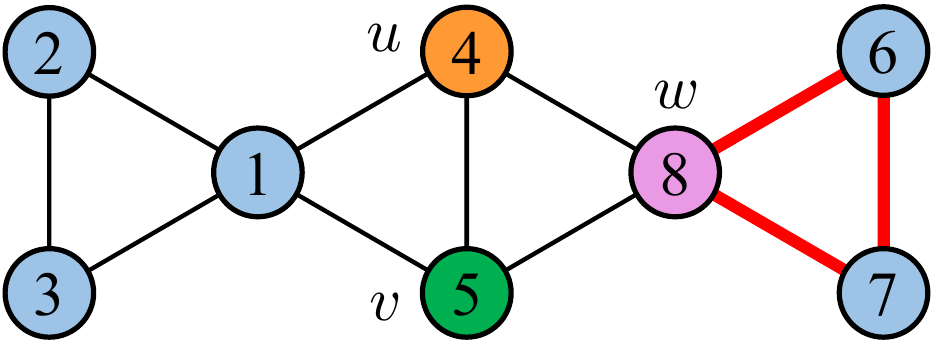} & \includegraphics[width=0.225\textwidth]{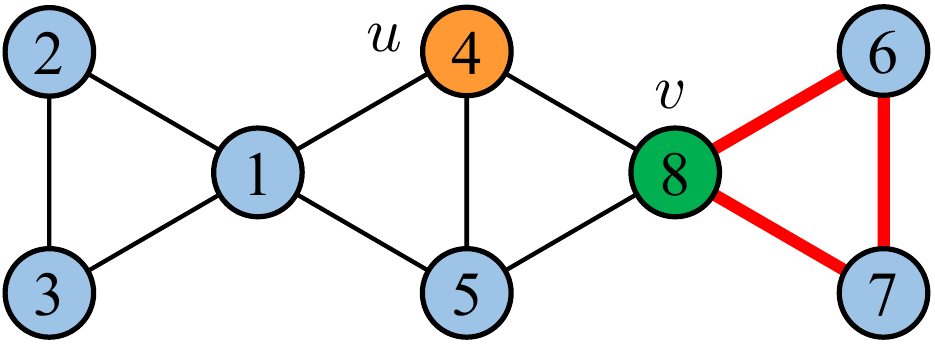}\\
        Base graph & (d) & (e) & (f)\\
         & \includegraphics[width=0.225\textwidth]{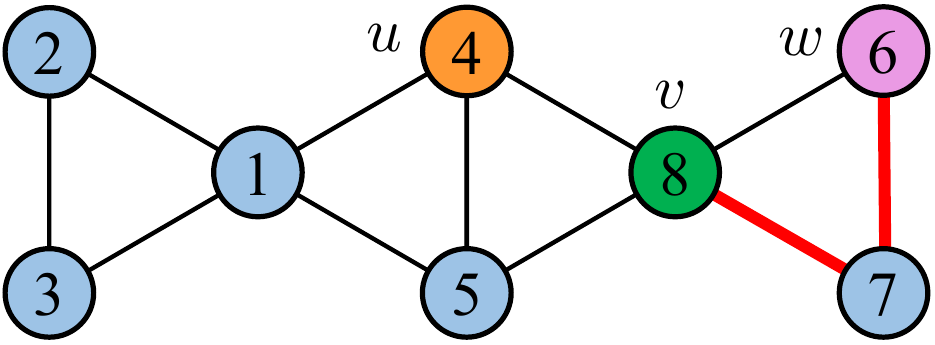} & \includegraphics[width=0.225\textwidth]{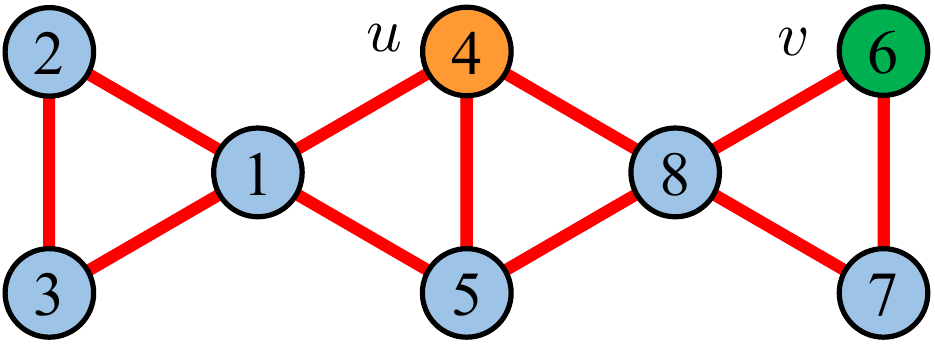} & \includegraphics[width=0.225\textwidth]{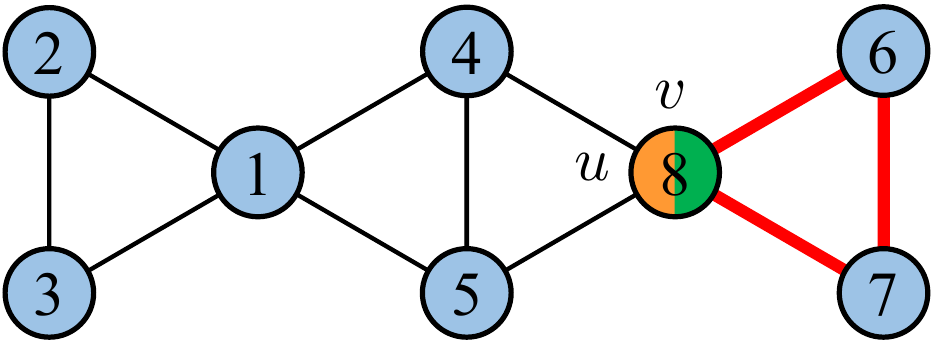} \\
         & (g) & (h) & (i)\\
         & \includegraphics[width=0.225\textwidth]{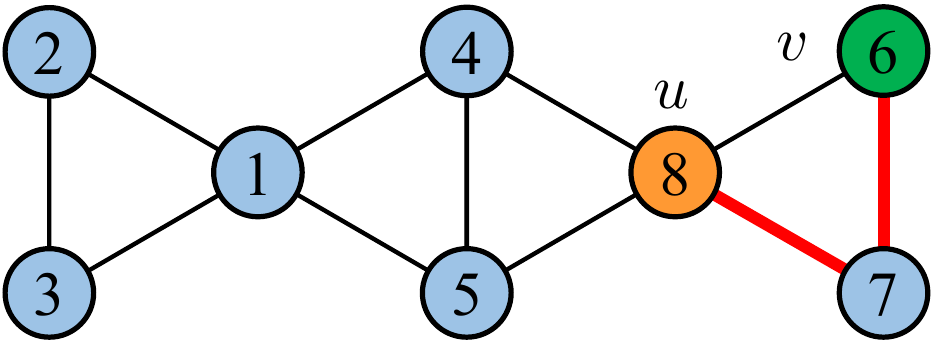} & \includegraphics[width=0.225\textwidth]{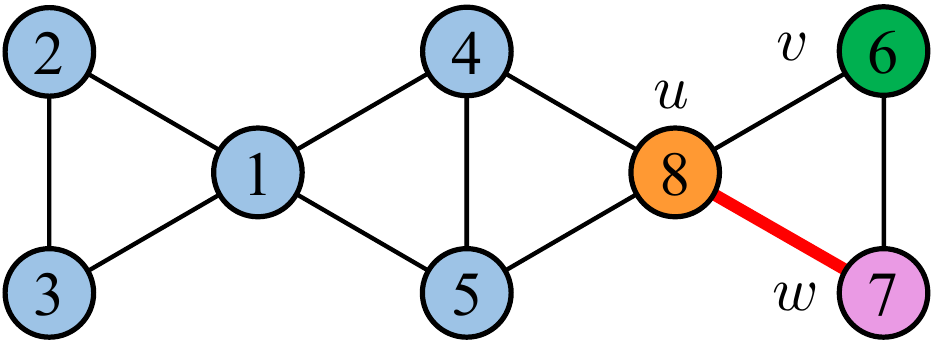} \\ 
         & (j) & (k)
    \end{tabular}
    \vspace{-5pt}
    \caption{Illustration of the proof of \cref{thm:counterexample_swlvs}. When Duplicator follows her optimal strategy, the game process of $\mathsf{SWL(VS)}$ (or $\mathsf{SWL(SV)}$) may correspond to figures (a, b, c, ...) or figures (d, e, f, g, h, ...) depending on how Spoiler chooses the initial positions of pebbles $u,v$. In both cases, Spoiler cannot win. In contrast, the game process of $\mathsf{PSWL(VS)}$ corresponds to figures (d, e, f, i, j, k) and Spoiler eventually wins in figure (k).}
    \label{fig:counterexample_swlvs}
    \vspace{-5pt}
\end{figure*}
\begin{lemma}
\label{thm:counterexample_swlvs}
    There exist two non-isomorphic graphs such that 
    \begin{itemize}[topsep=0pt]
    \setlength{\itemsep}{0pt}
        \item $\mathsf{PSWL(VS)}$ can distinguish them;
        \item $\mathsf{SWL(VS)}$ cannot distinguish them;
        \item $\mathsf{SWL(SV)}$ cannot distinguish them.
    \end{itemize}
\end{lemma}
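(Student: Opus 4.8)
The plan is to mirror the strategy used in \cref{thm:counterexample_pswlsv}: construct a proper base graph $F$, consider the simplified pebbling game on $F$ (justified by the results of \cref{sec:pebbling_furer}), and show that Spoiler wins the game for $\mathsf{PSWL(VS)}$ but loses both the $\mathsf{SWL(VS)}$ and $\mathsf{SWL(SV)}$ games. The separating feature we want to isolate is the single-point aggregation $\agg^\mathsf{P}_{\mathsf{vv}}$ (equivalently, by \cref{thm:symmetry}, the ability in the game to reset pebble $v$ onto the position of pebble $u$, carrying the connected-component bookkeeping with it). The base graph $F$ should be designed so that, after Spoiler ``fences in'' a region with pebbles $u$ and $v$, he needs one extra maneuver — collapsing $v$ back onto $u$ and then re-expanding — to drive the surviving connected component down to a single twisted edge, and this maneuver is available only to $\mathsf{PSWL}$. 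Concretely I would take a graph with a central high-degree vertex (to make it proper) attached to a short ``corridor'' of degree-two vertices terminating in a small cycle, arranged so that a plain local-only walk of pebble $v$ merges components back together each time it retreats (exactly the obstruction seen in \cref{fig:counterexample_pswlsv}(d,j)), whereas the $\agg^\mathsf{P}_{\mathsf{vv}}$ move lets Spoiler ``teleport'' $v$ to $u$ without triggering that merge, preserving a small component he can then split off.

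The key steps, in order, are: (i) exhibit the base graph $F$ (the construction in \cref{fig:counterexample_swlvs}) and verify it is proper — connected, minimum degree $\ge 2$, with at least one vertex of degree $\ge 3$ — so that \cref{thm:furer_connectivity} and the whole machinery of \cref{sec:pebbling_furer} apply, and so that $G(F)$ and $H(F)$ are genuinely non-isomorphic by \cref{thm:furer_single_nonisomorphic}; (ii) analyze the $\mathsf{SWL(VS)}$ game: enumerate Spoiler's essentially distinct choices for the initial placement of $u$ then $v$ (up to the automorphisms of $F$), and for each, describe Duplicator's response — always retaining the largest partitioned component to keep $|\gQ|$ odd — and show that every subsequent $\agg^\mathsf{L}_{\mathsf{u}}$-walk of $v$ followed by a swap-and-remove of $w$ merges components back to a state equivalent to an earlier one, so Spoiler never reaches a singleton component; (iii) handle $\mathsf{SWL(SV)}$ by the same case analysis, noting only that $v$ is now placed before $u$, which (as in \cref{thm:counterexample_pswlsv}) does not help here because the relevant obstruction is the component-merging on retreat, not the ordering of the initial placements — alternatively invoke \cref{thm:pooling} if $\gA=\{\agg^\mathsf{L}_{\mathsf{u}}\}$ lets us equate $\mathsf{SWL(VS)}$ and $\mathsf{SWL(SV)}$ directly (it does not in general, so the explicit argument is needed, but it is symmetric to step (ii)); (iv) analyze the $\mathsf{PSWL(VS)}$ game on the same $F$: give the explicit winning line for Spoiler (the sequence in \cref{fig:counterexample_swlvs}(d,e,f,i,j,k)), where at the crucial moment Spoiler uses the $\agg^\mathsf{P}_{\mathsf{vv}}$ move to put $v$ on $u$, then re-expands $v$ to peel off a connected component consisting of a single edge, after which Duplicator has no legal response and loses; (v) translate back: by \cref{thm:swl_pebble_game} and the equivalence of simplified and original games, $\mathsf{PSWL(VS)}$ distinguishes $\tilde G(F)$ from $\tilde H(F)$ while $\mathsf{SWL(VS)}$ and $\mathsf{SWL(SV)}$ do not.

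The main obstacle I anticipate is step (ii)/(iii) — proving the \emph{negative} direction, i.e., that Duplicator has a winning strategy in the $\mathsf{SWL}$ games. Unlike the positive direction, which only requires exhibiting one line of play, the negative direction must cover \emph{all} of Spoiler's strategies, and the pebbling game has unbounded length, so the argument needs a finiteness/periodicity observation: one shows the reachable game states (modulo the equivalence from Part~1 of the proof in \cref{sec:pebbling_furer}, which identifies states with the same $\gQ$ relative to $\mathsf{CC}_{\{x,y\}}(F)$) form a finite set closed under Spoiler's moves and never containing a singleton component. Designing $F$ so that this invariant is both true and easy to check — in particular so that the ``retreat merges components'' phenomenon is forced whenever Spoiler lacks the $\agg^\mathsf{P}_{\mathsf{vv}}$ move — is the delicate part, and it is why the construction must be chosen carefully rather than taken off the shelf. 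A secondary subtlety is making sure the same $F$ simultaneously defeats $\mathsf{SWL(SV)}$; here the corridor should be symmetric enough that swapping the roles of the first-placed and second-placed pebble changes nothing essential, which I would verify directly in the case analysis.
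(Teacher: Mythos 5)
Your overall plan matches the paper's proof exactly: fix a proper base graph $F$ (the one in \cref{fig:counterexample_swlvs}), reduce to the simplified pebbling game of \cref{sec:pebbling_furer}, show Duplicator wins under $\mathsf{SWL(VS)}$ and $\mathsf{SWL(SV)}$ by a case analysis over Spoiler's initial placements, show Spoiler wins under $\mathsf{PSWL(VS)}$ by an explicit line, and translate back via \cref{thm:swl_pebble_game}. Your observation that the $\mathsf{SWL(SV)}$ case must be handled directly (since \cref{thm:pooling} does not apply when $\gA=\{\agg^\mathsf{L}_\mathsf{u}\}$) is also correct, and the paper indeed folds it into the same case analysis by noting the order of initial placements is immaterial for this $F$.

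There is, however, a real error in your description of the crucial move, and it is not merely notational. You describe the extra $\mathsf{PSWL}$ power as ``teleporting $v$ to $u$'' (``collapsing $v$ back onto $u$''), and again in step (iv) you say Spoiler ``uses the $\agg^\mathsf{P}_{\mathsf{vv}}$ move to put $v$ on $u$.'' This is backwards in two linked ways. First, by the game rules in \cref{sec:pebbling_game}, $\agg^\mathsf{P}_{\mathsf{uu}}$ moves pebble $v$ onto $u$, so by the stated symmetry $\agg^\mathsf{P}_{\mathsf{vv}}$ moves pebble $u$ onto $v$ — the opposite of what you wrote. Second, and more importantly, moving $v$ onto $u$ would buy nothing here: pebble $v$ is already freely mobile via $\agg^\mathsf{L}_\mathsf{u}$, so resetting it onto the (immobile) $u$ never increases Spoiler's surrounding power. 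The entire reason $\mathsf{PSWL(VS)}$ separates from $\mathsf{SWL(VS)}$ on this $F$ is that $\agg^\mathsf{P}_{\mathsf{vv}}$ lets Spoiler \emph{relocate the otherwise-frozen pebble $u$} (to $v$'s current position), so that both pebbles can then fence in a fresh region. With your reading, the winning line in step (iv) does not exist, and the argument collapses; with the correct reading, the paper's sequence \cref{fig:counterexample_swlvs}(d,e,f,i,j,k) goes through. When you fix this, also re-examine your intuition in step (ii): Duplicator's obstruction is not that ``retreat merges components'' per se, but that without the $\agg^\mathsf{P}_{\mathsf{vv}}$ option, pebble $u$ stays wherever it was first dropped, so any component Spoiler splits off with $v$ inevitably reattaches to the rest once $v$ retreats, because $u$ is never in position to hold the cut.
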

\begin{proof}
    The base graph in constructed in \cref{fig:counterexample_swlvs}, which can be seen as a simple adaptation of \cref{fig:counterexample_pswlsv}. 
    
    We first analyze the simplified pebbling game for algorithm $\mathsf{SWL(VS)}$ or $\mathsf{SWL(SV)}$. Initially, Spoiler should choose the positions for pebbles $u,v$. We will show that it does not matter whether $u$ or $v$ is placed first. By symmetry, there are mainly two types of strategies which we separately investigate below. Other strategies are similar in analysis and we omit the proof for clarity.
    \begin{itemize}[topsep=0pt]
    \setlength{\itemsep}{0pt}
        \item Strategy 1: Spoiler places pebbles $u$ and $v$ on vertices 1 and 8, respectively. In this case, Duplicator's strategy is to ensure that the middle connected component is selected after pebbles $u,v$ are present, as shown in \cref{fig:counterexample_swlvs}(a). According to the game rule, Spoiler should then place pebble $w$ on some vertex adjacent to $v$, and clearly, he'd better place $w$ on vertex 4 (or vertex 5 by symmetry). Duplicator can respond appropriately without losing the game (shown in \cref{fig:counterexample_swlvs}(b)). When Spoiler swaps pebbles $v,w$ and leaves $w$ outside the graph, the chosen connected component will be merged (\cref{fig:counterexample_swlvs}(c)). It is easy to see that Spoiler can never win the game after any number of rounds.
        \item Strategy 2: Spoiler places pebbles $u$ and $v$ on vertices 4 and 5, respectively. By symmetry, suppose Duplicator chooses the connected component on the right (\cref{fig:counterexample_swlvs}(d)). Then Spoiler should place pebble $w$ on some vertex adjacent to $v$, and he'd clearly place $w$ on vertex 8. Duplicator must respond by choosing the rightmost triangle, resulting in \cref{fig:counterexample_swlvs}(e). When Spoiler swaps pebble $v,w$ and leaves $w$ outside the graph, the triangle component remains unchanged due to the presence of pebble $v$ (\cref{fig:counterexample_swlvs}(f)). In the next round, Spoiler should place pebble $w$ to further split the triangle, like \cref{fig:counterexample_swlvs}(g). However, he cannot win: when he swaps pebble $v,w$ and leaves $w$ outside the graph, all previous components merge into a whole as shown in \cref{fig:counterexample_swlvs}(h). Spoiler has no idea how to win.
    \end{itemize}

    We next turn to algorithm $\mathsf{PSWL(VS)}$. Initially, the game is the same as $\mathsf{SWL(VS)}$ until reaching the state of \cref{fig:counterexample_swlvs}(f). In the next round, Spoiler can resort to the game rule of $\agg^\mathsf{P}_\mathsf{vv}$ and move pebble $u$ to the position of pebble $v$ (\cref{fig:counterexample_swlvs}(i)). Now pebble $u$ becomes useful and Spoiler can easily win the remaining game, like \cref{fig:counterexample_swlvs}(j, k).
\end{proof}
\textbf{Insight into \cref{thm:counterexample_swlvs}}. The reason why $\mathsf{PSWL(VS)}$ is stronger lies in the fact that Spoiler can change the position of pebble $u$ throughout the game process. In contrast, for $\mathsf{SWL(VS)}$ and $\mathsf{SWL(SV)}$, pebble $u$ has to be kept fixed once it is placed on the graph, which severely limits the utility of the pebble $u$ in the subsequent game.

\begin{figure*}[!t]
    \small
    \centering
    \begin{tabular}{c|c@{}cc@{}c}
         & (a) & \includegraphics[width=0.29\textwidth]{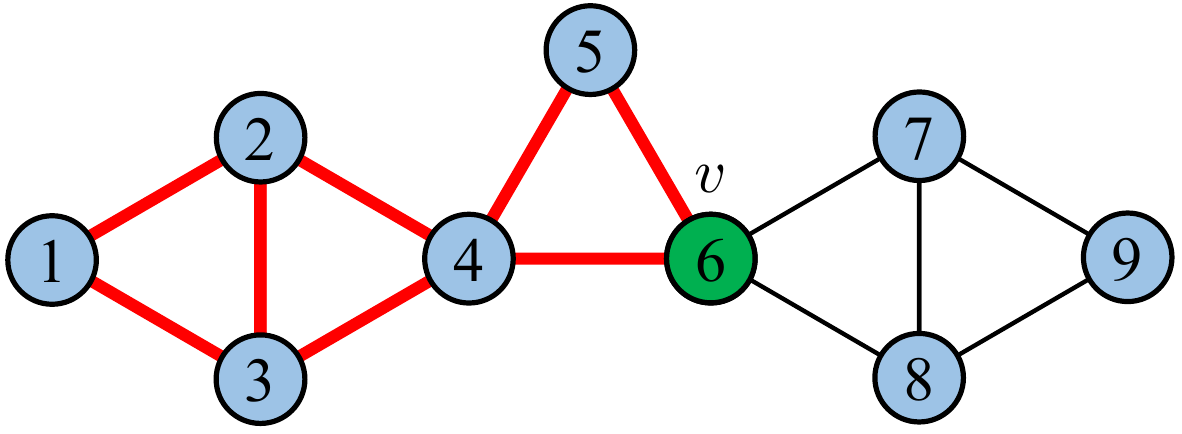} & (b) & \includegraphics[width=0.29\textwidth]{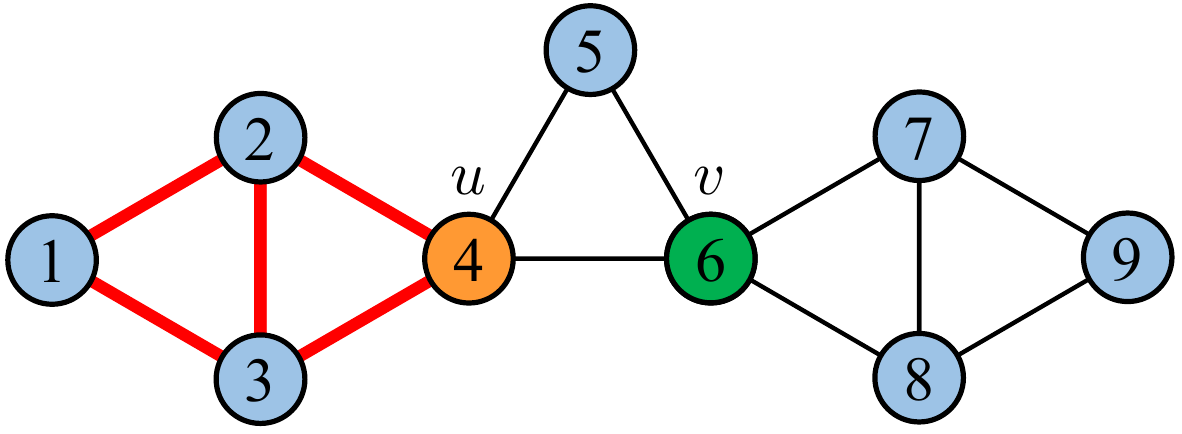}\\
         Base graph & (c) & \includegraphics[width=0.29\textwidth]{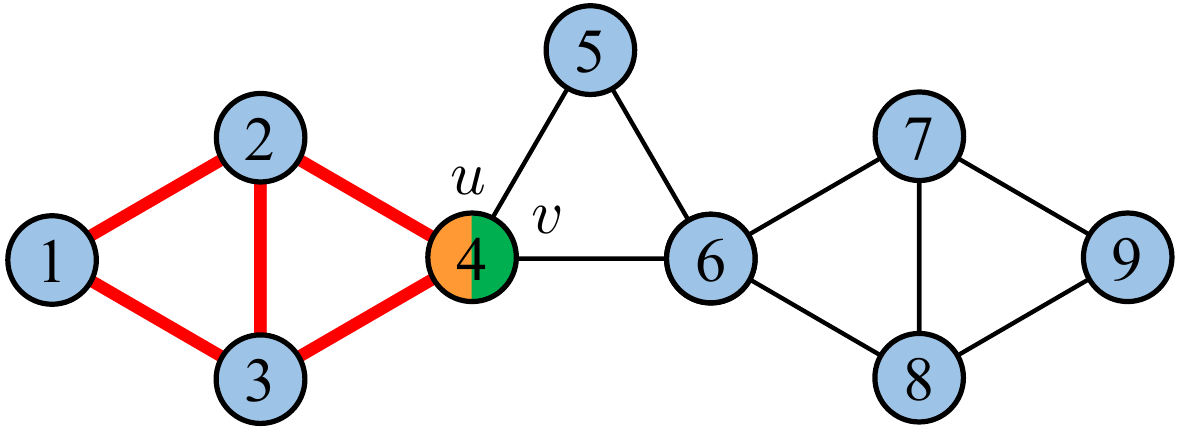} & (d) & \includegraphics[width=0.29\textwidth]{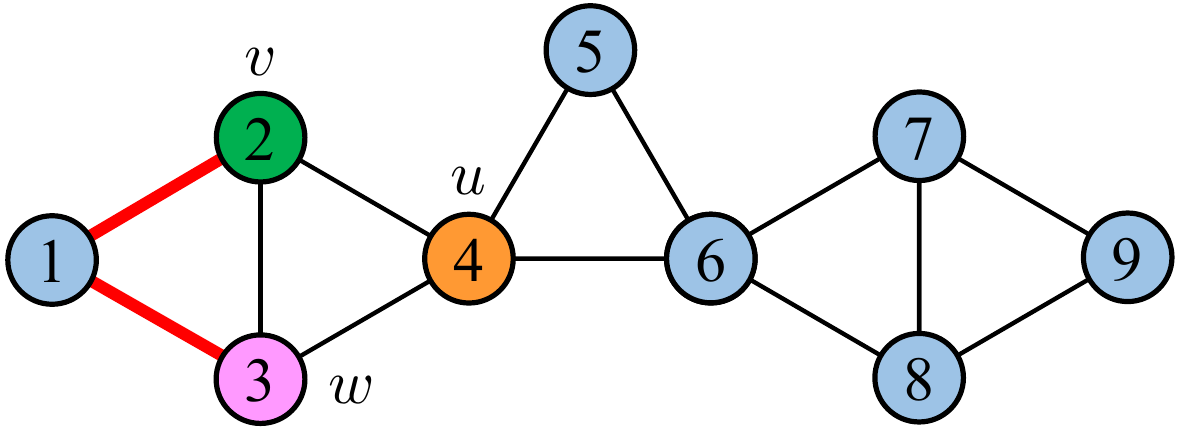}\\
         \includegraphics[width=0.29\textwidth]{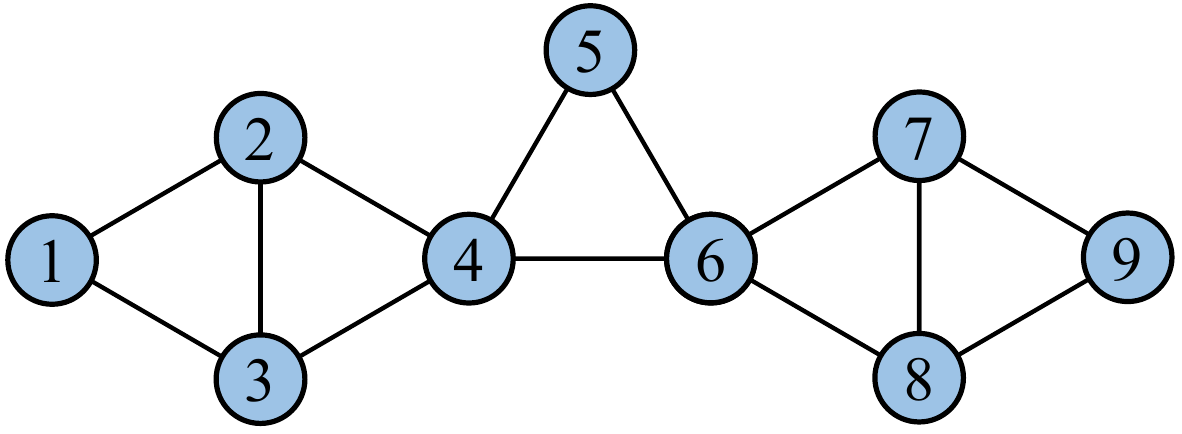} & (e) & \includegraphics[width=0.29\textwidth]{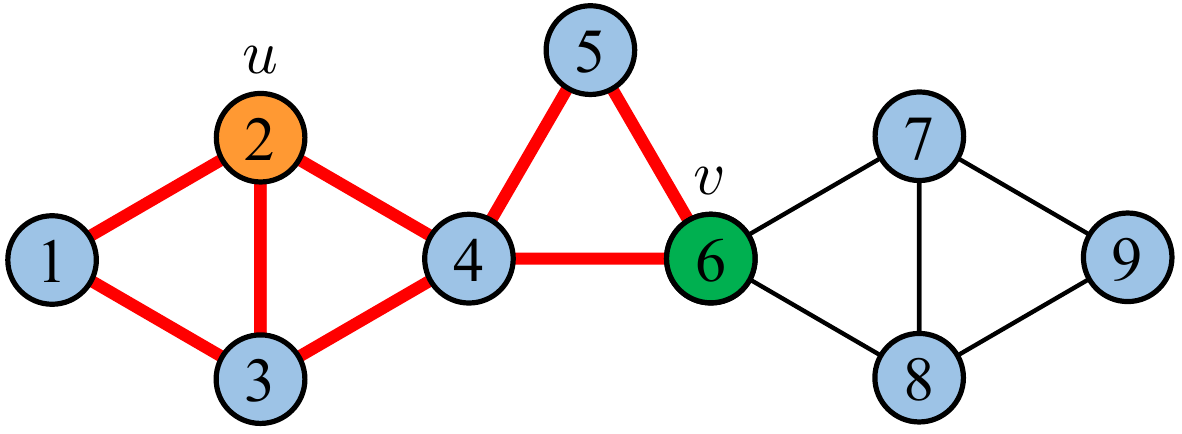} & (f) & \includegraphics[width=0.29\textwidth]{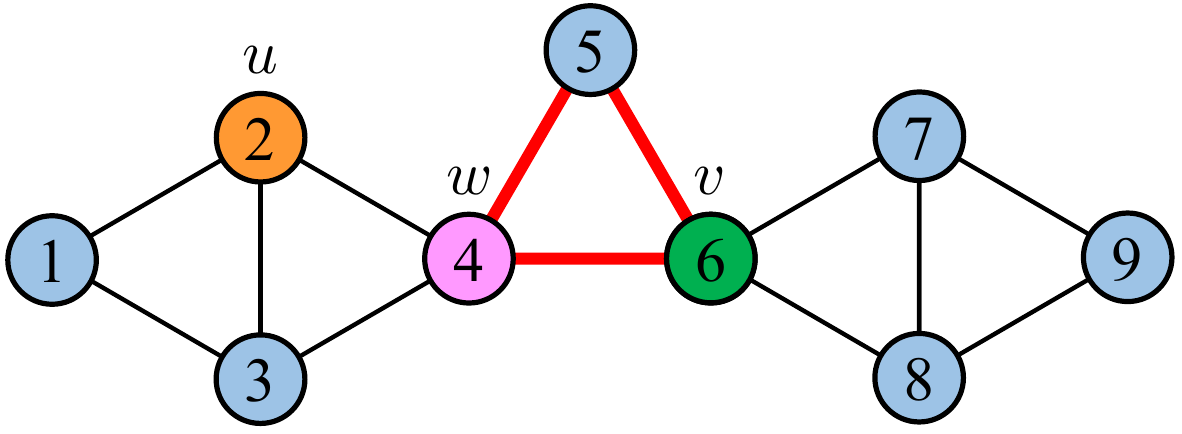}\\
        
        & (g) & \includegraphics[width=0.29\textwidth]{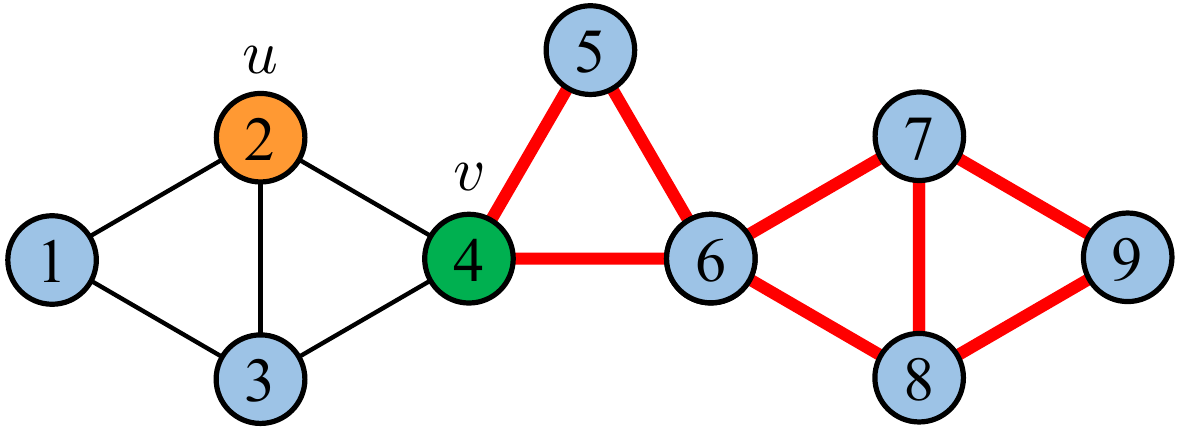} & (h) & \includegraphics[width=0.29\textwidth]{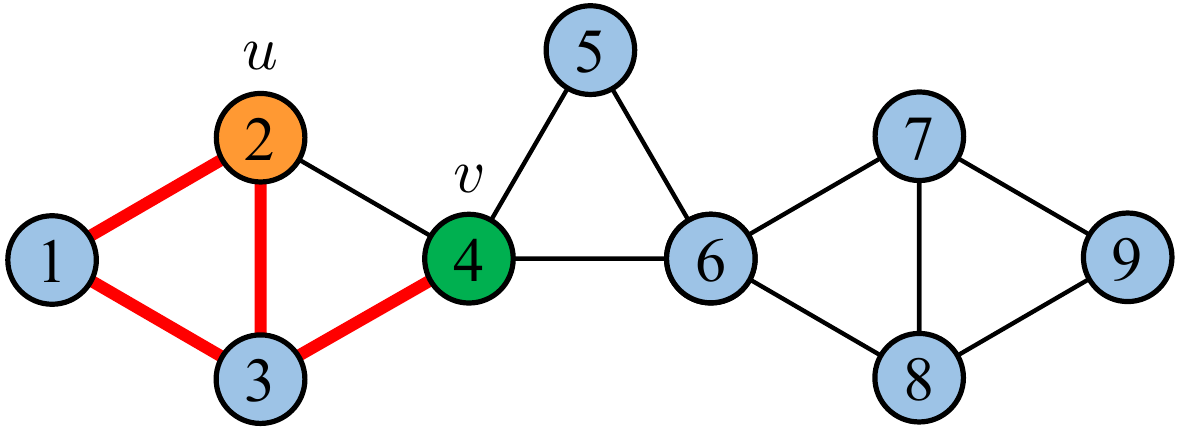}\\
        & (i) & \includegraphics[width=0.29\textwidth]{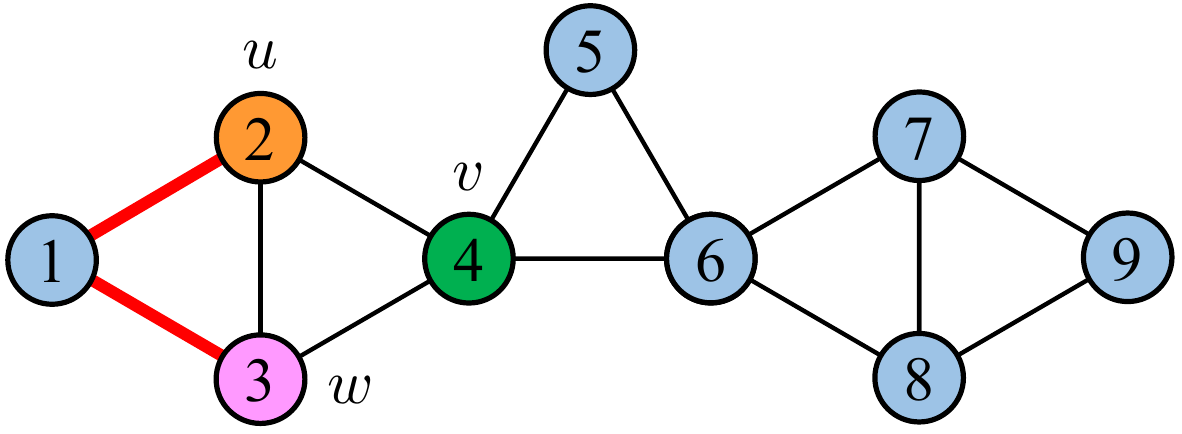} & (j) & \includegraphics[width=0.29\textwidth]{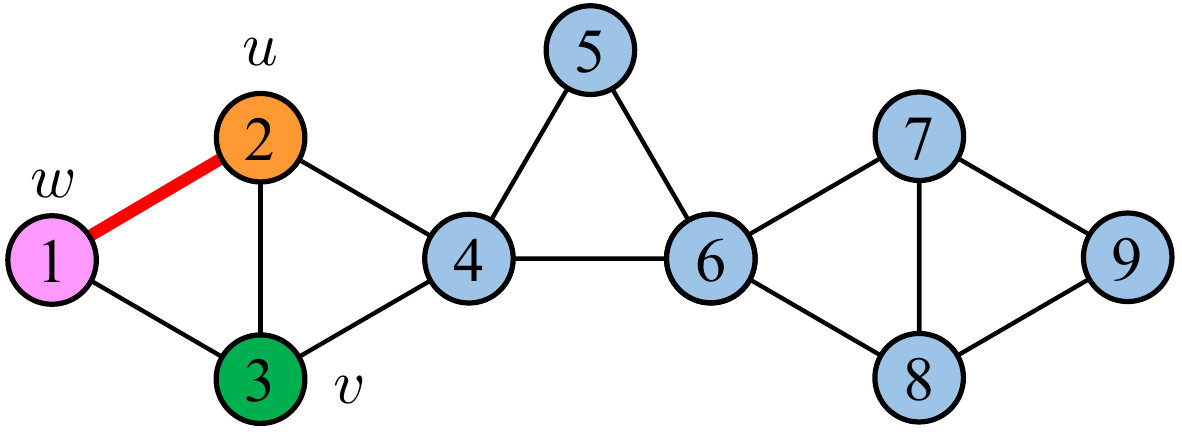}\\
    \end{tabular}
    \caption{Illustration of the proof of \cref{thm:counterexample_gswl}. When Duplicator follows her optimal strategy, the game process of $\mathsf{PSWL(SV)}$ may correspond to figures (a, b, c, d, ...) or figures (a, e, f, g, ...) depending on how to choose the initial position of pebble $u$. In both cases, Spoiler cannot win. In contrast, the game process of $\mathsf{GSWL}$ corresponds to figures (a, b, c, h, i, j) and Spoiler eventually wins in figure (j).}
    \label{fig:counterexample_gswl}
\end{figure*}

\begin{lemma}
\label{thm:counterexample_gswl}
    There exist two non-isomorphic graphs such that 
    \begin{itemize}[topsep=0pt]
    \setlength{\itemsep}{0pt}
        \item $\mathsf{GSWL}$ can distinguish them;
        \item $\mathsf{PSWL(SV)}$ cannot distinguish them.
    \end{itemize}
\end{lemma}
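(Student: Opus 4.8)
The plan is to exhibit a single base graph $F$ satisfying the hypotheses of a \emph{proper} F{\"u}rer graph, and then to show via the simplified pebbling game of \cref{sec:pebbling_furer} that Spoiler wins the $\mathsf{GSWL}$ game on $\tilde G(F),\tilde H(F)$ while Duplicator wins the $\mathsf{PSWL(SV)}$ game. By \cref{thm:furer_single_nonisomorphic} and \cref{thm:furer_connectivity} the two augmented F{\"u}rer graphs are non-isomorphic, so this yields the claimed pair of graphs. The candidate base graph is the one drawn in \cref{fig:counterexample_gswl}: informally, a central vertex of degree $\ge 3$ that branches into several ``arms'', where the global-aggregation power is needed to ``reach'' across arms and fence a twisted edge, but the single-point power $\agg^\mathsf{P}_{\mathsf{vv}}$ (even together with $\agg^\mathsf{L}_\mathsf{u}$ and $\agg^\mathsf{L}_\mathsf{v}$, which already come for free in $\mathsf{PSWL(SV)}$) is not.

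First I would verify that $F$ is a proper base graph (connected, minimum degree $2$, at least one vertex of degree $3$), so that \cref{thm:furer_connectivity} applies and the simplified game is valid. Then I would analyze the $\mathsf{PSWL(SV)}$ game. Recall $\mathsf{PSWL(SV)}=\mathsf{A}(\{\agg^\mathsf{L}_\mathsf{u},\agg^\mathsf{P}_\mathsf{vv}\},\mathsf{SV})$; in the simplified game this means Spoiler places pebble $v$ first, then $u$, then in each round either moves $v$ along an edge (local step, with the extra constraint $\{\{v,w\}\}\notin\gQ$ for Duplicator) or overwrites $u$ with the position of $v$ (the $\agg^\mathsf{P}_\mathsf{vv}$ move, which in the simplified game resets pebble $u$ onto pebble $v$'s vertex). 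Wait — I should be careful: in \cref{def:node_marking_swl} the symbol is $\agg^\mathsf{P}_\mathsf{vv}$, so the corresponding simplified-game move is ``place $w$ on the vertex of $v$, swap $v,w$, remove $w$'', i.e.\ pebble $v$ jumps onto pebble $u$ — the upshot is that $v$ can be ``teleported'' onto $u$. I would describe Duplicator's strategy as maintaining the odd-parity set $\gQ\subset\mathsf{CC}_{\gS}(F)$ by always keeping a ``large'' connected component in $\gQ$: whenever Spoiler's local moves split a component, Duplicator retains a piece containing at least two edges, and whenever Spoiler uses the single-point/teleport move, the pebble set $\gS$ shrinks, components merge, and Duplicator absorbs them so that no single-edge component ever appears in $\gQ$. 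The key structural fact to check (by inspection of $F$) is that with only one ``freely movable'' pebble ($v$, since $u$ can only be re-placed onto $v$'s current location and hence never serves as an independent barrier) Spoiler can never simultaneously block both ends of the arm containing the surviving large component — this is exactly the walk-through in \cref{fig:counterexample_gswl}(a,b,c,d) and (a,e,f,g).

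Next I would analyze the $\mathsf{GSWL}$ game, where $\mathsf{GSWL}=\mathsf{A}(\{\agg^\mathsf{L}_\mathsf{u},\agg^\mathsf{G}_\mathsf{v}\},\mathsf{VS})$: now Spoiler additionally has the global move on pebble $v$, i.e.\ he may re-place $v$ on \emph{any} vertex of $F$ (not just a neighbour), and crucially without the Duplicator-side constraint $\{\{v,w\}\}\notin\gQ$. I would exhibit the explicit winning line of \cref{fig:counterexample_gswl}(a,b,c,h,i,j): Spoiler first uses local steps to drive the surviving component down into one arm, then teleports $v$ to the far side via the global move, re-entering the \emph{same} component from the other end; since pebble $u$ is still parked on the near side, the component is now squeezed from both sides and Spoiler shrinks it to a single edge $\{\{u,v\}\}\in\gQ$, winning after that round. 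The point is that the global jump lets $v$ cross a region of $F$ that local moves would have to traverse edge-by-edge (allowing Duplicator to retreat the component each time), and this is the decisive extra power $\mathsf{GSWL}$ has over $\mathsf{PSWL(SV)}$.

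The main obstacle I anticipate is the $\mathsf{PSWL(SV)}$ impossibility direction: I must rule out \emph{all} Spoiler strategies, including mixed sequences that interleave the teleport move with local moves in subtle ways, and I must be honest about the role of the $\mathsf{SV}$ pooling (Spoiler chooses $v$'s initial vertex before $u$'s). The clean way to handle this is an invariant argument: define a potential on game states — essentially ``the largest connected component in Duplicator's optimal $\gQ$ has $\ge 2$ edges, and at least one endpoint of the arm carrying it is not occupied by a pebble'' — and show Duplicator can maintain it under every Spoiler move, using the degree-$\ge 2$ property of $F$ and the fact that with $u$ chained to $v$'s orbit there is effectively one barrier pebble. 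I would also need to double-check the symmetry claims of $F$ used to reduce the case analysis (so that ``vertex $4$'' and ``vertex $5$'' etc.\ really are interchangeable), and to confirm that the constraint-free global move is genuinely what breaks the invariant, so that the two analyses are consistent. With those pieces in place the lemma follows from \cref{thm:swl_pebble_game} together with the simplified-game equivalence.
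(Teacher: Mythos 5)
There is a genuine error in the $\mathsf{GSWL}$ direction, stemming from a persistent confusion about which pebble each aggregation moves. In the paper's convention, $\agg^\mathsf{G}_\mathsf{v}$ corresponds to $\ldblbrace\chi(w,v):w\in\gV_G\rdblbrace$, i.e.\ aggregating over the \emph{first} coordinate with $v$ held fixed; in the simplified game this move places pebble $w$ anywhere, swaps $w$ with $u$, and removes $w$ — so it is pebble $u$, not $v$, that gets teleported. Your sketch states the opposite (``Spoiler additionally has the global move on pebble $v$'') and builds the winning line around teleporting $v$ to the far side while ``$u$ is still parked on the near side''. That strategy is not realizable under the game rules. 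Similarly, your ``Wait — I should be careful'' aside on $\agg^\mathsf{P}_\mathsf{vv}$ replaces the correct description (move $u$ to $v$'s location, which you actually use in the later invariant paragraph) with an incorrect and self-contradictory one (place $w$ on $v$, then swap $v,w$ — a no-op). The upshot is that the proof, as written, rests on an inconsistent reading of the game.

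The conceptual picture you should be arguing is precisely the one the mix-up obscures: in $\mathsf{PSWL(SV)}$ Spoiler's only way to re-place $u$ is onto $v$'s current position, so the two pebbles cannot be made independent; in $\mathsf{GSWL}$ Spoiler can teleport $u$ to an \emph{arbitrary} vertex while $v$ advances locally, which is what lets him fence the twisted edge from both ends. Concretely, the paper's winning line plays identically to $\mathsf{PSWL(SV)}$ until Duplicator has committed a component to one arm, and then uses $\agg^\mathsf{G}_\mathsf{v}$ to jump $u$ across (to vertex~2 in \cref{fig:counterexample_gswl}(h)) so that $u$ and $v$ straddle the component. Once this correction is made, the rest of your outline — base-graph properness via \cref{thm:furer_connectivity}, non-isomorphism via \cref{thm:furer_single_nonisomorphic}, and the translation through the simplified pebbling game — is the same route as the paper's proof. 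The $\mathsf{PSWL(SV)}$ impossibility side, however, is still only gestured at: the invariant (``the surviving component has $\ge 2$ edges and one open arm endpoint'') is reasonable in spirit, but you would need to verify it case by case against the specific base graph, including the teleport-then-retreat interleavings, exactly as the paper does in figures (a)--(d) and (e)--(g); the claim that $u$ ``never serves as an independent barrier'' is the heart of that case analysis and deserves a concrete demonstration rather than an appeal to inspection.
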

\begin{proof}
    The base graph in constructed in \cref{fig:counterexample_gswl}, which can be seen as a further extension of the counterexample in \cref{fig:counterexample_pswlsv}. 

    We first analyze the simplified pebbling game for algorithm $\mathsf{PSWL(SV)}$. Initially, Spoiler should place pebble $v$ on some vertex. We only consider the case of choosing vertex 6 (or equivalently, vertex 4), which is intuitively the best choice. Other choices can be similarly analyzed and we omit them for clarity. Since the presence of $v$ splits the graph into two connected components, Duplicator should select the larger one (\cref{fig:counterexample_gswl}(a)). Next, Spoiler should place pebble $u$ on some vertex.
    \begin{itemize}[topsep=0pt]
    \setlength{\itemsep}{0pt}
        \item We first consider the case when Spoiler places $u$ on vertex 4, which further splits the left connected component. In this case, Duplicator just selects the left diamond-shaped component (\cref{fig:counterexample_gswl}(b)). In the next round, Spoiler will play according to \cref{fig:counterexample_gswl}(c) by placing pebble $w$ on vertex 4 adjacent to $v$, swapping $v$ and $w$, and leaving pebble $w$ outside the graph. Duplicator just does nothing. The remaining game can be illustrated in \cref{fig:counterexample_gswl}(d), and the analysis is the same as the previous proof of \cref{thm:counterexample_pswlsv}. In short, Spoiler can never split the red connected component $\{\{1,2\},\{1,3\}\}$ shown in \cref{fig:counterexample_gswl}(d). Note that although Spoiler can additionally use the game rule of single-point aggregation $\agg^\mathsf{P}_\mathsf{vv}$, he should better not change the position of $u$: if he leaves pebble $u$ from vertex 4, the connected component will be merged. Therefore, Spoiler cannot win the game.
        \item Seeing why Spoiler cannot win in \cref{fig:counterexample_gswl}(d), let us restart from \cref{fig:counterexample_gswl}(a) with a different strategy. Suppose this time Spoiler places pebble $u$ on vertex 2 (shown in \cref{fig:counterexample_gswl}(e)). Since the red connected component remains unchanged, Duplicator does nothing. In the next round, Spoiler should place pebble $w$ on vertex 4 adjacent to $v$, which splits the red connected component in \cref{fig:counterexample_gswl}(e) into two parts. However, seeing the position of pebble $u$, this time Duplicator chooses a different strategy: she selects the upper triangle (\cref{fig:counterexample_gswl}(f)). When Spoiler swaps pebbles $v,w$ and leaves $w$ outside the graph, the upper triangle is merged into a larger connected component (see \cref{fig:counterexample_gswl}(g)). It is easy to see that Spoiler still cannot win the game after any number of rounds.
    \end{itemize}

    We next turn to algorithm $\mathsf{GSWL}$. Initially, the game is the same as $\mathsf{PSWL(SV)}$ until reaching the state of \cref{fig:counterexample_swlvs}(c). In the next round, Spoiler can choose a different way to play: according to the game rule of $\agg^\mathsf{G}_\mathsf{v}$, Spoiler can place pebble $w$ on vertex 2 and swap $w$ with $u$. Clearly, Duplicator has to respond by selecting the left connected component as shown in \cref{fig:counterexample_gswl}(h). Now the remaining game is easy for Spoiler. As illustrated in \cref{fig:counterexample_gswl}(i) and \cref{fig:counterexample_gswl}(j), Spoiler can finally win the game.
\end{proof}

\textbf{Insight into \cref{thm:counterexample_gswl}}. The proof of \cref{thm:counterexample_gswl} clearly shows why global aggregation is more powerful than the corresponding single-point aggregation (\cref{thm:aggregation}).

\begin{figure*}[!t]
    \small
    \centering
    \begin{tabular}{c|ccc}
         & \includegraphics[width=0.225\textwidth]{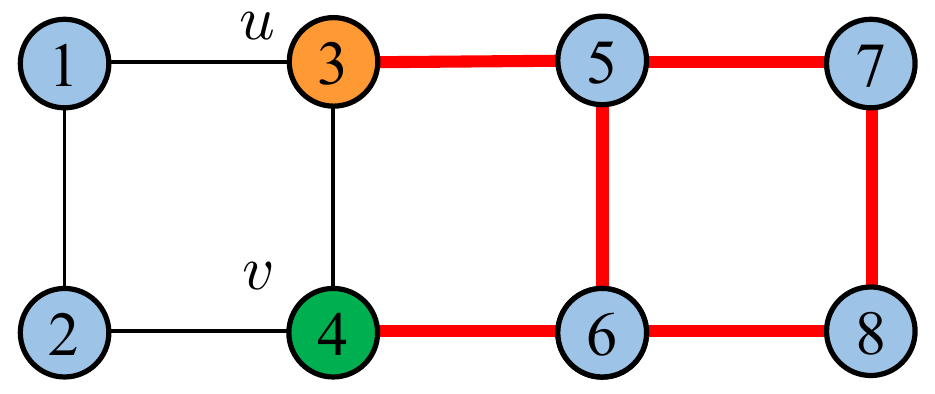} &\includegraphics[width=0.225\textwidth]{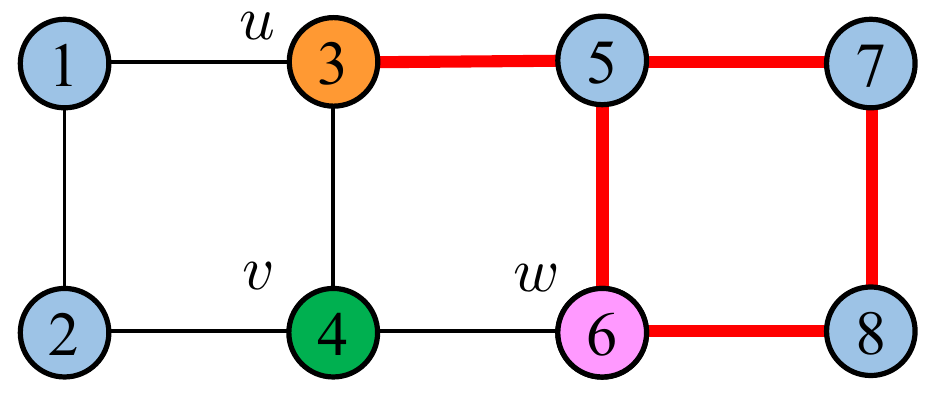} &\includegraphics[width=0.225\textwidth]{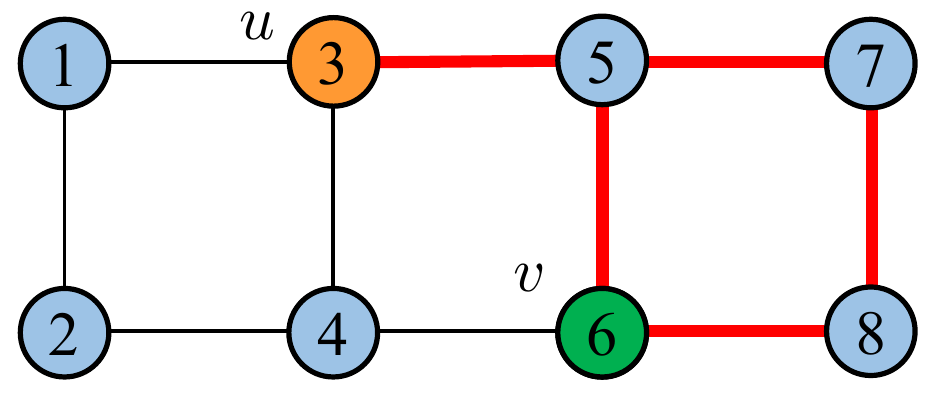}\\
         & (a) & (b) & (c)\\
         \includegraphics[width=0.225\textwidth]{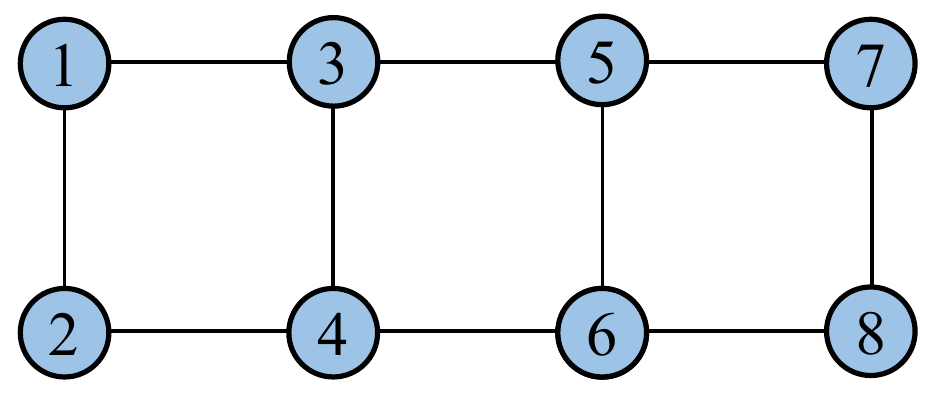} & \includegraphics[width=0.225\textwidth]{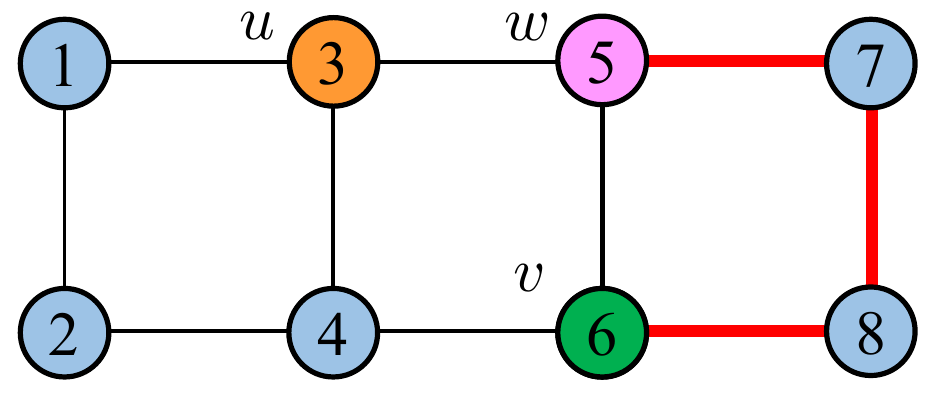} &\includegraphics[width=0.225\textwidth]{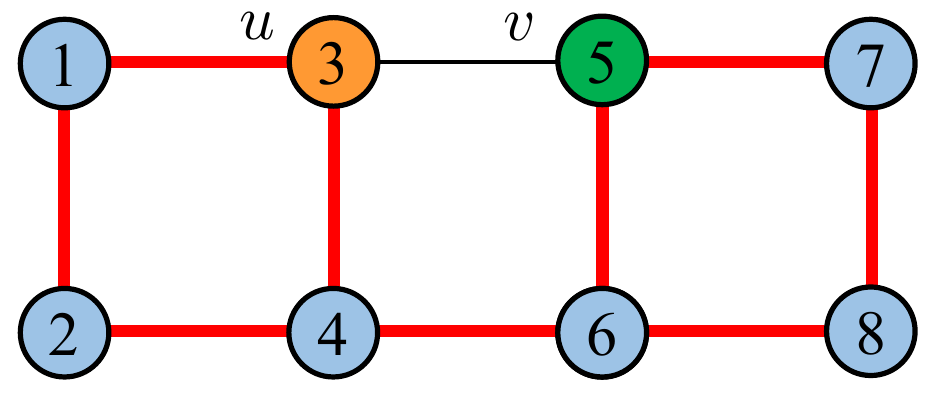} &\includegraphics[width=0.225\textwidth]{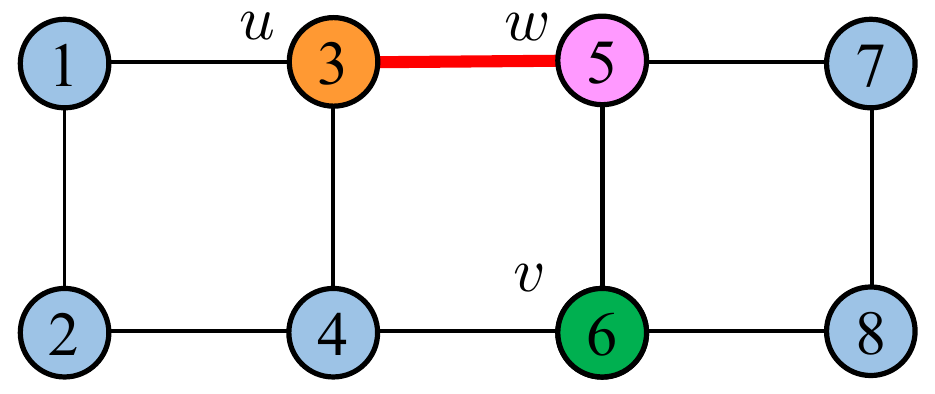}\\
         Base graph & (d) & (e) & (f)\\
         & \includegraphics[width=0.225\textwidth]{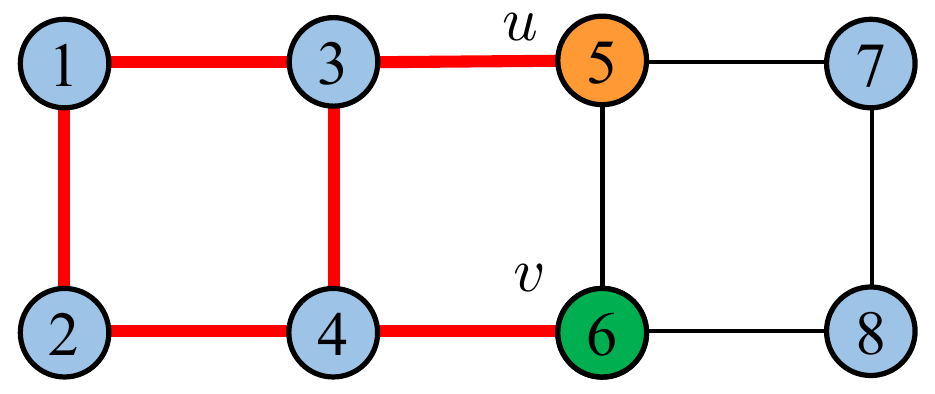} &\includegraphics[width=0.225\textwidth]{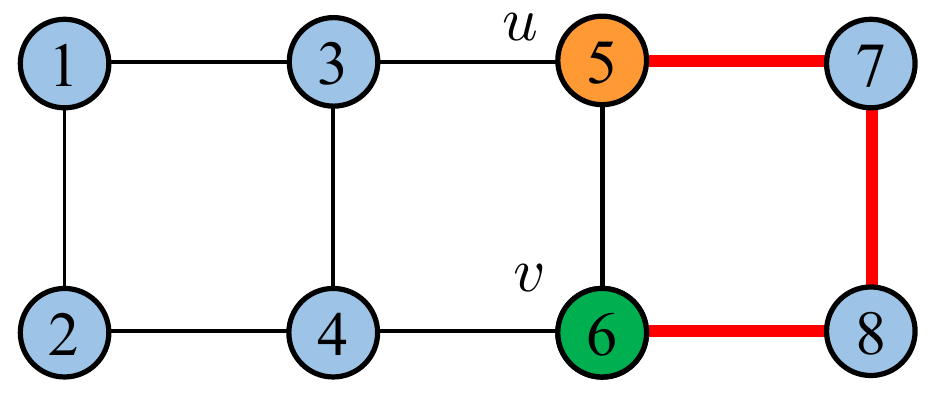} &\includegraphics[width=0.225\textwidth]{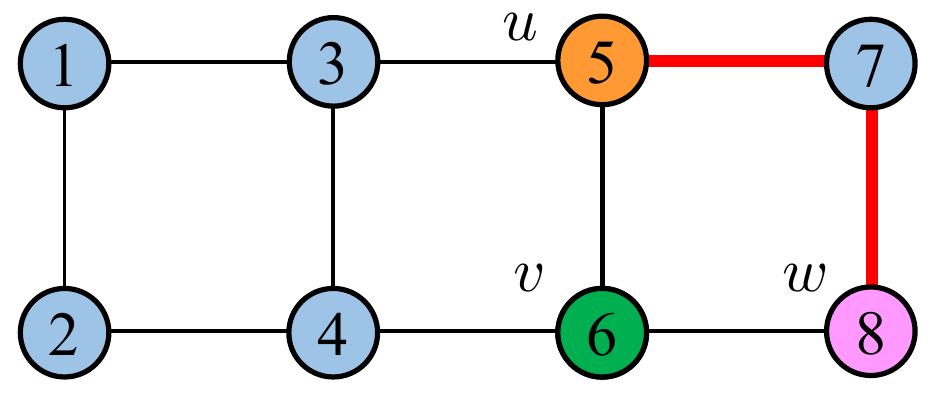}\\
         & (g) & (h) & (i)\\
    \end{tabular}
    \caption{Illustration of the proof of \cref{thm:counterexample_lfwl_sswl}. When Duplicator follows her optimal strategy, the game process of $\mathsf{GSWL}$ may correspond to figures (c, d, e, ...), (c, f, g, ...), (a, b, c, d, e, ...), or (a, b, c, f, g, ...), depending on Spoiler's strategy. In all cases, Spoiler cannot win. In contrast, the game process of $\mathsf{SSWL}$ corresponds to figures (a, b, c, d, h, i) and Spoiler eventually wins in figure (i). The game process of $\mathsf{LFWL(2)}$ is similar and Spoiler can also win.}
    \label{fig:counterexample_lfwl_sswl}
\end{figure*}
\begin{lemma}
\label{thm:counterexample_lfwl_sswl}
    There exist two non-isomorphic graphs such that 
    \begin{itemize}[topsep=0pt]
    \setlength{\itemsep}{0pt}
        \item $\mathsf{GSWL}$ cannot distinguish them;
        \item $\mathsf{SSWL}$ can distinguish them;
        \item $\mathsf{LFWL(2)}$ can distinguish them.
    \end{itemize}
\end{lemma}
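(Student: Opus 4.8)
The plan is to produce the counterexample as an augmented proper F\"urer graph. Concretely, I would design a base graph $F=(\gV_F,\gE_F)$ — an extension of the base graph used in \cref{thm:counterexample_pswlsv}, depicted as the ``Base graph'' in \cref{fig:counterexample_lfwl_sswl} — chosen so that it is connected, every vertex has degree at least two, and at least one vertex has degree at least three, so that $G(F)$ is a proper F\"urer graph. Take $\tilde G(F)$ and $\tilde H(F)$ (the augmented F\"urer graph of \cref{def:aug_furer} and its twist) as the two graphs. They are non-isomorphic: the auxiliary chains $C_u$ attached in \cref{def:aug_furer} have pairwise distinct lengths, so any isomorphism $\tilde G(F)\to\tilde H(F)$ must fix every chain and hence restrict to a proper isomorphism $G(F)\to H(F)$, which does not exist by \cref{thm:furer_single_nonisomorphic}. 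By \cref{thm:swl_pebble_game,thm:fwl_pebble_game} and the reduction to the \emph{simplified} pebbling game on $F$ (\cref{sec:pebbling_furer}), the whole lemma reduces to analyzing three simplified games on $F$.

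For $\mathsf{GSWL}=\mathsf{A}(\{\agg^\mathsf{L}_\mathsf{u},\agg^\mathsf{G}_\mathsf{v}\},\mathsf{VS})$, Spoiler places pebble $u$ first, then pebble $v$, and afterwards may only \emph{walk} pebble $v$ along an edge (via $\agg^\mathsf{L}_\mathsf{u}$) or \emph{teleport} pebble $u$ to an arbitrary vertex (via $\agg^\mathsf{G}_\mathsf{v}$). I would prove Duplicator wins by maintaining the invariant that the connected component she keeps in $\gQ$ always contains at least two edges. The key point is that a teleport of $u$ first \emph{removes} $u$ from its current vertex — which merges the components it was separating — before re-inserting it elsewhere; hence no teleport (or teleport-then-local-move sequence) can ever be a net improvement over simply having placed $u$ at the new vertex from the outset. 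This collapses the analysis to the finitely many inequivalent opening placements of $u$ (up to the automorphisms of $F$), and for each I would run the same parity-bookkeeping case analysis on $\gQ$ as in \cref{thm:counterexample_pswlsv,thm:counterexample_swlvs}, showing Spoiler can never force Duplicator down to a one-edge component; see \cref{fig:counterexample_lfwl_sswl}(a)--(g).

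For $\mathsf{SSWL}=\mathsf{A}(\{\agg^\mathsf{L}_\mathsf{u},\agg^\mathsf{L}_\mathsf{v}\},\mathsf{VS})$ and for $\mathsf{LFWL(2)}$, I would instead exhibit an explicit winning line for Spoiler. These algorithms let Spoiler move pebble $u$ one edge at a time (for $\mathsf{SSWL}$, via $\agg^\mathsf{L}_\mathsf{v}$) or into the current neighborhood of $v$ (for $\mathsf{LFWL(2)}$). On the chosen $F$ this lets Spoiler pin $v$ at a carefully selected vertex and then advance $u$ step by step so that $\gS=\{u,v\}$ progressively tightens around a twisted edge; because each advance of $u$ is local (rather than a teleport), Duplicator's forced component keeps shrinking instead of merging back, and ends as a single edge, at which point Spoiler wins. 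The sequence is \cref{fig:counterexample_lfwl_sswl}(a,b,c,d,h,i), and the $\mathsf{LFWL(2)}$ play is the same since all the required moves of $u$ land on neighbors of $v$.

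The main obstacle is the \emph{design} of $F$ and the accompanying exhaustive verification for $\mathsf{GSWL}$: one must arrange that isolating a twisted edge genuinely requires the ability to \emph{carry} pebble $u$ locally (so $\mathsf{SSWL}$ and $\mathsf{LFWL(2)}$ succeed), while guaranteeing that every teleport of $u$ is ``wasted'' by the intervening merge (so $\mathsf{GSWL}$ fails against \emph{all} Spoiler strategies, not just the natural ones). Getting both properties simultaneously, and then checking Duplicator's strategy over every opening and every subsequent move using the parity invariant of \cref{sec:pebbling_furer}, is the delicate part; the rest is routine case-chasing on $\gQ$.
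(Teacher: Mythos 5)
Your high-level framing is right — a (proper, augmented) F\"urer graph, non-isomorphism via \cref{thm:furer_single_nonisomorphic}, and the reduction to the simplified pebbling game of \cref{sec:pebbling_furer}. But the mechanism you identify for why $\mathsf{GSWL}$ fails, and why $\mathsf{LFWL(2)}$ succeeds, is not the one the paper uses and does not hold up on its own.

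For $\mathsf{GSWL}$: you argue that a global ``teleport'' of $u$ first removes $u$ (merging components) and only then re-inserts it, so a teleport can never beat having placed $u$ there from the start. This is not how the simplified game works: the rule for $\agg^\mathsf{G}_\mathsf{v}$ is \emph{place $w$, then swap $u,w$, then remove $w$} — the new position is occupied \emph{before} the old one is vacated, so the brief three-pebble state gives Spoiler real separating power in principle. More importantly, the claim ``teleport is never a net improvement'' is neither obviously true (after many local moves of $v$, a teleport of $u$ lands in a state that a fresh opening could never reach) nor what the paper proves. The actual discriminating feature is the \emph{constraint on Duplicator's update}: under a local move $\agg^\mathsf{L}_\mathsf{u}$ (resp.\ $\agg^\mathsf{L}_\mathsf{v}$), Duplicator is forbidden from keeping the fresh one-edge component $\{\{v,w\}\}$ (resp.\ $\{\{u,w\}\}$) in $\gQ$; under a global move there is \emph{no} such restriction. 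In the paper's game sequence (\cref{fig:counterexample_lfwl_sswl}(c)$\to$(f)$\to$(g)), Duplicator exploits this by choosing the single-edge component $\{\{3,5\}\}$ when Spoiler plays $\agg^\mathsf{G}_\mathsf{v}$; the subsequent removal of $w$ then merges it away. When Spoiler plays locally instead, Duplicator is barred from that choice but never needed it. Without engaging the constraint rule, your parity bookkeeping cannot distinguish $\mathsf{GSWL}$ from $\mathsf{SSWL}$, since on this graph both can eventually get $u$ to the same positions.

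For $\mathsf{LFWL(2)}$: you say the play is ``the same since all the required moves of $u$ land on neighbors of $v$.'' That is not the operative mechanism, and in general $\mathsf{LFWL(2)}$ cannot simulate $\mathsf{SSWL}$'s move of $u$ to a neighbor of $u$ (these are incomparable algorithms, cf.\ \cref{thm:counterexample_slfwl_2}). What actually makes $\mathsf{LFWL(2)}$ succeed here is \emph{non-adaptivity of Duplicator}: she must commit to her $\gQ$-update after $w$ is placed but \emph{before} Spoiler decides whether to swap $u,w$ or $v,w$. Spoiler then plays against her choice — if she keeps the one-edge component he swaps one way and wins immediately, if she keeps the big component he swaps the other way and proceeds. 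That branching argument is what you need to write down; the ``moves land on neighbors of $v$'' observation does not produce it.

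So while you landed on the right graph family and overall plan, the two points that actually separate the three algorithms — the local-aggregation constraint on $\gQ$ and the commit-before-swap non-adaptivity of FWL-type games — are missing from your argument, and your stated surrogates for them are incorrect.
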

\begin{proof}
    The base graph is constructed in \cref{fig:counterexample_gswl}, which is precisely the graph originally analyzed in \citet{furer2001weisfeiler} and is often called the F{\"u}rer grid graph \citep[Appendix D]{qian2022ordered}.

    We first analyze the simplified pebbling game for algorithm $\mathsf{GSWL}$. Depending on how Spoiler chooses the initial positions of pebbles $u$ and $v$, we main consider the two cases illustrated in \cref{fig:counterexample_lfwl_sswl}(a) and \cref{fig:counterexample_lfwl_sswl}(c) due to symmetry of the graph. Other cases are clearly not optimal. For the first case, Duplicator will select the larger connected component on the right (\cref{fig:counterexample_lfwl_sswl}(a)). In the next round, Spoiler may place pebble $w$ on vertex 6 adjacent to pebble $v$, and Duplicator updates her selected component accordingly (\cref{fig:counterexample_lfwl_sswl}(b)). Spoiler then swaps pebbles $v$ and $w$ and leaves $w$ outside the graph, returning to \cref{fig:counterexample_lfwl_sswl}(c). What follows is the central part of the proof. Spoiler should clearly place pebble $w$ on vertex 5 to further split the component selected by Duplicator, but he has two different ways to achieve this:
    \begin{itemize}[topsep=0pt]
    \setlength{\itemsep}{0pt}
        \item He plays according to the game rule of $\agg^\mathsf{L}_\mathsf{u}$. Duplicator knows this information and thus responds by selecting the connected component on the right (see \cref{fig:counterexample_lfwl_sswl}(d)). Then Spoiler should swap pebbles $w$ and $v$ and leave $w$ outside the graph. However, this will merge multiple component as shown in \cref{fig:counterexample_lfwl_sswl}(e). Clearly, Spoiler cannot win the subsequent game.
        \item A better choice would be to follow the game rule of $\agg^\mathsf{G}_\mathsf{v}$ because it can move pebble $u$ to vertex 5 after this round. However, Duplicator knows this information and thus responds differently: she selects the connected component of $\{\{3,5\}\}$ containing only one edge (see \cref{fig:counterexample_lfwl_sswl}(f)). Note that Duplicator does not lose the game, because for global aggregation Duplicator can freely choose a connected component of \emph{one edge} (while for local aggregation she cannot). Now, when Spoiler swaps pebbles $w$ and $u$ and leaves $w$ outside the graph, the component $\{\{3,5\}\}$ is merged into a larger component, as shown in \cref{fig:counterexample_lfwl_sswl}(g), which is equivalent to \cref{fig:counterexample_lfwl_sswl}(a) by symmetry. Again, Spoiler cannot win the subsequent game.
    \end{itemize}

    We next turn to algorithm $\mathsf{SSWL}$. Initially, the game is the same as $\mathsf{GSWL}$ until reaching the state of \cref{fig:counterexample_lfwl_sswl}(d). Now it comes to the major difference: Spoiler can play according to the game rule of $\agg^\mathsf{L}_\mathsf{v}$. This time Duplicator can no longer choose the connected component of $\{\{3,5\}\}$ since it is prohibited by the game rule. Therefore, her only choice is to select the rightmost component as shown in \cref{fig:counterexample_lfwl_sswl}(d). This then yields \cref{fig:counterexample_lfwl_sswl}(h) when Spoiler swaps pebbles $u,w$ and leaves $w$ outside the graph. The remaining game will be quite easy for Spoiler and it is easy to see that Spoiler can win when playing according to \cref{fig:counterexample_lfwl_sswl}(i).

    We finally turn to algorithm $\mathsf{LFWL(2)}$. Initially, the game is also the same as $\mathsf{GSWL}$ until reaching the state of \cref{fig:counterexample_lfwl_sswl}(d). Now it comes to the major difference: Duplicator does not know whether Spoiler will swap pebbles $u,w$ or swap pebbles $v,w$. Therefore, depending on Duplicator's response, Spoiler can adopt different strategies:
    \begin{itemize}[topsep=0pt]
    \setlength{\itemsep}{0pt}
        \item If Duplicator chooses the rightmost connected component, then Spoiler swaps pebbles $u,w$. This corresponds to \cref{fig:counterexample_lfwl_sswl}(h) when $w$ is left outside the graph, and we have proved that Spoiler can win.
        \item If Duplicator chooses the component containing only one edge $\{\{3,5\}\}$, then Spoiler swaps pebbles $v,w$. This corresponds to \cref{fig:counterexample_lfwl_sswl}(f) and Spoiler already wins after this round.
        \item Similarly, if Duplicator chooses the component containing only one edge $\{\{5,6\}\}$, then Spoiler swaps pebbles $u,w$ and wins after this round.
    \end{itemize}
    In all cases, Spoiler has a winning strategy.
\end{proof}

\textbf{Insight into \cref{thm:counterexample_lfwl_sswl}}. The proof of \cref{thm:counterexample_lfwl_sswl} shows why local aggregation is more powerful than global aggregation (\cref{thm:aggregation}). Importantly, in local aggregation there is an \emph{additional constraint} that Duplicator cannot choose the connected component containing the neighboring edge. The proof also reveals the power of FWL-type algorithms. Intuitively, in FWL-type algorithms Duplicator cannot ``see'' Spoiler's strategy before making choices, and thus Spoiler can gain an additional advantage by deliberately playing against Duplicator's strategy.

Based on the proof of \cref{thm:counterexample_lfwl_sswl}, curious readers may ask whether there is an expressivity relationship between $\mathsf{SSWL}$ and $\mathsf{LFWL(2)}$. However, below we will show that it is not the case: they are actually incomparable (due to \cref{thm:counterexample_slfwl,thm:counterexample_slfwl_2}).

\begin{figure*}[!t]
    \small
    \centering
    \begin{tabular}{c|cccc}
         \includegraphics[width=0.145\textwidth]{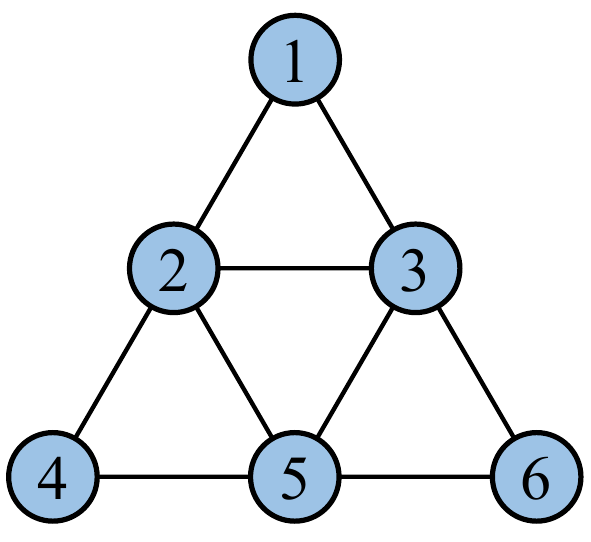} & \includegraphics[width=0.145\textwidth]{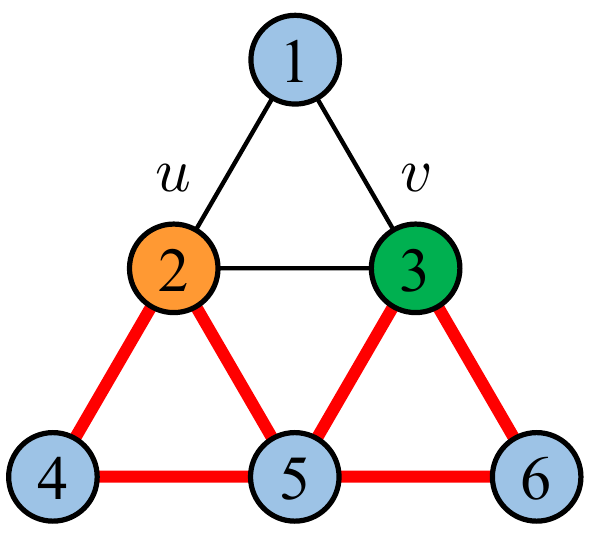} &\includegraphics[width=0.145\textwidth]{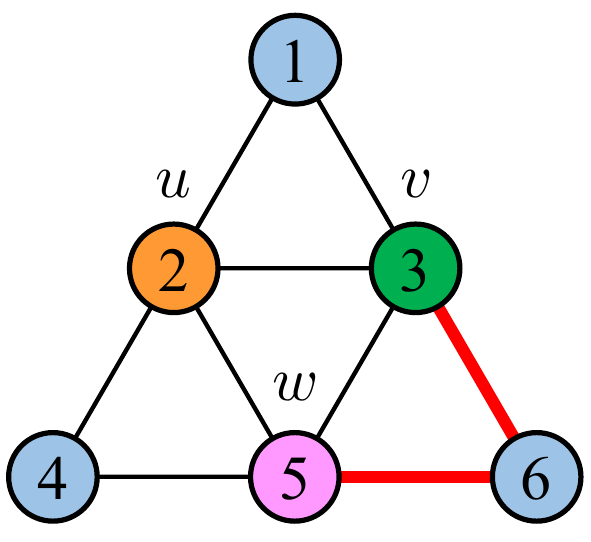} & \includegraphics[width=0.145\textwidth]{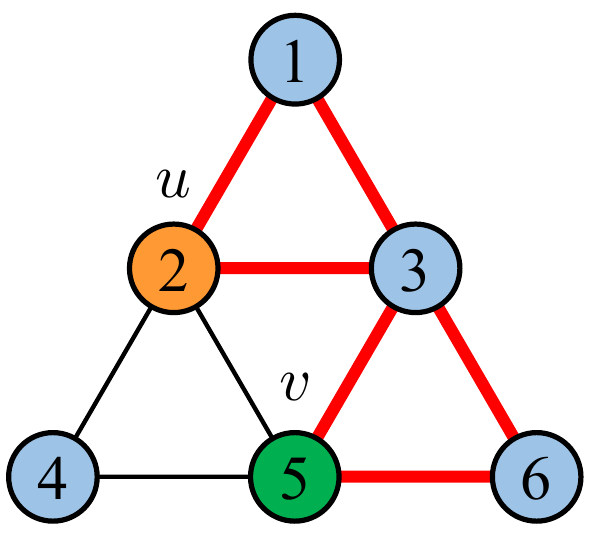} & \includegraphics[width=0.145\textwidth]{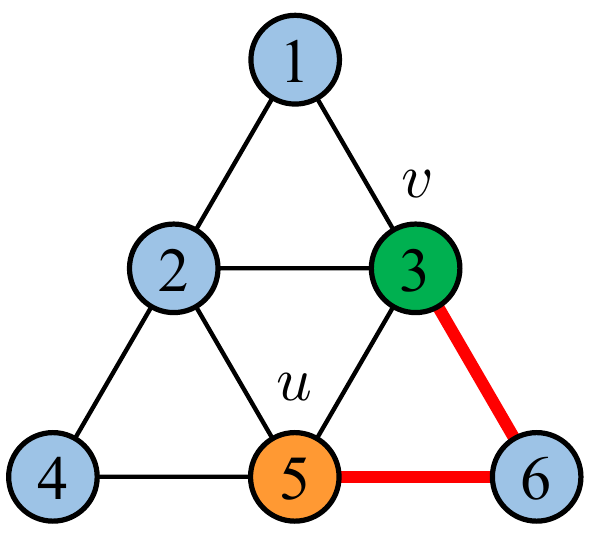}\\
         Base graph & (a) & (b) & (c) & (d)\\
    \end{tabular}
    \caption{Illustration of the proof of \cref{thm:counterexample_slfwl}. When Duplicator follows her optimal strategy, the game process of $\mathsf{SSWL}$ may correspond to figures (a, b, c, ...), and Spoiler cannot win. In contrast, the game process of $\mathsf{LFWL(2)}$ or $\mathsf{SLFWL(2)}$ corresponds to figures (a, b, d) and Spoiler can eventually win.}
    \label{fig:counterexample_slfwl}
\end{figure*}
\begin{lemma}
\label{thm:counterexample_slfwl}
    There exist two non-isomorphic graphs such that 
    \begin{itemize}[topsep=0pt]
    \setlength{\itemsep}{0pt}
        \item $\mathsf{SSWL}$ cannot distinguish them;
        \item $\mathsf{SLFWL(2)}$ can distinguish them;
        \item $\mathsf{LFWL(2)}$ can distinguish them.
    \end{itemize}
\end{lemma}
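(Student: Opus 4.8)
The plan is to follow the template of Lemmas \ref{thm:counterexample_pswlsv}--\ref{thm:counterexample_lfwl_sswl}: I exhibit a base graph $F$ (the one drawn in \cref{fig:counterexample_slfwl}), let $G=\tilde G(F)$ and $H=\tilde H(F)$ be the associated augmented proper F\"urer graph and its twist, and translate the three claims into statements about the \emph{simplified} pebbling game on $F$ via \cref{thm:swl_pebble_game,thm:fwl_pebble_game} together with the simplified-game equivalence established in \cref{sec:pebbling_furer}. The first step is to check that $F$ is proper (connected, minimum degree two, with at least one vertex of degree $\ge 3$); then by \cref{thm:furer_single_nonisomorphic} and \cref{thm:furer_connectivity} the graphs $G$ and $H$ are genuinely non-isomorphic connected graphs on which the simplified game is valid, so it suffices to (i) give a Duplicator strategy in the simplified $\mathsf{SSWL}$ game on $F$ and (ii) give a Spoiler winning strategy in the simplified $\mathsf{LFWL(2)}$ game on $F$.

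For the negative part, recall $\mathsf{SSWL}=\mathsf{A}(\{\agg^\mathsf{L}_\mathsf{u},\agg^\mathsf{L}_\mathsf{v}\},\mathsf{VS})$, so after the initial placement of $u$ and then $v$, in each round Spoiler either moves $v$ to a neighbor of $v$ or moves $u$ to a neighbor of $u$, and he must \emph{announce which} of $\agg^\mathsf{L}_\mathsf{u},\agg^\mathsf{L}_\mathsf{v}$ he uses before Duplicator updates $\gQ$; moreover Duplicator is protected by the local-aggregation rule forbidding her from ever being the one forced onto the singleton component carried by the freshly created edge. Using the symmetry of $F$, I would reduce the initial placements to the one or two essentially distinct cases depicted in \cref{fig:counterexample_slfwl}(a), and then maintain the invariant that $\gQ$ is an odd-size family of connected components none of which is a single edge joining the two pebbles. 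The argument is that each time Spoiler splits off a region, Duplicator --- knowing which pebble moved --- retreats to the larger part or to a triangle, and each subsequent "swap-and-remove" merges her chosen component back (the picture sequence (a, b, c, \dots) in \cref{fig:counterexample_slfwl}); hence Spoiler never reaches a state with a lone twisted edge under both pebbles. This case analysis is the routine-but-lengthy part of the proof.

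For the positive part I would exhibit an explicit Spoiler winning line in the simplified $\mathsf{LFWL(2)}$ game on $F$, the sequence (a, b, d) of \cref{fig:counterexample_slfwl}. Here Spoiler exploits exactly the two features unavailable in $\mathsf{SSWL}$: he first places the auxiliary pebble $w$ in $\gN_F^1(v)$, and only afterwards, having seen how Duplicator updated $\gQ$, does he decide whether to swap $w$ with $v$ or with $u$ --- and the $u$-swap is allowed to land $u$ on a neighbor of $v$. After Spoiler's splitting move Duplicator has only a short list of admissible responses, and I would check that for each of them one of the two swaps leaves a single-edge component incident to both $u$ and $v$, so Spoiler wins on that round. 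Finally, since $\mathsf{LFWL(2)}\preceq\mathsf{SLFWL(2)}$ by \cref{thm:2lfwl}, the same pair $G,H$ is distinguished by $\mathsf{SLFWL(2)}$ as well, completing all three bullets.

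The main obstacle is designing $F$ so that both analyses hold simultaneously: $F$ must contain a local gadget in which fencing a twisted edge genuinely \emph{requires} Spoiler either to commit to $w$ before choosing his pebble or to drag $u$ onto a neighbor of $v$, yet must be symmetric and rigid enough that the "announce-first" Spoiler of $\mathsf{SSWL}$ can always be countered by the retreat strategy above. Verifying that Duplicator's strategy in the $\mathsf{SSWL}$ game really covers every Spoiler line --- in particular that the local-aggregation "no singleton" rule always provides an escape and that no combination of $\agg^\mathsf{L}_\mathsf{u}$/$\agg^\mathsf{L}_\mathsf{v}$ moves breaks the parity invariant --- is the delicate bookkeeping; the $\mathsf{LFWL(2)}$ side is comparatively short once $F$ is fixed.
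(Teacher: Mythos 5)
Your proposal takes essentially the same route as the paper: the same base graph $F$ from \cref{fig:counterexample_slfwl}, the same reduction through the augmented F\"urer pair $\tilde G(F),\tilde H(F)$ and the simplified pebbling game of \cref{sec:pebbling_furer}, a retreat strategy for Duplicator in the $\mathsf{SSWL}$ game that exploits Spoiler's having to announce $\agg^\mathsf{L}_\mathsf{u}$ versus $\agg^\mathsf{L}_\mathsf{v}$ before $\gQ$ is updated, and the one-round Spoiler win (a,\,b,\,d) in the $\mathsf{LFWL(2)}$ game that exploits deferred commitment of which pebble absorbs $w$. Deducing the $\mathsf{SLFWL(2)}$ bullet from $\mathsf{LFWL(2)}\preceq\mathsf{SLFWL(2)}$ is a small, legitimate streamlining of the paper's parallel argument.

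One imprecision worth flagging: you describe the rule ``$\{\{v,w\}\}\notin\gQ$'' as \emph{protecting} Duplicator. It is the opposite --- it is a \emph{constraint} on Duplicator (she may not dodge into the trivial fresh-edge component), and it is there to faithfully encode that Duplicator's chosen $w$ in $\tilde H(F)$ must actually be adjacent to $v$. In the paper's $\mathsf{SSWL}$ analysis for this particular lemma the rule plays no role in Duplicator's retreat (it matters for Spoiler, and is load-bearing in \cref{thm:counterexample_lfwl_sswl}); her survival here rests entirely on the announce-first information and the symmetry of $F$. This does not break your plan, but be careful not to lean on that rule in the direction you stated when filling in the case analysis.
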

\begin{proof}
    The base graph is constructed in \cref{fig:counterexample_slfwl}.

    We first analyze the simplified pebbling game for algorithm $\mathsf{SSWL}$. Initially, Spoiler should place pebbles $u$ and $v$ on vertices of the graph. Due to symmetry, we can assume that Spoiler places $u$ on vertex 2 and places $v$ on vertex 3 (other nonequivalent cases are clearly not optimal). Duplicator then selects the bottom connected component split by pebbles $u,v$ (see \cref{fig:counterexample_slfwl}(a)). In the next round, Spoiler should place $w$ on vertex 5 to further split the connected component. By definition of $\mathsf{SSWL}$, he can play according to the rule of either $\agg^\mathsf{L}_\mathsf{u}$ or $\agg^\mathsf{L}_\mathsf{v}$. By symmetry, it suffices to analyze the case of $\agg^\mathsf{L}_\mathsf{u}$. Since Duplicator knows the information that Spoiler plays according to $\agg^\mathsf{L}_\mathsf{u}$, she selects the connected component in the lower right corner (see \cref{fig:counterexample_slfwl}(b)). Then, Spoiler should swap pebbles $v,w$ and leave $w$ outside the graph, which leads to the merging of multiple connected components. The resulting game, as illustrated in \cref{fig:counterexample_slfwl}(c), is equivalent to \cref{fig:counterexample_slfwl}(a) by symmetry. Therefore, Spoiler cannot win the game.

    We next analyze the simplified pebbling game for algorithm $\mathsf{LFWL(2)}$ or $\mathsf{SLFWL(2)}$. Initially, the game is the same as $\mathsf{SSWL}$ until reaching the state of \cref{fig:counterexample_slfwl}(b). Now it comes to the major difference: Duplicator does not know whether Spoiler will swap pebbles $u,w$ or swap pebbles $v,w$. Therefore, Duplicator can only choose either the bottom left component or the bottom right one \emph{at random}, which is equivalent by symmetry. Spoiler can then play against Duplicator's strategy and swap the pebbles so that after leaving pebble $w$ outside the graph, the connected component selected by Duplicator is not merged, as shown in \cref{fig:counterexample_slfwl}(d). Clearly, Spoiler can win the remaining game.
\end{proof}

\textbf{Insight into \cref{thm:counterexample_slfwl}}. \cref{thm:counterexample_slfwl} shows the inherent advantage of FWL-type algorithms compared with SWL algorithms, answering an open problem raised in \citet{frasca2022understanding}.

\begin{figure*}[!t]
    \small
    \centering
    \begin{tabular}{c|cccc}
         \includegraphics[width=0.15\textwidth]{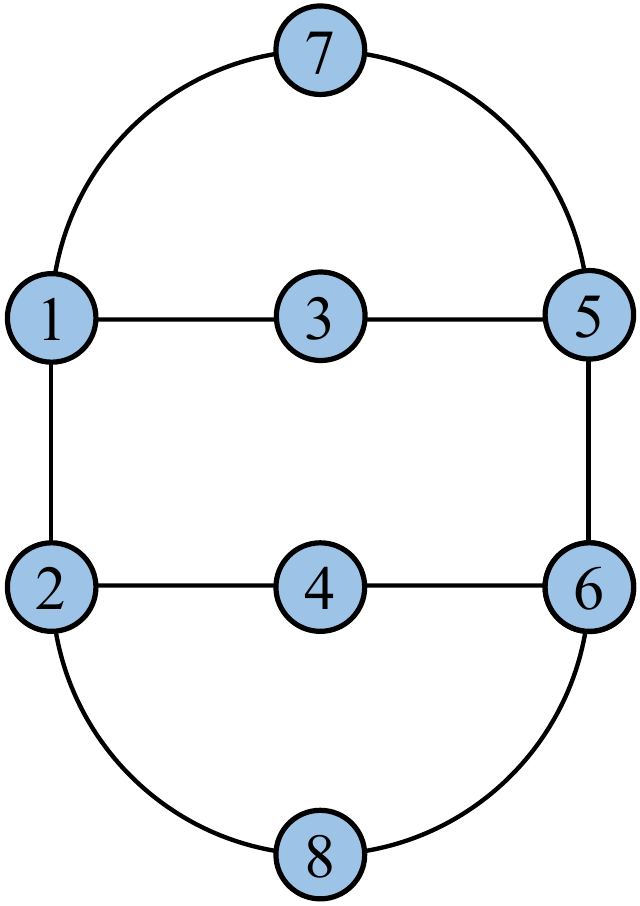} & \includegraphics[width=0.15\textwidth]{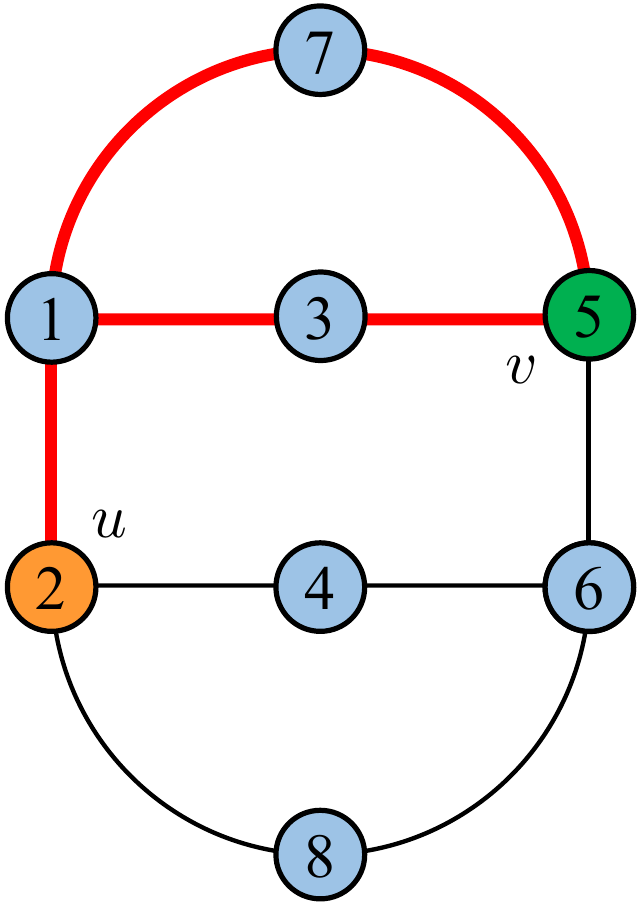} & \includegraphics[width=0.15\textwidth]{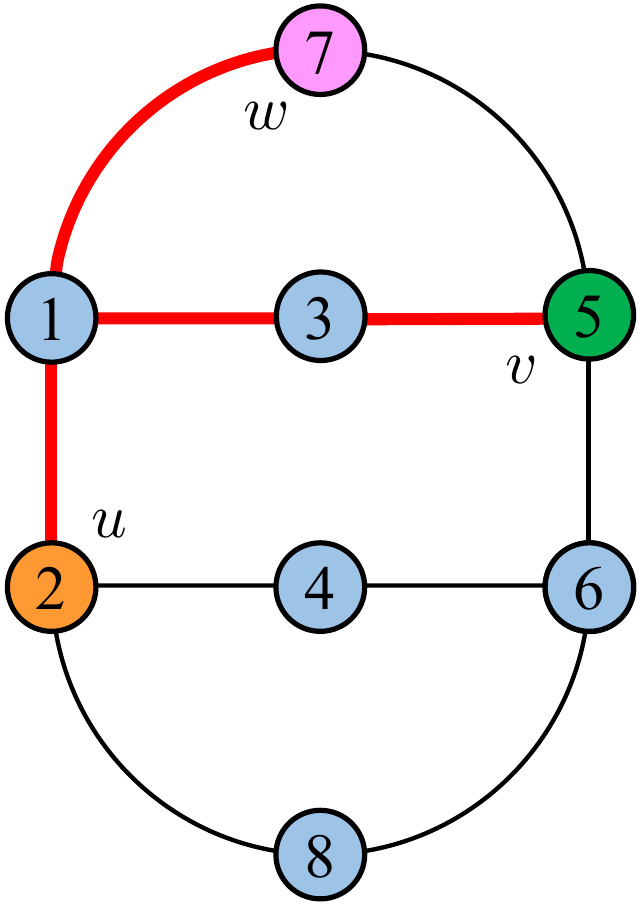} & \includegraphics[width=0.15\textwidth]{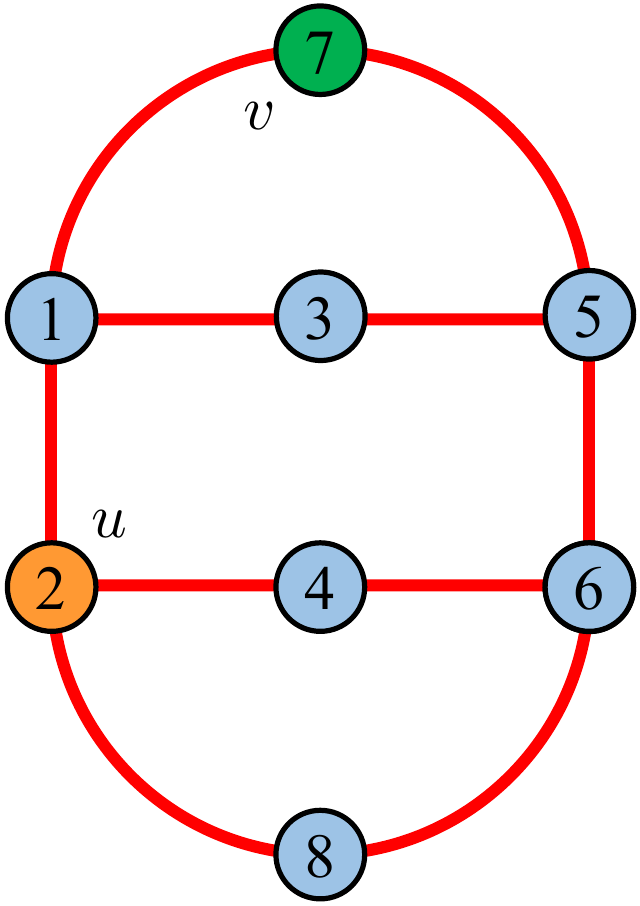} & \includegraphics[width=0.15\textwidth]{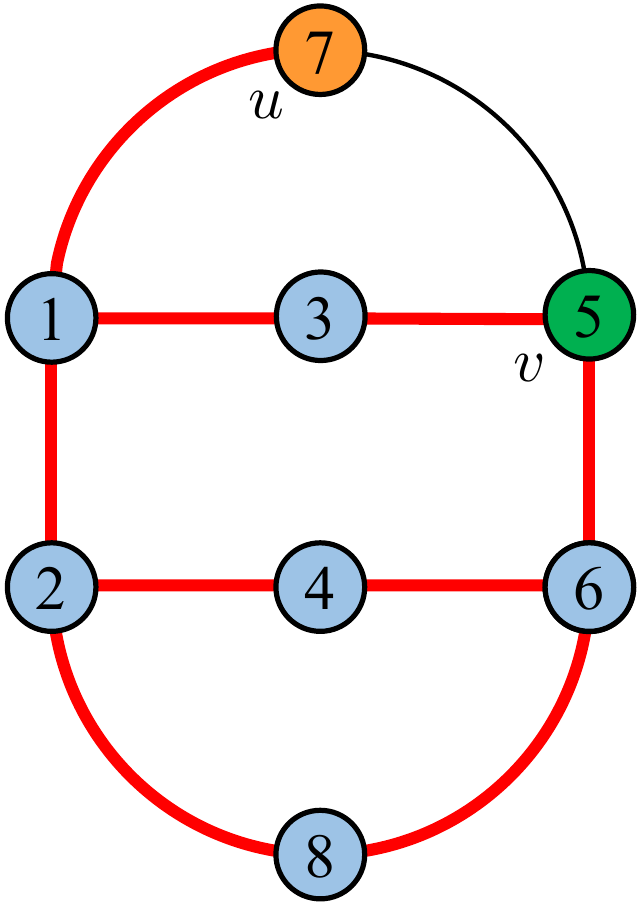}\\
         Base graph & (a) & (b) & (c) & (d) \\
         & \includegraphics[width=0.15\textwidth]{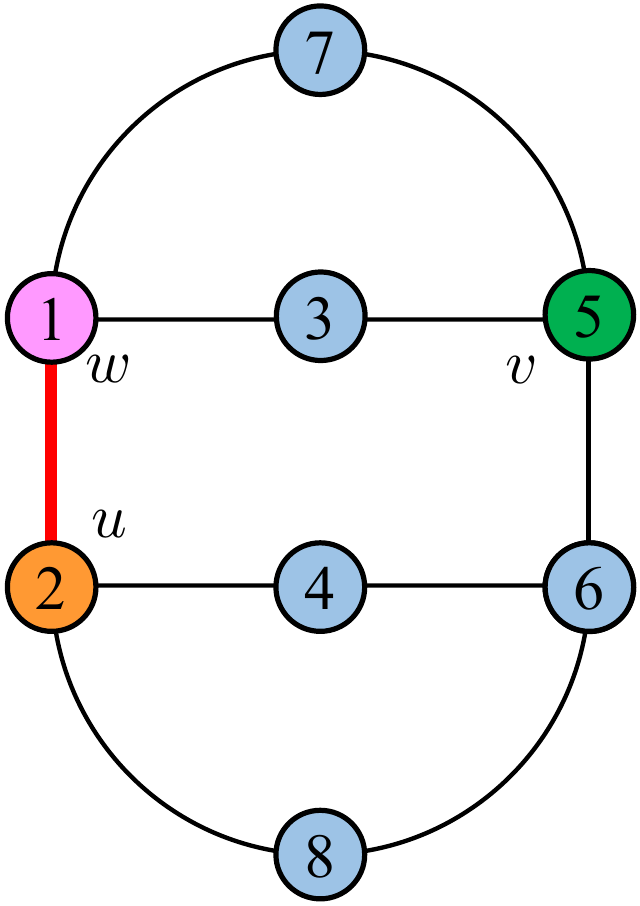} & \includegraphics[width=0.15\textwidth]{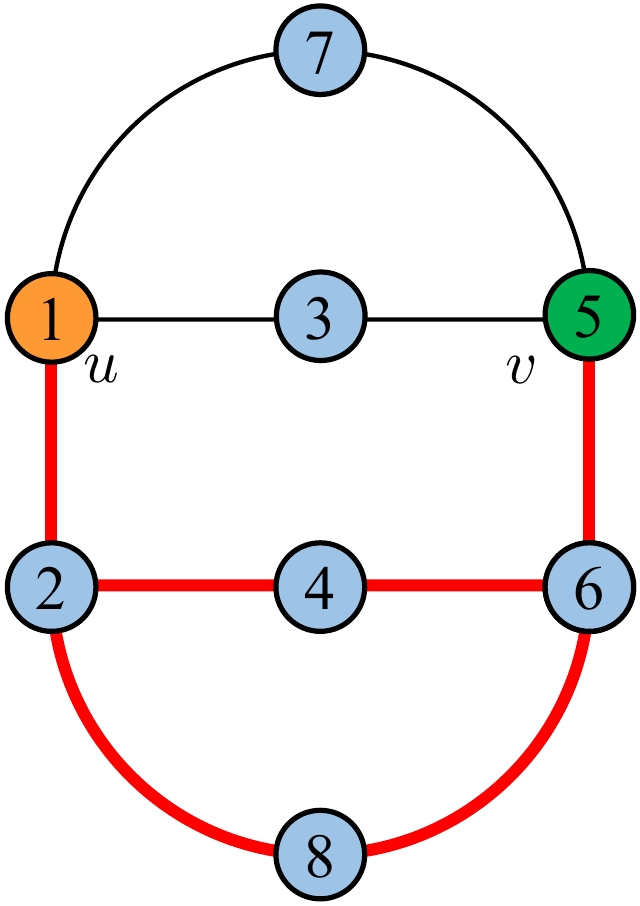} & \includegraphics[width=0.15\textwidth]{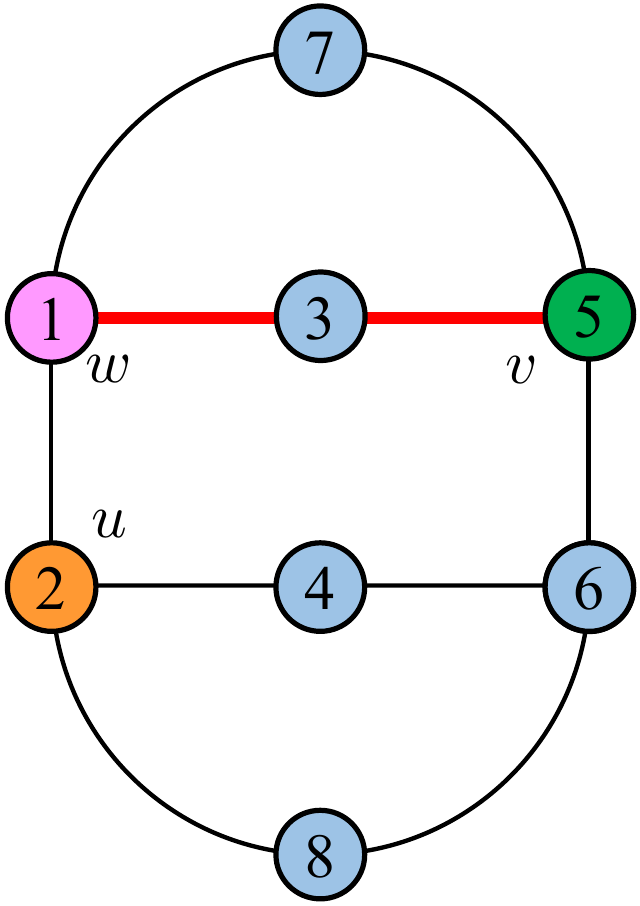} & \includegraphics[width=0.15\textwidth]{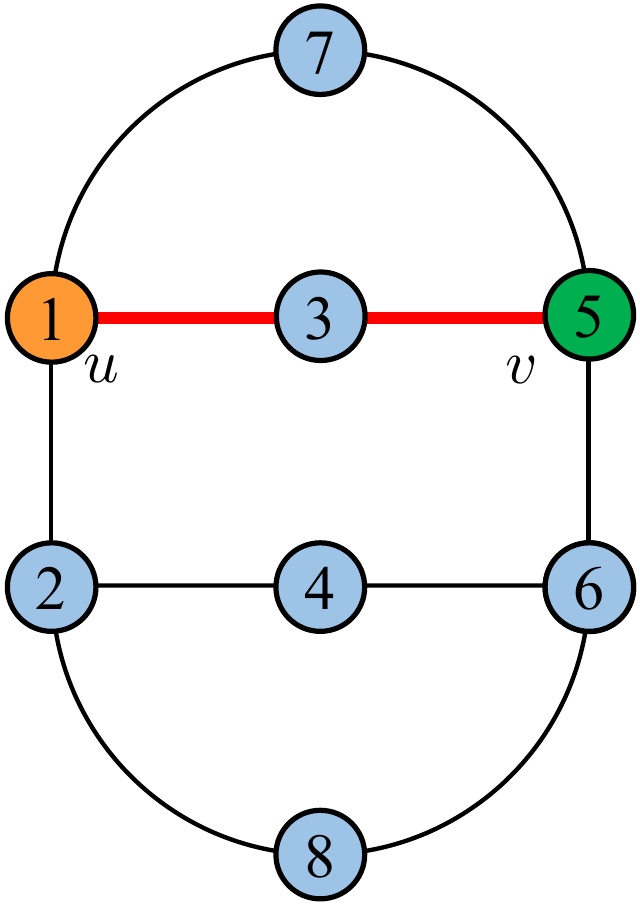}\\
         & (e) & (f) & (g) & (h)
    \end{tabular}
    \caption{Illustration of the proof of \cref{thm:counterexample_slfwl_2}. When Duplicator follows her optimal strategy, the game process of $\mathsf{LFWL(2)}$ may correspond to figures (a, b, c, ...) or figures (a, b, d, ...) depending on Spoiler's strategy. In both cases, Spoiler cannot win. Similarly, the game process of $\mathsf{GSWL}$ may correspond to figures (a, b, c, ...) or figures (a, e, f, ...), and Spoiler still cannot win. In contrast, the game process of $\mathsf{SSWL}$ or $\mathsf{SLFWL(2)}$ corresponds to figures (a, g, h) and Spoiler can eventually win.}
    \label{fig:counterexample_slfwl_2}
\end{figure*}
\begin{lemma}
\label{thm:counterexample_slfwl_2}
    There exist two non-isomorphic graphs such that 
    \begin{itemize}[topsep=0pt]
    \setlength{\itemsep}{0pt}
        \item $\mathsf{LFWL(2)}$ cannot distinguish them;
        \item $\mathsf{SLFWL(2)}$ can distinguish them;
        \item $\mathsf{GSWL}$ cannot distinguish them;
        \item $\mathsf{SSWL}$ can distinguish them.
    \end{itemize}
\end{lemma}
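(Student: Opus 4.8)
The plan is to take the two non-isomorphic graphs to be the augmented proper Fürer graph $\tilde G(F)$ and its twist $\tilde H(F)$ over a suitably chosen base graph $F$ (the one drawn in Figure~\ref{fig:counterexample_slfwl_2}); properness of $F$ together with Lemma~\ref{thm:furer_single_nonisomorphic} and Proposition~\ref{thm:furer_connectivity} guarantees that $\tilde G(F)$ and $\tilde H(F)$ are connected and non-isomorphic. By the reduction of Section~\ref{sec:pebbling_furer} it then suffices to analyze the \emph{simplified} pebbling game on $F$ for each of $\mathsf{GSWL}$, $\mathsf{SSWL}$, $\mathsf{LFWL(2)}$ and $\mathsf{SLFWL(2)}$, invoking Lemmas~\ref{thm:furer_pebble_match1} and~\ref{thm:furer_pebble_match2} so that WLOG pebbles always carry matching base vertices and Duplicator's job reduces to maintaining a parity-odd family $\gQ$ of connected components of $\mathsf{CC}_{\gS}(F)$.

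The graph $F$ must be engineered around a distinguished vertex $z$ that Spoiler needs to reach with pebble $u$ in order to fence in the twisted edge, subject to two requirements. First, $z$ should be a neighbor of the natural placement of $u$ but \emph{not} of the natural placement of $v$: this is what separates $\mathsf{SLFWL(2)}$ from $\mathsf{LFWL(2)}$, since the symmetric localization $\gN_F^1(u)\cup\gN_F^1(v)$ lets Spoiler put the auxiliary pebble $w$ on $z$ and transfer $u$ there, whereas the one-sided localization $\gN_F^1(v)$ of $\mathsf{LFWL(2)}$ forbids placing $w$ on $z$ at all. Second, arriving at $z$ must split off a single-edge connected component $\{\{u,z\}\}$: this is what separates $\mathsf{SSWL}$ from $\mathsf{GSWL}$, because the local move $\agg^\mathsf{L}_\mathsf{v}$ used by $\mathsf{SSWL}$ comes with the extra game rule that Duplicator may not retreat to that one-edge component, while the global move $\agg^\mathsf{G}_\mathsf{v}$ available to $\mathsf{GSWL}$ carries no such restriction, so there Duplicator escapes by selecting exactly $\{\{u,z\}\}$ and then watching it merge away once $w$ leaves the graph. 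All four analyses start from the common configuration in Figure~\ref{fig:counterexample_slfwl_2}(a), where pebbles $u,v$ have been placed so as to isolate the region carrying the twisted edge.

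Concretely I would carry out: (1) the routine verification that $F$ is proper and fix $\tilde G(F)$, $\tilde H(F)$; (2) the $\mathsf{SSWL}$ side — from (a), Spoiler plays $\agg^\mathsf{L}_\mathsf{v}$ to move $u$ onto $z$ (configuration (g)), the local-aggregation constraint prevents Duplicator from choosing $\{\{u,z\}\}$, and after the merge one more round (h) forces a one-edge $\{\{u,v\}\}$ into $\gQ$, so Spoiler wins; (3) the $\mathsf{SLFWL(2)}$ side — the same trace (a, g, h) works, now because $z\in\gN_F^1(u)$ lets Spoiler place $w$ on $z$ and, not knowing whether Spoiler will swap $u,w$ or $v,w$, Duplicator cannot set up a safe $\gQ$ (the same swap-ambiguity mechanism as in the proof of Lemma~\ref{thm:counterexample_slfwl}); (4) the $\mathsf{GSWL}$ side — enumerate Spoiler's essentially distinct openings at (a) up to the symmetries of $F$ (a global move of $u$ leading to (e) and then merging at (f), or a local $\agg^\mathsf{L}_\mathsf{u}$ move of $v$ leading to (b) and (c)), and in each line exhibit a Duplicator bookkeeping for $\gQ$ that always keeps a component of size $\ge 2$; (5) the $\mathsf{LFWL(2)}$ side — since $z\notin\gN_F^1(v)$ confines Spoiler's auxiliary pebble to $\gN_F^1(v)$, run essentially the same Duplicator argument as in step (4) (traces (a, b, c) and (a, b, d)) to show the twisted edge is never isolated.

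The hard part will be steps (4) and (5): proving that Duplicator genuinely survives requires an essentially exhaustive case analysis of Spoiler's strategies on $F$, tracking $\gQ$ through every add/remove/swap of a pebble and checking that it never collapses to a single edge adjacent to $u$; the delicacy is in tuning $F$ so that $z$ is simultaneously non-adjacent to $v$'s placement, adjacent to $u$'s placement, and isolating of a single edge, and so that no alternative Spoiler detour — re-placing $v$ through repeated local moves, exploiting the $\mathsf{VS}$ pooling order, or the FWL swap choice — circumvents these constraints. Verifying this robustness is where the construction, rather than the argument, does the real work.
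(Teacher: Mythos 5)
Your proposal matches the paper's proof essentially step for step: same augmented Fürer construction over the base graph of Figure~\ref{fig:counterexample_slfwl_2}, same reduction to the simplified pebbling game, and the same central design idea — a distinguished vertex (vertex $1$ in the figure) that is adjacent to pebble $u$'s placement but outside $\gN_F^1(v)$ and that cuts off the single-edge component $\{\{1,2\}\}$, so that $\agg^\mathsf{L}_\mathsf{v}$ forbids Duplicator's escape while $\agg^\mathsf{G}_\mathsf{v}$ does not and $\mathsf{LFWL(2)}$ cannot reach the vertex at all. The only cosmetic difference is that you give an explicit $\mathsf{SLFWL(2)}$ trace, whereas the paper leaves that case to the already-proved inclusion $\mathsf{SSWL}\preceq\mathsf{SLFWL(2)}$.
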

\begin{proof}
    The base graph is constructed in \cref{fig:counterexample_slfwl_2}.

    We first analyze the simplified pebbling game for algorithm $\mathsf{LFWL(2)}$. Initially, Spoiler should place pebbles $u$ and $v$ on vertices of the graph. Due to symmetry, there are mainly two cases we need to consider, as shown in \cref{fig:counterexample_slfwl_2}(a) and \cref{fig:counterexample_slfwl_2}(f), respectively. Here, we only consider the case of \cref{fig:counterexample_slfwl_2}(a), where pebble $u$ is placed on vertex 2 and pebble $v$ is placed on vertex 5; the other case is similar to analyze. Duplicator will respond by selecting the top connected component (\cref{fig:counterexample_slfwl_2}(a)). In the next round, Spoiler should place pebble $w$ adjacent to pebble $v$. Clearly, he should place it on vertex 3 or 7, which is equivalent. Assume that he places $w$ on vertex 7. Duplicator can easily respond by choosing the larger component (\cref{fig:counterexample_slfwl_2}(b)). According to the game rule of $\mathsf{LFWL(2)}$, he can either swap pebbles $v,w$ or swap pebbles $u,w$, and then leaves $w$ outside the graph. As shown in \cref{fig:counterexample_slfwl_2}(c) and \cref{fig:counterexample_slfwl_2}(d), in both cases the connected components get merged after leaving pebble $w$. Clearly, Spoiler cannot win.

    We next turn to algorithm $\mathsf{GSWL}$. Initially, the game is similar (\cref{fig:counterexample_slfwl_2}(a)). Now Spoiler has the additional choice to place pebble $w$ on vertex 1 according to the game rule of $\agg^\mathsf{G}_\mathsf{v}$. In this case, Duplicator responds by choosing the connected component $\{\{1,2\}\}$ which contains only one edge (see \cref{fig:counterexample_slfwl_2}(e)). Note that Duplicator does not lose the game. Then Spoiler will swap pebbles $u$ and $w$ and leaves $w$ outsides the graph, yielding \cref{fig:counterexample_slfwl_2}(f). Spoiler cannot win the game either.

    We finally turn to algorithm $\mathsf{SSWL}$. This time Spoiler can place pebble $w$ adjacent to pebble $u$ according to the game rule of $\agg^\mathsf{L}_\mathsf{v}$, and Duplicator can only choose either the connected component $\{\{1,3\},\{3,5\}\}$ or $\{\{1,7\},\{7,5\}\}$ (\cref{fig:counterexample_slfwl_2}(g)). Note that She cannot choose the component $\{\{1,2\}\}$ according to the game rule. After swapping pebbles $u,w$ and leaving $w$ outside the graph (\cref{fig:counterexample_slfwl_2}(h)), the remaining game is quite easy for Spoiler and he can eventually win.
\end{proof}

\textbf{Insight into \cref{thm:counterexample_slfwl_2}}. \cref{thm:counterexample_slfwl_2} shows the inherent advantages of ``symmetrized'' WL algorithms compared with WL algorithms that only aggregate local information of \emph{one} vertex.

\begin{figure*}[!t]
    \small
    \centering
    \begin{tabular}{c|ccccc}
         \includegraphics[width=0.145\textwidth]{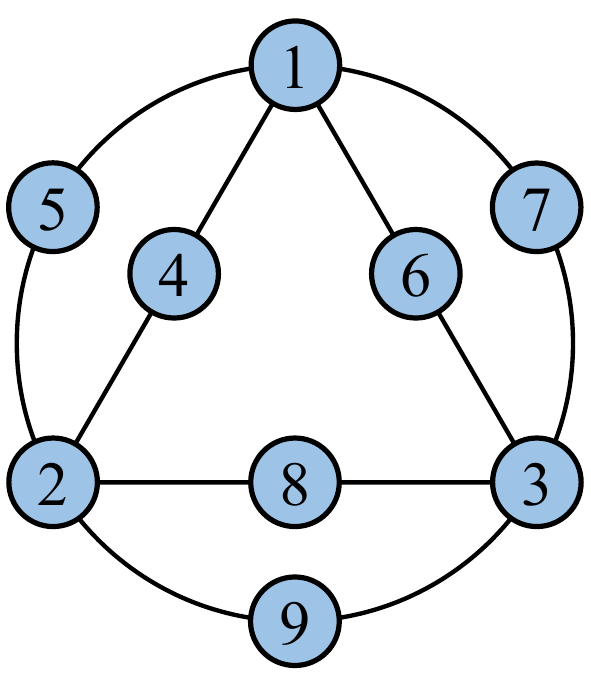} & \includegraphics[width=0.145\textwidth]{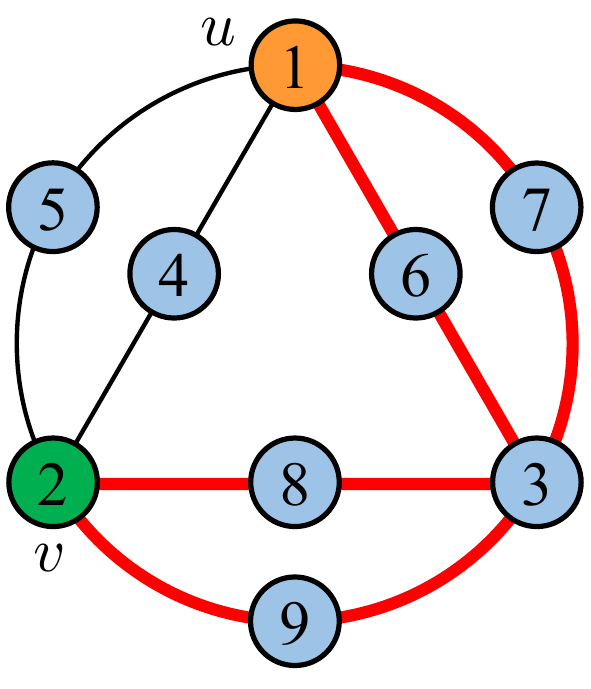} & \includegraphics[width=0.145\textwidth]{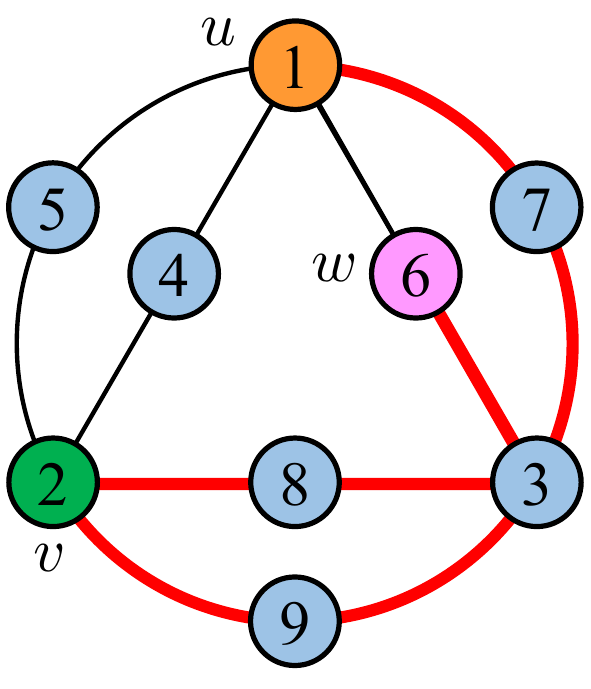} & \includegraphics[width=0.145\textwidth]{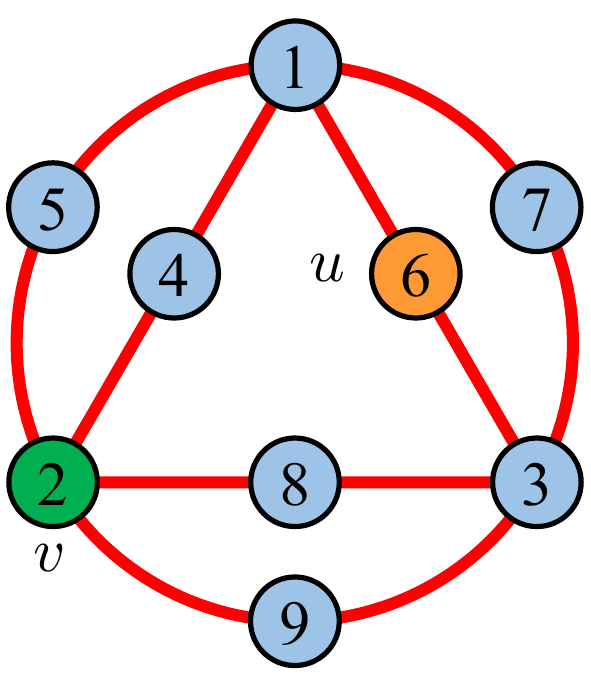} & \includegraphics[width=0.145\textwidth]{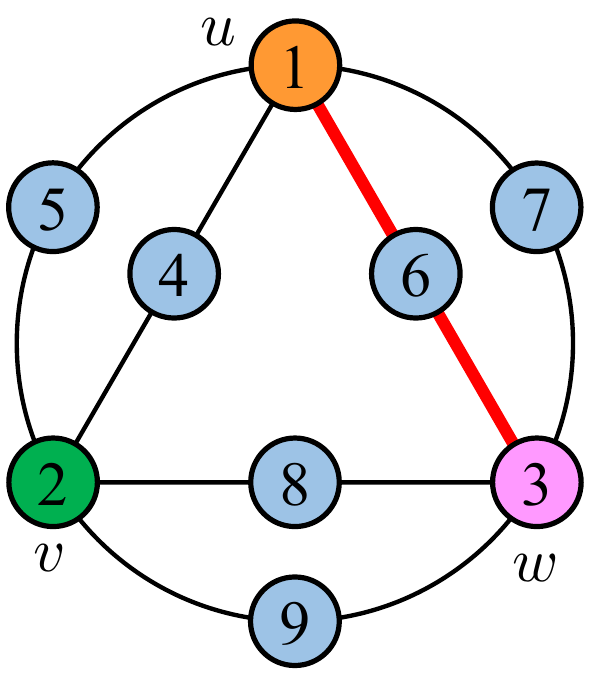} & \includegraphics[width=0.145\textwidth]{figure/counterexample_fwl_fwl2.pdf}\\
         Base graph & (a) & (b) & (c) & (d) & (e) \\
    \end{tabular}
    \caption{Illustration of the proof of \cref{thm:counterexample_fwl}. When Duplicator follows her optimal strategy, the game process of $\mathsf{SLFWL(2)}$ may correspond to figures (a, b, c, ...), and Spoiler cannot win. In contrast, the game process of $\mathsf{FWL(2)}$ corresponds to figures (a, d, e) and Spoiler can eventually win.}
    \label{fig:counterexample_fwl}
\end{figure*}
\begin{lemma}
\label{thm:counterexample_fwl}
    There exist two non-isomorphic graphs such that 
    \begin{itemize}[topsep=0pt]
    \setlength{\itemsep}{0pt}
        \item $\mathsf{SLFWL(2)}$ cannot distinguish them;
        \item $\mathsf{FWL(2)}$ can distinguish them.
    \end{itemize}
\end{lemma}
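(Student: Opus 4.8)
The plan is to instantiate the generalized F\"urer construction of \cref{sec:proof_separation_part1} with the base graph $F$ depicted in \cref{fig:counterexample_fwl}, and to take the two non-isomorphic graphs to be the augmented F\"urer graph $\tilde G(F)$ and its twisted counterpart $\tilde H(F)$ from \cref{def:aug_furer}. Since $F$ is a proper base graph, \cref{thm:furer_connectivity} guarantees that $\tilde G(F)$ and $\tilde H(F)$ are connected graphs, and since the chains $C_u$ in \cref{def:aug_furer} have pairwise distinct lengths, every isomorphism between the augmented graphs must map each $\meta_F(u)$ to itself and hence restrict to a proper isomorphism between the underlying F\"urer graphs, which \cref{thm:furer_single_nonisomorphic} forbids; so $\tilde G(F)\not\simeq\tilde H(F)$. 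By the equivalence between the simplified and original pebbling games established at the end of \cref{sec:pebbling_furer}, together with \cref{thm:fwl_pebble_game}, it then suffices to show (i) Duplicator wins the simplified pebbling game for $\mathsf{SLFWL(2)}$ on $F$, and (ii) Spoiler wins the simplified pebbling game for $\mathsf{FWL(2)}$ on $F$.

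For part (i) I would first use the symmetry of $F$ to reduce Spoiler's opening move—the simultaneous placement of pebbles $u$ and $v$—to a small number of essentially distinct configurations, of which \cref{fig:counterexample_fwl}(a) is the only promising one (for the remaining configurations Duplicator wins immediately via a trivial component-merging strategy). Duplicator's invariant will be to maintain an odd-size family $\gQ\subset\mathsf{CC}_\gS(F)$ containing no single-edge component, always keeping a ``large'' component selected. The crucial point is the locality restriction of $\mathsf{SLFWL(2)}$: in every round Spoiler may place the auxiliary pebble $w$ only inside $\gN_F^1(u)\cup\gN_F^1(v)$. By a short case analysis over Spoiler's admissible moves I would verify that whenever he splits the selected component by such a local placement (\cref{fig:counterexample_fwl}(b)) and then swaps $w$ with $u$ or $v$ and removes it, the pieces merge back and the resulting position is—by symmetry of $F$—equivalent to a previously visited state (\cref{fig:counterexample_fwl}(c)$\,\simeq\,$(a)). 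Hence Spoiler never isolates an edge and Duplicator wins, so $\mathsf{SLFWL(2)}$ cannot distinguish $\tilde G(F)$ and $\tilde H(F)$.

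For part (ii) I would exhibit Spoiler's winning line exploiting the \emph{global} move of $\mathsf{FWL(2)}$. Starting from the same configuration \cref{fig:counterexample_fwl}(a), Spoiler places $w$ on a vertex of $F$ lying outside both local neighborhoods $\gN_F^1(u)$ and $\gN_F^1(v)$—a move simply unavailable to $\mathsf{SLFWL(2)}$—chosen so that, after Duplicator's forced response and one swap/removal step, the family $\gQ$ is pinned to contain a component that Spoiler can then split into a single edge (\cref{fig:counterexample_fwl}(d), then (e)). Writing out this line and checking that Duplicator has no escape—she must either select a one-edge component, losing at once, or a configuration from which Spoiler wins the next round—completes the argument via \cref{thm:fwl_pebble_game}.

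The main obstacle is the design of $F$: one must arrange that the \emph{only} way for Spoiler to fence out a twisted edge genuinely requires a vertex outside $\gN_F^1(u)\cup\gN_F^1(v)$, and then verify that Duplicator's component-maintenance strategy survives \emph{every} local move of Spoiler—this is precisely the case analysis behind \cref{fig:counterexample_fwl}(a)--(c), which must be carried out hand in hand with the parity bookkeeping for $\gQ$ prescribed by the simplified game. Once $F$ is chosen correctly, the remaining steps are routine given the machinery of \cref{sec:pebbling_furer}.
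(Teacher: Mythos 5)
Your proposal takes essentially the same route as the paper's proof: the same base graph $F$ from \cref{fig:counterexample_fwl}, the same reduction to the simplified pebbling game via the augmented F\"urer graphs, the same Duplicator invariant (avoid selecting a single-edge component) for $\mathsf{SLFWL(2)}$, and the same observation that $\mathsf{FWL(2)}$'s global move lets Spoiler place $w$ on the unique vertex (vertex 3) outside $\gN_F^1(u)\cup\gN_F^1(v)$, thereby fencing a twisted edge. The paper carries out the finite case analysis you outline but otherwise the structure is identical.
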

\begin{proof}
    The base graph in constructed in \cref{fig:counterexample_fwl}.
    
    We first analyze the simplified pebbling game for algorithm $\mathsf{SLFWL(2)}$. Initially, Spoiler should place pebbles $u$ and $v$ on vertices of the graph. Due to symmetry, we can assume that he places pebble $u$ on vertex 1 and places pebble $v$ on vertex 2. Other nonequivalent choices are clearly not optimal. Duplicator then responds by choosing the largest connect component as shown in \cref{fig:counterexample_fwl}(a). In the next round, Spoiler can place pebble $w$ on any vertex in $\gN_F^1(1)\cup\gN_F^1(2)$, namely, any vertex except vertex 3. Due to symmetry, we can assume that he places $w$ on vertex 6. Duplicator can easily respond according to \cref{fig:counterexample_fwl}(b). No matter how Spoiler swaps pebbles, as long as pebble $w$ is left outside the graph, multiple connected components then merge as shown in \cref{fig:counterexample_fwl}(c) and Spoiler has no idea how to win.

    We next turn to algorithm $\mathsf{FWL(2)}$. Starting from \cref{fig:counterexample_fwl}(a), this time Spoiler can place pebble $w$ on vertex 3. Then Duplicator should choose an odd number of connected components from the four components: $\{\{1,6\},\{6,3\}\}$, $\{\{1,7\},\{7,3\}\}$, $\{\{2,8\},\{8,3\}\}$, and $\{\{2,9\},\{9,3\}\}$. Regardless of Duplicator's choice, after swapping either pebbles $u,w$ or pebbles $v,w$, the chosen component are always surrounded by pebbles $u$ and $v$ (\cref{fig:counterexample_fwl}(e)). Therefore, Spoiler can easily win the remaining game.
\end{proof}

\textbf{Insight into \cref{thm:counterexample_fwl}}. \cref{thm:counterexample_fwl} shows there is an inherent gap between 2-FWL and all $O(nm)$-complexity algorithms considered in this paper. It can also be used to settle the open problem raised in \citet{frasca2022understanding}.

\begin{figure*}
    \small
    \centering
    \begin{tabular}{c|cc}
         & \includegraphics[width=0.3\textwidth]{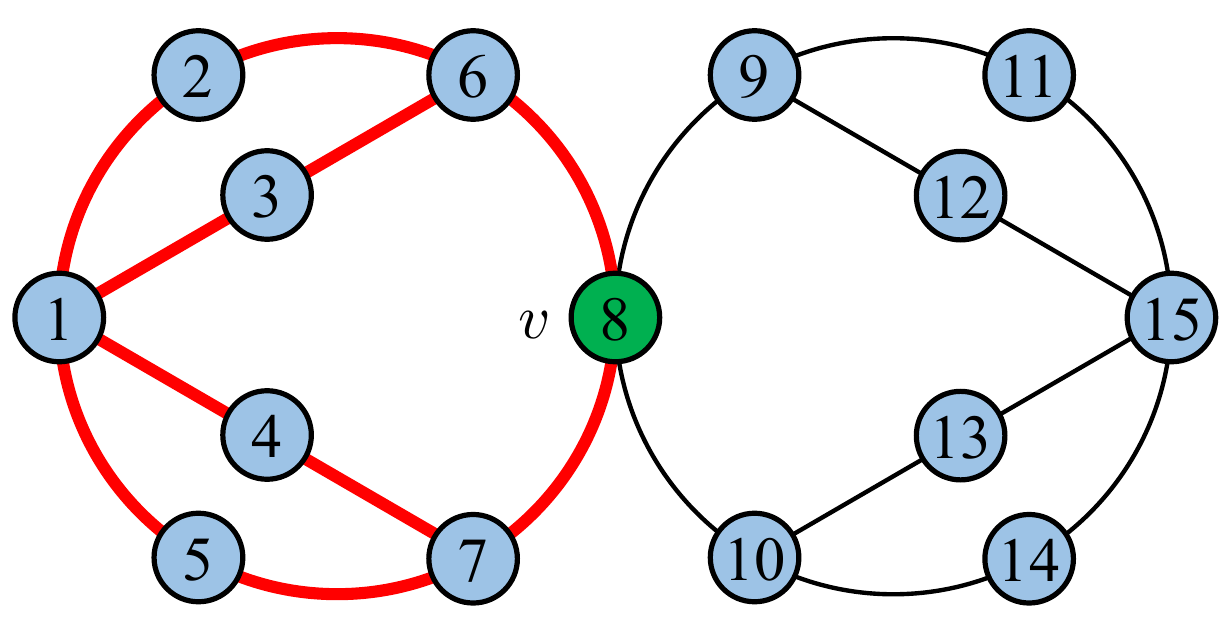} & \includegraphics[width=0.3\textwidth]{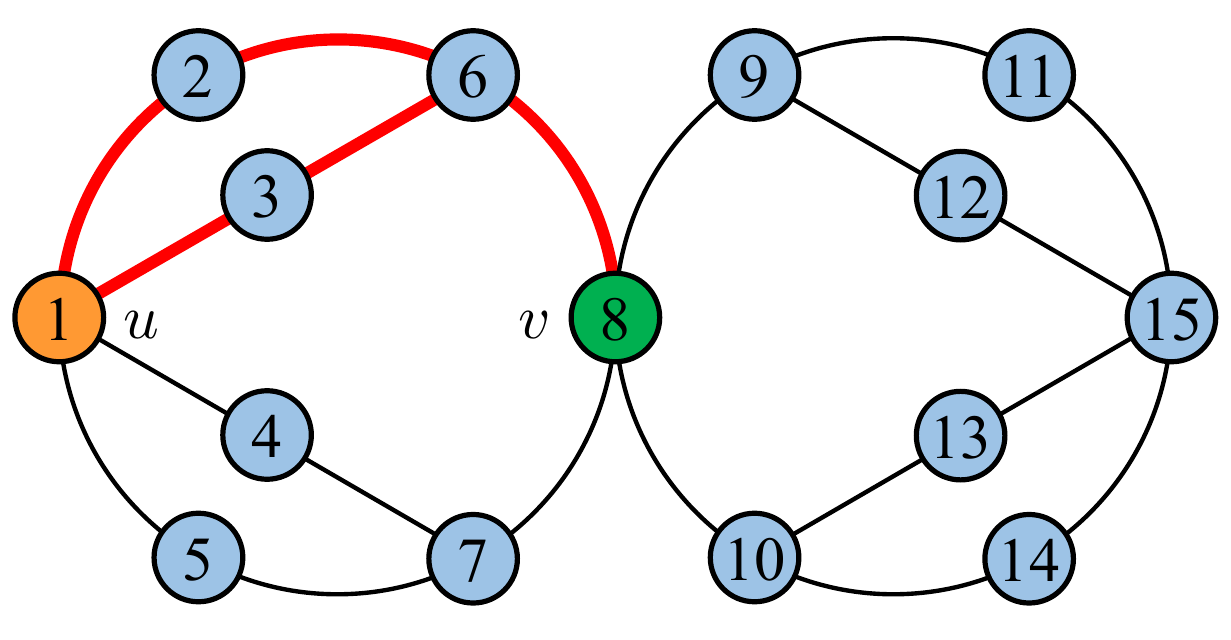}\\
         & (a) & (b) \\
         & \includegraphics[width=0.3\textwidth]{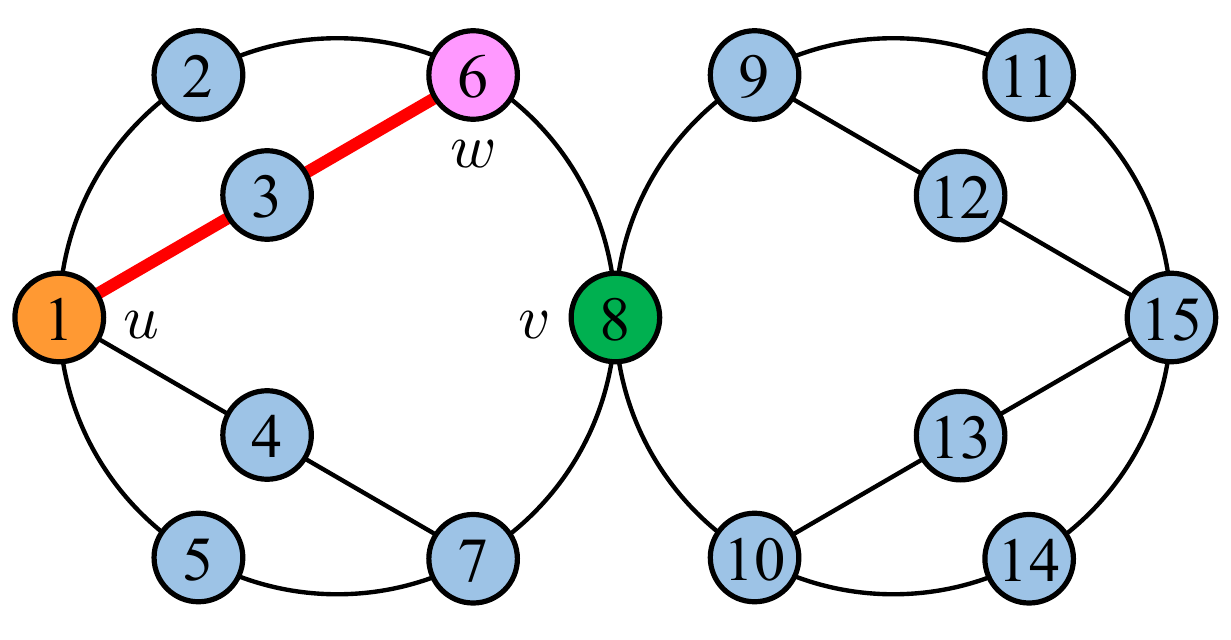} & \includegraphics[width=0.3\textwidth]{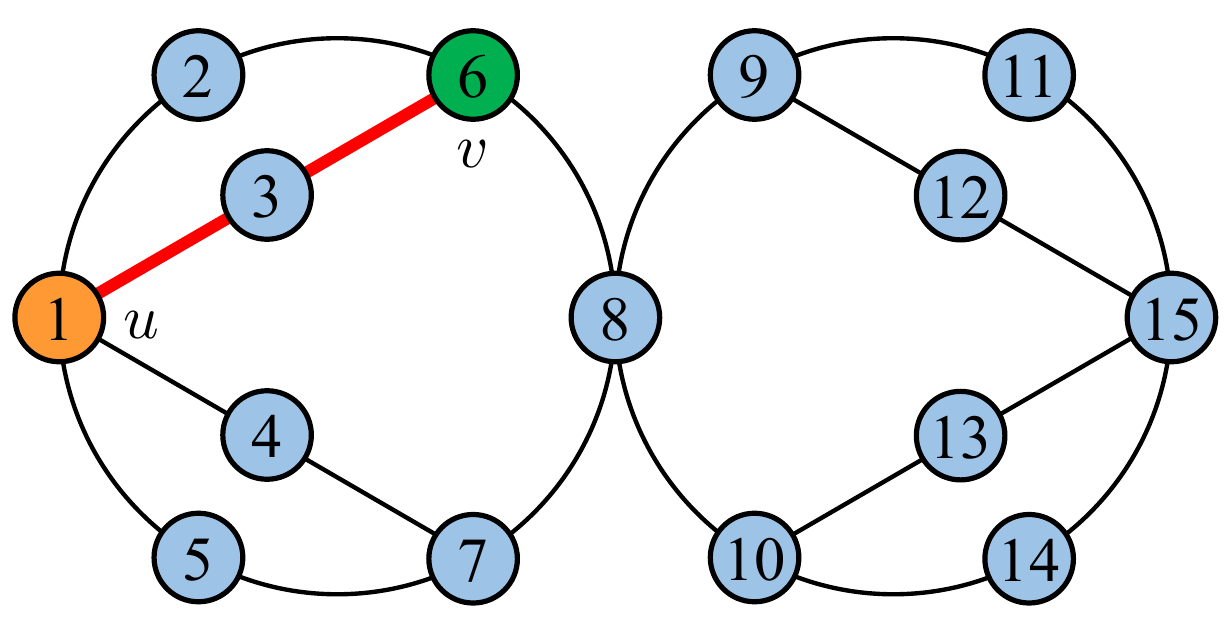}\\
         & (c) & (d)\\
         & \includegraphics[width=0.3\textwidth]{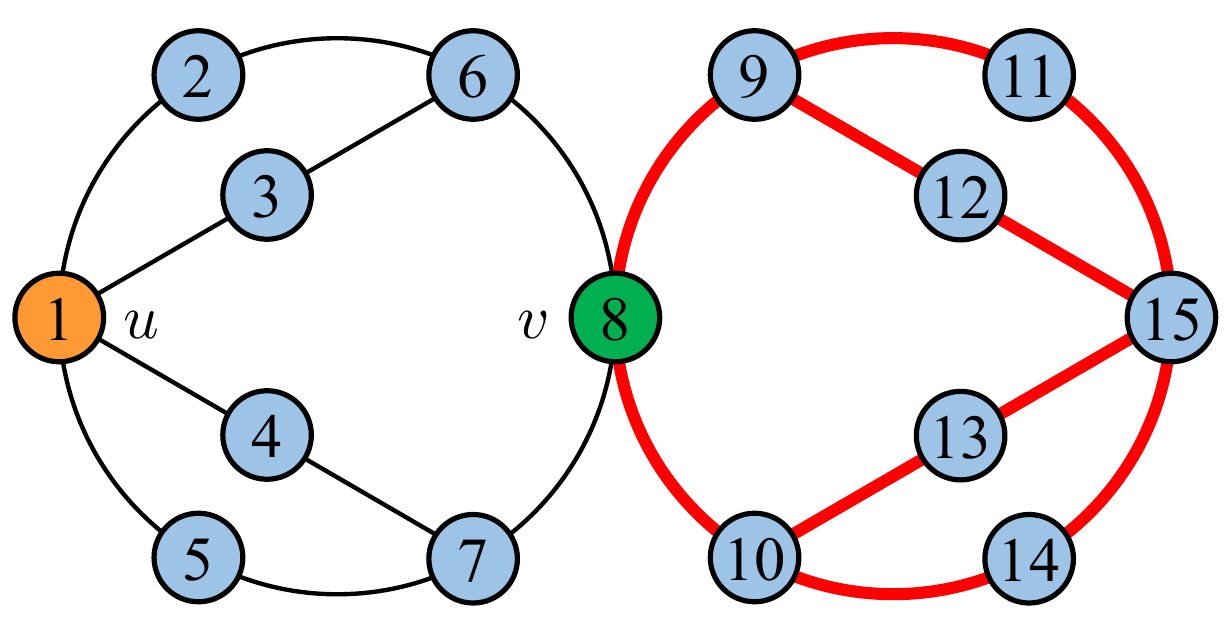} & \includegraphics[width=0.3\textwidth]{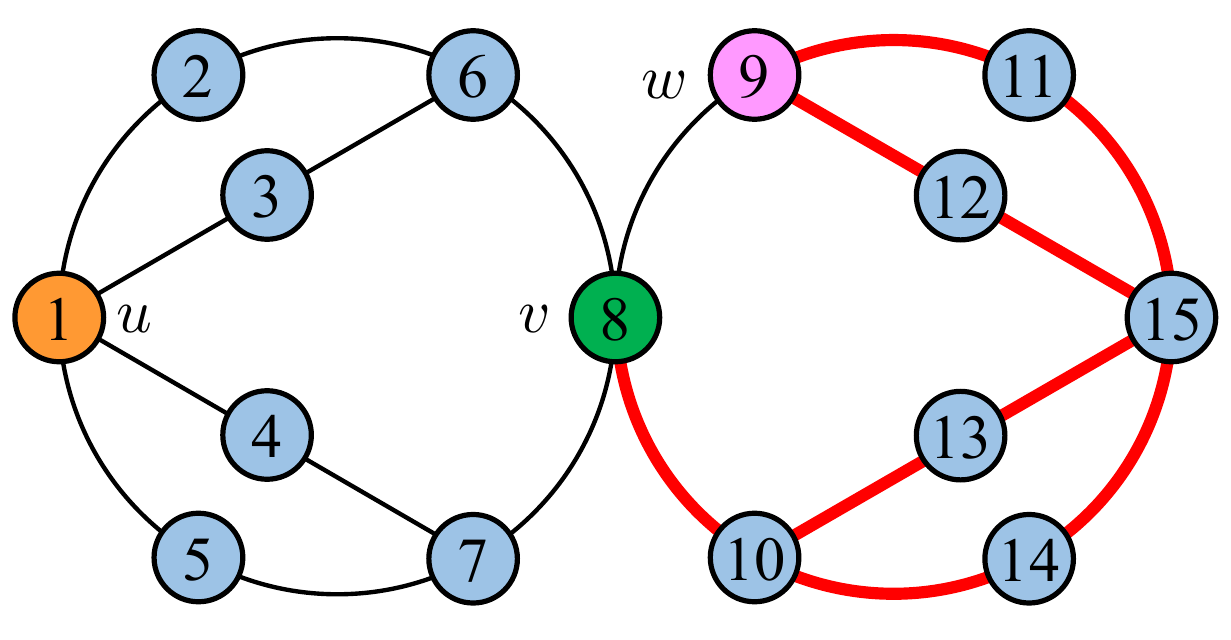}\\
         & (e) & (f) \\
         \includegraphics[width=0.3\textwidth]{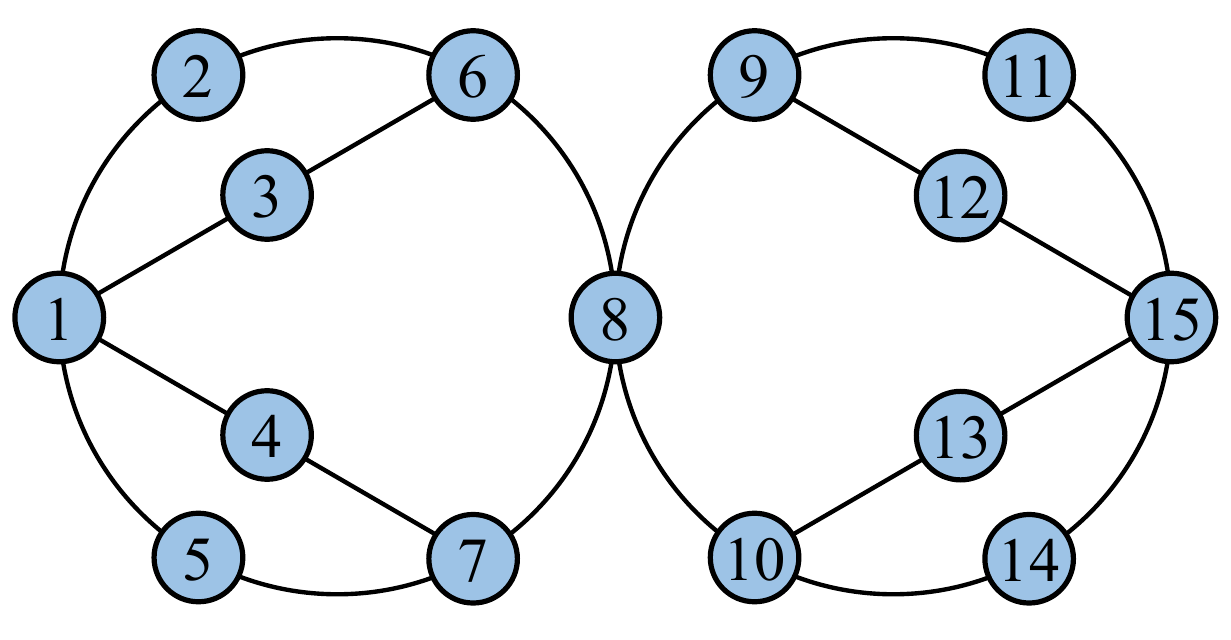} & \includegraphics[width=0.3\textwidth]{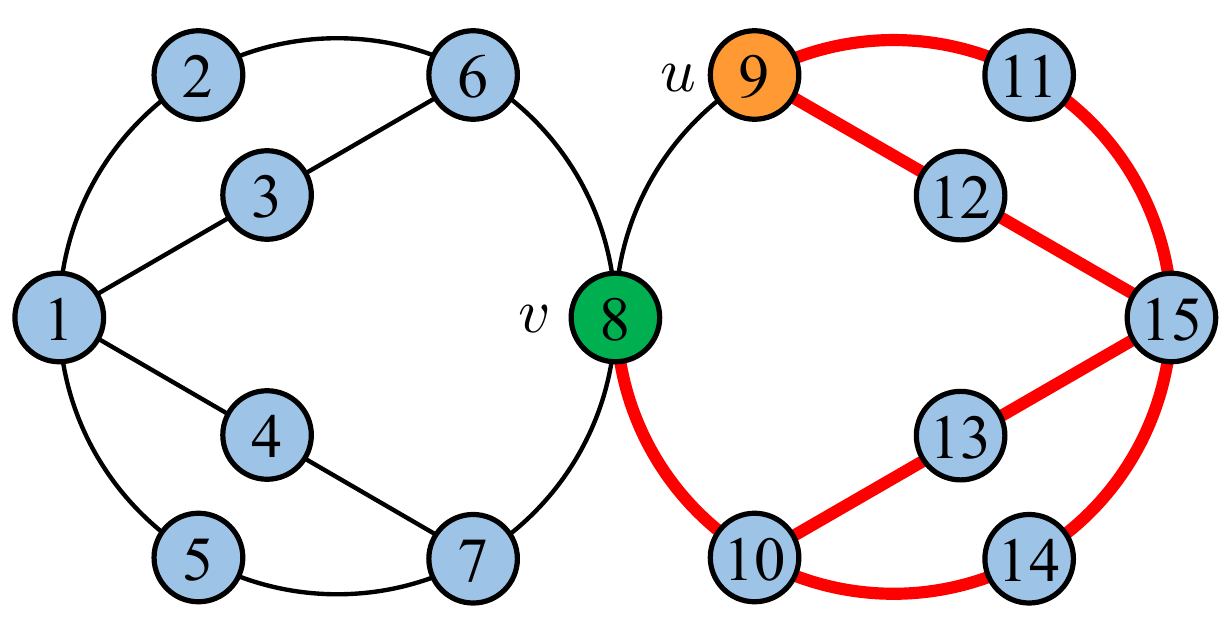} & \includegraphics[width=0.3\textwidth]{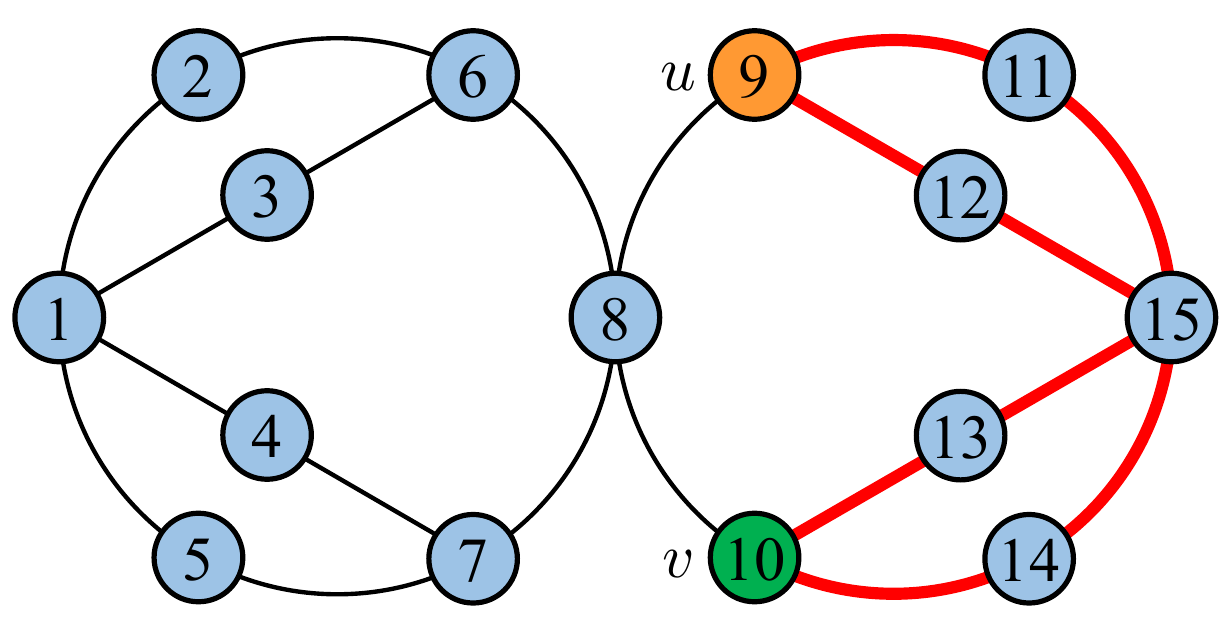}\\
         Base graph & (g) & (h) \\
         & \includegraphics[width=0.3\textwidth]{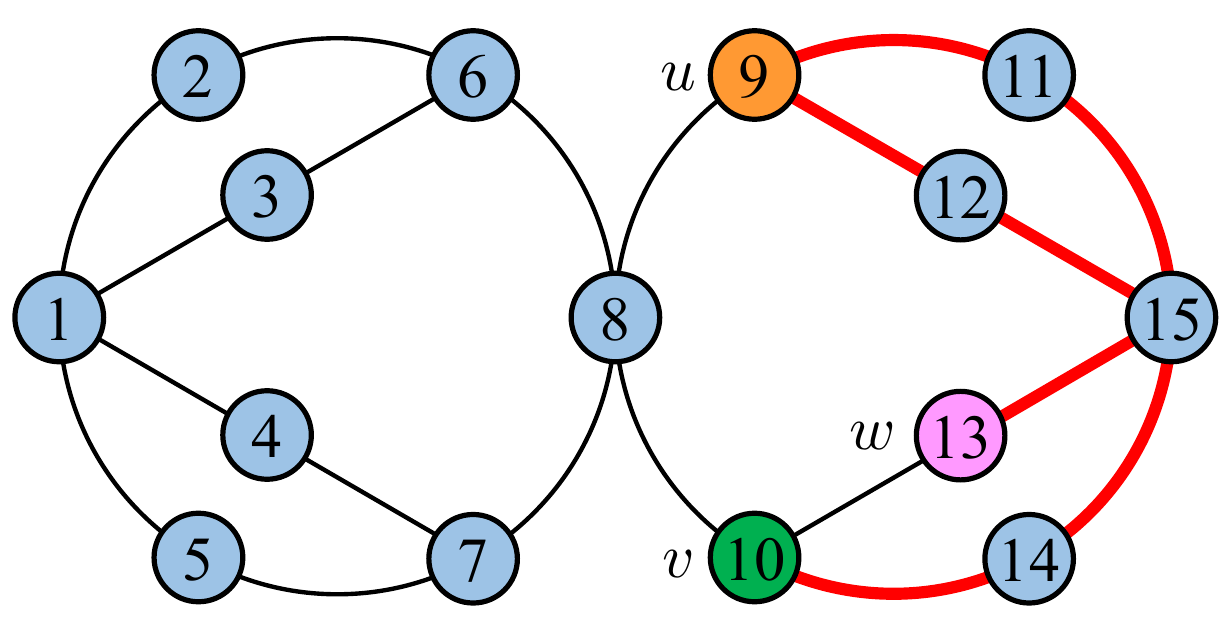} & \includegraphics[width=0.3\textwidth]{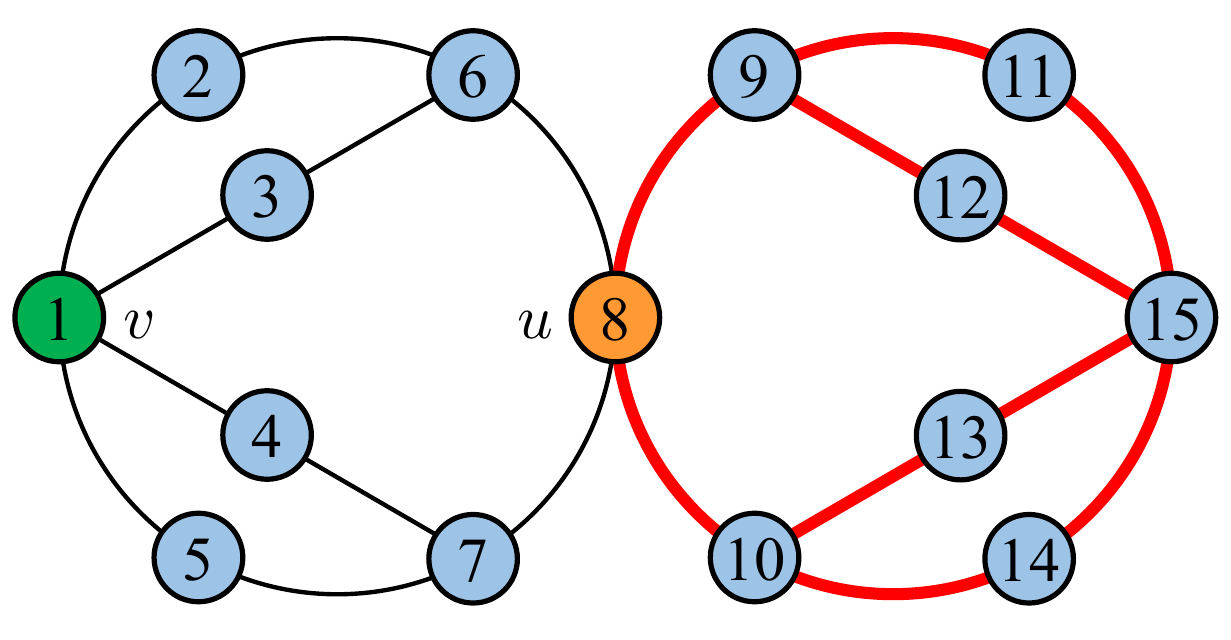} \\
         & (i) & (j) \\
         & \includegraphics[width=0.3\textwidth]{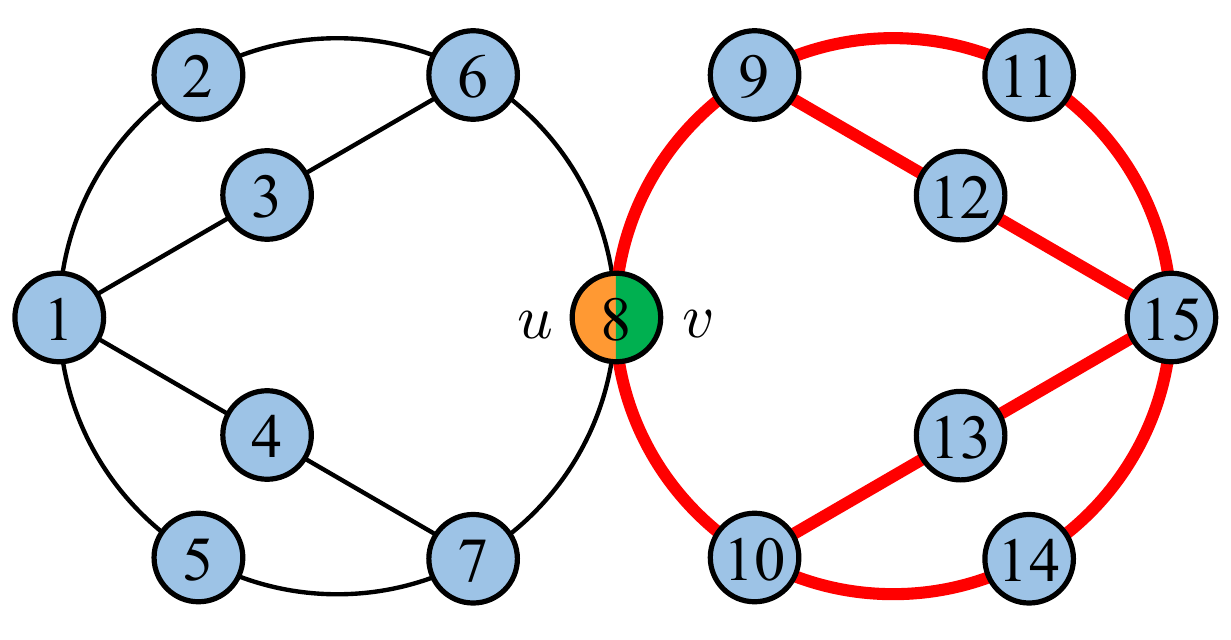} & \includegraphics[width=0.3\textwidth]{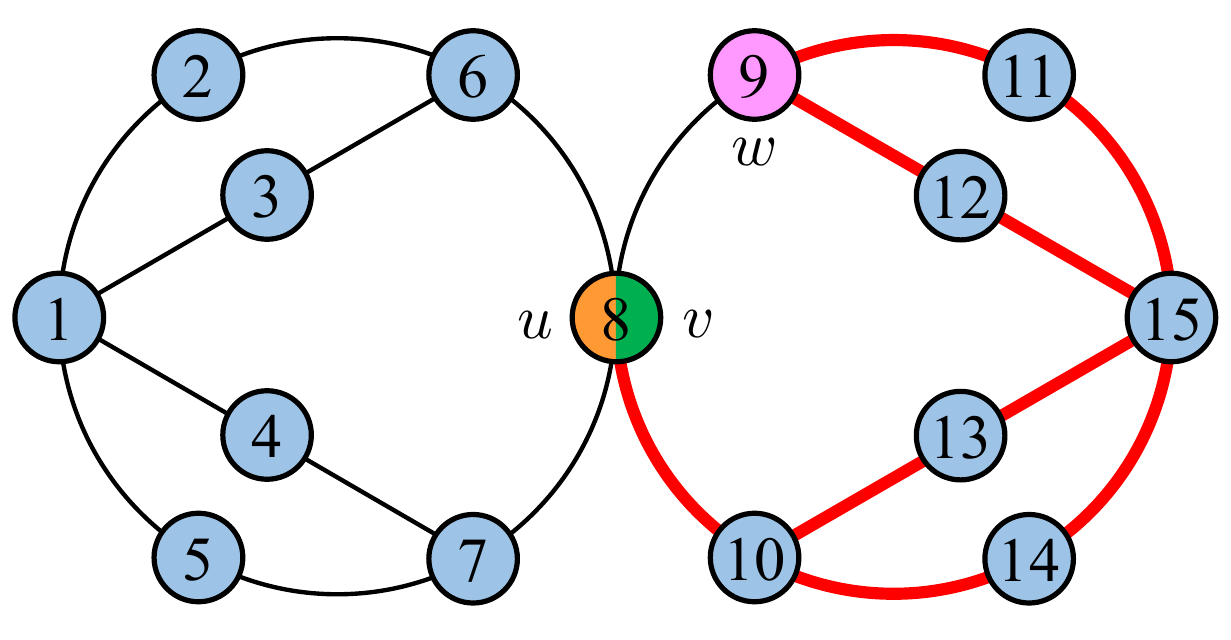}\\
         & (k) & (l) \\
         & \includegraphics[width=0.3\textwidth]{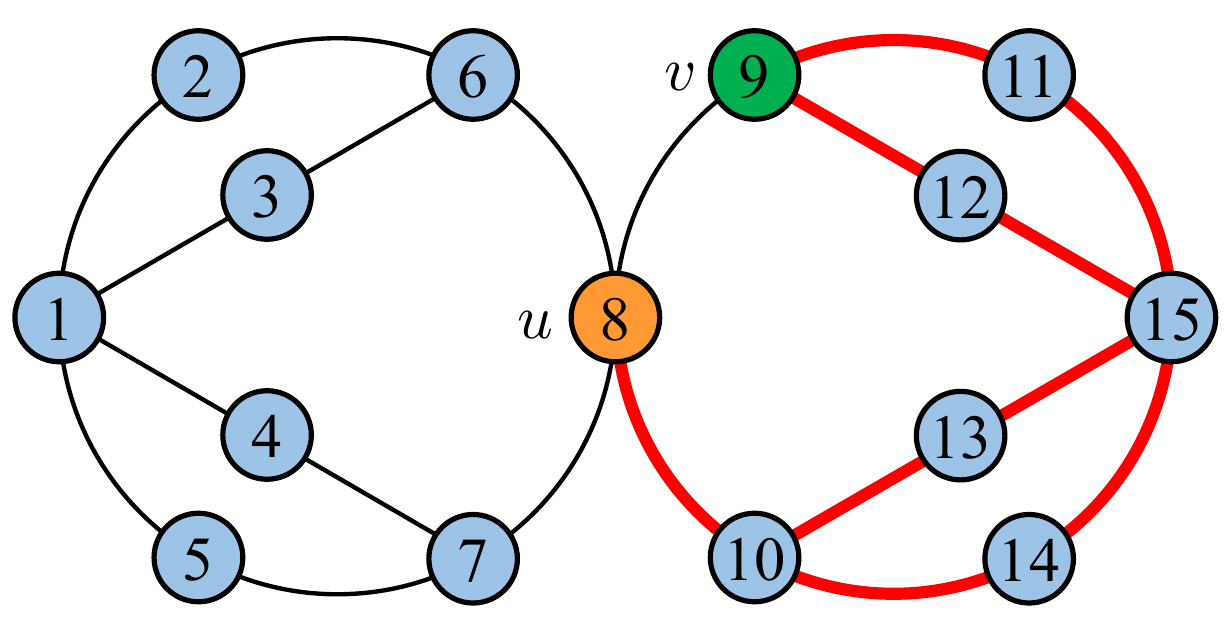} & \includegraphics[width=0.3\textwidth]{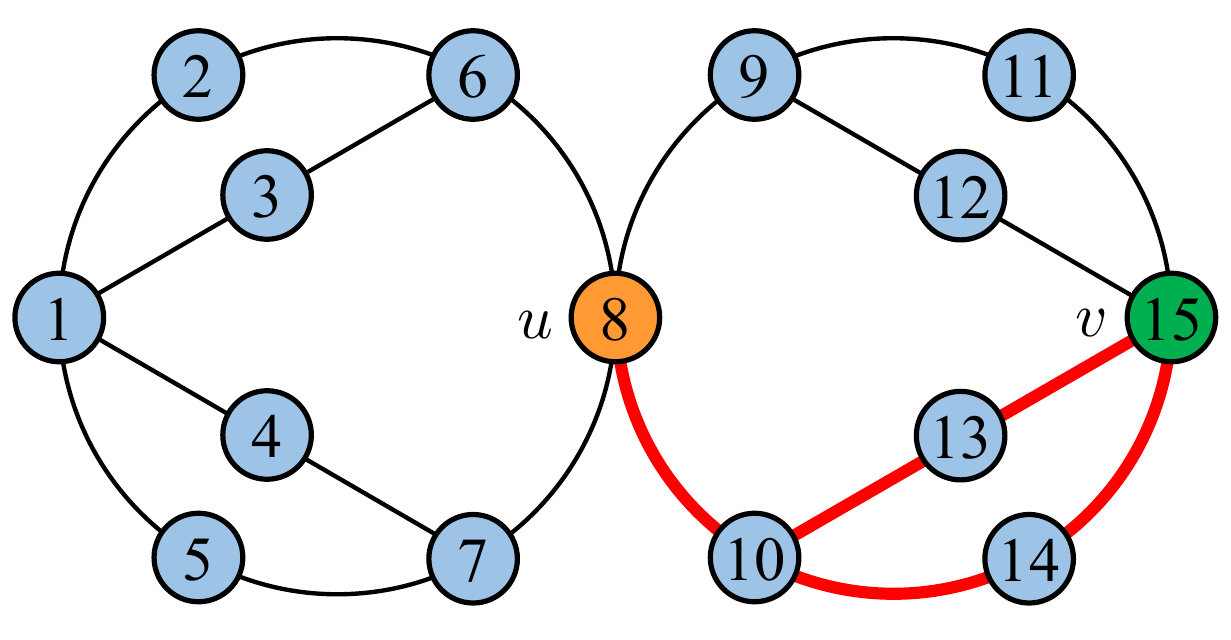}\\
         & (m) & (n) \\
    \end{tabular}
    \vspace{-5pt}
    \caption{Illustration of the proof of \cref{thm:counterexample_pooling}. When Duplicator follows her optimal strategy, the game process of $\mathsf{SWL(SV)}$ corresponds to figures (a, b, c, d), and Spoiler can eventually win the game. In contrast, the game process of $\mathsf{LFWL(2)}$ may correspond to figures (e, f, g, h, i, ...) or figures (j, k, l, g, h, i, ...) or figures (j, k, l, m, n, ...), depending how Spoiler swaps pebbles. In both cases, Spoiler cannot win.}
    \label{fig:counterexample_pooling}
\end{figure*}
\begin{lemma}
\label{thm:counterexample_pooling}
    There exist two non-isomorphic graphs such that 
    \begin{itemize}[topsep=0pt]
    \setlength{\itemsep}{0pt}
        \item $\mathsf{SWL(SV)}$ can distinguish them;
        \item $\mathsf{LFWL(2)}$ cannot distinguish them.
    \end{itemize}
\end{lemma}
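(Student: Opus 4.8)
The plan is to follow exactly the machinery set up in \cref{sec:proof_separation_part1,sec:pebbling_furer}: pick the proper base graph $F$ drawn in \cref{fig:counterexample_pooling}, let $\tilde G(F)$ and $\tilde H(F)$ be its augmented F\"urer graph and twisted augmented F\"urer graph, which are connected with minimum degree at least two (\cref{thm:furer_connectivity}) and non-isomorphic (\cref{thm:furer_single_nonisomorphic}), and then reduce the expressivity question to the simplified pebbling game on $F$. By \cref{thm:swl_pebble_game}, \cref{thm:fwl_pebble_game} and the simplified-game equivalence theorem, it suffices to exhibit a winning strategy for Spoiler in the $\mathsf{SWL(SV)}$ simplified game on $F$ and a winning strategy for Duplicator in the $\mathsf{LFWL(2)}$ simplified game on $F$. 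So the whole argument is a finite, figure-by-figure analysis of moves on $F$, and the novelty is entirely in the choice of $F$.

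For the $\mathsf{SWL(SV)}$ direction I would use the same adaptivity that drives \cref{thm:counterexample_pswlsv}: under $\mathsf{SV}$ pooling Spoiler places pebble $v$ \emph{first}, so he can put $v$ on a separating vertex of $F$, watch which connected component Duplicator commits to in $\gQ$, and only then place the immobile pebble $u$ deep inside that component, at the spot that will later block the merging of components. From there Spoiler repeatedly applies the local-aggregation move: place $w$ adjacent to $v$ so as to split Duplicator's component, swap $v$ and $w$, and remove $w$; because $u$ sits in the right place, the single-edge piece is not reabsorbed, and after a bounded number of rounds a connected component consisting of exactly one edge lies adjacent to $u$, so Spoiler wins (the intended play is figures (a)--(d) of \cref{fig:counterexample_pooling}).

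For the $\mathsf{LFWL(2)}$ direction I would argue that the two features $\mathsf{LFWL(2)}$ gains over vanilla SWL, namely placing $u$ and $v$ simultaneously and being allowed to swap either $u,w$ or $v,w$, cannot reproduce this: since $u,v$ are set at the same time, Spoiler gets no information before fixing both, so by the symmetry of $F$ Duplicator may always keep a large connected component; and since the new pebble $w$ must lie in $\gN_F^1(v)$, the ``relocation'' of $u$ via a swap is too coarse to fence a twisted edge. Concretely, Duplicator maintains the usual invariant that $|\gQ|$ is odd and, whatever symmetric initial placement Spoiler chooses and whichever of the two swaps he performs, she selects the component of $\gQ$ so that removing $w$ merges the pieces and returns the position to an earlier, symmetric state (figures (e)--(i) and (j)--(n), with the case split governed by whether Spoiler swaps $u,w$ or $v,w$). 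Hence Spoiler can never produce a one-edge component, and Duplicator wins.

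The main obstacle is the design of $F$: it must be simultaneously ``asymmetric enough'' that seeing Duplicator's response to a first-placed $v$ lets Spoiler position the frozen $u$ to trap a twisted edge, yet ``symmetric and locally uniform enough'' that, when $u$ and $v$ are committed together and $w$ is restricted to $\gN_F^1(v)$, every Spoiler line collapses back to a position Duplicator has already neutralised. Getting both properties at once, and then verifying the full (but finite) case analysis of the two simplified pebbling games on the candidate $F$ of \cref{fig:counterexample_pooling} by hand, is the delicate part; the rest is routine bookkeeping within the framework of \cref{sec:pebbling_furer}.
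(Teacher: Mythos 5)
Your proposal follows exactly the paper's approach: convert both claims into statements about the simplified pebbling game on the base graph $F$ of \cref{fig:counterexample_pooling}, win for Spoiler in the $\mathsf{SWL(SV)}$ game by exploiting the fact that pebble $v$ is placed before pebble $u$ (so $u$ can be positioned in response to Duplicator's committed component), and win for Duplicator in the $\mathsf{LFWL(2)}$ game because the simultaneous initial placement plus the $w\in\gN_F^1(v)$ restriction make the available relocation of $u$ too coarse. This is precisely the paper's plan, including the choice of base graph.

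The one substantive caveat is that you have not actually carried out the argument: you identify the case analysis of \cref{fig:counterexample_pooling}(a)--(n) as ``routine bookkeeping,'' but in this family of separations the move-by-move verification \emph{is} the proof, and it is not mechanical. In particular, for the $\mathsf{LFWL(2)}$ side your stated intuition (``by the symmetry of $F$ Duplicator may always keep a large connected component'') is a bit loose. What the paper's analysis actually hinges on is finer: after the initial placement there are two genuinely different cases depending on which of $\{1,8\}$ holds pebble $u$, and in each one Duplicator's invariant is maintained because any move of $u$ that Spoiler can legally make (a swap with $w\in\gN_F^1(v)$) either abandons a separating position, causing a merge, or only slides $u$ to a position adjacent to $v$, which is not enough to fence a one-edge component. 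Your second sentence about the coarseness of the swap does capture this, but to claim a proof you would need to exhibit Duplicator's response in each of these branches and confirm that $|\gQ|$ stays odd and no single-edge component is ever isolated. Absent that verification, the proposal is a correct and well-targeted plan rather than a complete proof.
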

\begin{proof}
    The base graph is constructed in \cref{fig:counterexample_pooling}.
    
    We first analyze the simplified pebbling game for algorithm $\mathsf{SWL(SV)}$. Initially, Spoiler places pebble $v$ on vertex 8, which splits the graph into two equal parts. Due to symmetry, suppose Duplicator selects the left component (\cref{fig:counterexample_pooling}(a)). Spoiler then places pebble $u$ on vertex 1 to further split the connected component. Due to symmetry, suppose Duplicator selects the top-left component (\cref{fig:counterexample_pooling}(b)). In the next round, Spoiler places pebble $w$ adjacent to $v$ on vertex 6. Duplicator should choose the connected component of either $\{\{1,3\},\{3,6\}\}$ or $\{\{1,2\},\{2,6\}\}$, which is equivalent by symmetry (see \cref{fig:counterexample_pooling}(c)). Spoiler then swaps pebbles $v,w$ and leaves $w$ outside the graph (\cref{fig:counterexample_pooling}(d)). The remaining game is straightforward to analyze and Spoiler can easily win.

    We next turn to the algorithm $\mathsf{LFWL(2)}$. Initially, Spoiler should simultaneously place pebbles $u$ and $v$ on two vertices of the graph. Without loss of generality, assume that he places one pebble on vertex 8 and places the other pebble on vertex 1 (other cases are similar to analyze). Depending on which pebble is placed on vertex 8, there are two cases:
    \begin{itemize}[topsep=0pt]
    \setlength{\itemsep}{0pt}
        \item Spoiler places pebble $u$ on vertex 1 and places pebble $v$ on vertex 8. In this case, Duplicator responds by choosing the connected component on the right (\cref{fig:counterexample_pooling}(e)). In the next round, Spoiler should better place pebble $w$ on vertex 9 (or vertex 10) adjacent to pebble $v$, and Duplicator can respond accordingly (\cref{fig:counterexample_pooling}(f)). Spoiler should then swap pebbles $u$ and $w$ and leave $w$ outside the graph (\cref{fig:counterexample_pooling}(g)). In the next round, Spoiler can similarly place pebble $w$ on vertex 10 adjacent to pebble $v$ to further split the connected component. This corresponds to \cref{fig:counterexample_pooling}(h) after swapping pebbles $v$ and $w$ and leaving $w$ outside the graph. In subsequent rounds, Spoiler can continue to place pebble $w$ adjacent to pebble $v$ (\cref{fig:counterexample_pooling}(i)). However, whether he swaps pebbles $u,w$ or pebbles $v,w$, as long as pebble $w$ is left outside the graph, multiple connected components then merge into a whole. Clearly, Spoiler cannot win the game.
        \item Spoiler places pebble $u$ on vertex 8 and places pebble $v$ on vertex 1. In this case, Duplicator similarly responds by choosing the connected component on the right (\cref{fig:counterexample_pooling}(j)). In subsequent rounds, Spoiler should gradually move pebble $v$ until it reaches the position of pebble $u$ (\cref{fig:counterexample_pooling}(k)). Next, Spoiler will continue to place pebble $w$ adjacent to pebble $v$ on vertex 9, and Duplicator can respond accordingly (\cref{fig:counterexample_pooling}(l)). Spoiler should then either swap pebbles $u,w$ or swap pebbles $v,w$ and leave $w$ outside the graph. The former case corresponds to (\cref{fig:counterexample_pooling}(g) and has been analyzed. Now consider the latter case, which corresponds to \cref{fig:counterexample_pooling}(m). In subsequent rounds, Spoiler can perform arbitrary operations, but he can never change the position of pebbles $u$. Otherwise, the lack of a pebble on vertex 8 will cause the merging of multiple connected components. With the position of pebble $u$ unchanged, the best state Spoiler can achieve is illustrated in \cref{fig:counterexample_pooling}(i). It is not hard to figure out that Spoiler cannot win the game, either.
    \end{itemize}
    In both cases, Spoiler cannot win.
\end{proof}
\textbf{Insight into \cref{thm:counterexample_pooling}}. The reason why $\mathsf{SWL(SV)}$ is stronger in this case is due to the $\mathsf{SV}$ pooling strategy. \cref{thm:counterexample_pooling} shows that $\mathsf{LFWL(2)}$ does not have the ability to implement the $\mathsf{SV}$ pooling strategy.

We are now ready to prove \cref{thm:separation}.

\begin{proof}[Proof of \cref{thm:separation}]
    \cref{thm:separation} is a direct consequence of \cref{thm:swl_hierarchy,thm:2lfwl,thm:counterexample_swlvs,thm:counterexample_pswlsv,thm:counterexample_gswl,thm:counterexample_lfwl_sswl,thm:counterexample_slfwl,thm:counterexample_slfwl_2,thm:counterexample_fwl,thm:counterexample_pooling}. 
\end{proof}

\section{Proof of Theorems in \cref{sec:practical_expressiveness}}
\label{sec:proof_practical_expressiveness}

\subsection{Proof of \cref{thm:pswl_gdwl}}
We first define several notations. Consider a path $P=(x_0,\cdots,x_d)$ (not necessarily simple) in graph $G$ of length $d\ge 1$. We say $P$ is a \emph{hitting path}, if $x_i\neq x_d$ for all $i\in\{0,1,\cdots,d-1\}$. Denote $\gQ^d_G(u,v)$ to be the set of all hitting paths from node $u$ to node $v$ of length $d$. Denote $\disH_G(u,v)$ as the \emph{hitting time distance} between vertices $u$ and $v$ in graph $G$, i.e, the average hitting time in a random walk from vertex $u$ to $v$. Then,
\begin{equation*}
    \disH_G(u,v)=\sum_{d=0}^\infty d\cdot\sum_{(x_0,\cdots,x_d)\in\gQ^d_G(u,v)} 1/\left(\prod_{i=0}^{d-1} \deg(x_i)\right).
\end{equation*}

Given a path $P=(x_0,\cdots,x_d)$, define $\omega(P):=(\deg_G(x_1),\cdots,\deg_G(x_{d-1}))$, which is a tuple of length $d-1$. Our proof if based on the following lemma:
\begin{lemma}
\label{thm:hitting_time}
    Let $G=(\gV_G,\gE_G)$ and $H=(\gV_H,\gE_H)$ be two graphs, and let $t\in\mathbb N_+$ be a positive integer. Consider any SWL algorithm $\mathsf{A}(\gA,\pool)$ with $\agg^\mathsf{L}_\mathsf{u}\in\gA$ and denote $\chi^{(t)}$ to be the color mapping after iteration $t$. Given nodes $u,v\in\gV_G$ and $x,y\in\gV_H$, if $\chi_G^{(t)}(u,v)=\chi_H^{(t)}(x,y)$, then $\ldblbrace \omega(Q):Q\in\gQ^{t}_G(v,u)\rdblbrace=\ldblbrace \omega(Q):Q\in\gQ^{t}_H(y,x)\rdblbrace$.
\end{lemma}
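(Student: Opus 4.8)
\textbf{Proof proposal for \cref{thm:hitting_time}.}

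The plan is to prove the statement by induction on $t$. The key observation is that a hitting path of length $t$ from $v$ to $u$ is obtained from a hitting path of length $t-1$ from $w$ to $u$ (for some neighbor $w$ of $v$, with $w \neq u$ unless $t = 1$) by prepending the step $v \to w$; the weight tuple $\omega$ of the length-$t$ path is the weight tuple of the length-$(t-1)$ path with $\deg_G(w)$ appended at the front. This recursive structure mirrors exactly the local aggregation $\agg^\mathsf{L}_\mathsf{u}$, which refines $\chi_G^{(t)}(u,v)$ using the multiset $\ldblbrace \chi_G^{(t-1)}(u,w) : w \in \gN_G(v)\rdblbrace$.

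For the base case $t = 1$: if $\chi_G^{(1)}(u,v) = \chi_H^{(1)}(x,y)$, then since $\agg^\mathsf{L}_\mathsf{u} \in \gA$ and (as shown in the proof of \cref{thm:node_marking_implies_distance}) the color at iteration $1$ already encodes adjacency, we have $\{u,v\} \in \gE_G \iff \{x,y\} \in \gE_H$; moreover by \cref{thm:node_marking_implies_distance} we have $\dis_G(u,v) = \dis_H(x,y)$, so $v = u \iff y = x$. A length-$1$ hitting path from $v$ to $u$ exists iff $\{u,v\} \in \gE_G$ and $v \neq u$, in which case $\gQ^1_G(v,u)$ is a single path with $\omega$ equal to the empty tuple; the same holds for $H$, so both multisets coincide (either both empty or both the singleton $\ldblbrace()\rdblbrace$).

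For the inductive step, assume the claim for $t$ and suppose $\chi_G^{(t+1)}(u,v) = \chi_H^{(t+1)}(x,y)$. Using $\agg^\mathsf{L}_\mathsf{u} \in \gA$ we get
\begin{equation*}
    \ldblbrace \chi_G^{(t)}(u,w) : w \in \gN_G(v)\rdblbrace = \ldblbrace \chi_H^{(t)}(x,z) : z \in \gN_H(y)\rdblbrace,
\end{equation*}
so there is a degree-preserving bijection $\phi : \gN_G(v) \to \gN_H(y)$ (degree-preserving because equal colors imply equal degrees, via \cref{thm:node_marking_implies_distance} applied to distance-$1$ neighbors, or simply because the local aggregation at the next level would expose a degree mismatch) with $\chi_G^{(t)}(u,w) = \chi_H^{(t)}(x,\phi(w))$ for all $w$. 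For each such pair, the induction hypothesis gives $\ldblbrace \omega(Q) : Q \in \gQ^{t}_G(w,u)\rdblbrace = \ldblbrace \omega(Q) : Q \in \gQ^{t}_H(\phi(w),x)\rdblbrace$. Now I partition $\gQ^{t+1}_G(v,u)$ according to the first vertex $w$ after $v$: each path in $\gQ^{t+1}_G(v,u)$ decomposes uniquely as $v \to w \to \cdots \to u$ where $w \in \gN_G(v)$, $w \neq u$ (since $u$ appears only at the end), and the tail lies in $\gQ^{t}_G(w,u)$; conversely prepending $v$ to any tail in $\gQ^t_G(w,u)$ with $w \in \gN_G(v) \setminus \{u\}$ yields a hitting path of length $t+1$ because $u$ does not appear in positions $0,\dots,t-1$ of the tail and $w \neq u$, $v$ might equal $u$ only trivially—actually one must note $v \ne u$ is automatic when $\gQ^{t+1}_G(v,u)$ is nonempty with $t+1 \ge 1$; handle the degenerate case $v = u$ separately, where both multisets are empty. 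Under this decomposition, $\omega$ of the length-$(t+1)$ path equals $\deg_G(w)$ prepended to $\omega$ of the tail. Since $\phi$ restricts to a bijection $\gN_G(v)\setminus\{u\} \to \gN_H(y)\setminus\{x\}$ (as $v = u \iff y = x$ and colors determine this), and is degree-preserving, and matches path-weight multisets on the tails, summing over $w$ gives the desired equality of multisets for $t+1$.

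\textbf{Main obstacle.} The delicate point is establishing that the bijection $\phi$ on neighborhoods is \emph{degree-preserving} and correctly respects the distinction $w = u$ versus $w \neq u$. The degree-preservation should follow because two node-pairs with the same color at iteration $t \ge 1$ must have matching local structure: indeed $\chi_G^{(t)}(u,w) = \chi_H^{(t)}(x,z)$ with $t \ge 1$ forces $\deg_G(w) = \deg_H(z)$ since applying $\agg^\mathsf{L}_\mathsf{u}$ once more to these would count the neighbors of $w$ and $z$ respectively, but more directly $\deg_G(w)$ is already determined by $\chi_G^{(1)}(w,w)$-type information propagated in; I would verify this carefully, possibly by noting that the degree of $w$ equals $|\gN_G(w)|$ which is encoded via the local aggregation multiset size at the previous iteration. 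The condition $w = u \iff z = x$ follows from \cref{thm:node_marking_implies_distance} applied to the pair $(u,w)$ versus $(x,z)$ since $\dis_G(u,w) = 0 \iff w = u$. Beyond this, bookkeeping the degenerate cases ($t = 1$ with $v$ adjacent to $u$; $v = u$; paths that revisit $v$ or $u$ at intermediate steps) is routine but must be done with care to ensure the path decomposition is a genuine bijection.
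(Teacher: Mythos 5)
Your proof takes essentially the same approach as the paper: induction on $t$, using $\agg^\mathsf{L}_\mathsf{u}$ to match the neighborhoods of $v$ and $y$, deducing degree-preservation from the fact that the iteration-$t$ color (with $t\ge 1$) encodes the multiset $\ldblbrace\chi^{(t-1)}(u,w'):w'\in\gN_G(w)\rdblbrace$ whose size is $\deg_G(w)$, excluding $w=u$ via node marking, and then applying the induction hypothesis to the one-step path decomposition. The only difference is stylistic — you phrase the argument via an explicit degree-preserving bijection $\phi$ and spell out the degenerate case $v=u$, which the paper leaves implicit.
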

\begin{proof}
    The proof is based on induction over $t$. For the base case of $t=1$, it is easy to see that $\ldblbrace \omega(Q):Q\in\gQ^{1}_G(v,u)\rdblbrace$ depends only on whether $\{u,v\}\in\gE_G$ or not. Clearly, if $\chi_G^{(1)}(u,v)=\chi_H^{(1)}(x,y)$, then $\{u,v\}\in\gE_G\leftrightarrow\{x,y\}\in\gE_G$ holds (\cref{thm:node_marking_implies_distance}), implying $\ldblbrace \omega(Q):Q\in\gQ^{1}_G(v,u)\rdblbrace=\ldblbrace \omega(Q):Q\in\gQ^{1}_H(y,x)\rdblbrace$.

    Now assume that the lemma holds for all $t\le T$, and consider the case of $t= T+1$. When $\chi_G^{(T+1)}(u,v)=\chi_H^{(T+1)}(x,y)$, by definition of $\agg^\mathsf{L}_\mathsf{u}$ we have
    \begin{equation*}
        \ldblbrace \chi_G^{(T)}(u,w): w\in\gN_{G}(v)\rdblbrace=\ldblbrace \chi_H^{(T)}(x,z): z\in\gN_{H}(y)\rdblbrace.
    \end{equation*}
    Since $T\ge 1$, we have $\chi_G^{(T)}(u,w)=\chi_H^{(T)}(x,z)\implies \deg_G(w)=\deg_H(z)$ for any $w\in\gV_G$ and $z\in\gV_H$. Therefore,
    \begin{align*}
        \ldblbrace (\chi_G^{(T)}(u,w),\deg_G(w)): w\in\gN_{G}(v)\rdblbrace
        =\ldblbrace (\chi_H^{(T)}(x,z),\deg_H(z)): z\in\gN_{H}(y)\rdblbrace.
    \end{align*}
    By definition of node marking policy, we further obtain
    \begin{align*}
        \ldblbrace (\chi_G^{(T)}(u,w),\deg_G(w)): w\in\gN_{G}(v)\backslash\{u\}\rdblbrace
        =\ldblbrace (\chi_H^{(T)}(x,z),\deg_H(z)): z\in\gN_{H}(y)\backslash\{x\}\rdblbrace.
    \end{align*}
    By induction,
    \begin{align*}
        &\ldblbrace (\deg_G(w),\ldblbrace \omega(Q):Q\in\gQ^{T}_G(w,u)\rdblbrace): w\in\gN_{G}(v)\backslash\{u\}\rdblbrace\\
        =&\ldblbrace (\deg_H(z),\ldblbrace \omega(Q):Q\in\gQ^{T}_H(z,x)\rdblbrace): z\in\gN_{H}(y)\backslash\{x\}\rdblbrace.
    \end{align*}
    Therefore,
    $\ldblbrace \omega(Q):Q\in\gQ^{T+1}_G(v,u)\rdblbrace=\ldblbrace \omega(Q):Q\in\gQ^{T+1}_H(y,x)\rdblbrace$, concluding the induction step.
\end{proof}

\begin{corollary}
\label{thm:hitting_time_main}
    Consider any SWL algorithm $\mathsf{A}(\gA,\pool)$ with $\agg^\mathsf{L}_\mathsf{u}\in\gA$ and let $\chi$ be the stable color mapping. For any vertices $u,v\in\gV_G$ and $x,y\in\gV_H$, if $\chi_G(u,v)=\chi_H(x,y)$, then $\disH_G(v,u)=\disH_H(y,x)$.
\end{corollary}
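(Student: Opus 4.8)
The plan is to combine \cref{thm:hitting_time} with the explicit series for the hitting time distance, so the entire content of the corollary is essentially a bookkeeping exercise on top of the lemma. Since $\chi$ is the \emph{stable} color mapping of $\mathsf{A}(\gA,\pool)$, it is finer than every intermediate mapping $\chi^{(t)}$ by \cref{remark:finer}(a); hence the hypothesis $\chi_G(u,v)=\chi_H(x,y)$ implies $\chi_G^{(t)}(u,v)=\chi_H^{(t)}(x,y)$ for every positive integer $t$. Applying \cref{thm:hitting_time} once for each such $t$ then yields the multiset identities
\[
\ldblbrace \omega(Q):Q\in\gQ^{t}_G(v,u)\rdblbrace=\ldblbrace \omega(Q):Q\in\gQ^{t}_H(y,x)\rdblbrace,\qquad t\ge 1.
\]

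Next I would supply the one piece of information that the tuples $\omega(\cdot)$ deliberately omit: the degree of the source vertex. Because $\agg^\mathsf{L}_\mathsf{u}\in\gA$ and $\chi$ is stable, $\chi_G(u,v)=\chi_H(x,y)$ forces $\ldblbrace\chi_G(u,w):w\in\gN_G(v)\rdblbrace=\ldblbrace\chi_H(x,z):z\in\gN_H(y)\rdblbrace$, and comparing cardinalities gives $\deg_G(v)=\deg_H(y)$. Then I would decompose the hitting-time formula accordingly: for a hitting path $P=(x_0,\dots,x_d)\in\gQ^d_G(v,u)$ we have $x_0=v$, so $\prod_{i=0}^{d-1}\deg_G(x_i)=\deg_G(v)\cdot\prod_{j=1}^{d-1}\deg_G(x_j)$, where the second factor is exactly the product of the entries of the tuple $\omega(P)$ (the empty product $1$ when $d=1$). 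Writing $\pi(\tau)$ for the product of the entries of a tuple $\tau$, this gives, for every $d\ge 1$,
\[
\sum_{P\in\gQ^d_G(v,u)}\frac{1}{\prod_{i=0}^{d-1}\deg_G(x_i)}=\frac{1}{\deg_G(v)}\sum_{Q\in\gQ^d_G(v,u)}\frac{1}{\pi(\omega(Q))},
\]
and the right-hand sum depends only on the multiset $\ldblbrace\omega(Q):Q\in\gQ^d_G(v,u)\rdblbrace$. Combining the analogous identity for $H$ with $\deg_G(v)=\deg_H(y)$ and the multiset identities above, this quantity equals its $H$-counterpart for every $d\ge1$; multiplying by $d$, summing over $d\ge 0$ (the $d=0$ term is zero, and connectedness of $G$ and $H$ guarantees the series converge to finite values), and invoking the defining formula for $\disH$ yields $\disH_G(v,u)=\disH_H(y,x)$.

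The main obstacle has, in effect, already been handled by \cref{thm:hitting_time}; the only genuinely new point is isolating the source-degree factor $\deg_G(v)$ and checking it is pinned down, which I expect to be the one place to be slightly careful. A minor loose end to confirm is the degenerate case $v=u$ (equivalently $x=y$, by node marking), in which $\gQ^d_G(v,u)=\emptyset$ for all $d\ge1$ and both sides equal $0$, so the argument goes through verbatim.
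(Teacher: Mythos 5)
Your proof is correct and follows essentially the same route as the paper's: both rest on Lemma \ref{thm:hitting_time} for the multiset identities on $\omega(\cdot)$, isolate the missing source-degree factor $\deg_G(v)$ in the hitting-time series, and observe that it too is pinned down by the color equality. You state the argument in the direct form and spell out the degree claim (via stability and $\agg^\mathsf{L}_\mathsf{u}$) that the paper's contrapositive leaves implicit, but there is no substantive difference in approach.
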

\begin{proof}
    By definition of hitting time distance,
    $$\disH_G(v,u)=\sum_{i=0}^\infty i\cdot\sum_{Q\in\gQ^i_G(v,u)}1/q(Q),$$
    where $q(Q)=\deg_G(x_0)\prod_{i=1}^{d-1} \deg(x_i)$ for path $Q=(x_0,x_1,\cdots,x_d)$. Therefore, $q(Q)$ is fully determined by $\omega(Q)$ and $\deg_G(x_0)$. If $\disH_G(v,u)\neq\disH_H(y,x)$, then either $\deg_G(v)\neq \deg_H(y)$ or there exists a length $t$ such that $\ldblbrace \omega(Q):Q\in\gQ^{t}_G(v,u)\rdblbrace\neq\ldblbrace \omega(Q):Q\in\gQ^{t}_H(y,x)\rdblbrace$. Therefore, by using \cref{thm:hitting_time} we have $\chi_G(u,v)\neq\chi_H(x,y)$, as desired.
\end{proof}

We are now ready to prove the main theorem:

\begin{theorem}
\label{thm:pswl_gswl_variant}
    Define a variant of $\mathsf{GD}\text{-}\mathsf{WL}$ that incorporates the shortest path distance and the hitting time distance as follows:
    \begin{align*}
        \chi_G^{(t+1)}(u)=\Ldblbrace\left((\dis_G(v,u),\disH_G(v,u)),\chi_G^{(t)}(v)\right):v\in\gV_G\Rdblbrace.
    \end{align*}
    Then, $\mathsf{GD}\text{-}\mathsf{WL}\preceq\mathsf{PSWL(VS)}$.
\end{theorem}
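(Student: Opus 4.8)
The plan is to exhibit a ``read-off'' of the stable $\mathsf{PSWL(VS)}$ color mapping that implicitly computes the node coloring of this $\mathsf{GD}\text{-}\mathsf{WL}$ variant, and then lift the comparison from nodes to whole graphs. Write $\chi$ for the stable color mapping of $\mathsf{PSWL(VS)}=\mathsf{A}(\{\agg^\mathsf{L}_\mathsf{u},\agg^\mathsf{P}_\mathsf{vv}\},\mathsf{VS})$ and, to avoid a notation clash, write $\theta^{(t)}$ (resp.\ $\theta$) for the iteration-$t$ (resp.\ stable) node coloring produced by the $\mathsf{GD}\text{-}\mathsf{WL}$ variant in the statement. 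The candidate read-off is the diagonal color $u\mapsto\chi_G(u,u)$. First I would prove the node-level implication $\chi_G(u,u)=\chi_H(x,x)\Rightarrow\theta_G(u)=\theta_H(x)$ for all $u\in\gV_G$, $x\in\gV_H$, and afterwards use it to compare the graph-level outputs.

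The heart of the argument is an induction on the iteration count $t$ showing $\chi_G(u,u)=\chi_H(x,x)\Rightarrow\theta^{(t)}_G(u)=\theta^{(t)}_H(x)$. The base case $t=0$ is immediate: the initial $\mathsf{GD}\text{-}\mathsf{WL}$ color of a node is determined by its (possibly constant) input feature, which is reflected in the SWL initial colors and hence in $\chi_G(u,u)$. For the inductive step, assume $\chi_G(u,u)=\chi_H(x,x)$. Since $\agg^\mathsf{L}_\mathsf{u}\in\gA$, \cref{thm:local_lemma} gives $\ldblbrace\chi_G(u,v):v\in\gV_G\rdblbrace=\ldblbrace\chi_H(x,y):y\in\gV_H\rdblbrace$; fix a bijection $v\mapsto y_v$ from $\gV_G$ to $\gV_H$ realizing this identity, so that $\chi_G(u,v)=\chi_H(x,y_v)$ for every $v$. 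From each matched pair I would extract: (i) $\dis_G(u,v)=\dis_H(x,y_v)$ by \cref{thm:node_marking_distance_corollary}, and since shortest-path distance is symmetric, $\dis_G(v,u)=\dis_H(y_v,x)$; (ii) $\disH_G(v,u)=\disH_H(y_v,x)$ by \cref{thm:hitting_time_main}; (iii) $\chi_G(v,v)=\chi_H(y_v,y_v)$, because $\agg^\mathsf{P}_\mathsf{vv}\in\gA$ and $\chi$ is stable --- this is exactly the step where $\mathsf{PSWL}$ is needed beyond vanilla $\mathsf{SWL}$. Combining (iii) with the induction hypothesis yields $\theta^{(t)}_G(v)=\theta^{(t)}_H(y_v)$. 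Plugging (i), (ii) and this last equality into the $\mathsf{GD}\text{-}\mathsf{WL}$ update rule, the two multisets $\Ldblbrace((\dis_G(v,u),\disH_G(v,u)),\theta^{(t)}_G(v)):v\in\gV_G\Rdblbrace$ and $\Ldblbrace((\dis_H(y,x),\disH_H(y,x)),\theta^{(t)}_H(y)):y\in\gV_H\Rdblbrace$ coincide along $v\mapsto y_v$, i.e.\ $\theta^{(t+1)}_G(u)=\theta^{(t+1)}_H(x)$. Letting $t\to\infty$ gives the node-level claim for the stable colorings.

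To pass from nodes to graphs, note that $\mathsf{PSWL(VS)}$ outputs $\hash\!\big(\ldblbrace\hash(\ldblbrace\chi_G(u,v):v\in\gV_G\rdblbrace):u\in\gV_G\rdblbrace\big)$ while the $\mathsf{GD}\text{-}\mathsf{WL}$ variant outputs $\hash(\ldblbrace\theta_G(u):u\in\gV_G\rdblbrace)$. Assume $\mathsf{PSWL(VS)}$ does not distinguish $G$ and $H$; then there is a bijection $\pi\colon\gV_G\to\gV_H$ with $\ldblbrace\chi_G(u,v):v\in\gV_G\rdblbrace=\ldblbrace\chi_H(\pi(u),y):y\in\gV_H\rdblbrace$ for every $u$. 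The element $\chi_G(u,u)$ refines the node-marking initial color of the pair $(u,u)$, which is ``marked'', and the only marked pairs in $H$ are of the form $(x,x)$; hence $\chi_G(u,u)$ can only be matched to $\chi_H(\pi(u),\pi(u))$, so $\chi_G(u,u)=\chi_H(\pi(u),\pi(u))$. The node-level claim then gives $\theta_G(u)=\theta_H(\pi(u))$ for all $u$, whence $\ldblbrace\theta_G(u):u\in\gV_G\rdblbrace=\ldblbrace\theta_H(x):x\in\gV_H\rdblbrace$ and $\mathsf{GD}\text{-}\mathsf{WL}$ cannot distinguish $G$ and $H$ either; this is precisely $\mathsf{GD}\text{-}\mathsf{WL}\preceq\mathsf{PSWL(VS)}$. (As a variant, the node-level part could instead be packaged through \cref{remark:finer}(c), viewing $u\mapsto\chi_G(u,u)$ as an initialization that is stable under one $\mathsf{GD}\text{-}\mathsf{WL}$ step; the direct induction above seems cleaner.)

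The main obstacle I expect is the inductive step, specifically keeping the three ``directions'' of aggregation aligned: the local aggregation $\agg^\mathsf{L}_\mathsf{u}$ is what expands the diagonal color $\chi_G(u,u)$ into the full row multiset $\ldblbrace\chi_G(u,v):v\rdblbrace$ and, through \cref{thm:hitting_time_main}, into the hitting-time distances \emph{from} $v$ \emph{to} $u$ --- matching the orientation that appears in the $\mathsf{GD}\text{-}\mathsf{WL}$ formula --- while the single-point aggregation $\agg^\mathsf{P}_\mathsf{vv}$ is what carries the diagonal color of $v$ across the matching so the induction hypothesis applies. One should also verify the small point in the graph-level step that a marked pair can only be matched to a marked pair. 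The remaining ingredients (symmetry of $\dis$, stability of $\chi$, and multiset bookkeeping) are routine.
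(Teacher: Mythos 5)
Your proof is correct and follows essentially the same route as the paper's: reduce to the node-level refinement implication $\chi_G(u,u)=\chi_H(x,x)\Rightarrow\theta_G(u)=\theta_H(x)$ via \cref{thm:local_lemma}, \cref{thm:node_marking_distance_corollary}, \cref{thm:hitting_time_main}, and the $\agg^\mathsf{P}_\mathsf{vv}$ stability step, then lift to the graph level using the observation that marked pairs can only be matched to marked pairs. The only cosmetic difference is that the paper packages the node-level step through \cref{remark:finer}(c) where you unroll the induction directly, a variant you yourself flag at the end.
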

\begin{proof}
    Denote $\chi^\mathsf{P}$ as the stable color mapping of $\mathsf{PSWL(VS)}$. Consider a pair of graphs $G$ and $H$ indistinguishable by $\mathsf{PSWL(VS)}$. Then, we clearly have
    $$\ldblbrace\chi_G^\mathsf{P}(u,v):u,v\in\gV_G\rdblbrace=\ldblbrace\chi_H^\mathsf{P}(x,y):x,y\in\gV_H\rdblbrace.$$
    By definition of the node marking policy,
    $$\ldblbrace\chi_G^\mathsf{P}(u,u):u\in\gV_G\rdblbrace=\ldblbrace\chi_H^\mathsf{P}(x,x):x\in\gV_H\rdblbrace.$$
    Now consider any vertices $u\in\gV$ and $x\in\gV_H$ satisfying $\chi_G^\mathsf{P}(u,u)=\chi_H^\mathsf{P}(x,x)$. Since $\agg^\mathsf{L}_\mathsf{u}$ is present in $\mathsf{PSWL(VS)}$, we can invoke \cref{thm:local_lemma}, which obtains that 
    $$\ldblbrace \chi_G^\mathsf{P}(u,w):w\in\gV_G\rdblbrace=\ldblbrace \chi_H^\mathsf{P}(x,z):z\in\gV_H\rdblbrace.$$
    Further using \cref{thm:node_marking_distance_corollary,thm:hitting_time_main} yields
    \begin{align*}
        \ldblbrace (\chi_G^\mathsf{P}(u,w),\dis_G(w,u),\disH_G(w,u)):w\in\gV_G\rdblbrace=\ldblbrace (\chi_H^\mathsf{P}(x,z),\dis_G(z,x),\disH_G(z,x)):z\in\gV_H\rdblbrace.
    \end{align*}
    Next, by definition of the aggregation $\agg^\mathsf{P}_\mathsf{vv}$, we have
    \begin{align*}
        \ldblbrace (\chi_G^\mathsf{P}(w,w),\dis_G(w,u),\disH_G(w,u)):w\in\gV_G\rdblbrace=\ldblbrace (\chi_H^\mathsf{P}(z,z),\dis_G(z,x),\disH_G(z,x)):z\in\gV_H\rdblbrace.
    \end{align*}
    The above equation shows that $\chi_G^\mathsf{P}$ induces a finer \emph{vertex} partition (i.e., by treating $\chi_G^\mathsf{P}(u):=\chi_G^\mathsf{P}(u,u)$) compared with the stable color mapping of $\mathsf{GD}\text{-}\mathsf{WL}$. Concretely, based on \cref{remark:finer}(c), we have $\chi_G^\mathsf{P}(u,u)=\chi_H^\mathsf{P}(x,x)\implies\chi_G(u)=\chi_H(x)$. This finally yields
    $$\ldblbrace\chi_G(u):u\in\gV_G\rdblbrace=\ldblbrace\chi_H(x):x\in\gV_H\rdblbrace,$$
    concluding the proof.
\end{proof}

\begin{remark}
    We note that the form of $\mathsf{GD}\text{-}\mathsf{WL}$ in \cref{thm:pswl_gswl_variant} slightly differs from \citet{zhang2023rethinking}, in that they use resistance distance instead of hitting time distance. Nevertheless, similar to resistance distance, hitting time distance also satisfies the following key property: for any vertices $u,v,w\in\gV_G$ in graph $G$,
    $\disH_G(u,v)=\disH_G(u,w)+\disH_G(w,v)$ if and only if $w$ is a cut vertex of $G$ (see \citet[Appendix C.5.1]{zhang2023rethinking}). This property is crucial to prove the expressivity for vertex-biconnectivity. Following the almost same proof, we can show that the variant of $\mathsf{GD}\text{-}\mathsf{WL}$ defined in \cref{thm:pswl_gswl_variant} is also fully expressive for vertex-biconnectivity.
\end{remark}

Finally, for the original $\mathsf{GD}\text{-}\mathsf{WL}$ defined in \citet{zhang2023rethinking} that incorporates SPD and RD, currently we can only prove the following result, which is a straightforward extension of \cref{thm:pswl_gswl_variant}:
\begin{theorem}
    Consider the WL algorithm $\mathsf{GD}\text{-}\mathsf{WL}$ that incorporates the shortest path distance and the resistance distance. Then, $\mathsf{GD}\text{-}\mathsf{WL}\preceq\mathsf{SSWL}$.
\end{theorem}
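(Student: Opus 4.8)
The plan is to adapt the proof of \cref{thm:pswl_gswl_variant} (the $\mathsf{SPD}$ + $\mathsf{HTD}$ case), exploiting the extra ``transpose'' power of $\mathsf{SSWL}$ to recover the \emph{resistance} distance rather than just a one-directional hitting-time distance. The starting point is the identity $\disR_G(u,v)=(\disH_G(u,v)+\disH_G(v,u))/(2|\gE_G|)$ \citep{chandra1996electrical}: to encode $\disR_G(v,u)$ in a color it suffices to encode $\disH_G(u,v)$, $\disH_G(v,u)$, and $|\gE_G|$ jointly.

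First I would fix a pair of graphs $G,H$ indistinguishable by $\mathsf{SSWL}$, with stable color mapping $\chi$. As in \cref{thm:pswl_gswl_variant}, equality of graph representations together with node marking gives $\ldblbrace\chi_G(u,u):u\in\gV_G\rdblbrace=\ldblbrace\chi_H(x,x):x\in\gV_H\rdblbrace$; moreover, since $\agg^\mathsf{L}_\mathsf{u}\in\gA$, the color $\chi_G(u,u)$ determines $\deg_G(u)$ (its local aggregation has cardinality $\deg_G(u)$), so matching the degree multisets forces $|\gE_G|=|\gE_H|$. Next, for any matched pair $u\leftrightarrow x$ with $\chi_G(u,u)=\chi_H(x,x)$, \cref{thm:local_lemma} yields $\ldblbrace\chi_G(u,w):w\in\gV_G\rdblbrace=\ldblbrace\chi_H(x,z):z\in\gV_H\rdblbrace$.

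The key new step is to use the symmetry of $\mathsf{SSWL}$: for a matched pair $(w,z)$ with $\chi_G(u,w)=\chi_H(x,z)$, \cref{thm:local_local_lemma} also yields $\chi_G(w,u)=\chi_H(z,x)$, so \cref{thm:hitting_time_main} can be applied to \emph{both} equalities, giving $\disH_G(w,u)=\disH_H(z,x)$ and $\disH_G(u,w)=\disH_H(x,z)$ simultaneously; combined with $|\gE_G|=|\gE_H|$ this produces $\disR_G(w,u)=\disR_H(z,x)$, while \cref{thm:node_marking_distance_corollary} gives $\dis_G(w,u)=\dis_H(z,x)$. Since $\mathsf{SSWL}\simeq\mathsf{A}(\{\agg^\mathsf{L}_\mathsf{u},\agg^\mathsf{L}_\mathsf{v},\agg^\mathsf{P}_\mathsf{vv},\dots\},\mathsf{VS})$ by \cref{thm:swl_hierarchy,thm:aggregation}, stability under $\agg^\mathsf{P}_\mathsf{vv}$ lets me replace $\chi_G(u,w)$ by $\chi_G(w,w)$ inside the multiset, exactly as in \cref{thm:pswl_gswl_variant}, so that
\[
\ldblbrace(\chi_G(w,w),\dis_G(w,u),\disR_G(w,u)):w\in\gV_G\rdblbrace=\ldblbrace(\chi_H(z,z),\dis_H(z,x),\disR_H(z,x)):z\in\gV_H\rdblbrace.
\]
This shows the vertex coloring $u\mapsto\chi_G(u,u)$ is finer than one refinement step of the $\mathsf{SPD}$ + $\mathsf{RD}$ version of $\mathsf{GD}\text{-}\mathsf{WL}$ applied to that coloring; by the vertex-coloring form of \cref{remark:finer}(c), $\chi_G(u,u)=\chi_H(x,x)\Rightarrow\chi_G^{\mathsf{GD}}(u)=\chi_H^{\mathsf{GD}}(x)$, and pooling the vertex colors concludes $\mathsf{GD}\text{-}\mathsf{WL}\preceq\mathsf{SSWL}$.

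The main obstacle is the hitting-time direction bookkeeping: \cref{thm:hitting_time_main} only pushes $\disH_G(v,u)$ into $\chi_G(u,v)$ (one direction), which is exactly why $\mathsf{PSWL}$ reaches only the $\mathsf{HTD}$-variant; recovering \emph{both} directions hinges on \cref{thm:local_local_lemma}, the property that separates $\mathsf{SSWL}$ from $\mathsf{PSWL}$. The care is in checking that the two invocations of \cref{thm:hitting_time_main} refer to the same unordered pair, so that their sum is genuinely $2|\gE_G|\,\disR_G(w,u)$; everything else is a routine transcription of the $\mathsf{PSWL}$ argument.
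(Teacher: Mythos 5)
Your proposal is correct and is essentially the fleshed-out version of the paper's one-sentence proof, which simply notes the identity $\disR_G(u,v)=(\disH_G(u,v)+\disH_G(v,u))/2|\gE_G|$ and that $\mathsf{SSWL}$ contains both $\agg^\mathsf{L}_\mathsf{u}$ and $\agg^\mathsf{L}_\mathsf{v}$. You correctly identify \cref{thm:local_local_lemma} as the mechanism by which the second local aggregation delivers the transposed hitting time, and you rightly supply the (needed but unmentioned) step that matched degree multisets force $|\gE_G|=|\gE_H|$ so that the $1/2|\gE_G|$ normalization transfers.
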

\begin{proof}
    Note that $\disR_G(u,v)=(\disH_G(u,v)+\disH_G(v,u))/2|\gE_G|$. The proof follows by noting that both $\agg^\mathsf{L}_\mathsf{u}$ and $\agg^\mathsf{L}_\mathsf{v}$ are present in $\mathsf{SSWL}$.
\end{proof}

However, it remains unclear whether $\mathsf{RD}\text{-}\mathsf{WL}\preceq\mathsf{GSWL}$ or $\mathsf{RD}\text{-}\mathsf{WL}\preceq\mathsf{PSWL(VS)}$ holds. We leave them as open problems.

\subsection{Counterexamples}
We first show that the vanilla SWL has additional power than $\mathsf{GD}\text{-}\mathsf{WL}$.
\begin{lemma}
\label{thm:counterexample_gdwl_1}
    There exist two non-isomorphic graphs such that 
    \begin{itemize}[topsep=0pt]
    \setlength{\itemsep}{0pt}
        \item $\mathsf{SWL(VS)}$ can distinguish them;
        \item $\mathsf{RD}\text{-}\mathsf{WL}$, $\mathsf{HTD}\text{-}\mathsf{WL}$, $\mathsf{SPD}\text{-}\mathsf{WL}$ cannot distinguish them.
    \end{itemize}
\end{lemma}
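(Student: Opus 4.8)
\textbf{Proof plan for \cref{thm:counterexample_gdwl_1}.}

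The plan is to exhibit a concrete pair of non-isomorphic graphs on which all three distance-based color refinements $\mathsf{SPD}\text{-}\mathsf{WL}$, $\mathsf{HTD}\text{-}\mathsf{WL}$, $\mathsf{RD}\text{-}\mathsf{WL}$ fail but $\mathsf{SWL(VS)}$ succeeds. Since $\mathsf{SWL(VS)}$ is the weakest SWL algorithm, the separating example need only be one that $\mathsf{SWL(VS)}$ distinguishes; the real content lies in choosing a pair that is homogeneous enough from the viewpoint of distances. The standard construction for defeating distance-based WL is a pair of distance-regular (or even just vertex-transitive, distance-degree-regular) graphs with the same intersection array but different local structure — a classic choice being two strongly regular graphs with identical parameters, e.g.\ the $4\times 4$ rook's graph and the Shrikhande graph (both $\mathrm{SRG}(16,6,2,2)$). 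On such a pair every ordered vertex pair has the same shortest-path distance multiset, and because the graphs are vertex-transitive with the same parameters, the hitting time $\disH(u,v)$ and resistance distance $\disR(u,v)$ depend only on $\dis(u,v)$ and the (common) spectral data, so $\mathsf{SPD}\text{-}\mathsf{WL}$, $\mathsf{HTD}\text{-}\mathsf{WL}$, $\mathsf{RD}\text{-}\mathsf{WL}$ all produce a single stable color and cannot tell the two graphs apart. On the other hand, these two graphs are distinguished already by $1$-WL's failure being \emph{repaired} by subgraph methods: the number of triangles through an edge, or equivalently the structure of the neighborhood subgraph, differs, and $\mathsf{SWL(VS)}$ (which is at least as powerful as ordinary MPNNs run on ego-marked subgraphs, hence can detect such local cycle structure) separates them. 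The first step I would take is to fix this pair and state its relevant invariants.

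Next I would verify the two negative claims. For $\mathsf{SPD}\text{-}\mathsf{WL}$: by distance-regularity both graphs have the property that for each vertex $v$ the multiset $\{\dis(v,u):u\}$ is the same constant multiset (one vertex at distance $0$, six at distance $1$, nine at distance $2$), so the refinement is trivial and the final color of every vertex is identical across both graphs; hence $\mathsf{SPD}\text{-}\mathsf{WL}$ assigns the same graph color. For $\mathsf{HTD}\text{-}\mathsf{WL}$ and $\mathsf{RD}\text{-}\mathsf{WL}$: I would invoke the fact that for a distance-regular graph the hitting time $\disH(u,v)$ and the resistance distance $\disR(u,v)$ are functions of $\dis(u,v)$ and of the adjacency spectrum alone (this follows from the spectral formula $\disR(u,v)=\sum_{i\ge 2}\frac{1}{\lambda_1-\lambda_i}\,(\phi_i(u)-\phi_i(v))^2$ together with distance-regularity pinning down the relevant sums; similarly for $\disH$ via the fundamental matrix of the walk), and the rook's graph and Shrikhande graph are cospectral with the same distance partition. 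Therefore the generalized-distance multiset at every vertex is again a fixed constant, common to both graphs, and these WL variants also collapse to a single color.

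Finally, for the positive claim I would argue that $\mathsf{SWL(VS)}$ distinguishes the pair. The cleanest route is: in the $4\times 4$ rook's graph, the neighborhood of every vertex induces the disjoint union of two triangles ($K_3+K_3$), whereas in the Shrikhande graph every vertex-neighborhood induces a $6$-cycle $C_6$; these two graphs on six vertices are non-isomorphic and are distinguished by $1$-WL (they have different numbers of triangles). Under node marking, the subgraph rooted at a vertex $u$ carries enough information — after two rounds of local message passing, the marked node's color encodes the isomorphism type of the subgraph induced on $\gN^1_G(u)$ in a way that $1$-WL already separates — so the stable SWL color of the marked pair $(u,u)$ differs between the two graphs, and after vertex-subgraph pooling the graph representations differ. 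I would make this precise either directly via the SWL update formula or, more slickly, via the pebbling game of \cref{thm:swl_pebble_game}: Spoiler places pebble $u$ anywhere, then uses the local-aggregation moves to walk pebble $v$ around $u$'s neighborhood and exhibit the triangle/$C_6$ discrepancy within two rounds. The main obstacle is the pair of negative claims for $\mathsf{HTD}\text{-}\mathsf{WL}$ and $\mathsf{RD}\text{-}\mathsf{WL}$: one must be careful that hitting time and resistance distance really are constant (distance-determined) on these graphs, which requires the cospectrality-plus-distance-regularity argument rather than a bare symmetry argument — I would either cite the known closed forms for resistance/hitting times on distance-regular graphs or compute the two $16\times 16$ matrices explicitly and check they coincide as functions of $\dis(u,v)$, noting that $\mathsf{RD}\text{-}\mathsf{WL}$ (and likewise $\mathsf{HTD}\text{-}\mathsf{WL}$) then sees only the distance multiset and fails exactly as $\mathsf{SPD}\text{-}\mathsf{WL}$ does.
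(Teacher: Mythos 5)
Your proposal takes a genuinely different route from the paper. The paper constructs a pair of twisted F{\"u}rer graphs on $20$ vertices from a small hand-designed base graph, proves the positive direction via the simplified pebbling game of \cref{sec:pebbling_furer}, and simply verifies the negative claims computationally (indeed the paper explicitly notes that ``a deep understanding of why $\mathsf{GD}\text{-}\mathsf{WL}$ cannot distinguish the two graphs is left for future work''). You instead propose the Shrikhande vs.\ $4\times 4$ rook's graph pair of $\mathrm{SRG}(16,6,2,2)$s, and your arguments for the three negative claims via distance-regularity and cospectrality are correct and arguably more illuminating than the paper's brute-force check.

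However, your positive claim is false: $\mathsf{SWL(VS)}$ does \emph{not} distinguish the rook's graph from the Shrikhande graph. The stable color $\chi_G(u,v)$ in $\mathsf{SWL(VS)}$ is precisely the $1$-WL stable color of vertex $v$ in the graph $G$ with $u$ individualized. For any $\mathrm{SRG}(16,6,2,2)$, marking a vertex $u$ and running $1$-WL terminates at the three-class distance partition $\{u\}\cup\gN_G(u)\cup(\gV_G\setminus\gN_G^1(u))$: one checks it is equitable (the root has $6$ dist-$1$ neighbors; each dist-$1$ vertex has $1$ root, $\lambda=2$ dist-$1$, and $3$ dist-$2$ neighbors; each dist-$2$ vertex has $\mu=2$ dist-$1$ and $4$ dist-$2$ neighbors), and equitability depends only on $(n,k,\lambda,\mu)$, which the two graphs share. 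So $\chi_G(u,v)$ depends only on $\dis_G(u,v)$ for both graphs, the per-$u$ multisets $\ldblbrace\chi_G(u,v):v\rdblbrace$ are identical across $u$ and across the two graphs, and $\mathsf{VS}$ pooling yields the same graph color. Your intuition that ``the marked node's color encodes the isomorphism type of the subgraph induced on $\gN^1_G(u)$'' is precisely the misconception: $1$-WL on the marked graph cannot see whether the six neighbors of $u$ form $K_3+K_3$ or $C_6$, because both arrangements leave the neighbors indistinguishable within their color class. The pebbling-game version of your argument fails for the same reason: Duplicator simply maintains $\dis_G(u_G,v_G)=\dis_H(u_H,v_H)$ at every step, which the equitable distance partition always permits. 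This indistinguishability is exactly the well-known fact that motivates cross-subgraph aggregations, and it is why the paper resorts to the tailor-made F{\"u}rer construction rather than a natural pair of strongly regular graphs.
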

\begin{figure*}
    \small
    \centering
    \begin{tabular}{c|cccc}
         \includegraphics[width=0.145\textwidth]{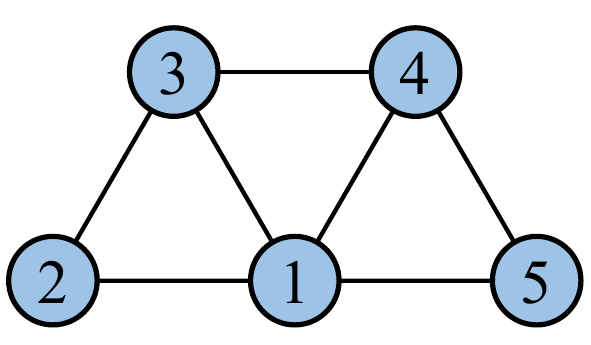} & \includegraphics[width=0.145\textwidth]{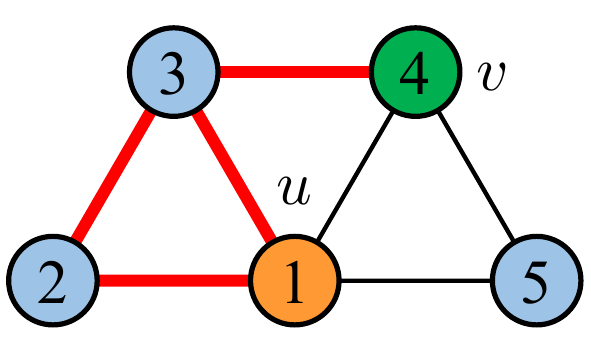} &\includegraphics[width=0.145\textwidth]{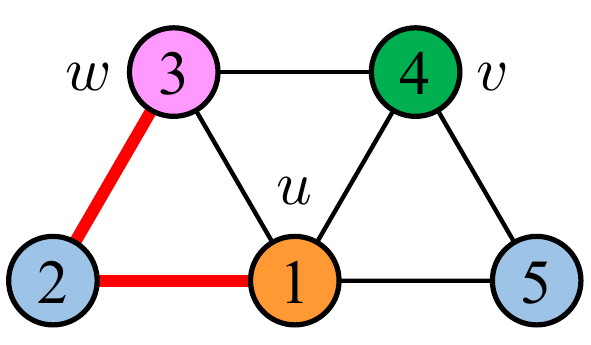} & \includegraphics[width=0.145\textwidth]{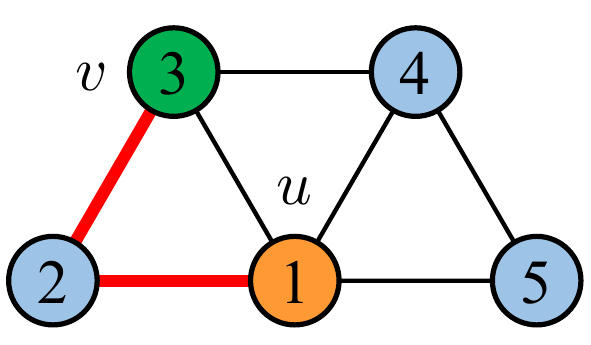}\\
         Base graph & (a) & (b) & (c)\\
    \end{tabular}
    \caption{Illustration of the proof of \cref{thm:counterexample_gdwl_1}. When Duplicator follows her optimal strategy, the game process of $\mathsf{SWL(VS)}$ corresponds to figures (a, b, c) and Spoiler can eventually win.}
    \label{fig:counterexample_gdwl}
\end{figure*}
\begin{proof}
    The proof is based on \cref{sec:proof_separation_part3} using generalized F{\"u}rer graphs. The base graph is constructed in \cref{fig:counterexample_gdwl}. 
    
    We first consider $\mathsf{SWL(VS)}$ and analyze the simplified pebbling game developed in \cref{sec:pebbling_furer}. At the beginning, Spoiler just places pebble $u$ on vertex 1, and Duplicator does nothing. Spoiler then places pebble $v$ on vertex 4, splitting the connected component into three parts. It is easy to see that Duplicator should select the largest connected component on the left (\cref{fig:counterexample_gdwl}(a)). In the next round, Spoiler places pebbles $w$ adjacent to pebble $v$ on vertex 3. Duplicator has to respond according to \cref{fig:counterexample_gdwl}(b). Spoiler then swaps pebbles $v,w$ and leaves $w$ outside the graph (\cref{fig:counterexample_gdwl}(c)). It is easy to see that Spoiler can win the remaining game.

    We next consider $\mathsf{GD}\text{-}\mathsf{WL}$, for which we do not have a corresponding game. Nevertheless, a good news is that the corresponding (twisted) F{\"u}rer graph has only 20 vertices. We can thus directly verify that the stable colors of the F{\"u}rer graph match those of the twisted F{\"u}rer graph. A deep understanding of why $\mathsf{GD}\text{-}\mathsf{WL}$ cannot distinguish the two graphs is left for future work.
\end{proof}

Conversely, we next show that $\mathsf{GD}\text{-}\mathsf{WL}$ also has additional power than the vanilla SWL.
\begin{lemma}
\label{thm:counterexample_gdwl_2}
    There exist two non-isomorphic graphs such that 
    \begin{itemize}[topsep=0pt]
    \setlength{\itemsep}{0pt}
        \item $\mathsf{SWL(SV)}$ cannot distinguish them;
        \item $\mathsf{SPD}\text{-}\mathsf{WL}$ can distinguish them;
        \item $\mathsf{RD}\text{-}\mathsf{WL}$ can distinguish them;
        \item $\mathsf{HTD}\text{-}\mathsf{WL}$ can distinguish them.
    \end{itemize}
\end{lemma}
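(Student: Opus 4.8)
The plan is to exhibit an explicit pair of non-isomorphic connected graphs $G,H$ and to argue the two sides separately. Since \cref{thm:swl_hierarchy} gives $\mathsf{SWL(VS)}\preceq\mathsf{SWL(SV)}$, it suffices to show that $\mathsf{SWL(SV)}$ fails to distinguish $G$ and $H$, and then that each of $\mathsf{SPD}\text{-}\mathsf{WL}$, $\mathsf{HTD}\text{-}\mathsf{WL}$, $\mathsf{RD}\text{-}\mathsf{WL}$ succeeds. For the first part I would use the pebbling-game machinery of \cref{sec:pebbling_game}: by \cref{thm:swl_pebble_game} it is enough to give a winning strategy for Duplicator in the $\mathsf{SWL(SV)}$ game on $G$ and $H$. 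The cleanest concrete choice is a pair of generalized F{\"u}rer graphs $G(F),H(F)$ over a proper base graph $F$, so that the \emph{simplified} game of \cref{sec:pebbling_furer} applies and Duplicator only has to maintain the odd-parity family $\gQ\subset\mathsf{CC}_\gS(F)$ while never being forced to keep a one-edge component pinned between both pebbles; in particular one may try reusing a base graph already shown to fool $\mathsf{SWL(SV)}$, such as the one in the proof of \cref{thm:counterexample_swlvs}. If no such F{\"u}rer pair has the distance behaviour required below, the fallback is a small ad hoc pair analysed directly via \cref{thm:swl_pebble_game}.

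For the distance side, I would reduce each of $\mathsf{SPD}\text{-}\mathsf{WL}$, $\mathsf{HTD}\text{-}\mathsf{WL}$, $\mathsf{RD}\text{-}\mathsf{WL}$ to running $1$-WL on the complete graph whose edges are labelled by $\dis_G$, $\disH_G$, $\disR_G$ respectively, so that distinguishing $G$ from $H$ reduces to showing that the corresponding multisets of distance-labelled rooted refinement trees differ. Concretely I would arrange $G$ to contain a cut vertex (or cut configuration) absent from $H$; by the additivity characterisation $\disH_G(u,v)=\disH_G(u,w)+\disH_G(w,v)$ iff $w$ is a cut vertex of $G$, together with its resistance analogue (see \citet{zhang2023rethinking}), this already forces $\mathsf{HTD}\text{-}\mathsf{WL}$ and $\mathsf{RD}\text{-}\mathsf{WL}$ to separate $G$ and $H$, and to additionally force $\mathsf{SPD}\text{-}\mathsf{WL}$ to do so I would build in a visible discrepancy in the plain distance statistics (e.g.\ a different eccentricity sequence, or a different count of vertex pairs at some distance $d$). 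For a sufficiently small pair all three refinements can simply be computed explicitly, exactly as $\mathsf{GD}\text{-}\mathsf{WL}$ was handled on the $20$-vertex F{\"u}rer graph in the proof of \cref{thm:counterexample_gdwl_1}.

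The main obstacle is the apparent tension with \cref{thm:node_marking_implies_distance}: the stable $\mathsf{SWL(SV)}$ pair colour $\chi_G(u,v)$ already determines $\dis_G(u,v)$, so it is not obvious that any graph pair fools $\mathsf{SWL(SV)}$ yet is separated by $\mathsf{SPD}\text{-}\mathsf{WL}$. The resolution, which the construction must exploit, is that the $\mathsf{SV}$ pooling records for each $v$ only the multiset $\ldblbrace\chi_G(u,v):u\in\gV_G\rdblbrace$ and thereby discards the per-vertex distance profile $\ldblbrace\dis_G(w,u):w\in\gV_G\rdblbrace$ that $\mathsf{SPD}\text{-}\mathsf{WL}$ already needs in its second iteration; so the pair has to be delicate enough that every pair colour of $G$ is matched, with the same attached distance, by a pair colour of $H$, while the distance-labelled $1$-WL refinement nevertheless separates the two graphs. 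Pinning down such a pair --- and certifying that all three of $\mathsf{SPD}\text{-}\mathsf{WL}$, $\mathsf{HTD}\text{-}\mathsf{WL}$, $\mathsf{RD}\text{-}\mathsf{WL}$ indeed distinguish it --- is where the real work lies, and I expect this to require a short search over candidate base graphs combined with the pebbling-game verification sketched above.
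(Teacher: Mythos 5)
Your overall plan matches the paper's: construct a proper base graph $F$, pass to the augmented F{\"u}rer pair $\tilde G(F),\tilde H(F)$, use the simplified pebbling game of \cref{sec:pebbling_furer} to show $\mathsf{SWL(SV)}$ fails, and verify the distance-WL side directly. The paper does exactly this with the 7-vertex base graph of \cref{fig:counterexample_gdwl2} (whose F{\"u}rer pair has $26$ vertices); Duplicator's winning strategy there is to always retain in $\gQ$ a connected component that contains a triangle with no pebble, and then to note that since pebble $u$ is frozen once placed, $\mathsf{SWL(SV)}$'s single remaining mobile pebble can never isolate that triangle.

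The weak point is the middle of your argument, where you try to explain structurally why the distance-based tests succeed. The ``cut vertex absent from $H$'' idea cannot work for a F{\"u}rer pair: $G(F)$ and $\twist(G(F),e)$ are on the same vertex set, have identical degree sequences and local neighbourhood statistics, and (for proper base graphs that fool even $\mathsf{SWL(SV)}$) share the same biconnectivity structure, so there is no vertex that is a cut vertex in one and not the other. More decisively, the ``visible discrepancy in plain distance statistics'' (eccentricity sequence, counts of pairs at distance $d$) is provably unavailable here. By \cref{thm:node_marking_implies_distance} the stable $\mathsf{SWL(SV)}$ colour $\chi_G(u,v)$ determines $\dis_G(u,v)$; the $\mathsf{SV}$-pooled output being equal for $G$ and $H$ thus forces
\begin{equation*}
\ldblbrace\ldblbrace\dis_G(u,v):u\in\gV_G\rdblbrace:v\in\gV_G\rdblbrace \;=\; \ldblbrace\ldblbrace\dis_H(x,y):x\in\gV_H\rdblbrace:y\in\gV_H\rdblbrace,
\end{equation*}
which is exactly the state of $\mathsf{SPD}\text{-}\mathsf{WL}$ after one iteration. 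So every ``plain'' distance statistic (eccentricity profile, distance distribution) is forced to agree, and the separation must appear only in later iterations of $\mathsf{SPD}\text{-}\mathsf{WL}$, when the distances are combined with already-refined colours. The same first-iteration argument applies to $\mathsf{HTD}\text{-}\mathsf{WL}$ via \cref{thm:hitting_time_main} and to $\mathsf{RD}\text{-}\mathsf{WL}$ via symmetrisation, so cut-vertex/additivity reasoning is similarly blocked at round one.

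What remains is exactly your fallback, which is what the paper does: it exhibits the base graph, runs the simplified game analysis for $\mathsf{SWL(SV)}$ as above, and for the three distance tests simply computes the stable colourings on the $26$-vertex pair and checks they differ, explicitly leaving a conceptual explanation as an open question. So your proposal has the right skeleton and correctly locates the tension, but the structural route you sketch for the distance side is ruled out by the very lemma (\cref{thm:node_marking_implies_distance}) you cite, and the actual completion must be the brute-force verification on a concrete small F{\"u}rer pair.
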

\begin{figure*}
    \small
    \centering
    \begin{tabular}{ccc}
         \includegraphics[width=0.2\textwidth]{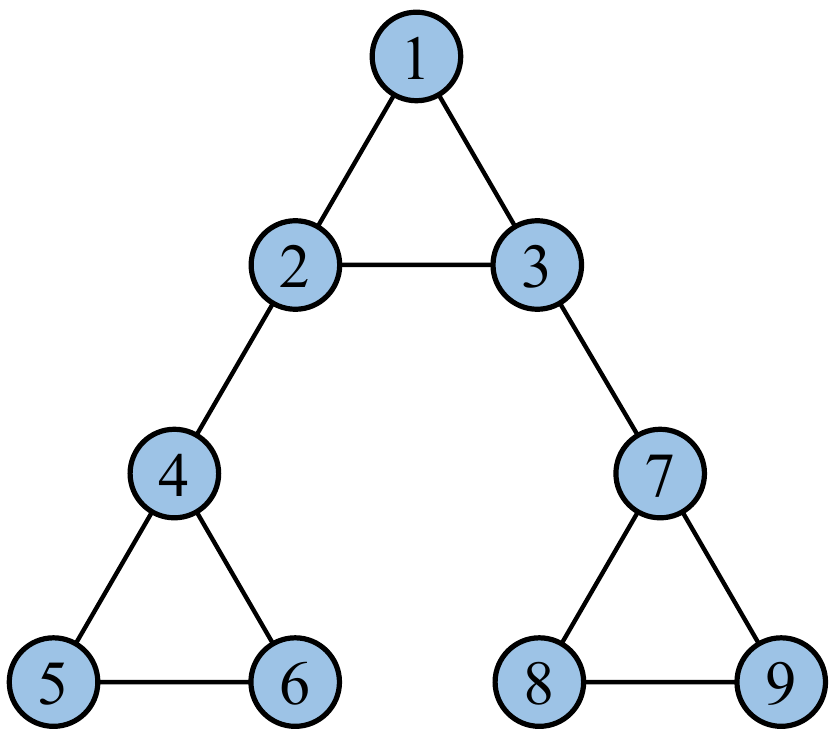} & \includegraphics[width=0.2\textwidth]{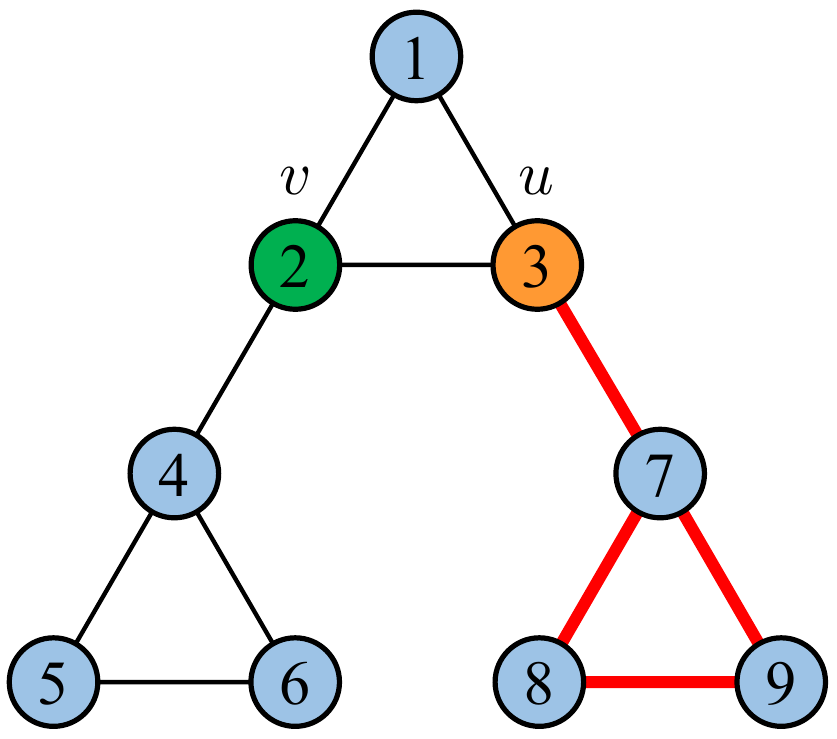} & \includegraphics[width=0.2\textwidth]{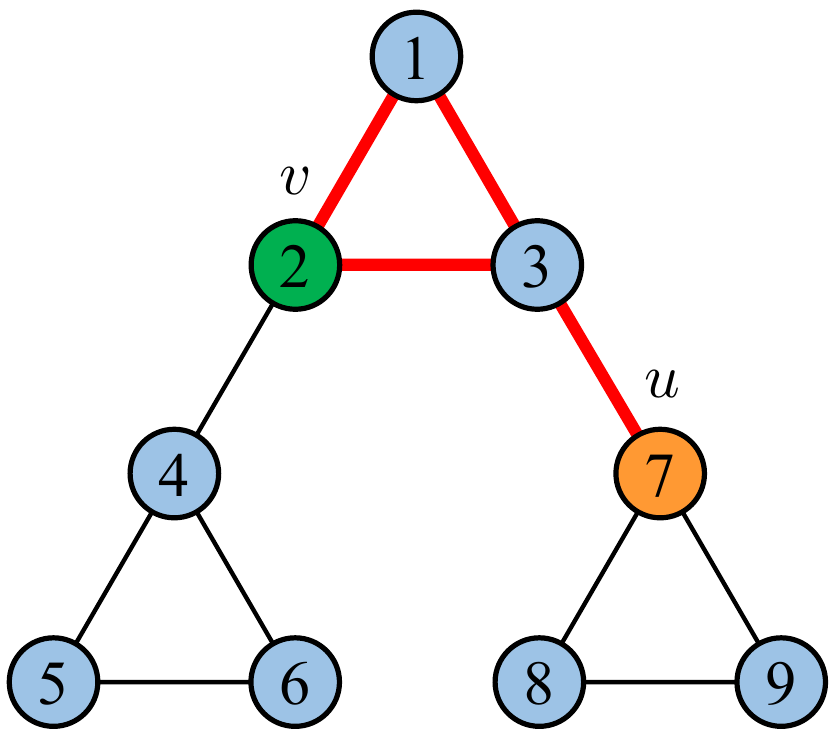}\\
         Base graph & (a) & (b)\\
    \end{tabular}
    \caption{Illustration of the proof of \cref{thm:counterexample_gdwl_2}. For $\mathsf{SWL(SV)}$, when Duplicator follows her optimal strategy, Spoiler can never win the game.}
    \label{fig:counterexample_gdwl2}
\end{figure*}
\begin{proof}
    The proof is based on \cref{sec:proof_separation_part3} using generalized F{\"u}rer graphs. The base graph is constructed in \cref{fig:counterexample_gdwl2}. 

    We first consider $\mathsf{SWL(SV)}$ and analyze the simplified pebbling game developed in \cref{sec:pebbling_furer}. At the beginning, Spoiler should place pebble $v$ on some vertex. Regardless of Spoiler's choice, Duplicator just selects the largest connected component after pebble $v$ is placed. Next, Spoiler should place pebble $u$ on some vertex. Now Duplicator's strategy is to select a connected component such that it contains a triangle with no pebbles. It is easy to see that Duplicator can always achieve her goal (see \cref{fig:counterexample_gdwl2}(a) and \cref{fig:counterexample_gdwl2}(b) for two representative cases). The remaining game is easy to analyze: since pebble $u$ cannot be moved throughout the game, there is a triangle that holds at most two pebbles and cannot be split into three single-edge components. Clearly, Duplicator can always respond without losing the game.

    We next turn to $\mathsf{SPD}\text{-}\mathsf{WL}$, for which we do not have a corresponding game. Nevertheless, a good news is that the corresponding (twisted) F{\"u}rer graph has only 26 vertices. We can thus directly verify that the stable colors of the F{\"u}rer graph do not match those of the twisted F{\"u}rer graph. The case of $\mathsf{RD}\text{-}\mathsf{WL}$ and $\mathsf{HTD}\text{-}\mathsf{WL}$ can be similarly verified. A deep understanding of why these algorithms can distinguish the two graphs is left for future work. 
\end{proof}

\section{Experimental Details}
\label{sec:exp_details}

We conduct experiments on three standard benchmark datasets: ZINC \citep{dwivedi2020benchmarking}, Counting Substructure \citep{zhao2022stars,frasca2022understanding}, and OGBG-molhiv \citep{hu2020open}. ZINC is a standard benchmark for molecular property prediction, where the task is to predict the constrained solubility of a molecule, which is an important chemical property for drug discovery. We train and evaluate our proposed $\mathsf{GNN}\text{-}\mathsf{SSWL}$ and $\mathsf{GNN}\text{-}\mathsf{SSWL}\text{+}$ on both ZINC (consisting of 250k molecular graph) and ZINC-subset (a 12K-subset selected as in \citet{dwivedi2020benchmarking}). Counting Substructure is a widely-used synthetic task in the expressive GNN community, where the task is to predict the number of a given substructure (such as cycle or star) in an input graph. We use the same dataset in \citet{zhao2022stars,frasca2022understanding} and further extend it to include the setting of counting 5/6-cycles motivated by \citet{huang2022boosting}. Finally, we additionally consider the OGBG-molhiv dataset in \cref{sec:ogbg}.

\subsection{Model details}
We implement our model using Pytorch \citep{paszke2019pytorch} and Pytorch Geometric \citep{fey2019fast} (available respectively under the BSD and MIT license). All experiments are run on a single NVIDIA Tesla V100 GPU. Our code will be released at \texttt{\href{https://github.com/subgraph23/SWL}{https://github.com/subgraph23/SWL}}.

Motivated by \cref{thm:node_marking,thm:policy_expand}, for all SWL models, the graph generation policy is chosen as the distance encoding on the original graph. Such a policy achieves the maximal power among other policies (as expressive as node marking) while explicitly introducing inductive biases, which may be beneficial for real-world tasks. Concretely, we initialize the feature of node $v$ in subgraph $G^u$ by summing the atom embedding $h^\mathsf{atom}(v)$ and the distance encoding $h^\mathsf{dis}(\dis_G(u, v))$, where the atom embedding is a learnable vector determined by the atom type of $v$, and the distance encoding is a learnable vector determined by the shortest path distance between $u$ and $v$. Mathematically, $h_G^{(0)}(u,v)= h^\mathsf{atom}(v) + h^\mathsf{dis}(\dis_G(u, v))$. Distances exceeding \texttt{max\_dis} (including infinity) are all encoded as a shared embedding. For tasks without atom features (e.g., Counting Substructure dataset), we set $h^\mathsf{atom}(v)$ to zero.

As an instance of \cref{def:layer}, our subgraph GNN layer can be written in the following form:
\begin{equation}
\label{eq:layer_detail}
\begin{aligned}
    h_G^{(l+1)}(u, v)=\mathsf{ReLU}\left(\sum_{i=1}^r\mu^{(l+1,i)}\left(h_G^{(l)}(u,v),\op_i(u, v, G, h_G^{(l)})\right)\right),
\end{aligned}
\end{equation}
where each $\op_i$ can take one of the following forms, depending on the atomic aggregations in the SWL algorithm:
\begin{itemize}[topsep=0pt]
\raggedright
\setlength{\itemsep}{0pt}
    \item For $\agg^\mathsf{P}_\mathsf{uu}$: $\op_i(u, v, G, h)=h(u,u)$;
    \item For $\agg^\mathsf{P}_\mathsf{vv}$: $\op_i(u, v, G, h)=h(v,v)$;
    \item For $\agg^\mathsf{G}_\mathsf{u}$: $\op_i(u, v, G, h)=\sum_{w\in\gV_G}h(u,w)$;
    \item For $\agg^\mathsf{G}_\mathsf{v}$: $\op_i(u, v, G, h)=\sum_{w\in\gV_G}h(w,v)$;
    \item For $\agg^\mathsf{L}_\mathsf{u}$: $\op_i(u, v, G, h)=\sum_{w\in\gN_G(v)}\mathsf{ReLU}((h(u,w)+g(w,v))$;
    \item For $\agg^\mathsf{L}_\mathsf{v}$: $\op_i(u, v, G, h)=\sum_{w\in\gN_G(u)}\mathsf{ReLU}(h(w,v)+g(u,w))$.
\end{itemize}
Note that we have included the single-point aggregation $\agg^\mathsf{P}_\mathsf{uv}$ directly in the update formula (\ref{eq:layer_detail}). For the last two local aggregations, we further encode the edge embedding $g(w,v)$ for edge $\{w,v\}\in\gE_G$ (or $g(u,w)$ for edge $\{w,u\}\in\gE_G$) when there is additional information for each edge (e.g., the bond information in the ZINC dataset). In the above equation, each $\mu^{(l+1,i)}$ is implemented by a GIN base encoder \citep{xu2019powerful}:
\begin{align*}
    \mu^{(l+1,i)}(h_1,h_2)=\mathsf{MLP}^{(l+1,i)}\left((1+\epsilon^{(l+1,i)})h_1+h_2\right),
\end{align*}
where $\epsilon^{(l+1,i)}$ is a learnable scalar and $\mathsf{MLP}^{(l+1,i)}$ is a one-hidden-layer MLP with hidden size equal to the input dimension. Batch Normalization \citep{ioffe2015batch} is adopted in the hidden layer of each MLP as well as in (\ref{eq:layer_detail}) before taking ReLU.

The final pooling layer is implemented as an $\mathsf{MLP}$ over the summation of $v$-dimension for $\mathsf{VS}$-pooling scheme (or $u$ for $\mathsf{SV}$-pooling) and a global mean pooling, namely
\begin{align*}
    f(G) = \frac 1 {|\gV_G|} \sum_{u \in \gV_G} \mathsf{MLP}\left(\sum_{v \in \gV_G} h^{(L)}_G(u,v)\right)
\end{align*}
Batch Normalization is adopted similarly.

\subsection{Training details}

\textbf{ZINC}. Throughout all experiments, we set the number of layers $L=6$, similar to \citet{frasca2022understanding}. To constrain the parameter budget within 500k, the feature dimension of each layer is set to 96. The initial atom embedding, distance embedding, and edge embedding is also set to 96. The hyper-parameter \texttt{max\_dis} is set to 5. We adopt the Adam optimizer \citep{kingma2014adam} with an initial learning rate $0.001$. The learning rate will be decayed by a factor of 0.5 when the MAE on \emph{validation set} plateaus for 20 epochs (similar to \citet{frasca2022understanding}). The batch size is set to 128. On ZINC-12K subset, the model is trained for 400 epochs according to \citet{frasca2022understanding}, and it takes roughly 1 to 2 hours for a single run. On ZINC-250K full set, we find that the model still does not converge after 400 epochs, so we adjust the configuration to 500 epochs, which takes about 40 hours. For each setting, we run the model 10 times with different seeds from 1 to 10 and report both the mean value and the standard deviation of MAE.

We also compare our model performance with various subgraph GNN baselines. The performance numbers of these baselines in \cref{tab:exp_zinc} are generally brought from the original works. For some baselines such that $\mathsf{GNN}\text{-}\mathsf{AK}$ and $\mathsf{GNN}\text{-}\mathsf{AK}\text{-}\mathsf{ctx}$, the results are obtained from \citet{frasca2022understanding}. The $\mathsf{NGNN}$ result is obtained from \citet{huang2022boosting}. Since other baseline models did not present the results on ZINC-full, we obtain their performance by running the code provided on the authors' official GitHub repo. We run each model 10 times with different seeds and report the mean performance and standard deviation. For $\mathsf{ESAN}$ and $\mathsf{SUN}$, we tried both the $k$-ego network policy and the $k$-ego network policy with marking and reported the better performance among the two policies. We find that the results are almost the same for $\mathsf{SUN}$, and the $k$-ego network policy with marking is slightly better for $\mathsf{ESAN}$.

\textbf{Counting Substructure}. Throughout all experiments, we simply follow the same model configuration as ZINC to use a 6-layer GNN with a hidden size of 96. The hyper-parameter \texttt{max\_dis} is also set to 5. We adopt the Adam optimizer \citep{kingma2014adam} with an initial learning rate $0.002$. The learning rate is decayed with cosine annealing. The batch size is set to 512. The models are trained for 600 epochs. Note that unlike prior works, \emph{we use the same model/training hyper-parameters for all six substructures}. For each setting, we run the model 5 times with different seeds from 1 to 5 and report the mean performance of MAE. We found that the standard deviation is very small.

We also compare our model performance with various subgraph GNN baselines. The performance numbers of GNN-AK and SUN in \cref{tab:exp_substructure} are brought from the \citet{zhao2022stars} and \citet{frasca2022understanding}, respectively. For the tasks of counting 5/6-cycles, we obtain their performance by running the code provided on the authors' official GitHub repo. To make the SUN result convincing, we grid search the network width from $\{64, 96, 110\}$, depth from $\{5, 6\}$, and search the $k$-hop ego net with $k\in\{2,3\}$ as suggested by \citet{frasca2022understanding}. We consider both the ego network policy with and without marking and find that $k$-ego network with marking is better.

\subsection{Ablation study on ZINC}

In this subsection, we present a set of ablation results to investigate the effect of different aggregation operations in $\mathsf{GNN}\text{-}\mathsf{SSWL}\text{+}$. We fix all model details and hyper-parameters presented above, and remove one or both of the additional two operations $\agg^\mathsf{L}_\mathsf{v}, \agg^\mathsf{P}_\mathsf{vv}$ in $\mathsf{GNN}\text{-}\mathsf{SSWL}\text{+}$ or change the pooling paradigm. This results in several types of models with expressivity corresponding to $\mathsf{SSWL}$, $\mathsf{PSWL(VS)}$, $\mathsf{PSWL(SV)}$, $\mathsf{SWL(SV)}$, and $\mathsf{SWL(VS)}$, respectively. The results are shown in \cref{table:ablation}. It can be seen that both of these additional aggregations $\agg^\mathsf{L}_\mathsf{v}, \agg^\mathsf{P}_\mathsf{vv}$ provide a significant improvement in performance. Moreover, $\mathsf{SV}$ pooling is significantly better than $\mathsf{VS}$ for vanilla SWL.

We further design an ablation experiment to verify that our introduced local aggregation $\agg^\mathsf{L}_\mathsf{v}$ is actually crucial and cannot be replaced by other aggregations. Here, we consider the model $\mathsf{SUN}$ proposed in \citet{frasca2022understanding}, which comprises a large number of basic aggregation operations but without $\agg^\mathsf{L}_\mathsf{v}$. $\mathsf{SUN}$ uses different parameters to compute the on-diagonal (i.e. $h_G^{(l)}(u,u)$) and off-diagonal (i.e. $h_G^{(l)}(u,v),v\neq u$) features in order to further enhance the model flexibility. We rerun their code by changing the graph generation policy to distance encoding (on the original graph) and use exactly the same training configuration as our model. Notably, the feature dimension is increased to 96 (instead of 64), resulting in a larger model with roughly 1163k parameters. We can see in \cref{table:ablation} that, even with distance encoding policy and larger model size, there is still a remarkable gap between $\mathsf{SUN}$ and $\mathsf{GNN}\text{-}\mathsf{SSWL}\text{+}$ (less than 400k parameters). This confirms that the introduced aggregation $\agg^\mathsf{L}_\mathsf{v}$ in $\mathsf{SSWL}$ not only theoretically improves the expressive power of the GNN model, but also leads to significantly better performance on real datasets.

\begin{table}[h]
\centering
\small
\caption{Ablation study of $\mathsf{GNN}\text{-}\mathsf{SSWL}\text{+}$ on ZINC-subset.}
\label{table:ablation}
\begin{tabular}[b]{lcc}
    \hline
    Method & Pooling & Test MAE $\downarrow$ \\
    \hline
    $\mathsf{GNN}\text{-}\mathsf{SSWL}\text{+}$ & $\mathsf{VS}$ & \textbf{0.0703 ± 0.0046} \\ 
    w/o $\agg_\mathsf{vv}$ ($\mathsf{GNN}\text{-}\mathsf{SSWL}$) & $\mathsf{VS}$ & 0.0822  ± 0.0029 \\ 
    w/o $\agg_\mathsf{v}^\mathsf{L}$ & $\mathsf{VS}$ & 0.0765  ± 0.0028 \\ 
    w/o $\agg_\mathsf{v}^\mathsf{L}$ & $\mathsf{SV}$ & 0.0758  ± 0.0037 \\ 
    w/o $\agg_\mathsf{v}^\mathsf{L}$ and $\agg_\mathsf{vv}$ & $\mathsf{VS}$ & 0.1103  ± 0.0090 \\ 
    w/o $\agg_\mathsf{v}^\mathsf{L}$ and $\agg_\mathsf{vv}$ & $\mathsf{SV}$ & 0.0999  ± 0.0044 \\ 
    $\mathsf{SUN}$ (Distance Encoding) & - & 0.0802 ± 0.0024 \\
    \hline
\end{tabular}
\end{table}

\subsection{Additional experiments on OGBG-molhiv}
\label{sec:ogbg}

We further run experiments on the OGBG-molhiv dataset. Following \citet{frasca2022understanding}, we use a 2-layer $\mathsf{GNN}\text{-}\mathsf{SSWL}\text{+}$ model with a network width of 64, and add residual connection between different layers. The hyper-parameter \texttt{max\_dis} is also set to 5. To prevent overfitting, we similarly use the ASAM optimizer \citep{kwon2021asam} with a batch size of 32, a learning rate of 0.01, and a dropout ratio of 0.3. Moreover, we change each MLP to a linear layer following \citet{frasca2022understanding}. We train the model for 100 epochs. We run our model 8 times with different seeds ranging from 1 to 8 and report the average ROC AUC as well as the standard deviation. The result is presented in \cref{tab:exp_ogbg}. 

\begin{table}[h]
\centering
\caption{Performance comparison on OGBG-molhiv.}
\small
\label{tab:exp_ogbg}
\vspace{2pt}
\begin{tabular}{llc}
\Xhline{1pt}
Model & Reference & Test ROC-AUC (\%) \\ \Xhline{0.75pt}
GCN         & \citet{kipf2017semisupervised} & 76.06±0.97 \\
GIN         & \citet{xu2019powerful}         & 75.58±1.40 \\
PNA         & \citet{corso2020principal}     & 79.05±1.32 \\
GSN         & \citet{bouritsas2022improving} & 80.39±0.90 \\
CIN         & \citet{bodnar2021cellular}     & \textbf{80.94±0.57} \\ \hline
Recon. GNN  & \citet{cotta2021reconstruction}& 76.32±1.40 \\
DS-GNN (EGO+) & \citet{bevilacqua2022equivariant} & 77.40±2.19 \\
DSS-GNN (EGO+) & \citet{bevilacqua2022equivariant} & 76.78±1.66 \\
GNN-AK+    & \citet{zhao2022stars}           & 79.61±1.19 \\
SUN        & \citet{frasca2022understanding} & 80.03±0.55 \\\hline
GNN-SSWL+  & This paper                      & 79.58±0.35 \\ \Xhline{1pt}
\end{tabular}
\end{table}
\end{document}